\pdfoutput=1
\documentclass[10pt]{article}
\makeatletter
\def\input@path{{tex/}{./}{../}}
\makeatother
\usepackage[letterpaper,hmargin=1in,vmargin=1in]{geometry}
\usepackage[T1]{fontenc}
\usepackage{lmodern}
\usepackage{amsmath,amssymb,amsthm,mathtools,mathrsfs,microtype}
\usepackage{booktabs,array,enumitem,xcolor,url,graphicx}
\usepackage[font=small,skip=4pt]{caption}
\usepackage{subcaption}
\usepackage{algorithm}
\usepackage[noend]{algpseudocode}
\algnewcommand\algorithmicinput{\textbf{Input:}}\algnewcommand\Input{\item[\algorithmicinput]}
\algnewcommand\algorithmicparams{\textbf{Parameters:}}\algnewcommand\Params{\item[\algorithmicparams]}
\algnewcommand\algorithmiczero{\textbf{Zero gradients:}}\algnewcommand\Zero{\item[\algorithmiczero]}
\graphicspath{{./}{../}}
\newcommand{\mainfigscale}{0.8826}
\definecolor{ink}{HTML}{000000}
\definecolor{citeblue}{HTML}{1F57A4}
\definecolor{eqpurple}{HTML}{7E2F8E}
\colorlet{eqrefcolor}{eqpurple}
\usepackage[colorlinks=true,linkcolor=ink,citecolor=citeblue,urlcolor=ink]{hyperref}
\makeatletter
\DeclareRobustCommand{\eqref}[1]{\textup{\begingroup\hypersetup{linkcolor=eqrefcolor}\color{eqrefcolor}%
  \tagform@{\ref{#1}}\endgroup}}
\makeatother
\hypersetup{pdftitle={Flat Loss, Evolving Features: Grokking in Modular Addition},
  pdfauthor={Akash Kumar, Karthik Sankaranarayanan}}

\pdftrailerid{}
\newtheorem{theorem}{Theorem}[section]
\newtheorem{proposition}[theorem]{Proposition}
\newtheorem{lemma}[theorem]{Lemma}
\newtheorem{corollary}[theorem]{Corollary}
\theoremstyle{definition}\newtheorem{definition}[theorem]{Definition}
\theoremstyle{remark}\newtheorem{remark}[theorem]{Remark}
\newtheoremstyle{mainthm}{12pt plus 2pt minus 2pt}{12pt plus 2pt minus 2pt}{\itshape}{}{\bfseries}{.}{\newline}{}
\theoremstyle{mainthm}\newtheorem{mainthm}[theorem]{Theorem}
\theoremstyle{plain}
\IfFileExists{tex/macros.tex}{\input{tex/macros.tex}}{\input{macros.tex}}

\newcounter{proofstep}[subsubsection]
\renewcommand{\theproofstep}{\thesubsubsection.\arabic{proofstep}}
\newcommand{\proofstepheading}[1]{%
  \par\medskip\refstepcounter{proofstep}%
  \noindent\textbf{\theproofstep\quad #1}\par\nobreak\smallskip}

\title{Flat Loss, Evolving Features: Grokking in Modular Addition}
\author{Akash Kumar\thanks{For correspondence: \texttt{akk002@ucsd.edu}}\\
  University of California, San Diego
  \and
  Karthik Sankaranarayanan\\
  Microsoft AI}
\date{}

\begin{document}
\maketitle

\begin{abstract}
Can a network keep learning while its training cross-entropy stays nearly
flat at a positive level? We show that it can. On modular addition, a
standard testbed for grokking, ReLU networks trained with supported-Polar
hidden updates, normalized output-layer updates and weight decay can keep
$100\%$ training accuracy while their hidden features change and held-out
accuracy rises from zero to above $90\%$, at relatively flat training
cross-entropy. Correct classification need not minimize cross-entropy,
which still depends on prediction confidence. We prove this coexistence
for one-hidden-layer networks in two finite-window theorems from
constructed initializations: one gives a constant gain in the Fourier
fraction of the hidden features, and the other holds at arbitrarily low
positive loss, where a readout refitted on the training data generalizes
before the network. Experiments from random initialization show the same
coexistence on a window fixed in advance, and in deeper networks held-out
accuracy also rises over post-hoc windows with a net increase in
training cross-entropy. \added{With a last-layer rate set per training
fraction and head decay scaled with the prime, deeper networks at $p=47$
and $61$ meet a criterion on post-hoc windows, fixed in advance, in nearly
all fresh runs. A one-block transformer trained
with the same updates and a slowly decaying output gain keeps every
training pair correct at every logged checkpoint while its training
cross-entropy rises and its
held-out accuracy approaches one; with two blocks this holds through the
held-out rise, after which training pairs begin to flip.} A plateau in training loss need not be a plateau in
learning. Code: \url{https://github.com/akashkumar-d/flat-loss-evolving-features}.
\end{abstract}

\ifShowAdditions
\begin{additions}
\par\medskip\noindent
\fbox{\parbox{\dimexpr\linewidth-2\fboxsep-2\fboxrule\relax}{\small
\textbf{Changes since version 1 (2026-10-13).} Text in dark orange is new.
\emph{Added:} transformers, with one block at $p=113$ and other primes and
with two blocks at $p=113$ (Section~\ref{paper:transformer},
Figure~\ref{fig:tf-contrast}, Appendix~\ref{paper:app:tf}); depths three
and four at $p=47$ and $61$ across training fractions, on fresh seeds
(Figure~\ref{fig:deep-fractions}, Appendix~\ref{paper:app:deep-fractions});
and Algorithm~\ref{alg:train}, the training loop
(Section~\ref{paper:setup:updates}).
The new trajectory figures have the rows of Figures~\ref{fig:onehl-primes}
and~\ref{fig:deep-main} (training CE, accuracies, $\AH$; every run thin,
the median thick); the appendix figures across primes and with two blocks add
the misclassified training pairs, and the two on mirror pairs and readouts
show medians.
\emph{Removed:} version 1's Figure~6 (one fixed recipe across six primes)
and Figure~9 (depths three and four across primes); their counts are now in
the text of Appendices~\ref{paper:app:studies}
and~\ref{paper:app:deep-medians}.
\emph{Equal run counts:} every panel of a trajectory figure shows as many
runs as the others in its figure, at least 10: depth 2 in
Figure~\ref{fig:deep-main} on the 40 seeds of depths 3--5 (34/40 pass;
version 1 showed 10 runs, 9 passing, with a different refit), 10 runs per
row in Figure~\ref{fig:thm41-iid}, and 10 seeds per setting in
Figures~\ref{fig:tf-primes}--\ref{fig:tf-readouts}, in
Table~\ref{paper:tab:tf-readouts} and in the rows of
Table~\ref{paper:tab:tf-primes} for those settings, with
$p=61$ on the mirror-free split in Figure~\ref{fig:tf-primes}.
Figure~\ref{fig:tf-blocks} shows the 10 two-block runs at fraction $0.2$,
and Table~\ref{paper:tab:tf-blocks-late} adds tests that change one rate
after the held-out rise. Figure~\ref{paper:fig:headline} is one row.
\emph{Readouts:} why the ridge refit leads in the one-hidden-layer network
but trails in the transformer (Section~\ref{paper:transformer},
Appendix~\ref{paper:app:tf-readouts}, and the network's own top-8 logits in
Table~\ref{paper:tab:tf-readouts}). The transformer trajectory figures
(Figures~\ref{fig:tf-contrast}, \ref{fig:tf-primes} and~\ref{fig:tf-blocks})
no longer draw the refit, which trails the network at its $50\%$ crossing in
every run they show; their accuracy rows keep the Fourier-filtered refit,
and Table~\ref{paper:tab:tf-readouts} and Figure~\ref{fig:tf-readouts} still
report the refit. \emph{Related work:} the
maximum-margin characterization of Morwani et al., and how its Fourier
features (single-frequency weights of each neuron, at the optimum) differ
from $\AH$; a comparison with explanations of grokking by initialization
scale and weight decay, and with the timescale of decoupled weight decay;
and two earlier papers on feature learning during loss plateaus in
teacher-student regression (Sections~\ref{paper:intro}
and~\ref{paper:related}).
\emph{Presentation:} Sections~\ref{paper:intro}--\ref{paper:discussion} are
reworded and condensed for readability; the rewording adds no claims and is
not marked. The introduction is shorter. Citations are printed in blue and
equation references in purple.
The statements of Theorems~\ref{paper:thm:half} and~\ref{paper:thm:random}
are set upright, with named hypotheses and conclusions and their main
bounds displayed; their mathematical content is unchanged.
Figures~\ref{fig:onehl-primes}--\ref{fig:tf-contrast}, \ref{fig:h-cohort},
\ref{fig:deep-fractions-app}--\ref{fig:deep-fractions-app3}
and~\ref{fig:tf-primes} are drawn taller. In
Figures~\ref{paper:fig:headline}--\ref{fig:tf-contrast}, the panels are
centered on the text column and all five figures are printed at one scale.
The side margins are one inch, and the appendices on experiments
(\ref{paper:app:setup-ext}--\ref{paper:app:tf}) now come before the proofs
(\ref{sec:main}--\ref{app:decay}).
Setting \texttt{\textbackslash ShowAdditionsfalse}
in \texttt{macros.tex} prints the new text in black and removes this box.}}
\par\medskip
\end{additions}
\fi

\section{Introduction}\label{paper:intro}
\begin{figure}[t]
\centering
\captionsetup{skip=6pt}
\includegraphics[scale=\mainfigscale]{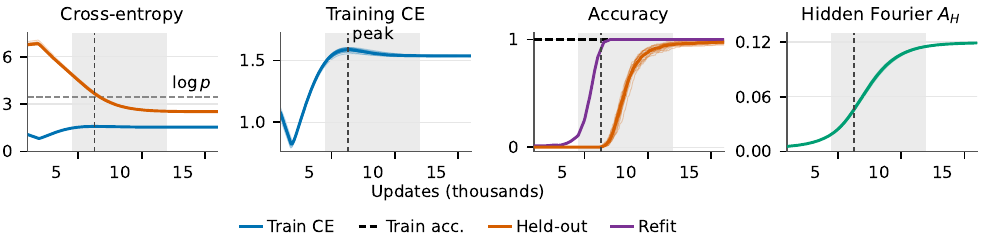}
\caption{\textbf{Coexistence on a fixed window} (1HL ReLU network, $p=31$,
width 512, iid initialization, head decay $\lambda_V=0.2$). All 20 runs,
each with a fresh initialization and training split (thin), and the median
(thick). From the left: training and held-out CE; the training CE on its
own scale; training accuracy (dashed) and the held-out accuracy of the
network and of the train-only ridge refit; the Fourier fraction $\AH$.
Gray: the locked window 4.5k--12k; dashed line: median per-run loss peak.
Loss curves show cycle means. On the window, every run keeps perfect
training accuracy, has raw per-update loss variation $6.33$--$7.81\%$ and
reaches $90\%$ held-out accuracy. In every run, a few training pairs are
misclassified again for a while after the first fit, long before the
window (Appendix~\ref{paper:app:strict}).}
\label{paper:fig:headline}\label{paper:fig:hero}
\end{figure}
The training loss is the standard measure of a neural network's progress,
and a plateau in the loss is commonly interpreted as a sign that learning
has stalled. Grokking calls this interpretation into question. On small algorithmic tasks such as modular addition, networks fit
their training data long before they generalize~\cite{power2022}. Progress
measures reveal the formation of a generalizing circuit while the training
loss remains nearly flat~\cite{nanda2023}, and the learned features change
gradually before held-out accuracy recovers~\cite{mallinar2025emergence}.
Such hidden progress has been proved for online sparse-parity
learning~\cite{barak2022}\added{ and for teacher-student regression during
loss plateaus, under orthogonalized updates~\cite{kumar2026polar} and for
the population loss~\cite{kumar2026buddha},} and analyzed after fitting
under the squared loss with coupled weight decay~\cite{pracher2026spectral}.

Classification networks, however, are trained with the cross-entropy (CE)
loss, for which fitting the training set does not end training. A
training pair is classified correctly as soon as its margin, the gap
between the logit of its true class and the largest other logit, is
positive. The CE also measures confidence: a network that classifies every
training pair correctly can have any CE below $\log p$, the loss of the
uniform prediction over $p$ classes. Without regularization, the CE can
keep decreasing as the logits grow~\cite{prieto2025numerical}; with weight
decay, it can level off at a positive value. A level CE then fixes only the
mean log-probability of the true class on the training pairs. It does not
determine whether the hidden features, or the predictions on held-out
pairs, are still changing. This leads to the following question.
\begin{quote}\itshape
\begin{minipage}{\linewidth}
Can a network that has fitted its training data continue to improve its
hidden features, and recover held-out accuracy, during a window of training
on which its CE stays relatively flat at a positive level?
\end{minipage}
\end{quote}

We study this question on a window $I$ of the actual training trajectory.
Flatness is measured by the relative variation
$v(I)=\max_{t\in I}L_t/\min_{t\in I}L_t-1$ of the raw training CE, checked
at every update, and generalization by the held-out accuracy of the
network's own head. Two observables describe the features. The Fourier
fraction $\AH$, the share of the centered hidden-activation energy that
lies in mean-zero functions of $a+b$, measures their structure. A ridge
readout fitted on the training pairs alone, and never installed in the
network, measures whether they already support held-out prediction
(Section~\ref{paper:setup}).

Figure~\ref{paper:fig:headline} shows that the answer can be affirmative
in practice. It follows a one-hidden-layer (1HL) ReLU network at $p=31$,
trained from fresh iid initializations and training splits, on a window
fixed before the runs were drawn. On this window, every run keeps all
training pairs correct, with raw training CE varying by less than $8\%$,
while held-out accuracy climbs from zero to above $90\%$, the train-only refit
reaches $90\%$ more than $3{,}000$ updates before the network, and $\AH$
rises from about $0.02$ to $0.12$ (Section~\ref{paper:onehl}).

\paragraph{Our Contributions.}
For 1HL ReLU networks started in constructed open sets of
initializations, we prove joint finite-window guarantees that place a
relatively flat positive CE, a gain in $\AH$ and held-out recovery on the
same training trajectory. The hidden weights take supported-Polar steps, an
exact and momentum-free form of the orthogonalized updates of
Muon~\cite{jordan2024muon}, and the head takes normalized gradient steps.
Apart from weight decay, both steps have a fixed length for every nonzero
gradient, so a flat loss does not imply that the weights have stopped
moving.
\begin{itemize}[leftmargin=*,itemsep=1pt,topsep=2pt,parsep=0pt]
\item \textbf{High-loss coexistence (Theorem~\ref{paper:thm:half}).} On a
structured split of nearly half the grid, for every prime $p\ge101$ and
sufficiently large width, the network keeps all training pairs correct
with CE within $0.0201$ of $\log p$, while $\AH$ gains more than $0.0998$
and held-out accuracy rises from zero to one.
\item \textbf{Low-loss, refit-first recovery (Theorem~\ref{paper:thm:random}).}
For large primes and a random split, with probability at least
$1-2/\sqrt p$, the CE stays within $5\%$ of a prescribed positive level,
which can be arbitrarily small, and $\AH$ increases at every update; the
train-only refit is accurate from the first update, and the network only
at the end of the window.
\item \textbf{Experiments (Section~\ref{paper:experiments}).} From iid
initialization at practical widths, every fresh 1HL run shows this
coexistence on a window fixed in advance, and networks with three to five
hidden layers at $p=31$ meet a pre-specified joint criterion in nearly
every run. \added{At $p=47$ and $61$, networks with three and four hidden
layers meet a criterion fixed before the runs, on post-hoc windows, in
nearly every run, but the joint criterion only rarely. A one-block
transformer with the same updates and a slowly decaying output gain meets
an analogous criterion, with a Fourier-filtered refit, in every run at
$p=89$ and $113$ while its training CE rises; with two blocks, the CE also
rises with held-out accuracy, but training pairs then begin to flip.}
\end{itemize}

Entry into the constructed initial regions from iid initialization is not
proved; the experiments probe the behavior beyond these constructions.
Section~\ref{paper:related} reviews related work,
Section~\ref{paper:setup} defines the setting, Section~\ref{paper:theory}
states the theorems, Section~\ref{paper:experiments} reports the
experiments, and Section~\ref{paper:discussion} discusses the limits of
both. The appendices give the experimental details
(Appendices~\ref{paper:app:setup-ext}--\ref{paper:app:tf}), the notation
(Appendix~\ref{app:notation}) and the proofs
(Appendices~\ref{sec:main}--\ref{app:decay}).

\section{Related Work}\label{paper:related}
\paragraph{Grokking and modular arithmetic.}
Proposed explanations of grokking include representation
formation~\cite{liu2022effective}, weight-norm
dynamics~\cite{liu2023omnigrok}, competition between memorizing and more
efficient circuits~\cite{varma2023circuit}, transitions from lazy to rich
training~\cite{kumar2024lazy}, and early and late implicit
biases~\cite{lyu2024dichotomy}. For shallow networks on modular addition,
prior work analyzes margin maximization~\cite{morwani2024margin}, the
sample complexity of kernel and rich regimes~\cite{mohamadi2024grok} and,
more recently, the training
dynamics~\cite{tian2026scaling,he2026mechanism,kunin2025agf}.

\begin{additions}
\paragraph{Initialization scale and weight decay.}
From a large initialization, a network fits its training set quickly, and
weight decay then shrinks the weights slowly until the network
generalizes~\cite{liu2023omnigrok}; with initialization scale $\alpha$ and
a small decay $\lambda$, the switch from a kernel predictor to a
max-margin one comes after a time of order
$\lambda^{-1}\log\alpha$~\cite{lyu2024dichotomy}. With decoupled weight
decay, the weights average the recent updates over about $1/(\eta\lambda)$
steps, a timescale that works best when held at a fixed number of
epochs~\cite{wang2025adamw}, so in practice the weights can keep relaxing
over much of training. Our networks share this ingredient: the
one-hidden-layer experiments start from a large first layer ($c=24$) under
weight decay, and the transformer's output gain relaxes from twice its
equilibrium (Section~\ref{paper:transformer}). These works follow the
weight norm and the test loss as the training loss becomes small; we follow
the training CE at a positive level together with the features, and bound
both on finite windows.
\end{additions}

\paragraph{Fourier features and margin maximization.}
Nanda et al.~\cite{nanda2023} identify circuit formation while the training
loss remains flat, tracked by losses restricted to, and excluding, the key
frequencies. Gromov~\cite{gromov2023} gives explicit Fourier solutions for
shallow networks, and mechanistic studies identify several algorithms for
arithmetic and group operations~\cite{chughtai2023toy,zhong2023clock}.
\added{Morwani et al.~\cite{morwani2024margin} prove that, on the full
grid, sufficiently wide bias-free one-hidden-layer networks with quadratic activation
and maximum $L_{2,3}$-margin consist of single-frequency neurons that
together use every frequency; this describes the maximum-margin solution, not
when the structure forms.} Mallinar et al.~\cite{mallinar2025emergence}
measure the circulant deviation of learned feature matrices in recursive
feature machines and shallow networks. Our $\AH$ is instead the fraction of
hidden-activation energy in a task-aligned subspace\added{, measured along
the trajectory at positive loss. Even the maximum-margin network has
$\AH=2/5$, because squaring the sum of two single-frequency inputs of equal
amplitude also produces terms in $a-b$, $2a$ and $2b$}.

\paragraph{Hidden progress and plateaus.}
Barak et al.~\cite{barak2022} prove hidden progress for online
sparse-parity learning, and plateaus that precede fitting have also been
analyzed~\cite{abbe2023leap,kunin2025agf,gopalani2025plateau}. \added{In
teacher-student regression, the leading eigenspace of the average gradient
outer product aligns with the teacher subspace while the loss plateaus:
under full-batch matrix Muon near the edge of stability, where the loss
cycles with period two and its average stays flat or
rises~\cite{kumar2026polar}, and under population gradient descent from a
small initialization, where a refit on the learned subspace lowers the
error before the loss falls~\cite{kumar2026buddha}. Here the task is
classification with the CE, the plateau comes after the training set is
fit, and its payoff is held-out accuracy.} Related mechanisms
include feature evolution after fitting under the squared loss with
coupled weight decay~\cite{pracher2026spectral} and late generalization
with fixed features~\cite{kim2026clock}. Recursive feature machines empirically
grok at zero training loss~\cite{mallinar2025emergence}, and probes reveal delays
between representations and readouts~\cite{chou2026twospeeds}. Our
constructions separate training confidence, the usefulness of the
representation and the recovery of the readout. Their bounds are joint: on
one finite window of the actual run, under fixed updates, they control the
relative CE variation, the Fourier fraction and the predictions together,
for every initialization in the constructed open sets. In these
constructions, recovery need not wait for a near-zero CE, and the
random-split theorem certifies that the refit recovers before the network
without installing the refit as the head.

\paragraph{Orthogonalized updates.}
Our feature update rescales the supported Polar factor of the negative
gradient to unit Frobenius norm and uses no momentum. Before rescaling, this factor
maximizes the first-order decrease of the loss over the unit spectral-norm
ball~\cite{bernstein2024anthology}, which connects the update to spectral
descent and Muon~\cite{carlson2015spectral,jordan2024muon,bernstein2024duality}.
Orthogonalized updates, and updates that balance the singular directions,
can shorten grokking delays~\cite{tveit2025muon,sahebpasand2026egd}. We
study a different question: the change of the representation and the
recovery of held-out accuracy during a window of relatively flat loss,
rather than acceleration.

\section{Problem Setup}\label{paper:setup}

\subsection{Task, model, and training objective}\label{paper:setup:model}
For an odd prime $p$, let $\F_p=\{0,\ldots,p-1\}$ with arithmetic modulo
$p$, and let $\Omega=\F_p^2$. The task is modular
addition~\cite{power2022}: a pair $(a,b)\in\Omega$ is encoded as the
concatenated one-hot vector $x_{ab}=(e_a,e_b)\in\R^{2p}$~\cite{gromov2023},
where $e_c$ is a standard basis vector, and its label is
$y^\star(x_{ab})=y_{ab}=a+b\pmod p$. A set $\Train\subset\Omega$ of $n$
training pairs is fixed, and $\Test=\Omega\setminus\Train$ is the
held-out set; we write $x\in\Train$ for $x=x_{ab}$ with $(a,b)\in\Train$,
and similarly for $\Test$. Each theorem specifies how $\Train$ is drawn.

The theory concerns a bias-free one-hidden-layer (1HL) ReLU network of
width $M$, with hidden weights $\mat{W}\in\R^{M\times2p}$ and output head
$\mat{V}\in\R^{p\times M}$, trained on the mean cross-entropy (CE):
\begin{equation}\label{paper:model}
\begin{aligned}
 h(x)&=\ReLU(\mat{W}x),\qquad f(x)=\mat{V}h(x),\\
 \CE(f(x),y)&=\log\sum_{c\in\F_p}e^{f_c(x)}-f_y(x),\\
 L(\mat{W},\mat{V})&=\frac1n\sum_{x\in\Train}\CE(f(x),y^\star(x)).
\end{aligned}
\end{equation}
The network predicts $\arg\max_cf_c(x)$, with a fixed tie-breaking rule,
and a subscript $t$ denotes a quantity after $t$ updates. The hidden table
$\mat{H}_t\in\R^{p^2\times M}$ has the row $h_t(x)^\top$ for each of the
$p^2$ inputs; it serves only to measure the features and does not enlarge
the training set. A subscript $B\subseteq\Omega$ restricts its rows, as in
$\mat{H}_{t,\Train}$. We write $\Id_N$ for the $N\times N$ identity matrix
and $\one_N$ for the all-ones vector in $\R^N$. A network with $D$ hidden
layers computes $h^{(\ell)}=\ReLU(\mat{W}^{(\ell)}h^{(\ell-1)}+b^{(\ell)})$
for $\ell=1,\ldots,D$, with $h^{(0)}=x$, and $f=\mat{V}h^{(D)}$; layer
$\ell$ has the table $\mat{H}_t^{(\ell)}$. The baseline networks have no
biases; the dotted medians in Figures~\ref{fig:onehl-primes}
and~\ref{fig:deep-main} come from runs with trained hidden biases
(Appendix~\ref{paper:app:bias}).

\subsection{Fixed normalized updates}\label{paper:setup:updates}
We train the hidden weights with a supported-Polar update and the head
with normalized gradient descent. Both use the negative gradients of the
training CE at the same state,
$\mat{F}_t=-\nabla_{\mat{W}}L(\mat{W}_t,\mat{V}_t)$ and
$\mat{G}_t=-\nabla_{\mat{V}}L(\mat{W}_t,\mat{V}_t)$. For a nonzero
matrix $\mat{F}$ with thin singular value decomposition
$\mat{F}=\mat{U}\mat{\Sigma}\mat{Z}^\top$ over its positive singular
values, the supported Polar factor is $\pol(\mat{F})=\mat{U}\mat{Z}^\top$,
and $\pol(0)=0$. The two normalizations are
\begin{equation}\label{paper:normalizations}
 \cN_{\rm pol}(\mat{F})=\frac{\mat{U}\mat{Z}^\top}{\sqrt{\rank\mat{F}}},
 \qquad \cN_F(\mat{G})=\frac{\mat{G}}{\norm{\mat{G}}_F},
\end{equation}
with $\cN_{\rm pol}(0)=\cN_F(0)=0$; for a nonzero argument, each returns a
matrix of unit Frobenius norm. The Polar normalization moves the weights
equally along every singular direction of the gradient, whereas $\cN_F$
preserves the ratios of the singular values. With fixed rates
$\eta_W,\eta_V>0$ and decays $\lambda_W,\lambda_V\ge0$, the updates are
\begin{equation}\label{paper:optimizer}
\begin{aligned}
 \mat{W}_{t+1}&=(1-\eta_W\lambda_W)\mat{W}_t+\eta_W\cN_{\rm pol}(\mat{F}_t),\\
 \mat{V}_{t+1}&=(1-\eta_V\lambda_V)\mat{V}_t+\eta_V\cN_F(\mat{G}_t).
\end{aligned}
\end{equation}
Each update shrinks the weights by a fixed factor and adds a step of fixed
length, with no momentum, projection or loss-dependent control. For every
nonzero gradient, however small, the step length is the same; the weights
can therefore keep moving while the loss is flat, although the fixed step
by itself guarantees neither a flat loss nor useful motion. The reported loss $L$ is the CE alone, without
decay penalties. The theorems use $\lambda_V=0$\added{, and each
experiment states its head decay}. In the experiments, the positive
singular values are selected with a numerical cutoff
(Appendices~\ref{paper:app:experiments-ext}
and~\ref{paper:app:deep-ext}).
\added{Algorithm~\ref{alg:train} states the training loop for $D$ hidden
layers, each with its own rate and decay; with one hidden layer it reduces
to~\eqref{paper:optimizer}, with $(\eta_1,\lambda_1)=(\eta_W,\lambda_W)$.
With trained biases, $[\mat{W}^{(\ell)}\mid b^{(\ell)}]$ takes the feature
step (Appendix~\ref{paper:app:bias}). In the transformer of
Section~\ref{paper:transformer}, every weight matrix except the unembedding
takes the feature step, the unembedding takes the head step, and a scalar
output gain takes a sign step with decay. The Fourier fraction and the
refit defined below are computed from the weights and never enter the
updates.}

\begin{algorithm}[!htb]
\addedcolor
\caption{\added{Training with supported-Polar feature steps and normalized head steps}}
\label{alg:train}
\begin{algorithmic}[1]
\Input training pairs $\Train$, $n=|\Train|$; initial hidden matrices
$\mat{W}^{(1)}_0,\ldots,\mat{W}^{(D)}_0$ and head $\mat{V}_0$
\Params rates $\eta_1,\ldots,\eta_D,\eta_V>0$; decays
$\lambda_1,\ldots,\lambda_D,\lambda_V\ge0$; number of updates $T$
\Zero $\cN_{\rm pol}(0)=\cN_F(0)=0$, as in~\eqref{paper:normalizations},
so a matrix whose gradient is zero only decays
\For{$t=0,1,\ldots,T-1$}
  \State $L_t\gets\frac1n\sum_{x\in\Train}\CE\bigl(f_t(x),y^\star(x)\bigr)$
    \Comment{full batch: every training pair}
  \State $\mat{F}^{(\ell)}_t\gets-\nabla_{\mat{W}^{(\ell)}}L_t$ for
    $\ell=1,\ldots,D$, and $\mat{G}_t\gets-\nabla_{\mat{V}}L_t$
    \Comment{all at the same state}
  \For{$\ell=1,\ldots,D$}
    \State $\mat{U}\mat{\Sigma}\mat{Z}^\top\gets$ thin SVD of
      $\mat{F}^{(\ell)}_t$ over its positive singular values
    \State $\mat{W}^{(\ell)}_{t+1}\gets(1-\eta_\ell\lambda_\ell)\mat{W}^{(\ell)}_t
      +\eta_\ell\,\mat{U}\mat{Z}^\top/\sqrt{\rank\mat{F}^{(\ell)}_t}$
      \Comment{$\eta_\ell\,\cN_{\rm pol}(\mat{F}^{(\ell)}_t)$, eq.~\eqref{paper:normalizations}}
  \EndFor
  \State $\mat{V}_{t+1}\gets(1-\eta_V\lambda_V)\mat{V}_t
    +\eta_V\,\mat{G}_t/\norm{\mat{G}_t}_F$
    \Comment{$\eta_V\,\cN_F(\mat{G}_t)$}
\EndFor
\end{algorithmic}
\end{algorithm}

\subsection{Fitting, plateaus, and delayed recovery}\label{paper:setup:phenomena}
The margin of an input $x$ at time $t$ is
$m_t(x)=f_{t,y^\star(x)}(x)-\max_{c\ne y^\star(x)}f_{t,c}(x)$, and
$\Acc_B(t)$ is the fraction of inputs $x\in B$ with
$\arg\max_cf_{t,c}(x)=y^\star(x)$. The network has \emph{fitted} the
training set when every training margin is positive. Fitting does not
require a small loss: correct predictions with small positive margins can
have a loss arbitrarily close to $\log p$, the loss of the uniform
prediction, so a fitted network can have any training CE in
$(0,\log p)$. Flatness on an integer window $I=\{t_0,\ldots,t_1\}$ is
measured by the relative variation of the raw training CE,
\begin{equation}\label{paper:plateau}
 v(I)=\frac{\max_{t\in I}L_t}{\min_{t\in I}L_t}-1,
\end{equation}
evaluated at every update rather than on plotted or averaged losses. A
$10\%$ plateau is a window with $v(I)\le0.1$; it allows small changes and
oscillations and claims neither stationarity nor optimality. Some experiments also use
the cycle mean $\bar L_t=(L_t+L_{t+1})/2$, the average over two
consecutive updates. For an observable $A$, $\Delta A(I)=A(t_1)-A(t_0)$ is
its change over the window. \emph{Delayed recovery} means that held-out
accuracy becomes high after a phase of fitted training with low held-out
accuracy. Our coexistence claims place the change of the features and the
recovery on the same flat window and assert nothing after it.

\subsection{The Fourier fraction}\label{paper:setup:fourier}
Since the target depends only on $a+b$, we measure the features against
the \emph{addition-Fourier subspace}, the real mean-zero functions on the
grid that depend only on the sum:
\begin{equation}\label{paper:rulespace-main}
 \cF_{\rm add}=\{g(a+b):\textstyle\sum_{y\in\F_p}g(y)=0\}.
\end{equation}
It is spanned by the centered class indicators $\one_{\{a+b=c\}}-1/p$, and
also by the real and imaginary parts of the Fourier modes
$e^{2\pi\mathrm{i}k(a+b)/p}$, $k\ne0$; its dimension is $p-1$
(Appendix~\ref{paper:fourier-defs}). Let $\Padd$ be the orthogonal
projector onto $\cF_{\rm add}$, and let
$\mat{C}_N=\Id_N-\one_N\one_N^\top/N$ and $\Cgrid=\mat{C}_{p^2}$. Then
$\Padd\mat{H}$ averages each hidden feature over the pairs with the same
sum and subtracts its mean over the grid. The \emph{Fourier fraction} of a
table $\mat{H}$ is the share of its centered energy that lies in
$\cF_{\rm add}$:
\begin{equation}\label{paper:AH}
 \AH(\mat{H})=\frac{\norm{\Padd\mat{H}}_F^2}{\norm{\Cgrid\mat{H}}_F^2}\in[0,1],
\end{equation}
defined when $\Cgrid\mat{H}\ne0$. It equals one when the centered response
of every neuron is a function of $a+b$, and zero when no response has a
component in $\cF_{\rm add}$. It is invariant to scaling $\mat{H}$ by a nonzero
constant, and it can rise because the energy in $\cF_{\rm add}$ grows or because the
remaining energy shrinks. Since it weights neurons by their amplitude, it
can also change without a change in the span of the features. A gain of
$0.10$ in $\AH$ means ten percentage points, not a $10\%$ relative
increase. For deep networks, we compute it for each table
$\mat{H}_t^{(\ell)}$.

\subsection{The train-only refit}\label{paper:setup:refit}
A high Fourier fraction does not by itself imply that the features support
correct predictions. At each snapshot $\mat{H}_t$, we therefore fit a
separate ridge head on the training pairs alone:
\begin{equation}\label{paper:expridge}
 \widehat{\mat{B}}_t=\arg\min_{\mat{B}\in\R^{M\times p}}
 \left\{\norm{\mat{H}_{t,\Train}\mat{B}-\mat{T}_\Train}_F^2
             +\tau_t\norm{\mat{B}}_F^2\right\},
\end{equation}
where $\mat{T}$ has the rows $T_x=e_{y^\star(x)}-\one_p/p$ and $\tau_t>0$;
the features are raw, and there is no intercept. The refitted logits are
$\widehat f_t(x)=\widehat{\mat{B}}_t^\top h_t(x)$, computed from the last
hidden table in deep networks. The refit is evaluated on $\Test$ but never
installed in the network, and the training run is not modified. It
measures what ridge regression extracts from the features, which can
differ from what the network's own head predicts. In the sum-loss units
of~\eqref{paper:expridge}, the penalty rule is fixed in advance: with
$R_t=\norm{\mat{H}_{t,\Train}}_F^2$,
\begin{equation}\label{paper:refit-penalties}
 \tau_t=
 \begin{cases}
  10^{-3}n, & \text{canonical experiments},\\
  R_t/(32n), & \text{Theorem~\ref{paper:thm:half}},\\
  \kappa R_t/[4p(p-1)],
       & \text{Theorem~\ref{paper:thm:random}},
 \end{cases}
\end{equation}
where $\kappa$ is the constant of Theorem~\ref{paper:thm:random}.
Appendix~\ref{paper:prediction-defs} gives the Gram form and the
experimental variants. Theorem~\ref{paper:thm:random}(iii) measures the
refit by its normalized held-out squared error,
\begin{equation}\label{paper:refiterror-main}
 E_{\rm ref}(t)=
 \frac{\norm{\mat{H}_{t,\Test}\widehat{\mat{B}}_t-\mat{T}_\Test}_F^2}
      {|\Test|(p-1)/p},
\end{equation}
which equals one for the zero predictor, since $\norm{T_x}_2^2=(p-1)/p$.

\section{Main Theoretical Results}\label{paper:theory}

The two theorems below prove that coexistence of the kind seen in
Figure~\ref{paper:fig:headline} can occur under the actual CE training
dynamics of a 1HL network, started from constructed initializations. In
both, the two layers follow the updates~\eqref{paper:optimizer} with
$\lambda_V=0$. Under each theorem's conditions on the split, every
trajectory from a specified, possibly small, open set of initializations
keeps perfect training accuracy and recovers held-out accuracy during a CE
plateau; entry into this set from iid initialization is not established.
The first theorem holds at a loss near $\log p$ on a structured split, and
the second at a prescribed positive loss, which can be made arbitrarily
small, on a random split.

\subsection{A constant Fourier gain on a structured half-grid}
\label{paper:theory:half}
The first construction isolates coexistence at high training loss: the
network classifies every training pair correctly while its CE stays just
below $\log p$.

\paragraph{The split.}
The training set consists of the pairs whose difference is a nonzero
quadratic residue, that is, a value $u^2\bmod p$ with $u\ne0$:
\begin{equation}\label{eq:split}
 \Train=\{(a,b):b-a\in\mathrm{QR}(\F_p)\},
\end{equation}
where $\mathrm{QR}(\F_p)$ is the set of nonzero quadratic residues. It
contains $p(p-1)/2$ pairs, just under half the grid, and each operand and
each label occurs $(p-1)/2$ times in it. This balance supports the Fourier
analysis, but the split selects whole difference classes rather than
independent pairs, and it holds out every diagonal pair. The theorem
requires width at least $M_{\min}(p)=p(p^2+3)/2$.

\paragraph{Idea of the construction.}
We first analyze a reference run in which the head is frozen and the
hidden weights take Polar steps with decay. Its hidden units form three
groups. An \emph{arithmetic core} classifies every training pair correctly
by a small margin, and its responses and training logits stay fixed along
the run. \emph{Helper units} have zero head weights, so they affect no
prediction, and they have no component in $\cF_{\rm add}$. \emph{Query
units} are active only on held-out inputs, where their logits are
initially harmful. As the helper and query units decay, the training
logits do not move, so every training pair stays correct and the CE stays
near $\log p$. On held-out inputs, the harmful query logits decay until the
correctly signed contribution of the core dominates, and held-out accuracy
goes from zero to one. The decay of the helper units removes energy outside
$\cF_{\rm add}$, which, with bounds for the other two groups, yields a net
gain in $\AH$. These bounds then carry over to the actual run, in which
both layers train.

\noindent\begin{minipage}{\linewidth}\vspace{8pt}
\begin{mainthm}[Structured half-grid]\label{paper:thm:half}
\upshape
\setlength{\parindent}{0pt}
\setlength{\parskip}{4pt}
\textbf{Parameters.} Let $p\ge101$ be prime, and let the width satisfy
$M\ge p(p^2+3)/2$.

\textbf{Split.} The training set is
$\Train=\{(a,b):b-a$ is a nonzero quadratic residue modulo $p\}$.

There exist fixed rates and decay $(\eta_W,\lambda_W,\eta_V)$, a horizon
$J$, sets $\cI_{\rm early},\cI_{\rm late}\subseteq\{1,\ldots,J\}$ of size at
least $16$ with $\max\cI_{\rm early}<\min\cI_{\rm late}$, and a nonempty open
set of initializations in $\R^{3pM}$ such that every trajectory
of~\eqref{paper:optimizer} with $\lambda_V=0$ started in this set satisfies
all four conclusions below.
\begin{enumerate}[label=(\roman*),topsep=6pt,itemsep=6pt,parsep=0pt,leftmargin=2.4em,labelsep=0.5em]
\item \textbf{Training fit and CE plateau.}
For every $0\le t\le J$, training accuracy is one and
\begin{equation*}
 \log p-0.0201<L_t<\log p.
\end{equation*}
Consequently, $v([0,J])<0.0045$.

\item \textbf{Early held-out failure.}
For every $t\in\cI_{\rm early}$, the held-out accuracies of the network and
of the refit are zero.

\item \textbf{Late network recovery and Fourier gain.}
For every $t\in\cI_{\rm late}$, the actual trained network's held-out
accuracy is one and
\begin{equation*}
 \AH(\mat{H}_t)-\AH(\mat{H}_0)>0.0998.
\end{equation*}

\item \textbf{Late refit margin.}
If moreover $p\ge1764$, then for every $t\in\cI_{\rm late}$ the refit has
margin greater than $11/50$ against every rival class at every held-out
pair whose difference is a nonsquare, so its held-out accuracy is at least
$(p-1)/(p+1)$.
\end{enumerate}
\end{mainthm}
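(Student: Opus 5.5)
We follow the construction sketched before the statement: first analyze a \emph{reference run} with frozen head, then transfer its bounds to the actual run in which both layers train.

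\textbf{Reference run.} Fix the head at $\mat{V}_0$ and let only $\mat{W}$ take Polar steps with decay $\lambda_W$. The width $M\ge p(p^2+3)/2$ is split into three groups.
\begin{itemize}[leftmargin=*,itemsep=1pt,topsep=2pt,parsep=0pt]
\item \emph{Arithmetic core.} Neurons whose responses on the grid are functions of $a+b$ (for example, units indexed by residues whose positive parts, combined through $\mat{V}_0$, give the true class a small margin $\delta$). Their head weights make the training logits exactly $\delta e_{y}$ plus a constant.
\item \emph{Helper units.} Zero head weights and centered responses orthogonal to $\cF_{\rm add}$, with large amplitude, so that $\AH(\mat{H}_0)$ is small.
\item \emph{Query units.} Supported on held-out inputs only: difference a nonsquare or zero. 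Their head weights give a wrong class a lead of size $\gg\delta$ at every held-out pair.
\end{itemize}
We use the QR split's balance (each operand and label occurs $(p-1)/2$ times) to show that the CE gradient with respect to the core rows vanishes, or lies in directions that leave the core's activation pattern unchanged. The gradient with respect to helper rows is zero because their head weights are zero. The gradient with respect to query rows is zero because they are inactive on $\Train$. So $\mat{F}_t$ has a fixed, explicitly computable support.

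The Polar step then does one of two things. If the gradient is exactly zero off the core, those rows simply decay by the factor $(1-\eta_W\lambda_W)$. Otherwise the step is a fixed direction of norm $\eta_W$ that we arrange to lie in the core's null directions. In either case the helper and query amplitudes decay geometrically, and the training logits stay constant. This gives (i): with $\delta$ chosen so that $\log p-\CE\in(0,0.0201)$, and since every training pair has margin $\delta>0$, the bound $v<0.0045$ follows. On held-out pairs the wrong-class lead is $c(1-\eta_W\lambda_W)^t$ minus the core's correct margin, which holds for nonsquare differences by the symmetry $b-a\mapsto$ nonsquare. It flips sign at an explicit time. We take $\cI_{\rm early}$ and $\cI_{\rm late}$ as runs of at least $16$ updates on either side of that time, which gives (ii) and the accuracy part of (iii).

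\textbf{Fourier gain and refit.} Write the centered energy as $E_{\rm core}+E_{\rm help}(t)+E_{\rm query}(t)+{}$cross terms. The core lies mostly in $\cF_{\rm add}$. The helper and query parts decay by the factor $(1-\eta_W\lambda_W)^{2t}$, although the query units contribute a small $\cF_{\rm add}$ component that also decays. Choosing amplitudes so that $\AH(\mat{H}_0)\approx a$ and $\AH$ on $\cI_{\rm late}$ is at least $a+0.0998$ is an explicit computation with Gauss-sum estimates for the QR indicator; the condition $p\ge101$ enters here to bound the error terms. For (iv), we compute the refit in Gram form with $\tau=R_t/(32n)$. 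Late, the table is dominated by the core, whose training Gram matrix is nearly circulant in the sum, so the ridge solution reproduces the class indicators up to a shrinkage factor. The error from the residual helper/query energy and from the QR-character terms is $O(p^{-1/2})$, and $p\ge1764$, i.e.\ $\sqrt p\ge42$, gives a margin above $11/50$ at nonsquare differences. The remaining held-out pairs, with zero difference, number $p$ out of $p(p+1)/2$, which yields the accuracy bound $(p-1)/(p+1)$. Early, the query units dominate the held-out rows, and the refit inherits their wrong predictions, giving the refit part of (ii).

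\textbf{Transfer to the actual run (main obstacle).} With $\lambda_V=0$, the head moves by exactly $\eta_V$ in Frobenius norm per step, so after $J$ steps $\norm{\mat{V}_t-\mat{V}_0}_F\le J\eta_V$. Taking $\eta_V$ tiny relative to $\delta$ and to the head scale makes every logit differ from the reference run by at most $J\eta_V\max_x\norm{h(x)}$. The harder part is $\mat{W}$. The Polar map is discontinuous where the rank or the singular gaps change, so a changed head could alter $\mat{F}_t$ and thereby the Polar step by $O(1)$. We would handle this by designing the core so that its gradient block has a singular gap bounded below, and by proving a Davis--Kahan-type perturbation bound for $\pol$ on the trajectory. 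The open set of initializations is then a small neighborhood on which activation patterns, since no preactivation is near zero, and the gaps persist. Since all conclusions are strict inequalities, they survive these perturbations.
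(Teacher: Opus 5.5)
Your architecture matches the paper's: a frozen-head reference run with an arithmetic core, zero-head helpers and training-invisible queries, followed by finite-horizon continuity. But the step that makes the reference run work is missing, and the substitute you offer cannot hold. Every hidden row is multiplied by $1-\eta_W\lambda_W$ at each update, so the core's training logits stay constant only if the Polar step exactly undoes this decay on the core.

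The core gradient does not vanish. Helpers have zero head and queries are off on $\Train$, so $\langle\mat W,\mat F\rangle$ is carried by the core. Once every training margin is positive it equals $\frac1n\sum_{x\in\Train}\sum_{c\ne y^\star(x)}P_c(x)\bigl(f_{y^\star(x)}(x)-f_c(x)\bigr)>0$. Moreover $\langle\cN_{\rm pol}(\mat F),\mat F\rangle=\|\mat F\|_*/\sqrt{\rank\mat F}>0$, so the step always changes the training loss to first order and cannot lie in null directions of the core. Your held-out formula, a query lead $c(1-\eta_W\lambda_W)^t$ against a constant core margin, presupposes a core that does not decay. If the core merely decayed with the other groups, all frozen-head logits would shrink by a common factor and no held-out prediction would ever flip.

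The missing idea is a Polar fixed point with a decay--step balance. The paper takes the core block $R\mat Q_\epsilon-\rho_jg_c\mat J_C$. Here $\mat Q_\epsilon$ is the unique minimizer of a convex fixed-mask CE program over the spectral-norm ball, and it satisfies $\mat Q_\epsilon=\pol(\widehat{\mat F}(\mat Q_\epsilon))$ with rank $2p-1$. With $R=(\lambda_W\sqrt{2p-1})^{-1}$, decay plus step returns exactly $R\mat Q_\epsilon$, and only the gating shield $-\rho_jg_c\mat J_C$, which vanishes on active inputs, decays.

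That full rank, rather than a singular-value gap or a Davis--Kahan bound, is also what your transfer step needs. The map $\pol(\mat F)=\mat F(\mat F^\top\mat F)^{+1/2}$ is continuous wherever the rank is locally constant, and every gradient row lies in the $(2p-1)$-dimensional span of the inputs. Helper rows are active on training pairs, so they acquire $O(\eta_V)$ gradients once their heads leave zero; maximal rank on that span is what stops $\cN_{\rm pol}$ from inflating these to unit size.

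Two other steps fail as stated. First, the query columns vanish on every training row, so the train-only ridge refit gives them zero coefficients, and the penalty $\tau_t$ ignores them too; the refit cannot inherit their wrong votes. Early, the refit is governed by the dominant helper block, so the helpers must be built so that their own refit is wrong on every held-out pair. The paper's row helpers $L_{\rm lift}\one\{a=u\}$ do this. Row $a$ carries exactly the training labels $2a+Q$, so the refit of the helper table predicts $\frac{1}{d+1/32}\sum_{q\in Q}T_{2a+q}$ with $d=(p-1)/2$. This gives the true label $2a+(b-a)$, $b-a\notin Q$, a negative score and the rival $2a+1$ a lead.

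Second, no core of the kind you describe exists. A bias-free unit $\ReLU(u(a)+v(b))$ whose response depends only on $a+b$ is constant on the grid, because its preactivation sums to $\sum_au(a)+\sum_bv(b)$ on every line $a+b=s$. The paper's core is instead a bank of rectangle-gated units with heads $-v\sqrt{\pi_l}T_{k_l}$, each voting against one class. Label exclusion and coverage make its margin positive at every grid input, including the diagonal held-out pairs that your nonsquare-symmetry argument leaves out of (iii). Its Fourier fraction is certified only above $0.107$, through a common-component identity and a Polar-distance bound against a binary comparison table, which is why the certified gain is only $0.0998$. Your ``core mostly in $\cF_{\rm add}$'' and near-circulant Gram heuristics therefore give no route to the gain in (iii) or to (iv). In the paper, (iv) needs an alias reduction to two Fourier problems and a sparse-teacher normal-residual bound.
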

\end{minipage}\par
\paragraph{Interpretation.}
Recovery and a net gain of more than $9.98$ percentage points in $\AH$ thus
occur within one plateau that is checked at every update. The theorem
asserts neither monotone growth of $\AH$ nor persistence after $J$, and it
does not order the recoveries of the refit and the network; clause (iv)
says nothing about the refit on the diagonal pairs.

\paragraph{Proof outline.}
The proof is in Appendix~\ref{sec:main} and is assembled in
Appendix~\ref{sec:width}. In the reference run, every training contrast
$m_{t,c}(x)=f_{t,y^\star(x)}(x)-f_{t,c}(x)$ stays in $(0,0.0201)$, so the
identity
\begin{equation}\label{paper:ce-contrast-main}
 \CE(f_t(x),y^\star(x))
 =\log\!\left(1+\sum_{c\ne y^\star(x)}e^{-m_{t,c}(x)}\right)
\end{equation}
gives perfect training classification with CE near $\log p$. Quantitative
bounds on the three groups of units control the switch of the held-out
predictions and the net gain in $\AH$, and the refit is analyzed
separately, from the full ridge problem with the prescribed penalty.
Strict signs of the training gates, a locally constant rank of the
supported gradient and a nonzero head gradient give continuity near the
reference trajectory. Over the finite horizon, the strict bounds therefore
carry over to the actual run for a small enough positive head step and an
open set of initializations.

\subsection{Random splitting and refit-first recovery}
\label{paper:theory:random}
The second construction removes both restrictions of the first, the
engineered split and the loss near $\log p$. Each pair enters the training
set independently with probability $4/5$, a value set by our proof
estimates rather than a known threshold, and the training CE stays near a
prescribed positive level $L_*$, set by $a_{\rm conf}$ below, which can be
made arbitrarily small and is not the minimum of the CE. The refit becomes accurate early, and
the network recovers late.

\paragraph{Idea of the construction.}
The initialization pairs a fixed rectangular dictionary of hidden units,
which already spans every function on the grid, with a head calibrated on
the training data. Along a reference feature trajectory with fixed decay
and a rule-directed update, the rates balance decay against the growth of
the useful component, which keeps the leading training confidence, and
with it the CE, stable. On held-out inputs, however, the competing logit
contributions change their balance, and the network's held-out accuracy
becomes high only at a prescribed time $J$. Ridge fitting on the training
rows selects an accurate predictor from the first update on. Since the
initial features already span every function on the grid, the refit
improves because it selects a better predictor, not because the features
gain new span.

\noindent\begin{minipage}{\linewidth}\vspace{8pt}
\begin{mainthm}[Random split]\label{paper:thm:random}
\upshape
\setlength{\parindent}{0pt}
\setlength{\parskip}{4pt}
\textbf{Parameters.} Fix an integer $J\ge30$, constants $\alpha,\theta,a_{\rm conf}>0$,
$\beta\in[1/2,1)$, $\kappa\in(0,1/8]$, and tolerances
$\epsilon_{\rm lag},\epsilon_{\rm test},\epsilon_{\rm obs}\in(0,1)$.
Set $\rho=4/5$ and $L_*=\log(1+\rho^2e^{-a_{\rm conf}})$.

\textbf{Sampling and initialization.} There is a threshold $p_0$ such that
the following holds for every odd prime $p\ge p_0$.
Use width $M=p^2(p-1)^2$ and the prescribed rates and decay
$(\eta_W,\lambda_W,\eta_V)$, chosen before sampling.
Place each pair in $\Train$ independently with probability $\rho$.

With probability at least $1-2/\sqrt p$ over the split, the split is proper
and there is a nonempty open ball in $\R^{3pM}$ around a prescribed
initialization built from the training data such that every trajectory
of~\eqref{paper:optimizer} with $\lambda_V=0$ started in this ball satisfies
all four conclusions below.
\begin{enumerate}[label=(\roman*),topsep=6pt,itemsep=6pt,parsep=0pt,leftmargin=2.4em,labelsep=0.5em]
\item \textbf{Training fit and CE plateau.}
For every $0\le t\le J$, training accuracy is one and
\begin{equation*}
 \frac{20}{21}L_*<L_t<\frac{22}{21}L_*.
\end{equation*}
Consequently, $v([0,J])<0.1$.

\item \textbf{Delayed network recovery.}
The network's held-out accuracy is $0$ at $t=0$, below
$\epsilon_{\rm lag}$ for $0\le t<J$, and above
$1-\epsilon_{\rm test}$ at $t=J$.

\item \textbf{Refit accuracy and error.}
The refit's held-out accuracy is $0$ at $t=0$ and at least
$1-\epsilon_{\rm obs}$ for $1\le t\le J$. Its normalized error satisfies
\begin{equation*}
 E_{\rm ref}(0)-E_{\rm ref}(t)\ge\frac{401}{1568}>\frac14,
 \qquad t\in\{J-1,J\}.
\end{equation*}

\item \textbf{Stepwise Fourier improvement.}
For every $0\le t<J$,
\begin{equation*}
 \AH(\mat{H}_{t+1})-\AH(\mat{H}_t)>\frac{1}{24J(3p+1)}.
\end{equation*}
\end{enumerate}
\end{mainthm}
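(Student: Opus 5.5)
The proof follows the strategy used for Theorem~\ref{paper:thm:half}. We construct a reference trajectory in closed form, verify (i)--(iv) on it with strict inequalities, and transfer them to an open ball of initializations by continuity over the finite horizon $J$.

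\textbf{Sampling and dictionary.} We first fix the probabilistic input. With $\rho=4/5$, Chernoff and union bounds over the $O(p)$ rows, columns and sum classes show that, with probability at least $1-2/\sqrt p$ for $p\ge p_0$, the split is proper. Here proper means that every operand and every label class has training count within $\rho p\pm p^{3/4}$. We also need the empirical Gram matrix of the training rows, restricted to $\cF_{\rm add}$ and to its complement, to be close to $\rho$ times the grid Gram matrix in operator norm. We would get this from a matrix Bernstein bound on the sum of the rank-one terms $\one_{\{(a,b)\in\Train\}}x_{ab}x_{ab}^\top$, lifted to the dictionary features.

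The width $M=p^2(p-1)^2$ allows one block of units per pair $(a,b)$ and per pair of nonzero frequencies or residues. The hidden table $\mat{H}_0$ can then be chosen as a rectangular dictionary whose column span is all of $\R^{p^2}$. Its energy splits into a rule component in $\cF_{\rm add}$ with amplitude $s_0$ and a nuisance component with amplitude $r_0$, where $s_0/r_0$ is small, so $\AH(\mat{H}_0)$ is small.

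The head $\mat{V}_0$ is calibrated on $\Train$ so that every training contrast $m_{0,c}(x)$ equals $a_{\rm conf}+\log(1/\rho^2)$ up to $o(1)$. By~\eqref{paper:ce-contrast-main} this gives $L_0\approx\log(1+\rho^2e^{-a_{\rm conf}})=L_*$. On held-out pairs, the nuisance block is tuned, using the training mask, to produce a harmful logit that dominates, so that $\Acc_\Test(0)=0$.

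\textbf{Reference dynamics.} We freeze the head to first order and show that the supported-Polar step of the negative gradient $\mat{F}_t$ is, to within a perturbation of order $1/\sqrt p$, a fixed rank-$r$ direction along the rule component. Decay with factor $1-\eta_W\lambda_W$ then shrinks both components geometrically, while the Polar step adds a constant $\eta_W/\sqrt r$ to the rule amplitude. We choose $(\eta_W,\lambda_W)$ so that the training logits stay constant: the decay lost on the training rows is exactly replenished by the rule step, because on training rows the calibrated head sees the rule and nuisance parts through the same contrasts. This keeps the contrasts within a relative $1/21$ band, which yields (i).

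On held-out rows no such compensation occurs. The harmful nuisance logit decays like $(1-\eta_W\lambda_W)^t$ while the rule logit grows. We place the crossover at exactly $t=J$ by setting $\eta_W\lambda_W=\theta/J$ and using $\alpha$ and $\beta$ to fix the ratio of the initial amplitudes. Held-out margins are concentrated by the proper-split estimates, so accuracy is below $\epsilon_{\rm lag}$ before $J$ and above $1-\epsilon_{\rm test}$ at $J$, which gives (ii).

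For (iv), we write $\AH(\mat{H}_t)=s_t^2/(s_t^2+r_t^2)$ with explicit recursions. The monotone increase of $s_t/r_t$ gives a per-step gain bounded below by the derivative of $x\mapsto x/(1+x)$ times the increment $\eta_W/\sqrt r$. After normalization, the lower bound $1/(24J(3p+1))$ follows, with $3p+1$ counting the effective rank.

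\textbf{Refit.} For (iii), we use the Gram form of~\eqref{paper:expridge} with $\tau_t=\kappa R_t/[4p(p-1)]$. At $t=0$ the rule component on the training rows is too weak relative to $\tau_0$, so the ridge solution is dominated by the nuisance directions, which are aligned with the harmful held-out pattern; hence $\Acc=0$. After one step, $s_1$ exceeds the ridge threshold, and the projection of $\mat{T}_\Train$ onto the rule directions extends to $\Test$ through the sum structure. This gives accuracy at least $1-\epsilon_{\rm obs}$ and a quantitative error drop. The constant $401/1568$ then comes from evaluating $E_{\rm ref}$ with the matrix Bernstein bounds at $\kappa\le1/8$ and $\rho=4/5$.

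\textbf{Transfer and the main obstacle.} Finally, we argue as in Theorem~\ref{paper:thm:half}. Strict ReLU gate signs on all rows, locally constant gradient rank, and $\mat{G}_t\ne0$ make the $J$-step map continuous near the reference trajectory, and all conclusions are strict inequalities, so they persist on an open ball. A small $\eta_V$ keeps the head's motion within the slack.

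The main obstacle is the reference dynamics step: proving that $\pol(\mat{F}_t)$ is close to the intended rule direction uniformly over $t\le J$. This requires a spectral gap of $\mat{F}_t$ that survives the random training mask. I would first try to compute $\mat{F}_t$ exactly on the proper-split event and treat the mask fluctuations by Davis--Kahan perturbation of the singular subspace.
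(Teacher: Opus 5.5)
There is a genuine gap at the transfer step, and that step is where most of the work in this theorem lies. You plan to verify (i)--(iv) on a closed-form reference path and then pass to an open ball by continuity ``as in Theorem~\ref{paper:thm:half}''. That mechanism works there only because the frozen-head reference is an \emph{exact} trajectory of the update. The static convex program makes the Polar factor of the source exactly $\mat{Q}_\epsilon$ at every step, with strict gates and full supported rank. Here you concede that $\pol(\mat{F}_t)$ is only approximately your fixed direction. Your reference is therefore not a trajectory, and continuity at it says nothing about the actual run from the center.

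What you need is a quantitative finite-horizon comparison, uniform over the split event, the initial ball and all admissible head sequences. In the paper this is Lemma~\ref{lem:bfr-induction}. It is an induction on $d_j=\|\mat{W}_j-\mat{W}_j^{\rm ref}\|_F/h$ and the normalized force error $\delta_j^F$, via $\delta_j^F\le C(\phi+Ad_j)$ and $d_{j+1}\le ad_j+(2s/h)\delta_j^F$. The confidence scale $A=\log(p-1)+a_{\rm conf}$ grows with $p$ and amplifies errors, so one must show $A^\ell d_j\to0$ for every fixed $\ell$.

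The hypotheses of your continuity argument are also unavailable along actual states. The force error is controlled only in Frobenius norm, at relative size $O(p^{-1/4})$. That does not certify full supported rank $2p-1$, so $\cN_{\rm pol}$, with its $1/\sqrt{\rank}$ normalization, may jump. The gate gaps of the dictionary are $O(h/\sqrt M)=O(p^{-2})$, so actual gates can flip.

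The paper handles each of these differently:
\begin{itemize}
\item It uses the rank-free anchored bound $\|\cN_{\rm pol}(\mat{A})-\cN_{\rm pol}(\mat{B})\|_F\le2\epsilon/(\sqrt r\sigma)$ of Lemma~\ref{lem:bfr-polar}. This needs a singular-value floor of the anchor, not a spectral gap, so Davis--Kahan is not the right tool.
\item It bounds gate flips entrywise.
\item It proves each training margin by a separate operator-norm motion estimate.
\item It obtains exact initial refit failure from the explicit radius $r_6$, not from continuity.
\end{itemize}

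Several construction choices also fail as stated.
\begin{itemize}
\item \textbf{Calibration.} If every training contrast is $a_{\rm conf}+\log(1/\rho^2)$, then $L_0\approx\log(1+(p-1)\rho^2e^{-a_{\rm conf}})$, which grows like $\log p$. The contrasts must scale with $\log(p-1)$. In the paper the center's training margins are $A[1+\xi(2-N_{xc})]$, up to $O(A\omega)$. The $\rho^2$ in $L_*$ is the fraction of rivals with $N_{xc}=2$. Along the reference their contrast stays exactly $\nu_jA(1+kz_j)=A$ while the memorizing $\xi$-part decays, which is why the CE is stable. The same $-A\xi N_{xc}$ term makes every held-out margin negative at $t=0$.
\item \textbf{Crossover.} Your mechanism uses parameters you do not control. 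The constants $\alpha,\beta,\theta$ are arbitrary and fixed by the statement, and $\eta_W\lambda_W=\theta/J$ can exceed one. The paper fixes the clock $a=(1+kZ)^{-1/J}$, with $k=\rho c_R(p)$ and $Z=9/50$. It places the crossover through the head mixing weight $\xi=k(z_{J-1}+z_J)/4$. A held-out rival with $N_{xc}=2$ then has margin $A[1-\nu_j(1+2\xi)]$, which equals $-Ag_-$ at $J-1$ and $Ag_+$ at $J$.
\item \textbf{Head step.} In the paper, $\theta$ only sets the head step $\eta_V=\theta/(hp)$. This step is fixed before sampling and must be tolerated uniformly; it cannot be shrunk afterwards to fit a continuity slack.
\end{itemize}
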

\end{minipage}\par
\paragraph{Parameter choices.}
Here $\alpha$ scales the initial dictionary, $\beta$ its offsets and
$\theta$ the head step. For each sufficiently large prime, the width and
the rates~\eqref{eq:r62optimizer} are chosen before sampling, and the
initial ball depends on the realized training data. The probability is
over the split only and says nothing about entry from iid weights.
Increasing $a_{\rm conf}$ makes $L_*$ arbitrarily small but may change the
uncalibrated threshold $p_0$, and changing $J$ changes the initialization
and the rates rather than extending one fixed run.

\paragraph{Interpretation.}
Both predictors start with zero held-out accuracy. If
$\epsilon_{\rm lag}+\epsilon_{\rm obs}<1$, clauses (ii) and (iii) give
\begin{equation}\label{paper:refit-chronology-main}
 \Acc_{\Test}(t)<\epsilon_{\rm lag}
 <1-\epsilon_{\rm obs}\le\Acc_{{\rm ref},\Test}(t),
 \qquad 1\le t<J,
\end{equation}
where $\Acc_{{\rm ref},\Test}$ is the refit's held-out accuracy: the refit
is accurate from the first update, and the network recovers only at $J$,
within the same CE plateau. The error gain in (iii) compares the times
$J-1$ and $J$ with the initial error of the same trajectory; it is neither
a per-step decrease of the error nor a $25$-percentage-point accuracy gain.
Clause (iv) gives a Fourier gain at every update, although $\AH$ remains
$O(1/p)$ in this construction.

\paragraph{Proof outline.}
Appendix~\ref{sec:r62} constructs the reference feature trajectory. Its
rates keep the leading training confidence stable, signed-margin estimates
on held-out inputs place the network's recovery at the prescribed time
$J$, and an analysis of the complete feature kernel with the prescribed
ridge penalty gives accurate refit predictions from time $1$. Uniform
error bounds transfer these estimates to the actual updates for every
initialization in the prescribed ball (Appendix~\ref{sec:bwc-radii}).
Both layers train, but the small head step makes the recovery mainly
feature-driven.

\begin{figure*}[t]
\centering
\includegraphics[scale=\mainfigscale]{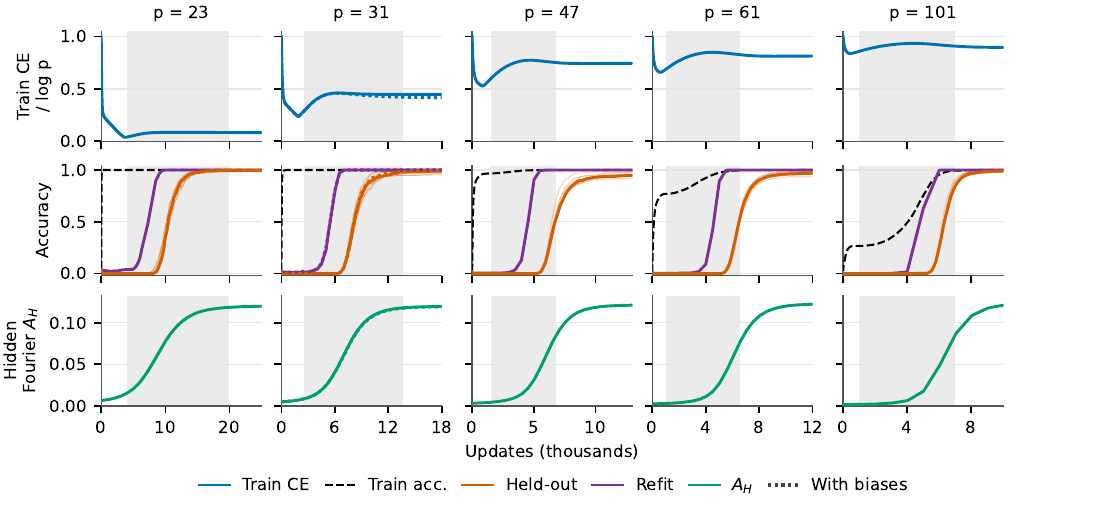}
\caption{\proofrevtext{\textbf{One hidden layer across primes} (width 512, 10 runs per prime;
$\lambda_W=0.01,0.015,0.02,0.02,0.02$ at $p=23,31,47,61,101$, chosen on
development seeds, for $p=101$ at $p=97$). Rows: cycle-mean training CE
divided by $\log p$; median training accuracy (dashed), held-out accuracy
of the network (runs and median) and of the train-only refit (median);
$\AH$. Gray: from the median start to the median end of the runs'
post-hoc windows (Appendix~\ref{paper:app:windows}). The primary endpoint
of Section~\ref{paper:deep} is met by 9, 4, 0, 0 and 0 runs; at $p\ge47$
some training pairs stay misclassified inside the window. Runs stopped
early by the stopping rule keep their last value. Dotted ($p=31$): medians
with a hidden bias (Appendix~\ref{paper:app:bias}).}}
\label{fig:onehl-primes}
\end{figure*}

\begin{figure*}[t]
\centering
\includegraphics[scale=\mainfigscale]{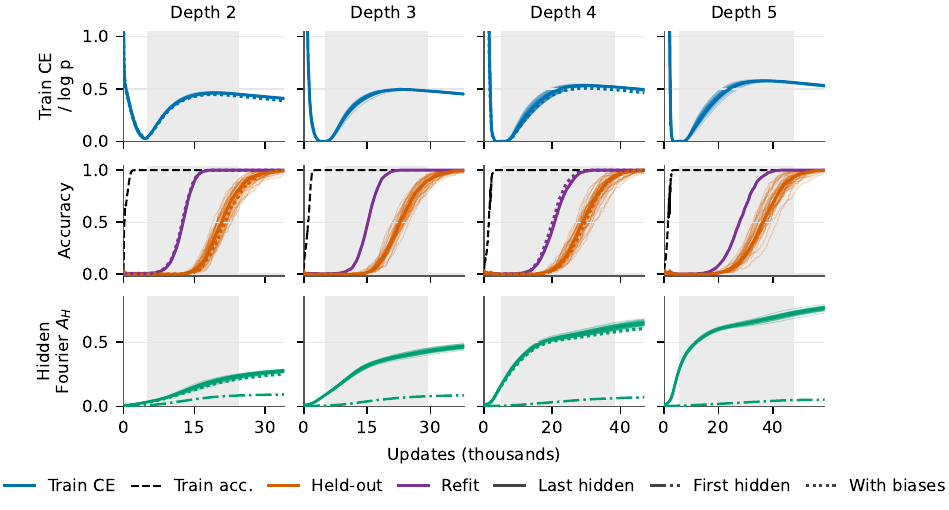}
\caption{\proofrevtext{\textbf{Two to five hidden layers at $p=31$: every run (thin) and the median (thick).}
Width 256. Depth 2 uses a fixed rate set on development runs; depths 3--5
tune the last-layer rate on a preliminary run
(Appendix~\ref{paper:app:deep-settings})\added{; every depth shows the same
40 evaluation seeds}. The primary endpoint is met by \added{34}, 40, 39 and
38 runs; \added{the six depth-2 misses fail the share threshold, and one of
them also ends with held-out accuracy below $0.99$}. Rows as in
Figure~\ref{fig:onehl-primes}, with the last-hidden-layer $\AH$ (solid) and
the median first-hidden-layer $\AH$ (dash-dot). Gray: from the median start
to the median end of the runs' post-hoc windows
(Appendix~\ref{paper:app:windows}), which contain the rise in loss after
the knee. One hidden layer at $p=31$ is the second column of
Figure~\ref{fig:onehl-primes}. Dotted (depths 2, 4): medians with hidden
biases.}}
\label{fig:deep-main}
\end{figure*}

\begin{figure*}[t]
\centering
\includegraphics[scale=\mainfigscale]{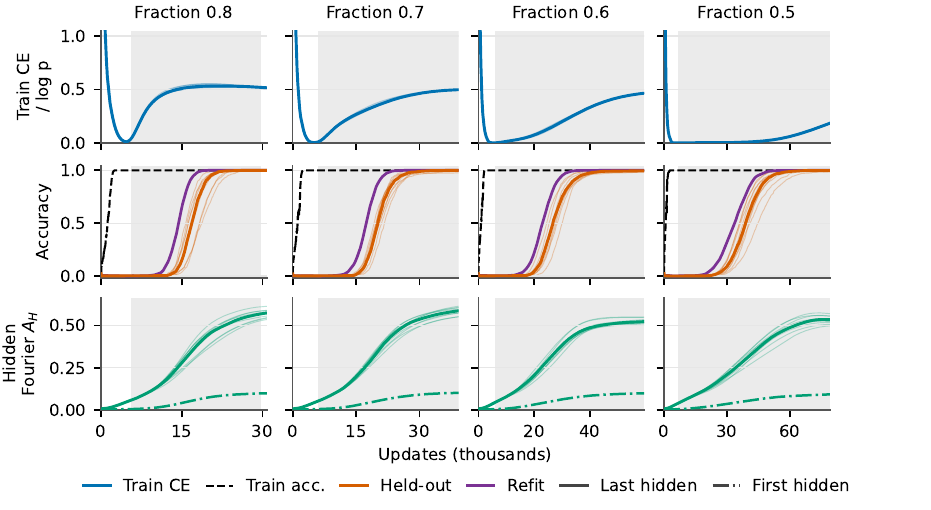}
\caption{\added{\textbf{Depth 3 at $p=47$ across training fractions: every run
(thin) and the median (thick).} Width 256, 10 fresh runs per fraction,
last-layer multiplier $r=0.45$, $0.30$, $0.18$ and $0.08$ chosen per
fraction on development runs, head decay $\lambda_V=0.2\cdot31/p$
(Appendix~\ref{paper:app:deep-fractions}). Rows and gray bands as in
Figure~\ref{fig:deep-main}. The criterion of
Appendix~\ref{paper:app:deep-fractions}, fixed before the runs, is met by
10, 10, 10 and 8 runs; the two misses end at held-out accuracy $0.959$ and
$0.971$ when they reach the update budget. No training pair is
misclassified at any log inside any window. Each column ends no later than
its shortest run.}}
\label{fig:deep-fractions}
\end{figure*}

\begin{figure*}[t]
\centering
\includegraphics[scale=\mainfigscale]{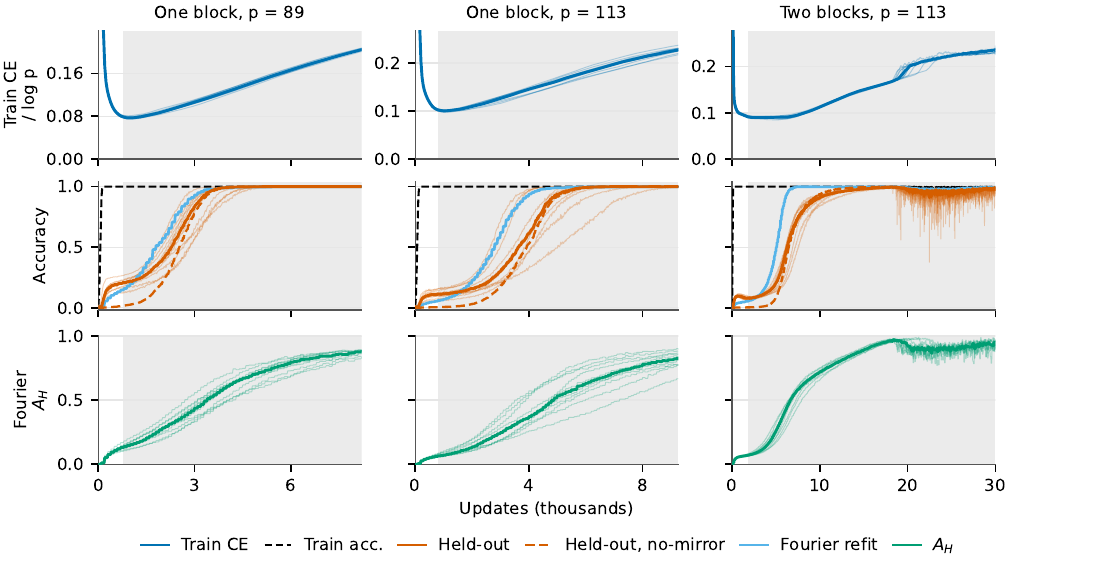}
\caption{\added{\textbf{Transformers with one and two blocks: every run (thin)
and the median (thick).} One block at $p=89$ (training fraction $0.25$,
$\lambda_V=0.33$) and at $p=113$ ($0.2$, $\lambda_V=0.275$), with the
output gain decaying as $g_t\approx1+e^{-t/5000}$, and two blocks at
$p=113$ ($0.2$, $\lambda_V=0.275$, $g_t\approx1+e^{-t/10000}$); 10 runs
per column (seeds $i=11$--$20$). Rows as in Figure~\ref{fig:onehl-primes},
with the training CE averaged over 201 updates (each column on its own
scale) and $\AH$ of the normalized head input $\hat r$. The accuracy row
adds the two medians used by the pass rule of
Section~\ref{paper:transformer}: held-out accuracy on the no-mirror pairs
(dashed) and the Fourier-filtered refit, which replaces the refit of
Figure~\ref{fig:onehl-primes}. The refit, not drawn, trails the network at
its $50\%$ crossing in all 30 runs, by a median of 0.7k, 1.2k and 0.5k
updates in the three columns. Gray: from the median start to the median
end of the post-hoc windows on all held-out pairs
(Appendix~\ref{paper:app:tf-window}). Each column ends at its shortest
run. One block passes in 10 of 10 runs at both primes; its runs stop at
8.2k--10.2k updates ($p=89$) and 9.3k--14k ($p=113$). Two blocks pass in
none. Through the held-out rise they behave like one block: the CE rises
and every training pair is correct at every 20-update log. The first
training errors are logged at 17.8k--22.0k updates, 1.4k to 5.7k updates
after held-out accuracy first reaches $0.99$; held-out accuracy then falls
back and jitters, and 6 of the 10 runs end below $0.99$
(Appendix~\ref{paper:app:tf-gain}, Figure~\ref{fig:tf-blocks}).}}
\label{fig:tf-contrast}
\end{figure*}

\section{Experiments}\label{paper:experiments}
In this section, we discuss our experimental verification of whether
qualitative counterparts of the coexistence proved in
Section~\ref{paper:theory} appear outside the assumptions of its theorems.
The theorems start from constructed initializations, at widths that grow
with $p$; the experiments start from iid initialization at practical
widths, in 1HL networks (Section~\ref{paper:onehl}), in deeper ReLU
networks (Section~\ref{paper:deep})\added{ and in transformers with one and
two blocks (Section~\ref{paper:transformer})}.
Appendix~\ref{paper:app:thm41-iid} also runs the split and the updates of
Theorem~\ref{paper:thm:half} from iid initialization at width 512.
Appendices~\ref{paper:app:setup-ext}--\ref{paper:app:deep-ext} give the
window definitions, protocols and further results, including stricter
evaluations that require the first training fit to persist
(Appendices~\ref{paper:app:strict} and~\ref{paper:app:deep-protocol})\added{;
Appendix~\ref{paper:app:tf} details the transformer}.

The \added{runs of Figures~\ref{paper:fig:headline}--\ref{fig:deep-main}}
use head decay $\lambda_V=0.2$\added{; the deeper networks across training
fractions use $\lambda_V=0.2\cdot31/p$, and the transformers a value set
per prime}. The theory allows head decay only under the construction-specific
sufficient conditions of Remark~\ref{rem:readoutdecay}, which in both
constructions keep the cumulative decay $J\eta_V\lambda_V$ below one; the
cumulative decay on the window of Figure~\ref{paper:fig:headline},
$7500\cdot0.03\cdot0.2=45$, lies outside them.

\paragraph{Windows fixed in advance and post-hoc windows.}
Since a statement about a window is only as reliable as the rule that
chooses it, the primary counts use a window, or a band-selection rule,
fixed before evaluation: the locked interval of
Figure~\ref{paper:fig:headline} for one hidden layer, and for deeper
networks a band that a loss-only rule selects in each run
(Section~\ref{paper:deep}). Figures~\ref{fig:onehl-primes}
and~\ref{fig:deep-main} instead shade looser post-hoc non-falling windows.
Such a window lies after the head norm has nearly reached the ceiling
$1/\lambda_V$ set by its decay, and it is the stretch over which the
cycle-mean loss rises the most while never falling more than a factor
$1.03$ below its running maximum (Appendix~\ref{paper:app:windows}). These
windows make no flatness claim, enter only the secondary analysis
(Appendix~\ref{paper:app:posthoc}) and do not determine the primary
counts.\added{ Figure~\ref{fig:deep-fractions} shades the same windows,
which there also enter a criterion fixed before its runs
(Appendix~\ref{paper:app:deep-fractions}); the transformer has no locked
window and is scored on a post-hoc window of its own
(Appendix~\ref{paper:app:tf-window}).} Every
plotted cohort shows all its runs\added{, except in the two appendix
figures on the transformer's mirror pairs and readouts, which show
medians}.
\subsection{One-hidden-layer ReLU networks}\label{paper:onehl}
\paragraph{Protocol.}
Figure~\ref{paper:fig:headline} shows 20 runs at $p=31$ and width $M=512$,
each with a fresh initialization and a fresh uniform random $80\%$ split
(768 training and 193 held-out pairs). The weights are iid Gaussian,
$W_{ij}=cZ_{ij}/\sqrt p$ and $V_{kj}=\widetilde Z_{kj}/(c\sqrt M)$, with
$c=24$ and independent standard Gaussian arrays
$\mat{Z},\widetilde{\mat{Z}}$. Training applies the
updates~\eqref{paper:optimizer}, full batch in float64, for 40{,}000
updates with fixed $\eta_W=\eta_V=0.03$, $\lambda_W=0.015$ and
$\lambda_V=0.2$. A second development search held $c$, both rates and the head
decay fixed and swept the prime, the training fraction and the feature
decay. A predeclared rule that uses held-out accuracy selected this
configuration and the window $I=[4500,12000]$, and both were locked before
the 20 initializations and splits were drawn
(Appendix~\ref{paper:app:experiments-ext}), so the evaluated runs played no
part in choosing them.

\paragraph{Results.}
All 20 runs classify every training pair correctly at every update of
$I$. The training loss first dips to about $0.8$ and rises to about $1.5$;
on $I$, it stays within the declared $10\%$ plateau tolerance, with minimum
$1.456$--$1.498$ and $v(I)=6.33$--$7.81\%$. Over $I$, $\AH$ rises from
$0.018$--$0.020$ to $0.115$--$0.116$, and held-out accuracy climbs from
zero to $91.7$--\added{$98.4$}$\%$. The features support an accurate train-only
readout before the network's own head is accurate: the refit first records
$90\%$ at update 6{,}500 in every run, whereas the network first crosses
$90\%$ at updates 9{,}631--11{,}665. The window is not anchored at the
first fit of the training set. In every run, a few training pairs are
misclassified again for a while after the first fit, long before the
window, so the stricter endpoint that requires the fit to persist from its
first attainment fails (Appendix~\ref{paper:app:strict}). Pre-registered
tests of recovery without a decrease of the CE, at $p=23$, are in
Appendix~\ref{paper:app:nodecrease}.

\paragraph{Other primes.}
Figure~\ref{fig:onehl-primes} shows separate 1HL cohorts of 10 runs at
$p=23$, 31, 47, 61 and 101. The feature decay was chosen on one or two
development seeds per prime (for $p=101$, at $p=97$), so these cohorts are
not a pre-specified evaluation. The primary endpoint of
Section~\ref{paper:deep} is met by 9, 4, 0, 0 and 0 runs; at $p\ge47$, some
training pairs stay misclassified inside the window.

\subsection{Extension to deeper ReLU networks}\label{paper:deep}
\paragraph{Protocol.}
We train networks with $D=2$ to $5$ hidden layers of width 256 at $p=31$,
from iid Gaussian initialization (Appendix~\ref{paper:app:deep-ext}), with
supported-Polar hidden updates and fixed decays $0.2$ for the head,
$0.015$ for the first hidden layer and $0.07$ for the later layers. For
$D=3,4,5$, only the last-layer rate is tuned, by a pre-specified rule
applied to a preliminary run, after which training restarts from the same
initialization and split with that rate fixed
(Appendix~\ref{paper:app:deep-settings}). Depth 2 uses a fixed rate chosen
on development runs. The evaluation seeds are disjoint from development and
shared across depths\added{, depth 2 included}, so the cohorts are
dependent.

\paragraph{The primary endpoint.}
Deeper networks have no locked window. Instead, a rule that uses no
held-out outcome selects a band $I_C$ around the CE peak: after the knee,
the first checkpoint at which the head norm reaches $99\%$ of its ceiling
$1/\lambda_V$, it takes the largest cycle-mean loss and the maximal
interval around it on which the cycle-mean loss stays at least that peak
divided by $1.03$ (Appendix~\ref{paper:app:deep-endpoint}). The
pre-specified primary endpoint is a joint criterion with four parts: $I_C$
contains at least
$75\%$ of the rise of held-out accuracy from $10\%$ to $90\%$; the final
held-out accuracy is at least $0.99$; the Fourier fraction of the last
hidden layer gains at least $0.02$ across $I_C$; and every training margin
is
positive at every 20-update log in $I_C$. A depth succeeds if the lower end
of the two-sided $95\%$ Wilson interval for its pass rate is at least
$0.80$, that is, if at least 37 of its 40 runs pass.

\paragraph{Results at $p=31$.}
The primary endpoint is met by 40/40, 39/40 and 38/40 runs at $D=3,4,5$,
so these depths succeed, and by \added{34/40} at depth 2\added{, whose
Wilson lower end, $0.71$, is below $0.80$} (Figure~\ref{fig:deep-main}). The selected bands of these cohorts
have cycle-mean loss variation at most $3\%$
(Appendix~\ref{paper:app:deep-endpoint}). All 120 runs at depths 3--5
finish with perfect held-out accuracy, and 118 have raw-loss variation at
most $10\%$ across the first logged $10\%$-to-$90\%$ transition.

The recipe does not transfer unchanged: with the same recipe, changing the
prime to 23, the width to 512 or the feature update to a normalized
gradient gives no joint successes. The normalized-gradient runs still
generalize, but their loss keeps falling during the recovery
(Appendix~\ref{paper:app:deep-protocol}).
Appendix~\ref{paper:app:deep-medians} reports further fixed settings,
selected on development runs, with depths 3--4 at $p=23,47,61,97,101$ and
depth 2 at $p=23,47$, and separates primary from secondary post-hoc
outcomes.

\paragraph{\added{Other primes and training fractions.}}
\added{A re-tuned recipe carries the post-hoc behavior to other primes and
smaller training sets. With the last-layer multiplier $r$ set per training
fraction on development runs and the head decay scaled with the prime,
$\lambda_V=0.2\cdot31/p$, depths 3 and 4 at $p=47$ and $61$, with training
fractions $0.5$ to $0.8$, meet a criterion on the post-hoc non-falling
window, fixed before the runs, in 155 of 160 fresh runs
(Appendix~\ref{paper:app:deep-fractions}). A run passes if the window
holds at least $75\%$ of both the network's and the refit's
$10\%$-to-$90\%$ held-out rises, the refit gains at least $0.2$ and the
last-layer $\AH$ at least $0.02$ inside it, and held-out accuracy gains at
least $0.2$ from the start of the window to the end of the run and ends at
$0.99$ or above; the training fit is not required. For depth 3 at $p=47$
(Figure~\ref{fig:deep-fractions}), no training pair is misclassified at
any log inside any window. The windows contain the CE's climb after the
knee, which at fraction $0.5$ starts from near zero. From the refit's
$20\%$ crossing to the network's $90\%$ crossing, an interval that at
fraction $0.8$ contains the network's whole $10\%$-to-$90\%$ held-out rise,
the CE changes by $-2.9\%$ to $+9.5\%$ at that fraction (setting medians);
over the same interval, it rises at the smaller fractions. The primary
endpoint holds in only 9 of the 160 runs, so these runs extend the post-hoc
picture rather than the primary endpoint; $p=23$ was not rerun in this
way.}

\subsection{Transformers}\label{paper:transformer}
\begin{additions}
To test the behavior beyond fully connected networks, we train a one-block
transformer at $p=113$ in the layout of Nanda et al.~\cite{nanda2023}, with
no LayerNorm and, here, no biases (Appendix~\ref{paper:app:tf-setup}), on a
random $20\%$ of the grid (2{,}553 pairs); it reads $(a,b,{=})$ and predicts
at the last token. The logits are $f=g\,\mat{V}\hat r$, where
$\hat r=r/\mathrm{rms}(r)$ is the final residual vector $r$ at that token
divided by its root mean square~\cite{zhang2019rmsnorm}, the unembedding
$\mat{V}\in\R^{p\times128}$ is the head and $g$ is a trained scalar gain.
The other weight matrices take the supported-Polar step
of~\eqref{paper:optimizer} with $\eta_W=0.1$ and $\lambda_W=0.01$, and
$\mat{V}$ takes the normalized step with $\eta_V=0.03$ and
$\lambda_V=0.275$, full batch in float64.

\paragraph{The output gain.}
Without the gain, the logit scale is set by $\mat{V}$ alone, whose norm
settles within a few hundred updates, and in all 10 runs the training CE
falls by $7$--$10\%$ during the held-out rise. The gain starts at $g_0=2$
and takes a sign step with decay,
$g\leftarrow g-\eta_g\operatorname{sign}(\partial L/\partial g)-\eta_g\lambda_g g$,
with $\eta_g=2\cdot10^{-4}$ and $\lambda_g=1$. Once the training set is
fitted, $\partial L/\partial g<0$, so $g$ relaxes toward 1 with time
constant $1/(\eta_g\lambda_g)=5{,}000$ updates; the logit scale then
shrinks slowly, and the training CE can rise while every training pair
stays correct.

\paragraph{Pass rule and results.}
A held-out pair whose mirror $(b,a)$ is a training pair can be answered from
that pair, so we also score the \emph{no-mirror} pairs, whose mirrors are
held out too. Each run is scored on a post-hoc window that starts at or after the
first full fit, on which the averaged training CE never falls more than
$3\%$ below its running maximum, with the start chosen to hold the largest
share of the held-out rise (Appendix~\ref{paper:app:tf-window}). A run
passes if, on both sets of held-out pairs, the window holds at least $75\%$
of the $10\%$-to-$90\%$ held-out rise and, inside it, a train-only
Fourier-filtered refit (below) gains at least $0.2$, $\AH$ of $\hat r$ at
least $0.02$, and no training pair is misclassified at any 20-update log,
and if the final accuracy on all held-out pairs is at least $0.99$. All 10
runs pass (Figure~\ref{fig:tf-contrast}, middle). After its minimum of
about $0.1\log p$, the averaged training CE rises by a median $33\%$
($18$--$55\%$) across the no-mirror held-out rise; inside the window,
$\AH$ of $\hat r$ gains $0.78$--$0.83$, and the final held-out accuracy is
at least $0.999$. With the gain held fixed at $1.5$, $2$ or $3$, the CE
falls during the rise and no run passes (0/9), so the decay of the gain is
what lifts the CE.

\paragraph{Readouts.}
Fitted to $\hat r$, the train-only ridge refit~\eqref{paper:expridge},
which leads the network in Section~\ref{paper:onehl}, trails the
transformer in every run, by a median of 1.2k updates at $50\%$ accuracy on
all held-out pairs: it weights all 56 frequencies of the label equally,
while $\hat r$ carries only 7 to 13 of them at the network's $50\%$
crossing. Restricted to the 8 frequencies with the most training energy, a
choice that uses training data only, the same readout leads the network in
all 10 runs, by a median of 0.7k updates on all held-out pairs; the
network's own logits, restricted to the same 8 frequencies, lead it by
about as much, so the restriction removes memorized terms rather than
adding signal (Appendix~\ref{paper:app:tf-readouts}).

\paragraph{Other primes and two blocks.}
With the head decay and the training fraction set per prime, the fraction
so that the held-out rise starts after the CE minimum, the recipe passes in
10/10 runs at $p=89$ (Figure~\ref{fig:tf-contrast}, left), 7/10 at $p=149$
(10/10 if up to 10 training pairs may be misclassified) and 5/10 at $p=61$
(10/10 on the no-mirror pairs, and 9/10 with a split that keeps every
training pair with its mirror). With two blocks at the main runs' fraction
(Figure~\ref{fig:tf-contrast}, right), the CE still rises during the
held-out rise and no training pair is misclassified at any 20-update log,
but the first training errors are logged 1.4k to 5.7k updates after
held-out accuracy first reaches $0.99$, and held-out accuracy then falls
back. No run at the four planned two-block settings passes (0/10 at that
fraction, Figure~\ref{fig:tf-blocks}, and 0/15 at the other three);
lowering the feature rate after the held-out rise kept the fit in three
exploratory runs (Table~\ref{paper:tab:tf-blocks-late}). These are
development runs: the gain and the rates were chosen on pilot runs, and the
pass thresholds were fixed before the runs with the gain
(Appendix~\ref{paper:app:tf}).
\end{additions}

\section{Discussion}\label{paper:discussion}
Relatively flat training cross-entropy can coexist with evolving features
and delayed generalization. Our theorems establish this for 1HL networks
from constructed initial regions, and the experiments show related
behavior from iid initialization and in deeper networks\added{ and a
one-block transformer}, outside these guarantees. The loss itself explains
why this is possible: once every training pair is correct, the CE measures
confidence rather than correctness. A flat training loss is therefore not
by itself evidence that learning has stopped, and diagnostics of the
representation, such as the Fourier fraction or a train-only refit, can
reveal progress that the loss does not show.

The two diagnostics answer different questions. The Fourier fraction
summarizes the structure of the representation, while signed margins
underlie our accuracy guarantees. Since the fraction ignores decoder signs
and class coverage, we pair it with the train-only refit, which tests
predictive usefulness directly. Refit-first recovery shows earlier
decodability under the specified ridge rule; it does not show that the
actual head follows the refit, and it does not separate the direction of
the features from their scale. \added{In the transformer runs of
Appendix~\ref{paper:app:tf-readouts}, the same ridge rule trails the
network in every run, while its restriction to the 8 Fourier frequencies
with the most training energy leads at the $50\%$ crossing in nearly all
of them.
The rule weights all frequencies equally, which suits the 1HL features,
where every frequency carries held-out signal, but not the transformer's,
where about 8 do.}

The theory has clear limits. Theorem~\ref{paper:thm:half} gives its
initialization basin, positive head step and small decay allowance only
qualitatively, and its width extension uses inactive padding. Neither
theorem shows how training from iid initialization reaches the constructed
regions. Quantifying these possibly small neighborhoods and rates,
calibrating the prime threshold of Theorem~\ref{paper:thm:random} and
allowing training-active memory copies are natural next steps that require
further trajectory estimates.

The deeper-network experiments show qualitative counterparts of our 1HL
results, but their configurations rely on hyperparameter search and tuning
(Appendix~\ref{paper:app:deep-settings})\added{; at $p=47$ and $61$, a
last-layer rate set per training fraction, with head decay scaled with the
prime, suffices for the post-hoc criterion of
Appendix~\ref{paper:app:deep-fractions} but not for the primary
endpoint}. Understanding how these choices support relatively flat
training loss, feature evolution and delayed generalization is a step
toward guarantees for deeper networks from iid initialization at practical
widths and learning rates.

\added{In the one-block transformer, the behavior depends on a slowly
decaying output gain, without which the training CE falls during the
held-out rise. A second block, at the main runs' training fraction, keeps
the rising CE during the held-out rise but loses the training fit later,
except in a few exploratory runs in which the feature rate is lowered after
the rise (Appendix~\ref{paper:app:tf-gain}). We have no theory for these
cases.}

\typeout{MAIN-TEXT-END-PAGE: \thepage}

\section*{AI use statement}
We used large language model assistants to support the development of the initial ideas of the work, in formulating mathematical claims and hypotheses, and aspects of proof development and
writing. These tools also assisted with experimental design, implementation, and interpretation of results. Additionally, we used generative AI tools to check literature and references, and to draft and edit parts of
the paper. We have reviewed all AI-assisted work. We take
responsibility for the final content of this work, including text, claims or artifacts produced with the
aid of generative AI.

\clearpage
\appendix

\begin{proofrevision}
\section*{Appendices}

\begingroup
\setlength{\parskip}{0pt}
\makeatletter
\newcommand{\appentry}[3]{%
  \csname l@#1\endcsname
    {\hyperref[#2]{\numberline{\ref*{#2}}#3}}{\pageref*{#2}}}
\appentry{section}{paper:app:setup-ext}{Representation diagnostics, windows and endpoints}
\appentry{subsection}{paper:fourier-defs}{Fourier features, projection, and energy}
\appentry{subsection}{paper:prediction-defs}{Actual prediction, train-only refitting, and signed rule margins}
\appentry{subsection}{paper:app:windows}{Windows and endpoints of the experiments}
\appentry{section}{paper:app:experiments-ext}{One-hidden-layer experiments}
\appentry{subsection}{paper:app:onehl-protocol}{Protocol and selection}
\appentry{subsection}{paper:app:onehl-headline}{The headline cohort}
\appentry{subsection}{paper:app:nodecrease}{Pre-registered tests of recovery without CE decrease}
\appentry{subsection}{paper:app:studies}{Results across primes}
\appentry{subsection}{paper:app:onehl-other}{Replays and other one-hidden-layer studies}
\appentry{subsection}{paper:app:thm41-iid}{Theorem-inspired split and update settings from iid initialization}
\appentry{section}{paper:app:deep-ext}{Deeper-network experiments}
\appentry{subsection}{paper:app:deep-settings}{Network settings and learning-rate selection}
\appentry{subsection}{paper:app:deep-protocol}{Depths three to five}
\appentry{subsection}{paper:app:deep-medians}{Results across primes}
\added{\appentry{subsection}{paper:app:deep-fractions}{Depths three and four across training fractions}}
\appentry{subsection}{paper:app:posthoc}{Post-hoc window counts}
\appentry{subsection}{paper:app:bias}{Biases in the hidden layers}
\added{\appentry{section}{paper:app:tf}{A one-block transformer}}
\added{\appentry{subsection}{paper:app:tf-setup}{Model, updates and readouts}}
\added{\appentry{subsection}{paper:app:tf-window}{Window and pass criterion}}
\added{\appentry{subsection}{paper:app:tf-gain}{The gain and the recipe at $p=113$}}
\added{\appentry{subsection}{paper:app:tf-primes}{Other primes and mirror pairs}}
\added{\appentry{subsection}{paper:app:tf-readouts}{Readouts}}
\appentry{section}{app:notation}{Cross-appendix notation}
\appentry{section}{sec:main}{Structured half-grid theorem: construction and estimates}
\appentry{subsection}{sec:setup}{Statement and conventions}
\appentry{subsection}{sec:a-claim-dynamics}{Claim I: construction and uniformly controlled reference dynamics}
\appentry{subsection}{sec:a-claim-representation}{Claim II: useful-representation consequences on the same trajectory}
\appentry{subsection}{sec:width}{Assembly: positive head step and an initial ball at every width}
\appentry{section}{sec:r62}{A random-split low-CE theorem with refit-first recovery}
\appentry{subsection}{sec:b-prelude}{Statement and construction}
\appentry{subsection}{sec:b-claim-dynamics}{Claim I: actual dynamics, plateau and delayed network recovery}
\appentry{subsection}{sec:b-claim-representation}{Claim II: refit-first recovery and Fourier growth}
\appentry{subsection}{sec:bwc-radii}{Assembly: the prescribed ball and one common threshold}
\appentry{section}{app:decay}{Small decoupled readout decay}
\appentry{subsection}{sec:sc26-c-half}{The structured half-grid construction}
\appentry{subsection}{sec:sc26-c-random}{The random-split construction}
\makeatother
\endgroup
\end{proofrevision}
\clearpage

\section{Representation diagnostics, windows and endpoints}\label{paper:app:setup-ext}
\subsection{Fourier features, projection, and energy}\label{paper:fourier-defs}
\paragraph{The addition-Fourier features.}
\begin{proofrevision}
For the Euclidean inner product on full-grid functions, the orthonormal
complex modes and the transform of hidden column $m$ are
\begin{equation}\label{paper:fourier-main}
\begin{aligned}
 \psi_{r,s}(a,b)&=\frac1p
       \exp\!\left(\frac{2\pi\mathrm{i}(ra+sb)}p\right),
       \qquad r,s\in\F_p,\\
 \widehat H_m(r,s)&=\frac1p\sum_{a,b\in\F_p}
 H_{(a,b),m}\exp\!\left(-\frac{2\pi\mathrm{i}(ra+sb)}p\right).
\end{aligned}
\end{equation}
The addition modes are $\phi_k=\psi_{k,k}$ for $k\ne0$. Each centered
class indicator has the expansion
\begin{equation}\label{paper:target-fourier-main}
 \one_{\{a+b=c\}}-\frac1p
 =\frac1p\sum_{k=1}^{p-1}
       \exp\!\left(\frac{2\pi\mathrm{i}k(a+b-c)}p\right).
\end{equation}
Thus these targets span $\cF_{\rm add}$ of~\eqref{paper:rulespace-main}.
For real features, let $\mat{Q}_{\rm add}\in\R^{p^2\times(p-1)}$ have columns
\begin{equation}\label{paper:fourier-real-main}
 \frac{\sqrt2}{p}\cos\!\left(\frac{2\pi k(a+b)}p\right),\quad
 \frac{\sqrt2}{p}\sin\!\left(\frac{2\pi k(a+b)}p\right)
\end{equation}
for $1\le k\le(p-1)/2$. Fourier orthogonality gives
$\mat{Q}_{\rm add}^\top\mat{Q}_{\rm add}=\Id$, $\dim\cF_{\rm add}=p-1$,
$\Padd=\mat{Q}_{\rm add}\mat{Q}_{\rm add}^\top$ and $\Padd\Cgrid=\Padd$.
Summing the nonzero addition modes gives the equivalent projector formula
\end{proofrevision}
\begin{equation}\label{paper:projector}
 (\Padd\mat{H})_{(a,b),:}
 =\frac1p\sum_{u\in\F_p}\mat{H}_{(u,a+b-u),:}
  -\frac1{p^2}\sum_{u,v\in\F_p}\mat{H}_{(u,v),:}.
\end{equation}

\paragraph{Absolute energies and their fraction.}
We normalize energies per input pair, summing over all physical hidden
columns. Parseval's identity gives
\begin{proofrevision}
\begin{equation}\label{paper:energies}
 \begin{aligned}
 E_{\rm add}(\mat{H})&=\frac{\norm{\Padd\mat{H}}_F^2}{p^2}
   =\frac1{p^2}\sum_m\sum_{k\ne0}|\widehat H_m(k,k)|^2,\\
 E_{\rm ctr}(\mat{H})&=\frac{\norm{\Cgrid\mat{H}}_F^2}{p^2}
   =\frac1{p^2}\sum_m\sum_{(r,s)\ne(0,0)}|\widehat H_m(r,s)|^2,\\
 E_{\rm other}(\mat{H})&=E_{\rm ctr}(\mat{H})-E_{\rm add}(\mat{H})
   =\frac{\norm{(\Cgrid-\Padd)\mat{H}}_F^2}{p^2}.
 \end{aligned}
\end{equation}
\end{proofrevision}
\proofrevtext{The zero-frequency energy is
$E_{\rm DC}(\mat{H})=\norm{p^{-2}\sum_{a,b}\mat{H}_{(a,b),:}}_2^2$.
The uncentered energy is $p^{-2}\norm{\mat{H}}_F^2=E_{\rm ctr}+E_{\rm DC}$.}
These per-input units match the empirical observers; unnormalized
Frobenius energies in the proof appendices differ by the common factor
$p^2$. \proofrevtext{The fraction and window change are those of
\eqref{paper:AH}, with $t_0=\min I$, $t_1=\max I$ and $E_{\rm ctr}>0$.}
\proofrevtext{These identities distinguish absolute addition-energy growth
from a fraction gain caused by decreasing other energy. The same
calculation gives the difference-Fourier control by projecting onto
$\{g(a-b):\sum_y g(y)=0\}$, replacing $(k,k)$ with $(k,-k)$ and retaining
the centered-energy denominator. It applies separately at every hidden layer.}

\proofrevtext{For the orthogonal projector $\mat{P}_H$ onto
$\operatorname{col}(\Cgrid\mat{H})$, hidden-span availability is
$A_+=\norm{\mat{P}_H\mat{Q}_{\rm add}}_F^2/(p-1)$. It measures how much of the rule
space the centered hidden span contains, without weighting feature
amplitudes; it can already be one while $\AH$ and readout performance change.}
It is an oracle full-grid diagnostic, not the train-only refit below.
For an isotropic hidden-feature direction in the
$(p^2-1)$-dimensional centered grid space, the expected $\AH$ is
$(p-1)/(p^2-1)=1/(p+1)$. This is a geometric reference, not the expectation
under our nonlinear network initialization.
Full-grid projections use the known arithmetic task only for observation;
held-out targets do not enter training-gradient or refit objectives.

\subsection{Actual prediction, train-only refitting, and signed rule margins}
\label{paper:prediction-defs}\label{paper:app:observers}
\begin{proofrevision}
Section~\ref{paper:setup:refit} defines the diagnostic train-only head.
Its full score table is $\widehat{\mat{Y}}_j=\mat{H}_j\widehat{\mat{B}}_j$. The complete
Gram form of its held-out scores is
\begin{equation}\label{eq:ridgekernel}
 \widehat{\mat{Y}}_{j,\Test}
 =\mat{H}_{j,\Test}\mat{H}_{j,\Train}^{\top}
 (\mat{H}_{j,\Train}\mat{H}_{j,\Train}^{\top}+\tau_j\Id)^{-1}\mat{T}_\Train.
\end{equation}
These are the scores of the unique ridge solution of~\eqref{paper:expridge}; all hidden
columns enter the fit. In the canonical experiments, $\alpha=10^{-3}$
is the penalty in mean-loss units. Equivalently, for the last hidden layer,
\begin{equation}\label{paper:expridge-ext}
 \widehat{\mat{B}}_t=\arg\min_{\mat{B}}\left\{
 \frac1n\norm{\mat{H}_{t,\Train}^{(D)}\mat{B}-\mat{T}_\Train}_F^2+
 \alpha\norm{\mat{B}}_F^2\right\}.
\end{equation}
The empirical MSE is
\begin{equation}\label{paper:refitmetrics}
 \operatorname{MSE}_{\rm ref}(t)
 =|\Test|^{-1}\norm{\mat{H}_{t,\Test}\widehat{\mat{B}}_t-\mat{T}_\Test}_F^2
 =\frac{p-1}{p}E_{\rm ref}(t).
\end{equation}
For depth $D$, $\mat{H}_t$ in this equation means $\mat{H}_t^{(D)}$.
\end{proofrevision}
The saved empirical diagnostics use $\operatorname{MSE}_{\rm ref}$ with the empirical head;
the theory uses $E_{\rm ref}$ with its separately prescribed head.
The normalization is the squared norm of a centered one-hot target.
\proofrevtext{The theoretical penalties in~\eqref{paper:refit-penalties}
scale with training-feature energy, unlike the fixed empirical $\alpha$.}
Legacy deep-development refits instead subtract the training-row feature
mean and fit training-mean-centered one-hot targets. With
$\mat{H}_c$ the centered training-feature matrix, their ridge penalty in
sum-loss units is
$10^{-3}\operatorname{tr}(\mat{H}_c^\top\mat{H}_c)/M_D+10^{-12}$.
Held-out features are centered by the same training mean before applying
the fitted coefficients, without adding a label-mean intercept.
These refits remain separately labeled.
\added{The} representation-first flags of
Appendix~\ref{paper:app:windows} \added{use this legacy refit; the depth-two
runs of Figure~\ref{fig:deep-main} log the same refit as depths three to five}.
A better training refit objective alone does not imply better held-out
accuracy, and fixed-$\alpha$ refits are not invariant to feature scale.

\paragraph{Signed rule versus residual logits.}
\begin{proofrevision}
The actual logit table and its class-centered rule/residual split are
\begin{equation}\label{paper:logit-decomposition}
\begin{aligned}
 \mat{\mathsf F}_t&=\mat{H}_t^{(D)}\mat{V}_t^\top,\qquad
 \mat{\mathsf R}_t=\Padd\mat{\mathsf F}_t\mat{C}_p,\\
 \mat{\mathsf N}_t&=\mat{\mathsf F}_t\mat{C}_p-\mat{\mathsf R}_t.
\end{aligned}
\end{equation}
For $y=y^\star(x)$, define the signed rule and adverse residual contrasts
\begin{equation}\label{paper:signed-margins}
\begin{aligned}
 \gamma_t(x)&=\min_{c\ne y}(\mathsf R_{t,xy}-\mathsf R_{t,xc}),\\
 \beta_t(x)&=\max_{c\ne y}(\mathsf N_{t,xc}-\mathsf N_{t,xy}).
\end{aligned}
\end{equation}
Their names are local to this diagnostic; they are unrelated to the
construction parameters $\gamma,\beta$ in the proofs.
\end{proofrevision}
The residual includes input-constant, class-dependent logits; centering
hidden features in $\AH$ does not remove their effect on classification.
Then $m_t(x)\ge\gamma_t(x)-\beta_t(x)$; $\beta_t(x)$ may be negative
when the residual helps classification. This elementary bound explains
which signed information a scalar energy discards. A gain $\Delta\AH>0$
alone does not guarantee a larger margin, a successful refit, or better
test accuracy. The theorems require their separate
signed-prediction arguments, not a universal alignment threshold.

\paragraph{Observation cadence.}
In the headline study, refit observations every 500 updates do not
resolve crossings or dips between observations. Network accuracy is saved
every update there and every 20 updates in the deeper-network experiments. A first
$90\%$ crossing need not be permanent.

\subsection{Windows and endpoints of the experiments}\label{paper:app:windows}
\begin{proofrevision}
Plateau variation $v(I)$, the cycle mean $\bar L_t$ and window gains such as
$\Delta\AH(I)$ are those of Section~\ref{paper:setup:phenomena}.

\paragraph{Locked windows.}
In the fresh one-hidden-layer cohorts (the \emph{pre} rows of
Table~\ref{paper:tab:1hl-main} and the cohorts of
Appendix~\ref{paper:app:nodecrease}), the window $I=[t_0,t_1]$ is fixed on
development runs and locked, with the
recipe, before the evaluated pairs are drawn. On it we report $v(I)$ against
the declared $10\%$ tolerance, training correctness at every update,
$\Delta\AH(I)$, and the held-out accuracy of the network and of the
train-only refit.

\paragraph{Loss-selected band and primary endpoint.}\label{paper:app:deep-endpoint}
The deeper-network evaluations, and the one-hidden-layer runs of
Figure~\ref{fig:onehl-primes}, select a CE peak after a head-norm knee,
without using held-out outcomes to locate the band.
The knee is the first logged head-norm checkpoint with
$\norm{\mat{V}_t}_F\ge0.99/\lambda_V$. The band $I_C=[a,b]$ is the maximal
interval containing the largest post-knee $\bar L$ on which $\bar L$ stays
at least that peak divided by $1.03$. With
$\operatorname{clip}(z)=\min(0.9,\max(0.1,z))$, the captured share of the
held-out rise is
\begin{equation}
 q(I_C)=\bigl[\operatorname{clip}(\Acc_{\Test}(b))
                  -\operatorname{clip}(\Acc_{\Test}(a))\bigr]/0.8,
\end{equation}
using the nearest logged accuracies. A run meets the \emph{primary
endpoint} if $q(I_C)\ge0.75$, its final held-out accuracy is at least
$0.99$, $\Delta\AH^{(D)}(I_C)\ge0.02$ for the last hidden layer, and every
logged minimum training margin in $I_C$ is positive. Accuracies and margins
are logged every 20 updates, and margins between logs are not certified;
head-norm, Fourier and refit diagnostics are logged every 500 updates
(1000 for the $p=97,101$ across-prime runs). Fourier and refit values
at band endpoints use the nearest observation.
A depth succeeds if the lower end of the two-sided
$95\%$ Wilson interval is at least $0.80$, that is, at least 37 of 40
passes. Selection guarantees a $3\%$ cycle-mean range if the band's maximum
is the selected post-knee peak, as in the reported primary and selected
across-prime cohorts. The band can extend before the knee, so this bound
is not universal across development runs and is not an additional pass
condition. A short flat band can occur
away from recovery, which is why the share condition is needed.

\paragraph{Strict first-fit-anchored endpoints.}
These stronger endpoints anchor at the first fit $T_{\rm fit}$: the first
update at which every training margin exceeds $10^{-10}$ (one hidden layer)
or the first positive logged minimum training margin (deeper networks).
For one hidden layer the strict joint endpoint requires $T_{\rm fit}\le
t_0$; the $10\%$ plateau and $\Delta\AH\ge0.02$ on $[t_0,t_1]$; strict fit
from $T_{\rm fit}$ through the window end and the recovery; held-out
accuracy at most $20\%$ at $t_0$ and during a delay of
$\max(1000,0.02\times\text{horizon})$ updates after $T_{\rm fit}$; and a
first $90\%$ crossing inside the window. For deeper networks it uses $I_C$
and requires, besides the fit, $\Delta\AH^{(D)}(I_C)\ge0.02$, raw band
variation of at most $10\%$, band-start held-out accuracy of at most
$20\%$, a first $90\%$ crossing inside the band, no fit relapse through the
band end, and held-out accuracy of at most $20\%$ for
$\max(1000,\lceil0.02T\rceil)$ updates after the fit. Its persistent-fit
variant starts the delay after the last nonpositive margin. Two
representation-first flags combine these joints with a useful legacy refit
(Appendix~\ref{paper:prediction-defs}) on $I_C$: from at most $0.70$ at the
start of the band, it crosses $90\%$ by its end, gains at least $0.20$, and
crosses before the network, whose accuracy is then at most $0.20$.

\paragraph{Post-hoc non-falling window.}
After every run had been seen, we also scored runs with a looser, loss-only
rule. With $\bar L_t$ and the knee as above, call an interval $[a,b]$ after
the knee \emph{non-falling} if $\bar L_t\ge\max_{a\le s\le t}\bar L_s/1.03$
for every $t\in[a,b]$. The window is the maximal non-falling interval with
the largest rise $\max_{[a,b]}\bar L-\bar L_a$; ties go to the longer. A run
passes if its held-out accuracy, or that of the train-only refit, rises by
at least $0.2$ between the first and the last log inside the window; the
captured share, final accuracy, training fit and $\Delta\AH$ are not
required. For the one-hidden-layer studies of Table~\ref{paper:tab:1hl-main}\added{,
Figures~\ref{fig:p23-confirm} and~\ref{fig:h-cohort} and the fixed-recipe
sweep of Appendix~\ref{paper:app:studies}}, the knee is instead
the first logged update, rather than a 500- or 1000-update checkpoint, with
$\norm{\mat{V}_t}_F\ge0.99/\lambda_V$. Of these, the fixed-recipe, replay and
quadratic studies do not log the head norm and start instead at the CE
minimum before their largest relative CE climb; on the 80 headline and
per-prime runs, which log both, this start lies within
230 updates of the knee and gives the same verdicts. The gray bands of
Figures~\ref{fig:onehl-primes}, \ref{fig:deep-main},\added{ \ref{fig:deep-fractions},}
\ref{fig:h-cohort}\added{ and~\ref{fig:deep-fractions-app}--\ref{fig:deep-fractions-app3}} run from the median start to
the median end of these windows, and Appendix~\ref{paper:app:posthoc}
gives the counts\added{, except for
Figures~\ref{fig:deep-fractions} and~\ref{fig:deep-fractions-app}--\ref{fig:deep-fractions-app3}, whose counts are in
Appendix~\ref{paper:app:deep-fractions}}.
\end{proofrevision}

\section{One-hidden-layer experiments}\label{paper:app:experiments-ext}
\begin{proofrevision}
This appendix supports Section~\ref{paper:onehl}. \emph{Fresh} pairs were
drawn after a recipe and its window had been locked. \emph{Earlier} runs
predate this protocol; they are reanalyzed from saved states or replayed,
and are reported descriptively. Windows and endpoints are defined in
Appendix~\ref{paper:app:windows}, and Table~\ref{paper:tab:1hl-main}
summarizes the headline, per-prime, replay and quadratic studies. The experiments are evidence for coexistence, not
replications of either initial-region theorem.
\end{proofrevision}

\begin{table}[b]
\centering\footnotesize\setlength{\tabcolsep}{2.4pt}
\caption{One-hidden-layer studies (ReLU rows: width 512, $80\%$ random
split, $\eta_W=0.03$, $\lambda_V=0.2$). Window: \emph{pre} = fixed on
development runs before the fresh pairs; \emph{per run} = persistent
held-out $10\%\!\to\!90\%$ crossings (descriptive). Flat: $v\le10\%$. Rec.:
held-out $90\%$ within the horizon. First-fit: the strict joint endpoint of
Appendix~\ref{paper:app:windows}; in particular, every training margin must stay
above $10^{-10}$ from $T_{\rm fit}$ through recovery and the window end.
--: not computed. The fixed-recipe sweep is in
Appendix~\ref{paper:app:studies}.}
\label{paper:tab:1hl-main}
\begin{tabular}{@{}lllrrrrrrrr@{}}
\toprule
Study & $c/\eta_V/\lambda_W$ & Window & $p$ & Runs & Flat & Rec. & First-fit & Med.\ $v$ & Med.\ $\Delta\AH$ & Med.\ min CE\\
\midrule
Headline (fresh) & 24/.03/.015 & pre $[4.5,12]$k & 31 & 20 & 20 & 20 & 0 & 7.06\% & .0965 & 1.48\\
\midrule
Per-prime (fresh) & 24/.01/.005 & pre $[13,21]$k & 17 & 20 & 16 & 20 & 15 & 6.3\% & .039 & $1.8\cdot10^{-4}$\\
 & 48/.03/.005 & pre $[15,21]$k & 23 & 20 & 20 & 20 & 0 & 5.4\% & .041 & $5.1\cdot10^{-3}$\\
 & 96/.03/.005 & pre $[18,24]$k & 29 & 20 & 20 & 20 & 0 & 6.5\% & .043 & $3.9\cdot10^{-2}$\\
\midrule
Replays & 24/.03/.01 & per run & 23 & 5 & 5 & 5 & -- & 2.4\% & .038 & 0.26\\
 & 24/.03/.015 & per run & 31 & 5 & 5 & 5 & -- & 2.7\% & .049 & 1.5\\
\midrule
Quadratic $^\dagger$ & -- & per run & 13 & 40 & 0 & 40 & -- & 23.6\% & .108 & --\\
\bottomrule
\end{tabular}\\[2pt]
\parbox{\textwidth}{\footnotesize $^\dagger$Width 64, 135 training pairs,
row-sphere features, steps $5\cdot10^{-4}$ and $0.5$, head decay
$10^{-3}$, loss logged every 500 updates.}
\end{table}

\begin{proofrevision}
\subsection{Protocol and selection}\label{paper:app:onehl-protocol}
The headline configuration and initialization are those of
Section~\ref{paper:onehl}. The supported Polar step keeps singular values
above $10^{-10}$ times the largest. CE and accuracies are saved at all
40{,}001 states of every pair, $\AH$ every 100 updates, and the refit every
500. No run is excluded; plotted loss uses two-update cycle means without
additional smoothing. The refit uses the raw-feature, fixed-$\alpha=10^{-3}$
rule~\eqref{paper:expridge}. A uniform shrinkage of the features strengthens
its penalty, unlike the training-RMS-normalized rules of the theory.

A first search of 96 runs at $p=17$ found no qualifying recipe. A second tested 48
runs across $p\in\{23,31\}$, training fractions $0.5$ and $0.8$ and
$\lambda_W\in\{0.01,0.015,0.02\}$, with four initialization/split pairs per
recipe, $c=24$, $\eta_W=\eta_V=0.03$, $\lambda_V=0.2$ and 40k updates. A
rule fixed before that search scanned windows on the 500-update grid, with
start at least 1000 and length at least 2000. Its strict tier required, for
every development pair of a recipe, $v\le8\%$, minimum CE at least $0.12$,
$\Delta\AH\ge0.07$, perfect window training accuracy, held-out accuracy at
most $20\%$ at the start, and a first $90\%$ crossing in $(t_0,t_1-500]$.
Qualifying recipes were ordered by worst-pair Fourier gain, with fixed
tie-breaks. The rule selected the headline recipe and $[4500,12000]$, and
both were locked before 20 fresh pairs were drawn with NumPy's
\texttt{default\_rng(20260930)}. Auxiliary selection and evaluation
criteria, not all of which were met, are recorded with the code. Selection used held-out accuracy, so
the evaluation tests this locked choice, not an untuned optimizer family.

\subsection{The headline cohort}\label{paper:app:onehl-headline}
Section~\ref{paper:onehl} gives the window outcomes. The median of $v(I)$
is $7.06\%$: these are positive-loss plateaus at the predeclared $10\%$
tolerance, not $3\%$ plateaus. Over $I$, $\AH$ gains $0.0955$--$0.0972$
(median $0.0965$). When the refit first records $90\%$, at update 6{,}500,
the previous checkpoint is below it in every run. This is earlier
availability under a fixed rule on a 500-update grid, not a proof that the
head tracks the ridge solution.

\paragraph{Two phases.}
The CE peak separates two phases. While CE rises, $\AH$ makes $25$--$39\%$
of its window gain and the refit exceeds $70\%$ by update 6{,}000, whereas
the network stays at most $5.7\%$. The network's own rise follows the peak,
with cycle-mean CE $\bar L$ falling by $1.5$--$3.4\%$ between its first
$10\%$ and $90\%$ crossings. Even so, $\bar L$
ends $2.7$--$11.4\%$ higher at the network's first $90\%$ than at the update
at which the refit starts rising, in every run. Here and below, the refit's
rise starts at its last 500-update observation strictly before its first
observed $10\%$ crossing.

\paragraph{Norms and energies.}
Before the window, CE falls to $0.792$--$0.854$ at updates 1{,}801--1{,}926
and then rises to $1.456$--$1.498$ at update 4{,}500. During the window,
$\norm{\mat{V}_t}_F=4.997$--$5.000$, at the cap $1/\lambda_V=5$, while
$\norm{\mat{W}}_F$ falls from $128.9$--$130.5$ to $57.4$--$57.7$ and the
centered training-logit RMS from $0.720$--$0.740$ to $0.464$--$0.480$. For
$\lambda_V>0$ and $q=1-\eta_V\lambda_V\in[0,1]$, the normalized head
update~\eqref{paper:optimizer} gives the envelope
\begin{equation}\label{paper:vcap}
 \norm{\mat{V}_t}_F\le q^t\norm{\mat{V}_0}_F+(1-q^t)/\lambda_V ,
\end{equation}
an upper bound, not a proof that norm saturation causes recovery. Over $I$,
$E_{\rm ctr}$ falls to $0.204$--$0.209$ of its initial value while
$E_{\rm add}$ grows by a factor $1.22$--$1.31$, so the fraction gain is
mostly suppression of energy outside the addition subspace. With fixed
ridge regularization on raw features, the refit-first observation does not
separate directional change from scale.

\paragraph{Retrospective suffix windows.}\label{paper:app:suffix}
Let $t_\epsilon$ be the earliest update after which CE stays within a
factor $1+\epsilon$ of its minimum until update 40{,}000. The rule uses
training CE only but needs the whole future trace, so it is retrospective,
not a stopping rule. At $\epsilon=10\%$, $t_\epsilon$ is 4{,}170--4{,}311,
and at $5\%$ it is 4{,}653--6{,}421, both with zero held-out accuracy. At
$3\%$ it is 4{,}946--8{,}376, after improvement has begun in some runs.
Training stays fitted on every suffix, held-out accuracy at 40k is
$94.30$--$99.48\%$, and $\AH$ ends in $[0.1194,0.1203]$.

\paragraph{The strict first-fit-anchored endpoint.}\label{paper:app:strict}
No headline run meets the strict endpoint of
Appendix~\ref{paper:app:windows}. The first strict fit, at updates
203--796, is followed by intermittent misclassification of a few training
pairs until a permanent fit at updates 239--1{,}085, so fit is not
persistent from its first attainment. Held-out accuracy stays zero through
the delay. Reanchoring at the persistent fit would change the definition
post hoc.

\subsection{Pre-registered tests of recovery without CE decrease}\label{paper:app:nodecrease}
In the headline cohort, $\bar L$ decreases by $1.5$--$3.4\%$ between the
network's first $10\%$ and $90\%$ held-out crossings. After this evaluation we
screened the 12 recipes of the second development search for recovery
without such a decrease. Only one passed the screen in all four of its
pairs: $p=23$ with the headline hyperparameters. A window rule applied to
these four pairs, relaxed once after its first version admitted no window,
chose $I=[6500,16500]$.

Before any new run we locked the recipe, the window, 20 fresh pairs, a
20{,}000-update horizon, the code and a decision rule. The joint endpoint
requires $v(I)\le10\%$, perfect training accuracy on $I$, held-out
accuracy at most $20\%$ at $t_0$, $\Delta\AH\ge0.04$, and no CE decrease
during recovery: the first $90\%$ held-out crossing $T_{90}$ occurs by
$t_1-500$, $\bar L$ never falls more than $0.5\%$ below its running maximum
on $[t_0,T_{90}]$, and $\bar L(T_{90})\ge\bar L(t_0)$. At least 18 of the
20 runs had to pass. It holds in 10 of the 20 runs, so this test fails
(Figure~\ref{fig:p23-confirm}, top). Every run meets the other four
clauses; the failures concern the end of the recovery, a late or missing
$90\%$ crossing or a small loss decrease before it.

In these runs $\bar L$ stays within $0.001\%$ of its running maximum up to
the first $50\%$ crossing. This pattern motivated a second lock, written
before 20 new pairs were launched, that keeps the joint endpoint with
$T_{90}$ replaced by the first $50\%$ crossing. The joint endpoint then holds in 20
of 20 runs ($95\%$ Wilson interval $[0.84,1.00]$;
Figure~\ref{fig:p23-confirm}, bottom, and Figure~\ref{fig:h-cohort}a).
Held-out accuracy rises from at most $1.9\%$ to $50\%$ at updates
8{,}278--9{,}191, while $\bar L$ never falls more than $0.05\%$ below its
running maximum and ends $3.8$--$5.9\%$ above $\bar L(t_0)$. As secondary
endpoints, the joint criterion with the registered $0.5\%$ drawdown
tolerance holds up to the first $70\%$ crossing in 20 runs and up
to the first $90\%$ crossing in 9, and 17 runs reach $90\%$ within
20{,}000 updates. The confirmed statement covers recovery to $50\%$, not to
$90\%$.
\end{proofrevision}

\begin{figure}[tbp]
\centering
\includegraphics[width=\textwidth]{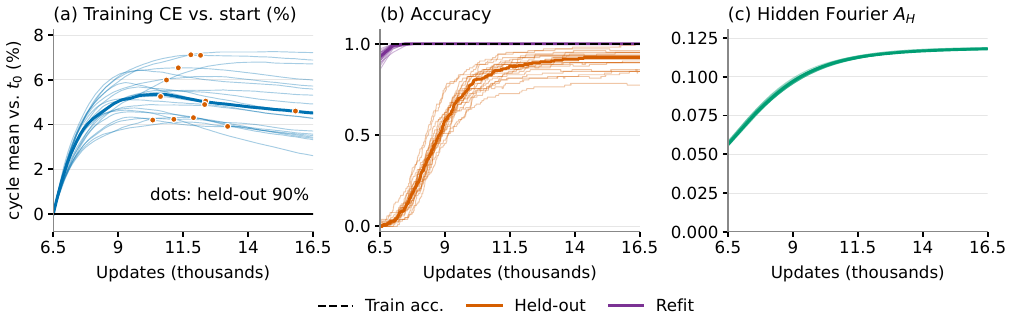}
\includegraphics[width=\textwidth]{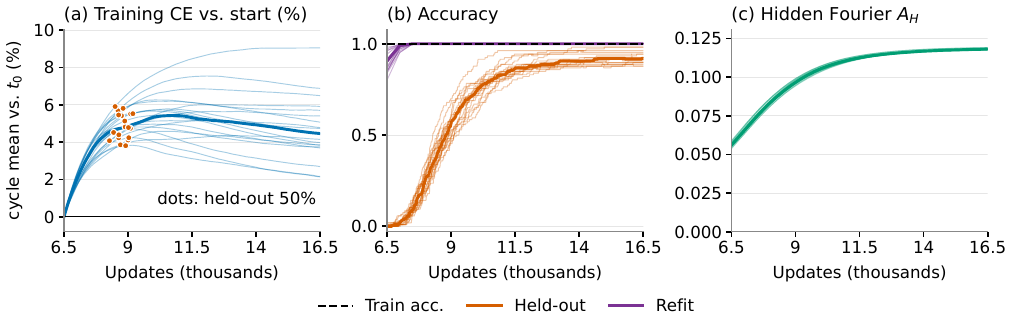}
\caption{\textbf{Pre-registered $p=23$ cohorts} (20 fresh pairs each; window
$I=[6500,16500]$ chosen on four development pairs and locked before these
runs). (a)~Cycle-mean training CE relative to its value at $t_0$; dots mark
each run's first $90\%$ (top) or $50\%$ (bottom) held-out crossing on $I$.
(b)~Median training accuracy (dashed) and held-out accuracy of the network
and of the train-only refit (500-update grid). (c)~$\AH$. Thin curves are
runs and thick curves medians. The joint endpoint, which limits cycle-mean
drawdown to $0.5\%$ before the dot and requires the loss at the dot to be no
smaller than the loss at $t_0$, holds in 10 of 20 runs (top) and 20 of 20
(bottom); 18 were required.}
\label{fig:p23-confirm}
\end{figure}

\begin{figure}[tbp]
\centering
\includegraphics[width=\textwidth]{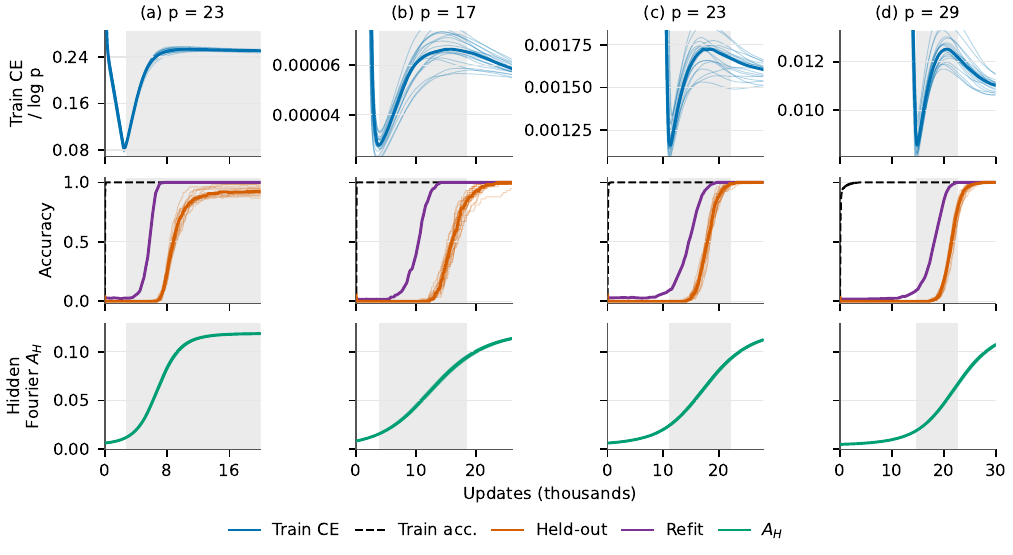}
\caption{\proofrevtext{\textbf{Locked and pre-registered one-hidden-layer cohorts} (20 fresh pairs per column).
(a)~$p=23$ with the headline hyperparameters: the second pre-registered
cohort, window $[6500,16500]$. (b)--(d)~The per-prime recipes at $p=17,23,29$,
with windows $[13000,21000]$, $[15000,21000]$ and $[18000,24000]$. Rows:
cycle-mean training CE divided by $\log p$ (each column on its own scale);
median training accuracy (dashed), held-out accuracy of the network (runs
and median) and median train-only refit; $\AH$. Gray: from the median start
to the median end of the runs' post-hoc windows. The headline runs at
$p=31$ are in Figure~\ref{paper:fig:headline}.}}
\label{fig:h-cohort}
\end{figure}

\begin{proofrevision}
\subsection{Results across primes}\label{paper:app:studies}
\paragraph{One hidden layer across primes.}
The runs of Figure~\ref{fig:onehl-primes} have one hidden layer of width
512 and use the deeper-network implementation of the updates
(Appendix~\ref{paper:app:deep-settings}). At each prime, $\lambda_W$ is the
best of a development sweep on one or two seeds. It was fixed before any
new run and then run on 8 or 9 fresh initialization/split seeds, for 10
runs per prime; at $p=101$ the setting chosen at $p=97$ was run unchanged
on 10 fresh seeds. Runs stopped early by the stated rule (held-out accuracy
of at least $0.99$, maintained) keep their last value. The figure's caption
gives the primary-endpoint counts. These are development settings on a few
seeds each, not a pre-specified evaluation.

\paragraph{Per-prime recipes.}
Three recipes at $p=17,23,29$ were fixed, each with its own window, after
54 development runs and before 20 fresh pairs per prime were drawn
(Figure~\ref{fig:h-cohort}b--d). Each run lasts 50k updates, with width
512, training fraction $0.8$, $\eta_W=0.03$, $\lambda_W=0.005$,
$\lambda_V=0.2$ and $(c,\eta_V)=(24,0.01),(48,0.03),(96,0.03)$. The
$10\%$-plateau counts are 16/20, 20/20 and 20/20, with median $\Delta\AH$
of $0.039$, $0.041$ and $0.043$, and all 60 runs reach $90\%$ held-out
accuracy. The median minimum CE is $1.8\cdot10^{-4}$, $5.1\cdot10^{-3}$ and
$3.9\cdot10^{-2}$, far below the headline study, and the strict joint
endpoint holds in 15/20, 0/20 and 0/20 runs, so the plateau and recovery
counts are not joint successes. From the refit-start checkpoint defined
above to the network's first $90\%$, $\bar L$ ends above its
starting value in 20, 13 and 15 runs (medians $+36\%$, $+37\%$ and
$+33\%$).

\paragraph{One fixed recipe across primes.}
Earlier runs hold one recipe fixed across $p=11,13,17,19,23,29$, with 20
pairs each: $c=24$, $\eta_W=\eta_V=0.03$,
$\lambda_W=0.005$, $\lambda_V=0.2$, width 512 and 40k updates. Observers
and the refit were recomputed from saved states every 1{,}000 updates. On
the common window $[15000,20000]$ the $10\%$ plateau holds in most runs
only at $p=17$\added{: in 0, 0, 17, 2, 0 and 0 of the 20 runs at
$p=11,13,17,19,23,29$}. Success of a retuned family is thus
no evidence that one practical recipe works across moduli.
\end{proofrevision}

\begin{proofrevision}
\subsection{Replays and other one-hidden-layer studies}\label{paper:app:onehl-other}
\paragraph{Replayed trajectories.}
We replay five earlier pairs at each of $p=23$ ($\lambda_W=0.01$) and
$p=31$ (the headline hyperparameters) for 30k updates, with a
Gram-eigendecomposition Polar factor (relative cutoff $10^{-7}$). Relative
raw-CE differences from the originals stay below $2\cdot10^{-14}$ through
15k updates, first exceed $10^{-9}$ at updates 17{,}470--28{,}134, and
are at most $0.069\%$ at the 30k horizon; every scored window ends by
13.5k. The windows come from each run's persistent
$10\%\!\to\!90\%$ held-out transition, snapped to the observer grid, so
they are descriptive. All ten runs have $10\%$ plateaus, with median $v$ of
$2.4\%$ and $2.7\%$, and recover, with median $\Delta\AH$ of $0.038$ and
$0.049$. These trajectories also belong to the deeper study's development
inventory and are not counted twice.

\paragraph{Quadratic-activation contrast.}
A separate study uses 40 earlier pairs at $p=13$, width 64, with a square
activation, row-sphere feature updates and their own head dynamics (feature
step $5\cdot10^{-4}$, head step $0.5$, head decay $10^{-3}$). All reach
$90\%$ at least once and gain at least $0.02$ in Fourier fraction, but none
has a $10\%$ CE plateau (median variation $23.6\%$, with loss logged every
500 updates). Activation and optimizer both differ, so this is not an
activation-only ablation.

\paragraph{Other settings.}
The second search's recipes with a $50\%$ training fraction did not recover
within 40k updates,
and a later search at $p=31$ with no head decay or a quadratic-residue
split met the selection rule in no arm. These are limits of the tested recipes and
horizons, not counterexamples to the theorems.
\end{proofrevision}

\begin{proofrevision}
\subsection{Theorem-inspired split and update settings from iid initialization}\label{paper:app:thm41-iid}
\begin{figure}[t]
\centering
\includegraphics[width=\linewidth]{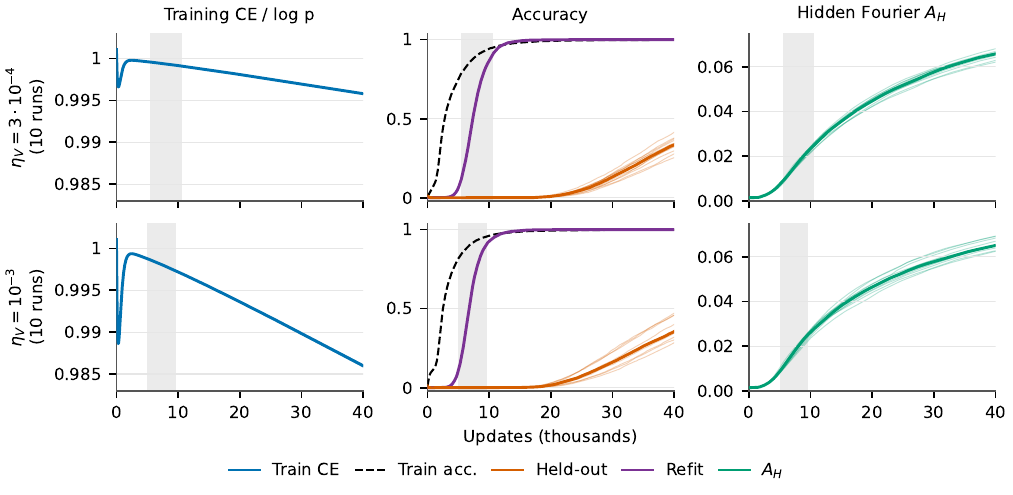}
\caption{\proofrevtext{\textbf{Theorem-inspired settings at $p=101$ from iid
initialization} (width 512; all runs in the two displayed arms). Rows: head steps $\eta_V=3\cdot10^{-4}$ (\added{10} runs; all
criteria of Appendix~\ref{paper:app:thm41-iid} hold in each) and the primary $\eta_V=10^{-3}$
(10 runs; the training loss varies by $1.34$--$1.37\%$ over $[1000,40000]$, above the $1\%$
criterion). Columns: training cross-entropy loss divided by $\log p$ (the axis spans $2\%$ of
$\log p$); median training accuracy (dashed), held-out accuracy of the network (runs and median)
and median held-out accuracy of the theorem's train-only refit; $\AH$. Gray: from the median
first observed checkpoint at which the refit's held-out accuracy reaches $0.1$ to the median
first at which it reaches $0.9$; these medians need not define any individual run's interval.}}
\label{fig:thm41-iid}
\end{figure}
We trained a bias-free 1HL ReLU network at $p=101$ using
Theorem~\ref{paper:thm:half}'s split~\eqref{eq:split} ($n=5{,}050$ training
and $5{,}151$ held-out pairs), the clock~\eqref{eq:clock} with
$\lambda_W=\lambda=1$ and $\eta_W=s=1-(9/10)^{1/32}$, $\lambda_V=0$, and
the refit penalty $\tau_t=R_t/(32n)$ of~\eqref{paper:refit-penalties}.
This is not a theorem replication: the width is $M=512$ instead of at least
$M_{\min}(101)=515{,}302$, and the initialization is iid as in
Section~\ref{paper:onehl} ($c=24$) instead of the engineered ball. The head
steps are not checked against the theorem's existential cap, and the
observed windows are not its certified windows. In a plan fixed after one
development run and before these runs, we chose four head steps $\eta_V$,
with $10^{-3}$ as the primary one, and four per-run criteria for
$T=40{,}000$ updates: (a) $v([1000,T])\le0.01$; (b) $\AH$ rises by at least
$0.02$ over $[1000,T]$; (c) the refit's held-out accuracy is at most $0.1$
at update $1{,}000$ and reaches $0.9$; (d) the refit reaches $0.5$ at least
$1{,}000$ updates before the network, or the network does not reach $0.5$
by $T$. Network accuracy is logged every 50 updates; the refit and Fourier
fraction are observed every 250. Crossings refer to these checkpoints.

With $\eta_V=3\cdot10^{-4}$ all four criteria hold in \added{$10/10$} runs
(Figure~\ref{fig:thm41-iid}, top\added{; the plan's 6 runs and 4 more seeds run
for this version}). With the primary $\eta_V=10^{-3}$,
(b)--(d) hold in every run but (a) in none, with $v([1000,T])=1.34$--$1.37\%$
(bottom); the larger steps $3\cdot10^{-3}$ and $10^{-2}$ give $v$ of about
$4\%$ and $15\%$ (0/6 each). In the two displayed arms, the training loss varies by at
most $0.05\%$ and $0.17\%$ over each run's own interval from refit accuracy
$0.1$ to $0.9$. Over these intervals $\AH$ rises from about $0.009$ to
$0.025$, and the network's held-out accuracy stays zero at recorded
checkpoints. By update $40{,}000$ the refit is at one in every run and the
network at $0.25$--$0.47$.

The displayed runs do not reproduce Theorem~\ref{paper:thm:half}'s clauses~(i)
and~(iii): training accuracy is \added{$0.76$}--$0.84$ when the refit reaches $0.1$,
$2$--$14$ training pairs are still misclassified at update $40{,}000$, the
network has not recovered by then, and the loss stays
within about $1.5\%$ of $\log p$, a plateau near uniform prediction rather
than a fitted one.

The plan also fixed two controls with two runs each. With a nearly frozen
head, $\eta_V=10^{-6}$, the runs stall: at update $40{,}000$ training accuracy
is $0.42$--$0.46$, the refit's held-out accuracy is below $0.01$, and $\AH$
has risen by $0.002$. With $\eta_V=10^{-3}$ and the experiments' slower clock,
$\eta_W=0.03$ and $\lambda_W=0.02$, criteria (b)--(d) hold but
$v([1000,T])$ is $105$--$106\%$.
\end{proofrevision}

\clearpage
\section{Deeper-network experiments}\label{paper:app:deep-ext}
\begin{proofrevision}
This appendix supports Section~\ref{paper:deep}: networks with three to
five hidden layers at $p=31$ and width 256, changes of the modulus, width
and feature optimizer, settings across primes, and hidden biases. The 164
evaluated runs use 64 seed indices, separate from the 199 development runs
that chose the settings using held-out accuracy and windows maximizing the
captured accuracy rise under a flat cycle-mean-loss condition. The
development runs, which include the ten replays of
Appendix~\ref{paper:app:onehl-other}, are not counted in the evaluation
rates. Seed indices recur across depths and settings, so these observations
are not independent. Windows and endpoints are defined in
Appendix~\ref{paper:app:windows}.

\subsection{Network settings and learning-rate selection}\label{paper:app:deep-settings}
The task, bias-free ReLU architecture, random $80\%$ split and full-batch
float64 CE are as in Sections~\ref{paper:setup}
and~\ref{paper:experiments}. The primary configuration uses $p=31$ and width
256 in every hidden layer. With independent standard Gaussian arrays
$\mat{Z}_\ell,\widetilde{\mat{Z}}$, the initial weights are
\begin{equation}
 \mat{W}^{(1)}=\frac{24}{\sqrt p}\mat{Z}_1,\qquad
 \mat{W}^{(\ell)}=8\sqrt{\frac{2}{M}}\mat{Z}_\ell\ (\ell\ge2),\qquad
 \mat{V}=\frac{\widetilde{\mat{Z}}}{24\cdot8^{D-1}\sqrt M}.
\end{equation}
Every layer trains throughout the main run. Feature updates use a
Gram-eigendecomposition implementation of supported Polar normalization.
It keeps eigenvalues of $\mat{F}^\top\mat{F}$ (or $\mat{F}\mat{F}^\top$ for a wide $\mat{F}$) above
$10^{-14}$ times the largest, a relative singular-value cutoff of
$10^{-7}$, and divides by the square root of the retained rank. This
differs numerically from the headline SVD implementation, and
near-threshold rank sensitivity remains a robustness limitation.

The head uses $\eta_V=0.03$ and $\lambda_V=0.2$. The first layer has step
$0.03$ and decay $0.015$, so $K_1=\eta_1\lambda_1=4.5\cdot10^{-4}$. For
$\ell\ge2$, $\lambda_\ell=0.07$ and $\eta_\ell=r_\ell K_1/0.07$, where the
shrink-rate multiplier $r_\ell$ is 1 for intermediate layers and $r$ for
the last. To select $r$, we first train at multiplier $r_0$ and record
$t_{10}^{\rm sel}$, the first logged update with $10\%$ held-out accuracy.
We then solve
\begin{equation}\label{paper:recipe}
 rK_1t_{10}^{\rm sel}\exp\{0.9(r-r_0)\}=x_* ,
\end{equation}
then round $r$ to a multiple of $0.005$ and clip it to $[0.10,1.20]$.
For $D=3,4,5$, $r_0=0.50,0.35,0.35$ and $x_*=3.85$. We restart from the
same initialization on the same split and hold the selected $r$ fixed throughout training, so there is no within-run
loss controller.

\subsection{Depths three to five}\label{paper:app:deep-protocol}
The evaluation criteria, rate-selection rule and seeds were fixed before
training, and all 164 evaluated runs completed.
Table~\ref{paper:tab:deep-design} lists the settings and results.
Initialization and split seeds are $1000+i$ and $2000+i$: indices 1--40
recur at each primary depth, the normalized-gradient arm reuses depth-3
indices 1--20, the $p=23$ arm uses 41--60 and the width-512 arm 61--64.

\paragraph{Primary endpoint.}
The 40/40, 39/40 and 38/40 passes at depths 3, 4 and 5 have per-depth (not
simultaneous) Wilson intervals $[0.91,1.00]$, $[0.87,1.00]$ and
$[0.83,0.99]$, whose lower ends are at least $0.80$, so each depth meets the
criterion (Figure~\ref{fig:deep-campaign-summary}). The three misses fail only the
share threshold. The median band-peak CE is about $0.50$, $0.54$ and
$0.58\log p$.

\paragraph{Fixed versus selected learning rates.}
On 20 paired seeds per depth, runs at the fixed multiplier $r_0$ pass
19/20, 5/20 and 19/20, against 20/20 for the recipe (exact McNemar $p=1$,
$6.1\cdot10^{-5}$ and $1$). The recipe's benefit is thus clear at depth 4,
not at every depth.

\paragraph{Changing the modulus, width or optimizer.}
None of the three changed settings meets the primary endpoint
(Table~\ref{paper:tab:deep-design}): $p=23$ at depth 4 (0/20), width 512 at
depth 4 (0/4; final held-out accuracy $0.92$--$0.95$ at 100k updates), and
Frobenius-normalized gradient (NGD) feature updates at depth 3 (0/20,
against 20/20 for the paired Polar recipe; exact McNemar
$p=1.9\cdot10^{-6}$). The NGD runs generalize, but their CE keeps falling
during recovery. They reuse the Polar-calibrated recipe, so they are not a
separately tuned baseline. \added{Other primes are not ruled out by these
counts: at $p=47$ and $61$, with $r$ set per training fraction and
$\lambda_V=0.2\cdot31/p$, a post-hoc-window criterion holds in 155 of 160
fresh runs, though the primary endpoint holds in only 9
(Appendix~\ref{paper:app:deep-fractions}); $p=23$ was not rerun that
way.}

\paragraph{Secondary endpoints.}
Of the 120 primary runs, the pre-specified strict first-fit joint of
Appendix~\ref{paper:app:windows} passes in none, its pre-specified
persistent-fit variant in 49, and the two representation-first flags in
none. These checks do not
alter the primary verdict, and no multiplicity correction is applied. The
experiments do not establish a uniform refit-before-head chronology or a
causal role for scalar Fourier growth.
\end{proofrevision}

\begin{figure}[tbp]
\centering
\includegraphics[width=\textwidth]{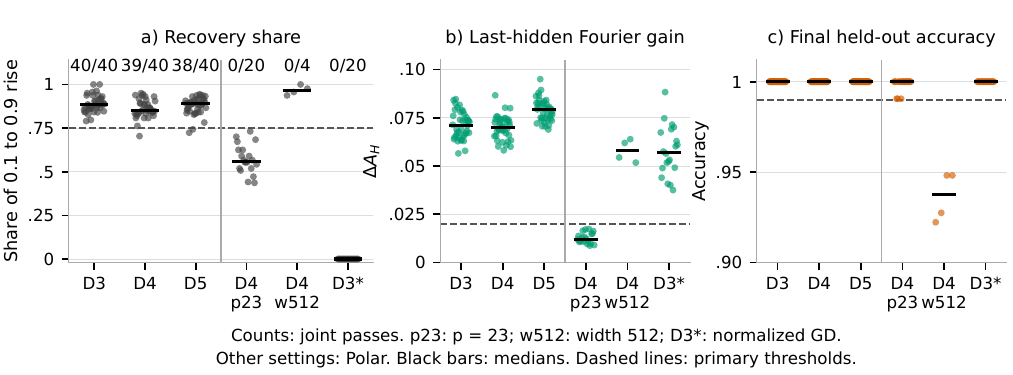}
\caption{\textbf{Results for all 164 evaluated deep-network runs.}
Dots are runs and bars medians. Left of each divider: the primary cohorts
($p=31$, width 256, $D=3,4,5$); right: $D=4$ with $p=23$, $D=4$ with width
512, and $D=3$ with Frobenius-normalized hidden updates (NGD). (a) Share of
the $10\%$-to-$90\%$ held-out rise inside the loss-only band $I_C$, with
primary-endpoint pass counts above; (b) last-hidden-layer Fourier-fraction gain $\Delta\AH$ on
$I_C$; (c) held-out accuracy at the saved horizon.
Dashed lines mark the primary thresholds in (a)--(c); the fourth primary
criterion, positive logged training margins in $I_C$, holds in all 164
runs. Rates were selected by rule~\eqref{paper:recipe}.}
\label{fig:deep-campaign-summary}
\end{figure}

\begin{table}[tbp]
\centering\footnotesize\setlength{\tabcolsep}{3pt}
\caption{\proofrevtext{Deep-network settings and outcomes, retaining all evaluated runs.
Rate-selection runs stop at $t_{10}^{\rm sel}$,
except those also used for fixed-rate comparisons, which continue to the
full horizon. Each evaluated run restarts from the same initialization
and split used to select its rate. Primary: the per-run primary endpoint.
$J_1/J_{\rm pers}$: first-fit and persistent-fit strict joints;
$P_{10}/P_3$: raw-CE range at most $10\%$ or $3\%$ from the first $10\%$
to the first $90\%$ held-out accuracy; RF: legacy-refit
representation-first flag. These
secondary tests do not alter the primary verdict.}}
\label{paper:tab:deep-design}\label{paper:tab:deep-secondary}
\begin{tabular}{llrrrrrrrr}
\toprule
Setting & $(D,M,p)$ & Indices $i$ & $r_0$ & $x_*$ & Horizon & Primary & $J_1/J_{\rm pers}$ & $P_{10}/P_3$ & RF\\
\midrule
Primary & $(3,256,31)$ & 21--40 / 1--20 & .50 & 3.85 & 80k & 40/40 & 0/17 & 40/2 & 0\\
Primary & $(4,256,31)$ & 21--40 / 1--20 & .35 & 3.85 & 100k & 39/40 & 0/13 & 39/0 & 0\\
Primary & $(5,256,31)$ & 21--40 / 1--20 & .35 & 3.85 & 80k & 38/40 & 0/19 & 39/0 & 0\\
Modulus & $(4,256,23)$ & 41--60 & .35 & 3.85 & 120k & 0/20 & 0/0 & 0/0 & 0\\
Width & $(4,512,31)$ & 61--64 & .40 & 4.50 & 100k & 0/4 & 0/1 & 4/1 & 0\\
NGD & $(3,256,31)$ & 1--20 & .50 & 3.85 & 80k & 0/20 & 0/0 & 0/0 & 0\\
\bottomrule
\end{tabular}
\end{table}

\begin{proofrevision}
\subsection{Results across primes}\label{paper:app:deep-medians}
\added{Depths 3 and 4 were also run at $p=23,47,61,101$, with width 256 and
last-layer multiplier $r=0.45$ (depth 4 at $p=23$: $0.32$).}
Each setting is the best of a development sweep on one or two seeds, with a
tuned last-layer multiplier $r$ and all other layers at $r_\ell=1$. It was
fixed before any new run and then run on 8 or 9 fresh initialization/split
seeds, for 10 runs per setting; at $p=101$ the settings chosen at $p=97$
were run unchanged on 10 fresh seeds. Depth 2 was run at $p=23,31,47$ with its development
configurations; at $p=31$ (5 development and 5 fresh seeds) it passes the
primary endpoint in 9 of 10 runs. \added{Rerun with the same fixed rate on the
40 seed indices of the primary depths, as the first column of
Figure~\ref{fig:deep-main}, it passes in 34/40 (Wilson interval
$[0.71,0.93]$, whose lower end is below the $0.80$ of the criterion that
depths 3--5 meet); the six misses fail the share threshold, and one also
ends at held-out accuracy $0.984$.} Off $p=31$, among depths 2 to 4 only depth 3 at $p=47$
passes (2/10)\added{; the other depth-3 and depth-4 settings at
$p=23,47,61,101$ pass in none of their 10 runs, and at $p\ge47$ some
training pairs are misclassified inside the post-hoc window}. The four settings chosen at $p=97$ (one hidden layer of width
512 or 2048, and depths 3 and 4) were also run at $p=97$, 10 runs each, and
none passes. These are development settings on a few seeds each, not a
pre-specified evaluation like the depth-three-to-five experiment. Under the
looser post-hoc rule every run of these settings passes
(Appendix~\ref{paper:app:posthoc}). \added{Appendix~\ref{paper:app:deep-fractions}
reruns depths 3 and 4 at $p=47$ and $61$ with $r$ chosen per training
fraction, on fresh seeds and with a criterion fixed in advance.}
\end{proofrevision}

\subsection{Depths three and four across training fractions}\label{paper:app:deep-fractions}
\begin{additions}
This experiment asks whether a re-tuned recipe carries the deeper-network
behavior to other primes and smaller training sets. It runs depths 3 and 4
at $p=47$ and $61$ with training fractions $0.8$, $0.7$, $0.6$ and $0.5$,
in the configuration of Appendix~\ref{paper:app:deep-settings} (width 256,
the same initialization scales, rates and decays), with two changes. The
head decay scales with the prime, $\lambda_V=0.2\cdot31/p$ ($0.132$ at
$p=47$ and $0.102$ at $p=61$), and the last-layer multiplier $r$ is fixed
for each depth and fraction, without the per-run
rule~\eqref{paper:recipe} (Table~\ref{paper:tab:deep-fractions}). Each $r$
was chosen on 6 development runs per candidate value (3 seeds at each
prime) by a fixed order: the most passes of the criterion below, then the
most passes with training accuracy at least $0.97$ throughout the window,
then the larger median share, then the earlier median $90\%$ held-out
crossing. The criterion, the values of $r$ and the analysis were fixed
before ten fresh initialization/split seeds per setting were run, 160 runs
in all; their seed indices were not used in any development run. One split
seed per index nests the training sets across fractions, so each seed is
a paired ladder. Runs stop at 100k, 150k, 200k and 250k updates at
fractions $0.8$, $0.7$, $0.6$ and $0.5$, or once held-out accuracy has been
at least $0.99$ at every log for $\max(0.5t,10^4)$ updates since the start
$t$ of that unbroken stretch.

\begin{table}[!htbp]
\centering\footnotesize\setlength{\tabcolsep}{4pt}\addedcolor
\caption{\added{Depths 3 and 4 at $p=47,61$ across training fractions (10
fresh runs per setting; $\lambda_V=0.2\cdot31/p$). Pass: the criterion of
Appendix~\ref{paper:app:deep-fractions}. $\bar L/\log p$: training CE (cycle
mean) when held-out accuracy crosses $0.5$. $\Delta\bar L$: its relative
change from the refit's $0.2$ crossing to the held-out $0.9$ crossing, with
the number of runs in which it rose. $\Delta\AH(W)$: last hidden layer.
Lead: held-out minus refit $0.5$ crossing. $t_{90}$: held-out $0.9$
crossing. Mis.: the most training pairs misclassified at any log inside $W$,
over all 10 runs. Medians over the 10 runs unless stated; updates in
thousands.}}
\label{paper:tab:deep-fractions}
\begin{tabular}{@{}llrrrrrrrrr@{}}
\toprule
$D$ & $p$ & Fraction & $r$ & Pass & $\bar L/\log p$ & $\Delta\bar L$ (rose) & $\Delta\AH(W)$ & Lead & $t_{90}$ & Mis.\\
\midrule
3 & 47 & 0.8 & .45 & 10/10 & 0.52 & $+9.5\%$ (10) & .51 & 2.4 & 19.5 & 0\\
3 & 47 & 0.7 & .30 & 10/10 & 0.35 & $+49\%$ (10) & .54 & 2.8 & 24.2 & 0\\
3 & 47 & 0.6 & .18 & 10/10 & 0.15 & $+233\%$ (10) & .48 & 3.7 & 34.1 & 0\\
3 & 47 & 0.5 & .08 & 8/10 & 0.007 & $+1117\%$ (10) & .48 & 4.7 & 48.1 & 0\\
\midrule
3 & 61 & 0.8 & .45 & 10/10 & 0.58 & $+0.7\%$ (8) & .53 & 1.2 & 17.6 & 18\\
3 & 61 & 0.7 & .30 & 10/10 & 0.43 & $+13\%$ (10) & .63 & 1.3 & 22.0 & 1\\
3 & 61 & 0.6 & .18 & 10/10 & 0.21 & $+51\%$ (10) & .64 & 1.5 & 28.3 & 0\\
3 & 61 & 0.5 & .08 & 10/10 & 0.014 & $+136\%$ (10) & .69 & 1.8 & 40.7 & 0\\
\midrule
4 & 47 & 0.8 & .45 & 9/10 & 0.63 & $-0.5\%$ (5) & .40 & 2.7 & 26.4 & 2\\
4 & 47 & 0.7 & .28 & 10/10 & 0.52 & $+29\%$ (10) & .56 & 4.9 & 36.4 & 0\\
4 & 47 & 0.6 & .13 & 10/10 & 0.17 & $+303\%$ (10) & .62 & 4.8 & 47.5 & 0\\
4 & 47 & 0.5 & .06 & 9/10 & 0.044 & $+1312\%$ (10) & .59 & 9.5 & 88.3 & 0\\
\midrule
4 & 61 & 0.8 & .45 & 9/10 & 0.67 & $-2.9\%$ (0) & .38 & 1.5 & 23.3 & 14\\
4 & 61 & 0.7 & .28 & 10/10 & 0.56 & $+11\%$ (10) & .61 & 2.1 & 31.0 & 4\\
4 & 61 & 0.6 & .13 & 10/10 & 0.23 & $+69\%$ (10) & .68 & 2.0 & 41.0 & 0\\
4 & 61 & 0.5 & .06 & 10/10 & 0.045 & $+238\%$ (10) & .69 & 3.3 & 67.8 & 0\\
\bottomrule
\end{tabular}
\end{table}

\paragraph{Criterion.}
The criterion uses the post-hoc non-falling window $W$ of
Appendix~\ref{paper:app:windows}. A run passes if the train-only refit
gains at least $0.2$ inside $W$; held-out accuracy gains at least $0.2$
from the start of $W$ to the end of the run and ends at least $0.99$; $W$
holds at least $75\%$ of both the held-out and the refit $10\%$-to-$90\%$
rises; and $\AH$ of the last hidden layer gains at least $0.02$ across $W$.
A setting works if at least 8 of its 10 runs pass. Training fit is not
required, but adding training accuracy at least $0.97$ at every logged
update of $W$ changes no count.

\paragraph{Results.}
All 16 settings work, and 155 of the 160 runs pass
(Table~\ref{paper:tab:deep-fractions},
Figures~\ref{fig:deep-fractions} and~\ref{fig:deep-fractions-app}--\ref{fig:deep-fractions-app3}).
The runs are not independent: each seed index fixes the split and the
initialization in all 16 settings, so at a given prime the training sets
are nested across fractions and the same at both depths, and at a given
depth and prime the initialization is the same across fractions. We
therefore give counts per setting, not an interval for the pooled rate.
Three of the five failures
are at fraction $0.5$ and $p=47$: the whole held-out rise lies inside $W$,
but the runs reach their 250k-update budget with held-out accuracy $0.959$,
$0.971$ and $0.988$. The other two are at fraction $0.8$ and depth 4: the
CE peaks and turns down during the held-out rise, so $W$ closes early and
holds $0.749$ and $0.736$ of it. In every setting the refit crosses $50\%$
held-out accuracy before the network, by a median of 1.2k to 9.5k updates.
The last hidden layer's $\AH$ gains $0.38$ to $0.69$ across $W$ (setting
medians), while the first hidden layer's rises only to about $0.1$.

\paragraph{The training loss.}
The training fraction sets the CE level at which the held-out rise
happens; the values below are setting medians. At fraction $0.8$ the
training CE climbs after the fit to $0.52$--$0.67\log p$ and stays near
that level: from the refit's $20\%$
crossing to the network's $90\%$ crossing it changes by $-2.9\%$ to
$+9.5\%$. In every run at this fraction the refit's $20\%$ crossing comes
before the network's $10\%$ crossing, so the interval contains the
network's whole $10\%$-to-$90\%$ held-out rise.
Training accuracy stays at or near one; at $p=61$, 3 to 18 of the
2{,}976 training pairs are misclassified at some log inside $W$, and at
$p=47$ at most 2 of 1{,}767. At smaller fractions the rise comes earlier
on the CE's climb: at fraction $0.5$ the CE is $0.007$--$0.045\log p$ when
held-out accuracy crosses $50\%$, and from the refit's $20\%$ crossing
to the network's $90\%$ crossing it grows by $136\%$ to $1312\%$, large
relative changes at a small level.

\paragraph{Primary endpoint.}
On the same runs the primary endpoint of Section~\ref{paper:deep}, which
asks for the rise inside a flat band at the CE peak, holds in only 9 of
160 (4 of 10 for each depth at $p=47$ and fraction $0.8$, and 1 of 10 for
depth 4 at $p=61$ and fraction $0.7$). At fractions $0.5$ to $0.7$ the CE
peaks after the held-out rise, so the band holds too little of it; at
fraction $0.8$ and $p=61$ the band holds the rise, but some training pairs
are misclassified inside it. These runs thus extend the post-hoc picture
of Appendix~\ref{paper:app:posthoc} to other primes and training fractions,
under a criterion fixed in advance; they do not show the fitted, flat band
of the primary endpoint at these primes.
\end{additions}

\begin{figure}[tbp]
\centering
\includegraphics[width=5.6in]{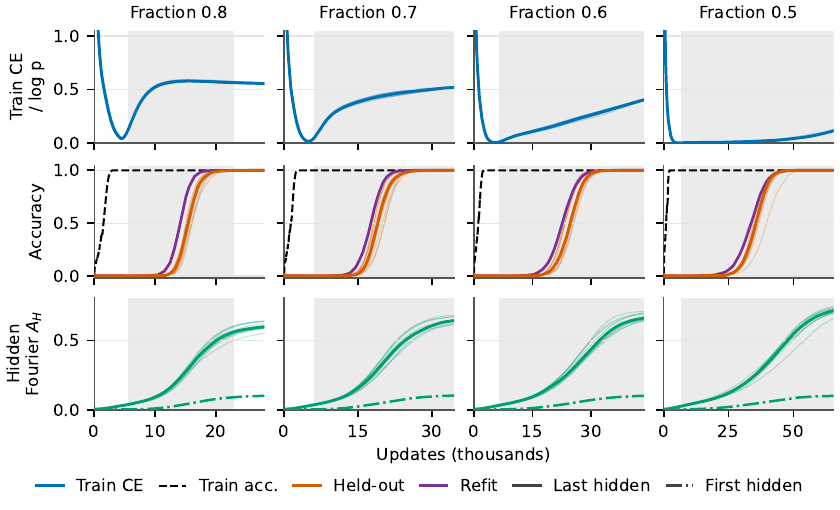}
\caption{\added{\textbf{Depth 3 at $p=61$ across training fractions: every run
(thin) and the median (thick)}, in the format of
Figure~\ref{fig:deep-fractions} (depth 3 at $p=47$); 10 fresh runs per
fraction. Rows as in Figure~\ref{fig:deep-main}; gray: from the median start
to the median end of the post-hoc windows $W$.
Table~\ref{paper:tab:deep-fractions} gives $r$, the passes and the
misclassified training pairs. Each column ends at its shortest run, so every
run contributes at every plotted point; in this setting and in those of
Figures~\ref{fig:deep-fractions-app2} and~\ref{fig:deep-fractions-app3}, the
shortest run ends after the last run's $90\%$ held-out crossing.}}
\label{fig:deep-fractions-app}
\end{figure}

\begin{figure}[tbp]
\centering
\includegraphics[width=5.6in]{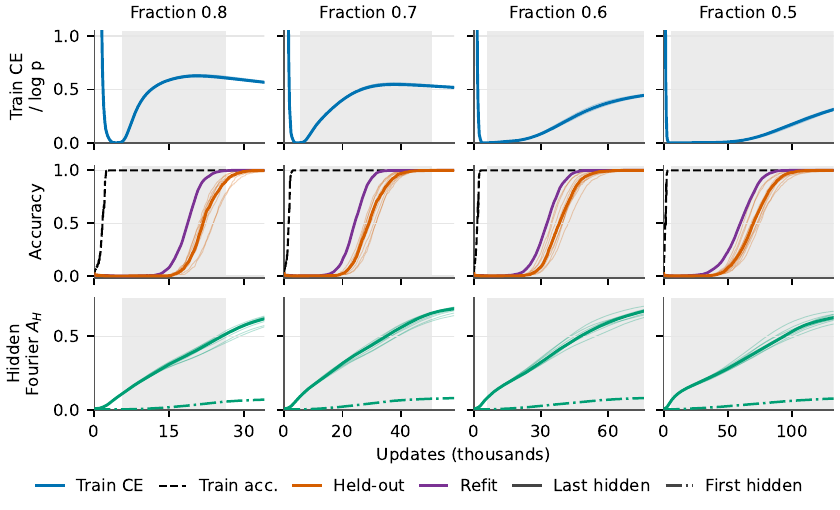}
\caption{\added{\textbf{Depth 4 at $p=47$ across training fractions: every run
(thin) and the median (thick)}, as in Figure~\ref{fig:deep-fractions-app}.}}
\label{fig:deep-fractions-app2}
\end{figure}

\begin{figure}[tbp]
\centering
\includegraphics[width=5.6in]{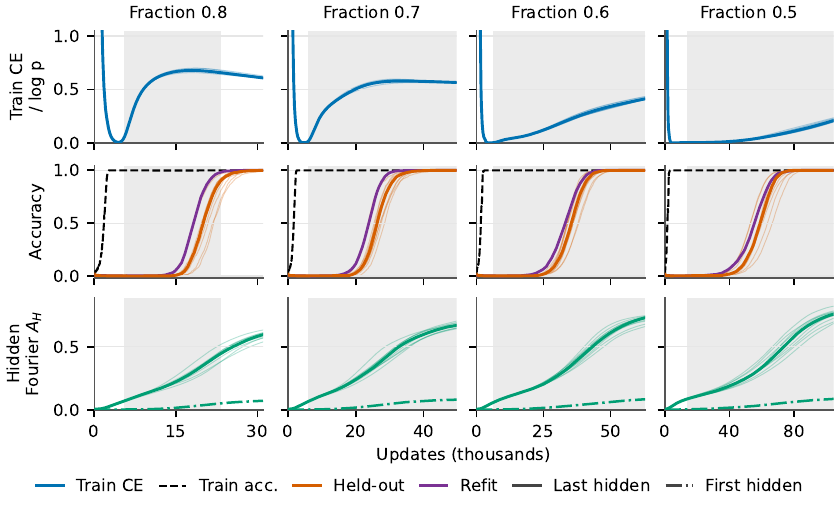}
\caption{\added{\textbf{Depth 4 at $p=61$ across training fractions: every run
(thin) and the median (thick)}, as in Figure~\ref{fig:deep-fractions-app}.}}
\label{fig:deep-fractions-app3}
\end{figure}

\clearpage
\begin{proofrevision}
\subsection{Post-hoc window counts}\label{paper:app:posthoc}
After every run had been seen, we also scored the runs with the post-hoc
non-falling window of Appendix~\ref{paper:app:windows}. The following pass:
\begin{itemize}[nosep,leftmargin=*]
\item with one hidden layer, all 20 headline runs and the 20 runs of the
second $p=23$ cohort; all 20 per-prime runs at each of $p=17,23,29$; 20,
20, 20, 9, 0 and 0 fixed-recipe runs at $p=11,13,17,19,23,29$; all five
replayed trajectories at each of $p=23,31$; and none of the 40
quadratic-activation runs;
\item all 170 runs of the 17 development-selected settings with one to four
hidden layers, all 30 runs at $p=101$ and all 120 primary runs;
\item the fixed-rate arms (60/60), $D=4$ at $p=23$ (20/20) and width 512
(4/4), but none of the 20 normalized-gradient runs, whose CE keeps falling.
\end{itemize}
On the 127-run development sweep that chose 14 of the 17 settings, 115
runs pass. Outside that sweep and the one-hidden-layer cohorts of
Appendix~\ref{paper:app:experiments-ext}, moving the threshold anywhere from $0.1$ to $0.5$ changes
no count. A pass requires only a
rise of $0.2$ in held-out or refit accuracy, not recovery to $90\%$.
Because the window is chosen to contain the largest CE rise, raw CE varies
by orders of magnitude inside it at depths 3 to 5, and at $p\ge47$ nearly
every run misclassifies some training pairs inside it. The rule thus shows
held-out or refit accuracy rising while the training CE does not fall, not
the fitted, flat window of the primary endpoint.
\end{proofrevision}

\subsection{Biases in the hidden layers}\label{paper:app:bias}
\begin{proofrevision}
The baseline networks are bias-free. The dotted curves of
Figures~\ref{fig:onehl-primes} and~\ref{fig:deep-main} come from a small
first experiment with trained hidden biases. We reran the $p=31$ runs of
Figure~\ref{fig:onehl-primes} (one hidden layer, 10 runs) and
Figure~\ref{fig:deep-main} (depth 2, \added{40 runs}; depth 4, 40 runs, each with
its preliminary run that selects $r$) with a bias in every hidden layer,
$h^{(\ell)}=\ReLU(\mat{W}^{(\ell)}h^{(\ell-1)}+b^{(\ell)})$. The bias starts
at zero and is one more column of $\mat{W}^{(\ell)}$: the matrix
$[\mat{W}^{(\ell)}\mid b^{(\ell)}]$ receives the supported Polar step and the
decay of~\eqref{paper:optimizer}. The head stays bias-free. Each run starts
from the same network as its bias-free counterpart and keeps its split,
fixed hyperparameters, rate-selection rule and stopping rule. At depth 4
the preliminary run, which now has biases, reselects $r$, and the selected
value differs from the bias-free one in 37 of the 40 runs. Depths 3 and 5
were not rerun. With
one-hot inputs a first-layer bias adds no functions, because it can be
folded into the first $p$ columns of $\mat{W}^{(1)}$; it changes only the
update. In Figures~\ref{fig:onehl-primes} and~\ref{fig:deep-main} the dotted
medians stay close to the bias-free ones; the most visible difference is a
lower training loss during and after the held-out rise. For one hidden
layer, the relative decrease of $\bar L$ in
Figure~\ref{fig:bias-ablation} (left) has median $5.3\%$ with the bias and
$2.6\%$ without, and is larger in all 10 pairs. With the bias, the captured
share $q(I_C)$ has median $0.62$ (range $0.40$--$0.74$), below the $0.75$
that the primary endpoint requires; without it, the median is $1$.
\added{At depth 2 the primary endpoint holds in 18 of the 40 runs with
biases and in 34 without: 16 seeds lose it with the biases and none gains it (exact McNemar
$p=3.1\cdot10^{-5}$), and 20 of the 22 misses with biases fail the share
threshold.}

With one hidden layer we also tried three other ways of updating the bias,
on the same 10 initializations and splits (Figure~\ref{fig:bias-ablation}):
(i)~the joint step until the knee, then the bias frozen (no step and no
decay); (ii)~the joint step until the knee, then the bias only decays, which
is the bias-free optimizer from the knee on, applied to the network with the
bias folded into $\mat{W}$; (iii)~from the start, $\mat{W}$ takes the
bias-free step of~\eqref{paper:optimizer} and the bias a step of its own,
$b_{t+1}=(1-\eta_W\lambda_W)b_t+\eta_W g_t/(\norm{g_t}_2\sqrt{\rank\mat{F}_t})$
with $g_t=-\nabla_bL$, of about the length of the bias part of the joint
step. In
these runs the training loss falls about as much as without a bias in (i)
and (ii), and about as much as with the joint step in (iii). These
experiments are preliminary: they use one prime and one width per depth,
and we do not yet draw conclusions from them.
\end{proofrevision}

\begin{figure}[htbp]
\centering
\includegraphics[width=\linewidth]{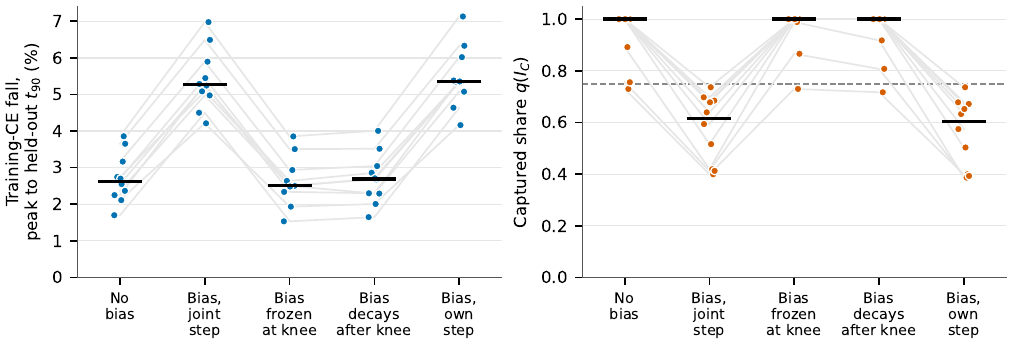}
\caption{\proofrevtext{\textbf{Updating a hidden bias in different ways}
(one hidden layer, $p=31$, 10 initializations and splits; gray lines join
the same run, bars are medians). Left: the relative decrease of $\bar L$
from its largest post-knee value to the first update after which held-out
accuracy stays at least $0.9$. Right: the captured share $q(I_C)$ of the
primary endpoint, which requires at least $0.75$ (dashed). Columns: no
bias; the bias with the joint $[\mat{W}\mid b]$ step; the joint step until
the knee, then the bias frozen or only decaying; the bias updated by its
own step throughout.}}
\label{fig:bias-ablation}
\end{figure}

\clearpage
\section{A one-block transformer}\label{paper:app:tf}
\begin{additions}
This appendix supports Section~\ref{paper:transformer}. It reports every
run of the settings below. The settings were chosen on earlier development
runs with other seeds, the gain on five pilot runs of one seed.
Initialization and split seeds are $7100+i$ and $8100+i$, with
$i=11$--$20$ for the main runs at $p=113$, the runs without the gain,
the settings of
Figures~\ref{fig:tf-primes}--\ref{fig:tf-readouts} and two blocks at
$\phi=0.2$, $i=11$--$13$ for the recipe sweep and the two-block tests
(Table~\ref{paper:tab:tf-blocks-late} and the fractions $0.175$ and
$0.225$), and $i=11$--$15$ elsewhere, so
settings at the same prime and with the same number of blocks share their
initial weights, whatever their gain, rates or split (one- and two-block
runs share only the embeddings and the first block), and settings at the
same prime share their splits when the training fraction agrees. These runs test settings found on
earlier runs. The pass thresholds were fixed before all of them except the
Polar runs without the gain, which had finished before and were scored
afterwards (Appendix~\ref{paper:app:tf-window}). Seeds $i=16$--$20$ of the
settings across primes and of two blocks were run last, after the settings
had been chosen on seeds $i=11$--$15$.

\subsection{Model, updates and readouts}\label{paper:app:tf-setup}
\paragraph{Model.}
Token and position embeddings $\mat{W}_E\in\R^{(p+1)\times d}$ and
$\mat{W}_{\rm pos}\in\R^{3\times d}$, $d=128$, feed one block: causal
attention with four heads ($\mat{W}_Q,\mat{W}_K,\mat{W}_V,\mat{W}_O\in\R^{d\times d}$)
and a ReLU MLP ($\mat{W}_{\rm in}\in\R^{512\times d}$,
$\mat{W}_{\rm out}\in\R^{d\times512}$), each added to the residual stream,
with no LayerNorm and no biases. The token $p$ stands for ``$=$''. Only the
last position is computed, since the others cannot reach its logits. Its
final residual $r\in\R^d$ gives
$\hat r=r/(\frac1d\sum_i r_i^2+10^{-12})^{1/2}$ and the logits
$f=g\,\mat{V}\hat r$, with the unembedding $\mat{V}\in\R^{p\times d}$
($\mat{W}_U^\top$ in~\cite{nanda2023}). Initial entries are independent
Gaussians with standard deviation $1/\sqrt d$ for $\mat{W}_E$,
$\mat{W}_{\rm pos}$, the attention matrices and $\mat{V}$, $\sqrt{2/d}$ for
$\mat{W}_{\rm in}$ and $1/\sqrt{512}$ for $\mat{W}_{\rm out}$; the gain
starts at $g_0$.

\paragraph{Updates.}
Training uses the full-batch training CE in float64. Every matrix except
$\mat{V}$ takes the $\mat{W}$ step of~\eqref{paper:optimizer}, with the
supported Polar factor computed by an SVD that keeps singular values above
$10^{-7}$ times the largest; $\mat{V}$ takes the $\mat{V}$ step. The gain
takes $g\leftarrow g+\eta_g(-\operatorname{sign}(\partial L/\partial g)-\lambda_g g)$.
Once the training set is fitted, $\partial L/\partial g<0$, so with
$\lambda_g=1$ the gain relaxes toward $1$ with time constant
$\tau=1/(\eta_g\lambda_g)$ ($\eta_g=2\cdot10^{-4}$ and $\tau=5{,}000$
unless stated). Runs stop after 30k updates, or once held-out accuracy
on all held-out pairs has been at least $0.99$ at every log from the start
$t$ of an unbroken stretch to $\max(1.5t,t+5000)$.

\paragraph{Splits and held-out sets.}
A random split keeps $\lfloor\phi p^2\rfloor$ of the $p^2$ pairs for
training, with $\phi$ the training fraction (2{,}553 at $p=113$ and
$\phi=0.2$). About a fraction $\phi$ of the held-out pairs have their
mirror $(b,a)$ in the training set. They can be answered by memorizing the
mirror, which the shared token embedding makes easy: at the first full fit
at $p=113$ they are $28\%$ correct (median over runs), against $0.06\%$ for
the other held-out pairs. We therefore report every quantity on all
held-out pairs and on the \emph{no-mirror} pairs, whose mirror is also held
out (8{,}133--8{,}193 of the 10{,}216 held-out pairs at $p=113$). A
\emph{mirror-free} split instead adds unordered pairs $\{a,b\}$ in a random
order, each with both orders, until it holds $\lfloor\phi p^2\rfloor$
ordered pairs, so no held-out pair has its mirror in the training set.

\paragraph{Observers.}
Accuracies, margins and misclassified training pairs are logged every 20
updates, $\AH$ and the refits every 100, and the probe of
Appendix~\ref{paper:app:tf-readouts} every 200. $\AH$ is~\eqref{paper:AH}
on the full-grid table of $\hat r$, taken before the gain. This table is
the residual stream after the MLP's output map $\mat{W}_{\rm out}$, one
linear map after its neurons, so it corresponds to the logits of the
networks of Section~\ref{paper:onehl} rather than to their hidden layer; the transformer's MLP neurons end the main runs with
$\AH$ of only $0.03$--$0.04$. The refit is~\eqref{paper:expridge} on the same
table, with $\tau_t=10^{-3}n$.

\paragraph{Fourier-filtered refit.}
With $\omega_k=2\pi k/p$ and $K=(p-1)/2$, the target $T_x=e_{y}-\one_p/p$,
$y=y^\star(x)$, has entries
$\frac2p\sum_{k=1}^{K}[\cos(\omega_k y)\cos(\omega_k c)+\sin(\omega_k y)\sin(\omega_k c)]$
for $c\in\F_p$. The ridge solution is linear in the targets, so the refit's
logits are
\begin{equation}\label{paper:fourier-refit}
 \widehat f_c(x)=\frac2p\sum_{k=1}^{K}
   \bigl[C_k(x)\cos(\omega_k c)+S_k(x)\sin(\omega_k c)\bigr],
\end{equation}
where $C_k$ and $S_k$ are the ridge predictions, from the same table and
penalty, of $\cos(\omega_k y)$ and $\sin(\omega_k y)$. The refit thus
weights every frequency equally. The Fourier-filtered refit keeps in
\eqref{paper:fourier-refit} only the 8 frequencies with the largest
training energy
$E_k=\bigl\|\frac1n\sum_{x\in\Train}(\hat r(x)-\bar r_\Train)\,e^{-\mathrm{i}\omega_k y^\star(x)}\bigr\|_2^2$,
where $\bar r_\Train$ is the training mean of $\hat r$, so it uses training
data only; with all $K$ frequencies kept it is the refit itself. The number
8 was fixed at $p=113$ on development runs, before the runs reported
here.

\subsection{Window and pass criterion}\label{paper:app:tf-window}
Let $\tilde L_t$ be the training CE averaged over the 201 updates centered
at $t$ (the series is padded with its first and last values), and
$T_{\rm fit}$ the first log at which every training pair is correctly
classified. For each start $a\in\{T_{\rm fit},T_{\rm fit}+20,\ldots\}$, let
$b(a)$ be the last update before $\tilde L_t$ first falls below
$0.97\max_{a\le s\le t}\tilde L_s$, or the end of the run. The window is
the $[a,b(a)]$ with the largest captured share $q$ of the held-out
$10\%$-to-$90\%$ rise, with $q$ as in Appendix~\ref{paper:app:windows} and
the earliest start on ties. Unlike the post-hoc window of
Appendix~\ref{paper:app:windows}, it has candidate starts at or after the first
fit rather than after a head-norm knee, uses a 201-update average and is placed using held-out
accuracy, so it is post hoc. It is computed separately for each held-out
definition. A run passes on a definition if inside its window $q\ge0.75$,
the Fourier-filtered refit's held-out accuracy gains at least $0.2$,
$\Delta\AH\ge0.02$ for $\hat r$, and no training pair is misclassified at
any log (\emph{strict}; \emph{loose} allows up to 10). A run passes if it
passes on both definitions and its final accuracy on all held-out pairs is
at least $0.99$; unless stated, counts are strict. These thresholds were
fixed before the runs at $p=113$ with the gain, on the no-mirror
definition; the requirement on both definitions was added after those runs
and before every other run reported here, except the Polar runs without the
gain, which predate both and were scored afterwards.

\subsection{The gain and the recipe at \texorpdfstring{$p=113$}{p=113}}\label{paper:app:tf-gain}
Table~\ref{paper:tab:tf-contrast} compares the runs with and without the
gain. Without the gain, the runs
fit the training set later (median 1{,}190 updates) and their CE falls
during the held-out rise, so the window holds only $0.35$ to $0.51$ of it
on all held-out pairs ($0.43$ to $0.61$ on the no-mirror pairs).
A gain held fixed behaves like no gain. The relaxing gain lifts the CE
during the rise: in the 10 main runs, the averaged training CE never falls
more than $0.05\%$ below its running maximum across the no-mirror held-out
rise.

\begin{table}[tbp]
\centering\footnotesize\setlength{\tabcolsep}{3pt}\addedcolor
\caption{\added{One block at $p=113$, $\phi=0.2$: the gain
(seeds $i=11$--$20$; the 3-run rows use $i=11$--$13$). Fit: first full fit
(median; updates). $\Delta\tilde L$: relative change of the averaged
training CE from the no-mirror held-out $0.1$ crossing to the $0.9$
crossing (median and range). $q$: share of the all-pairs rise inside the
all-pairs window. Final: held-out accuracy on all pairs.}}
\label{paper:tab:tf-contrast}
\begin{tabular}{@{}lrrrrrrl@{}}
\toprule
Setting & Runs & Pass & Fit & $\Delta\tilde L$ & $q$ & Final & Failing condition\\
\midrule
Polar, gain $2\to1$ & 10 & 10 & 200 & $+33\%$ ($+18$, $+55$) & 0.93--1.00 & 0.999--1.000 & --\\
Polar, no gain & 10 & 0 & 1{,}190 & $-8\%$ ($-10$, $-7$) & 0.35--0.51 & 0.999--1.000 & share (10), refit (9)\\
Polar, fixed $g=1.5$ & 3 & 0 & 240 & $-17\%$ ($-18$, $-16$) & 0.20--0.26 & 0.999--1.000 & share, refit (3)\\
Polar, fixed $g=2$ & 3 & 0 & 200 & $-27\%$ ($-29$, $-24$) & 0.13--0.17 & 1.000 & share, refit (3)\\
Polar, fixed $g=3$ & 3 & 0 & 160 & $-44\%$ ($-49$, $-42$) & 0.07--0.10 & 1.000 & share, refit (3)\\
\bottomrule
\end{tabular}
\end{table}

\paragraph{The recipe.}
On seeds $i=11$--$13$ we varied the initial gain $g_0$ and the time
constant $\tau$ (Table~\ref{paper:tab:tf-recipe}). Every run with
$g_0\in\{2,3\}$ and $\tau$ from 2{,}500 to 10{,}000 passes (18/18), with a
median CE rise of $24\%$ to $104\%$ per setting across the held-out rise;
with $\tau=20{,}000$ the CE is about flat and 6 of 9 runs pass. Changing one
setting at a time around the main runs ($g_0=2$, $\tau=5{,}000$): head
decay $\lambda_V=0.25$ or $0.3$, and feature rate $\eta_W=0.05$ or $0.2$,
pass in 3/3 each ($\eta_W=0.2$ with a CE rise of only $10\%$); feature decay
$\lambda_W=0.005$ passes in 1/3 (two runs end below $0.99$) and
$\lambda_W=0.02$ in 2/3; training fraction $0.3$ in 0/3, because the
held-out rise then comes before the CE minimum. The recipe is thus a gain
of 2 to 3 that relaxes to 1 on about the time scale of the held-out rise,
supported-Polar feature steps, and a training fraction that puts the rise
after the CE minimum.

\begin{table}[tbp]
\centering\footnotesize\addedcolor
\caption{\added{The gain at $p=113$, $\phi=0.2$ (seeds $i=11$--$13$): passes
of 3 and the median change of the averaged training CE across the
no-mirror held-out rise. The $g_0=2$, $\tau=5{,}000$ entry uses the first
three main runs. With $g_0=1.5$ and $\tau=2{,}500$, all three runs pass
the loose criterion.}}
\label{paper:tab:tf-recipe}
\begin{tabular}{@{}lrrrrr@{}}
\toprule
& $\tau=2{,}500$ & $5{,}000$ & $10{,}000$ & $20{,}000$ & fixed\\
\midrule
$g_0=1.5$ & 1/3, $+10\%$ & 3/3, $+13\%$ & 2/3, $+6\%$ & 2/3, $-4\%$ & 0/3, $-17\%$\\
$g_0=2$ & 3/3, $+28\%$ & 3/3, $+38\%$ & 3/3, $+24\%$ & 2/3, $-1\%$ & 0/3, $-27\%$\\
$g_0=3$ & 3/3, $+84\%$ & 3/3, $+104\%$ & 3/3, $+58\%$ & 2/3, $+9\%$ & 0/3, $-44\%$\\
\bottomrule
\end{tabular}
\end{table}

\paragraph{Two blocks.}
Two blocks pass in none of 25 runs at four settings. At $\phi=0.2$
($\lambda_V=0.275$, $\tau=10{,}000$; seeds $i=11$--$20$,
Figures~\ref{fig:tf-contrast} and~\ref{fig:tf-blocks}) they behave like one block during the held-out
rise: the no-mirror held-out accuracy goes from $10\%$ at 4.5k--6.1k updates
to $90\%$ at 8.1k--10.8k, the CE rises by $9\%$ to $29\%$ across that rise,
no training pair is misclassified at any log, the window holds the whole no-mirror
rise, and the Fourier-filtered refit gains $0.64$--$0.98$ inside it.
Held-out accuracy on all pairs reaches $0.99$ at 14.9k--18.2k updates. At
17.8k--22.0k updates, 1.4k to 5.7k updates later, training pairs begin to
flip: 45 to 438 are misclassified at some log inside the window, which runs
to the end of the run, held-out accuracy falls back, and 6 of the 10 runs
end below $0.99$. At $\phi=0.25$ ($\lambda_V=0.4$, $\tau=10{,}000$) and $0.3$
($\lambda_V=0.275$ or $0.5$, $\tau=5{,}000$; seeds $i=11$--$15$) the rise
comes before the CE minimum: the window holds at most $0.45$ of the
no-mirror rise in 14 of the 15 runs, and the fifteenth has no window.

On seeds $i=11$--$13$ at $\phi=0.2$ we then changed one rate at update
12k, after the held-out rise and before the late flips
(Table~\ref{paper:tab:tf-blocks-late}). The three changed runs of a seed
share their first 12k updates; through floating-point differences between
thread counts they part from the runs of the same seed above after 2k--3k
updates. Freezing the gain at $1.30$ ends the CE's rise: the CE falls
slowly from 12k on, so the window closes at 13.5k--13.7k updates, and pairs
still flip, after the window, so the two runs that pass lose their fit as
well. A smaller head step delays the flips to 21.6k--23.0k. A smaller
feature step keeps every training pair correct at every log to the end in all three
runs; its CE dips by about $8\%$ just after the change and then rises
again, so the window closes at the change, and held-out accuracy on all
pairs reaches $0.99$ later (20.1k and 23.8k updates; the third run ends at
$0.986$). Training fractions $0.175$ and $0.225$ (3 runs each, base rates)
pass in none: at $0.175$ the no-mirror rise ends only at 14.3k--16.2k,
pairs begin to flip at 17.7k--22.4k and held-out accuracy ends at
$0.82$--$0.91$; at $0.225$ the
rise starts (2.6k--3.3k) before the CE minimum (4.9k--5.2k) and the CE falls
by $13\%$ to $17\%$ across it. Lowering the feature rate after the held-out
rise is thus the one change that kept the fit; it has been tried on three
seeds only.

\begin{table}[tbp]
\centering\footnotesize\setlength{\tabcolsep}{4pt}\addedcolor
\caption{\added{Two blocks at $\phi=0.2$ ($\lambda_V=0.275$, $\tau=10{,}000$):
one rate changed at update 12k (seeds $i=11$--$13$; no change: $i=11$--$20$).
Window ends: end of the all-pairs window, in updates. First flip: first log
after 12k with a misclassified training pair; Most: the most misclassified
training pairs at a log after 12k. Final: held-out accuracy on all
pairs.}}
\label{paper:tab:tf-blocks-late}
\begin{tabular}{@{}lrrrrr@{}}
\toprule
Change at 12k & Pass & Window ends & First flip & Most & Final\\
\midrule
None & 0/10 & 30k (end) & 17.8k--22.0k & 45--438 & 0.941--0.994\\
Gain frozen ($\eta_g=0$, $g=1.30$) & 2/3 & 13.5k--13.7k & 18.8k--19.7k & 57--217 & 0.983--0.997\\
Head step $\eta_V$: $0.03\to0.01$ & 0/3 & 30k (end) & 21.6k--23.0k & 59--395 & 0.950--0.970\\
Feature step $\eta_W$: $0.1\to0.03$ & 2/3 & 12.0k & none & 0 & 0.986--0.997\\
\bottomrule
\end{tabular}
\end{table}

\subsection{Other primes and mirror pairs}\label{paper:app:tf-primes}
At other primes we keep the recipe ($g_0=2$, $\tau=5{,}000$ and the rates
of the main runs) and choose the head decay $\lambda_V$ and the training
fraction $\phi$ (Table~\ref{paper:tab:tf-primes}). In a first set of
runs ($\phi=0.3$ at $p=61$ and $0.2$ at $p=89$ and $149$), the held-out
rise was too slow at $p=61$ and $89$ (most runs end below $0.99$ after 30k
updates) and came before the CE minimum at $p=149$. A second set of runs set $\phi$ for each prime by a rule, fixed
before those runs, for when the rise starts as a function of
$n/(p\ln p)$; it predicted the median start of each new setting within a
factor $0.63$--$1.34$. The random-split settings chosen on seeds
$i=11$--$15$, the most strict passes at each prime with ties broken by loose
passes and then by the smaller fraction, pass in 3/5, 5/5 and 3/5 runs at $p=61$, $89$ and $149$,
and in 2/5, 5/5 and 4/5 on the five further seeds, so in 5/10, 10/10 and
7/10 (10/10 at $p=113$). At $p=149$ all ten pass the loose criterion (at
most 7 of the 3{,}552 training pairs are misclassified inside the window).
At $p=61$ all ten pass on the no-mirror pairs, and the five failures come
from the mirror pairs; with the mirror-free split below, which
Figure~\ref{fig:tf-primes} shows at $p=61$, 9 of 10 pass.

\begin{table}[tbp]
\centering\footnotesize\setlength{\tabcolsep}{4pt}\addedcolor
\caption{\added{One block across primes (gain $2\to1$, $\tau=5{,}000$; seeds
$i=11$--$15$, and $i=11$--$20$ in the rows with 10 runs). Pass: strict, on both
definitions; Loose: at most 10 misclassified training pairs; No-mirror:
strict on the no-mirror definition alone, with the final all-pairs
accuracy at least $0.99$. $\Delta\tilde L$: median change of the averaged
training CE across the no-mirror rise, over the runs with a full rise.
Final: held-out accuracy on all pairs. $^\dagger$Mirror-free split.}}
\label{paper:tab:tf-primes}
\begin{tabular}{@{}rrrrrrrrr@{}}
\toprule
$p$ & $\phi$ & $\lambda_V$ & Training pairs & Pass & Loose & No-mirror & $\Delta\tilde L$ & Final\\
\midrule
61 & 0.30 & 0.45 & 1{,}116 & 1/5 & 1/5 & 1/5 & $+87\%$ & 0.979--0.994\\
61 & 0.30 & 0.60 & 1{,}116 & 1/5 & 1/5 & 1/5 & $+58\%$ & 0.964--0.991\\
61 & 0.33 & 0.45 & 1{,}227 & 5/10 & 5/10 & 10/10 & $+106\%$ & 0.994--0.999\\
61 & 0.35 & 0.45 & 1{,}302 & 0/5 & 0/5 & 5/5 & $+87\%$ & 0.998--0.999\\
61 & $0.55^\dagger$ & 0.45 & 2{,}046 & 9/10 & 9/10 & 9/10 & $+161\%$ & 0.9899--0.997\\
\midrule
89 & 0.20 & 0.33 & 1{,}584 & 1/5 & 1/5 & 1/5 & $+33\%$ & 0.39--0.996\\
89 & 0.20 & 0.40 & 1{,}584 & 0/5 & 0/5 & 0/5 & $+22\%$ & 0.55--0.98\\
89 & 0.25 & 0.33 & 1{,}980 & 10/10 & 10/10 & 10/10 & $+35\%$ & 1.000\\
89 & 0.275 & 0.33 & 2{,}178 & 2/5 & 2/5 & 5/5 & $+4\%$ & 1.000\\
\midrule
113 & 0.20 & 0.275 & 2{,}553 & 10/10 & 10/10 & 10/10 & $+33\%$ & 0.999--1.000\\
\midrule
149 & 0.15 & 0.23 & 3{,}330 & 0/5 & 4/5 & 0/5 & $+28\%$ & 0.12--0.999\\
149 & 0.16 & 0.23 & 3{,}552 & 7/10 & 10/10 & 7/10 & $+19\%$ & 0.992--1.000\\
149 & 0.20 & 0.23 & 4{,}440 & 0/5 & 0/5 & 0/5 & $-23\%$ & 0.999--1.000\\
149 & 0.20 & 0.275 & 4{,}440 & 0/5 & 0/5 & 0/5 & $-18\%$ & 0.996--1.000\\
\bottomrule
\end{tabular}
\end{table}

\paragraph{Mirror pairs.}
A window that opens when the all-pairs held-out accuracy is $a_0$ can hold
at most $(0.9-\max(a_0,0.1))/0.8$ of the all-pairs rise, so the share falls
below $0.75$ whenever $a_0>0.3$. At $p=61$ the training fraction that lets
the rise finish within 30k updates also puts the mirrors of about a third
of the held-out pairs in the training set. These held-out pairs are learned
early: at $\phi=0.33$, $90\%$ of them are correct by update 240--520 in 8 of
the 10 runs (by 711 and 1{,}615 in the other two), and the all-pairs window
opens at 560--700. At $\phi=0.33$ the window therefore opens at
$a_0=0.27$--$0.34$, and the all-pairs share sits near the threshold:
$0.751$--$0.787$ in the five passing runs, $0.694$--$0.744$ in the five
failing ones (Figure~\ref{fig:tf-mirror}). A mirror-free split at
$\phi=0.55$ has about as many distinct unordered training pairs
(1{,}038--1{,}043) as the random split at $0.33$ (1{,}021--1{,}039). With
it, $a_0$ is $0.007$--$0.036$, the share is $1.00$, and 9 of 10 runs pass;
the tenth ends at held-out accuracy $0.9899$. The CE minimum (median update
724, against 741 with the random split) and the $10\%$ crossing of the
no-mirror held-out accuracy (1.9k, against 1.8k) are close to those of the
random split, but the rise is slower: the $50\%$ and $90\%$ crossings come
at 4.3k and 7.6k, against 3.5k and 5.4k. At $\phi=0.33$ a mirror-free split,
with 622 distinct pairs, did not generalize within 15k updates (one
development run). The number of distinct unordered training pairs, rather
than of ordered ones, thus appears to set when the held-out rise starts,
and in a random split the held-out pairs whose mirror is a training pair
behave almost like training pairs.

\subsection{Readouts}\label{paper:app:tf-readouts}
Table~\ref{paper:tab:tf-readouts} and Figure~\ref{fig:tf-readouts} compare
the readouts of $\hat r$ with the network on the no-mirror pairs. The
refit trails the network in all 50 runs of the appendix settings, at the
$50\%$ and at the $90\%$ crossing. The Fourier-filtered refit leads at the
$50\%$ crossing in 49 of 50 runs and at the $90\%$ crossing in all 30 runs
at $p\ge89$, but at $p=61$ it levels off near $0.96$ (median final $0.957$
with the random split and $0.962$ with the mirror-free split) and crosses
$90\%$ after the network. At the network's $90\%$ crossing, 11 to 19 of the
30 frequencies at $p=61$ carry held-out signal (held-out $R^2$ of $(C_k,S_k)$
above $0.05$, a diagnostic that uses held-out labels), against 9 to 15 of
44 to 74 at the larger primes. The top 8 keep 7 or 8 of them; the top 16
keep 11 to 16 at $p=61$ and all of them in 29 of the 30 runs at the larger
primes, and the top-16 refit leads at both crossings in 49 of the 50
runs. The choice $K=16$ was made after
seeing $p=61$, so this readout is post hoc. A train-only multinomial
logistic probe of $\hat r$ tracks the network (median leads of 0.02k to
0.19k updates). Its logits are $\mat{B}\hat r$ with $\mat{B}\in\R^{p\times d}$,
without bias or feature standardization; every 200 updates it is refitted
from $\mat{B}=0$ to minimize the mean training CE plus
$5\cdot10^{-5}\norm{\mat{B}}_F^2$ by at most 100 L-BFGS iterations
(strong-Wolfe line search, float64), and it is scored like the network, by
the held-out accuracy of its argmax, with crossings interpolated between
these logs. Whether a train-only readout leads the network thus
depends on which frequencies it sums: the refit weights all frequencies
equally, while at the network's $50\%$ crossing only 6 to 13 of them carry
held-out signal in any run.

\paragraph{Why the same refit leads with one hidden layer and trails here.}
By~\eqref{paper:fourier-refit} the refit's logits are an equal-weight sum
over all $K$ frequencies, so it tests whether the features reproduce the
one-hot label at every frequency. In replays of three runs of
Figure~\ref{fig:onehl-primes} ($p=31$, 61 and 101, one seed each, with the
parameters saved every 200 updates), every frequency of the hidden layer
carries held-out signal by update 6.4k, 5.2k and 5.6k, while the network's
held-out accuracy is still below $0.05$; the network crosses $50\%$ at 8.2k,
6.6k and 6.3k. There the full sum is the right readout: keeping only the top
8 frequencies delays it (leads at the network's $50\%$ crossing of 2.4k,
1.3k and 0.7k updates, against 2.8k, 1.9k and 1.4k with all frequencies). In
the transformer, $\hat r$ carries 6 to 13 frequencies at the network's
$50\%$ crossing (medians 7 to 9 per setting of
Table~\ref{paper:tab:tf-readouts}), and the frequencies outside the top 8
have negative held-out $R^2$ (medians of $-0.27$ to $-0.79$ per setting).
The ridge fits them on the training pairs, where the features encode the
labels: in replays of one run of Figure~\ref{fig:tf-contrast} and one of
Figure~\ref{fig:tf-blocks}, at the network's $50\%$ crossing, their training
$R^2$ is $0.38$ and $0.53$ and their held-out $R^2$ is $-0.42$ and $-0.49$.
The network's own logits, restricted to the same top 8 frequencies without
any fitting, lead its held-out accuracy by about as much as the
Fourier-filtered refit (Table~\ref{paper:tab:tf-readouts}), so the
restriction removes memorized terms rather than adding signal that the
network lacks. This matches what is known of the two kinds of solution: the
sufficiently wide bias-free one-hidden-layer network with quadratic
activation and maximum $L_{2,3}$-margin on the full grid uses every frequency~\cite{morwani2024margin}, and the one-layer transformer
of~\cite{nanda2023} uses five.
\end{additions}

\begin{table}[tbp]
\centering\footnotesize\setlength{\tabcolsep}{3pt}\addedcolor
\caption{\added{Readout leads over the network on the no-mirror pairs: the
network's crossing minus the readout's, in
thousands of updates (positive: the readout crosses first), at the
network's $0.5$ and $0.9$ crossings; medians, with the number of leading
runs in parentheses. Settings of Figure~\ref{fig:tf-primes}, and $p=61$
with the random split at $\phi=0.33$; $^\dagger$mirror-free split,
$\phi=0.55$. Network's logits, top 8: the network's own centred logits
projected, in class space, onto the 8 frequencies of the Fourier refit,
without fitting.}}
\label{paper:tab:tf-readouts}
\begin{tabular}{@{}llrrrrr@{}}
\toprule
Readout & Crossing & $p=61^\dagger$ & $p=61$ & $p=89$ & $p=113$ & $p=149$\\
\midrule
Refit, all frequencies & 0.5 & $-0.70$ (0) & $-0.68$ (0) & $-0.78$ (0) & $-1.31$ (0) & $-1.51$ (0)\\
 & 0.9 & $-0.68$ (0) & $-0.72$ (0) & $-1.26$ (0) & $-2.15$ (0) & $-1.86$ (0)\\
Fourier refit, top 8 & 0.5 & $+0.78$ (9) & $+0.80$ (10) & $+0.56$ (10) & $+0.86$ (10) & $+1.18$ (10)\\
 & 0.9 & $-2.46$ (0) & $-1.16$ (0) & $+0.45$ (10) & $+0.90$ (10) & $+1.21$ (10)\\
Network's logits, top 8 & 0.5 & $+0.79$ (9) & $+0.76$ (10) & $+0.52$ (10) & $+0.81$ (10) & $+1.06$ (10)\\
 & 0.9 & $-3.99$ (0) & $-1.70$ (0) & $+0.34$ (10) & $+0.74$ (10) & $+1.04$ (10)\\
Fourier refit, top 16 & 0.5 & $+1.16$ (10) & $+0.73$ (10) & $+0.41$ (10) & $+0.50$ (10) & $+0.65$ (10)\\
 & 0.9 & $+0.87$ (9) & $+0.61$ (10) & $+0.34$ (10) & $+0.47$ (10) & $+0.73$ (10)\\
Logistic probe & 0.5 & $+0.05$ (8) & $+0.02$ (8) & $+0.02$ (9) & $+0.03$ (7) & $+0.06$ (9)\\
 & 0.9 & $+0.18$ (9) & $+0.08$ (9) & $+0.04$ (9) & $+0.04$ (10) & $+0.19$ (10)\\
\midrule
Runs & & 10 & 10 & 10 & 10 & 10\\
\bottomrule
\end{tabular}
\end{table}

\begin{figure}[tbp]
\centering
\includegraphics[width=\textwidth]{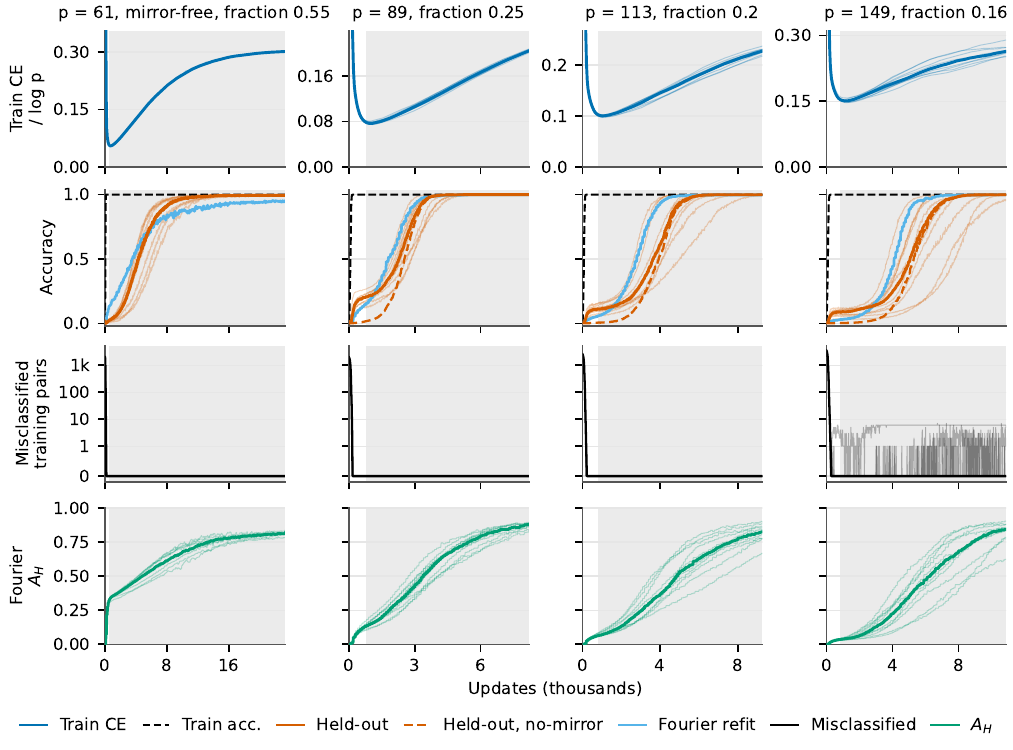}
\caption{\added{\textbf{One block across primes: every run (thin) and the
median (thick)} (gain $2\to1$, $\tau=5{,}000$; $\lambda_V=0.45$, $0.33$,
$0.275$ and $0.23$; 10 runs per prime, seeds $i=11$--$20$). At $p=61$ the
training set is the mirror-free split of Appendix~\ref{paper:app:tf-primes}
(fraction $0.55$), so no held-out pair has its mirror in it and the two
held-out accuracies coincide. At each $p$ the training fraction puts the
start of the held-out rise after the CE minimum. Rows as in
Figure~\ref{fig:tf-contrast}, plus the misclassified training pairs
(symmetric log scale): the pass rule requires none inside the window, and
the accuracy row cannot show a few pairs out of thousands. Gray: from the
median start to the median end of the windows on all held-out pairs. Each
column ends at its shortest run. Strict passes: 9/10, 10/10, 10/10 and
7/10; the run that fails at $p=61$ ends at held-out accuracy $0.9899$, and at
$p=149$ all 10 runs pass the loose criterion. With a random split at
fraction $0.33$, $p=61$ passes in 5/10, and all five failures come from the
mirror pairs (Figure~\ref{fig:tf-mirror}).}}
\label{fig:tf-primes}
\end{figure}

\begin{figure}[tbp]
\centering
\includegraphics[width=\textwidth]{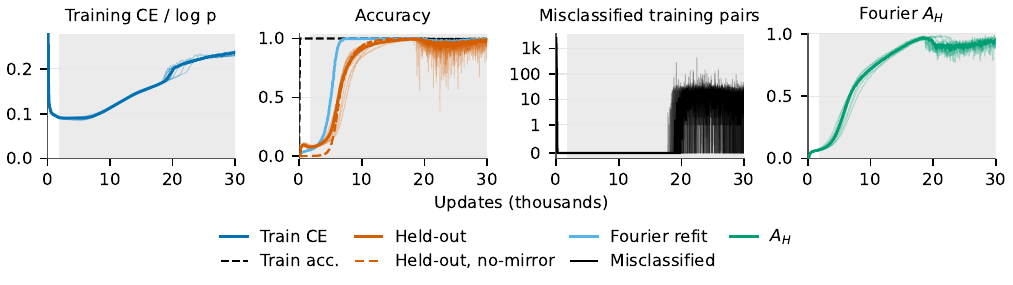}
\caption{\added{\textbf{Two blocks: every run (thin) and the median (thick)}
(training fraction $0.2$, $\lambda_V=0.275$, gain $2\to1$ with
$\tau=10{,}000$; 10 runs, seeds $i=11$--$20$: the two-block runs of
Figure~\ref{fig:tf-contrast}). Panels: the rows of
Figure~\ref{fig:tf-primes}. No run passes: the CE rises during the held-out
rise and no training pair is misclassified at any 20-update log inside the window until
17.8k--22.0k updates, when
pairs begin to flip and held-out accuracy falls back. The three other
two-block settings (5 runs each) and the runs of
Table~\ref{paper:tab:tf-blocks-late} are not shown.}}
\label{fig:tf-blocks}
\end{figure}

\begin{figure}[tbp]
\centering
\includegraphics[width=0.92\textwidth]{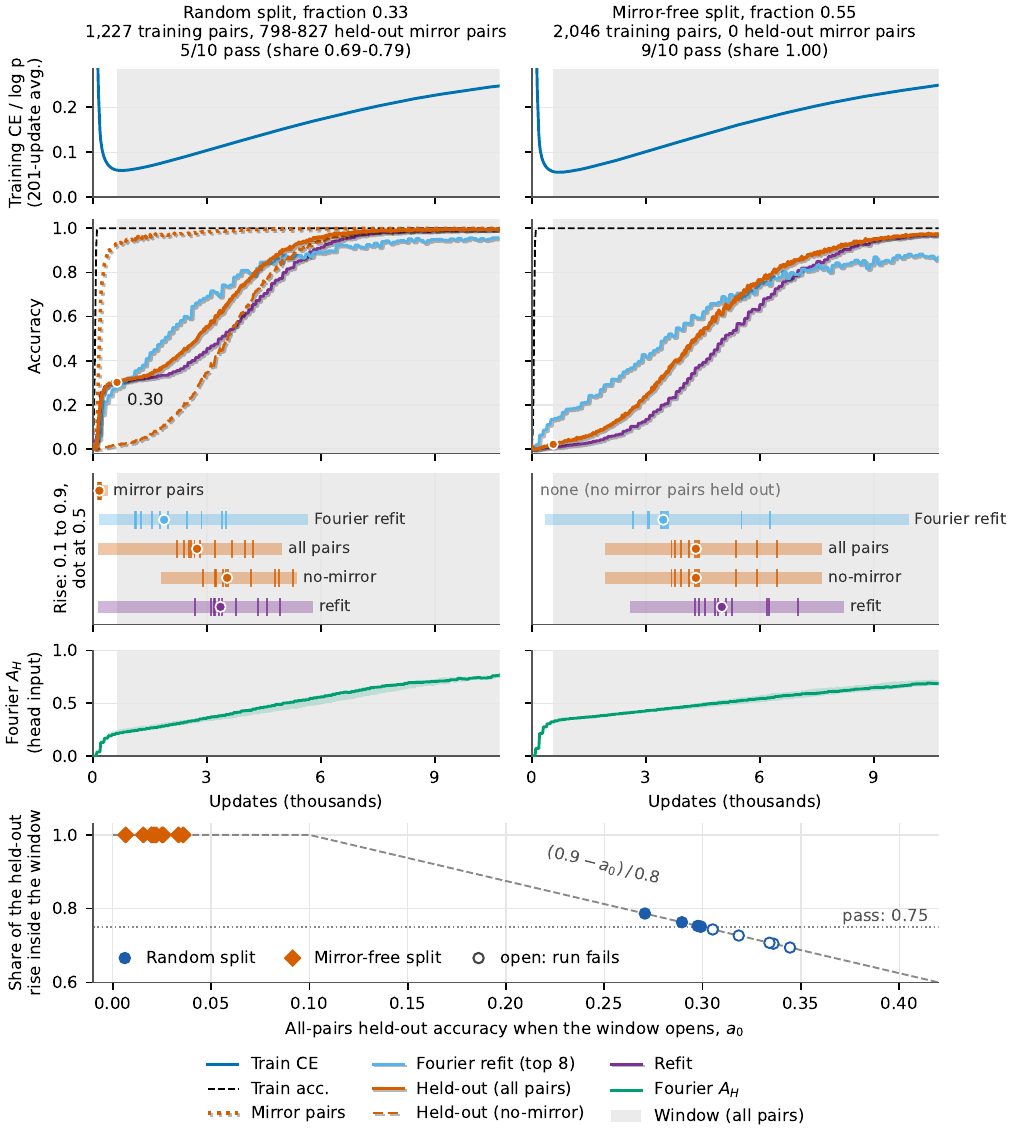}
\caption{\added{\textbf{Mirror pairs at $p=61$}: the random split at
$\phi=0.33$ (left; 798 to 827 of the 2{,}494 held-out pairs have their
mirror in the training set) and a mirror-free split at $\phi=0.55$ (right;
none), 10 runs each from
the same initializations. Rows: training CE; held-out accuracy on all pairs
(solid), the no-mirror pairs (dashed) and the mirror pairs (dotted), the
refit and the Fourier-filtered refit on all held-out pairs, and training
accuracy, medians (dot: median all-pairs accuracy when the window opens);
where each accuracy rises (bar from the median $0.1$ to the median $0.9$
crossing, dot at the median $0.5$ crossing, ticks at each run's $0.5$
crossing); $\AH$ of $\hat r$. Gray: the all-pairs window (median). Bottom:
the all-pairs share of the held-out rise inside the window against $a_0$,
the all-pairs accuracy when the window opens, one point per run. Every run
lies on the bound $(0.9-\max(a_0,0.1))/0.8$; open symbols fail (the
mirror-free run that fails, on its final accuracy, is hidden among the others
at $a_0=0.02$).}}
\label{fig:tf-mirror}
\end{figure}

\begin{figure}[tbp]
\centering
\includegraphics[width=0.92\textwidth]{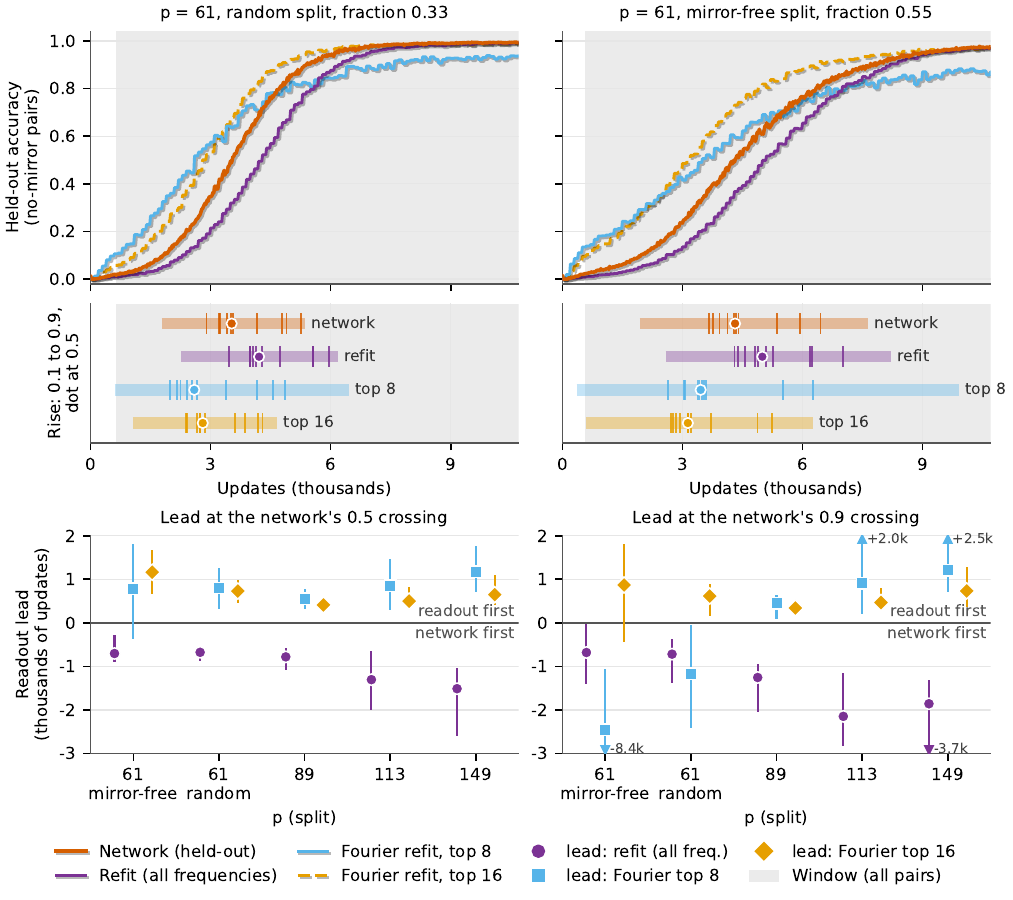}
\caption{\added{\textbf{Readouts against the network}, on the no-mirror
pairs. Top: $p=61$ with the random split ($\phi=0.33$) and the mirror-free
split ($\phi=0.55$): held-out accuracy of the network, of the refit (all
frequencies) and of the Fourier-filtered refit with the top 8 and the top
16 frequencies, medians over 10 runs; below, where each rises (bars and dots
as in Figure~\ref{fig:tf-mirror}). Bottom: each readout's lead over the
network at the network's $0.5$ and $0.9$ crossings (median over runs; bar
from the smallest to the largest run) at the settings of
Figure~\ref{fig:tf-primes} and at $p=61$ with the random split; above zero
the readout crosses first. The
top-16 readout is post hoc.}}
\label{fig:tf-readouts}
\end{figure}

\clearpage
\begin{proofrevision}
\section*{N\quad Cross-appendix notation}
\phantomsection
\makeatletter\def\@currentlabel{N}\makeatother
\label{app:notation}
\label{sec:notation-shared}\label{sec:notation-ridge}
\label{sec:notation-halfgrid}\label{sec:notation-random}
\addcontentsline{toc}{section}{N\ \ Cross-appendix notation}
\begingroup
Beyond Section~\ref{paper:setup}, the following notation is shared
across appendices; other notation is defined locally.
We use $j$ or $t$ for update indices, with $\mat{H}_j=\mat{H}(\mat{W}_j)$, $L_j=L(\mat{W}_j,\mat{V}_j)$
and $\mat{F}_j,\mat{G}_j$ the departing negative gradients of Section~\ref{paper:setup:updates}.
For an indexed pair $i=(a_i,b_i)$, $x_i=x_{a_i b_i}$ and $y_i=y_{a_i b_i}$;
$\mat{Y}[B]$ restricts rows and $\mat{Y}(x,:)$ selects one row. For matrices indexed on
the grid in both coordinates, subscripts such as $\mat{Y}_{DS}$ restrict rows
to $D$ and columns to $S$. Matrix inner products
are $\ip{\mat{U}}{\mat{V}}=\op{tr}(\mat{U}^*\mat{V})$, where $*$ is conjugate transpose
($\top$ for real matrices); $\one_B$ is the indicator of $B$.
$\norm{\cdot}_{\rm op}$, $\odot$, $\Var\paren{Z}$ and $S^c$
denote the operator norm, entrywise product, $\expct{|Z-\expct{Z}|^2}$ and the complement
in the stated ambient set. A positive semidefinite matrix's
supported inverse square root, written $\mat{S}^{+1/2}$
or $(\mat{S}^+)^{1/2}$, acts on its positive-eigenvalue support and is zero on its kernel.
For complete ridge comparison, let $\mat{U}$ be a full feature table, $S,D$ the
training and query rows, $\mat{T}$ the matrix of targets on $S$ (not the
full-grid target table), and $\gamma>0$; put
$q_U=\gamma\norm{\mat{U}_S}_F^2$ and, when $q_U>0$,
$\mat{Y}_U=\mat{U}_D(\mat{U}_S^*\mat{U}_S+q_U\Id)^{-1}\mat{U}_S^*\mat{T}$.
For a same-sized comparison table $\mat{V}$, define $q_V,\mat{Y}_V$ analogously and set
$e_S=\norm{\mat{V}_S-\mat{U}_S}_F$,
$e_D=\norm{\mat{V}_D-\mat{U}_D}_F$,
$t=\norm{\mat{T}}_{\rm op}$,
$N=\norm{\mat{T}}_F$ and
$L=\norm{\mat{U}_D(\mat{U}_S^*\mat{U}_S+q_U\Id)^{-1/2}}_{\rm op}$.
All columns enter these formulas unless stated otherwise; $t,N,L$ here are
not time, dictionary size or training CE.
The shared comparison is proved in
Appendix~\ref{sec:main}, Lemma~\ref{lem:shared-full-ridge-transfer}, and
instantiated in Appendix~\ref{sec:r62}, Lemma~\ref{lem:bwc-near}.

\readability{The two assembly results are the entry points for the
decay proofs in Appendix~\ref{app:decay}. In the half-grid construction,}
$\cB_M$ is the full-dimensional initial
ball at width $M$ (Definition~\ref{def:a-can});
Proposition~\ref{prop:aasm-continuation} selects this ball, the positive
head-step cap $\eta_0$ with $\eta_V=\eta_0/2$,
and the decay cap $d_0\in(0,1/2]$.
Appendices~\ref{sec:main} and~\ref{app:decay},
Corollary~\ref{cor:crd-halfgrid}, use the same ball and head step.
\readability{The positive-decay corollary uses part (a) of that
proposition, which preserves Theorem~\ref{paper:thm:half}(i)--(iii);
the late-refit part (b) assumes zero decay.}

For the random split, $h=2\alpha(1-\beta)>0$
is the fixed dictionary height,
$\cE_{\rm split}$ the initial-data event
(Definition~\ref{def:r62-split}), and
$\bar{\mat{V}}_0(\Train)$, $r_{\rm new}(\Train)$
the train-only center head and prescribed joint-ball radius
(\eqref{eq:r62head} and~\eqref{eq:r62ball}).
Appendix~\ref{sec:r62} constructs these
objects; Appendix~\ref{app:decay}, Definition~\ref{def:r62-decay} and
Corollary~\ref{cor:crd-random}, use them in the pre-sampling decay budget.
\readability{Here Proposition~\ref{lem:sc26-b-head-interface} transfers
all trajectory conclusions whenever each head increment has norm at
most $\eta$. The decay proof checks this hypothesis with data step
$\eta/2$, keeping $\eta$ in the prescribed radius and error budgets.}
Identical letters in the two constructions need not denote the same parameter.
Terminating decimals are exact rationals; existence, positivity and bounds
are proved where claimed, not inferred from defining formulas.
\endgroup
\end{proofrevision}

\section{Structured half-grid theorem: construction and estimates}\label{sec:main}

\begin{proofrevision}
\subsection{Statement and conventions}\label{sec:setup}
\end{proofrevision}
\proofrevtext{The half-grid data and common diagnostics are defined in the main paper;
Section~\ref{sec:notation-halfgrid} lists the interface reused in Appendix~\ref{app:decay};
the construction and proof quantities are defined as they enter below. The default range is prime $p\ge97$; Theorem~\ref{paper:thm:half}
uses $p\ge101$, and the late-refit module uses $p\ge1764$.
The split contains whole difference classes; all diagonal pairs are held out.
It is not an iid split or a guarantee for arbitrary balanced half-grids.
Here $\mat{P}_c=\mat{C}_p$ is class centering and $T_y=e_y-\one_p/p$ is its
centered class-$y$ target.}

\begin{proofrevision}
\paragraph{Public statement and appendix supplement.}
This appendix proves Theorem~\ref{paper:thm:half}. It also proves the following
additional bound, which is not part of that public statement: for $p\ge1764$,
the actual late complete train-only refit has $E_{\rm ref}(j)<121/400$ at every $j\in\cI_{\rm late}$.
The common theorem range is $p\ge101$, although some construction estimates
below hold for $p\ge97$; the public late-refit margin and accuracy guarantee
is on nonsquare differences, not diagonal inputs.
The scales are fixed in Definition~\ref{def:half-scales} and the clock and windows
in~\eqref{eq:clock}--\eqref{eq:acr-windows}; the
positive head step and full-dimensional initialization-ball radius are
existential, supplied by Proposition~\ref{prop:aasm-continuation}.
\end{proofrevision}

\begin{proofrevision}
\subsection{Claim I: construction and uniformly controlled reference dynamics}\label{sec:a-claim-dynamics}

\readability{The construction has two stages. We first bound the uniform source
and its Polar rows, using the mask counts and Fourier covariance blocks.
Those bounds then identify the static initializer and verify the exact
frozen-head trajectory in Lemma~\ref{lem:round15-core-only-bootstrap}.}

\subsubsection{The static construction}\label{sec:init}
\proofstepheading{Design, visible space and conventions}\label{sec:acr-design}
\label{sec:notation-halfgrid-data}
For the split~\eqref{eq:split}, let $Q$ be the nonzero squares in $\F_p$
and $d=(p-1)/2$ their number.
The visible input space has dimension $r=2p-1$, and the arithmetic dictionary
has size $N=p^2(p-1)$ and uniform mass $\varpi=1/N$.
\begin{definition}[Visible input space]\label{def:half-visible}
Set $E=(\one_{\rm left},-\one_{\rm right})^\perp$ and let $\mat{P}_E$ be its
orthogonal projector.
\end{definition}

The design identities needed below are:

\begin{lemma}[Training design]\label{lem:acr-design}
Every $x\in\Omega$ lies in $E$. Every $a\in\F_p$ is the left operand of exactly $d$
training inputs and the right operand of exactly $d$, and every label occurs exactly
$d$ times in $\Train$.
\end{lemma}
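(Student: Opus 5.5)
The three claims are direct counting arguments on the split~\eqref{eq:split}, with $Q$ the nonzero squares and $d=|Q|=(p-1)/2$. I will prove them in the order stated.

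\emph{Membership in $E$.} Write $x_{ab}=(e_a,e_b)$. Its inner product with $(\one_{\rm left},-\one_{\rm right})$ is $\langle e_a,\one_p\rangle-\langle e_b,\one_p\rangle=1-1=0$. Hence every $x\in\Omega$ lies in $E$ of Definition~\ref{def:half-visible}. This also confirms that $E$ has dimension $2p-1=r$ and contains all inputs.

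\emph{Operand counts.} Fix $a\in\F_p$. The training pairs with left operand $a$ are the $(a,b)$ with $b-a\in Q$. The map $b\mapsto b-a$ is a bijection of $\F_p$, so there are exactly $|Q|=d$ such $b$. Symmetrically, fix $b$. The pairs $(a,b)\in\Train$ correspond bijectively to $a=b-q$ with $q\in Q$, again $d$ of them.

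\emph{Label counts.} Fix a label $c$. The pairs with $a+b=c$ are parametrized by $a\in\F_p$ through $b=c-a$, and then $b-a=c-2a$. Since $p$ is odd, $2$ is invertible, so $a\mapsto c-2a$ is a bijection of $\F_p$. The number of $a$ with $c-2a\in Q$ is therefore $|Q|=d$. This is the only step that uses anything beyond translation invariance: it needs oddness of $p$, which guarantees that $2$ is invertible modulo $p$.

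The fact that $Q$ has exactly $(p-1)/2$ elements is standard: squaring is two-to-one on $\F_p^\times$ for odd $p$. The three claims use nothing else, and none of the steps is a real obstacle.
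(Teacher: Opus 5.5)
Your proposal is correct and matches the paper's proof step for step. Membership in $E$ follows from the single unit entry in each half, and the operand counts follow by translation. For the label count, the paper exhibits the unique pair $((y-q)/2,(y+q)/2)$ for each difference $q$ and label $y$; this is the inverse of your bijection $a\mapsto c-2a$, and both versions rest on $2$ being invertible modulo the odd prime $p$.
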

\begin{proof}
Each $x_{ab}$ has one unit entry in each half. The training inputs with left operand
$a$ are $(a,a+q)$, $q\in Q$, and similarly on the right. For a difference $q\in Q$ and a
label $y$ the unique pair is $((y-q)/2,(y+q)/2)$, so each label occurs once per $q$.
\end{proof}
\proofstepheading{The arithmetic core}\label{sec:acr-core}

\paragraph{Fixed construction scales.}
Write $s=\eta_W$ and $\lambda=\lambda_W$ for the
parameters of~\eqref{paper:optimizer}.
The half-grid learner has $\lambda_V=0$ except in Appendix~\ref{app:decay};
its positive head step is selected in Proposition~\ref{prop:aasm-continuation}.
\begin{definition}[Construction scales]\label{def:half-scales}\label{app:rates}
Fix a prime $p\ge101$, $\lambda>0$, and then choose $\epsilon>0$ sufficiently
small.
The construction scales are
\begin{equation}\label{eq:round15-scales}
\begin{gathered}
 R=(\lambda\sqrt r)^{-1},\qquad v=\epsilon/R,
 \qquad L_{\rm lift}=R\epsilon^{-3}.
\end{gathered}
\end{equation}
\end{definition}

\begin{definition}[Arithmetic masks and reflection pairs]\label{sec:notation-halfgrid-center}
Let $\mathsf I=\{0,1,\ldots,d-1\}\subset\F_p$. The core dictionary is
$\Lambda=\{(\ell,s,k):\ell\in\F_p^\times,\ s,k\in\F_p\}$, of size $N$.
For $l=(\ell,s,k)$ define
\begin{equation}\label{eq:acr-masks}
 A_{\ell,s}=\{a:\ell a+s\in\mathsf I\},\qquad
 B_{\ell,s,k}=k-A_{\ell,s}^{\,c},\qquad
 D_l(a,b)=\one_{A_{\ell,s}}(a)\one_{B_{\ell,s,k}}(b),\qquad k_l=k.
\end{equation}
Write $A_l,B_l$; the mask shift $s$ is unrelated to the step size $s$.
Every core row has mass $\pi_l=1/N$.
Let $\Lambda_c$ consist of the pairs
$\{(\ell,s,k),(-\ell,d-1-s,k)\}$ in $\Lambda$. Each pair $B\in\Lambda_c$
has the same mask and class; put $P_B=2/N$ and denote these shared objects
by $D_B,k_B$ (Lemma~\ref{lem:acr-dict}).
Let $\mat{C}_C\in\R^{\Lambda\times\Lambda_c}$ have entry $1/\sqrt2$ on the two
rows of each pair and zero elsewhere. Thus $\mat{C}_C^\top \mat{C}_C=\Id$.
\end{definition}

\begin{definition}[Inactive indicators and core heads]\label{def:half-heads}
For $l=(\ell,s,k)\in\Lambda$, let $\mathbf j_l\in\{0,1\}^{2p}$ indicate
the inactive left and right operand coordinates, and define the analytic
head matrix by
\begin{equation}\label{eq:acr-masses}
 \bar{\mat{V}}_{C,l}=-v\sqrt{\pi_l}\,T_{k_l}.
\end{equation}
Write $\iota(\ell,s,k)=(-\ell,d-1-s,k)$ for the reflection defining
the pairs in $\Lambda_c$. Thus the physical heads satisfy
$\bar{\mat{V}}_{C,c}=\bar{\mat{V}}_C\mat{C}_C$, with
$\bar{\mat{V}}_{C,c,B}=-v\sqrt{P_B}\,T_{k_B}$.
\end{definition}

\begin{lemma}[The rectangles]\label{lem:acr-dict}
For every $l\in\Lambda$:
\begin{enumerate}[label=(\roman*)]
\item $|A_l|=d$ and $|B_l|=(p+1)/2$; $\mathbf j_l$ has exactly $p$ ones, and for
$x\in\Omega$, $\mathbf j_l^\top x=0$ if $D_l(x)=1$ while $\mathbf j_l^\top x\ge1$ otherwise.
\item (Label exclusion) $D_l(x)=1$ implies $y^\star(x)\ne k_l$.
\item (Coverage) For $x\in\Omega$ and $k\ne y^\star(x)$, exactly $d(p+1)/2$ rows $l$ with
$k_l=k$ have $D_l(x)=1$.
\item (Class mass and head frame) $\sum_{l:k_l=k}\pi_l=1/p$ for every class $k$, and
$\bar{\mat{V}}_C\bar{\mat{V}}_C^\top=(v^2/p)\mat{P}_c$.
\item (Reflection pairs) $\iota$ is a fixed-point-free involution of $\Lambda$ with
$A_{\iota(l)}=A_l$, $B_{\iota(l)}=B_l$, $k_{\iota(l)}=k_l$ and
$\mathbf j_{\iota(l)}=\mathbf j_l$. Thus $\Lambda$ is the disjoint union of $N/2$
reflection pairs of rows with identical masks, classes, inactive indicators and masses.
\end{enumerate}
\end{lemma}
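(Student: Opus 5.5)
The plan is to verify the five items directly from the definitions~\eqref{eq:acr-masks}. The only real counting is in (iii). I read $\mathbf j_l$ as the indicator of the left coordinates $a\notin A_l$ together with the right coordinates $b\notin B_l$.

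\emph{(i) and (ii).} Since $\ell\ne0$, the map $a\mapsto\ell a+s$ is a bijection of $\F_p$, so $|A_{\ell,s}|=|\mathsf I|=d$. Then $|A^c_{\ell,s}|=p-d=(p+1)/2$. The set $B_l=k-A^c_{\ell,s}$ is a reflected translate of $A^c_{\ell,s}$, so it has the same size.

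Hence $\mathbf j_l$ has $(p-d)+(p-(p+1)/2)=p$ ones. For $x=x_{ab}$ we get $\mathbf j_l^\top x=\one_{a\notin A_l}+\one_{b\notin B_l}$. This is zero exactly when $D_l(x)=1$, and otherwise it is at least one.

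For (ii), suppose $D_l(x_{ab})=1$. Then $b\in B_l$, so $k-b\in A^c_l$. Since $a\in A_l$, this gives $a\ne k-b$, that is $y^\star(x)=a+b\ne k=k_l$.

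\emph{(iii), the main counting step.} Fix $x=x_{ab}$ and a class $k\ne a+b$, and put $\delta=k-b-a\ne0$. A row $(\ell,s,k)$ covers $x$ iff $a\in A_{\ell,s}$ and $k-b\notin A_{\ell,s}$. Writing $u=\ell a+s$, this reads
\[
u\in\mathsf I\quad\text{and}\quad u+\ell\delta\notin\mathsf I .
\]
The map $(\ell,s)\mapsto(u,c)=(\ell a+s,\ell\delta)$ is a bijection from $\F_p^\times\times\F_p$ onto $\F_p\times\F_p^\times$, because $\delta\ne0$. So we count the pairs $(u,c)$ with $c\ne0$, $u\in\mathsf I$ and $u+c\notin\mathsf I$.

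For each of the $d$ choices of $u\in\mathsf I$, there are exactly $(p+1)/2$ values $w=u+c\notin\mathsf I$. Each such $w$ automatically has $c=w-u\ne0$, since $u\in\mathsf I$ and $w\notin\mathsf I$. The total is therefore $d(p+1)/2$. This reparametrization is the only step needing care; the rest is bookkeeping.

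\emph{(iv).} For a fixed class $k$ there are $(p-1)p$ rows with $k_l=k$, each of mass $1/N=1/(p^2(p-1))$. Their total mass is $1/p$.

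For the head frame, expand
\[
\bar{\mat{V}}_C\bar{\mat{V}}_C^\top=v^2\sum_l\pi_lT_{k_l}T_{k_l}^\top
=\frac{v^2}{p}\sum_{k}T_kT_k^\top .
\]
The vectors $T_k$ are the columns of the symmetric idempotent $\mat{C}_p$, so $\sum_kT_kT_k^\top=\mat{C}_p\mat{C}_p^\top=\mat{C}_p=\mat{P}_c$.

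\emph{(v).} Since $\mathsf I=\{0,\dots,d-1\}$ is invariant under $u\mapsto d-1-u$, we have
\[
-\ell a+(d-1-s)\in\mathsf I\iff \ell a+s\in\mathsf I .
\]
Hence $A_{\iota(l)}=A_l$. The class $k$ is unchanged, so $B_{\iota(l)}=B_l$, and $\mathbf j_{\iota(l)}=\mathbf j_l$ follows because $\mathbf j_l$ depends only on these two masks.

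Applying $\iota$ twice returns $(\ell,s,k)$, so $\iota$ is an involution. A fixed point would need $\ell=-\ell$, which is impossible for $\ell\ne0$ and odd $p$. The involution is therefore fixed-point free, and $\Lambda$ splits into $N/2$ pairs with identical masks, classes, indicators and masses.
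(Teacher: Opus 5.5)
Your proof is correct and follows the paper's argument item by item. In (iii), your reparametrization $(\ell,s)\mapsto(\ell a+s,\ell\delta)$ is a concrete version of the paper's observation: an affine bijection $z\mapsto\ell z+s$ of $\F_p$ is determined by the images of the two distinct points $a$ and $k-b$, and every pair of distinct images occurs, which gives the count $|\mathsf I|\,|\mathsf I^c|=d(p+1)/2$.
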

\begin{proof}
(i) $z\mapsto\ell z+s$ is a bijection of $\F_p$, so $|A_l|=|\mathsf I|=d$ and
$|B_l|=|A_l^c|=p-d$; $\mathbf j_l$ has $(p-d)+d=p$ ones, and
$\mathbf j_l^\top x_{ab}=\one\{a\notin A_l\}+\one\{b\notin B_l\}$.

(ii) If $a\in A_l$ and $a+b=k_l$, then $b\in k_l-A_l$, which is disjoint from $B_l$.

(iii) Let $x=(a,b)$; then $a\ne k-b$. By the definition~\eqref{eq:acr-masks} of the masks, $D_l(x)=1$ with $k_l=k$ means
$\ell a+s\in\mathsf I$ and $\ell(k-b)+s\notin\mathsf I$. An affine bijection
$z\mapsto\ell z+s$ is determined by the images of the two distinct points $a,k-b$, and
every pair of distinct images arises, so there are exactly $|\mathsf I|\,|\mathsf I^c|=d(p+1)/2$ such rows.

(iv) Each class has $p(p-1)$ rows of mass $1/N$. Hence
\begin{equation*}
 \bar{\mat{V}}_C\bar{\mat{V}}_C^\top=v^2\sum_l\pi_lT_{k_l}T_{k_l}^\top=(v^2/p)\sum_kT_kT_k^\top=(v^2/p)\mat{P}_c.
\end{equation*}

(v) $z\in\mathsf I$ if and only if $d-1-z\in\mathsf I$, since $z\mapsto d-1-z$ maps
$\{0,\ldots,d-1\}$ onto itself and $\{d,\ldots,p-1\}$ onto
$\{d-p,\ldots,-1\}\equiv\{d,\ldots,p-1\}$. As $-\ell a+d-1-s=d-1-(\ell a+s)$, the rows $l$
and $\iota(l)$ have the same $A$, hence the same $B$ and $\mathbf j$. Clearly
$\iota\circ\iota$ is the identity, and $\iota(l)=l$ would require $\ell=-\ell$, impossible
for odd $p$.
\end{proof}

\begin{definition}[Uniform source and mask tables]\label{def:half-uniform-source}
Let $\mat{X}_\Train$ and $\mat{X}_{\rm grid}$ have rows $x^\top$ on the indicated input sets.
Write $g_l=D_l\in\R^\Omega$ and let
$\mat{D}_{\rm grid}\in\{0,1\}^{\Omega\times\Lambda}$ have columns $g_l$;
$\mat{D}_\Train$ is its training-row restriction.
The uniform source and its fixed physical compression are
\begin{equation}\label{eq:acr-F0}
 b_l=\frac1{np}\sum_{x\in\Train}D_l(x)x,\qquad
 \mat{F}_0\in\R^{\Lambda\times2p},\quad \mat{F}_{0,l}=\sqrt{\pi_l}\,b_l^\top,
 \quad \mat{\Sigma}_0=\mat{F}_0^\top \mat{F}_0,\qquad \mat{F}_{0,c}=\mat{C}_C^\top \mat{F}_0.
\end{equation}
By~\eqref{eq:acr-F0}, put
$\mat{F}_{\rm unif}:=\mat{F}_0=(\sqrt\varpi/(np))\mat{D}_\Train^\top \mat{X}_\Train$
and $\mat{Q}_{\rm QR}=\pol(\mat{F}_{\rm unif})$.
The positive-pair source has the same nonzero singular values as $\mat{F}_0$.
\end{definition}

\begin{definition}[Uniform anchor]\label{def:afr-anchor}
With $\mat{Q}_{\rm QR}$ just defined, let
$\mat{H}=\mat{H}^{\rm QR}=\mat{D}_{\rm grid}\odot(\mat{X}_{\rm grid}\mat{Q}_{\rm QR}^{\top})$.
\end{definition}

By~\eqref{eq:r15-core-source}, at uniform probabilities the physical core source is
$v\mat{F}_{0,c}$ and its analytic lift is $v\mat{F}_0$. The table $\mat{H}^{\rm QR}$ will supply
the Fourier anchor in Section~\ref{sec:fourier}.

\proofstepheading{Counts, covariance blocks and inverse roots}\label{sec:afr-roots}
\label{sec:notation-halfgrid-roots}
For the interval calculations, write $b=d$ and $c=(p+1)/2$ for the left-
and right-operand mask sizes, so the rectangle size is $u=bc=(p^2-1)/4$.
The base interval is $I=\mathsf I$ and its reflected complement is
$I'=-I^c=\{1,\ldots,c\}$.
\begin{definition}[Character operators and Fourier modes]\label{def:half-character}
Let $\chi_2$ be the quadratic character, with $\chi_2(0)=0$.
Let
$(\mat{C}_-)_{a,z}=\chi_2(z-a)$ and $\mat{C}_+=\mat{C}_-^\top$. Put $e(y)=e^{2\pi iy/p}$
and $\hat f(t)=\sum_af(a)e(-ta)$. Define the Hermitian multiplier $\mat{H}_\chi$
to have eigenvalue $\chi_2(t)$ on the Fourier vector
$(p^{-1/2}e(ta))_{a\in\F_p}$, and put
$\tau=\sum_a\chi_2(a)e(-a)$. Write
$e^L_t=((p^{-1/2}e(ta))_a,0)$ and
$e^R_t=(0,(p^{-1/2}e(ta))_a)$ in $\mathbb C^{2p}$, and
$e_c=(\one,\one)/\sqrt{2p}$.
Put $\mat{\mathsf T}=\bigl[\begin{smallmatrix}0&\bar\tau\\\tau&0\end{smallmatrix}\bigr]$.
\end{definition}

\begin{lemma}[Character operators]\label{lem:afr-char}
For every odd prime $p$, $|\tau|^2=p$ and $\bar\tau=\chi_2(-1)\tau$ (see also~\cite{irelandrosen1990}). Moreover
$\mat{C}_-=\bar\tau \mat{H}_\chi$, $\mat{C}_+=\tau \mat{H}_\chi$, $\mat{H}_\chi^2=\mat{P}_c$, $\mat{C}_\pm\one=0$,
$\mat{C}_+=\chi_2(-1)\mat{C}_-$ and $\mat{C}_-\mat{C}_+=\mat{C}_+\mat{C}_-=p\mat{P}_c$. In particular
$\chi_2(y)=(\tau/p)\sum_{m\ne0}\chi_2(m)e(my)$ for all $y$.
\end{lemma}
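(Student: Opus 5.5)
The plan is to prove everything by diagonalizing in the additive Fourier basis of $\F_p$. First I compute the Fourier transform of $\chi_2$ directly: for $t\ne0$, the substitution $a\mapsto ta$ gives
\begin{equation*}
 \sum_a\chi_2(a)e(-ta)=\chi_2(t)\sum_a\chi_2(a)e(-a)=\chi_2(t)\tau ,
\end{equation*}
since $\chi_2(t^{-1})=\chi_2(t)$. For $t=0$ the sum is $\sum_a\chi_2(a)=0$, because there are $d$ squares and $d$ nonsquares. The Gauss-sum facts then follow. Conjugating, $\bar\tau=\sum_a\chi_2(a)e(a)=\sum_a\chi_2(-a)e(-a)=\chi_2(-1)\tau$. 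For $|\tau|^2=p$, I would expand
\begin{equation*}
 |\tau|^2=\sum_{a,b\ne0}\chi_2(ab^{-1})e(b-a)
\end{equation*}
and substitute $a=bc$, which gives $\sum_{c\ne0}\chi_2(c)\sum_{b\ne0}e(b(1-c))$. The inner sum is $p-1$ when $c=1$ and $-1$ otherwise, so the total is $(p-1)-\sum_{c\ne0,1}\chi_2(c)=(p-1)+1=p$. Alternatively I can cite the standard result in \cite{irelandrosen1990}.

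Next comes the operator identity. $\mat{C}_-$ is a circulant (convolution) operator: $(\mat{C}_-f)(a)=\sum_z\chi_2(z-a)f(z)$. I apply it to the Fourier vector $f(z)=e(tz)$ and substitute $z=a+w$. This gives
\begin{equation*}
 (\mat{C}_-f)(a)=e(ta)\sum_w\chi_2(w)e(tw)=e(ta)\,\overline{\chi_2(t)\tau}=\chi_2(t)\bar\tau\, e(ta),
\end{equation*}
using that $\chi_2$ is real. So every Fourier vector is an eigenvector with eigenvalue $\bar\tau\chi_2(t)$, which is exactly $\mat{C}_-=\bar\tau\mat{H}_\chi$. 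Transposing, $\mat{C}_+=\mat{C}_-^\top$ has kernel $\chi_2(a-z)=\chi_2(-1)\chi_2(z-a)$, so $\mat{C}_+=\chi_2(-1)\mat{C}_-=\chi_2(-1)\bar\tau\mat{H}_\chi=\tau\mat{H}_\chi$.

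The remaining operator identities are immediate from the eigenvalues. $\mat{H}_\chi^2$ has eigenvalue $\chi_2(t)^2=\one\{t\ne0\}$, hence equals $\mat{P}_c$. $\mat{C}_\pm\one=0$ because the $t=0$ eigenvalue vanishes. Finally, $\mat{C}_-\mat{C}_+=\bar\tau\tau\mat{H}_\chi^2=p\mat{P}_c$, and the two factors commute because both are diagonal in the same basis.

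For the final expansion, I Fourier-invert $\chi_2$. We have
\begin{equation*}
 \chi_2(y)=\frac1p\sum_m\hat\chi_2(m)e(my)=\frac{\tau}{p}\sum_{m\ne0}\chi_2(m)e(my),
\end{equation*}
using $\hat\chi_2(m)=\chi_2(m)\tau$ and $\hat\chi_2(0)=0$ from the first step.

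No step is a real obstacle. The only point needing care is the sign and conjugation bookkeeping: $\mat{C}_-$ picks up $\bar\tau$ rather than $\tau$ because of the kernel orientation $z-a$, and this is what makes $\mat{C}_+=\chi_2(-1)\mat{C}_-$ consistent with $\bar\tau=\chi_2(-1)\tau$.
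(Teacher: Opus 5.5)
Your proof is correct and takes the same approach as the paper. Both diagonalize $\mat{C}_-$ on the Fourier vectors via the shift $z=a+w$ to get eigenvalue $\chi_2(t)\bar\tau$, derive $\bar\tau=\chi_2(-1)\tau$ from $a\mapsto-a$, evaluate $|\tau|^2$ through $\sum_{c}\chi_2(c)\sum_{b\ne0}e(b(1-c))$, and obtain the remaining identities by transposition and Fourier inversion (the first substitution is really $b=ta$, as your use of $\chi_2(t^{-1})=\chi_2(t)$ shows).
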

\begin{proof}
Write $e_t(a)=p^{-1/2}e(ta)$ for the normalized Fourier vector.
For $t\ne0$, substitution $y=z-a$ gives
$\mat{C}_-e_t=\chi_2(t)\bar\tau e_t$; $\mat{C}_-$ annihilates constants.
Also $\bar\tau=\chi_2(-1)\tau$ by $a\mapsto-a$, and
\begin{equation*}
 |\tau|^2=\sum_u\chi_2(u)\sum_{b\ne0}e(b(1-u))
 =(p-1)-\sum_{u\ne1}\chi_2(u)=p.
\end{equation*}
Here $\sum\chi_2=0$ by nonsquare dilation. Transpose and Fourier inversion
give all remaining identities.
In particular, $\mat{\mathsf T}$ is Hermitian, $\mat{\mathsf T}^2=p\Id$ and $\|\mat{\mathsf T}\|=\sqrt p$.
\end{proof}
Thus $\tau$ is real for $p\equiv1\pmod4$ and imaginary for $p\equiv3\pmod4$;
neither its sign nor a reality assumption is used.

For a fixed mask, write $\alpha=\one_{A_l}$ and $\beta=\one_{B_l}$ for
the left and right indicators, with intersection indicator
$L=\alpha\odot\beta$ and size $h=|A_l\cap B_l|$.
The count calculation uses the signed character count
$\Xi=\alpha^\top \mat{C}_-\beta$ and the number $n_l=E_l$ of active training inputs.
\begin{definition}[Training counts and interval powers]\label{def:half-counts}
The QR training-count vector $\mathbf n=npb_l$ has operand blocks
\begin{align*}
 \mathbf n^L(a)&=\alpha(a)\#\{z\in B_l:(a,z)\in\Train\},\\
 \mathbf n^R(z)&=\beta(z)\#\{a\in A_l:(a,z)\in\Train\}.
\end{align*}
Expectations $\E$ average over $l$ uniform on $\Lambda$.
Put $\mat{\cB}=\sqrt\varpi \mat{D}_{\rm grid}$ and
\begin{equation*}
 \omega_v=|\hat\one_I(v)|^2,\qquad
 \omega'_v=|\hat\one_{I'}(v)|^2,\qquad
 \Sigma_2=\sum_{v\ne0}\omega_v^2.
\end{equation*}
\end{definition}

\begin{lemma}[Exact counts and mask moments]\label{lem:afr-counts}
Let $p\ge5$ be prime.
\begin{enumerate}[label=(\roman*)]
\item The QR training counts are
\begin{equation*}
\begin{aligned}
 \mathbf n^L&=\tfrac12(c\alpha-L+\alpha\odot \mat{C}_-\beta),\\
 \mathbf n^R&=\tfrac12(b\beta-L+\beta\odot \mat{C}_+\alpha).
\end{aligned}
\end{equation*}
The source row is $\mat{F}_{{\rm unif},l}=(\sqrt\varpi/(np))\mathbf n^\top$.
\item Both count blocks sum to the number of active training inputs:
$\one^\top\mathbf n^L=\one^\top\mathbf n^R=n_l=(u-h+\Xi)/2$,
with $|\Xi|\le u/\sqrt p$.
\item Write $l\leftrightarrow(\mu,t_1,t_2)$ with $\mu=\ell^{-1}$, $t_1=-s$ and
$t_2=\ell k+s$. This is a bijection
$\Lambda\to\F_p^\times\times\F_p\times\F_p$, and $A_l=\mu(I+t_1)$, $B_l=\mu(I'+t_2)$
with $I'=\{1,\ldots,c\}=-I^c$. For fixed $\mu$, $t_1,t_2$ are independent and uniform.
\item With $\Var\paren{h}$ the variance under $\E$: $\expct{h}=u/p$, $\expct{h^2}=u/3$,
$\Var\paren{h}=u(p^2+3)/(12p^2)$, $\expct{\Xi}=\expct{h\Xi}=0$ and $\expct{\Xi^2}=p\Var\paren{h}$. Hence
$\expct{n_l}=u(p-1)/(2p)$ and $\Var\paren{n_l}=(p+1)\Var\paren{h}/4$.
\end{enumerate}
\end{lemma}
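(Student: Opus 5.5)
I will prove the four parts in order. Parts (i) and (ii) are direct counting from the quadratic-residue structure of $\Train$. Parts (iii) and (iv) come from averaging over the reparametrized dictionary.

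\textbf{(i) Training counts.} Fix $a\in A_l$. The count $\#\{z\in B_l:(a,z)\in\Train\}$ is the number of $z\in B_l$ with $z-a\in Q$. I write $\one_Q(y)=\tfrac12(\one_{y\ne0}+\chi_2(y))$. Summing over $z\in B_l$ gives
\[
\tfrac12\Bigl(|B_l|-\one_{B_l}(a)+\textstyle\sum_z\chi_2(z-a)\beta(z)\Bigr)=\tfrac12\bigl(c-\beta(a)+(\mat{C}_-\beta)(a)\bigr).
\]
Multiplying by $\alpha(a)$ yields $\mathbf n^L$, since $\alpha\odot\beta=L$. The right block is identical with $\chi_2(z-a)$ read as the $(z,a)$ entry of $\mat{C}_+=\mat{C}_-^\top$. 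The source-row identity is just~\eqref{eq:acr-F0} with $\pi_l=\varpi$.

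\textbf{(ii) Block sums.} Summing $\mathbf n^L$ gives $\tfrac12(bc-h+\alpha^\top\mat{C}_-\beta)=(u-h+\Xi)/2$. The right block gives the same value, because $\beta^\top\mat{C}_+\alpha=\alpha^\top\mat{C}_-\beta$. Both sums equal $n_l$, since each active training input is counted once from each side. For the bound on $\Xi$, Lemma~\ref{lem:afr-char} gives $\norm{\mat{C}_-}_{\rm op}=\sqrt p$ on $\one^\perp$ and $\mat{C}_-\one=0$. Hence
\[
|\Xi|\le\sqrt p\,\norm{\mat{P}_c\alpha}\,\norm{\mat{P}_c\beta},
\]
with $\norm{\mat{P}_c\alpha}^2=b(p-b)/p$ and $\norm{\mat{P}_c\beta}^2=c(p-c)/p$. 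Since $p-b=c$ and $p-c=b$, this equals $\sqrt p\,bc/p=u/\sqrt p$.

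\textbf{(iii) Reparametrization.} I check the claimed form directly. First, $A_l=\{a:\ell a+s\in I\}=\ell^{-1}(I-s)=\mu(I+t_1)$. Next, $B_l=k-A_l^c=k-\mu(I^c+t_1)$. Using $-I^c=I'$, this is $\mu(I'+\ell k-t_1)=\mu(I'+t_2)$ with $t_2=\ell k+s$. The map $(\ell,s,k)\mapsto(\mu,t_1,t_2)$ is clearly invertible, so for each fixed $\mu$ the pair $(t_1,t_2)$ is uniform on $\F_p^2$.

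\textbf{(iv) Moments.} This is the main computational step. I condition on $\mu$, write $h=|\mu(I+t_1)\cap\mu(I'+t_2)|$, and average over the uniform pair $(t_1,t_2)$; the value depends only on $t_2-t_1$. Then
\[
\expct{h}=bc/p=u/p.
\]
For the second moment I use Parseval. The quantity $h$ is the correlation $\sum_a\one_I(a)\one_{I'}(a+\delta)$, so
\[
\expct{h^2}=\frac1p\sum_\delta\Bigl(\textstyle\sum_a\one_I(a)\one_{I'}(a+\delta)\Bigr)^2=\frac1{p^2}\sum_v\omega_v\omega'_v .
\]
Since $I'$ is a reflected translate of $I^c$, I have $\omega'_v=\omega_v$ for $v\ne0$. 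The sum $\sum_v\omega_v^2$ is $p$ times the number of additive quadruples in an interval of length $b$, which has the closed form $(2b^3+b)/3$. Combining these should give $\expct{h^2}=u/3$, and then $\Var\paren{h}$ follows by subtraction. I expect the main obstacle to be this exact interval arithmetic with $b=(p-1)/2$ and $c=(p+1)/2$. I would verify it by a direct count of the overlap of two arcs as a function of the offset $\delta$, which is a piecewise-linear tent, and then sum squares.

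For $\Xi$, the diagonal substitution $a\mapsto\mu a$ multiplies $\chi_2$ by $\chi_2(\mu)$. Averaging over squares and nonsquares $\mu$ then cancels every quantity linear in $\Xi$, giving $\expct{\Xi}=\expct{h\Xi}=0$, because $h$ does not depend on $\mu$.

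For $\expct{\Xi^2}$ I expand $\Xi$ in Fourier modes using Lemma~\ref{lem:afr-char}. With Fourier-vector weights $\chi_2(t)\bar\tau$, this gives
\[
\expct{\Xi^2}=\frac{|\tau|^2}{p^2}\sum_{v\ne0}\omega_v\omega'_v=p\Var\paren{h},
\]
using the same sum as above. The statements about $n_l$ then follow linearly from $n_l=(u-h+\Xi)/2$ together with $\expct{\Xi}=\expct{h\Xi}=0$. In particular $\Var\paren{n_l}=\tfrac14(\Var\paren{h}+\expct{\Xi^2})=(p+1)\Var\paren{h}/4$.
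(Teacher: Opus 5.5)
Your proposal is correct. Parts (i)--(iii) match the paper's proof: expanding $1_Q=\tfrac12(1_{y\ne0}+\chi_2)$, centering $\Xi$ on both sides using $C_\pm$ annihilating constants, and the direct computation $B_l=\ell^{-1}(I'+\ell k+s)$.

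For (iv) you organize the argument differently. The paper builds the full-grid mask kernel $\mathcal K=\mathcal B\mathcal B^\top$ and diagonalizes it in the grid Fourier basis, with eigenvalues $\lambda(r,s)$ of \eqref{eq:afr-mask-kernel}. It writes $h=\langle g_l,d_0\rangle$ and $\Xi=\langle g_l,c_\chi\rangle$ and reads all three second moments off as quadratic forms of $\mathcal K$. There the cross moment vanishes because $\lambda(-s,s)$ is constant in $s\ne0$ and $\sum_{s\ne0}\chi_2(s)=0$.

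You instead condition on $\mu$ and reduce $h$ to a one-dimensional interval correlation in $\delta=t_1-t_2$. You dispose of $\mathbb E[\Xi]$ and $\mathbb E[h\Xi]$ by the factorization $\Xi=\chi_2(\mu)\,\Xi_0(t_1,t_2)$, with $\mu$ independent of $(t_1,t_2)$, $h$ independent of $\mu$, and $\mathbb E[\chi_2(\mu)]=0$. This symmetry argument is more transparent than the Fourier cancellation. Your Fourier expansion for $\mathbb E[\Xi^2]$ goes through once you average over $\delta$ given $\mu$, which makes the characters $e(-t\mu\delta)$ orthogonal; that step is implicit in your sketch and worth stating.

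Your route is more elementary and self-contained. The paper's route buys a reusable object: the same eigenvalues $\lambda(t-s,s)$ give the exact source covariance in Lemma~\ref{lem:afr-cov}, so (iv) comes almost for free once that machinery exists.

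Your additive-energy count $(2b^3+b)/3$ is the paper's autocorrelation sum $\sum_y(b-|y|)^2$ in another form. The arithmetic you left as ``should give'' does close: $p(2b^3+b)/3-b^4=(p^2-1)(p^2+3)/48=\Sigma_2$, so $\mathbb E[h^2]=(u^2+\Sigma_2)/p^2=u/3$.
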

\begin{proof}
\readability{Parts (i)--(iii) give the count formulas and a convenient
parameterization of the masks. For part (iv), we use that parameterization
to diagonalize the mask kernel and read off the moments of $h$ and $\Xi$.}

(i) Expand $1_Q(t)=[1-1_{t=0}+\chi_2(t)]/2$ in each QR count;
the source factor is~\eqref{eq:acr-F0}.

(ii) Sum the two count formulas,
using $\one^\top(\alpha\odot \mat{C}_-\beta)=\Xi=\one^\top(\beta\odot \mat{C}_+\alpha)$. By
Lemma~\ref{lem:afr-char}, $\Xi=(\mat{P}_c\alpha)^\top \mat{C}_-(\mat{P}_c\beta)$ with
$\norm{\mat{C}_-}_{\rm op}=\sqrt p$ and $\norm{\mat{P}_c\alpha}^2=\norm{\mat{P}_c\beta}^2=u/p$.

(iii) This
is direct: $A_l=\ell^{-1}(I-s)$ and $B_l=k-\ell^{-1}(I^c-s)=\ell^{-1}(I'+\ell k+s)$.

(iv) Put $\mat{\cB}=\sqrt\varpi \mat{D}_{\rm grid}$ and
$\mat{\cK}=\mat{\cB}\mat{\cB}^\top$. In the orthonormal grid basis
$\phi_{r,s}(a,z)=p^{-1}e(ra+sz)$, its eigenvalues are
\begin{equation}\label{eq:afr-mask-kernel}
 \lambda(r,s)=\frac1{p^2(p-1)}\sum_{\mu\ne0}\omega_{\mu r}\omega'_{\mu s},
 \qquad \omega_v=|\hat1_I(v)|^2,\quad \omega'_v=|\hat1_{I'}(v)|^2.
\end{equation}
Indeed, the coefficient of column $l$ is
$\sqrt\varpi\,p^{-1}\hat1_{A_l}(r)\hat1_{B_l}(s)$.
Conditional on $\mu$, the independent translations of $A_l,B_l$ from (iii)
annihilate all unequal frequency pairs.
The remaining diagonal average is~\eqref{eq:afr-mask-kernel}.
Here $\omega_0=b^2$, $\omega'_0=c^2$ and $\omega'_v=\omega_v$ for $v\ne0$.
Here $|y|$ is the absolute value of the representative of $y\in\F_p$
in $\{-(p-1)/2,\ldots,(p-1)/2\}$.
The cyclic autocorrelation $|I\cap(I+y)|=b-|y|$ gives, by Parseval,
\begin{readabilityblock}
\begin{equation}\label{eq:readable-e1-mask-fourth-moment}
 \Sigma_2=p\sum_y(b-|y|)^2-b^4=(p^2-1)(p^2+3)/48.
\end{equation}
\end{readabilityblock}
For the two grid functions $d_0(a,z)=\one[a=z]$ and
$c_\chi(a,z)=\chi_2(z-a)$, the Fourier coefficients at $(r,s)$ are
$\one[r+s=0]$ and $\one[r+s=0]\tau\chi_2(s)$. Since
$h=\ip{g_l}{d_0}$ and $\Xi=\ip{g_l}{c_\chi}$, their second moments are
the corresponding quadratic forms of $\mat{\cK}$.
Using $\lambda(0,0)=u^2/p^2$ and
$\lambda(-s,s)=\Sigma_2/[p^2(p-1)]$ for $s\ne0$ gives
\begin{readabilityblock}
\begin{equation}\label{eq:readable-e1-count-second-moments}
 \expct{h^2}=\frac{u^2+\Sigma_2}{p^2}=\frac u3,\qquad
 \expct{\Xi^2}=\frac{\Sigma_2}p,\qquad \expct{h\Xi}=0.
\end{equation}
\end{readabilityblock}
The first moments are $\expct{h}=u/p$, $\expct{\Xi}=0$, since $\expct{g_l}=u/p^2$.
\readability{Combining these first moments with
\eqref{eq:readable-e1-count-second-moments} gives
$\Var\paren{h}=\Sigma_2/p^2$, with $\Sigma_2$ evaluated in
\eqref{eq:readable-e1-mask-fourth-moment}. Substituting into
$n_l=(u-h+\Xi)/2$ gives the stated mean and variance of $n_l$.}
\end{proof}

\paragraph{Line-sum evaluations.}
\begin{definition}[Line-sum quantities]\label{def:half-linesums}
For $z\ne0,1$, define $\mathsf W(z)=\sum_{t\ne0}\omega_{tz}\omega_{t(1-z)}$
and $\mathsf J_\chi=\sum_{v\ne0}\chi_2(v)\omega_v$. In the following line sums,
every $z$ ranges over $\F_p\setminus\{0,1\}$:
\begin{equation*}
 \mathsf J=\frac{\sum_z\mathsf W(z)}{p^2(p-1)},\quad
 \mathsf R=\frac{\mathsf W(1/2)}{p^2(p-1)},\quad
 \mathsf U=\frac{\sum_z\chi_2(z)\mathsf W(z)}{p^2(p-1)},\quad
 \mathsf V=\frac{\chi_2(-1)\sum_z\chi_2(z(1-z))\mathsf W(z)}{p^2(p-1)}.
\end{equation*}
Put $\Sigma_1=\sum_{v\ne0}\omega_v$,
$g_0=4p/\sqrt{(p+1)(p^2+1)}$ and $\vartheta=2p/(p^2+1)$.
\end{definition}
Write $\mathsf j=g_0^2\mathsf J$, $\mathsf r_0=g_0^2\mathsf R$,
$\mathsf u=g_0^2\mathsf U$ and $\mathsf w=g_0^2\mathsf V$ for the normalized
total, central, one-character and two-character line sums, respectively.
Directly,
\begin{equation}\label{eq:afr-linesums}
\begin{aligned}
 \mathsf J&=\frac{(p+1)(p^2-3)}{24p^2},&
 \mathsf R&=\frac{\mathsf W(1/2)}{p^2(p-1)}=\frac{\Sigma_2}{p^2(p-1)}=\frac{(p+1)(p^2+3)}{48p^2},\\
 \mathsf V&=\frac{\chi_2(-1)\mathsf J_\chi^2-\Sigma_2}{p^2(p-1)}.
\end{aligned}
\end{equation}
The evaluations follow from $\sum_z\mathsf W(z)=\Sigma_1^2-\Sigma_2$ and
$\sum_z\chi_2(z(1-z))\mathsf W(z)=\mathsf J_\chi^2-\chi_2(-1)\Sigma_2$. The first is
obtained by writing the double sum
$\sum_{t\ne0}\sum_{m\ne0,t}\omega_m\omega_{t-m}=\sum_{m\ne0}\omega_m(\Sigma_1-\omega_m)$
with $m=tz$. The second is the same double sum with the weights
$\chi_2(m)\chi_2(t-m)$; the full convolution square of $\chi_2\omega$ is
$\mathsf J_\chi^2$, and its $t=0$ term is $\chi_2(-1)\Sigma_2$.

Consequently $\mathsf j=2(p^2-3)/(3(p^2+1))$ and
$\mathsf r_0=(p^2+3)/(3(p^2+1))$.

\begin{definition}[Covariance blocks]\label{def:half-covariance}
Set
\begin{equation}\label{eq:afr-scal}
\begin{aligned}
 \mathsf e_0&=\frac{(p+1)(5p^2-3)}{3(p-1)^2(p^2+1)},&
 \mathsf x&=\vartheta\Bigl(1-\frac{p+1}{(p-1)^2}\Bigr),&
 \mathsf e_1&=-\frac2{p-1}+\frac{\mathsf j+p\mathsf w}{(p-1)^2},\\
 \mathsf f&=\frac2{p-1}-\frac{2\mathsf u}{(p-1)^2},&
 \mathsf k_0&=\frac{2(1+\mathsf u)}{(p-1)^2},&
 \mathsf k_1&=\frac{2\vartheta}{(p-1)^2}.
\end{aligned}
\end{equation}
\begin{equation}\label{eq:afr-ED}
 \mat{\mathsf E}=\begin{bmatrix}\mathsf e_0+\mathsf x&\mathsf e_1\\
 \mathsf e_1&\mathsf e_0-\mathsf x\end{bmatrix},\qquad
 \mat{\mathsf D}=\begin{cases}\mathsf f\,\mat{\mathsf T},&\chi_2(-1)=-1,\\
 \tau\begin{bmatrix}-\mathsf k_0+\mathsf k_1&\mathsf f\\
 \mathsf f&-\mathsf k_0-\mathsf k_1\end{bmatrix},&\chi_2(-1)=1.\end{cases}
\end{equation}
Define $\mat{M}_\pm=g_0^{-2}(\Id+\mat{\mathsf E}\pm\mat{\mathsf D})$ and $c_0=\expct{n_l^2}/b^2$.
For the covariance computation, $\mat{\cP}_Q$ denotes multiplication by
$\one_Q(z-a)$.
\end{definition}

\begin{lemma}[Exact source covariance]\label{lem:afr-cov}
Let $p\ge5$ be prime. For $t\ne0$ with $\varsigma=\chi_2(t)$, the matrix of
$p^5\mat{F}_{\rm unif}^\top \mat{F}_{\rm unif}$ on the ordered pair $(e^L_t,e^R_t)$ is
\begin{equation}\label{eq:afr-Mform}
 \mat{M}_\varsigma=g_0^{-2}(\Id+\mat{\mathsf E}+\varsigma\mat{\mathsf D}).
\end{equation}
On constants, $p^5\mat{F}_{\rm unif}^\top \mat{F}_{\rm unif}e_c=2c_0e_c$ and
$\mat{F}_{\rm unif}(\one,-\one)=0$, where
\begin{equation}\label{eq:afr-c0}
 c_0=\frac{\expct{n_l^2}}{b^2}=\frac{u^2}{p^2}+\frac{(p+1)^2(p^2+3)}{48p^2(p-1)}\ \ge\ \frac{u^2}{p^2}.
\end{equation}
Equivalently, $\mat{F}_{\rm unif}^\top \mat{F}_{\rm unif}=p^{-5}[\mat{M}_+\otimes\mat{\Pi}_++\mat{M}_-\otimes\mat{\Pi}_-
+2c_0e_ce_c^\top]$ with $\mat{\Pi}_\pm=(\mat{P}_c\pm \mat{H}_\chi)/2$. Here $\mat{M}\otimes\mat{\Pi}_\pm$ acts by $\mat{M}$
on the operand index and by $\mat{\Pi}_\pm$ on the input coordinate.
\end{lemma}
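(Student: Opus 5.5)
We need to compute $\mat{F}_{\rm unif}^\top\mat{F}_{\rm unif}=(\varpi/(np)^2)\sum_l\mathbf n_l\mathbf n_l^\top$, using the count formula of Lemma~\ref{lem:afr-counts}(i), and diagonalize it in the Fourier basis. Since $n=p(p-1)/2$ and $\varpi=1/N=1/(p^2(p-1))$, the prefactor is $4/(p^6(p-1)^3)$. Averaging over $l$ turns the sum into $N\,\E[\mathbf n\mathbf n^\top]$, so the task is to compute the second-moment matrix $\E[\mathbf n\mathbf n^\top]$ with $l$ uniform on $\Lambda$. I will use the parameterization of Lemma~\ref{lem:afr-counts}(iii): conditional on $\mu$, the masks are independent uniform translates $A=\mu(I+t_1)$ and $B=\mu(I'+t_2)$.

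\textbf{Invariance and block structure.} First I check that the averaged matrix commutes with simultaneous translation $(a,z)\mapsto(a+w,z+w)$ and with dilation $(a,z)\mapsto(\mu a,\mu z)$ by squares $\mu$. Translation is invariant because $\Train$ is defined by $z-a$. Square dilation is invariant because it preserves $Q$ and permutes $\Lambda$. Translation invariance forces the operator to be block diagonal on the pairs $(e^L_t,e^R_t)$, with $e_c$ and $(\one,-\one)$ handled separately. Invariance under square dilation makes the $2\times2$ block depend on $t$ only through $\varsigma=\chi_2(t)$.

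\textbf{The constant directions.} For $(\one,-\one)$, the identity $\one^\top\mathbf n^L=\one^\top\mathbf n^R$ of Lemma~\ref{lem:afr-counts}(ii) gives $\mat{F}_{\rm unif}(\one,-\one)=0$ directly. For $e_c$, the same identity shows that the projection of $\mathbf n$ onto $e_c$ is $2n_l/\sqrt{2p}$. Hence the $e_c$ entry is $(2/p)\E[n_l^2]$ up to the prefactor. Part (iv) of Lemma~\ref{lem:afr-counts} then gives $\E[n_l^2]=\E[n_l]^2+\Var(n_l)$, which I expand into the closed form~\eqref{eq:afr-c0} and normalize by $b^2$ to obtain $c_0$.

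\textbf{The nonconstant blocks.} For $t\ne0$, I write each count block as a sum of three terms: $c\alpha$ (respectively $b\beta$), $-L$, and the character term $\alpha\odot\mat{C}_-\beta$. Their Fourier coefficients at $t$ are convolutions of the $\hat\one_A$, $\hat\one_B$ coefficients. I use $\mat{C}_-=\bar\tau\mat{H}_\chi$ from Lemma~\ref{lem:afr-char}, so that $\mat{C}_-\beta$ carries the factor $\chi_2(\cdot)$ on each frequency. Averaging over $t_1,t_2$ kills unequal frequency pairs, exactly as in the proof of~\eqref{eq:afr-mask-kernel}. What remains are sums over $\mu$ and an intermediate frequency $m$ of products $\omega_{\mu m}\omega_{\mu(t-m)}$, possibly weighted by $\chi_2(m)$, $\chi_2(t-m)$ or both. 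Substituting $m=tz$ converts these into the line sums $\mathsf W(z)$, and their weighted versions, of Definition~\ref{def:half-linesums}. The rank-one terms $c\alpha$ and $b\beta$ contribute the identity part together with $\vartheta$-type corrections through $\omega_t$ averaged over $\mu$, using $\sum_{v\ne0}\omega_v=\Sigma_1$. Cross terms with $L$ produce $\mathsf J$ and $\mathsf R$. Terms with a single character produce $\mathsf U$ times $\tau$, and their coefficient becomes $\chi_2(t)$ after dilation. The double-character term produces $\mathsf V$, with $|\tau|^2=p$. The evaluations~\eqref{eq:afr-linesums} then give the stated entries $\mathsf e_0,\mathsf x,\mathsf e_1,\mathsf f,\mathsf k_0,\mathsf k_1$.

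\textbf{Parity of $p$ mod 4 and the main obstacle.} The split into the two cases $\chi_2(-1)=\pm1$ in the definition of $\mat{\mathsf D}$ comes from $\mat{C}_+=\chi_2(-1)\mat{C}_-$ and $\bar\tau=\chi_2(-1)\tau$. These relations decide whether the off-diagonal $LR$ entries combine symmetrically or antisymmetrically with the diagonal character terms. Finally, $\Pi_\pm$ projects onto the nonzero frequencies with $\chi_2(t)=\pm1$, which assembles the stated tensor form. The main obstacle is the bookkeeping in this nonconstant-block step: tracking the conjugations and reflections $t\mapsto-t$ (the reflection $I'=-I^c$ and the convention $\omega'_v=\omega_v$) so that each character term lands on the correct entry with the correct power of $\tau$. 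I would first verify these placements numerically at $p=5$ and $p=7$, one prime from each residue class mod 4, before writing out the symbolic reduction.
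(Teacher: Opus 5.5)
Your proposal is correct and follows essentially the paper's route: averaging over translations diagonalizes the mask kernel, and the nonconstant $2\times2$ blocks reduce, via $s=tz$, to the line sums $\mathsf W(z)$ and their character-weighted versions $\mathsf J,\mathsf U,\mathsf V$; on constants, $(\one,-\one)$ is null and $e_c$ has eigenvalue $2\expct{n_l^2}/b^2=2c_0$. The paper packages your three-term count expansion as the single transform $\hat q(0)=b$, $\hat q(s)=(-1+\tau\chi_2(s))/2$ of $\one_Q$ and uses the precomputed eigenvalues $\lambda(r,s)$, so each block is $b^{-2}\sum_s\lambda(t-s,s)[\cdots]$; in your bookkeeping, note that the identity and $\vartheta$ parts come from the two exceptional terms $s=0$ and $s=t$, with $\lambda(t,0)=bc^3/[p^2(p-1)]$ and $\lambda(0,t)=b^3c/[p^2(p-1)]$, and that these terms also receive contributions from the $-L$ and character pieces, not only from the rank-one pieces $c\alpha$ and $b\beta$.
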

\begin{proof}
\readability{We compute the covariance one Fourier plane at a time.
The mask-kernel eigenvalues in~\eqref{eq:afr-mask-kernel} supply the
weights; the two exceptional terms $s=0,t$ and the remaining line sums
then give the block in~\eqref{eq:afr-Mform}.}

Write $q=\one_Q$ and
$\mat{F}_{\rm unif}=(np)^{-1}\mat{\cB}^\top\mat{\cP}_Q\mat{X}_{\rm grid}$.
The indicator has $\hat q(0)=b$ and
$\hat q(s)=(-1+\tau\chi_2(s))/2$ for $s\ne0$.
The two masked operand modes have grid Fourier coefficients
$p^{-1/2}\one[r+s=t]\hat q(s)$ and
$p^{-1/2}\one[r+s=t]\hat q(s-t)$. By~\eqref{eq:afr-mask-kernel},
different $t$ do not mix, and their Gram is
\begin{equation}\label{eq:afr-source-gram}
 \mat{M}_t=\frac1{b^2}\sum_s\lambda(t-s,s)
 \begin{bmatrix}
 |\hat q(s)|^2&\overline{\hat q(s)}\hat q(s-t)\\
 \overline{\hat q(s-t)}\hat q(s)&|\hat q(s-t)|^2
 \end{bmatrix}.
\end{equation}
Here $p^5/(n^2p^3)=b^{-2}$ accounts for all normalizations.

Put $L=bc^3/[p^2(p-1)]$ and $R'=b^3c/[p^2(p-1)]$.
For $t\ne0$, the terms $s=0,t$ use $\lambda(t,0)=L$,
$\lambda(0,t)=R'$, with $L+R'=2g_0^{-2}$ and
$L-R'=2g_0^{-2}\vartheta$.
The remaining terms, indexed by $s=tz$, have weight
$\mathsf W(z)/[p^2(p-1)]$. Their character sums are precisely
$\mathsf J,\mathsf U,\mathsf V$ of Definition~\ref{def:half-linesums}.
For example, using $\mathsf W(z)=\mathsf W(1-z)$ and
$\tau\chi_2(-1)=\bar\tau$ gives
\begin{equation*}
 (\mat{M}_t)_{LR}=\frac{L+R'}{2b}(\chi_2(t)\bar\tau-1)
 +\frac{\mathsf J+p\mathsf V-2\chi_2(t)\bar\tau\mathsf U}{4b^2}.
\end{equation*}
For the diagonal entries use
$|\hat q(s)|^2=(p+1-2\op{Re}\tau\,\chi_2(s))/4$ for $s\ne0$.
Substitution of~\eqref{eq:afr-scal}--\eqref{eq:afr-ED} now gives
$\mat{M}_t=g_0^{-2}(\Id+\mat{\mathsf E}+\chi_2(t)\mat{\mathsf D})$ in both Gauss phases.
At $t=0$ the two columns agree, each Gram entry is
$\expct{n_l^2}/b^2=c_0$ by Lemma~\ref{lem:afr-counts}(iv), and the constant
block is $c_0\one\one^\top$. Its eigenvalues are $2c_0$ and zero.
\end{proof}

\begin{lemma}[Character-sum envelope]\label{lem:afr-scalars}
For every prime $p\ge5$, $|\mathsf u|\le\mathsf j<2/3$ and
$-\mathsf r_0\le\mathsf w\le\mathsf j$.
If $\chi_2(-1)=-1$, then $\mathsf J_\chi=0$ and $\mathsf V=-\mathsf R$, so
$\mathsf w=-\mathsf r_0$.
\end{lemma}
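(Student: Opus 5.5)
The plan is to read everything off the explicit line-sum evaluations~\eqref{eq:afr-linesums} and the definitions in Definition~\ref{def:half-linesums}, using only two structural facts. First, every weight $\omega_v=|\hat\one_I(v)|^2$ is real and nonnegative, so $\mathsf W(z)=\sum_{t\ne0}\omega_{tz}\omega_{t(1-z)}\ge0$ for every $z\ne0,1$. Second, since $\one_I$ is real, $\hat\one_I(-v)=\overline{\hat\one_I(v)}$, so $\omega_{-v}=\omega_v$. The normalizations $\mathsf j=g_0^2\mathsf J$, $\mathsf u=g_0^2\mathsf U$, $\mathsf w=g_0^2\mathsf V$, $\mathsf r_0=g_0^2\mathsf R$ share the positive factor $g_0^2$. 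It therefore suffices to prove the corresponding inequalities for $\mathsf J,\mathsf U,\mathsf V,\mathsf R$.

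\emph{Step 1 (upper envelopes).} Since $\mathsf W\ge0$ and $|\chi_2(z)|,|\chi_2(z(1-z))|\le1$, the triangle inequality applied term by term to the sums defining $\mathsf U$ and $\mathsf V$ gives $|\mathsf U|\le\mathsf J$ and $|\mathsf V|\le\mathsf J$. Hence $|\mathsf u|\le\mathsf j$ and $\mathsf w\le\mathsf j$. The strict bound $\mathsf j<2/3$ is immediate from the closed form $\mathsf j=2(p^2-3)/(3(p^2+1))$ stated after~\eqref{eq:afr-linesums}, since $p^2-3<p^2+1$.

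\emph{Step 2 (lower bound on $\mathsf w$).} I would use the closed form $\mathsf V=(\chi_2(-1)\mathsf J_\chi^2-\Sigma_2)/(p^2(p-1))$ from~\eqref{eq:afr-linesums}. Because $\chi_2$ and $\omega$ are real, $\mathsf J_\chi=\sum_{v\ne0}\chi_2(v)\omega_v$ is real and $\mathsf J_\chi^2\ge0$.
\begin{itemize}
\item If $\chi_2(-1)=1$, dropping the nonnegative term gives $\mathsf V\ge-\Sigma_2/(p^2(p-1))=-\mathsf R$, using the evaluation $\mathsf R=\Sigma_2/(p^2(p-1))$.
\item If $\chi_2(-1)=-1$, pair $v$ with $-v$ in $\mathsf J_\chi$. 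Then $\chi_2(-v)\omega_{-v}=-\chi_2(v)\omega_v$, and since $p$ is odd no $v\ne0$ is fixed by $v\mapsto-v$. So the terms cancel in pairs and $\mathsf J_\chi=0$. Substituting gives $\mathsf V=-\mathsf R$ exactly, hence $\mathsf w=-\mathsf r_0$.
\end{itemize}
In both cases $\mathsf w\ge-\mathsf r_0$.

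There is no real obstacle here. The only point needing care is confirming that the closed forms in~\eqref{eq:afr-linesums} already established are the ones used: the identity $\sum_z\chi_2(z(1-z))\mathsf W(z)=\mathsf J_\chi^2-\chi_2(-1)\Sigma_2$ and the equality $\mathsf W(1/2)=\Sigma_2$. After that, the argument is just the reality of $\mathsf J_\chi$ (so its square is nonnegative) and the parity cancellation.
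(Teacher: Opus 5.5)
Your proposal is correct and follows the paper's proof essentially step by step. Nonnegativity of $\mathsf W(z)$ gives $|\mathsf U|,|\mathsf V|\le\mathsf J$. Pairing $v$ with $-v$, using $\omega_{-v}=\omega_v$, makes $\mathsf J_\chi$ vanish when $\chi_2(-1)=-1$. When $\chi_2(-1)=1$, dropping $\mathsf J_\chi^2\ge0$ from the closed form of $\mathsf V$ gives $\mathsf V\ge-\mathsf R$. You also spell out $\mathsf j<2/3$ from $\mathsf j=2(p^2-3)/(3(p^2+1))$, which the paper leaves implicit.
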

\begin{proof}
The nonnegative weights $\mathsf W(z)$ give $|\mathsf U|,|\mathsf V|\le\mathsf J$.
In the imaginary phase, pairing $x,-x$ in $\mathsf J_\chi$ gives zero since
$\omega_{-x}=\omega_x$; now use~\eqref{eq:afr-linesums}.
In the real phase the same identity gives
$\mathsf V=(\mathsf J_\chi^2-\Sigma_2)/[p^2(p-1)]\ge-\mathsf R$.
\end{proof}

\begin{definition}[Supported inverse-root quantities]\label{def:half-inverse-roots}
The inverse roots $\mat{M}_\pm^{-1/2}$ exist for $p\ge97$ by
Lemma~\ref{lem:afr-rootbounds}(i). Define
\begin{equation*}
 \mat{G}_{\rm avg}=\tfrac12\bigl(\mat{M}_+^{-1/2}+\mat{M}_-^{-1/2}\bigr),\qquad
 \mat{G}_{\rm diff}=\tfrac12\bigl(\mat{M}_+^{-1/2}-\mat{M}_-^{-1/2}\bigr),\qquad
 \mat{G}_{\rm const}=\frac{1}{2\sqrt{2c_0}}\begin{bmatrix}1&1\\1&1\end{bmatrix}.
\end{equation*}
Put $\mat{A}_e=\mat{G}_{\rm avg}/g_0$, $\mat{A}_o=\mat{G}_{\rm diff}/g_0$, and $f_p=\sqrt p\,g_0/b$.
\end{definition}

For $p\ge97$ both blocks $\mat{M}_\pm$ are positive definite
(Lemma~\ref{lem:afr-rootbounds}(i) below). The inverse root of
$\mat{F}_{\rm unif}^\top \mat{F}_{\rm unif}$ on its support $E$, with
$\mat{P}_0=\one\one^\top/p$, is therefore
\begin{equation}\label{eq:afr-root}
 (\mat{F}_{\rm unif}^\top \mat{F}_{\rm unif})^{+1/2}=p^{5/2}\bigl[\mat{G}_{\rm avg}\otimes \mat{P}_c+\mat{G}_{\rm diff}\otimes \mat{H}_\chi
 +\mat{G}_{\rm const}\otimes \mat{P}_0\bigr].
\end{equation}
The quantities $\mat{G}_{\rm avg},\mat{G}_{\rm diff},\mat{G}_{\rm const},\mat{A}_e,\mat{A}_o,f_p$ are defined in
Definition~\ref{def:half-inverse-roots}; only the structural constant-minus direction is null.
Multiplying the QR source row in Lemma~\ref{lem:afr-counts}(i) by
the supported inverse root~\eqref{eq:afr-root} gives
\begin{equation}\label{eq:afr-rows}
\begin{split}
 \mat{Q}_{{\rm QR},l}^\top&=\sqrt\varpi\tfrac{\sqrt p}{b}
 \bigl[\mat{G}_{\rm avg}\otimes \mat{P}_c+\mat{G}_{\rm diff}\otimes \mat{H}_\chi\\
 &\hspace{2em}+\mat{G}_{\rm const}\otimes \mat{P}_0\bigr]\mathbf n .
\end{split}
\end{equation}
These operators are real. If $\chi_2(-1)=-1$ then $\mat{M}_-=\overline{\mat{M}_+}$, so
$\mat{G}_{\rm avg}$ is real symmetric and $\mat{G}_{\rm diff}$ is imaginary Hermitian. Also
$\mat{\mathsf T}\otimes \mat{H}_\chi$ acts as $(w,v)\mapsto(\mat{C}_-v,\mat{C}_+w)$.

\begin{lemma}[Inverse-root bounds]\label{lem:afr-rootbounds}
Let $p\ge97$ be prime.
\begin{enumerate}[label=(\roman*)]
\item $\tfrac{11}{15}g_0^{-2}\Id\prec \mat{M}_\pm\prec\tfrac{19}{15}g_0^{-2}\Id$.
\item $\norm{\mat{A}_e-\Id}<3/p$. The matrix $\mat{A}_e=[a_{ij}]$ is real symmetric, with
$1-3/p<a_{11},a_{22}<1+3/p$ and $|a_{12}|=|a_{21}|<3/p$.
\item $\norm{\mat{A}_o+\mat{\mathsf T}/p}<15/(2p^{3/2})$; hence $\norm{\mat{A}_o}<(1+15/(2p))/\sqrt p$.
\item $f_p<8/(p-1)$ and $c_0\ge u^2/p^2$.
\end{enumerate}
\end{lemma}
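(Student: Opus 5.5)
The plan is to reduce every claim to explicit $2\times2$ scalar estimates, using the closed forms already obtained. By \eqref{eq:afr-Mform} and \eqref{eq:afr-ED}, $g_0^2\mat{M}_\pm-\Id=\mat{\mathsf E}\pm\mat{\mathsf D}$. So (i) follows once we show $\norm{\mat{\mathsf E}}+\norm{\mat{\mathsf D}}<4/15$ for $p\ge97$.

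For $\mat{\mathsf E}$, a real symmetric matrix, the bound is $\norm{\mat{\mathsf E}}\le|\mathsf e_0|+\sqrt{\mathsf x^2+\mathsf e_1^2}$. Each entry of \eqref{eq:afr-scal} is then estimated separately:
\begin{itemize}
\item $\mathsf e_0\approx 5/(3p)$;
\item $|\mathsf x|\le\vartheta<2/p$;
\item $|\mathsf e_1|\le 2/(p-1)+(\mathsf j+p|\mathsf w|)/(p-1)^2$.
\end{itemize}
For the last one we use Lemma~\ref{lem:afr-scalars}, which gives $|\mathsf w|\le\mathsf j<2/3$, hence $|\mathsf e_1|\lesssim 8/(3p)$.

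For $\mat{\mathsf D}$ we split into the two Gauss phases.
\begin{itemize}
\item Imaginary phase: $\norm{\mat{\mathsf D}}=|\mathsf f|\sqrt p$ with $|\mathsf f|\le 2/(p-1)+4/(3(p-1)^2)$, so $\norm{\mat{\mathsf D}}\approx 2/\sqrt p$.
\item Real phase: $|\tau|=\sqrt p$ multiplies a matrix with entries $\mathsf f$, $\mathsf k_0$ and $\mathsf k_1$, each $O(1/p)$ or $O(1/p^2)$. This again gives roughly $2/\sqrt p+O(p^{-3/2})$.
\end{itemize}
At $p=97$ the total is about $0.2+0.07<4/15$. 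This is the main numerical check. I would make it rigorous by writing every bound as a rational function of $p$ that is monotone for $p\ge97$ and evaluating it at $p=97$.

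For (ii) and (iii), expand the inverse square root to second order. Write $\mat{M}_\pm^{-1/2}=g_0(\Id+\mat{X}_\pm)^{-1/2}$ with $\mat{X}_\pm=\mat{\mathsf E}\pm\mat{\mathsf D}$ and $\norm{\mat{X}_\pm}<4/15$. Then
\[
(\Id+\mat{X})^{-1/2}=\Id-\tfrac12\mat{X}+\mat{R}(\mat{X}),
\qquad
\norm{\mat{R}(\mat{X})}\le\tfrac38\norm{\mat{X}}^2(1-\norm{\mat{X}})^{-5/2}.
\]
This bound comes from the power-series tail with binomial coefficients. Averaging over the two signs cancels the odd term, so $\mat{A}_e=\Id-\tfrac12\mat{\mathsf E}+\tfrac12(\mat{R}_++\mat{R}_-)$.

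The obstacle in (ii) is that $\norm{\mat{R}_\pm}$ is of size $\norm{\mat{\mathsf D}}^2\sim4/p$, which is too crude against the target $3/p$. I would handle it by expanding the even part exactly to second order:
\[
\tfrac12(\mat{R}_++\mat{R}_-)=\tfrac38(\mat{\mathsf E}^2+\mat{\mathsf D}^2)+O(\norm{\mat{X}}^3).
\]
In the imaginary phase $\mat{\mathsf D}^2=\mathsf f^2p\Id\approx(4/p)\Id$, so its contribution is $\approx 1.5/p$ on the diagonal. The real phase behaves similarly. The third-order remainder is $O(p^{-3/2})$. This gives diagonal entries within $(\tfrac12\cdot\tfrac53+\tfrac32+o(1))/p<3/p$ and an off-diagonal entry $|a_{12}|\approx\tfrac12|\mathsf e_1|+O(p^{-3/2})<3/p$. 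The norm bound $\norm{\mat{A}_e-\Id}<3/p$ follows the same way.

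Realness and symmetry come from two facts: $\mat{M}_-=\overline{\mat{M}_+}$ in the imaginary phase, and all the scalars are real in the real phase. (In the real phase $\tau$ is real.)

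For (iii), the odd part is $\mat{A}_o=-\tfrac12\mat{\mathsf D}+\tfrac12(\mat{R}_+-\mat{R}_-)$. The remainder difference is a sum of cross terms $\mat{\mathsf E}\mat{\mathsf D}$ and third-order terms, so it is $O(p^{-1}\cdot p^{-1/2})$. Next I would check that $-\tfrac12\mat{\mathsf D}=-\mat{\mathsf T}/p+O(p^{-3/2})$:
\begin{itemize}
\item imaginary phase: $\tfrac12\mathsf f=1/(p-1)+O(p^{-2})=1/p+O(p^{-2})$, multiplied by $\norm{\mat{\mathsf T}}=\sqrt p$;
\item real phase: there $\mat{\mathsf T}=\tau\left[\begin{smallmatrix}0&1\\1&0\end{smallmatrix}\right]$, and the $\mathsf k_0,\mathsf k_1$ diagonal terms contribute only $O(\sqrt p/p^2)$.
\end{itemize}
Collecting explicit constants should give $15/(2p^{3/2})$. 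The second claim of (iii) then follows from the triangle inequality, using $\norm{\mat{\mathsf T}}=\sqrt p$ from Lemma~\ref{lem:afr-char}.

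Part (iv) is direct. We have $f_p=\sqrt p\,g_0/b$ with $g_0=4p/\sqrt{(p+1)(p^2+1)}<4/\sqrt p$ and $b=(p-1)/2$, so $f_p<8/(p-1)$. The bound $c_0\ge u^2/p^2$ is already stated in \eqref{eq:afr-c0}.
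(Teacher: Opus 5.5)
\textbf{Where the proposal agrees with the paper.} Your route is the paper's in outline: explicit scalar envelopes, the binomial series for $(\Id+\mat{\mathsf E}\pm\mat{\mathsf D})^{-1/2}$, and separation by parity in $\mat{\mathsf D}$. Parts (i) and (iv) and the reality claims go through as you describe. Part (i) is tight, though. Your ``$0.2+0.07$'' is actually above $4/15$. The check closes only because $\mathsf e_0+\sqrt{\mathsf x^2+\mathsf e_1^2}\approx0.05$ at $p=97$, which leaves a margin of a few thousandths.

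\textbf{The gap in (iii).} You say that collecting constants ``should give'' $15/(2p^{3/2})$. It does not, if the second-order odd term $\tfrac38(\mat{\mathsf E}\mat{\mathsf D}+\mat{\mathsf D}\mat{\mathsf E})$ and the third-order term $-\tfrac5{16}\mat{\mathsf D}^3$ are bounded separately. With the scalar envelopes at $p=97$:
\begin{itemize}
\item the cross term costs roughly $6$--$8\,p^{-3/2}$;
\item $\tfrac5{16}\norm{\mat{\mathsf D}}^3$ costs about $2.7\,p^{-3/2}$;
\item the leading error $\norm{-\mat{\mathsf D}/2+\mat{\mathsf T}/p}$ is about $2.06\,p^{-3/2}$.
\end{itemize}
Before even counting the mixed cubic terms and the quartic tail, this is well above $10\,p^{-3/2}$, against a target of $7.5\,p^{-3/2}$.

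\textbf{The missing idea.} These two terms nearly cancel and must be estimated together. Split off traces: $\mat{\mathsf E}=a\Id+\mat{E}$ and $\mat{\mathsf D}=h\Id+\mat{D}$, with $a=\mathsf e_0$ and $h=\operatorname{tr}\mat{\mathsf D}/2$. For the traceless Hermitian $2\times2$ parts you have $\mat{E}^2=e^2\Id$, $\mat{D}^2=d^2\Id$, and $\mat{E}\mat{D}+\mat{D}\mat{E}=2c\Id$ with $c=\operatorname{tr}(\mat{E}\mat{D})/2$. Then
\[
 \tfrac38(\mat{\mathsf E}\mat{\mathsf D}+\mat{\mathsf D}\mat{\mathsf E})-\tfrac5{16}\mat{\mathsf D}^3
 =\Bigl(\tfrac34ah+\tfrac34c-\tfrac5{16}h^3-\tfrac{15}{16}hd^2\Bigr)\Id
 +\Bigl(\tfrac34a-\tfrac5{16}d^2-\tfrac{15}{16}h^2\Bigr)\mat{D}+\tfrac34h\,\mat{E}.
\]
Here $a\approx5/(3p)$ and $d^2\approx p\mathsf f^2\approx4/p$, so both $\tfrac34a$ and $\tfrac5{16}d^2$ are about $5/(4p)$. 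The envelopes from Lemma~\ref{lem:afr-scalars} give $|\tfrac34a-\tfrac5{16}d^2|<15/(128p)$. What remains is governed by $\tfrac34|c|$, with $|c|<\tfrac{24}5p^{-3/2}$; in the imaginary phase $h=c=0$. So the whole group is below $\tfrac{33}8p^{-3/2}$. Only then does the total
\[
\tfrac{33}{16}+\tfrac{33}8+\tfrac12+\tfrac45=\tfrac{599}{80}
\]
fall under $\tfrac{15}2$.

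\textbf{A milder version of the same issue in (ii).} Take the cubic-and-higher even remainder from the parity-restricted series. Its words have an even number of $\mat{\mathsf D}$ factors, so it is $O(p^{-2})$; the paper bounds it by $\tfrac7{20p}$, against a quadratic term below $\tfrac{53}{20p}$. A generic $\norm{\mat{X}}^3$ tail is about $0.75/p$ at $p=97$, which would already break $3/p$ against that quadratic estimate.
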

\begin{proof}
\readability{First we bound the two covariance perturbations to prove (i)
and justify the inverse-root series. We then separate terms containing
even or odd numbers of $\mat{\mathsf D}$ factors: the even part gives (ii), while a cancellation
in the odd part gives the sharper approximation in (iii). Part (iv)
follows from the explicit scalar formulas.}

\readability{\emph{Scalar inputs and positivity, part (i).}}
Write $a=\mathsf e_0$, $h=\op{tr}\mat{\mathsf D}/2$,
$\mat{E}=\mat{\mathsf E}-a\Id$, $\mat{D}=\mat{\mathsf D}-h\Id$, $e^2=\op{tr}(\mat{E}^2)/2$,
$d^2=\op{tr}(\mat{D}^2)/2$ with $e,d\ge0$, and $c=\op{tr}(\mat{E}\mat{D})/2$.
Set $s=\|\mat{\mathsf E}\|$, $t=\|\mat{\mathsf D}\|$, $\ell=39/8$ and $m=21/10$.
These letters are local to this proof.
The traceless Hermitian matrices satisfy $\mat{E}^2=e^2\Id$, $\mat{D}^2=d^2\Id$
and $\mat{E}\mat{D}+\mat{D}\mat{E}=2c\Id$. Substituting the envelope of Lemma~\ref{lem:afr-scalars}
in~\eqref{eq:afr-scal} gives
\begin{readabilityblock}
\begin{equation}\label{eq:readable-e1-root-scalar-bounds}
\begin{gathered}
 \frac5{3p}<a<\frac7{4p},\quad
 e<\frac{25}{8p},\quad \frac4p<d^2<\frac{1089}{256p},\quad
 d<\frac{33}{16\sqrt p},\\
 |h|<\frac7{2p^{3/2}},\quad
 |c|<\frac{24}{5p^{3/2}}.
\end{gathered}
\end{equation}
\end{readabilityblock}
For explicit checks, $p\mathsf x<2$, $2<p\mathsf f<2+7/(2p)\le41/20$,
while $p^2\mathsf k_0<7/2$ and $p^3\mathsf k_1<25/6$ in the real phase.
The lower $a$ bound reduces to $15p^3-13p^2+7p-5>0$.
The envelope also gives $\mathsf e_1<0$ and
\begin{equation*}
 p|\mathsf e_1|\le\frac{2p}{p-1}+\frac13
       +\frac{1+8/p}{3(p-1)^2}<\frac{12}{5},
\end{equation*}
so $p^2e^2<4+(12/5)^2<(25/8)^2$.
Also $pd^2\le(41/20)^2+625/(36p^4)<(33/16)^2$.
In the imaginary phase $h=c=0$; otherwise $h=-\tau\mathsf k_0$ and
$c=\tau(\mathsf x\mathsf k_1+\mathsf e_1\mathsf f)$, whence
\begin{equation*}
 p^{3/2}|c|<\frac{25}{3p^2}
 +\left[\frac{2p}{p-1}+\frac13+\frac{1+8/p}{3(p-1)^2}\right]
  \left(2+\frac7{2p}\right)<\frac{24}{5}.
\end{equation*}
The last bounds hold at $97$ and improve thereafter. Thus
$\|\mat{\mathsf E}\|+\|\mat{\mathsf D}\|\le a+e+|h|+d
<39/(8p)+(33/16+7/(2p))/\sqrt p<4/15$,
proving (i); the last inequality holds at $97$ using $\sqrt{97}>49/5$ and improves thereafter.

\readability{\emph{Separating the inverse-root series.}
The bound $s+t<4/15$ just proved ensures convergence for both signs in
$(\Id+\mat{\mathsf E}\pm\mat{\mathsf D})^{-1/2}$.}
Put $z_n=\binom{2n}{n}/4^n$ and
$\mat{\cC}=3(\mat{\mathsf E}\mat{\mathsf D}+\mat{\mathsf D}\mat{\mathsf E})/8-5\mat{\mathsf D}^3/16$.
Define the convergent parity remainders
\begin{align*}
 \mat{R}_e&=\frac12\sum_{n\ge3}(-1)^nz_n
       [(\mat{\mathsf E}+\mat{\mathsf D})^n+(\mat{\mathsf E}-\mat{\mathsf D})^n],\\
 \mat{R}_o&=\frac12\sum_{n\ge4}(-1)^nz_n
       [(\mat{\mathsf E}+\mat{\mathsf D})^n-(\mat{\mathsf E}-\mat{\mathsf D})^n].
\end{align*}
The convergent binomial series for $(\Id+\mat{\mathsf E}\pm\mat{\mathsf D})^{-1/2}$ gives
\begin{readabilityblock}
\begin{equation}\label{eq:readable-e1-root-parity-expansion}
\begin{aligned}
 \mat{A}_e-\Id&=-\frac{\mat{\mathsf E}}{2}
       +\frac38(\mat{\mathsf E}^2+\mat{\mathsf D}^2)+\mat{R}_e,\\
 \mat{A}_o&=-\frac{\mat{\mathsf D}}{2}+\mat{\cC}
       -\frac5{16}(\mat{\mathsf E}^2\mat{\mathsf D}+\mat{\mathsf E}\mat{\mathsf D}\mat{\mathsf E}
                         +\mat{\mathsf D}\mat{\mathsf E}^2)+\mat{R}_o.
\end{aligned}
\end{equation}
\end{readabilityblock}
The even and odd combinations retain only words containing respectively
an even and an odd number of $\mat{\mathsf D}$ factors; no commutation is assumed.
The coefficients $z_n$ decrease, so geometric summation gives
\begin{align*}
 \|\mat{R}_e\|&\le\frac5{16}
 \frac{s^3+3st^2-s^4+t^4}{(1-s)^2-t^2},\\
 \|\mat{R}_o\|&\le\frac{35}{128}
 \frac{4s^3t+4st^3-3s^4t+2s^2t^3+t^5}{(1-s)^2-t^2}.
\end{align*}
Here $s<\ell/p$, $t<m/\sqrt p$, and
$(1-s)^2-t^2>1-(2\ell+m^2)/p>17/20$.
Dropping the negative numerator terms and evaluating at $97$ yields
\begin{readabilityblock}
\begin{equation}\label{eq:readable-e1-root-remainder-budgets}
\begin{aligned}
 p\|\mat{R}_e\|&<\frac{25}{68}
 \left[\frac{\ell^3}{97^2}+\frac{3\ell m^2+m^4}{97}\right]<\frac7{20},\\
 p^{3/2}\|\mat{R}_o\|&<\frac{175}{544}
 \left[\frac{4\ell^3m+2\ell^2m^3}{97^2}
       +\frac{4\ell m^3+m^5}{97}\right]<\frac45.
\end{aligned}
\end{equation}
\end{readabilityblock}
These bounds improve thereafter.

\readability{\emph{The even part, part (ii).}
We combine the first line of~\eqref{eq:readable-e1-root-parity-expansion}
with its remainder budget in~\eqref{eq:readable-e1-root-remainder-budgets}.}
Since $-a/2+3d^2/8>0$, the quadratic
even term satisfies
\begin{equation*}
 p\left\|-\frac{\mat{\mathsf E}}{2}+\frac38(\mat{\mathsf E}^2+\mat{\mathsf D}^2)\right\|
 <\frac{4681}{6144}+\frac{25}{16}+
   \frac{3\ell^2/8+693/128}{p}+\frac{147}{32p^2}<\frac{53}{20}.
\end{equation*}
\readability{Adding $p\|\mat{R}_e\|<7/20$ from
\eqref{eq:readable-e1-root-remainder-budgets} proves
$\|\mat{A}_e-\Id\|<3/p$.}
The real-phase roots are real symmetric and the imaginary-phase roots are
conjugates, proving the reality and entry bounds in (ii).

\readability{\emph{The odd part, part (iii).}
In the second line of~\eqref{eq:readable-e1-root-parity-expansion}, the
terms collected in $\mat{\cC}$ must be estimated together. The traceless
identities give the cancellation directly:}
\begin{readabilityblock}
\begin{equation}\label{eq:readable-e1-root-odd-cancellation}
\begin{aligned}
 \mat{\cC}&=\left(\frac{3ah}{4}+\frac{3c}{4}
                   -\frac{5h^3}{16}-\frac{15hd^2}{16}\right)\Id\\
 &\quad+\left(\frac{3a}{4}-\frac{5d^2}{16}-\frac{15h^2}{16}\right)\mat{D}
 +\frac{3h}{4}\mat{E}.
\end{aligned}
\end{equation}
\end{readabilityblock}
\readability{The scalar bounds~\eqref{eq:readable-e1-root-scalar-bounds}
imply $|3a/4-5d^2/16|<15/(128p)$. Applying them to the three coefficients
in~\eqref{eq:readable-e1-root-odd-cancellation} consequently gives}
\begin{equation*}
 p^{3/2}\|\mat{\cC}\|<\frac{18}{5}+\frac{495}{2048}
 +\frac{151977/8192+525/64}{p}+\frac{24255}{1024p^2}
 +\frac{1715}{128p^3}<\frac{33}{8}.
\end{equation*}
The mixed cubic terms contribute at most
$15s^2t/16<1/(2p^{3/2})$.
Finally the explicit form of $\mat{\mathsf D}$ gives
$p^{3/2}\|-\mat{\mathsf D}/2+\mat{\mathsf T}/p\|
\le p^2(2+1/p)/(p-1)^2<33/16$:
in the real phase use
$1+\mathsf u+\sqrt{\vartheta^2+(\mathsf u-(p-1)/p)^2}\le2+1/p$;
the imaginary-phase expression is smaller.
\readability{The second line of~\eqref{eq:readable-e1-root-parity-expansion}
now has four contributions: the leading-term error, $\mat{\cC}$, the
mixed cubic terms, and the remainder bounded in
\eqref{eq:readable-e1-root-remainder-budgets}. Adding them gives}
\begin{readabilityblock}
\begin{equation}\label{eq:readable-e1-root-odd-bound}
 p^{3/2}\|\mat{A}_o+\mat{\mathsf T}/p\|
 <\frac{33}{16}+\frac{33}{8}+\frac12+\frac45
 =\frac{599}{80}<\frac{15}{2}.
\end{equation}
\end{readabilityblock}
\readability{Adding $\|\mat{\mathsf T}\|/p=p^{-1/2}$ to
\eqref{eq:readable-e1-root-odd-bound} proves the remaining bound in (iii).}

(iv) $g_0^{-2}>p/16$ gives $g_0<4/\sqrt p$, so $f_p<8/(p-1)$. The bound on $c_0$ is
\eqref{eq:afr-c0}.
\end{proof}

For a source supported on $E$, write
$\sigma_{\min,E}(\mat{F})=\min_{z\in E,\ \|z\|=1}\|\mat{F}z\|$.

\begin{lemma}[Uniform source]\label{lem:acr-uniform}
\begin{enumerate}[label=(\roman*)]
\item $\sigma_{\min,E}(\mat{F}_0)>1/(5p^2)$.
\item For every $l\in\Lambda$: $\pol(\mat{F}_0)_l\,x>\frac{13}{60}\sqrt{\pi_l}$ whenever
$D_l(x)=1$, and $\pol(\mat{F}_0)_l\,x<\frac{17}2\sqrt{\pi_l}$ for every $x\in\Omega$.
\end{enumerate}
\end{lemma}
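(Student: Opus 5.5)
Both parts rest on the exact covariance of Lemma~\ref{lem:afr-cov} and the inverse-root bounds of Lemma~\ref{lem:afr-rootbounds}.

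\emph{Part (i).} On $E$, the space $\mat{F}_0^\top\mat{F}_0$ decomposes orthogonally into three pieces: the operand pairs $(e^L_t,e^R_t)$ for $t\ne0$, which carry the blocks $p^{-5}\mat{M}_{\chi_2(t)}$, and the constant direction $e_c$, which carries the eigenvalue $2c_0p^{-5}$. By Lemma~\ref{lem:afr-rootbounds}(i) every eigenvalue of $\mat{M}_\pm$ exceeds $\tfrac{11}{15}g_0^{-2}$. Since $g_0^2=16p^2/((p+1)(p^2+1))<16/p$, this gives $g_0^{-2}>p/16$. On $E$ we therefore get
\[
\sigma_{\min,E}(\mat{F}_0)^2\ge p^{-5}\min\{\tfrac{11}{15}\,p/16,\;2c_0\}.
\]
Lemma~\ref{lem:afr-rootbounds}(iv) gives $c_0\ge u^2/p^2$, which is of order $p^2/16$, so the minimum is the first term. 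Hence
\[
\sigma_{\min,E}^2>\frac{11}{240\,p^4}>\frac{1}{25\,p^4},
\]
because $11/240>1/25$. Taking square roots gives (i).

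\emph{Part (ii).} I will work with the row formula~\eqref{eq:afr-rows}. For an input $x=x_{ab}$ it gives
\[
\pol(\mat{F}_0)_l\,x=\sqrt{\pi_l}\,\frac{\sqrt p}{b}\,x^\top\bigl[\mat{G}_{\rm avg}\otimes\mat{P}_c+\mat{G}_{\rm diff}\otimes\mat{H}_\chi+\mat{G}_{\rm const}\otimes\mat{P}_0\bigr]\mathbf n .
\]
Using $\mat{G}_{\rm avg}=g_0\mat{A}_e$, $\mat{G}_{\rm diff}=g_0\mat{A}_o$ and $f_p=\sqrt p\,g_0/b$, the prefactor becomes $f_p$. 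I then split the bracket into four terms.

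\begin{itemize}
\item \emph{Main term.} Replace $\mat{A}_e$ by $\Id$. This term is $f_p(\mat{P}_c\mathbf n^L)(a)+f_p(\mat{P}_c\mathbf n^R)(b)$. Insert the count formulas of Lemma~\ref{lem:afr-counts}(i). For $D_l(x)=1$ we have $a\in A_l$ and $b\in B_l$, so $\alpha(a)=\beta(b)=1$ and $\mathbf n^L(a)+\mathbf n^R(b)\approx(c+b)/2\approx p/2$. The means are $\one^\top\mathbf n/p=n_l/p\approx u/(2p)\approx p/8$ each.
\item \emph{Odd term.} Replace $\mat{A}_o$ by $-\mat{\mathsf T}/p$. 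Through $\mat{\mathsf T}\otimes\mat{H}_\chi\leftrightarrow(\mat{C}_-,\mat{C}_+)$ this gives the character pieces $-(\mat{C}_-\mathbf n^R)(a)/p$ and $-(\mat{C}_+\mathbf n^L)(b)/p$.
\item \emph{Constant term.} $\mat{G}_{\rm const}\otimes\mat{P}_0$ yields roughly $n_l/(p\sqrt{2c_0})$ times the prefactor, which is bounded using $c_0\ge u^2/p^2$.
\item \emph{Error terms.} These come from $\mat{A}_e-\Id$ and $\mat{A}_o+\mat{\mathsf T}/p$. Their norms are at most $3/p$ and $15/(2p^{3/2})$ by Lemma~\ref{lem:afr-rootbounds}(ii),(iii). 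Each is bounded by Cauchy--Schwarz with $\|\mathbf n\|_2=O(p^{3/2})$ and $\|x\|$ restricted to the relevant mode.
\end{itemize}

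The key quantity for the lower bound is the main term: $f_p$ times about $p/2-p/4=p/4$. With $f_p\approx 4\sqrt p\cdot2/(\sqrt p\,p)\approx 8/p$, this gives about $2$ times $\sqrt{\pi_l}$. That leaves ample room above $13/60$ once the other pieces are shown to be $O(1)$ and smaller.

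\emph{Main obstacle.} The hard step is the character contributions. The terms $\alpha\odot\mat{C}_-\beta$ inside $\mathbf n$, together with the odd term $\mat{C}_\mp\mathbf n/p$, are pointwise quantities such as $(\mat{C}_-\beta)(a)$. These are incomplete character sums over an interval, and a generic bound is only $O(\sqrt p\log p)$. With the factor $f_p\sim 8/p$ in front, such a bound gives only $O(\log p/\sqrt p)$ per entry, which is small but not uniformly explicit.

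I would first try to avoid pointwise sums entirely. Each character term is an inner product $x^\top(\cdot)$ with $x$ having one entry in each half, so I would bound it by $\norm{\cdot}_\infty$ via
\[
|(\mat{C}_-\beta)(a)|\le\sum_z|\beta(z)|\le c .
\]
This trivial bound, divided by $2$ in the count and multiplied by $f_p$, contributes at most about $8c/(2p)\approx2$. That is too crude against the main term of about $2$, so the two character pieces and the odd term must be combined. Here $\mat{C}_+\mat{C}_-=p\mat{P}_c$ (Lemma~\ref{lem:afr-char}) should produce cancellation: the odd term $-(\mat{C}_-\mathbf n^R)/p$ contains $-\mat{C}_-(\beta\odot\mat{C}_+\alpha)/(2p)$, which partly reconstructs $\alpha$ and so reinforces the main term. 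Expanding these products exactly, and bounding the leftover terms by counting, will be the delicate bookkeeping.

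For the upper bound $17/2$, the same decomposition with every term bounded trivially should suffice: the main term is at most about $2$, each character piece at most about $2$, and the constant and error terms are small. The remaining work is arithmetic valid at $p\ge97$.
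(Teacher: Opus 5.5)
Your part (i) is correct and is the paper's argument. Part (ii) has a genuine gap at the step you flag as the main obstacle, and the workaround you propose cannot close it. On an active input $x=x_{az}$ (so $\alpha(a)=\beta(z)=1$), the only real cancellation between the counts and the odd term is linear. The term $\tfrac12(\mat{C}_-\beta)(a)$ from $\mathbf n^L(a)$ combines with $-\tfrac{b}{2p}(\mat{C}_-\beta)(a)$ from $-(\mat{C}_-\mathbf n^R)(a)/p$, and symmetrically for $(\mat{C}_+\alpha)(z)$. This leaves
\begin{equation*}
 S_l(x)=\frac{p-\epsilon_x}{2}-\frac{2E_l}{p}
 +\frac{c}{2p}(\mat{C}_-\beta)(a)+\frac{b}{2p}(\mat{C}_+\alpha)(z)+O(\sqrt p),
 \qquad \epsilon_x=\beta(a)+\alpha(z).
\end{equation*}
The term $\mat{C}_-(\beta\odot\mat{C}_+\alpha)/(2p)$ that you hope reinforces the main term is inside the $O(\sqrt p)$, and its sign is indefinite. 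Apply $|v^\top\mat{P}w|\le(|v^\top w|+\|v\|\|w\|)/2$ with $v=\mat{C}_+e_a$, $w=\mat{C}_+\alpha$, $\mat{P}=\diag\beta$; this bounds it by $(c+\sqrt{(p-1)u})/(4p)$.

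The main term is $p/4+O(\sqrt p)$. The trivial bounds $|(\mat{C}_-\beta)(a)|\le c$ and $|(\mat{C}_+\alpha)(z)|\le b$ let the two linear character terms reach $-(b^2+c^2)/(2p)=-(p^2+1)/(4p)$. That cancels the main term to leading order. The identity $\mat{C}_+\mat{C}_-=p\mat{P}_c$ only gives $\|\mat{C}_-\beta\|_2^2=u$, i.e. the same pointwise bound $\sqrt u\approx p/2$. So neither trivial counting nor that $L^2$ identity gives even positivity of the bracket. The bracket must in fact be of order $p$, since $13/60$ is required after multiplying by $f_p<8/(p-1)$. What the lower bound needs is a pointwise estimate of incomplete character sums, uniform in the shift and over all masks.

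That estimate is the missing idea, and it can be made explicit. For every affine image $D$ of $\mathsf I$ or $\mathsf I'=-\mathsf I^c$ and every $y$, the paper proves $|\sum_{w\in D}\chi_2(w-y)|\le\mathsf M_p=\sqrt p\,[\log(p+1)+3-2\log2]/(2\sqrt2)<9p/40$ for $p\ge97$. It starts from $\chi_2(y)=(\tau/p)\sum_{m\ne0}\chi_2(m)e(my)$ and $|\tau|=\sqrt p$, which bound the sum by $p^{-1/2}\sum_{m\ne0}|\hat\one_{\mathsf I}(m)|$. The summand is $(2\sin\theta_m)^{-1}$ or $(2\cos\theta_m)^{-1}$ with $\theta_m=\pi m/(2p)$, according to the parity of $m$, and pairing $m$ with $p-m$ reduces the sum to a harmonic sum.

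With this bound the linear character terms cost at most $\mathsf M_p/2<9p/80$, giving $S_l(x)>11p/80-\sqrt p/2-5/4$. The final value at $p=97$ is about $0.234$ against $13/60\approx0.217$, so there is no ``ample room''.

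The same bound also enters three other places:
\begin{itemize}
\item the budget $\nu_\infty<p/3$ used for $\Delta_e$. Cauchy--Schwarz with $\|\mathbf n\|_2$ would make $|\Delta_e|$ of order $\sqrt p$, too large at $p\approx100$;
\item the budget $\nu_2^2<p^3/16$ used for $\Delta_o$;
\item the upper bound $S_l<17p/20+\sqrt p/4+1/2$.
\end{itemize}
Your trivial plan for the upper bound is too tight at the bottom of the range as well. Trivially $S_l\le\mathbf n^L(a)+\mathbf n^R(z)\le p-1$, so $f_pS_l$ is already about $7.96$ at $p=97$. The constant and error terms computed from trivial count budgets add roughly $0.7$, overshooting $17/2$.
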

\begin{proof}
The two bounds use the notation and formulas of Appendix~\ref{sec:afr-roots}, where
$\mat{F}_{\rm unif}=\mat{F}_0$ and $\mat{Q}_{\rm QR}=\pol(\mat{F}_0)$.
\readability{The singular-value bound in (i) follows from the covariance
blocks. For the row bounds in (ii), we first control the training counts,
then insert them into~\eqref{eq:afr-rows}. The resulting row formula
separates a signed count contribution from the inverse-root errors.}

(i) On the nonconstant Fourier planes, $p^5\mat{\Sigma}_0$ has blocks $\mat{M}_\pm$;
on the constant line in $E$ it has eigenvalue $2c_0$.
The root bounds give
$\mat{M}_\pm\succ(11/15)g_0^{-2}\Id\succ(11p/240)\Id$,
while $2c_0\ge2u^2/p^2>g_0^{-2}>p/16$.
Thus $\sigma_{\min,E}(\mat{F}_0)>\sqrt{11/240}\,p^{-2}>1/(5p^2)$.

(ii) Fix a row and write $\alpha=\one_A$, $\beta=\one_B$,
$L=\alpha\odot\beta$, $h=|A\cap B|$ and $\mathbf n=(\mathbf n^L,\mathbf n^R)=np\,b_l$.
Set $E_-=\tfrac12(u-b-u/\sqrt p)$ and $E_+=\tfrac12(u+u/\sqrt p)$.
Lemma~\ref{lem:afr-counts}(i)--(ii), together with $0\le h\le b$, gives
\begin{equation*}
 E_-\le E_l\le E_+.
\end{equation*}

We first give the two count budgets used below. Put
$\mathsf M_p=\sqrt p[\log(p+1)+3-2\log2]/(2\sqrt2)$.
For every affine image $D$ of
$\mathsf I$ or $\mathsf I'=-\mathsf I^c$ and every $y$,
\begin{readabilityblock}
\begin{equation}\label{eq:readable-e1-interval-character-bound}
 \left|\sum_{w\in D}\chi_2(w-y)\right|
 \le \mathsf M_p<\frac{9p}{40}.
\end{equation}
\end{readabilityblock}
Indeed, Fourier inversion and $|\tau|=\sqrt p$ bound this sum by
$p^{-1/2}\sum_{m\ne0}|\widehat{\one_{\mathsf I}}(m)|$.
Put $\theta_m=\pi m/(2p)$ and $K=\lceil b/2\rceil$. The summand is
$(2\sin\theta_m)^{-1}$ for odd $m$ and $(2\cos\theta_m)^{-1}$ for even $m$.
Pair $m$ with $p-m$, use $\sin\theta\ge2\sqrt2\theta/\pi$ and
$\cos\theta\ge1/\sqrt2$ for $\theta\le\pi/4$, and use $K\le(p+1)/4$.
Then the sum is at most
\begin{equation*}
 \frac p{\sqrt2}\sum_{j=1}^K\frac1{2j-1}
       +\sqrt2\lfloor b/2\rfloor
 <\frac p{\sqrt2}(1+\tfrac12\log K)+\frac p{2\sqrt2}
 \le\sqrt p\,\mathsf M_p.
\end{equation*}
The harmonic bound follows by integrating $1/w$ on $[2j-2,2j]$ for $j\ge2$.
The ratio $\mathsf M_p/p$ decreases for $p\ge97$, and
$\mathsf M_{97}<21.7<9\cdot97/40$.
Because $A,B$ have the required affine forms (Lemma~\ref{lem:afr-counts}),
this bounds every entry of $\mat{C}_-\beta$ and $\mat{C}_+\alpha$.

\readability{The character-sum cap~\eqref{eq:readable-e1-interval-character-bound}
gives two count budgets: a coordinate bound for the even inverse-root
error and a norm bound for the odd error.} Set
$\nu_\infty=\max(\|\mat{P}_c\mathbf n^L\|_\infty,\|\mat{P}_c\mathbf n^R\|_\infty)$ and
$\nu_2^2=\|\mat{P}_c\mathbf n^L\|^2+\|\mat{P}_c\mathbf n^R\|^2$.
Each centered count lies between $-E_+/p$ and $(c+\mathsf M_p)/2-E_-/p$;
the latter is below $19p/80+\sqrt p/8+1/2+1/(8p)<p/3$, so $\nu_\infty<p/3$.
The two uncentered blocks have the same sum $E_l$ and respective entry caps
$(c+\mathsf M_p)/2,(b+\mathsf M_p)/2$. Therefore
\begin{readabilityblock}
\begin{equation}\label{eq:readable-e1-count-norm-budget}
 \nu_2^2\le(p/2+\mathsf M_p)E_l-2E_l^2/p
 <(29p/40)E_l-2E_l^2/p<p^3/16.
\end{equation}
\end{readabilityblock}
For the final inequality put $t=p^{-1/2}<1/9$. Since
$E_l\le E_+<p^2(1+t)/8<5p^2/36<29p^2/160$, the quadratic is increasing
through this interval, and its upper bound divided by $p^3$ is
$(19+9t-10t^2)/320<1/16$.

\readability{\emph{Decomposing a Polar row.}
We now apply $\nu_\infty<p/3$ and
\eqref{eq:readable-e1-count-norm-budget} to the inverse-root row.}
Put $\gamma_l=2E_l/(pg_0\sqrt{2c_0})$ and, for $x=x_{az}$, define
\begin{align*}
 S_l(x)&=(\mat{P}_c\mathbf n^L)(a)+(\mat{P}_c\mathbf n^R)(z)
 -[(\mat{C}_-\mathbf n^R)(a)+(\mat{C}_+\mathbf n^L)(z)]/p,\\
 \Delta_e(x)&=x^\top((\mat{A}_e-\Id)\otimes \mat{P}_c)\mathbf n,\\
 \Delta_o(x)&=x^\top((\mat{A}_o+\mat{\mathsf T}/p)\otimes \mat{H}_\chi)\mathbf n.
\end{align*}
The inverse-root row formula~\eqref{eq:afr-rows} now reads
\begin{readabilityblock}
\begin{equation}\label{eq:readable-e1-polar-row-decomposition}
 \frac{\pol(\mat{F}_0)_lx}{\sqrt{\pi_l}}
 =f_p\,[S_l(x)+\Delta_e(x)+\Delta_o(x)+\gamma_l].
\end{equation}
\end{readabilityblock}
Lemma~\ref{lem:afr-rootbounds}(ii)--(iii) and $\mat{H}_\chi^2=\mat{P}_c$ give
\begin{readabilityblock}
\begin{equation}\label{eq:readable-e1-polar-row-errors}
\begin{gathered}
 |\Delta_e|\le(3/p)2\sqrt2\nu_\infty<2\sqrt2,\qquad
 |\Delta_o|\le\frac{15\sqrt2}{2p^{3/2}}\nu_2<\frac{15\sqrt2}8,\\
 |\Delta_e|+|\Delta_o|<\frac{11}2.
\end{gathered}
\end{equation}
\end{readabilityblock}
The formulas for $g_0,c_0$ give
\begin{readabilityblock}
\begin{equation}\label{eq:readable-e1-polar-row-scales}
\begin{gathered}
 \frac8{p+2}<f_p<\frac8{p-1},\qquad
 (u/p)^2\le c_0<(u/p)^2(1+1/(2p)),\\
 \frac{\sqrt p}8<\gamma_l<\frac{\sqrt p}4.
\end{gathered}
\end{equation}
\end{readabilityblock}

The two sides of the $\gamma_l$ estimate follow from the count and covariance
bounds as follows.
Indeed $2E_l\ge u(1-1/c-p^{-1/2})>3u/4$ and
$pg_0\sqrt{2c_0}<6u/\sqrt p$ prove the lower bound on $\gamma_l$.
For the upper bound, substitution of $E_+$ gives
$\gamma_l/\sqrt p\le(1+p^{-1/2})\sqrt{1+2/p}/(4\sqrt2)<1/4$ for $p\ge97$.

\readability{\emph{Lower bound on active inputs.}
With the errors in~\eqref{eq:readable-e1-polar-row-decomposition}
controlled, it remains to bound $S_l(x)$.}
On an active mask, $a\in A,z\in B$. Put $\epsilon_x=\beta(a)+\alpha(z)$.
Substitution of the count formulas yields
\begin{readabilityblock}
\begin{equation}\label{eq:readable-e1-signed-row-expansion}
 \begin{split}
 S_l(x)={}&(p-\epsilon_x)/2-2E_l/p
 +[c(\mat{C}_-\beta)(a)+b(\mat{C}_+\alpha)(z)]/(2p)\\
 &+[(\mat{C}_-L)(a)+(\mat{C}_+L)(z)]/(2p)\\
 &-[(\mat{C}_-\diag(\beta)\mat{C}_+\alpha)(a)
       +(\mat{C}_+\diag(\alpha)\mat{C}_-\beta)(z)]/(2p).
 \end{split}
\end{equation}
\end{readabilityblock}

\readability{In~\eqref{eq:readable-e1-signed-row-expansion}, we bound the
middle-line intersection term first, then apply a projection estimate
to each quadratic term in the last line.}
The middle line is at least $(\epsilon_x-2h)/(2p)$.
For an orthogonal projection $\mat{P}$,
$|v^\top \mat{P}w|\le(|v^\top w|+\|v\|\|w\|)/2$, since $2\mat{P}-\Id$ is orthogonal.
Apply this with $(v,w,\mat{P})=(\mat{C}_+e_a,\mat{C}_+\alpha,\diag\beta)$ and then with
$(\mat{C}_-e_z,\mat{C}_-\beta,\diag\alpha)$.
The two quadratic terms are at most
$(c+\sqrt{(p-1)u})/2$ and $(b+\sqrt{(p-1)u})/2$.
The $h$ terms cancel, giving
\begin{readabilityblock}
\begin{equation}\label{eq:readable-e1-signed-row-lower}
\begin{aligned}
 S_l(x)&\ge p/2-u/p-1+1/p-u/p^{3/2}\\
 &\qquad-\mathsf M_p/2-1/4-\sqrt{(p-1)u}/(2p)\\
 &>11p/80-\sqrt p/2-5/4.
\end{aligned}
\end{equation}
\end{readabilityblock}

\readability{\emph{Upper bound on all inputs.}
Here the count budgets suffice, without assuming $D_l(x)=1$.}
For arbitrary $x$, Cauchy--Schwarz on the character rows and
$\|\mat{P}_c\mathbf n^L\|+\|\mat{P}_c\mathbf n^R\|\le\sqrt2\nu_2<3p^{3/2}/8$ give
\begin{readabilityblock}
\begin{equation}\label{eq:readable-e1-signed-row-upper}
 S_l(x)\le p/2+\mathsf M_p-2E_-/p+\frac{\sqrt{p-1}}p\sqrt2\nu_2
 <17p/20+\sqrt p/4+1/2.
\end{equation}
\end{readabilityblock}

\readability{\emph{Assembling the row margins.}
Insert~\eqref{eq:readable-e1-signed-row-lower} and
\eqref{eq:readable-e1-signed-row-upper} into
\eqref{eq:readable-e1-polar-row-decomposition}, using the error budget
\eqref{eq:readable-e1-polar-row-errors} and the scales
\eqref{eq:readable-e1-polar-row-scales}.}
The bracket in the row formula is consequently above
$11p/80-3\sqrt p/8-27/4>0$ on the mask and below
$17p/20+\sqrt p/2+6$ everywhere. Thus
\begin{readabilityblock}
\begin{equation*}
\begin{aligned}
 \frac{\pol(\mat{F}_0)_lx}{\sqrt{\pi_l}}
 &>\frac{11p-30\sqrt p-540}{10(p+2)}>\frac{13}{60}
 \quad(D_l(x)=1),\\
 \frac{\pol(\mat{F}_0)_lx}{\sqrt{\pi_l}}
 &<\frac{34p/5+4\sqrt p+48}{p-1}<\frac{17}2.
\end{aligned}
\end{equation*}
\end{readabilityblock}
For the upper bound a nonpositive bracket is immediate. The final scalar
inequalities are $53p-180\sqrt p-3266>0$ and $17p-40\sqrt p-565>0$;
both hold at $97$ using $\sqrt{97}<10$, and increase thereafter.
\end{proof}
\proofstepheading{Helpers and training-invisible queries}\label{sec:acr-groups}
\label{sec:notation-halfgrid-state}
\begin{definition}[Fixed active bank and width padding]
\label{def:acr-groups}
The active physical bank consists of $N/2$ core pair rows $B\in\Lambda_c$,
$p$ ordinary helpers $u\in\F_p$, and $p$ training-invisible queries
$u\in\F_p$, with the roles described in Section~\ref{paper:theory:half}.
At every admitted width $M\ge M_{\min}$, append $M-(N/2+2p)$ dormant rows;
this is at least $n$ and equals $n$ at $M_{\min}$. The canonical incoming row of each dormant unit is
$-\one_{2p}^\top$ and its head column is zero. Its grid preactivation is
$-2$. No coordinate is tied in the full-dimensional initialization ball.
\end{definition}
\begin{definition}[Row helpers]\label{def:half-helpers}
Define
\begin{equation}\label{eq:acr-row-helper}
 \mat{J}_{\rm row}\in\{0,1\}^{p^2\times p},\qquad
 (\mat{J}_{\rm row})_{(a,b),u}=\one[a=u].
\end{equation}
The incoming rows are
$\mat{W}^{\rm nom}_{N,u}=\mat{W}^0_{N,u}=L_{\rm lift}(2e_u-\one;0)$, and their heads are
\begin{equation}\label{eq:round15-init-helper}
 \mat{V}^0_{N,u}=0.
\end{equation}
\end{definition}

Their nominal feature table is $\mat{H}_N^{\rm nom}=L_{\rm lift}\mat{J}_{\rm row}$.
At the nominal helper rows the helper features are
$\ReLU(\mat{W}^0_{N,u}x_{ab})=L_{\rm lift}\one\{a=u\}$, so exactly one helper is active
at each input.

\begin{definition}[Training-invisible query rows]\label{def:half-query}
For $u\in\F_p$, let $\Test_u=\{(u,w)\in\Test\}$ and $c(u)=2u+1$.
Put $A_*=21R/10$ and $\upsilon=\epsilon$, and set
\begin{equation*}
 \mat{W}_{Q,u}^{\rm wit}=\mat{W}^0_{Q,u}
 =A_*\bigl(2e_u-\one;-2\one_{u+Q}\bigr).
\end{equation*}
The query heads are
\begin{equation}\label{eq:round15-init-query}
 \mat{V}_{Q,u}^{\rm wit}=\mat{V}^0_{Q,u}
 =\frac{24\epsilon}{p(21R/10)}T_{c(u)}
 =\frac{24\upsilon}{pA_*}T_{c(u)}.
\end{equation}
Let $\mat{H}_Q^{\rm wit}$ denote the query ReLU table.
\end{definition}

\begin{lemma}[Query witness]\label{lem:acr-query}
\begin{enumerate}[label=(\roman*)]
\item $c(u)\ne y^\star(x)$ for every $x\in\Test_u$, and $c(u)=y_i$ for the training input
$i=(u,u+1)$.
\item The witness just defined
has preactivations equal to $A_*$ on $\Test_u$ and in $\{-A_*,-3A_*\}$ elsewhere; in
particular they are negative on $\Train$. Its feature table $\mat{H}_{\rm Q}^{\rm wit}$ has column
$u$ equal to $A_*\one_{\Test_u}$, which is nonnegative and supported in one first-operand
row, and $\norm{\mat{H}_{\rm Q}^{\rm wit}}_F^2=p(p+1)A_*^2/2<A_*^2p^2$. Its logits vanish on
$\Train$ and equal $(24\epsilon/p)T_{c(u)}$ on $\Test_u$.
\end{enumerate}
\end{lemma}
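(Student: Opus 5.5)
This is a direct computation from the definitions of $\Train$, $\Test_u$, $c(u)$ and the witness row; we expect no real obstacle. The only points needing care are modular arithmetic for the preactivation cases and the count of $|\Test_u|$.

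\emph{Part (i).} For $x=(u,w)\in\Test_u$ we have $w-u\notin Q$. If $u+w=2u+1$, then $w-u=1=1^2\in Q$, which would put $(u,w)$ in $\Train$; this is a contradiction. For the second claim, $(u,u+1)$ has difference $1\in Q$, so it is a training pair, and its label is $2u+1=c(u)$.

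\emph{Part (ii), preactivations.} Evaluate $\mat{W}^{\rm wit}_{Q,u}x_{ab}=A_*\bigl[(2\one\{a=u\}-1)-2\one\{b\in u+Q\}\bigr]$. There are four cases.
\begin{itemize}
\item If $a=u$ and $b\notin u+Q$, the value is $A_*$. These are exactly the pairs $(u,b)$ with $b-u\notin Q$, that is, the held-out pairs with first operand $u$, so this set is $\Test_u$.
\item If $a=u$ and $b\in u+Q$, the value is $-A_*$. These pairs are training pairs.
\item If $a\ne u$ and $b\notin u+Q$, the value is $-A_*$.
\item If $a\ne u$ and $b\in u+Q$, the value is $-3A_*$.
\end{itemize}
A training pair either has $a=u$, which is the second case, or $a\ne u$, which is the third or fourth case. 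Its preactivation is therefore negative.

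\emph{Part (ii), feature table.} It follows that column $u$ of $\mat{H}_Q^{\rm wit}$ is $A_*\one_{\Test_u}$. This column is nonnegative and supported in the first-operand row $a=u$. Since $|\Test_u|=p-d=(p+1)/2$ (the $0$ difference and the $d$ nonsquares), summing over $u$ gives $\norm{\mat{H}_Q^{\rm wit}}_F^2=p\cdot\frac{p+1}{2}A_*^2$. This is less than $A_*^2p^2$ because $(p+1)/2<p$.

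\emph{Part (ii), logits.} The logit contribution at $x$ is $\sum_u \mat{V}^0_{Q,u}h_u(x)$. On $\Train$ every query feature vanishes, so the logits vanish there. On $x\in\Test_u$ only unit $u$ is active; any other unit $u'$ would need first operand $u'\ne u$. That unit has value $A_*$, so the logit is $A_*\cdot\frac{24\epsilon}{pA_*}T_{c(u)}=(24\epsilon/p)T_{c(u)}$, using $\upsilon=\epsilon$.
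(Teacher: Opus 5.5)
Your proposal is correct and follows essentially the same route as the paper. Both arguments evaluate the witness preactivation by cases on $a=u$ versus $a\ne u$ and on $b\in u+Q$. Both then use the count $|\Test_u|=(p+1)/2$ for the energy and note that only query $u$ is active on $\Test_u$ to get the logits.
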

\begin{proof}
(i) $c(u)=u+w$ means $w-u=1\in Q$, so $(u,w)\in\Train$; and $(u,u+1)\in\Train$ has label
$2u+1$.
(ii) The left part of the preactivation at $(a,w)$ is $\pm A_*$ according to $a=u$ or
$a\ne u$, and the right part is $-2A_*$ if $w-u\in Q$ and $0$ otherwise. On row $u$ this
gives $A_*$ at held-out and $-A_*$ at training inputs, and on the other rows $-A_*$ or
$-3A_*$. The energy is $p\cdot\frac{p+1}2A_*^2$. On $\Test_u$ only row $u$ is active.
\end{proof}

\paragraph{Actual source identities.}
Write $P(f)_c=e^{f_c}/\sum_k e^{f_k}$ for softmax and
$r_x=e_{y^\star(x)}-P(f(x))$ for the negative derivative of cross-entropy
with respect to the example logits.
The sources $\mat{F}=-\nabla_{\mat{W}}L$ and $\mat{G}=-\nabla_{\mat{V}}L$ have, for a
hidden row with incoming row $w$ and head column $c$, the row and column
\begin{equation}\label{eq:aind-sources}
 \mat{F}_{\rm row}=\frac1n\sum_{x\in\Train}\one\{wx>0\}\,(c^\top r_x)\,x^\top,\qquad
 \mat{G}_{\rm col}=\frac1n\sum_{x\in\Train}r_x\ReLU(wx),
\end{equation}
since $\ReLU'(0)=0$. Rows of $\mat{F}$ lie in $E$ (Lemma~\ref{lem:acr-design}), so $\mat{F}\mat{P}_E=\mat{F}$ and
$\rank \mat{F}\le r$ at every state.

\proofstepheading{Physical realization and positive compression}\label{sec:acr-phys}

\begin{definition}[Fixed positive pair isometry]
\label{def:acr-compress}
Put $M_{\rm act}=N/2+2p$. In core/helper/query order define
\begin{equation*}
 \mat{C}_{\rm act}=\diag(\mat{C}_C,\Id_p,\Id_p)\in\R^{(N+2p)\times M_{\rm act}},
 \qquad \mat{C}_{\rm act}^\top \mat{C}_{\rm act}=\Id.
\end{equation*}
It lifts an active physical state by
$\cE_{\rm act}(\mat{W},\mat{V})=(\mat{C}_{\rm act}\mat{W},\mat{V}\mat{C}_{\rm act}^\top)$.
The paired rows have identical masks, so positive homogeneity gives the
analytic observer table $\mat{H}^{\rm ana}=\mat{H}^{\rm act}\mat{C}_{\rm act}^\top$ and
preserves logits and the example-row Gram matrix $\mat{H}\mat{H}^\top$. Dormant rows are ordinary
independent physical coordinates, not part of this isometry.
\end{definition}

\begin{lemma}[Positive lifts and full-column observers]
\label{lem:acr-isometry}\label{lem:awd-observers}
Let $\mat{C}\in\R^{m\times r}$ satisfy $\mat{C}^\top \mat{C}=\Id$, with each row a positive
multiple of a coordinate row. For $(\mat{W},\mat{V})=(\mat{C}\mat{W}_c,\mat{V}_c\mat{C}^\top)$, the feature
table is $\mat{H}=\mat{H}_c\mat{C}^\top$ and the logits agree. For every example-row map $\mat{P}$,
$\|\mat{P}\mat{H}\|_F=\|\mat{P}\mat{H}_c\|_F$, also on each feature block preserved by $\mat{C}$.
The example-row Gram matrix and prescribed ridge penalty agree. When that penalty
is positive, the complete train-only refit predictions, errors and
accuracies agree, and its coefficient matrix is $\mat{C}\widehat{\mat{B}}_c$.
Appending zero feature columns preserves these same quantities, with
the refit assertions again restricted to a positive penalty.
For every compatible matrix $\mat{F}_c$,
\begin{equation*}
 (\mat{C}\mat{F}_c)^\top(\mat{C}\mat{F}_c)=\mat{F}_c^\top \mat{F}_c,\qquad
 \pol(\mat{C}\mat{F}_c)=\mat{C}\pol(\mat{F}_c),
\end{equation*}
including the zero-source case.
\end{lemma}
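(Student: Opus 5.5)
The plan is to verify each assertion of Lemma~\ref{lem:acr-isometry} directly from the two structural facts about $\mat{C}$: the isometry $\mat{C}^\top\mat{C}=\Id$, and the fact that every row of $\mat{C}$ is $c_i e_{j(i)}^\top$ with $c_i>0$. We write $j(i)$ for the column of row $i$.

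\emph{Step one: features and logits.} Row $i$ of $\mat{W}=\mat{C}\mat{W}_c$ is $c_i(\mat{W}_c)_{j(i)}$. Positive homogeneity, $\ReLU(c z)=c\ReLU(z)$ for $c>0$, therefore gives $h_i(x)=c_i\,h_{c,j(i)}(x)$, that is, $h(x)=\mat{C}h_c(x)$ and $\mat{H}=\mat{H}_c\mat{C}^\top$. The logits then follow from $\mat{V}h=\mat{V}_c\mat{C}^\top\mat{C}h_c=\mat{V}_ch_c$.

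\emph{Step two: observers.} For any example-row map $\mat{P}$ we have $\|\mat{P}\mat{H}_c\mat{C}^\top\|_F^2=\operatorname{tr}(\mat{P}\mat{H}_c\mat{C}^\top\mat{C}\mat{H}_c^\top\mat{P}^\top)=\|\mat{P}\mat{H}_c\|_F^2$. The same computation on a column block works because $\mat{C}$ preserves the block: it is block diagonal, so each block of $\mat{C}$ is itself an isometry with positive coordinate rows. The Gram identity $\mat{H}\mat{H}^\top=\mat{H}_c\mat{H}_c^\top$ is the same calculation. The prescribed penalties in~\eqref{paper:refit-penalties} depend only on $R=\|\mat{H}_\Train\|_F^2$ and on $n,p$, so they agree. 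When the penalty $\tau>0$, the Gram form~\eqref{eq:ridgekernel} shows that the held-out scores depend only on the cross and training Gram blocks. The predictions, errors and accuracies therefore agree.

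For the coefficient matrix, we check that $\mat{C}\widehat{\mat{B}}_c$ satisfies the normal equations of the lifted problem:
\[
(\mat{C}\mat{H}_{c,\Train}^\top\mat{H}_{c,\Train}\mat{C}^\top+\tau\Id)\,\mat{C}\widehat{\mat{B}}_c=\mat{C}(\mat{H}_{c,\Train}^\top\mat{H}_{c,\Train}+\tau\Id)\widehat{\mat{B}}_c=\mat{C}\mat{H}_{c,\Train}^\top\mat{T}=\mat{H}_\Train^\top\mat{T}.
\]
Uniqueness for $\tau>0$ then identifies the solution. Appending zero columns is the special case handled in the same way: the Gram matrices are unchanged, and the ridge solution is zero on the new coordinates. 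This is why the refit claims are again restricted to $\tau>0$.

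\emph{Step three: sources.} The covariance identity is immediate from $\mat{C}^\top\mat{C}=\Id$. For the Polar factor, take a thin SVD $\mat{F}_c=\mat{U}\mat{\Sigma}\mat{Z}^\top$. Then $\mat{C}\mat{F}_c=(\mat{C}\mat{U})\mat{\Sigma}\mat{Z}^\top$, and $\mat{C}\mat{U}$ has orthonormal columns, so this is a thin SVD of $\mat{C}\mat{F}_c$ with the same positive singular values. Hence $\pol(\mat{C}\mat{F}_c)=\mat{C}\mat{U}\mat{Z}^\top=\mat{C}\pol(\mat{F}_c)$. The rank is also preserved, so $\cN_{\rm pol}$ commutes with $\mat{C}$ as well. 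The zero case is trivial because $\pol(0)=0$.

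No step is a real obstacle. The only point needing care is the coefficient matrix. The lifted ridge problem lives in a larger coefficient space, whose directions orthogonal to $\operatorname{col}\mat{C}$ see zero features, and positivity of the penalty is exactly what forces the solution onto $\operatorname{col}\mat{C}$. The normal-equation check above settles this.
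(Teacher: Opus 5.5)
Your proposal is correct and follows essentially the same route as the paper: positive homogeneity gives $\mat{H}=\mat{H}_c\mat{C}^\top$, the trace identity with $\mat{C}^\top\mat{C}=\Id$ gives the Frobenius norms, Gram matrices and penalties, and the thin SVD $(\mat{C}\mat{U})\mat{\Sigma}\mat{Z}^\top$ gives the Polar identity. The only difference is in identifying the coefficient matrix: you verify the primal normal equations for $\mat{C}\widehat{\mat{B}}_c$ and invoke uniqueness for $\tau>0$, whereas the paper cites Lemma~\ref{lem:shared-full-ridge-transfer}(i), which reads $\mat{C}\widehat{\mat{B}}_c$ off the dual formula $\widehat{\mat{B}}=\mat{H}_\Train^\top(\mat{H}_\Train\mat{H}_\Train^\top+\tau\Id)^{-1}\mat{T}_\Train$; both arguments are valid.
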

\begin{proof}
Positive homogeneity of ReLU gives $\mat{H}=\mat{H}_c\mat{C}^\top$ rowwise, including at
zero preactivations. The identity $\mat{C}^\top \mat{C}=\Id$ then gives equality of
logits and example-row Gram matrices, and
\begin{equation*}
 \|\mat{P}\mat{H}\|_F^2
 =\op{tr}(\mat{P}\mat{H}_c\mat{C}^\top \mat{C} \mat{H}_c^\top \mat{P}^\top)
 =\|\mat{P}\mat{H}_c\|_F^2.
\end{equation*}
The same calculation applies blockwise when the lift preserves that
partition. Lemma~\ref{lem:shared-full-ridge-transfer}(i) gives the complete
refit assertions and the zero-column assertion.

If $\mat{F}_c=\mat{U}\mat{\Sigma} \mat{Z}^\top$ is a thin positive SVD, then
$\mat{C}\mat{F}_c=(\mat{C}\mat{U})\mat{\Sigma} \mat{Z}^\top$ is one because $(\mat{C}\mat{U})^\top \mat{C}\mat{U}=\Id$.
This proves both source identities; if $\mat{F}_c=0$, both Polar factors vanish.
\end{proof}

\subsubsection{Explicit initialization and uniformly controlled reference dynamics}\label{sec:a-ref}

Use the construction scales of Definition~\ref{def:half-scales}.
Put $\rho_*=\epsilon^4$ and $\rho_g=\rho_*/8$. The fixed geometric clock
and the certified windows are
\begin{equation}\label{eq:clock}
\begin{aligned}
 q_{\rm clk}&=(9/10)^{1/32},& s&=(1-q_{\rm clk})/\lambda,\\
 J&=\left\lceil\frac{\log(4/\rho_*)}{-\log q_{\rm clk}}\right\rceil,
 &\rho_j&=q_{\rm clk}^{\,j}\quad(0\le j\le J).
\end{aligned}
\end{equation}
The certified windows are
\begin{equation}\label{eq:acr-windows}
 \cI_{\rm early}=\{1\le j\le J:\rho_j\ge9/10\},\qquad
 \cI_{\rm late}=\{1\le j\le J:\rho_*/2\le\rho_j\le\rho_*\}.
\end{equation}

\begin{lemma}[Clock and windows]\label{lem:acr-windows}
For $0<\epsilon<1/200$ and the prescribed windows~\eqref{eq:acr-windows}, $0<\lambda s<1/8$,
$\rho_j\ge\rho_J>\rho_g$ for $0\le j\le J$, and $\rho_J<\rho_*/2$.
Moreover $\cI_{\rm early}=\{1,\ldots,32\}$,
$\cI_{\rm late}$ contains at least sixteen integers, and
$\max\cI_{\rm early}<\min\cI_{\rm late}$.
\end{lemma}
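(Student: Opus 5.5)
The argument is a direct computation from the definitions in \eqref{eq:clock}; everything reduces to elementary logarithm estimates for $q_{\rm clk}=(9/10)^{1/32}$.

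First, $\lambda s=1-q_{\rm clk}$. Since $0<q_{\rm clk}<1$, this is positive. For the upper bound, $-\log q_{\rm clk}=\log(10/9)/32<0.1054/32<1/300$, so $1-q_{\rm clk}\le-\log q_{\rm clk}<1/8$.

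Second, the clock ordering. The sequence $\rho_j=q_{\rm clk}^j$ is decreasing, so $\rho_j\ge\rho_J$ on $0\le j\le J$. Write $\kappa=-\log q_{\rm clk}$. The definition of $J$ as a ceiling gives
\[
\log(4/\rho_*)/\kappa\le J<\log(4/\rho_*)/\kappa+1 .
\]
The left inequality yields $\rho_J\le\rho_*/4<\rho_*/2$. The right inequality yields $\rho_J>q_{\rm clk}\rho_*/4$. Since $q_{\rm clk}>1/2$, this exceeds $\rho_*/8=\rho_g$.

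Third, the early window. The condition $\rho_j\ge9/10=q_{\rm clk}^{32}$ holds exactly when $j\le32$. We need $J\ge32$, which follows from $\rho_J<\rho_*/2<9/10$ because $\epsilon<1/200$. Hence $\cI_{\rm early}=\{1,\ldots,32\}$.

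Fourth, the late window. The condition $\rho_*/2\le\rho_j\le\rho_*$ is equivalent to
\[
j\in\bigl[\log(1/\rho_*)/\kappa,\ \log(2/\rho_*)/\kappa\bigr].
\]
This is a closed interval of length $\log2/\kappa=32\log2/\log(10/9)>210$, so it contains far more than sixteen integers. Its right endpoint is at most $J$ because $\log(2/\rho_*)<\log(4/\rho_*)\le J\kappa$. Its left endpoint is at least $1$, and in fact large: $\rho_*=\epsilon^4<1/200^4$, so the endpoint is about $21/\kappa\gg32$. This also gives $\max\cI_{\rm early}=32<\min\cI_{\rm late}$.

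The only step needing care is the second: the lower bound $\rho_J>\rho_g$ relies on the ceiling overshooting by at most one factor of $q_{\rm clk}$. There is no real obstacle; I would just carry the numeric constants, such as $\log(10/9)\approx0.10536$, explicitly.
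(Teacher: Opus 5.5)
Your proof is correct and follows the paper's argument essentially step for step: the ceiling definition of $J$ gives $q_{\rm clk}\rho_*/4<\rho_J\le\rho_*/4$, the identity $q_{\rm clk}^{32}=9/10$ determines $\cI_{\rm early}$, and $\rho_*<9/10$ separates the two windows. The only cosmetic difference is in counting late indices. You use the length $\log2/\kappa>210$ of the logarithmic interval, whose left endpoint is at least, not about, $21/\kappa$. The paper instead takes the first $j_0$ with $\rho_{j_0}\le\rho_*$ and uses $q_{\rm clk}^{16}=\sqrt{9/10}>1/2$ to exhibit the sixteen consecutive indices $j_0,\ldots,j_0+15$.
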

\begin{proof}
\readability{We first locate the last arrival relative to $\rho_*$, then
count late arrivals starting at the first crossing of $\rho_*$.}
Let $a=-\log q_{\rm clk}=\log(10/9)/32$. Since
$q_{\rm clk}>7/8$, $\lambda s=1-q_{\rm clk}<1/8$. With
$J=\lceil\log(4/\rho_*)/a\rceil$,
\begin{readabilityblock}
\begin{equation}\label{eq:readable-e1-clock-terminal}
 q_{\rm clk}\rho_*/4<\rho_J\le\rho_*/4,
\end{equation}
\end{readabilityblock}
so $\rho_J>\rho_g$ and $\rho_J<\rho_*/2$. Also $q_{\rm clk}^{32}=9/10$.
Since $\rho_J\le\rho_*/4<9/10=\rho_{32}$, we have $J>32$ and hence
$\cI_{\rm early}=\{1,\ldots,32\}$.
Let $j_0$ be the first index with $\rho_{j_0}\le\rho_*$; it exists
\readability{by~\eqref{eq:readable-e1-clock-terminal}}, and $j_0\ge1$ since $\rho_0=1>\rho_*$.
Minimality gives, for $0\le k\le15$,
\begin{equation*}
 \rho_*\ge q_{\rm clk}^{j_0+k}
 >q_{\rm clk}^{k+1}\rho_*
 \ge q_{\rm clk}^{16}\rho_*>\rho_*/2,
\end{equation*}
since $q_{\rm clk}^{16}=\sqrt{9/10}>1/2$.
As $\rho_J<\rho_*/2$, these indices are strictly below $J$.
Thus $j_0,\ldots,j_0+15$ belong to $\cI_{\rm late}$. Finally
$\epsilon<1/200$ gives $\rho_*<9/10$, so every late index follows 32.
\end{proof}

Here $\eta=\eta_V$ is the head data step and $d_V=\eta_V\lambda_V$
is its per-step decoupled shrinkage; the continuation below allows both
parameters to vary about zero.
\begin{definition}[Actual update map]\label{def:half-update}
For head parameters $\eta,d_V\in\R$, write $\cU_{\eta,d_V}(\mat{W},\mat{V})$ for
\begin{equation}\label{eq:a-update}
 \bigl(q_{\rm clk}\mat{W}+s\,\cN_{\rm pol}(\mat{F}),\ (1-d_V)\mat{V}+\eta\,\cN_F(\mat{G})\bigr),
\end{equation}
with $\mat{F}=\mat{F}(\mat{W},\mat{V})$, $\mat{G}=\mat{G}(\mat{W},\mat{V})$ and the zero conventions of~\eqref{paper:optimizer}.
Use the departure conventions of
Section~\ref{paper:setup:updates}; the frozen-head reference sets $\eta=d_V=0$.
\end{definition}

For each pair $B$, let $\mathbf j_B$ be its shared inactive-coordinate vector.
Put $\mat{D}_C(x)=\diag((D_B(x))_{B\in\Lambda_c})$,
$\mat{J}_C=(\mathbf j_B^\top)_{B\in\Lambda_c}$,
$g=4\sqrt2R/\rho_g$ and $g_c=\sqrt2g$. The physical anchor is
\begin{equation*}
 \mat{Q}_{{\rm QR},c}=\pol(\mat{F}_{0,c})=\mat{C}_C^\top \mat{Q}_{\rm QR},\qquad
 \mat{H}_C^{{\rm QR},c}(x)=\mat{D}_C(x)\mat{Q}_{{\rm QR},c}x
 =\bigl(\mat{H}^{\rm QR}_{x,:}\mat{C}_C\bigr)^\top.
\end{equation*}
Here $\mat{H}_C^{{\rm QR},c}(x)$ is a feature column; its full table is
$\mat{H}_C^{{\rm QR},c}=\mat{H}^{\rm QR}\mat{C}_C$.
The uniform-source row bounds in Lemma~\ref{lem:acr-uniform}(ii) apply to
the physical pair rows with $\sqrt{P_B}=\sqrt2\sqrt{\pi_l}$.

\begin{definition}[Static convex initialization]\label{def:half-static}
Before initializing the network, solve the following fixed-mask program:
\begin{equation*}
 \begin{aligned}
 \cA&=\{\mat{A}\in\R^{(N/2)\times2p}:\mat{A}\mat{P}_E=\mat{A},\ \|\mat{A}\|_{\rm op}\le1\},\\
 \Phi_\epsilon(\mat{A})&=\frac1n\sum_{x\in\Train}
       \CE(R\bar{\mat{V}}_{C,c}\mat{D}_C(x)\mat{A}x,y^\star(x)),\\
 \mat{Q}_\epsilon&=\argmin_{\mat{A}\in\cA}\Phi_\epsilon(\mat{A}).
 \end{aligned}
\end{equation*}
Lemma~\ref{lem:round15-core-only-bootstrap} proves existence and uniqueness
for small $\epsilon$. This is a one-time training-data-dependent initializer,
not a projection or an additional operation in the trained optimizer.
The analytic lift is $\mat{Q}_\epsilon^{\rm ana}=\mat{C}_C\mat{Q}_\epsilon$.
For this static program only, $P_A(\cdot\mid x)$ denotes the softmax of
$R\bar{\mat{V}}_{C,c}\mat{D}_C(x)\mat{A}x$, and define
\begin{equation}\label{eq:r15-core-source}
 \widehat{\mat{F}}(\mat{A})_B=\frac{\sqrt{P_B}}n
    \sum_{x\in\Train}D_B(x)P_A(k_B\mid x)x^\top.
\end{equation}
The formal masks are used only to define this convex program; its minimizer
will be shown to have precisely these masks in the actual ReLU network.

The analytic core rows
(used only as the positive lift of the physical pair bank) are
\begin{equation}\label{eq:round15-init-core}
 \mat{W}^{0,\rm ana}_{C,l}=R \mat{Q}_{\epsilon,l}^{\rm ana}-g\mathbf j_l^\top,
 \qquad \mat{V}^{0,\rm ana}_{C,l}=-v\sqrt{\pi_l}\,T_{k_l}.
\end{equation}
The physical incoming rows are
$\mat{W}^0_{C,B}=R\mat{Q}_{\epsilon,B}-g_c\mathbf j_B^\top$, with heads
\begin{equation}\label{eq:round15-compressed-core}
 \mat{V}^0_{C,B}=-v\sqrt{P_B}\,T_{k_B}.
\end{equation} They equal
$\mat{C}_C^\top \mat{W}_C^{0,\rm ana}$ and $\mat{V}_C^{0,\rm ana}\mat{C}_C$, respectively.
Starting from these rows, the helper/query rows and the dormant rows of
Definition~\ref{def:acr-groups}, keep the heads frozen and update by
$\cU_{0,0}$. Let $\mat{W}_{C,j}^\circ$ be the physical core incoming block at
arrival $j$, so
$\mat{H}_{C,j}^{\rm phys}=\ReLU(\mat{X}_{\rm grid}(\mat{W}_{C,j}^\circ)^\top)$.
\end{definition}

\begin{definition}[Discrete reference feature tables]
The reference trajectory is constructed in
Lemma~\ref{lem:round15-core-only-bootstrap}; only its discrete arrivals
are used. The physical active feature table and its padded width-$M$
table are
\begin{equation}\label{eq:acr-actual-table}
 \mat{H}_j^{\rm act}=[\mat{H}_{C,j}^{\rm phys}\mid \mat{H}_{N,j}^{\rm phys}\mid \mat{H}_{Q,j}^{\rm phys}],
 \qquad \mat{H}_j^M=[\mat{H}_j^{\rm act}\mid0_{p^2\times(M-(N/2+2p))}].
\end{equation}
Each block is the ReLU feature table of the corresponding physical
rows at arrival $j$.
The analytic observer, a lift of this same active trajectory, is
\begin{equation}\label{eq:acr-ana-table}
 \mat{H}_j^{\rm ana}=[\mat{H}_{C,j}^{\rm phys}\mat{C}_C^\top\mid
                 \mat{H}_{N,j}^{\rm phys}\mid \mat{H}_{Q,j}^{\rm phys}].
\end{equation}
At every reference arrival the helper and query blocks are respectively
$\rho_jL_{\rm lift}\mat{J}_{\rm row}$ and $\rho_j\mat{H}_Q^{\rm wit}$; the dormant
feature columns vanish.
The complete refit with its prescribed penalty is computed from any of
$\mat{H}_j^M,\mat{H}_j^{\rm act},\mat{H}_j^{\rm ana}$; these tables have the same example-row
Gram matrix $\mat{H}\mat{H}^\top$ and Frobenius norms after the fixed isometric lift
and zero padding.
The point feature $\mat{H}_{C,j}^{\rm phys}(x)=h_{C,j}^{\rm phys}(x)$ is a
column vector, and $H_{B,j}(x)$ is its component indexed by $B$.
\end{definition}
\paragraph{Feature-table order.}
Analytic observer tables use the fixed order core/helper/query. Physical
width-$M$ tables append dormant zero columns at the right. The analytic lift
in~\eqref{eq:acr-ana-table} preserves the example-row Gram matrix; zero
padding preserves it as well.

\begin{lemma}[Static convex core and exact reference dynamics]
\label{lem:round15-core-only-bootstrap}
Fix a prime $p\ge101$ and $\lambda>0$. For all sufficiently small $\epsilon>0$,
the static program just defined has a unique
minimizer $\mat{Q}_\epsilon$. It is a supported partial isometry,
$\mat{Q}_\epsilon^\top \mat{Q}_\epsilon=\mat{P}_E$, and $\mat{Q}_\epsilon\to \mat{Q}_{{\rm QR},c}$
as $\epsilon\downarrow0$.

The actual frozen-head reference initialized above has, for every
$0\le j\le J$, the exact core weights and feature vector
\begin{equation}\label{eq:r15-core-table}
 \mat{W}_{C,j}^\circ=R\mat{Q}_\epsilon-\rho_jg_c\mat{J}_C,
 \qquad \mat{H}_{C,j}^{\rm phys}(x)=R \mat{D}_C(x)\mat{Q}_\epsilon x.
\end{equation}
Every departure source has rank $r$. The grid gates are the prescribed ones:
core row $B$ is active exactly on $D_B$, helper row $u$ exactly when the
first operand is $u$, query row $u$ exactly on $\Test_u$, and dormant rows never.
Uniformly in $j\le J$, the physical core table divided by $R$ converges
to $\mat{H}_C^{{\rm QR},c}$; on each active mask its values satisfy
\begin{equation*}
 \tfrac15 R\sqrt{P_B}<H_{B,j}(x)<9R\sqrt{P_B}.
\end{equation*}
The reference training logits, and hence its training CE, are constant
throughout this horizon.
\end{lemma}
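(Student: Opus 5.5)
The proof has two parts. The first identifies the minimizer $\mat{Q}_\epsilon$ of the static program as the unique fixed point of $\mat{A}\mapsto\pol(\widehat{\mat{F}}(\mat{A}))$. The second uses this fixed-point identity to check the frozen-head recursion exactly, by induction on $j$.

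\emph{Static program.} $\Phi_\epsilon$ is convex in $\mat{A}$, since it is a composition of cross-entropy with a linear map, and $\cA$ is compact and convex, so a minimizer exists. The chain rule gives $-\nabla\Phi_\epsilon(\mat{A})=Rv\,\widehat{\mat{F}}(\mat{A})$. Indeed, the head column $-v\sqrt{P_B}T_{k_B}$ paired with $e_y-P_A$ gives $v\sqrt{P_B}P_A(k_B\mid x)$, because label exclusion (Lemma~\ref{lem:acr-dict}(ii)) gives $k_B\neq y^\star(x)$ on the mask and $T_k^\top(e_y-P)=-P_k$ there. The rows of $\widehat{\mat{F}}(\mat{A})$ lie in $E$. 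By convexity, $\mat{A}$ minimizes $\Phi_\epsilon$ over $\cA$ iff it maximizes the linear functional $\ip{\widehat{\mat{F}}(\mat{A})}{\cdot}$ over $\cA$. When $\widehat{\mat{F}}(\mat{A})$ has rank $r$ on $E$, this maximizer is unique and equals $\pol(\widehat{\mat{F}}(\mat{A}))$, the dual of the nuclear norm with a full-rank argument. Hence minimizers coincide with fixed points of $\Psi(\mat{A})=\pol(\widehat{\mat{F}}(\mat{A}))$ on $\cA$, and every fixed point is a supported partial isometry with $\mat{A}^\top\mat{A}=\mat{P}_E$.

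The logits in $\Phi_\epsilon$ are $Rv(\cdot)=\epsilon(\cdot)$ with $\|\mat{A}\|_{\rm op}\le1$, so $P_A$ is within $O(\epsilon)$ of uniform and $\widehat{\mat{F}}(\mat{A})=p^{-1}\mat{F}_{0,c}+O(\epsilon)$. Its $\mat{A}$-Lipschitz constant is also $O(\epsilon)$. By Lemma~\ref{lem:acr-uniform}(i) and Lemma~\ref{lem:acr-isometry}, $\sigma_{\min,E}(\mat{F}_{0,c})>1/(5p^2)$. The supported Polar factor is therefore smooth, with Lipschitz constant of order $p^2$, on an $O(\epsilon)$ neighbourhood. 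So for small $\epsilon$ the map $\Psi$ is a contraction of $\cA$ into a small ball around $\mat{Q}_{{\rm QR},c}=\pol(\mat{F}_{0,c})$. Banach's theorem gives a unique fixed point, hence a unique minimizer, and $\mat{Q}_\epsilon\to\mat{Q}_{{\rm QR},c}$ as $\epsilon\downarrow0$. I expect this to be the main obstacle: the Polar factor must be certified uniformly Lipschitz on the whole image of $\widehat{\mat{F}}$, and the first-order optimality must be converted into the Polar identity without assuming the minimizer is interior. I would first try the contraction route above, using the explicit gap $1/(5p^2)$.

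\emph{Exact dynamics.} I induct on $j$ with the hypothesis \eqref{eq:r15-core-table}, together with helper rows $\rho_jL_{\rm lift}(2e_u-\one;0)$, query rows $\rho_j\mat{W}^0_{Q,u}$ and dormant rows $\rho_j(-\one^\top)$.

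\textbf{Gates.} On $D_B(x)=1$, $\mathbf j_B^\top x=0$, and $R\mat{Q}_\epsilon x$ is close to $R\mat{Q}_{{\rm QR},c}x$. The latter lies in $(\tfrac{13}{60},\tfrac{17}{2})R\sqrt{P_B}$ by Lemma~\ref{lem:acr-uniform}(ii), which gives the bounds $(\tfrac15,9)R\sqrt{P_B}$ for small $\epsilon$. Off the mask, $\mathbf j_B^\top x\ge1$ and $\rho_jg_c\ge\rho_Jg_c>\rho_gg_c=8R$ by Lemma~\ref{lem:acr-windows}. Since $|R\mat{Q}_\epsilon x|\le\sqrt2R$, the preactivation is negative. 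Helper, query and dormant preactivations are positive multiples of their initial ones, so their gates are those of Definitions~\ref{def:half-helpers}--\ref{def:half-query} and Lemma~\ref{lem:acr-query}.

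\textbf{Sources.} By \eqref{eq:aind-sources}, helper, query and dormant rows of $\mat{F}_j$ vanish: helpers have zero heads, and queries and dormant rows are inactive on $\Train$. The training logits equal those of the static program at $\mat{Q}_\epsilon$, because helpers contribute nothing and queries vanish on $\Train$. So the core block of $\mat{F}_j$ is $v\widehat{\mat{F}}(\mat{Q}_\epsilon)$. This has rank $r$, and $\pol(\mat{F}_j)$ equals $\mat{Q}_\epsilon$ on the core and zero elsewhere.

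\textbf{Update.} The update gives $\mat{W}_{C,j+1}^\circ=q_{\rm clk}(R\mat{Q}_\epsilon-\rho_jg_c\mat{J}_C)+s\mat{Q}_\epsilon/\sqrt r$. Since $s/\sqrt r=(1-q_{\rm clk})/(\lambda\sqrt r)=(1-q_{\rm clk})R$ and $\rho_{j+1}=q_{\rm clk}\rho_j$, this is $R\mat{Q}_\epsilon-\rho_{j+1}g_c\mat{J}_C$. The other rows are multiplied by $q_{\rm clk}$.

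\textbf{Features and loss.} The core features $R\mat{D}_C(x)\mat{Q}_\epsilon x$ do not depend on $j$, so the training logits and CE are constant. The convergence of the core table divided by $R$ to $\mat{H}_C^{{\rm QR},c}$ is uniform in $j$ because the table itself is $j$-independent.
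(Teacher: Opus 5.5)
Your proof is correct. The dynamics half matches the paper's induction exactly: the gates come from the shield $\rho_Jg_c>8R>\sqrt2R$, the noncore sources vanish, the core source is $v\widehat{\mat{F}}(\mat{Q}_\epsilon)$ with Polar factor $\mat{Q}_\epsilon$, and the clock identity $s/\sqrt r=(1-q_{\rm clk})R$ closes the step.

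Where you genuinely differ is uniqueness of the minimizer. The paper needs no Lipschitz estimate in $\mat{A}$. Once $\widehat{\mat{F}}(\mat{A})$ has full supported rank for every $\mat{A}\in\cA$, the variational inequality makes each minimizer equal to $\pol(\widehat{\mat{F}}(\mat{A}))$. If $\mat{A},\mat{B}$ are two minimizers, convexity keeps $\Phi_\epsilon$ constant on their segment, so $\ip{\widehat{\mat{F}}(\mat{A})}{\mat{B}-\mat{A}}_F=0$. Then $\mat{B}$ also maximizes the linear functional $\ip{\widehat{\mat{F}}(\mat{A})}{\cdot}_F$ on $\cA$, and uniqueness of that maximizer forces $\mat{B}=\mat{A}$. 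Convergence $\mat{Q}_\epsilon\to\mat{Q}_{{\rm QR},c}$ then uses only continuity of the supported Polar map at $\mat{F}_{0,c}$.

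Your Banach route replaces this with two quantitative inputs:
\begin{itemize}
\item an $O(\epsilon)$ Lipschitz bound for $\mat{A}\mapsto\widehat{\mat{F}}(\mat{A})$;
\item an $O(1/\sigma_{\min,E}(\mat{F}_{0,c}))$ Lipschitz bound for the supported Polar map on the image of $\widehat{\mat{F}}$.
\end{itemize}
After restricting to $E$, the second is supplied by the perturbation inequality $\|\pol(\mat{X})-\pol(\mat{Y})\|_F\le2\|\mat{X}-\mat{Y}\|_F/(\sigma_{\min}(\mat{X})+\sigma_{\min}(\mat{Y}))$ from the proof of Lemma~\ref{lem:afr-den}(iii). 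In exchange you get existence without compactness and a convergent Picard iteration; the lemma needs neither. The paper's argument is shorter and purely qualitative.

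Two small remarks. First, your worry about interiority is unnecessary: the first-order condition $\ip{-\epsilon\widehat{\mat{F}}(\mat{A})}{\mat{B}-\mat{A}}_F\ge0$ for all $\mat{B}\in\cA$ is already the linear-maximization statement you use. Second, at uniform probabilities $\widehat{\mat{F}}(\mat{A})$ equals $\mat{F}_{0,c}$ itself, not $p^{-1}\mat{F}_{0,c}$, because the factor $1/p$ already sits inside $b_l$. This slip is harmless, since the Polar factor is scale invariant.
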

\begin{proof}
\readability{The proof has three steps. Uniform closeness to the source
$\mat{F}_{0,c}$ gives full supported rank throughout the static program.
First-order optimality then identifies its minimizer as a Polar factor
and proves uniqueness. Finally, strict gate signs let us substitute that
same factor into the actual update and verify~\eqref{eq:r15-core-table}
by induction.}

\paragraph{The static program.}
The objective $\Phi_\epsilon$ is continuous and convex on the compact convex
set $\cA$, so it has a minimizer. Lemma~\ref{lem:acr-dict}(iv)
and the fixed pair lift give $\bar{\mat{V}}_{C,c}\bar{\mat{V}}_{C,c}^\top=(v^2/p)\mat{P}_c$.
For every $\mat{A}\in\cA$, therefore,
\begin{readabilityblock}
\begin{equation}\label{eq:readable-e1-static-logit-bound}
 \|R\bar{\mat{V}}_{C,c}\mat{D}_C(x)\mat{A}x\|_2\le\epsilon\sqrt{2/p}.
\end{equation}
\end{readabilityblock}
Label exclusion gives $-\nabla_{\mat{A}}\Phi_\epsilon(\mat{A})=\epsilon\widehat{\mat{F}}(\mat{A})$,
with the fixed-mask source~\eqref{eq:r15-core-source}.
\readability{Each softmax coordinate has gradient $\ell_1$ norm at most
$1/2$, so~\eqref{eq:readable-e1-static-logit-bound} controls its deviation
from $1/p$. Using $\sum_BP_BD_B(x)\le1$ and $\|x\|_2=\sqrt2$ gives}
\begin{readabilityblock}
\begin{equation}\label{eq:readable-e1-static-source-bound}
 \|\widehat{\mat{F}}(\mat{A})-\mat{F}_{0,c}\|_F
 \le\sqrt2\sup_{x,k}|P_A(k\mid x)-1/p|
 \le\epsilon/\sqrt p.
\end{equation}
\end{readabilityblock}
\readability{All sources are supported on $E$. The uniform bound
\eqref{eq:readable-e1-static-source-bound}, together with the singular-value
floor in Lemma~\ref{lem:acr-uniform}(i) and the pair isometry, therefore
gives full supported rank for every $\mat{A}\in\cA$ when $\epsilon$ is small.}

\paragraph{Optimality identifies the Polar factor.}
At a minimizer, the first-order inequality says that $\mat{A}$ maximizes
$\ip{\widehat{\mat{F}}(\mat{A})}{\mat{B}}_F$ over $\mat{B}\in\cA$.
For a source of rank $r$ on $E$, the unique maximizer is its supported
Polar factor: in an SVD, each positive singular value forces
$\mat{B}v_i=u_i$ in the inequality $\ip{u_i}{\mat{B}v_i}\le1$.
\readability{Thus every minimizer satisfies the fixed-point and isometry identities}
\begin{readabilityblock}
\begin{equation}\label{eq:readable-e1-static-polar-fixedpoint}
 \mat{A}=\pol(\widehat{\mat{F}}(\mat{A})),\qquad
 \mat{A}^\top\mat{A}=\mat{P}_E.
\end{equation}
\end{readabilityblock}
If $\mat{A},\mat{B}$ are minimizers, convexity makes $\Phi_\epsilon$ constant on
their segment. Thus $\ip{\widehat{\mat{F}}(\mat{A})}{\mat{B}-\mat{A}}_F=0$, so $\mat{B}$ also
maximizes $\ip{\widehat{\mat{F}}(\mat{A})}{\cdot}_F$ on $\cA$.
The unique-maximizer statement above gives $\mat{B}=\mat{A}$.
\readability{Apply~\eqref{eq:readable-e1-static-polar-fixedpoint} at this
unique minimizer. Continuity of the supported Polar map at
$\mat{F}_{0,c}$ and~\eqref{eq:readable-e1-static-source-bound} give
$\mat{Q}_\epsilon\to\mat{Q}_{{\rm QR},c}$.}

\paragraph{Verification of the reference trajectory.}
The anchor row bounds of Lemma~\ref{lem:acr-uniform}(ii) transfer to
$\mat{Q}_\epsilon$, giving the stated active values for small $\epsilon$.
For an inactive mask, $\mat{J}_Cx\ge1$ in that row while
$|(\mat{Q}_\epsilon x)_B|\le\sqrt2$. Lemma~\ref{lem:acr-windows} and the shield give
$\rho_jg_c\ge\rho_Jg_c>16q_{\rm clk}R>\sqrt2R$.
Thus every candidate core preactivation in~\eqref{eq:r15-core-table}
has the prescribed strict sign.

Helpers have zero heads; queries are strictly inactive on training;
dormant rows are negative everywhere. Their sources are zero and they
decay exactly. A zero source row has zero Polar row, and appending such
rows preserves source rank and normalization. Noncore training logits
therefore vanish. The core training logits are exactly the fixed-mask
logits at $\mat{Q}_\epsilon$, with source $v\widehat{\mat{F}}(\mat{Q}_\epsilon)$.
\readability{By~\eqref{eq:readable-e1-static-polar-fixedpoint} and rank $r$,
its normalized Polar factor is $\mat{Q}_\epsilon/\sqrt r$.}

Finally $s/\sqrt r=(1-q_{\rm clk})R$, so substituting the proposed incoming
weights into the actual feature update gives
\begin{readabilityblock}
\begin{equation}\label{eq:readable-e1-reference-update}
 q_{\rm clk}(R\mat{Q}_\epsilon-\rho_jg_c\mat{J}_C)
       +s\mat{Q}_\epsilon/\sqrt r
 =R\mat{Q}_\epsilon-\rho_{j+1}g_c\mat{J}_C.
\end{equation}
\end{readabilityblock}
\readability{The initialization gives the case $j=0$, and
\eqref{eq:readable-e1-reference-update} supplies the induction step.
The gate and source checks above therefore prove the exact trajectory,
gates and rank through $J$.}
The feature convergence now follows directly from $\mat{Q}_\epsilon\to
\mat{Q}_{{\rm QR},c}$, uniformly in all arrivals, without a drift accumulation.
The constant core logits prove the final CE assertion.
\end{proof}
\end{proofrevision}

\begin{proofrevision}
\subsection{Claim II: useful-representation consequences on the same trajectory}\label{sec:a-claim-representation}
\end{proofrevision}

\begin{proofrevision}
\subsubsection{The Fourier-fraction bound}\label{sec:fourier}
\end{proofrevision}
\begin{proofrevision}
We derive the supporting source and energy estimates for $p\ge97$, then prove
the $p\ge101$ anchor bound and net gain needed by Theorem~\ref{paper:thm:half}.
The transfer uses the discrete actual-arrival limits of Lemma~\ref{lem:acr-table-limits}.
All terminating decimals below are exact rationals.
\readability{The argument first isolates the anchor's common component in
\eqref{eq:orthoden}--\eqref{eq:numerator}, then controls its remainder by
comparing the two Polar factors. Proposition~\ref{prop:afr-netgain} combines
these estimates and transfers the resulting Fourier bound to the arrivals.}

\proofstepheading{The anchor and common-component orthogonality}\label{sec:afr-anchor}

\label{sec:notation-halfgrid-energy}
\begin{definition}[Anchor energy and masked incoming map]\label{def:half-anchor-energy}
Put $K_0=\sqrt{u(1-u/p^2)}$, $a_p=(p^2+3)/[3(3p^2+1)]$ and
$\mat{\cJ}_l=\sqrt\varpi g_l$. Let
$\bar{\mat{F}}=\sqrt\varpi \mat{D}_{\rm grid}^\top \mat{X}_{\rm grid}$, $\bar{\mat{Q}}=\pol(\bar{\mat{F}})$,
$\mat{G}=p^3\mat{F}_{\rm unif}$, $\bar\alpha=(p+1)/(4p)$ and
$m=\ip{\mat{Q}_{\rm QR}}{\bar{\mat{F}}}_F/u$.
The masked incoming map is
$\cL(\mat{T})=\mat{D}_{\rm grid}\odot(\mat{X}_{\rm grid}\mat{T}^\top)$ for an incoming matrix $\mat{T}$.
The mask indicators and the full-bank average $\E$ are as above,
equivalently evaluated on the positive pair lift with the original weights.
\end{definition}
Here $\mat{\cJ}$ is a table, distinct from the scalar line sum $\mathsf J$.
For row $l$ write $\mat{Q}_{{\rm QR},l}=(w_l^\top,v_l^\top)$ in the two operand blocks.
Then its masked column is
\begin{equation}\label{eq:afr-column}
 H_l(a,z)=1_{A_l}(a)1_{B_l}(z)[w_l(a)+v_l(z)].
\end{equation}
\begin{lemma}[Full-mask common-component identity]\label{lem:afr-shaped}
For the uniform-source anchor $\mat{H}=\mat{H}^{\rm QR}$,
\begin{equation}\label{eq:orthoden}
 \|\Cgrid \mat{H}\|_F^2=m^2K_0^2+\|\Cgrid(\mat{H}-m\mat{\cJ})\|_F^2,
 \qquad \|\Cgrid\mat{\cJ}\|_F^2=K_0^2.
\end{equation}
Moreover,
\begin{equation}\label{eq:numerator}
 \|\Padd \mat{H}\|_F^2=a_pK_0^2m^2+\|\Padd(\mat{H}-m\mat{\cJ})\|_F^2,
 \qquad \|\Padd\mat{\cJ}\|_F^2=a_pK_0^2.
\end{equation}
For every incoming matrix $\mat{T}$, the centered masked map satisfies
\begin{equation*}
 \|\Cgrid\cL(\mat{T})\|_F^2\le(p-u/p)\|\mat{T}\|_F^2.
\end{equation*}
\end{lemma}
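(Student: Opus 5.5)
The plan is to prove every identity in the statement as a Pythagorean decomposition. After the norm identities for $\mat{\cJ}$, what remains is a cross term of the form $\ip{\Cgrid\mat{H}}{\Cgrid\mat{\cJ}}_F=mK_0^2$, and its analogue with $\Padd$ in place of $\Cgrid$.

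\textbf{The centered identity.} Since $\Cgrid$ is an orthogonal projector, $\ip{\Cgrid\mat{H}}{\Cgrid\mat{\cJ}}_F=\sum_l\sqrt\varpi\,[\ip{H_l}{g_l}-p^{-2}(\one^\top H_l)(\one^\top g_l)]$.
\begin{itemize}
\item By \eqref{eq:afr-column}, $H_l$ is supported on the mask $g_l$. Hence $\ip{H_l}{g_l}=\one^\top H_l=\mat{Q}_{{\rm QR},l}\sum_x g_l(x)x$, and $\one^\top g_l=u$.
\item Summing over $l$ with the weight $\sqrt\varpi$ therefore gives $(1-u/p^2)\ip{\mat{Q}_{\rm QR}}{\bar{\mat{F}}}_F=mu(1-u/p^2)=mK_0^2$.
\item The same computation with $\mat{H}$ replaced by $\mat{\cJ}$ gives $\|\Cgrid\mat{\cJ}\|_F^2=\varpi N(u-u^2/p^2)=K_0^2$.
\item Expanding $\|\Cgrid(\mat{H}-m\mat{\cJ})\|_F^2$ then yields \eqref{eq:orthoden}.
\end{itemize}

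\textbf{The addition identity: the norm of $\Padd\mat{\cJ}$.} For \eqref{eq:numerator}, I first compute $\|\Padd\mat{\cJ}\|_F^2=\sum_{k\ne0}\ip{\psi_{k,k}}{\mat{\cK}\psi_{k,k}}$ with $\mat{\cK}=\mat{\cJ}\mat{\cJ}^\top$. By \eqref{eq:afr-mask-kernel} this equals $\sum_{k\ne0}\lambda(k,k)=\frac{p-1}{p^2(p-1)}\sum_{v\ne0}\omega_v\omega'_v$. Since $\omega'_v=\omega_v$ for $v\ne0$, this is $\Sigma_2/p^2$. Using \eqref{eq:readable-e1-mask-fourth-moment} and $K_0^2=u(1-u/p^2)$, I then check that the ratio equals $a_p$.

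\textbf{The addition identity: the cross term.} It remains to show $\ip{\mat{H}}{\Padd\mat{\cJ}}_F=a_pK_0^2m$. Writing $\varphi_l=\Padd\mat{\cJ}_l$, a mean-zero function of $a+b$, the masked form of $\mat{H}$ gives $\ip{\mat{H}}{\Padd\mat{\cJ}}_F=\ip{\mat{Q}_{\rm QR}}{\widetilde{\mat{F}}}_F$, where $\widetilde{\mat{F}}_l=\sum_x g_l(x)\varphi_l(x)x^\top$. Since $\mat{Q}_{\rm QR}=\mat{F}_{\rm unif}(\mat{F}_{\rm unif}^\top\mat{F}_{\rm unif})^{+1/2}$, both this pairing and $\ip{\mat{Q}_{\rm QR}}{\bar{\mat{F}}}_F$ are traces of the supported inverse root \eqref{eq:afr-root} against the $2p\times2p$ operators $\mat{F}_{\rm unif}^\top\widetilde{\mat{F}}$ and $\mat{F}_{\rm unif}^\top\bar{\mat{F}}$. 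I compute these operators plane by plane, exactly as in Lemma~\ref{lem:afr-cov}, using the translation parametrization of Lemma~\ref{lem:afr-counts}(iii). The translations kill cross-frequency terms, so both operators are block diagonal on the planes $(e^L_t,e^R_t)$ and on the constant line.

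The key claim, and the main obstacle, is that on each nonzero plane $\widetilde{\mat{F}}$ contributes exactly $a_p(1-u/p^2)$ times what $\bar{\mat{F}}$ contributes. On the constant line both contributions should vanish, because $\varphi_l$ has mean zero. The first thing I would try is to show that the projection $\Padd$ acts on each masked-operand mode by the scalar $\lambda(k,k)$-weight, which after averaging over $\mu$ is the same ratio $\Sigma_2/(p^2K_0^2)$ on every plane. If exact proportionality fails at this stage, I would fall back on isolating the discrepancy as an operator orthogonal to the support of $\mat{Q}_{\rm QR}$, in particular the constant-minus direction $(\one,-\one)$.

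\textbf{The operator bound.} For every $\mat{T}$, Cauchy--Schwarz on each row gives $\|\Cgrid\cL(\mat{T})\|_F^2\le\sum_l\|g_l\odot(\mat{X}_{\rm grid}\mat{T}_l^\top)\|^2$. Each term equals $\sum_x g_l(x)(\mat{T}_lx)^2$. Expanding $x=(e_a,e_b)$, I would bound it by the mask marginals $|B_l|,|A_l|$, and then use centering to remove the $u/p$ mean part, giving the factor $(p-u/p)$. I expect this step to be routine once the mask sizes from Lemma~\ref{lem:acr-dict}(i) are inserted.
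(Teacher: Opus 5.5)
Your centered identity, which amounts to the paper's adjoint identity $\cL^*\Cgrid\mat{\cJ}=(1-u/p^2)\bar{\mat{F}}$, is correct. So is your evaluation $\|\Padd\mat{\cJ}\|_F^2=\sum_{k\ne0}\lambda(k,k)=\Sigma_2/p^2=a_pK_0^2$ through the mask-kernel eigenvalues, which is a valid alternative to the paper's addition-count computation. The gap is the additive cross term $\ip{\Padd\mat{H}}{\Padd\mat{\cJ}}_F=a_pmK_0^2$. This is the real content of the lemma, and you only announce it as the main obstacle.

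Your bookkeeping for it is also wrong on the constant line. There $\bar{\mat{F}}e_c=\nu_c\bar{\mat{Q}}_c\ne0$, and $\widetilde{\mat{F}}_le_c=\sqrt{2/p}\,\sqrt\varpi\,\ip{g_l}{\Padd g_l}=\sqrt{2/p}\,\sqrt\varpi\,a_pK_0^2\ne0$: $\varphi_l$ has mean zero, but $g_l\varphi_l$ does not. If both constant-line terms vanished, plane-by-plane proportionality would give $a_p(1-u/p^2)$ times only the nonconstant part of $\ip{\mat{Q}_{\rm QR}}{\bar{\mat{F}}}_F$, which is not $a_pmK_0^2$. What is true is that the same ratio holds on the constant line, because $\|\Padd g_l\|^2=a_pK_0^2$ for every mask. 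The mechanism you suggest for the nonzero planes is not correct: $\Padd$ does not act by a scalar on masked operand modes, and the ratio it would produce, $\Sigma_2/(p^2K_0^2)=a_p$, is not the required $a_p(1-u/p^2)$. Your fallback targets the wrong orthogonality: the discrepancy annihilates $(\one,\pm\one)$ but is nonzero on $E$.

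The missing idea is the following. Write $\cL^*\Padd\mat{\cJ}=a_p(1-u/p^2)\bar{\mat{F}}+\mat{Z}$, where row $l$ of $\mat{Z}$ is $\sqrt\varpi(\psi_A^\top,\psi_B^\top)$. Here $\psi_A$ is the restriction to $A_l$ of $a\mapsto\sum_{z\in B_l}(\Padd g_l)(a,z)$, centered on $A_l$, and $\psi_B$ is defined symmetrically. You need $\mat{F}_{\rm unif}^\top\mat{Z}=0$, an orthogonality in the neuron index rather than in input space. The paper obtains it from two facts.
\begin{enumerate}
\item $\psi_A$ does not depend on the translation of $B_l$. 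Pairing with the count blocks therefore reduces, through their $B$-translation averages (which are affine in $\alpha=\one_{A_l}$), to $\mathbb{E}[\psi_A]$ and $\mat{M}_A=\mathbb{E}[\alpha\psi_A^\top]$.
\item $\mat{M}_A$ commutes with the permutations of the two-transitive affine group, so $\mat{M}_A=r\Id+s\one\one^\top$. The conditions $\mat{M}_A\one=0$ and $\op{tr}\mat{M}_A=0$ force it to vanish, and then $\one^\top\mat{M}_A=b\,\mathbb{E}[\psi_A^\top]$ gives $\mathbb{E}[\psi_A]=0$; likewise for $B$.
\end{enumerate}
Then $\ip{\mat{Q}_{\rm QR}}{\mat{Z}}_F=\op{tr}\bigl((\mat{F}_{\rm unif}^\top\mat{F}_{\rm unif})^{+1/2}\mat{F}_{\rm unif}^\top\mat{Z}\bigr)=0$.

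Separately, your operator bound fails as written. Your first inequality drops $\Cgrid$, and the uncentered row map already has squared norm exactly $p$, attained by profiles constant on $A_l$ and $B_l$ in the ratio $c:b$. No later step can then recover the $-u/p$. You must keep the centering and compute each row's centered energy exactly, $c\|w^o\|^2+b\|v^o\|^2+K_0^2(w_0+v_0)^2$. Weighted Cauchy--Schwarz then gives $\max\{b,c,K_0^2(1/b+1/c)\}=p-u/p$.
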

\begin{proof}
\readability{The rectangle calculation gives the indicator energies and
the masked-map bound. The remaining task is to identify the two cross
moments that make the energy decompositions orthogonal.}

\emph{Rectangle geometry.}
Every rectangle has size $u$, so its centered indicator has squared norm
$K_0^2$. Common dilation and independent translations reduce addition counts
to $t(y)=y+1$ for $0\le y<b$, $t(y)=2b-y$ for $b\le y<2b$, and $t(2b)=0$.
Thus $\sum_y t(y)^2=up/3$ and
$\Padd g(a,z)=t(a+z)/p-u/p^2$, proving the two indicator norms.

For one incoming row, restrict its operand profiles to $A,B$, and decompose
them into active means $w_0,v_0$ and mean-zero parts $w^o,v^o$.
The centered masked squared norm is
\begin{equation*}
 c\|w^o\|^2+b\|v^o\|^2+K_0^2(w_0+v_0)^2.
\end{equation*}
Its input squared norm is at least
$\|w^o\|^2+\|v^o\|^2+bw_0^2+cv_0^2$.
Weighted Cauchy--Schwarz gives the row-map norm squared
$\max\{b,c,K_0^2(1/b+1/c)\}=p-u/p$. Summing rows proves the map bound.

\emph{The full-mask cross moment.}
Let $\cL^*$ be the Frobenius adjoint of $\cL$, and put
$\mat{Z}=\cL^*\Padd\mat{\cJ}-a_p(1-u/p^2)\bar{\mat{F}}$.
For each mask write $\alpha=\one_A$, $\beta=\one_B$ and let $\psi_A$ be
the centered restriction to $A$ of $a\mapsto\sum_{z\in B}(\Padd g_l)(a,z)$,
extended by zero outside $A$; $\psi_B$ reverses operands.
Set $\mat{M}_A=\expct{\alpha\psi_A^\top}$ and $\mat{M}_B=\expct{\beta\psi_B^\top}$.
The adjoint map gives
\begin{readabilityblock}
\begin{equation}\label{eq:readable-e2-adjoint}
 \cL^*\Cgrid\mat{\cJ}=(1-u/p^2)\bar{\mat{F}},
 \qquad
 \mat{Z}_l=\sqrt\varpi(\psi_A^\top,\psi_B^\top).
\end{equation}
\end{readabilityblock}
Indeed each active restriction has mean $a_pK_0^2/b$ or $a_pK_0^2/c$,
because its total is $\|\Padd g_l\|^2=a_pK_0^2$.
For fixed slope and $A$,
\begin{equation*}
 \sum_{z\in B}(\Padd g_l)(a,z)
 =\frac1p\sum_{a'\in A}|B\cap(B+a'-a)|-\frac{cu}{p^2},
\end{equation*}
so $\psi_A$ is independent of the translation of $B$.
The analogous statement holds for $\psi_B$.

Full-mask averaging is invariant under every common affine map of the field.
Under $(a,z)\mapsto(\sigma a+t,\sigma z+t)$, the addition coordinate becomes
$\sigma(a+z)+2t$. Thus $\Padd$ commutes with the action and the two
restricted profiles transform covariantly.
Consequently $\mat{M}_A$ and $\mat{M}_B$ commute with its permutation matrices.
The affine group is two-transitive, so each matrix has the form
$r\Id+s\one\one^\top$. For $\mat{M}_A$, both $\mat{M}_A\one=0$ and
$\op{tr}\mat{M}_A=\expct{\ip{\alpha}{\psi_A}}=0$.
These equations force $r=s=0$. The same argument gives $\mat{M}_B=0$.
Since $\one^\top \mat{M}_A=b\expct{\psi_A^\top}$ and
$\one^\top \mat{M}_B=c\expct{\psi_B^\top}$, both profile expectations also vanish.

The exact QR count formulas, averaged over the translation of $B$, give
\begin{equation*}
 \expctover{B}{\mathbf n^L}=\tfrac c2(1-p^{-1})\alpha,
 \qquad
 \expctover{B}{\mathbf n^R}=\tfrac c{2p}(b\one-\alpha+\mat{C}_+\alpha).
\end{equation*}
For the opposite conditioning they give
\begin{equation*}
 \expctover{A}{\mathbf n^R}=\tfrac b2(1-p^{-1})\beta,
 \qquad
 \expctover{A}{\mathbf n^L}=\tfrac b{2p}(c\one-\beta+\mat{C}_-\beta).
\end{equation*}
The source definition gives
$\mat{G}_l=(p/b)\sqrt\varpi(\mathbf n^{L\top},\mathbf n^{R\top})$.
Since $\mat{M}_A=\mat{M}_B=0$ and $\mat{C}_\pm$ are fixed linear operators, the four
averaged count identities therefore imply $\mat{G}^*\mat{Z}=0$.
Writing the supported Polar factor as
$\mat{Q}_{\rm QR}=\mat{G}(\mat{G}^*\mat{G})^{\dagger/2}$ therefore gives
$\ip{\mat{Q}_{\rm QR}}{\mat{Z}}_F=0$.

\readability{Using \eqref{eq:readable-e2-adjoint} and
$\ip{\mat{Q}_{\rm QR}}{\mat{Z}}_F=0$ in the definition of $\mat{Z}$ gives
the required cross moments:}
\begin{readabilityblock}
\begin{equation}\label{eq:readable-e2-cross-moments}
 \ip{\Cgrid \mat{H}}{\Cgrid\mat{\cJ}}=mK_0^2,
 \qquad
 \ip{\Padd \mat{H}}{\Padd\mat{\cJ}}=a_pmK_0^2.
\end{equation}
\end{readabilityblock}
\readability{Together with the indicator norms, \eqref{eq:readable-e2-cross-moments}
identifies $m\Cgrid\mat{\cJ}$ and $m\Padd\mat{\cJ}$ as the respective
orthogonal projections. Pythagoras gives \eqref{eq:orthoden} and \eqref{eq:numerator}.}
\end{proof}

\proofstepheading{Same-source energy estimates}\label{sec:afr-energy}

\begin{definition}[Binary source comparison]\label{def:half-comparison}
For $t\ne0$, subscripts $t$ indicate the ordered input pair $(e_t^L,e_t^R)$.
Put $\varepsilon_p=(p^2+3)/[3(p-1)^3]$ and
$\delta_Q^2=\|\mat{Q}_{\rm QR}-\bar{\mat{Q}}\|_F^2$.
The common active value used below is
$h_\circ=2\sqrt{p/(p+1)}+\sqrt{2/p}$.
For $t\ne0$, use the ordered Fourier pair $(e_t^L,e_t^R)$ and put
\begin{equation*}
 \eta_t=\frac{-1+\chi_2(t)\tau}{p-1},\qquad
 \mat{S}_t=\sqrt{\bar\alpha}\diag(c,b),\qquad \mat{K}_t=\bar{\mat{Q}}_t^*\mat{G}_t.
\end{equation*}
For the constant mode put $\nu_c=\sqrt2u/\sqrt p$,
$\bar{\mat{Q}}_c=\one/\sqrt N$ and $\ell_c=\mat{G}e_c-\nu_c\bar{\mat{Q}}_c$.
\end{definition}

\begin{lemma}[Polar distance from the binary comparison]\label{lem:afr-den}
For every prime $p\ge97$:
\begin{enumerate}[label=(\roman*)]
\item $\bar{\mat{F}}_t=\bar{\mat{Q}}_t\mat{S}_t$ and $\mat{K}_t=\sqrt{\bar\alpha}
\bigl[\begin{smallmatrix}c&c\bar\eta_t\\b\eta_t&b\end{smallmatrix}\bigr]$.
On constants, $\bar{\mat{F}}e_c=\nu_c\bar{\mat{Q}}_c$ and
$\bar{\mat{F}}(\one,-\one)=0$. On every active rectangle the corresponding row of
$\bar{\mat{Q}}$ has the same value $\sqrt\varpi\,[\bar\alpha^{-1/2}+\sqrt{2/p}]$.
\item $\mat{G}e_c=\nu_c\bar{\mat{Q}}_c+\ell_c$, with $\ell_c\perp\bar{\mat{Q}}_c$ and
$\norm{\ell_c}^2=\varepsilon_p\nu_c^2$.
\item $\delta_Q^2<2$.
\end{enumerate}
\end{lemma}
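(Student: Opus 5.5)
The proof mirrors Lemma~\ref{lem:afr-cov}, with the quadratic-residue indicator $q=\one_Q$ replaced by the all-ones indicator. Then the full mask table $\bar{\mat{F}}=\sqrt\varpi\mat{D}_{\rm grid}^\top\mat{X}_{\rm grid}$ has, on the Fourier pair $(e^L_t,e^R_t)$, Gram entries given by the mask kernel~\eqref{eq:afr-mask-kernel} restricted to the two exceptional terms.

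For (i), column $e^L_t$ of $\bar{\mat{F}}$ has row-$l$ entry $\sqrt\varpi\,p^{-1/2}\hat\one_{A_l}(-t)\,|B_l|$, and column $e^R_t$ has row-$l$ entry $\sqrt\varpi\,p^{-1/2}\hat\one_{B_l}(-t)\,|A_l|$. Conditional on the slope $\mu$, the translations of $A_l$ and $B_l$ are independent (Lemma~\ref{lem:afr-counts}(iii)). The cross term therefore averages to zero, so the two columns are orthogonal. Their norms are $\lambda(t,0)$ and $\lambda(0,t)$ times $c^2$ or $b^2$, i.e.\ $L$ and $R'$. Computing these explicitly should give $\bar\alpha c^2$ and $\bar\alpha b^2$, which is the claim $\bar{\mat{F}}_t=\bar{\mat{Q}}_t\mat{S}_t$ with $\mat{S}_t=\sqrt{\bar\alpha}\diag(c,b)$.

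For the constant mode, $\bar{\mat{F}}e_c=\sqrt{\varpi}\,2u/\sqrt{2p}\,\one=\nu_c\one/\sqrt N$, and $\bar{\mat{F}}(\one,-\one)=(u-u)\sqrt\varpi\one=0$. Next compute $\mat{K}_t=\bar{\mat{Q}}_t^*\mat{G}_t$ as $\mat{S}_t^{-1}\bar{\mat{F}}_t^*\mat{G}_t$, using the source Gram~\eqref{eq:afr-source-gram} with one factor $\hat q$ replaced by $\hat\one$. Only $s\in\{0,t\}$ survive. Here $\hat q(0)=b$ and $\hat q(\pm t)=(-1+\tau\chi_2(\pm t))/2$, which yields the entries $c$, $c\bar\eta_t$, $b\eta_t$, $b$ after the normalization $p^3/(np)\cdot 1/b$.

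The common active value of a row of $\bar{\mat{Q}}$ follows from $\bar{\mat{Q}}=\bar{\mat{F}}(\bar{\mat{F}}^\top\bar{\mat{F}})^{+1/2}$. The inverse root is $\bar\alpha^{-1/2}p^{-?}$-scaled: a scalar multiple of $\mat{P}_c$ on each operand block, plus the constant part $\nu_c^{-1}$. The centered indicator of an interval evaluated at an active point then combines with the constant part to give $\bar\alpha^{-1/2}+\sqrt{2/p}$, up to the factor $\sqrt\varpi$.

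For (ii), $\ell_c\perp\bar{\mat{Q}}_c$ holds because $\ip{\one}{\mat{G}e_c}$ equals $\nu_c\sqrt N$. This uses the mean of $n_l$ from Lemma~\ref{lem:afr-counts}(iv). Then $\norm{\ell_c}^2=\norm{\mat{G}e_c}^2-\nu_c^2=2c_0-\nu_c^2$ in the $p^5$ normalization, and~\eqref{eq:afr-c0} gives exactly $\varepsilon_p\nu_c^2$.

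For (iii), I would use the standard polar-perturbation identity. Both $\mat{Q}_{\rm QR}$ and $\bar{\mat{Q}}$ are partial isometries with support $E$, so
\[
\delta_Q^2=2r-2\,\mathrm{Re}\,\ip{\mat{Q}_{\rm QR}}{\bar{\mat{Q}}}_F=2\sum(1-\mathrm{Re}\,\sigma_i),
\]
where the sum runs over the singular values of $\bar{\mat{Q}}^*\mat{Q}_{\rm QR}=\mat{K}(\mat{G}^*\mat{G})^{-1/2}$, blockwise over modes. The needed lower bound is $\mathrm{Re}\,\ip{\bar{\mat{Q}}}{\mat{Q}_{\rm QR}}>r-1$, i.e.\ a per-plane loss of order $1/p$ summed over $p-1$ planes plus the constant mode.

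On plane $t$, the contribution is $\mathrm{Re}\,\mathrm{tr}\bigl(\mat{K}_t\mat{M}_{\chi_2(t)}^{-1/2}\bigr)\,p^{5/2}/p^3$. Insert $\mat{M}^{-1/2}=g_0(\mat{A}_e\pm\mat{A}_o)$ and apply Lemma~\ref{lem:afr-rootbounds}(ii)--(iii): the diagonal of $\mat{K}_t$ pairs with $\mat{A}_e\approx\Id$, and the off-diagonal $\eta_t$ terms pair with $-\mat{\mathsf T}/p$. On the constant mode the contribution is $\nu_c/\sqrt{2c_0}=(1+\varepsilon_p)^{-1/2}$.

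I expect the main obstacle to be the plane sum. It must be shown to lose less than $1$ in total over all planes, so the constants of Lemma~\ref{lem:afr-rootbounds} have to be tracked carefully, and the leading $\eta_t\tau$ cross terms must partially cancel the $1/p$ deficit. I would first try averaging $\chi_2(t)=\pm1$ in pairs, which reduces the estimate to the two blocks $\mat{M}_\pm$, and then evaluate the loss numerically at $p=97$ with monotone improvement afterwards.
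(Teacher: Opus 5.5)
Your parts (i) and (ii) follow the paper's own computation: Fourier planes of the mask kernel, only $s\in\{0,t\}$ surviving in $\bar{\mat{F}}_t^*\mat{G}_t$, the explicit inverse root $\bar\alpha^{-1/2}\mathrm{diag}(c^{-1},b^{-1})\otimes\mat{P}_c+\nu_c^{-1}e_ce_c^\top$ evaluated with $(b+c)/p=1$, and the moments of $n_l$. The bookkeeping needs fixing in four places.
\begin{itemize}
\item The squared column norms are $p\lambda(t,0)=pL=\bar\alpha c^2$ and $p\lambda(0,t)=pR'=\bar\alpha b^2$, with no extra factors $c^2,b^2$.
\item The inverse root carries no power of $p$.
\item $\norm{\mat{G}e_c}^2=2pc_0$, so $\norm{\ell_c}^2=2pc_0-\nu_c^2$ and the constant-mode overlap is $\nu_c/\sqrt{2pc_0}$.
\item In (iii) the identity is $\delta_Q^2=2r-2\,\mathrm{Re}\,\mathrm{tr}(\bar{\mat{Q}}^*\mat{Q}_{\rm QR})$. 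Writing that trace through singular values would bound $\delta_Q^2$ from the wrong side.
\end{itemize}

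The genuine gap is (iii). It is not proved, and the plane sum cannot be closed with Lemma~\ref{lem:afr-rootbounds}(ii)--(iii).
\begin{itemize}
\item \emph{Budget.} The total deficit $\sum_{t\ne0}[2-\mathrm{Re}\,\mathrm{tr}(\bar{\mat{Q}}_t^*\mat{Q}_{{\rm QR},t})]$ must be below $1$, that is, about $1/p$ per plane.
\item \emph{Sign of the cross terms.} You have $\bar{\mat{Q}}_t^*\mat{Q}_{{\rm QR},t}=(1+p^{-2})^{-1/2}\Gamma^{-1}\mat{K}_t(\mat{A}_e+\chi_2(t)\mat{A}_o)$ with $\Gamma=p\sqrt{\bar\alpha}/2$. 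Summing with $\sum_t\eta_t=-1$ and $\sum_t\chi_2(t)\eta_t=\tau$, the $\eta_t$--$\mat{\mathsf T}/p$ cross terms contribute $-2$. They create the deficit rather than cancel it.
\item \emph{What offsets them.} It is $(p-1)\,\mathrm{tr}(\mat{A}_e-\Id)\approx(p-1)[-\mathsf e_0+\tfrac38\mathrm{tr}(\mat{\mathsf D}^2)]\approx-\tfrac53+3$. So the true deficit is about $2/3$, and the margin is only about $1/3$.
\item \emph{Why the cited bounds fail.} The bound $\norm{\mat{A}_e-\Id}<3/p$ determines $(p-1)\,\mathrm{tr}(\mat{A}_e-\Id)$ only to within $6$. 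Even the second-order expansion inside that lemma's proof, with $p\norm{\mat{R}_e}<7/20$, leaves an uncertainty of up to about $0.7$ at $p=97$. The bound $\norm{\mat{A}_o+\mat{\mathsf T}/p}<15/(2p^{3/2})$ adds about $15/p$ more.
\end{itemize}
Closing your route would need a trace-level evaluation of $\mat{A}_e$ through third order. A numerical check at $p=97$ plus an unproved monotonicity claim does not replace it.

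The paper's route avoids the inverse root altogether.
\begin{itemize}
\item Compare $\mat{G}_t$ with $\bar{\mat{Q}}_t\mat{S}'_t$, where $\mat{S}'_t=\mathrm{Herm}(\mat{K}_t)$. Because $\bar{\mat{Q}}_t^*\mat{G}_t=\mat{K}_t$, the error $\mat{E}_t=\mat{G}_t-\bar{\mat{Q}}_t\mat{S}'_t$ satisfies exactly $\norm{\mat{E}_t}_F^2=\mathrm{tr}(\mat{G}_t^*\mat{G}_t)-\mathrm{tr}((\mat{S}'_t)^2)$.
\item Its sum over $t$ uses only $\mat{M}_++\mat{M}_-=2g_0^{-2}(\Id+\mat{\mathsf E})$ and $\sum_t|\eta_t|^2=(p+1)/(p-1)$, which give $\sum_t\norm{\mat{E}_t}_F^2<\tfrac{49}{36}\Gamma^2$.
\item Apply the polar inequality $\norm{\mat{Q}-\mat{U}}_F\le2\norm{\mat{X}-\mat{Y}}_F/(\sigma_{\min}(\mat{X})+\sigma_{\min}(\mat{Y}))$, with $\sigma_{\min}(\mat{G}_t)>\tfrac{17}{20}\Gamma$ from Lemma~\ref{lem:afr-rootbounds}(i) and $\lambda_{\min}(\mat{S}'_t)>\tfrac78\Gamma$ from $|\eta_t|\le(\sqrt p+1)/(p-1)$. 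This bounds the nonconstant squared Polar distance by $19/10$.
\item The constant block adds $2[1-(1+\varepsilon_p)^{-1/2}]\le\varepsilon_p<1/192$, so $\delta_Q^2<2$.
\end{itemize}
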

\begin{proof}
(i) Apply~\eqref{eq:afr-mask-kernel} with
$\bar{\mat{F}}=\mat{\cB}^\top \mat{X}_{\rm grid}$ and
$\mat{G}=(p/b)\mat{\cB}^\top\mat{\cP}_Q\mat{X}_{\rm grid}$.
The unmasked input modes have sole Fourier coefficients $\sqrt p$ at $(t,0)$
and $(0,t)$; the masked coefficients in~\eqref{eq:afr-source-gram} give
\begin{equation*}
 \bar{\mat{F}}_t^*\bar{\mat{F}}_t=\bar\alpha\diag(c^2,b^2),\qquad
 \bar{\mat{F}}_t^*\mat{G}_t=\frac1{2p}
 \begin{bmatrix}c^3&c^3\bar\eta_t\\b^2c\eta_t&b^2c\end{bmatrix}=\mat{S}_t\mat{K}_t.
\end{equation*}
Every row of $\bar{\mat{F}}$ is $\sqrt\varpi(c\one_A,b\one_B)$, and its constant
coordinate is $\sqrt\varpi\sqrt{2/p}\,u$. Consequently
\begin{equation*}
 \bar{\mat{Q}}=\bar{\mat{F}}[\bar\alpha^{-1/2}\diag(c^{-1},b^{-1})\otimes \mat{P}_c+
 \nu_c^{-1}e_ce_c^\top],\qquad
 \bar{\mat{Q}}_lx_{az}=\sqrt\varpi[\bar\alpha^{-1/2}(\one_A(a)+\one_B(z)-1)+\sqrt{2/p}].
\end{equation*}
This proves all claims in (i), including the common active value.

(ii) The constant coordinate of row $l$ of $\mat{G}$ is
$\sqrt\varpi(p/b)\sqrt{2/p}\,n_l$. Lemma~\ref{lem:afr-counts}(iv) gives its
mean and variance, so $\norm{\ell_c}^2=(2p/b^2)\Var\paren{n_l}=\varepsilon_p\nu_c^2$.

\readability{(iii) We bound the nonconstant Polar blocks by a source-error
estimate and lower singular-value bounds, then add the constant block
from (ii).} Write $\op{Herm}(\mat{A})=(\mat{A}+\mat{A}^*)/2$ and set
\begin{equation*}
 \mat{S}'_t=\sqrt{\bar\alpha}
 \begin{bmatrix}c&p\bar\eta_t/2\\p\eta_t/2&b\end{bmatrix}.
\end{equation*}
By covariance and (i), $\mat{G}_t^*\mat{G}_t=p\mat{M}_{\chi_2(t)}$,
$\bar{\mat{Q}}_t^*\mat{G}_t=\mat{K}_t$, and $\mat{S}'_t=\op{Herm}(\mat{K}_t)$.
Set $\mat{E}_t=\mat{G}_t-\bar{\mat{Q}}_t\mat{S}'_t$ and $\Gamma=p\sqrt{\bar\alpha}/2$.
Thus $\|\mat{E}_t\|_F^2=\op{tr}(\mat{G}_t^*\mat{G}_t)-\op{tr}((\mat{S}'_t)^2)$.
Equal phase counts, $\mat{M}_++\mat{M}_-=2g_0^{-2}(\Id+\mat{\mathsf E})$, and
$\sum_{t\ne0}|\eta_t|^2=(p+1)/(p-1)$ give the exact trace sums
\begin{equation*}
 \sum_{t\ne0}\op{tr}(\mat{G}_t^*\mat{G}_t)=2p(p-1)g_0^{-2}(1+\mathsf e_0),\quad
 \sum_{t\ne0}\op{tr}((\mat{S}'_t)^2)
 =\bar\alpha\left[(p-1)(b^2+c^2)+\frac{p^2(p+1)}{2(p-1)}\right].
\end{equation*}
The Gauss-sum identity $|\tau|^2=p$ and $\sum_{t\ne0}\chi_2(t)=0$
give the stated $\eta$ sum, so
\begin{readabilityblock}
\begin{equation}\label{eq:readable-e2-source-error}
\begin{aligned}
 \sum_{t\ne0}\frac{\|\mat{E}_t\|_F^2}{\Gamma^2}
 &=2(p-1)\left[(1+p^{-2})\mathsf e_0-\frac{p+1}{(p-1)^2}\right]\\
 &=\frac{2(p+1)(2p^2-3)}{3p^2(p-1)}
 =\frac43\frac{p+1}{p-1}\left(1-\frac3{2p^2}\right)\\
 &<\frac43\cdot\frac{49}{48}=\frac{49}{36}.
\end{aligned}
\end{equation}
\end{readabilityblock}
For full-column-rank $\mat{X}=\mat{Q}\mat{T}$, $\mat{Y}=\mat{U}\mat{S}$ with $\mat{T},\mat{S}\succ0$ and $\mat{Q},\mat{U}$ isometries,
\begin{equation*}
 2\op{Re}\ip{\mat{X}-\mat{Y}}{\mat{Q}-\mat{U}}_F
 =2\op{tr}\!\left[\mat{T}\bigl(\Id-\op{Herm}(\mat{Q}^*\mat{U})\bigr)
 +\mat{S}\bigl(\Id-\op{Herm}(\mat{U}^*\mat{Q})\bigr)\right]
 \ge(\sigma_{\min}(\mat{T})+\sigma_{\min}(\mat{S}))\|\mat{Q}-\mat{U}\|_F^2,
\end{equation*}
because each $\Id-\op{Herm}(\mat{Q}^*\mat{U})\succeq0$ and its trace is
$\|\mat{Q}-\mat{U}\|_F^2/2$. Cauchy--Schwarz therefore yields
$\|\mat{Q}-\mat{U}\|_F\le2\|\mat{X}-\mat{Y}\|_F/(\sigma_{\min}(\mat{X})+\sigma_{\min}(\mat{Y}))$.
Lemma~\ref{lem:afr-rootbounds}(i) and $g_0^2\bar\alpha=4p/(p^2+1)$ give
\begin{readabilityblock}
\begin{equation}\label{eq:readable-e2-source-singular}
 \frac{\sigma_{\min}(\mat{G}_t)}{\Gamma}
 >\sqrt{\frac{11}{15}(1+p^{-2})}>\frac{17}{20}.
\end{equation}
\end{readabilityblock}
Also $\mat{S}'_t=\sqrt{\bar\alpha}
\left[\begin{smallmatrix}c&p\bar\eta_t/2\\p\eta_t/2&b\end{smallmatrix}\right]$
and $|\eta_t|\le(\sqrt p+1)/(p-1)$ imply
\begin{readabilityblock}
\begin{equation}\label{eq:readable-e2-comparison-singular}
 \frac{\lambda_{\min}(\mat{S}'_t)}{\Gamma}
 \ge1-\frac1p-|\eta_t|
 \ge1-\frac1p-\frac1{\sqrt p-1}>\frac78
 \qquad(p\ge97).
\end{equation}
\end{readabilityblock}
The last inequality follows from $1/p+1/(\sqrt p-1)<1/97+5/44<1/8$.
In particular $\mat{S}'_t\succ0$. Apply the Polar inequality to
$\mat{X}=\mat{G}_t$, $\mat{Y}=\bar{\mat{Q}}_t\mat{S}'_t$, whose polar factors are $\mat{Q}_{{\rm QR},t}$ and
$\bar{\mat{Q}}_t$. \readability{The two lower bounds
\eqref{eq:readable-e2-source-singular}--\eqref{eq:readable-e2-comparison-singular}
give the denominator $69\Gamma/40$; summing and using
\eqref{eq:readable-e2-source-error} bounds the nonconstant squared Polar distance by}
\begin{equation*}
 \frac{4\sum_{t\ne0}\|\mat{E}_t\|_F^2}{(69\Gamma/40)^2}
 <\frac{78400}{42849}<\frac{19}{10}.
\end{equation*}
By (ii), the constant block contributes exactly
$2[1-(1+\varepsilon_p)^{-1/2}]\le\varepsilon_p$,
where $\varepsilon_p=(p^2+3)/(3(p-1)^3)<1/[2(p-1)]\le1/192$.
Thus $\delta_Q^2<19/10+1/192<2$.

\end{proof}

\subsubsection{Actual-arrival table and Fourier transfer}
\label{sec:round15-p3-transfer}\label{sec:a-table-transfer}

Use the scales in Definition~\ref{def:half-scales} and clock~\eqref{eq:clock},
and write $M_{\rm act}=N/2+2p$ for the active physical width.
The analytic observer table is the fixed active-bank lift
\eqref{eq:acr-ana-table}; it is not a second trajectory.  The physical
table at width $M$ is $[\mat{H}_j^{\rm act}\mid0_{M-M_{\rm act}}]$; these zero
columns are appended to the physical active table, not to its analytic lift.

\begin{lemma}[Actual-arrival table limits]
\label{lem:acr-table-limits}
Fix a prime $p\ge101$ and $\lambda>0$.  As $\epsilon\downarrow0$, the
actual arrivals of the frozen-head reference path satisfy
\begin{align}
 \sup_{\substack{0\le j\le J\\ \rho_j\ge9/10}}
 \left\|\frac{\mat{H}_j^{\rm ana}}{\rho_jL_{\rm lift}}
       -[0\mid \mat{J}_{\rm row}\mid0]\right\|_F&\longrightarrow0,
 \label{eq:acr-early-table-limit}\\
 \sup_{\substack{0\le j\le J\\ \rho_j\le\epsilon^4}}
 \left\|\frac{\mat{H}_j^{\rm ana}}R
       -[\mat{H}^{\rm QR}\mid0\mid0]\right\|_F&\longrightarrow0.
 \label{eq:acr-late-table-limit}
\end{align}
Both limits are uniform over the indicated discrete arrival sets.
\end{lemma}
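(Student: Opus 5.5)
The plan is to read off each block of $\mat{H}_j^{\rm ana}$ from the exact reference description in Lemma~\ref{lem:round15-core-only-bootstrap}, and then compare the block scales $R$, $\rho_jL_{\rm lift}$ and $\rho_jA_*$ in the two regimes. By that lemma and the definition of the discrete reference tables, at every arrival $j\le J$ the analytic table has three blocks. The core block is $R\,\mat{D}_C(x)\mat{Q}_\epsilon x$ lifted by $\mat{C}_C^\top$, which does not depend on $j$. The helper block is exactly $\rho_jL_{\rm lift}\mat{J}_{\rm row}$. The query block is exactly $\rho_j\mat{H}_Q^{\rm wit}$, and Lemma~\ref{lem:acr-query}(ii) gives $\|\mat{H}_Q^{\rm wit}\|_F< A_*p=21Rp/10$. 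Since the gates are the prescribed ones at every arrival, no ReLU sign can change, and both limits become deterministic comparisons of scales.

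For the core block I would use the positive-lift identity of Lemma~\ref{lem:acr-isometry}: $\mat{H}_{C,j}^{\rm phys}\mat{C}_C^\top=R\,\cL(\mat{Q}_\epsilon^{\rm ana})$, with $\mat{Q}_\epsilon^{\rm ana}=\mat{C}_C\mat{Q}_\epsilon$. Lemma~\ref{lem:round15-core-only-bootstrap} gives $\mat{Q}_\epsilon\to\mat{Q}_{{\rm QR},c}=\mat{C}_C^\top\mat{Q}_{\rm QR}$. Every analytic row of $\mat{Q}_{\rm QR}$ in a reflection pair is equal to its partner, because the paired rows have identical masks and counts, so $\mat{C}_C\mat{C}_C^\top\mat{Q}_{\rm QR}=\mat{Q}_{\rm QR}$. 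Linearity and boundedness of the masked map $\cL$ then give
\[
\Bigl\|\tfrac1R\mat{H}_{C,j}^{\rm phys}\mat{C}_C^\top-\mat{H}^{\rm QR}\Bigr\|_F\le\|\mat{D}_{\rm grid}\|_\infty\,\|\mat{X}_{\rm grid}\|_{\rm op}\,\|\mat{Q}_\epsilon-\mat{Q}_{{\rm QR},c}\|_F\to0.
\]
This bound is uniform in $j$, since the core block does not depend on $j$.

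For \eqref{eq:acr-late-table-limit}, with $\rho_j\le\epsilon^4$, I divide by $R$. The helper block contributes $\rho_jL_{\rm lift}\|\mat{J}_{\rm row}\|_F/R\le\epsilon^4\cdot\epsilon^{-3}\,p=\epsilon p\to0$, using $L_{\rm lift}=R\epsilon^{-3}$. The query block contributes at most $\rho_j\cdot 21p/10\le21p\epsilon^4/10\to0$. Adding the core estimate proves the claim. For \eqref{eq:acr-early-table-limit}, with $\rho_j\ge9/10$, I divide by $\rho_jL_{\rm lift}$. The helper block becomes exactly $\mat{J}_{\rm row}$. The core block is $O(R)$, because $\|\mat{Q}_\epsilon\|_{\rm op}\le1$ makes $\|\cL(\mat{Q}_\epsilon)\|_F$ bounded by a constant depending on $p$ alone; dividing gives $O(\epsilon^3)\to0$. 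The query block becomes $\rho_jA_*\cdot O(p)/(\rho_jL_{\rm lift})=O(p\,\epsilon^3)\to0$.

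The only delicate point is the identification $\mat{C}_C\mat{Q}_{{\rm QR},c}=\mat{Q}_{\rm QR}$. The invariance of the anchor under the reflection $\iota$ needs a brief check: by Lemma~\ref{lem:acr-dict}(v) the uniform source rows $\mat{F}_{0,l}$ and $\mat{F}_{0,\iota(l)}$ coincide, so the row-space Polar factor has equal paired rows. Everything else is bookkeeping with the scale hierarchy $\epsilon^4R\ll R\ll R\epsilon^{-3}$, at fixed $p$ and $\lambda$.
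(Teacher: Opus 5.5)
Your proposal is correct and follows the paper's proof essentially step for step. You read off the three blocks of $\mat{H}_j^{\rm ana}$ from the exact reference trajectory: a $j$-independent core block, the helper block $\rho_jL_{\rm lift}\mat{J}_{\rm row}$ and the query block $\rho_j\mat{H}_Q^{\rm wit}$. You identify the core limit with $\mat{H}^{\rm QR}$ through the reflection-pair identity, and you compare the scales $\epsilon^4R\ll R\ll R\epsilon^{-3}$ as the paper does. This gives error terms of order $\epsilon^3$ on the early window and $p\epsilon+(21p/10)\epsilon^4$ on the late window. Your explicit bound on the core deviation via $\|\mat{X}_{\rm grid}\|_{\rm op}\|\mat{Q}_\epsilon-\mat{Q}_{{\rm QR},c}\|_F$ spells out the convergence that the paper quotes directly from Lemma~\ref{lem:round15-core-only-bootstrap}.
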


\begin{proof}
\readability{Both limits follow from the same three-block table. The early
normalization keeps the helper block at unit scale; the late normalization
keeps the core at unit scale.}
The exact trajectory from Lemma~\ref{lem:round15-core-only-bootstrap} and
$\mat{H}^{\rm QR}\mat{C}_C\mat{C}_C^\top=\mat{H}^{\rm QR}$ (reflection-paired feature columns coincide) give
\begin{readabilityblock}
\begin{equation}\label{eq:readable-e2-arrival-blocks}
\begin{aligned}
 \mat{H}_j^{\rm ana}&=
 [\mat{H}_{C,0}^{\rm phys}\mat{C}_C^\top\mid
     \rho_jR\epsilon^{-3}\mat{J}_{\rm row}\mid\rho_j\mat{H}_Q^{\rm wit}],\\
 R^{-1}\mat{H}_{C,0}^{\rm phys}\mat{C}_C^\top&\longrightarrow \mat{H}^{\rm QR}.
\end{aligned}
\end{equation}
\end{readabilityblock}
\readability{For \eqref{eq:acr-early-table-limit}, divide
\eqref{eq:readable-e2-arrival-blocks} by $\rho_jR\epsilon^{-3}$:}
the core block is at most
$(10/9)\epsilon^3\|\mat{H}_{C,0}^{\rm phys}/R\|_F$ and the query block is less
than $(21p/10)\epsilon^3$. Both vanish uniformly on the early arrival set;
the helper block is exactly $\mat{J}_{\rm row}$.

\readability{For \eqref{eq:acr-late-table-limit}, divide
\eqref{eq:readable-e2-arrival-blocks} by $R$.} Since $\rho_j\le\epsilon^4$ and
$\|\mat{J}_{\rm row}\|_F=p$, the total error is bounded by
\begin{equation*}
 \left\|\mat{H}_{C,0}^{\rm phys}\mat{C}_C^\top/R-\mat{H}^{\rm QR}\right\|_F
       +p\epsilon+(21p/10)\epsilon^4\longrightarrow0.
\end{equation*}
This bound is uniform on the late arrival set. Positive lift and zero
padding preserve all row Gram matrices and Fourier energies, so the same
limits serve the physical observers at every admitted width.
\end{proof}

\begin{proposition}[Fourier gain on the frozen-head reference path]
\label{prop:afr-netgain}
For every prime $p\ge101$, all sufficiently small positive $\epsilon$, and
every actual arrival $0\le j\le J$, the centered table has positive energy.
The reference path satisfies
\begin{equation}\label{eq:afr-anchor101}
 \AH(\mat{H}^{\rm QR})>\frac{1275}{11879}>0.107,
\end{equation}
and, for sufficiently small $\epsilon$,
\begin{equation}\label{eq:afr-gain}
 \AH(\mat{H}^{\rm ana}_0)<0.004,
 \qquad
 \AH(\mat{H}^{\rm ana}_j)>0.104\quad(j\in\cI_{\rm late}).
\end{equation}
Consequently
\begin{equation*}
 \AH(\mat{H}^{\rm ana}_j)-\AH(\mat{H}^{\rm ana}_0)>0.100>0.0998
 \qquad(j\in\cI_{\rm late}).
\end{equation*}
\end{proposition}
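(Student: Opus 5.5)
The proof splits into three parts: a lower bound for the anchor $\AH(\mat{H}^{\rm QR})$, the early and late values on the reference path, and positivity of the centered energy at every arrival.

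\emph{The anchor bound.} Lemma~\ref{lem:afr-shaped} writes numerator and denominator around the common component $m\mat{\cJ}$:
\begin{equation*}
 \AH(\mat{H}^{\rm QR})
 =\frac{a_pK_0^2m^2+\|\Padd(\mat{H}-m\mat{\cJ})\|_F^2}
        {m^2K_0^2+\|\Cgrid(\mat{H}-m\mat{\cJ})\|_F^2}
 \ge\frac{a_pm^2K_0^2}{m^2K_0^2+\|\Cgrid(\mat{H}-m\mat{\cJ})\|_F^2}.
\end{equation*}
Since $a_p>1/9$, the task is to show that the remainder is a small fraction of $m^2K_0^2$.

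\begin{itemize}
\item \emph{Lower bound on $m$.} Write $\mat{Q}_{\rm QR}=\bar{\mat{Q}}+(\mat{Q}_{\rm QR}-\bar{\mat{Q}})$. Compute $\ip{\bar{\mat{Q}}}{\bar{\mat{F}}}_F$ exactly from Lemma~\ref{lem:afr-den}(i), as the sum of the singular values of $\bar{\mat{F}}$. Bound the correction by $\delta_Q\|\bar{\mat{F}}\|_{\rm op}$ with $\delta_Q^2<2$ from Lemma~\ref{lem:afr-den}(iii). I expect a cleaner route to be $\ip{\mat{Q}_{\rm QR}}{\bar{\mat{F}}}_F=\ip{\bar{\mat{Q}}}{\bar{\mat{F}}}_F-\tfrac12\|\mat{Q}_{\rm QR}-\bar{\mat{Q}}\|_F^2\cdot(\text{scale})$, using that $\bar{\mat{F}}=\bar{\mat{Q}}\mat{S}$ with $\mat{S}$ nearly scalar.
\item \emph{Upper bound on the remainder.} The residual table $\mat{H}-m\mat{\cJ}$ equals $\cL(\mat{Q}_{\rm QR}-\bar{\mat{Q}})$ plus $\cL(\bar{\mat{Q}})-m\mat{\cJ}$. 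By Lemma~\ref{lem:afr-den}(i), every row of $\bar{\mat{Q}}$ has the constant value $\sqrt\varpi(\bar\alpha^{-1/2}+\sqrt{2/p})$ on its active rectangle. Hence $\cL(\bar{\mat{Q}})$ is an exact multiple $h_\circ\mat{\cJ}$, and the second piece reduces to $(h_\circ-m)\mat{\cJ}$, whose centered norm is controlled by $|h_\circ-m|K_0$. The first piece is bounded through the masked-map inequality of Lemma~\ref{lem:afr-shaped}: $\|\Cgrid\cL(\mat{T})\|_F^2\le(p-u/p)\|\mat{T}\|_F^2$, with $\|\mat{T}\|_F^2=\delta_Q^2<2$.
\end{itemize}

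Both pieces scale against $m^2K_0^2\approx h_\circ^2u\approx 4\cdot p^2/4$, while the remainder is $O(p)$. The resulting ratio is $a_p/(1+O(1/p))$. At $p=101$ the exact arithmetic should give the rational $1275/11879$, and the bound improves with $p$. The main obstacle is making this constant work at $p=101$ without slack. If the triangle inequality loses too much, I would use orthogonality of the two residual pieces: $\cL(\bar{\mat{Q}})$ lies in the $\mat{\cJ}$ direction, and the cross terms are fixed by the identities~\eqref{eq:readable-e2-cross-moments}.

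\emph{Early and late values.} By the exact blocks~\eqref{eq:readable-e2-arrival-blocks}, $\mat{H}^{\rm ana}_0/L_{\rm lift}$ converges to $[0\mid\mat{J}_{\rm row}\mid0]$. $\AH$ is scale invariant and continuous wherever the centered energy is positive. The limit table $\mat{J}_{\rm row}$ depends only on $a$, so $\Padd\mat{J}_{\rm row}=0$ by~\eqref{paper:projector}, since averaging $\one\{a=u\}$ over a sum line gives $1/p$. Its centered energy is $p(p-1)>0$. Hence $\AH(\mat{H}^{\rm ana}_0)\to0<0.004$. For $j\in\cI_{\rm late}$ we have $\rho_j\le\rho_*=\epsilon^4$, so~\eqref{eq:acr-late-table-limit} applies uniformly. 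Continuity of $\AH$ at $[\mat{H}^{\rm QR}\mid0\mid0]$, whose centered energy is positive because its value exceeds $0.107$, gives $\AH>0.104$ for small $\epsilon$. The zero columns do not change either energy. The gain then follows by subtraction.

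\emph{Positive energy at every arrival.} Since the blocks are orthogonal as columns, the centered energy is at least that of the helper block, $\rho_j^2L_{\rm lift}^2p(p-1)>0$.
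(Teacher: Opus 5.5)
Your plan uses the same ingredients as the paper's proof: the decomposition of Lemma~\ref{lem:afr-shaped}, the exact identity $\cL(\bar{\mat{Q}})=h_\circ\mat{\cJ}$, the masked-map bound applied to $\mat{Q}_{\rm QR}-\bar{\mat{Q}}$ with $\delta_Q^2<2$, and the two table limits. However, the anchor estimate closes only along your fallback, not your primary route.

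Put $x=\Cgrid\mat{H}^{\rm QR}$ and $y=h_\circ\Cgrid\mat{\cJ}$. Then $x-y=\Cgrid\cL(\mat{Q}_{\rm QR}-\bar{\mat{Q}})$, so $\|x-y\|^2/\|y\|^2<8p/((p^2-1)h_\circ^2)<2p/(p^2-1)$. Suppose you bound the remainder by the triangle inequality, $\|\Cgrid(\mat{H}^{\rm QR}-m\mat{\cJ})\|\le\|x-y\|+|h_\circ-m|K_0$. At $p=101$ this gives only about $0.102$ to $0.106$, depending on how you bound $|h_\circ-m|$, which is below $1275/11879\approx0.1073$. So the orthogonal computation is required.

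Once you use it, $\Cgrid(\mat{H}^{\rm QR}-m\mat{\cJ})$ is exactly the component of $x-y$ orthogonal to $\Cgrid\mat{\cJ}$, and you need no separate lower bound on $m$. Your lower bound equals $a_pm^2K_0^2/\|x\|^2=a_p\cos^2\theta$, where $\theta$ is the angle between $x$ and $y$. Since $y$ lies within $\|x-y\|$ of $\R x$, $\sin\theta\le\|x-y\|/\|y\|$. This gives $\AH(\mat{H}^{\rm QR})>\frac19\bigl(1-\frac{2p}{p^2-1}\bigr)\ge\frac{4999}{45900}>\frac{1275}{11879}$, which is the paper's argument. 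The rational $1275/11879$ is only a convenient number below this bound, not the output of exact arithmetic.

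If you keep a separate bound on $m$, note that $|\ip{\mat{Q}_{\rm QR}-\bar{\mat{Q}}}{\bar{\mat{F}}}_F|\le\delta_Q\|\bar{\mat{F}}\|_{\rm op}$ is not a valid inequality; Cauchy--Schwarz requires $\|\bar{\mat{F}}\|_F=\sqrt{up}$. The one-sided Polar inequality gives $\ip{\bar{\mat{Q}}-\mat{Q}_{\rm QR}}{\bar{\mat{F}}}_F\le\frac12\delta_Q^2\|\bar{\mat{F}}\|_{\rm op}$ for the positive factor in $\bar{\mat{F}}=\bar{\mat{Q}}S$, and it needs no near-scalarity. Near-scalarity in fact fails, because the constant-mode singular value $\nu_c$ is of order $p^{3/2}$ while the others are of order $p$.

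The rest of your proposal is correct. Your helper-block argument, $\|\Cgrid\mat{H}^{\rm ana}_j\|_F^2\ge\rho_j^2L_{\rm lift}^2p(p-1)>0$, holds at every arrival and for every $\epsilon$. It is simpler than the paper's argument, which uses the $j$-independent core block, $\Cgrid\mat{H}^{\rm QR}\ne0$ and small $\epsilon$. But $\Cgrid\mat{H}^{\rm QR}\ne0$ must be established before $\AH(\mat{H}^{\rm QR})$ is even defined, so do not justify it by the value $0.107$. It follows from $\|x-y\|<\|y\|$, or from $m>0$.
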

\begin{proof}
\readability{First we lower-bound the anchor's Fourier fraction by its
angle with the common component. We then use the two table limits to
obtain the early and late bounds on the reference path.}
Lemma~\ref{lem:afr-den}(i) gives $\cL(\bar{\mat{Q}})=h_\circ\mat{\cJ}$,
where $h_\circ>2$: use $2\sqrt{p/(p+1)}>2-2/(p+1)$ and
$\sqrt{2/p}>2/(p+1)$.
Put $x=\Cgrid \mat{H}^{\rm QR}$ and $y=h_\circ\Cgrid\mat{\cJ}$.
The map bound in Lemma~\ref{lem:afr-shaped}, the bound $\delta_Q^2<2$
from Lemma~\ref{lem:afr-den}(iii), and
$(p-u/p)/K_0^2=4p/(p^2-1)$ give
\begin{readabilityblock}
\begin{equation}\label{eq:readable-e2-relative-anchor-error}
 \frac{\|x-y\|_F^2}{\|y\|_F^2}
 <\frac{8p}{(p^2-1)h_\circ^2}
 <\frac{2p}{p^2-1}<1.
\end{equation}
\end{readabilityblock}
Thus $x\ne0$. Let $\theta$ be the angle between $x$ and $y$.
\readability{The cross moments \eqref{eq:readable-e2-cross-moments} and
the numerator decomposition \eqref{eq:numerator} give}
$\ip{x}{y}=h_\circ mK_0^2$ and
$\|\Padd \mat{H}^{\rm QR}\|_F^2\ge a_pm^2K_0^2$, so
$\AH(\mat{H}^{\rm QR})\ge a_p\cos^2\theta$.
The distance from $y$ to $\R x$ is at most $\|x-y\|_F$, hence
$\sin^2\theta\le\|x-y\|_F^2/\|y\|_F^2$.
\readability{Using \eqref{eq:readable-e2-relative-anchor-error} and $a_p>1/9$ therefore gives}
\begin{equation*}
 \AH(\mat{H}^{\rm QR})>
 \frac19\left(1-\frac{2p}{p^2-1}\right)
 \ge\frac{4999}{45900}>\frac{1275}{11879}>0.107.
\end{equation*}
Here $p/(p^2-1)$ decreases for $p\ge101$.

View the tables as vectors in their Frobenius Hilbert space, with
$\Cgrid$ and $\Padd$ acting columnwise. Thus $\AH$ is the Rayleigh
quotient $\|\Padd \mat{H}\|_F^2/\|\Cgrid \mat{H}\|_F^2$ wherever its denominator is positive.
For the early limit, $\Padd \mat{J}_{\rm row}=0$ and
$\|\Cgrid \mat{J}_{\rm row}\|_F^2=p(p-1)>0$. Thus
\eqref{eq:acr-early-table-limit} and continuity give $\AH(\mat{H}^{\rm ana}_0)\to0$.
For late arrivals,~\eqref{eq:acr-late-table-limit} gives uniform convergence
to the anchor, so $\AH(\mat{H}^{\rm ana}_j)$ converges uniformly to
$\AH(\mat{H}^{\rm QR})>0.107$. This proves both strict bounds in~\eqref{eq:afr-gain}.

Finally $\Cgrid \mat{H}^{\rm QR}\ne0$ was proved above.
The uniform core limit in Lemma~\ref{lem:acr-table-limits} therefore keeps
the centered core energy positive at every arrival for small enough $\epsilon$.
The full table contains that core block, so its centered energy is positive
as well. Zero padding does not change it.
\end{proof}

\subsubsection{Margins, cross-entropy, and prediction chronology}\label{sec:a-margins-chronology}

Write $m_{j,k}(x)=f_{j,y^\star(x)}(x)-f_{j,k}(x)$ for the individual
true-versus-rival contrast, so $m_j(x)=\min_{k\ne y^\star(x)}m_{j,k}(x)$.
For a reference arrival and a rival $k\ne y^\star(x)$, put
\begin{equation*}
 m^C_{j,k}(x)=(e_{y^\star(x)}-e_k)^\top
       \bar{\mat{V}}_{C,c}\,h_{C,j}^{\rm phys}(x).
\end{equation*}
For each rival the uniform active mass is
\begin{equation*}
 \sum_{B\in\Lambda_c}P_BD_B(x)\one[k_B=k]
 =m_0,\qquad m_0=\frac{p+1}{4p^2}.
\end{equation*}

\begin{proposition}[Margins and CE on the frozen-head reference path]
\label{prop:aex-margins}
Fix $p\ge101$.  For all sufficiently small $\epsilon>0$, every actual
arrival $0\le j\le J$, grid input $x$, and rival $k\ne y^\star(x)$ has positive
core margin $m^C_{j,k}(x)$, with
\begin{equation}\label{eq:r15-core-margin}
 \frac{\epsilon m_0}{5}<m^C_{j,k}(x)<9\epsilon m_0<\frac{3\epsilon}{p},
 \qquad m_0=\frac{p+1}{4p^2}.
\end{equation}
On training inputs, the reference path has training accuracy one and
\begin{equation}\label{eq:r15-ce-interval}
 \log p-0.0201<L_j<\log p,
 \qquad
 \frac{\max_{0\le j\le J}L_j}{\min_{0\le j\le J}L_j}-1<0.0045.
\end{equation}
On every held-out input, its actual accuracy is zero when $\rho_j\ge9/10$
and one when $\rho_j\le\epsilon^4$.  In particular these conclusions
hold on the prescribed early and late clock windows, including diagonal
held-out inputs.
\end{proposition}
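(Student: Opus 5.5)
The plan is to compute the core contrast directly from the exact reference features of Lemma~\ref{lem:round15-core-only-bootstrap}. At every arrival $h_{C,j}^{\rm phys}(x)=R\mat{D}_C(x)\mat{Q}_\epsilon x$ does not depend on $j$, and the core heads are $-v\sqrt{P_B}T_{k_B}$. Using $(e_y-e_k)^\top T_{k_B}=\one[k_B=y]-\one[k_B=k]$ and label exclusion (Lemma~\ref{lem:acr-dict}(ii): active rows never have $k_B=y^\star(x)$), I would obtain
\begin{equation*}
 m^C_{j,k}(x)=v\sum_{B:\,k_B=k}\sqrt{P_B}\,D_B(x)\,H_{B,j}(x).
\end{equation*}
Inserting the two-sided active bounds $\tfrac15R\sqrt{P_B}<H_{B,j}(x)<9R\sqrt{P_B}$ from the same lemma and $vR=\epsilon$ gives $\epsilon\tfrac15 S<m^C_{j,k}<9\epsilon S$, where $S=\sum_{k_B=k}P_BD_B(x)$. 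Coverage (Lemma~\ref{lem:acr-dict}(iii)) says exactly $d(p+1)/2$ analytic rows, each of mass $1/N$, are active with class $k$. The reflection pairing preserves this mass, so $S=d(p+1)/(2N)=(p+1)/(4p^2)=m_0$. Finally $9m_0<3/p$ is elementary. This gives \eqref{eq:r15-core-margin} at every grid input, held-out ones included.

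For the training claims, helpers have zero heads and query and dormant rows are inactive on $\Train$, so training logits equal core logits and all contrasts lie in $(0,3\epsilon/p)$. Accuracy is therefore one, and by \eqref{paper:ce-contrast-main}, $\log(1+(p-1)e^{-3\epsilon/p})<L_j<\log p$. Taking $\epsilon$ small puts $L_j$ above $\log p-0.0201$. The ratio bound follows from $0.0201/(\log 101-0.0201)<0.0045$, with monotonicity in $p$.

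For held-out inputs, the extra logits come only from the query rows. On $\Test_u$ only query $u$ is active, and its value is $\rho_jA_*$. Lemma~\ref{lem:acr-query} then gives the query logit $\rho_j(24\epsilon/p)T_{c(u)}$, which lowers the contrast against rival $c(u)\ne y^\star(x)$ by $24\rho_j\epsilon/p$ and raises the others. Helpers add a class-constant vector because their heads are zero. Every held-out $x=(u,w)$ lies in exactly one $\Test_u$ (its first operand), and diagonal pairs are included.
\begin{itemize}
\item When $\rho_j\ge9/10$, the contrast against $c(u)$ is below $3\epsilon/p-21.6\epsilon/p<0$, so the prediction is wrong.
\item When $\rho_j\le\epsilon^4$, every contrast exceeds $\epsilon m_0/5-24\epsilon^5/p>0$ for small $\epsilon$, so the prediction is correct.
\end{itemize}
Lemma~\ref{lem:acr-windows} places $\cI_{\rm early}$ and $\cI_{\rm late}$ inside these regimes.

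The main obstacle is bookkeeping rather than analysis. I need the query scale $24/p$ to dominate the core upper bound $9m_0\epsilon$ uniformly, and I must check that no other unit leaks into held-out logits. Both rest on the exact gates of Lemma~\ref{lem:round15-core-only-bootstrap}, which hold on the whole horizon.
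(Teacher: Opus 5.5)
Your proposal is correct and follows the paper's proof essentially step for step. It uses the same exact core-contrast formula from label exclusion and the mass sum $m_0=u/N$, the same contrast identity for the CE plateau, and the same early/late comparison of the query contribution $24\rho_j\epsilon/p$ against the core bounds. One cosmetic slip does not affect any of your bounds: the query logit $\rho_j(24\epsilon/p)T_{c(u)}$ leaves the contrasts against rivals other than $c(u)$ unchanged rather than raising them, because the uniform $1/p$ offset in $T_{c(u)}$ cancels in a true-versus-rival difference.
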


\begin{proof}
\readability{We first bound each core contrast. On training inputs this
also bounds the full contrast and hence CE; on held-out inputs we compare
it with the decaying query contribution.}
For each rival class, exactly $u/2=(p^2-1)/8$ active reflection-pair core
blocks contribute.  Their masses sum to
$P_Bu/2=(2/N)(u/2)=u/N=m_0$.  By the active-row bounds in
Lemma~\ref{lem:round15-core-only-bootstrap}, their normalized active values
lie in $(1/5,9)$.  The frozen negative-class heads and $vR=\epsilon$ give
the exact contrast formula
\begin{readabilityblock}
\begin{equation}\label{eq:readable-e2-core-contrast}
 m^C_{j,k}(x)=v\sum_{B:\,k_B=k,\,D_B(x)=1}
                  \sqrt{P_B}\,H_{B,j}(x).
\end{equation}
\end{readabilityblock}
Label exclusion means no active core unit has excluded class $y^\star(x)$; the
common centered offset in each $T_k$ head cancels from a true-versus-rival
contrast. \readability{Inserting the active-row bounds and the mass sum into
\eqref{eq:readable-e2-core-contrast} gives \eqref{eq:r15-core-margin}.}
Helpers have zero heads, and the query rows
are inactive on training inputs, so these are also the training margins
of the full network.

If $\epsilon<1/200$, then $9\epsilon m_0<3\epsilon/p<0.0201$.
The CE contrast identity~\eqref{paper:ce-contrast-main} and
$0<m_{j,k}<9\epsilon m_0$ yield
\begin{equation*}
 \log p-9\epsilon m_0<L_j<\log p.
\end{equation*}
This implies the CE interval in~\eqref{eq:r15-ce-interval}.  Its relative
variation is at most
$0.0201/(\log p-0.0201)<0.0045$ for $p\ge101$.

For $x=(a,b)\in\Test$, only the query indexed by $a$ is active, and its
named class $c(a)=2a+1$ is a rival.  In the early window its contribution
to the $c(a)$-versus-true score difference is at least
$24(9/10)\epsilon/p=21.6\epsilon/p$.  The core can oppose this by less
than $3\epsilon/p$, so the named rival wins by more than
$18\epsilon/p$.  Hence every held-out point is misclassified.

In the late window, the adverse query contribution to any margin is at
most $24\epsilon^5/p$.  If $\epsilon^4<1/480$, then for $p\ge101$
\begin{equation*}
 24\epsilon^5/p<\epsilon m_0/5,
\end{equation*}
because $m_0p/120=(p+1)/(480p)>1/480$.  The core lower bound
\eqref{eq:r15-core-margin} therefore leaves every held-out margin strictly
positive.  The same core-plus-query formulas apply to diagonal inputs.
\end{proof}

Proposition~\ref{prop:arf-early} uses
\eqref{eq:acr-early-table-limit} to retain zero held-out refit accuracy and
normalized error greater than one.  For $p\ge1764$,
Proposition~\ref{prop:lx-late} uses~\eqref{eq:acr-late-table-limit} and the
QR-anchor refit bound; it gives held-out accuracy at least
$(p-1)/(p+1)$, nonsquare-difference margins greater than $11/50$, and full
held-out normalized error below $121/400$.  The latter accuracy statement deliberately makes
no diagonal-correctness claim; the actual network's late accuracy-one
conclusion above does.
\end{proofrevision}

\begin{proofrevision}
\subsubsection{The complete refit at early arrivals}\label{sec:refit}
The diagnostic uses every current hidden column and the prescribed ridge penalty
\eqref{paper:refit-penalties}. We first record the normalization
and perturbation facts also used by the random-split proof. All matrices below retain
their full column sets.

In this generic matrix lemma, the matrix letters, their SVD factors and
eigenvalues, and differences such as $\mat{V}-\mat{U}$ are freshly quantified objects
of compatible dimensions, unrelated to construction objects denoted by
the same letter.
For the following lemma, write
$\mat{B}_U=\mat{U}_S^*\mat{U}_S+q_U\Id$ and $\mat{w}_U=\mat{B}_U^{-1}\mat{U}_S^*\mat{T}$, with analogous definitions
for $\mat{B}_V,\mat{w}_V$; put $\mat{r}_V=\mat{T}-\mat{V}_S\mat{w}_V$.
For a real scalar $a$, $(a)_+=\max(a,0)$.

\begin{lemma}[Full ridge: normalization and perturbation]\label{lem:shared-full-ridge-transfer}
Use the full-ridge notation of Appendix~\ref{app:notation} with real matrices,
$\gamma>0$ and $q_U>0$.
\begin{enumerate}[label=(\roman*)]
\item If $\mat{V}=c\mat{U}\mat{C}^\top$, $c>0$ and $\mat{C}^\top \mat{C}=\Id$, then
$q_V=c^2q_U$, $\mat{V}_A\mat{V}_B^\top=c^2\mat{U}_A\mat{U}_B^\top$ for any row sets $A,B$,
$\mat{w}_V=c^{-1}\mat{C}\mat{w}_U$, and $\mat{Y}_V=\mat{Y}_U$.
Appending columns which vanish on $S$ changes neither the penalty nor any prediction;
their fitted coefficients are zero. If $\mat{T}\one=0$, every predicted row is centered.

The complete prediction $\mat{U}\mapsto \mat{Y}_U$ is continuous on $q_U>0$.
Any finite collection of strict inequalities in its scores and squared errors
to fixed targets, with fixed positive error denominators, therefore persists near $\mat{U}$.
\item For any table $\mat{V}$ of the same size with $q_V>0$, the comparison quantities of
Appendix~\ref{app:notation} satisfy
\begin{equation}\label{eq:shared-ridge-perturb}
\norm{\mat{Y}_V-\mat{Y}_U}_F\le
\frac{t e_D}{2\sqrt{q_V}}+
L\left[t e_S\left(\frac1{\sqrt{q_U}}+\frac1{2\sqrt{q_V}}\right)
+\frac{|q_V-q_U|N}{2\sqrt{q_Uq_V}}\right].
\end{equation}
For the prescribed penalties,
\begin{equation}\label{eq:shared-ridge-rms}
q_V\ge\gamma(\norm{\mat{U}_S}_F-e_S)_+^2,\qquad
|q_V-q_U|\le\gamma e_S(2\norm{\mat{U}_S}_F+e_S).
\end{equation}

\end{enumerate}
\end{lemma}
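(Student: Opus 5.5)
For (i), I will work in the Gram (kernel) form of the ridge prediction. Write $\mat{Y}_U=\mat{U}_D\mat{U}_S^*(\mat{U}_S\mat{U}_S^*+q_U\Id)^{-1}\mat{T}$. This follows from the push-through identity $(\mat{U}_S^*\mat{U}_S+q\Id)^{-1}\mat{U}_S^*=\mat{U}_S^*(\mat{U}_S\mat{U}_S^*+q\Id)^{-1}$.

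If $\mat{V}=c\mat{U}\mat{C}^\top$ with $\mat{C}^\top\mat{C}=\Id$, then every row-Gram block satisfies $\mat{V}_A\mat{V}_B^\top=c^2\mat{U}_A\mat{U}_B^\top$. The penalty $q_V=\gamma\|\mat{V}_S\|_F^2$ equals $c^2q_U$, so the factor $c^2$ cancels between the numerator and the regularized inverse. This gives $\mat{Y}_V=\mat{Y}_U$. The coefficient identity $\mat{w}_V=c^{-1}\mat{C}\mat{w}_U$ can be checked directly from the normal equations $\mat{B}_V\mat{w}_V=\mat{V}_S^*\mat{T}$, using $\mat{V}_S^*\mat{V}_S=c^2\mat{C}\mat{U}_S^*\mat{U}_S\mat{C}^\top$. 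The unique solution lies in the range of $\mat{C}$, because the $\mat{C}^\perp$ component of the normal equations reads $q_V\mat{w}=0$ and $q_V>0$.

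Columns vanishing on $S$ add nothing to $\mat{U}_S\mat{U}_S^*$ or to $\|\mat{U}_S\|_F$. By the same normal-equation argument their coefficients are zero, so predictions are unchanged. If $\mat{T}\one=0$, then $\mat{Y}_U\one=\mat{U}_D\mat{w}_U\one$, and $\mat{w}_U\one=\mat{B}_U^{-1}\mat{U}_S^*\mat{T}\one=0$.

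Continuity follows because, on $q_U>0$, the map $\mat{U}\mapsto\mat{Y}_U$ is a composition of continuous maps: $q_U$ is continuous and $\mat{U}_S\mat{U}_S^*+q_U\Id\succeq q_U\Id$ is invertible. Finitely many strict inequalities in continuous functions of the scores then persist on a neighborhood.

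For (ii), I will split the difference as
\[
\mat{Y}_V-\mat{Y}_U=(\mat{V}_D-\mat{U}_D)\mat{w}_V+\mat{U}_D(\mat{w}_V-\mat{w}_U).
\]
The first term is bounded by $e_D\|\mat{w}_V\|_{\rm op}$ after passing to Frobenius norms. Via the SVD of $\mat{V}_S$, the operator norm of $\mat{B}_V^{-1}\mat{V}_S^*$ is $\sup_\sigma\sigma/(\sigma^2+q_V)\le1/(2\sqrt{q_V})$. This yields $te_D/(2\sqrt{q_V})$.

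For the second term I use the resolvent-type identity
\[
\mat{w}_V-\mat{w}_U=\mat{B}_U^{-1}\bigl[(\mat{V}_S-\mat{U}_S)^*\mat{r}_V-\mat{U}_S^*(\mat{V}_S-\mat{U}_S)\mat{w}_V-(q_V-q_U)\mat{w}_V\bigr].
\]
It is verified by expanding $\mat{B}_U\mat{w}_V-\mat{U}_S^*\mat{T}$ with $\mat{B}_V\mat{w}_V=\mat{V}_S^*\mat{T}$ and $\mat{r}_V=\mat{T}-\mat{V}_S\mat{w}_V$. Then I factor
\[
\mat{U}_D\mat{B}_U^{-1}=\bigl[\mat{U}_D\mat{B}_U^{-1/2}\bigr]\mat{B}_U^{-1/2},
\]
whose first factor has operator norm $L$.

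The three pieces are bounded as follows.
\begin{itemize}
\item $\|\mat{B}_U^{-1/2}\|\le q_U^{-1/2}$ and $\|\mat{r}_V\|_{\rm op}\le t$, because $\mat{r}_V=q_V(\mat{V}_S\mat{V}_S^*+q_V)^{-1}\mat{T}$ is a contraction of $\mat{T}$. This gives $te_S/\sqrt{q_U}$.
\item $\|\mat{B}_U^{-1/2}\mat{U}_S^*\|\le1$ and $\|\mat{w}_V\|\le t/(2\sqrt{q_V})$. This gives $te_S/(2\sqrt{q_V})$.
\item For the penalty term I use $\|\mat{B}_U^{-1/2}\|\le q_U^{-1/2}$ and $\|\mat{w}_V\|_F\le N/(2\sqrt{q_V})$.
\end{itemize}
Mixing operator and Frobenius norms correctly, with $e_S,e_D$ Frobenius and the rest operator, reproduces \eqref{eq:shared-ridge-perturb}.

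The penalty bounds \eqref{eq:shared-ridge-rms} follow from the triangle inequality $\|\mat{V}_S\|_F\ge\|\mat{U}_S\|_F-e_S$, together with
\[
\bigl|\|\mat{V}_S\|_F^2-\|\mat{U}_S\|_F^2\bigr|=\bigl|\|\mat{V}_S\|_F-\|\mat{U}_S\|_F\bigr|\,\bigl(\|\mat{V}_S\|_F+\|\mat{U}_S\|_F\bigr)\le e_S\bigl(2\|\mat{U}_S\|_F+e_S\bigr).
\]

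I expect the main obstacle to be the exact constants in the second term of \eqref{eq:shared-ridge-perturb}. In particular, getting $1/\sqrt{q_U}+1/(2\sqrt{q_V})$ requires using the residual $\mat{r}_V$ rather than $\mat{T}$ in the first piece. I will also check carefully that each Frobenius factor appears exactly once per product.
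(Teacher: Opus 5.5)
Your proposal is correct and follows the paper's proof essentially step for step. Part (i) uses the dual (push-through) formula for the invariances, zero columns and centering. Part (ii) uses the same split $Y_V-Y_U=(V_D-U_D)w_V+U_D(w_V-w_U)$, the same normal-equation identity with the residual $r_V$, and the same singular-value filter bounds. The only difference is cosmetic: you bound $|q_V-q_U|$ by factoring the difference of squares rather than expanding $\gamma(2\langle U_S,\Delta_S\rangle+\|\Delta_S\|_F^2)$, and both give the same bound.
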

\begin{proof}
\readability{Part (i) supplies the invariances and continuity needed to
transfer both limiting refits. Part (ii) records a quantitative comparison:
subtract the normal equations, control the resulting coefficient error,
and then pass to query predictions.}
The dual formula $\mat{w}_U=\mat{U}_S^\top(\mat{U}_S\mat{U}_S^\top+q_U\Id)^{-1}\mat{T}$ proves the
normalization, zero-column and centering assertions.
On $q_U>0$, continuity of the inverse gives continuity of the scores and
squared errors. Finitely many strict inequalities have a common neighborhood.

For (ii), write $\mat{\Delta}=\mat{V}-\mat{U}$ and
$\mat{r}_V=\mat{T}-\mat{V}_S\mat{w}_V$. The normal equations give the exact identity
\begin{equation}\label{eq:shared-ridge-residual}
\mat{B}_U(\mat{w}_V-\mat{w}_U)=\mat{\Delta}_S^\top \mat{r}_V-\mat{U}_S^\top\mat{\Delta}_S\mat{w}_V-(q_V-q_U)\mat{w}_V.
\end{equation}
The singular-value filters give
$\norm{\mat{r}_V}_{\rm op}\le t$,
$\norm{\mat{w}_V}_{\rm op}\le t/(2\sqrt{q_V})$,
$\norm{\mat{w}_V}_F\le N/(2\sqrt{q_V})$,
$\norm{\mat{B}_U^{-1/2}}_{\rm op}\le q_U^{-1/2}$ and
$\norm{\mat{B}_U^{-1/2}\mat{U}_S^\top}_{\rm op}\le1$.
Multiply \eqref{eq:shared-ridge-residual} by $\mat{B}_U^{-1/2}$ and use
$\mat{Y}_V-\mat{Y}_U=\mat{\Delta}_D\mat{w}_V+\mat{U}_D(\mat{w}_V-\mat{w}_U)$ to obtain \eqref{eq:shared-ridge-perturb}.
The reverse triangle inequality and
$q_V-q_U=\gamma(2\ip{\mat{U}_S}{\mat{\Delta}_S}+\norm{\mat{\Delta}_S}_F^2)$ prove
\eqref{eq:shared-ridge-rms}.

\end{proof}

\begin{proposition}[Early complete-refit failure for the frozen-head reference]
\label{prop:arf-early}
For all sufficiently small $\epsilon>0$, every prescribed arrival
$0\le j\le J$ with $\rho_j\ge9/10$, and every held-out input $x=(a,b)$,
let $\widehat{\mat{Y}}_j$ be the prediction of the complete train-only
refit. Its true-class score is negative, the named rival $2a+1$ exceeds it
by more than
\begin{equation*}
 \frac{1}{2(d+1/32)},
\end{equation*}
and its normalized held-out error is greater than one. In particular, its
accuracy is zero, including on diagonal inputs.
\end{proposition}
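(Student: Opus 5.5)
The plan is to reduce the claim to an explicit ridge computation on the limiting early table, and then transfer strict inequalities by continuity.

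\emph{Step 1: normalize.} By Lemma~\ref{lem:shared-full-ridge-transfer}(i), the complete refit is unchanged under positive rescaling of the table. The reason is that the prescribed penalty $\tau_j=R_j/(32n)$ scales like $\norm{\mat{H}_{j,\Train}}_F^2$, so $\gamma=1/(32n)$ is scale-free in the notation of Appendix~\ref{app:notation}. The same lemma shows that the isometric lift and zero padding change no prediction. We may therefore replace $\mat{H}_j$ by $\mat{H}^{\rm ana}_j/(\rho_jL_{\rm lift})$. By~\eqref{eq:acr-early-table-limit}, this normalized table converges to $\mat{U}_\infty=[0\mid\mat{J}_{\rm row}\mid0]$, uniformly over the arrivals with $\rho_j\ge9/10$.

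\emph{Step 2: limit refit.} On $\Train$, each row of $\mat{J}_{\rm row}$ has a single $1$, in the column of its first operand. Hence $\norm{\mat{U}_{\infty,\Train}}_F^2=n$, and the penalty is $q=1/32$. By Lemma~\ref{lem:acr-design}, every helper column has exactly $d$ training rows, namely $(u,u+q)$ for $q\in Q$, with labels $2u+q$. The Gram matrix $\mat{U}_S^\top\mat{U}_S$ is therefore $d\,\Id_p$, and the ridge coefficient row for column $u$ is $(d+1/32)^{-1}\sum_{q\in Q}T_{2u+q}$.

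Now take a held-out input $x=(a,b)$. Only column $a$ is active, so the score vector is
\begin{equation*}
 \widehat Y(x)=\frac{1}{d+1/32}\Bigl(\one_{2a+Q}-\frac dp\one_p\Bigr).
\end{equation*}
The true class $y=a+b$ satisfies $y-2a=b-a\notin Q$, because $(a,b)$ is held out; this includes the diagonal case $b=a$. So the true score is $-d/(p(d+1/32))<0$. The named rival $2a+1$ lies in $2a+Q$ because $1\in Q$, and it beats the true class by exactly $1/(d+1/32)$.

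For the error, write $\norm{\widehat Y(x)-T_y}^2=\norm{\widehat Y(x)}^2+\norm{T_y}^2-2\widehat Y_y(x)$. This uses that $\widehat Y(x)$ is centered, so $\langle \widehat Y(x),T_y\rangle=\widehat Y_y(x)$. The right-hand side is strictly larger than $(p-1)/p$, since $\widehat Y_y<0$ and $\widehat Y\ne0$. Summing over $\Test$, the limiting normalized error exceeds one by a fixed positive amount $\kappa_p$.

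\emph{Step 3: transfer.} The limiting margin $1/(d+1/32)$ exceeds the required $1/(2(d+1/32))$ by a fixed amount. The limiting true score and the error excess $\kappa_p$ are also strictly bounded away from their thresholds. All of these are finitely many strict inequalities in scores and squared errors, and $q_{U_\infty}=1/32>0$. Lemma~\ref{lem:shared-full-ridge-transfer}(i) therefore gives a neighborhood of $\mat{U}_\infty$ on which all of them persist.

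The uniform convergence in Step 1 places every normalized early table inside that neighborhood once $\epsilon$ is small. A quantitative alternative is the perturbation bound~\eqref{eq:shared-ridge-perturb}, with $e_S,e_D$ of order $\epsilon^3$ and $L\le1$. Either way, the conclusions hold simultaneously at every early arrival and every held-out input, diagonal inputs included.

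I expect the only delicate point to be this uniformity over the discrete early arrivals. Because the limit in~\eqref{eq:acr-early-table-limit} is already uniform, I anticipate no real obstacle. The main care needed is checking that the training-only penalty normalization makes the refit exactly scale-free, so that the factor $\rho_jL_{\rm lift}$ drops out of the predictions.
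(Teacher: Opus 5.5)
Your proposal is correct and follows the paper's proof essentially step for step. You normalize by $\rho_jL_{\rm lift}$ and compute the complete refit of $[0\mid\mat{J}_{\rm row}\mid0]$ explicitly, using $\mat{J}_\Train^\top\mat{J}_\Train=d\,\Id_p$ and penalty $1/32$. From that you read off the negative true score, the rival lead $1/(d+1/32)$ and the error excess from centering, and then transfer through the uniform early-table limit and Lemma~\ref{lem:shared-full-ridge-transfer}(i), with the rival reserve halved.

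One harmless slip is in your optional quantitative remark: at the limit table $L=\sqrt{(d+1)/(d+1/32)}$, slightly above one because each first operand has $d+1$ held-out rows. Any bounded $L$ suffices there, so the conclusion is unaffected.
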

\begin{proof}
\readability{We compute the complete fit of the limiting helper table
explicitly, then use its strict wrong-class margins and error reserve
to transfer the conclusion to the early arrivals.}
Put $\mat{H}^\circ=[0\mid \mat{J}_{\rm row}\mid0]$ and write
$\mat{J}_{\Train}=\mat{J}_{\rm row,\Train}=\mat{J}_{\rm row}|_{\Train}$.
Each first operand occurs in exactly $d$ training rows, so
$\mat{J}_{\Train}^{\top}\mat{J}_{\Train}=d\Id_p$ and $\|\mat{J}_{\Train}\|_F^2=n=pd$.
Thus $\mat{H}^\circ$ has training penalty $1/32$, and its complete refit
prediction is
\begin{readabilityblock}
\begin{equation}\label{eq:readable-e2-early-prediction}
 \mat{Y}^\circ(a,b)=\frac{1}{d+1/32}\sum_{q\in Q}T_{2a+q}.
\end{equation}
\end{readabilityblock}
For $x=(a,b)\in\Test$, the difference $b-a$ is either zero or a nonsquare,
so the true label $y^\star(x)=2a+(b-a)$ is absent from $2a+Q$. Hence
\begin{equation*}
 \mat{Y}^\circ(x)_{y^\star(x)}=-\frac{d}{p(d+1/32)}<0.
\end{equation*}
Since $1\in Q$, the class $2a+1$ occurs once in the sum and its score exceeds
the true score by $1/(d+1/32)$.

Every ridge prediction is centered because the training targets are centered.
\readability{Expanding the squared error and using the negative true-class
score from \eqref{eq:readable-e2-early-prediction} gives}
\begin{readabilityblock}
\begin{equation}\label{eq:readable-e2-early-error}
 \frac{\|\mat{Y}^\circ(x)-T_{y^\star(x)}\|_2^2}{(p-1)/p}
 =1+\frac{\|\mat{Y}^\circ(x)\|_2^2-2\mat{Y}^\circ(x)_{y^\star(x)}}{(p-1)/p}>1.
\end{equation}
\end{readabilityblock}
By~\eqref{eq:acr-early-table-limit}, the full-column tables
$\mat{H}_j^{\rm ana}/(\rho_jL_{\rm lift})$ converge uniformly to $\mat{H}^\circ$ over
the indicated actual arrivals. \readability{The limiting penalty is positive,
so Lemma~\ref{lem:shared-full-ridge-transfer}(i) retains the strict score
inequalities and the conclusion from \eqref{eq:readable-e2-early-error}
that normalized error exceeds one, with the rival reserve halved.}
Positive lift and zero-column invariance transfer the complete refit to
the physical network at every admitted width.
\end{proof}
\end{proofrevision}

\begin{proofrevision}
\subsubsection{The complete refit at late arrivals}\label{sec:lx-late}
This subsection proves the static QR-anchor estimate and transfers it to every late
arrival for each prime $p\ge1764$. We use the uniform normalized-table limit
of Lemma~\ref{lem:acr-table-limits}; no quantitative transport estimate is needed.
The Fourier lemmas below serve only this subsection. The complete fit always
uses every feature column and the prescribed ridge penalty.
\readability{The proof has three stages. Lemma~\ref{lem:lx-alias} reduces
the anchor error to two representative Fourier problems through
\eqref{eq:lx-EU}. The binary comparison and sparse teacher then control the
complete fit in Proposition~\ref{prop:lx-frame}. Finally,
Proposition~\ref{prop:lx-late} converts this error bound into margins and
uses \eqref{eq:acr-late-table-limit} to reach every late arrival.}

\paragraph{The late-refit anchor and its symmetries.}\label{sec:notation-halfgrid-refit}
The actual-arrival table limits are the discrete pair
\eqref{eq:acr-early-table-limit}--\eqref{eq:acr-late-table-limit}; no
continuous-$\rho$ table is defined.

\begin{definition}[Normalized late-refit tables]\label{def:half-late-tables}
Let $Q^{\rm c}=\F_p\setminus Q$ and fix a nonsquare $\nu$.
Write $x=(a,a+t)$ and $\mat{\mathsf F}_{s,t}=p^{-1/2}e(-st)$.
With $h_\circ$ from Definition~\ref{def:half-comparison}, put
\begin{equation}\label{eq:lx-norm}
 \widetilde{\mat{H}}=\frac{\mat{H}^{\rm QR}}{h_\circ},\quad
 \widetilde{\mat{B}}=\sqrt\varpi \mat{D}_{\rm grid},\quad
 \widetilde{\mat{E}}=\widetilde{\mat{H}}-\widetilde{\mat{B}},\quad
 \ell=\frac{\norm{\widetilde{\mat{H}}_\Train}_F^2}{32n}.
\end{equation}
\end{definition}

\begin{definition}[Translation blocks and complete ridge fit]\label{def:half-late-blocks}
For common translations, $r\cdot(l_1,l_2,l_3)=(l_1,l_2-l_1r,l_3+2r)$
and $\Lambda_0=\{l:l_3=0\}$.
A table is equivariant if $\mat{Y}(x+(r,r),r\cdot l)=\mat{Y}(x,l)$.
The target-aligned Fourier blocks are
\begin{equation}\label{eq:lx-blocks}
 \mat{Y}_k(t,l)=\sum_{a\in\F_p}e(-k(2a+t))\mat{Y}((a,a+t),l)
 \qquad(k,t\in\F_p,\ l\in\Lambda_0).
\end{equation}
Let $\mat{H}_k,\mat{B}_k,\mat{E}_k=\mat{H}_k-\mat{B}_k$ be the blocks of
$\widetilde{\mat{H}},\widetilde{\mat{B}},\widetilde{\mat{E}}$. For $k\ne0$ put $\mat{A}_k=\mat{H}_k[Q]$,
$\mat{K}_k=\mat{A}_k^*\mat{A}_k+\ell \Id$, $\hat c_k=\mat{K}_k^{-1}\mat{A}_k^*\one_Q$ and
$g_k=\mat{H}_k\hat c_k\in\mathbb C^p$.
Let $\mat{Y}^{\rm QR}$ be the complete-ridge prediction of $\mat{H}^{\rm QR}$ and
$E^{\rm QR}$ its held-out error~\eqref{paper:refiterror-main}; common rescaling means
$\mat{Y}^{\rm QR}$ is also the prediction of $\widetilde{\mat{H}}$.
\end{definition}

Common feature rescaling leaves complete-ridge predictions unchanged, so
we work with $\widetilde{\mat{H}}$ and its actual penalty $\ell>1/129$.
Common translation maps each mask to $r\cdot l$; its action is free and
$\Lambda_0$ meets each orbit once. Since the split is translation invariant,
the source and its Polar factor are equivariant, hence so are
$\widetilde{\mat{H}},\widetilde{\mat{B}},\widetilde{\mat{E}}$. Positive ridge penalties give the
unique minimizers specified by their normal equations.

For the alias reduction, let $Q_-$ be the nonsquares and define
\begin{equation*}
 \cU f(k,t)=p^{-1/2}\sum_a e(-k(2a+t))f(a,a+t),\qquad
 \mat{\mathfrak F}_{y,k}=p^{-1/2}e(ky).
\end{equation*}
For $\sigma\in Q$ put
$d_\sigma(l_1,l_2,l_3)=(l_1/\sigma,l_2,\sigma l_3)$.

\begin{lemma}[Reduction to two alias problems]\label{lem:lx-alias}
Let $p\ge1764$ be prime. For the complete ridge fit of $\mat{H}^{\rm QR}$, let
$g_k$ be its prediction blocks from~\eqref{eq:lx-blocks}.
\begin{enumerate}[label=(\roman*)]
\item The map $\cU$ is unitary and sends training and test supports onto
$\bigoplus_k\mathbb C^Q$ and $\bigoplus_k\mathbb C^{Q^{\rm c}}$, respectively.
For every equivariant $\mat{Y}$,
\begin{equation*}
 \cU\mat{Y}\mat{Y}^*\cU^*=\bigoplus_k\mat{Y}_k\mat{Y}_k^*,\qquad
 \norm{\mat{Y}}_F^2=\sum_k\norm{\mat{Y}_k}_F^2.
\end{equation*}
\item For every input $x=(a,a+t)$,
\begin{equation*}
 \norm{\mat{Y}^{\rm QR}(x)-T_{y^\star(x)}}^2
 =p^{-1}\sum_{k\in\F_p^\times}\lvert1-g_k(t)\rvert^2.
\end{equation*}
\item For every $\sigma\in Q$ and $k\in\F_p^\times$: $g_{\sigma k}(t)=g_k(\sigma t)$ and
$\norm{\mat{E}_{\sigma k}}_F=\norm{\mat{E}_k}_F$.
\item The anchor error is
\begin{equation}\label{eq:lx-EU}
 (p+1)E^{\rm QR}=\sum_{k\in\{1,\nu\}}\norm{\one_{Q^{\rm c}}-g_k[Q^{\rm c}]}^2,
\end{equation}
and every held-out input $x$ whose difference is a nonsquare satisfies
$\norm{\mat{Y}^{\rm QR}(x)-T_{y^\star(x)}}^2\le(p+1)E^{\rm QR}/p$.
\end{enumerate}
\end{lemma}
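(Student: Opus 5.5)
The plan is to diagonalize everything with the operator $\cU$ and then use two symmetries of the construction: common translation, which decouples the ridge problem into frequency blocks, and dilation by squares, which identifies those blocks in two orbits.

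\emph{(i) Unitarity and block diagonalization.} For fixed $t$, the map $f(\cdot,\cdot+t)\mapsto \cU f(\cdot,t)$ is the normalized DFT in $a$, composed with the bijection $a\mapsto 2a$ (valid since $p$ is odd) and a unimodular phase. Hence $\cU$ is unitary, and it preserves the difference coordinate $t$. Training inputs are exactly those with $t\in Q$, so the supports go where claimed.

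For an equivariant $\mat{Y}$, I will parametrize each column as $l=r\cdot l_0$ with $l_0\in\Lambda_0$; this is legitimate because the action is free and $\Lambda_0$ is a transversal. Equivariance gives $\mat{Y}((a,a+t),r\cdot l_0)=\mat{Y}((a-r,a-r+t),l_0)$. Substituting this into $(\cU\mat{Y}\mat{Y}^*\cU^*)((k,t),(k',t'))$ and summing over $r$ produces the factor $\sum_r e(-2(k-k')r)$. This forces $k=k'$ and leaves exactly $(\mat{Y}_k\mat{Y}_k^*)(t,t')$, with the $p^{-1/2}$ normalizations absorbing the factor $p$. Taking traces gives the norm identity.

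\emph{(ii) Decoupling the ridge problem.} First, $\widetilde{\mat{H}}$ is equivariant: the split is translation invariant, so the source and its Polar factor are equivariant. By the Gram form~\eqref{eq:ridgekernel} and (i), the kernel $\widetilde{\mat{H}}_\Test\widetilde{\mat{H}}_\Train^\top(\widetilde{\mat{H}}_\Train\widetilde{\mat{H}}_\Train^\top+q\Id)^{-1}$ becomes block diagonal under $\cU$. The $k$-block is $\mat{H}_k\mat{A}_k^*(\mat{A}_k\mat{A}_k^*+\ell\Id)^{-1}=\mat{H}_k\mat{K}_k^{-1}\mat{A}_k^*$, where the common normalization turns the penalty into $\ell$.

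Next I expand the centered target in class frequencies, $T_y=p^{-1}\sum_{k\ne0}e(k(y-c))$. Fix the class coordinate $c$ and a frequency $k$. Its target on the grid is $e(k(a+b))=e(k(2a+t))$, and $\cU$ maps this to $\sqrt p\,\one$ in block $k$, restricted to $Q$ on training rows. The prediction's $k$-th class coefficient at $x=(a,a+t)$ is therefore $e(k y)\,g_k(t)$. Parseval in the class variable then gives (ii), with the $k=0$ term vanishing by centering.

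I expect this step to be the main obstacle. The bookkeeping of normalizations (the factor $\sqrt p$, the penalty $\ell$, and the orientation of the phases $e(\pm k\cdot)$) must match the definition~\eqref{eq:lx-blocks} exactly.

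\emph{(iii) Dilation by squares.} For $\sigma\in Q$, the map $(a,b)\mapsto(\sigma a,\sigma b)$ preserves the split, because differences are multiplied by a square, and it multiplies labels by $\sigma$. Together with $d_\sigma$ on masks, it permutes the dictionary, and the mask tables intertwine. The source is therefore invariant under this pair of permutations, and so is its Polar factor, by the permutation covariance of $\pol$.

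It follows that $\mat{H}_{\sigma k}(t,\cdot)=\mat{H}_k(\sigma t,\cdot)$, up to a permutation of the $\Lambda_0$ columns (which commutes with $\ell\Id$) and a harmless phase. The ridge problem on $Q$ is carried to itself by $t\mapsto\sigma t$, which gives $g_{\sigma k}(t)=g_k(\sigma t)$. Since $\widetilde{\mat{B}}$ is invariant under the same maps, the identity $\norm{\mat{E}_{\sigma k}}_F=\norm{\mat{E}_k}_F$ follows as well.

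\emph{(iv) Summing the error.} Sum (ii) over held-out inputs, using $p$ inputs per difference $t\in Q^{\rm c}$:
\begin{equation*}
\sum_{x\in\Test}\norm{\mat{Y}^{\rm QR}(x)-T_{y^\star(x)}}^2=\sum_{k\ne0}\norm{\one_{Q^{\rm c}}-g_k[Q^{\rm c}]}^2 .
\end{equation*}
By (iii), $\F_p^\times=Q\cup\nu Q$, and each term is constant on these two cosets because $t\mapsto\sigma t$ permutes $Q^{\rm c}$. The sum is therefore $d$ times the right side of~\eqref{eq:lx-EU}. Dividing by $|\Test|(p-1)/p=(p+1)d$ gives~\eqref{eq:lx-EU}.

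For a single nonsquare $t$, I write $\sum_{k\ne0}|1-g_k(t)|^2=\sum_{\sigma\in Q}\bigl(|1-g_1(\sigma t)|^2+|1-g_\nu(\sigma t)|^2\bigr)$. As $\sigma$ ranges over $Q$, the product $\sigma t$ ranges over the nonsquares once each. This sum is therefore at most $(p+1)E^{\rm QR}$, and (ii) gives the pointwise bound $(p+1)E^{\rm QR}/p$.
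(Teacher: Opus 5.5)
Your proposal is correct and follows the paper's proof essentially step for step. Common-translation equivariance block-diagonalizes the complete ridge problem under $\cU$ with the same penalty $\ell$ in every block; the class-frequency expansion of $T_y$ plus Parseval gives (ii); square dilations together with $d_\sigma$ give (iii); and the decomposition $\F_p^\times=Q\cup\nu Q$ gives (iv). The one small variation is in (iii): you observe that $\mat{H}_{\sigma k}$ is a row/column permutation of $\mat{H}_k$, so the block problem with target $\one_Q$ is mapped to itself, whereas the paper invokes uniqueness of the full ridge solution. The permutation is in fact exact, with no phase, since $d_\sigma$ maps $\Lambda_0$ to itself.
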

\begin{proof}
\readability{Common translations first separate the complete ridge
problem into Fourier blocks. Square dilations then identify all nonzero
frequencies with the two representatives $1$ and $\nu$, which yields both
the total error formula and the pointwise nonsquare bound.}
Since $2$ is invertible, $\cU$ is a Fourier transform followed by unit phases.
Equivariance gives, for $l\in\Lambda_0$,
\begin{equation*}
 (\cU\mat{Y})((k,t),r\cdot l)
 =p^{-1/2}e(-2kr)\mat{Y}_k(t,l).
\end{equation*}
The characters $p^{-1/2}e(2kr)$ are orthonormal on each free column orbit.
This proves (i) and supplies a unitary full-column decomposition of ridge,
with the same penalty $\ell \Id$ in every block.

The centered class-$k$ target has Fourier coefficient
$p^{-1/2}e(k(2a+t))$ for $k\ne0$ and zero for $k=0$.
Under $\cU$ it is $\one$ in block $k$ and zero in the others.
The block normal equations are therefore precisely those for $\hat c_k$.
Inverting $\cU$ gives
\begin{equation*}
 (\mat{Y}^{\rm QR}(x)^*\mat{\mathfrak F})_k
 =p^{-1/2}e(k(2a+t))g_k(t),
\end{equation*}
so class Parseval proves (ii).

For $\sigma\in Q$, dilation permutes training inputs, masks by $d_\sigma$,
and target classes. It intertwines the source and Polar factor and preserves
the actual penalty $\ell$. Uniqueness of full ridge thus gives the stated
identity for $g_{\sigma k}$. The block definition also gives
$\mat{E}_{\sigma k}(t,l)=\mat{E}_k(\sigma t,d_\sigma l)$ by the substitution $a\mapsto\sigma a$.
These row and column permutations preserve Frobenius norm, proving (iii).

Writing each nonzero frequency uniquely as $\sigma k$ gives
\begin{readabilityblock}
\begin{equation}\label{eq:readable-e2-alias-pointwise}
 p\norm{\mat{Y}^{\rm QR}(x)-T_{y^\star(x)}}^2
 =\sum_{k\in\{1,\nu\}}\sum_{\sigma\in Q}|1-g_k(\sigma t)|^2.
\end{equation}
\end{readabilityblock}
Every $\sigma$ permutes $Q^{\rm c}$, including $0$. Summing over all held-out
inputs and using $|\Test|(p-1)/p=(p+1)|Q|$ gives~\eqref{eq:lx-EU}.
\readability{For a nonsquare $t$, $\sigma t$ runs once through $Q_-$.
Thus \eqref{eq:readable-e2-alias-pointwise} is bounded by the two full
$Q^{\rm c}$ norms in \eqref{eq:lx-EU}, proving the pointwise assertion in (iv).}
\end{proof}

\paragraph{The binary alias geometry.}
\begin{definition}[Binary alias spectrum]\label{def:half-binary-spectrum}
Write $a_v=|\hat\one_I(v)|^2$ (so $a_0=b^2$),
$Q_4=\sum_{v\ne0}a_v^2$, and, for $q\ne0$, $A_q(v)=\hat\one_I(v/q)$ and
$B_q(v)=p\one[v=0]-\overline{A_q(v)}$.
Set $a'_0=c^2$ and $a'_v=a_v$ for $v\ne0$. Define
$\lambda_k(s)=\varpi\sum_{q\ne0}a_{(k-s)/q}a'_{(k+s)/q}$,
$\lambda_+=(p-1)^2(p+1)/(16p^2)$ and $\lambda_-=(p+1)^3/(16p^2)$.
These are the Gram eigenvalues of Lemma~\ref{lem:lx-binary}.
Put $\lambda_0=(p+1)(p^2+3)/(48p^2)$.
\end{definition}

Thus $A_q(0)=b$, $B_q(0)=c$ and $|A_q(v)|^2=|B_q(v)|^2=a_{v/q}$ for $v\ne0$.
For $l=(q,l_2,l_3)$, translation of the masks gives
$\hat\one_{A_l}(v)=e(vl_2/q)A_q(v)$ and
$\hat\one_{B_l}(v)=e(-v(l_3+l_2/q))B_q(v)$.

\begin{lemma}[Binary alias geometry]\label{lem:lx-binary}
Let $p\ge1764$ be prime and $k\ne0$.
Then
\begin{equation}\label{eq:lx-spectrum}
 \mat{B}_k\mat{B}_k^*=\mat{\mathsf F}^*\diag(\lambda_k)\mat{\mathsf F},\qquad \lambda_k(s)>0.
\end{equation}
Moreover $\lambda_k(s)=\lambda_1(s/k)$,
\begin{equation*}
 \lambda_k(0)=\varpi Q_4=\lambda_0,\qquad
 \lambda_k(k)=\lambda_+,\qquad \lambda_k(-k)=\lambda_-,\qquad
 \sum_s\lambda_k(s)=\frac{p+1}{6}.
\end{equation*}
\end{lemma}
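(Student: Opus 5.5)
The plan is to compute $\mat{B}_k\mat{B}_k^*$ directly from the Fourier coefficients of the mask rectangles, following the computation of the mask-kernel eigenvalues~\eqref{eq:afr-mask-kernel}. I would first write the block $\mat{B}_k(t,l)$ for $l=(q,l_2,0)\in\Lambda_0$. By~\eqref{eq:lx-blocks},
\begin{equation*}
 \mat{B}_k(t,l)=\sqrt\varpi\sum_a e(-k(2a+t))\one_{A_l}(a)\one_{B_l}(a+t).
\end{equation*}
Next I would expand each indicator by Fourier inversion, using the translation formulas $\hat\one_{A_l}(v)=e(vl_2/q)A_q(v)$ and $\hat\one_{B_l}(w)=e(-wl_2/q)B_q(w)$ with $l_3=0$. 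The sum over $a$ then forces $v+w=2k$, up to the sign convention. Substituting $v=k-s$ and $w=k+s$ and taking $\mat{\mathsf F}$ in the variable $t$ should express $\mat{B}_k$ as $\mat{\mathsf F}^*$ times a matrix with rows indexed by $s$. Each such row is $p^{-1}\sqrt\varpi\,\overline{\cdots}A_q(k-s)B_q(k+s)e(\text{phase}(s)\,l_2/q)$ as a function of $l$.

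The key step is orthogonality over the columns. For fixed $q$, the phase in $l_2$ is $e(-2s\,l_2/q)$. Summing over $l_2\in\F_p$ annihilates products of rows with $s\ne s'$, because $2(s-s')/q\ne0$. Hence the row Gram matrix is diagonal, which gives~\eqref{eq:lx-spectrum} with
\begin{equation*}
 \lambda_k(s)=\varpi\sum_{q\ne0}|A_q(k-s)|^2|B_q(k+s)|^2.
\end{equation*}
Using $|A_q(v)|^2=a_{v/q}$ and $|B_q(w)|^2=a'_{w/q}$ (the value $c^2$ at $w=0$), this is exactly the formula in Definition~\ref{def:half-binary-spectrum}. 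I expect the main obstacle to be the bookkeeping of the normalizations, namely the powers of $p$ and $\varpi$, and the phase signs. I would check them against the total trace: by Lemma~\ref{lem:lx-alias}(i), $\sum_k\norm{\mat{B}_k}_F^2=\norm{\widetilde{\mat{B}}}_F^2$, and this equals $\varpi N u=u$.

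The remaining claims are evaluations. The identity $\lambda_k(s)=\lambda_1(s/k)$ follows by reindexing $q\mapsto kq$. For $s=0$, both indices $k/q$ are nonzero, so $\lambda_k(0)=\varpi\sum_{v\ne0}a_v^2=\varpi Q_4$. Then $Q_4=\Sigma_2$ from~\eqref{eq:readable-e1-mask-fourth-moment}, which gives $\lambda_0$ after multiplying by $\varpi=1/(p^2(p-1))$. For $s=k$, the first factor is $a_0=b^2$ and $\sum_{v\ne0}a'_v=\Sigma_1=pb-b^2$ by Parseval, so $\lambda_+=\varpi b^2(pb-b^2)$. For $s=-k$, the factor $a'_0=c^2$ gives $\lambda_-=\varpi c^2(pb-b^2)$. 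Both should simplify to the stated rationals. For the total $\sum_s\lambda_k(s)$, I would use the trace $\norm{\mat{B}_k}_F^2$ directly, computed by counting inputs on a line. Alternatively, I would sum the double sum over $s$ via $\sum_{v,w}$ restricted to $v+w=2k$. This reduces to the line sums $\mathsf J$ and $\mathsf R$ of~\eqref{eq:afr-linesums} plus the two exceptional terms, and should give $(p+1)/6$.

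Positivity of every $\lambda_k(s)$ needs one more argument: for each $s$ there must be some $q$ with $a_{(k-s)/q}a'_{(k+s)/q}>0$. Here $a_v=|\sum_{j<b}e(-vj/p)|^2$ vanishes only if $p\mid vb$, that is, only for $v=0$ (and $a_0=b^2>0$). So every term is positive, and positivity is immediate.
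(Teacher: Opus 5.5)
Your proposal is correct and follows the paper's proof essentially step for step. You expand the masks in Fourier modes to get $(\mat{\mathsf F}\mat{B}_k)(s,l)=\sqrt{\varpi/p}\,A_q(k-s)B_q(k+s)e(-2sl_2/q)$; with the unitary $\mat{\mathsf F}$ the row factor is $\sqrt{\varpi/p}$, not $p^{-1}\sqrt\varpi$. The $l_2$-sum then diagonalizes the Gram matrix, nonvanishing of the geometric sums gives positivity, and Parseval with $Q_4=\Sigma_2$ gives the special values. For the trace, your sum over pairs with $v+w=2k$ is the paper's substitution and yields $\varpi[p^2u-u^2-Q_4]=(p+1)/6$; only $\mathsf J$ and the two exceptional terms enter, not $\mathsf R$.
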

\begin{proof}
The block definition and the mask transforms give, for $l=(q,l_2,0)$,
\begin{readabilityblock}
\begin{equation}\label{eq:readable-e2-binary-fourier}
 (\mat{\mathsf F}\mat{B}_k)(s,l)=\sqrt{\varpi/p}\,A_q(k-s)B_q(k+s)e(-2sl_2/q).
\end{equation}
\end{readabilityblock}
\readability{The phases in \eqref{eq:readable-e2-binary-fourier} make distinct
Fourier rows orthogonal after summing over $l_2$; their squared norms
are exactly $\lambda_k(s)$, proving \eqref{eq:lx-spectrum}.}
The geometric-series formula for $\hat\one_I(v)$ is nonzero at every $v$:
for $v\ne0$ both $1-e(-bv)$ and $1-e(-v)$ are nonzero. This proves positivity.
Substitute $q=kr$ for scaling. Parseval and Lemma~\ref{lem:afr-counts}(iv) give
$\sum_{v\ne0}a_v=u$ and $Q_4=\Sigma_2=p^2\Var\paren{h}=u(u+1)/3$,
hence all three special eigenvalues.
To obtain the trace, substitute $n=k/q\ne0$, $w=s/q$ and then
$(i,j)=(n-w,n+w)$. This is a bijection onto $i+j\ne0$, so
\begin{equation*}
 \sum_s\lambda_k(s)=\varpi\left[(pb)(pc)-\sum_i a_i a'_{-i}\right]
 =\varpi[p^2u-u^2-Q_4]=\frac{p+1}{6}.
\end{equation*}
\end{proof}

\paragraph{Source-defect and penalty bounds.}
\begin{definition}[Defect and penalty envelopes]\label{def:half-defect}
Put
\begin{equation}\label{eq:lx-scalars}
\begin{aligned}
 \kappa_p&=\sqrt{\tfrac p{p-1}},&
 \Delta_p^2&=\frac{4p}{(p-1)h_\circ^2},\\
 \ell_{\rm bin}&=\frac{p^2-1}{128p^2},&
 \ell_+&=\Bigl(1+\frac{2p}{(p-1)\sqrt{p+1}}\Bigr)^2\ell_{\rm bin}.
\end{aligned}
\end{equation}
\end{definition}

\begin{lemma}[Defect and penalty]\label{lem:lx-defect}
Let $p\ge1764$ be prime.
\begin{enumerate}[label=(\roman*)]
\item $\mat{D}_{\rm grid}\odot(\mat{X}_{\rm grid}\bar{\mat{Q}}^\top)=h_\circ\widetilde{\mat{B}}$; that is, every row of
$\bar{\mat{Q}}$ has the constant value $h_\circ\sqrt\varpi$ on its active rectangle.
\item $h_\circ>2$ and $\norm{\widetilde{\mat{E}}}_F^2<2p/h_\circ^2<p/2$.
\item $\norm{\mat{E}_1}_F^2+\norm{\mat{E}_\nu}_F^2<\Delta_p^2<\kappa_p^2$. In particular
$\norm{\mat{E}_k}_{\rm op}<\kappa_p$ for every $k\ne0$.
\item $\ell_{\rm bin}\le\ell\le\ell_+$.
\end{enumerate}
\end{lemma}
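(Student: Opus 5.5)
Part (i) is already in hand: Lemma~\ref{lem:afr-den}(i) gives the common active value $\bar{\mat{Q}}_lx=\sqrt\varpi[\bar\alpha^{-1/2}+\sqrt{2/p}]$ on each rectangle. Since $\bar\alpha^{-1/2}=2\sqrt{p/(p+1)}$, this value equals $h_\circ\sqrt\varpi$. Applying the mask $\mat{D}_{\rm grid}$ then gives $\cL(\bar{\mat{Q}})=h_\circ\sqrt\varpi\mat{D}_{\rm grid}=h_\circ\widetilde{\mat{B}}$. The inequality $h_\circ>2$ in (ii) was proved at the start of Proposition~\ref{prop:afr-netgain}.

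For the energy bound in (ii) I will write
\[
\widetilde{\mat{E}}=h_\circ^{-1}\cL(\mat{Q}_{\rm QR}-\bar{\mat{Q}})
\]
and bound the masked map uncentered. The centered bound of Lemma~\ref{lem:afr-shaped} does not apply here, because $\widetilde{\mat{E}}$ is not centered. The row argument there still works, with $K_0^2$ replaced by $u$ on the constant part. Each row has mask size at most $c=(p+1)/2$ per operand and a mean term weighted by $u(1/b+1/c)$, so the uncentered operator norm squared is at most $\max\{b,c,u(1/b+1/c)\}=u(1/b+1/c)=p$. Combining this with $\delta_Q^2<2$ from Lemma~\ref{lem:afr-den}(iii) gives $\|\widetilde{\mat{E}}\|_F^2<2p/h_\circ^2<p/2$.

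For (iii) I will use the unitary block decomposition of Lemma~\ref{lem:lx-alias}(i) applied to the equivariant table $\widetilde{\mat{E}}$: $\sum_k\|\mat{E}_k\|_F^2=\|\widetilde{\mat{E}}\|_F^2$, up to the factor fixed by the normalization of~\eqref{eq:lx-blocks}. By (iii) of that lemma, the nonzero frequencies split into two square-dilation orbits of $(p-1)/2$ members each, and within an orbit all $\|\mat{E}_k\|_F$ are equal. This yields $\|\mat{E}_1\|_F^2+\|\mat{E}_\nu\|_F^2\le\frac{2}{p-1}\|\widetilde{\mat{E}}\|_F^2<\frac{4p}{(p-1)h_\circ^2}=\Delta_p^2$, where the $k=0$ block is simply dropped. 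Then $\Delta_p^2<\kappa_p^2$ reduces to $4<h_\circ^2$, and the operator bound follows from $\|\cdot\|_{\rm op}\le\|\cdot\|_F$. \textbf{This block-normalization bookkeeping is the step I expect to be the main obstacle.} The blocks $\mat{Y}_k$ carry an extra $\sqrt p$ relative to $\cU$, so I must verify that Parseval reads $\sum_k\|\mat{E}_k\|_F^2=\|\widetilde{\mat{E}}\|_F^2$ when the columns are restricted to $\Lambda_0$. If a stray factor appears, it is absorbed by the orbit count.

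For (iv), $\ell=\|\widetilde{\mat{H}}_\Train\|_F^2/(32n)$. I first compute $\|\widetilde{\mat{B}}_\Train\|_F^2=\varpi\sum_l n_l=\varpi N\,\E n_l=u(p-1)/(2p)$ using Lemma~\ref{lem:afr-counts}(iv). With $n=p(p-1)/2$ this gives exactly $\ell_{\rm bin}=(p^2-1)/(128p^2)$ for the table $\widetilde{\mat{B}}$. The triangle inequality then gives
\[
\|\widetilde{\mat{H}}_\Train\|_F\le\|\widetilde{\mat{B}}_\Train\|_F\bigl(1+\|\widetilde{\mat{E}}\|_F/\|\widetilde{\mat{B}}_\Train\|_F\bigr),
\]
and the ratio is below $\sqrt{2p}/(h_\circ\sqrt{u(p-1)/(2p)})\le 2p/((p-1)\sqrt{p+1})$, which is the upper bound $\ell_+$. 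For the lower bound I cannot use a crude triangle inequality. Instead I will use that $\bar{\mat{Q}}$ is the Polar factor of $\bar{\mat{F}}$, so the cross term $\langle\widetilde{\mat{B}}_\Train,\widetilde{\mat{E}}_\Train\rangle$ equals $h_\circ^{-1}\langle\mat{Q}_{\rm QR}-\bar{\mat{Q}},\,\cL_\Train^*\widetilde{\mat{B}}_\Train\rangle$. The adjoint $\cL_\Train^*\widetilde{\mat{B}}_\Train$ is a positive multiple of the training-masked source $\mat{F}_{\rm unif}$, whose Polar factor $\mat{Q}_{\rm QR}$ maximizes the inner product over the unit spectral ball. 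Hence the cross term is nonnegative, and $\|\widetilde{\mat{H}}_\Train\|^2\ge\|\widetilde{\mat{B}}_\Train\|^2$ gives $\ell\ge\ell_{\rm bin}$.
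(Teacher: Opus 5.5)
Your proposal is correct and follows the paper's proof essentially step for step. Part (i) comes from Lemma~\ref{lem:afr-den}(i), (ii) from a row-wise masked-map bound with constant $p$ together with $\delta_Q^2<2$, and (iii) from block Parseval and the two square-dilation orbits. In (iv) the upper bound is the triangle inequality and the lower bound is the Polar-factor (nuclear-norm) cross-term inequality; the differences are cosmetic: you obtain the row bound from the mean/mean-zero split instead of a direct weighted Cauchy--Schwarz, and you get $\|\widetilde{\mat{B}}_\Train\|_F^2$ from the mean of $n_l$ in Lemma~\ref{lem:afr-counts}(iv) instead of a direct count.

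The normalization step you flagged is not an obstacle. Lemma~\ref{lem:lx-alias}(i) already states $\|\mat{Y}\|_F^2=\sum_k\|\mat{Y}_k\|_F^2$ exactly for equivariant tables, since summing over the $p$ translates of each $\Lambda_0$ column cancels the extra $\sqrt p$. The table $\widetilde{\mat{E}}$ is equivariant because the split is translation invariant.
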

\begin{proof}
\readability{Parts (i)--(iii) turn the earlier Polar comparison into a
bound on the two alias defects. Part (iv) controls the penalty computed
from the actual QR table, so the subsequent ridge estimates use that
table's prescribed normalization.}
(i) Since $h_\circ=2\sqrt{p/(p+1)}+\sqrt{2/p}
=\bar\alpha^{-1/2}+\sqrt{2/p}$, this is exactly the active-rectangle value
proved in Lemma~\ref{lem:afr-den}(i).

(ii) The inequalities $2\sqrt{p/(p+1)}>2-2/(p+1)$ and
$\sqrt{2/p}>2/(p+1)$ give $h_\circ>2$. By (i),
$h_\circ\widetilde{\mat{E}}=\mat{D}_{\rm grid}\odot(\mat{X}_{\rm grid}(\mat{Q}_{\rm QR}-\bar{\mat{Q}})^\top)$.
For each incoming row $(w,v)$, weighted Cauchy--Schwarz gives
\begin{equation*}
 |w(a)+v(z)|^2\le\frac pc|w(a)|^2+\frac pb|v(z)|^2,
 \qquad
 \sum_{a\in A_l,z\in B_l}|w(a)+v(z)|^2\le p\norm{(w,v)}^2.
\end{equation*}
Summing over rows and using Lemma~\ref{lem:afr-den}(iii) yields
$\norm{\widetilde{\mat{E}}}_F^2\le p\delta_Q^2/h_\circ^2<2p/h_\circ^2<p/2$.

(iii) Lemma~\ref{lem:lx-alias}(i),(iii) gives
\begin{equation*}
 d(\norm{\mat{E}_1}_F^2+\norm{\mat{E}_\nu}_F^2)
 \le\sum_{k\in\F_p}\norm{\mat{E}_k}_F^2
 =\norm{\widetilde{\mat{E}}}_F^2<2p/h_\circ^2.
\end{equation*}
This is the first claim; the second follows from $h_\circ>2$ and
$\norm{\mat{E}_k}_{\rm op}\le\norm{\mat{E}_k}_F$.

(iv) Write $\|\mat{M}\|_*$ for the sum of the singular values of $\mat{M}$.
Let $\mat{M}=\widetilde{\mat{B}}_\Train^\top \mat{X}_\Train$, a positive multiple of $\mat{F}_{\rm unif}$, so that
$\mat{Q}_{\rm QR}=\pol(\mat{M})$ and $\ip{\mat{Q}_{\rm QR}}{\mat{M}}_F=\norm{\mat{M}}_*$. Since $\mat{D}_{\rm grid}$ is binary,
\begin{equation*}
 \ip{\widetilde{\mat{H}}_\Train}{\widetilde{\mat{B}}_\Train}_F=h_\circ^{-1}\ip{\mat{Q}_{\rm QR}}{\mat{M}}_F
 =h_\circ^{-1}\norm{\mat{M}}_*\ge h_\circ^{-1}\ip{\bar{\mat{Q}}}{\mat{M}}_F=\norm{\widetilde{\mat{B}}_\Train}_F^2 .
\end{equation*}
The inequality holds because $\norm{\bar{\mat{Q}}}_{\rm op}\le1$; the last equality is (i).
Expanding $\widetilde{\mat{H}}_\Train=\widetilde{\mat{B}}_\Train+\widetilde{\mat{E}}_\Train$ gives
$\norm{\widetilde{\mat{H}}_\Train}_F^2\ge\norm{\widetilde{\mat{B}}_\Train}_F^2+\norm{\widetilde{\mat{E}}_\Train}_F^2$.
For fixed $(l_1,l_2)$, a pair $(a,a')\in A_l\times A_l^{\rm c}$ yields the input
$(a,l_3-a')\in A_l\times B_l$, and this input is a training input for exactly $d$ values of
$l_3$. Hence
$\norm{\widetilde{\mat{B}}_\Train}_F^2=\varpi\,p(p-1)\,d^2(d+1)=d^2(d+1)/p=32n\,\ell_{\rm bin}$,
and $\ell\ge\ell_{\rm bin}$. For the upper bound,
$\norm{\widetilde{\mat{H}}_\Train}_F\le
\norm{\widetilde{\mat{B}}_\Train}_F+\norm{\widetilde{\mat{E}}}_F<\norm{\widetilde{\mat{B}}_\Train}_F+\sqrt{p/2}$ by
(ii). Dividing by $\norm{\widetilde{\mat{B}}_\Train}_F=\sqrt{d^2(d+1)/p}$ gives
$\sqrt{\ell/\ell_{\rm bin}}<1+2p/((p-1)\sqrt{p+1})$, which is precisely the stated
$\ell\le\ell_+$.
\end{proof}

\begin{definition}[Adjacent interval correlation]\label{def:half-adjacent}
Set $C_1=\sum_{v\notin\{0,-1\}}a_va_{v+1}$ and $t_I=C_1/a_1$.
\end{definition}

\begin{lemma}[Adjacent interval bound]\label{lem:lx-adjacent}
For every odd prime $p\ge1764$,
\begin{equation*}
 a_1>p^2/\pi^2,\qquad a_2<1,\qquad C_1/a_1<5/2.
\end{equation*}
\end{lemma}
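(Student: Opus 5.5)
The plan is to replace $a_v=|\hat\one_I(v)|^2$, with $I=\{0,\ldots,b-1\}$ and $b=d=(p-1)/2$, by its closed form and then bound everything with elementary trigonometric inequalities. The geometric series gives
\begin{equation*}
 a_v=\frac{\sin^2(\pi bv/p)}{\sin^2(\pi v/p)}\qquad(v\ne0).
\end{equation*}
Take the representative $v\in\{1,\ldots,p-1\}$ and set $x=\pi v/(2p)$. Then $\pi bv/p=\pi v/2-x$, so the numerator is $\cos^2x$ for odd $v$ and $\sin^2x$ for even $v$. Using $\sin(2x)=2\sin x\cos x$, this yields the two exact forms
\begin{equation*}
 a_v=\frac1{4\sin^2(\pi v/(2p))}\ (v\text{ odd}),\qquad
 a_v=\frac1{4\cos^2(\pi v/(2p))}\ (v\text{ even}).
\end{equation*}
These forms are consistent with $a_{-v}=a_v$: for example, $p-1$ is even and $a_{p-1}=a_1$. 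The first two claims follow at once. First, $a_1=1/(4\sin^2(\pi/(2p)))>p^2/\pi^2$, because $\sin y<y$. Second, $a_2=1/(4\cos^2(\pi/p))<1$, because $\cos^2(\pi/p)>1/4$; in fact $a_2=\tfrac14+O(p^{-2})$.

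For $C_1$ I will use the symmetry $v\mapsto-v-1$. It maps the product $a_va_{v+1}$ to $a_{-v-1}a_{-v}$, which is the same product, so $C_1$ is twice the sum over the lower half plus at most one middle term. Write $m=\|v\|$ for the distance to $0$. On the lower half $1\le v\le(p-1)/2$, every even $v$ is of "small type": there $\pi v/(2p)\le\pi/4$, so $a_v\le\tfrac12$, and more precisely $a_v\le 1/(4\cos^2(\pi m/(2p)))$, which is close to $\tfrac14$ when $m$ is small. Every odd $v$ is of "large type", with
\begin{equation*}
 a_m\le\frac{p^2}{4m^2}
\end{equation*}
by Jordan's inequality $\sin y\ge 2y/\pi$. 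The same dichotomy holds on the upper half by reflection. Consequently each consecutive product $a_va_{v+1}$ with both indices in one half contains exactly one small-type factor. The products that straddle the middle involve two indices with $m\approx p/2$, so both factors are $O(1)$.

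The bound on $C_1$ is then assembled as follows. The dominant terms are $v=1$ and $v=-2$, which contribute $2a_1a_2$; divided by $a_1$ this is $2a_2<2\cdot\tfrac12\cdot\tfrac1{\cos^2(\pi/p)}$, essentially $\tfrac12$. For odd $m\ge3$, the large factor $a_m$ appears in at most two products per half, each multiplied by a small factor at most $\tfrac12$. The ratio $a_m/a_1$ is at most $\pi^2/(4m^2)$, from the Jordan upper bound on $a_m$ and the lower bound $a_1>p^2/\pi^2$. So the lower half contributes at most
\begin{equation*}
 \tfrac12\cdot2\cdot\frac{\pi^2}{4}\sum_{m\ge3\ \rm odd}\frac1{m^2}
 =\frac{\pi^2}{4}\Bigl(\frac{\pi^2}{8}-1\Bigr)<0.6,
\end{equation*}
and doubling for the two halves gives less than $1.2$. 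The middle products are $O(1)$, so divided by $a_1$ they are $O(1/p^2)$, which is negligible for $p\ge1764$. Adding the pieces gives $C_1/a_1<0.51+1.2+0.01<5/2$.

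The main obstacle is bookkeeping rather than analysis. The parity labels flip when passing from the representative in $\{1,\ldots,p-1\}$ to $\|v\|$, and the indices $v=0$ and $v=-1$ are excluded from the sum, so each product must be assigned to the correct case. I will first check that every consecutive pair within a half really pairs one small-type index with one large-type index, and that the wrap-around pairs near $\pm1$ are exactly those counted in the dominant term. The constants above leave a wide margin below $5/2$, so crude versions of these estimates suffice.
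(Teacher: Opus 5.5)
Your proof is correct. It uses the paper's closed form (with $a_v$ equal to $1/(4\sin^2(\pi v/(2p)))$ or $1/(4\cos^2(\pi v/(2p)))$ according to the parity of the canonical representative) and the same pairing: every non-wrap pair has one large factor and one small factor at most $1/2$, and there is a single middle pair $a_ba_{-b}=a_b^2$.

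You differ from the paper in how you sum the large coefficients. The paper uses Parseval, $\sum_{v\ne0}a_v=pb-b^2=u=(p^2-1)/4$. Assigning each pair to its large factor then gives $C_1\le u+1$ in one line, and hence $C_1/a_1<\frac{\pi^2}{4}(1+3/p^2)$. This is short but leaves little room, since $\pi^2/4\approx2.467$ against the target $5/2$.

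You instead bound each large coefficient pointwise by Jordan's inequality, $a_m\le p^2/(4m^2)$, and sum $\sum_{m\ge3\ \text{odd}}m^{-2}=\pi^2/8-1$. You also keep the two $a_1a_2$ terms separate and use $a_2\approx1/4$. Your route needs more bookkeeping (the symmetry $v\mapsto-v-1$ and the fixed middle pair) but no Parseval identity, and it gives $C_1/a_1\lesssim1.7$, with ample slack below $5/2$.

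One small slip: $2a_2=1/(2\cos^2(\pi/p))$, whereas the bound you displayed, $2\cdot\frac12\cdot\cos^{-2}(\pi/p)$, is about $1$, not $\frac12$. Either value keeps the final inequality intact.
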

\begin{proof}
For canonical representatives $|v|\le b$, the geometric sum gives
\begin{equation*}
 a_v=\frac1{4\cos^2(\pi v/(2p))}\le\frac12\quad(v\ne0\text{ even}),
 \qquad
 a_v=\frac1{4\sin^2(\pi v/(2p))}\quad(v\text{ odd}).
\end{equation*}
Thus $a_1>p^2/\pi^2$ and $a_2<1$.
Every non-wrap neighbor pair in $C_1$ has opposite parity. Each odd
coefficient meets at most two even neighbors, whose total is at most one.
The sole wrap pair has product $a_ba_{-b}=a_b^2<1$: if $b$ is even use
the displayed bound, while if $b$ is odd then $\pi b/(2p)>\pi/6$.
Parseval gives $\sum_{v\ne0}a_v=u$, so $C_1\le u+1$.
Consequently
\begin{equation*}
 \frac{C_1}{a_1}<\frac{\pi^2}{4}(1+3/p^2)
 <\frac{121}{49}(1+3/1764^2)<\frac52,
\end{equation*}
using $\pi<22/7$.
\end{proof}

\paragraph{A two-slope comparison teacher.}
The complete fit still uses every column. Only its comparison coefficient
is supported on the two slopes $q=\pm k$, so the adjacent interval
correlation controls its target error.

\begin{definition}[Sparse comparison teacher and correction]\label{def:half-teacher}
For $k\ne0$ put $\mat{G}_k=\mat{B}_k\mat{B}_k^*$ and $w_k=\mat{B}_k^*\one/\lambda_0$.
The sparse comparison coefficient is
\begin{equation*}
 w'_k(q,l_2,0)=-\frac{\one[q\in\{k,-k\}]}{2\sqrt\varpi a_1}.
\end{equation*}
Set $\zeta_k=w'_k-w_k$ and $f'_k=\mat{H}_kw'_k$.
For its regularized binary correction put
$z'_k=\mat{B}_k^*(\mat{G}_k+\ell \Id)^{-1}(f'_k-\one)$ and
$r'_k=\ell(\mat{G}_k+\ell \Id)^{-1}(f'_k-\one)$.
Only the comparison coefficient is sparse; $\hat c_k$ still minimizes over every column.
\end{definition}

\begin{lemma}[Sparse regularized target energy]\label{lem:lx-teacher}
For every prime $p\ge1764$ and $k\ne0$, the sparse comparison
coefficient satisfies
\begin{equation*}
 \mat{B}_kw'_k=\one,\qquad
 \zeta_k=w'_k-w_k\in\ker \mat{B}_k\perp\op{range}(\mat{B}_k^*),\qquad
 \|\zeta_k\|^2<\frac45.
\end{equation*}
Its actual QR teacher $f'_k=\mat{H}_kw'_k$ obeys
\begin{equation*}
 (f'_k-\one)^*(\mat{G}_k+\tfrac1{129}\Id)^{-1}(f'_k-\one)<\frac{200}{p}.
\end{equation*}
\end{lemma}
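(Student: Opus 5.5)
The plan is to work entirely in the Fourier frame $\mat{\mathsf F}$ in which Lemma~\ref{lem:lx-binary} diagonalizes $\mat{G}_k=\mat{B}_k\mat{B}_k^*$, and to treat the four claims in order: the exact identity $\mat{B}_kw'_k=\one$, the kernel and orthogonality statement, the norm of $\zeta_k$, and finally the weighted energy of the teacher defect.

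\emph{Exactness.} Apply \eqref{eq:readable-e2-binary-fourier} to $w'_k$. Since $w'_k$ is constant in $l_2$ on the two slopes $q=\pm k$, summing $e(-2sl_2/q)$ over $l_2$ leaves only the row $s=0$, with factor $p$. Moreover $B_q(k)=-\overline{A_q(k)}$ for $k\ne0$, and $|A_{\pm k}(k)|^2=a_{\pm1}=a_1$. The two slopes therefore contribute $-2a_1$ in total, and the prefactor $-1/(2\sqrt\varpi a_1)$ leaves $(\mat{\mathsf F}\mat{B}_kw'_k)(s)=\sqrt p\,\one[s=0]=(\mat{\mathsf F}\one)(s)$.

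\emph{Kernel and norm.} Next, $\mat{G}_k\one=\mat{\mathsf F}^*\diag(\lambda_k)\mat{\mathsf F}\one=\lambda_k(0)\one=\lambda_0\one$, so $\mat{B}_kw_k=\one$. Because $w_k\in\op{range}(\mat{B}_k^*)$, it is the minimum-norm solution. Hence $\zeta_k\in\ker\mat{B}_k$, and Pythagoras gives
\begin{equation*}
\|\zeta_k\|^2=\|w'_k\|^2-\|w_k\|^2=\frac{p^2(p-1)}{2a_1^2}-\frac{p}{\lambda_0}.
\end{equation*}
Both terms are close to $\pi^4/2$, so the bound $4/5$ is a cancellation estimate. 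To prove it I will use the exact value $a_1=1/(4\sin^2(\pi/(2p)))$ from the proof of Lemma~\ref{lem:lx-adjacent} and the exact value $\lambda_0=(p+1)(p^2+3)/(48p^2)$. Expanding $\sin$ to the first correction, the difference becomes an explicit expression in $p$. I will bound it numerically at $p=1764$ and show that it is monotone thereafter.

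\emph{Teacher energy.} Exactness gives $f'_k-\one=\mat{H}_kw'_k-\mat{B}_kw'_k=\mat{E}_kw'_k$. The crude bound $\|\mat{E}_k\|_{\rm op}^2\|w'_k\|^2\cdot129$ is $O(1)$, not $O(1/p)$, so I will instead expand $\mat{\mathsf F}\mat{E}_kw'_k$ coordinatewise and weight each coordinate $s$ by $(\lambda_k(s)+1/129)^{-1}$. The defect $\widetilde{\mat{E}}$ comes from $\mat{Q}_{\rm QR}-\bar{\mat{Q}}$. Inserting the row formula \eqref{eq:afr-rows}, with the bounds of Lemma~\ref{lem:afr-rootbounds}(ii)--(iii) on $\mat{A}_e-\Id$ and $\mat{A}_o+\mat{\mathsf T}/p$ and the explicit count vectors of Lemma~\ref{lem:afr-counts}(i), I will compute the masked columns of slopes $\pm k$ in block $k$ at each frequency $s$. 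I expect the leading correction, the character-operator term $\mat{C}_\mp/p$ acting on interval indicators, to land on $s=0$ and $s=\pm k$. At those frequencies $\lambda_k$ equals $\lambda_0$, $\lambda_+$ or $\lambda_-$, each of order $p$, which gives a contribution $O(1/p)$. The remaining mass spreads over neighbouring frequencies through products $a_va_{v\pm1}$; the adjacent-correlation bound $C_1/a_1<5/2$ of Lemma~\ref{lem:lx-adjacent} controls this leakage, and I will bound it using $\sum_s\lambda_k(s)=(p+1)/6$ together with the ratio of the leaked energy to $a_1^2$.

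I expect this last step to be the main obstacle: tracking the Fourier mass of the Polar correction on the two slopes precisely enough that the small-eigenvalue frequencies receive only $O(1/p^2)$ energy, which is what the factor-$129$ weight requires. If the direct computation is too lossy, my first fallback is to use the exact $\zeta_k$ orthogonality together with $\|\mat{E}_1\|_F^2+\|\mat{E}_\nu\|_F^2<\Delta_p^2$ from Lemma~\ref{lem:lx-defect}(iii), and to average over square dilations as in Lemma~\ref{lem:lx-alias}(iii) to gain the missing factor of $p$.
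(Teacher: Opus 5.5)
Your route is the paper's. It obtains exactness from the Fourier rows \eqref{eq:readable-e2-binary-fourier} and handles $\zeta_k$ through the minimum-norm solution $w_k=\mat{B}_k^*\one/\lambda_0$ and Pythagoras. For the teacher it contracts the row formula \eqref{eq:afr-rows} against the two-slope coefficient frequency by frequency: $\lambda_0,\lambda_\pm$ absorb the on-axis mass, the weight $129$ applies off the axes, and $C_1/a_1<5/2$ controls the leakage.

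There is one error. $w'_k$ has $2p$ entries of modulus $1/(2\sqrt\varpi a_1)$, so $\|w'_k\|^2=p/(2\varpi a_1^2)=p^3(p-1)/(2a_1^2)$, not $p^2(p-1)/(2a_1^2)$. Your expression is $O(1/p)$ and would make $\|\zeta_k\|^2$ negative. With the correct value you need no expansion or monotonicity argument:
\begin{itemize}
\item $\sin x\le x$ gives $\|w'_k\|^2\le\frac{\pi^4}{2}(1-\frac1p)$;
\item the exact identity $48(1-\frac1p)-p/\lambda_0=48(2p^2-3)/[p(p+1)(p^2+3)]<96/p^2$ handles $\|w_k\|^2$;
\item $\pi<22/7$ then gives $\|\zeta_k\|^2<\frac{1880}{2401}(1-\frac1p)+\frac{96}{p^2}<\frac45$.
\end{itemize}

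The teacher energy is where all the content lies, and your text there is still a plan rather than an argument. The paper makes it concrete in four steps.
\begin{itemize}
\item It lifts the coefficient to $W'_k(q,l_2,l_3)=e(kl_3)w'_k(q,l_2,0)$. Equivariance then gives $f'_k(t)=e(-kt)\sum_l\widetilde{\mat{H}}((0,t),l)W'_k(l)$, and the two shift sums force both total frequencies to equal $k$. This yields the closed form \eqref{eq:readable-e2-teacher-contraction}.
\item It shows $|\Sigma^{L}_q(s)|,|\Sigma^{R}_q(s)|\le j_{\rm kin}\sqrt{a_1}$ with $j_{\rm kin}<\frac94\sqrt p$. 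Besides $\mat{A}_e-\Id$ and $\mat{A}_o+\mat{\mathsf T}/p$, you must account here for the constant block $\mat{G}_{\rm const}$, whose response differs from the identity by $(\gamma-1)\eta_s$.
\item It treats $s=0$ separately, showing $|p^{-1}\widehat f'_k(0)-1|<11/p$ through the exact covariance value $a_*$.
\item The final budget is $14^2/p+5808/p^2<200/p$, which leaves little slack, so these constants matter.
\end{itemize}

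The trace identity $\sum_s\lambda_k(s)=(p+1)/6$ does not help off the axes. There is no lower bound on $\lambda_k(s)$ there, only $(\lambda_k(s)+1/129)^{-1}\le129$.

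Your fallback would fail. By the dilation symmetry behind Lemma~\ref{lem:lx-alias}(iii), the weighted teacher energy is identical for all $k$ in a square class, so averaging over dilations gains nothing. The Frobenius bound of Lemma~\ref{lem:lx-defect}(iii) only gives $129\|w'_k\|^2\Delta_p^2=O(1)$, not $O(1/p)$.
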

\begin{proof}
\readability{First we check the binary interpolation and coefficient-norm
claims. We then contract the QR source against this coefficient and bound
its nonzero and zero Fourier modes separately. Their weighted Parseval
sum gives the regularized target-energy estimate.}
For a slope-indexed expression write
$\langle F_q\rangle_k=[F_k+F_{-k}]/(2a_1)$.
\readability{Summing the Fourier rows \eqref{eq:readable-e2-binary-fourier}
against $w'_k$ over the $p$ shifts $l_2$ gives}
\begin{equation*}
 (\mat{\mathsf F}\mat{B}_kw'_k)(s)
 =-\sqrt p\,\one[s=0]\langle A_q(k)B_q(k)\rangle_k
 =\sqrt p\,\one[s=0].
\end{equation*}
Here $A_q(k)B_q(k)=-a_1$ for $q=\pm k$. Thus $\mat{B}_kw'_k=\one$.
Since $w_k=\mat{B}_k^*\one/\lambda_0$ is the minimum-norm binary solution,
the asserted nullspace orthogonality follows. The $2p$ nonzero entries of
$w'_k$, Lemma~\ref{lem:lx-adjacent}, and $\pi<22/7$ give
\begin{equation*}
 \|\zeta_k\|^2=\frac{p}{2\varpi a_1^2}-\frac p{\lambda_0}
 <\frac{1880}{2401}(1-p^{-1})+\frac{96}{p^2}<\frac45.
\end{equation*}
Here the exact difference
$48(1-p^{-1})-p/\lambda_0=48(2p^2-3)/[p(p+1)(p^2+3)]$
is less than $96/p^2$. Uniformly, $1880/2401+96/1764^2<4/5$.

\emph{Source contraction.}
Lift the comparison coefficient by
$W'_k(q,l_2,l_3)=e(kl_3)w'_k(q,l_2,0)$.
For this proof put $x=\sqrt p$, $\alpha=(x-1)^{-1}$,
$\beta_*=f_p/h_\circ$ and $\gamma=(g_0\sqrt{2c_0})^{-1}$.
For $h\ne0$ put $\mat{R}_h=\mat{G}_{\rm avg}+\chi_2(h)\mat{G}_{\rm diff}$ and
$\mat{\Delta}_h=\mat{R}_h/g_0-\Id$, with $\mat{\Delta}_0=0$.
For this contraction, write $\mat{R}_0=\mat{G}_{\rm const}$ for the constant inverse-root
block and $\mathbf n=npb_l$ for the QR count row, with operand blocks
$\mathbf n^L,\mathbf n^R$.
Set
\begin{equation*}
 \eta_h=\frac{-1+\tau\chi_2(h)+p\one[h=0]}{p-1}.
\end{equation*}
The lifted coefficient satisfies $W'_k((-a)\cdot l)=e(-2ka)W'_k(l)$,
so equivariance gives
\begin{equation*}
 f'_k(t)=e(-kt)\sum_l\widetilde{\mat{H}}((0,t),l)W'_k(l).
\end{equation*}
Insert the source rows~\eqref{eq:afr-rows} and the count convolutions
\begin{equation*}
 \widehat{\mathbf n}^L(h)=\frac bp\sum_i\eta_{-i}\hat\one_{A_l}(h-i)\hat\one_{B_l}(i),
 \qquad
 \widehat{\mathbf n}^R(h)=\frac bp\sum_i\eta_i\hat\one_{B_l}(h-i)\hat\one_{A_l}(i).
\end{equation*}
For fixed slope $q$, a product with total left and right frequencies $I_L,I_R$
has shift phase $e(l_2(I_L-I_R)/q)e((k-I_R)l_3)$.
Summation over the two shifts forces $I_L=I_R=k$.
Define the two inner convolutions by
\begin{readabilityblock}
\begin{equation}\label{eq:readable-e2-inner-convolutions}
 \begin{aligned}
 \Sigma_q^L(s)&=\sum_i A_q(k-i)A_q(i)
   [\eta_s(\mat{R}_{i-s}/g_0)_{11}+\eta_i(\mat{R}_{i-s}/g_0)_{12}],\\
 \Sigma_q^R(s)&=\sum_i B_q(k-i)B_q(i)
   [\eta_{-i}(\mat{R}_{i+s}/g_0)_{21}+\eta_s(\mat{R}_{i+s}/g_0)_{22}].
 \end{aligned}
\end{equation}
\end{readabilityblock}
Then, with $\widehat f'_k(s)=\sum_t e(-st)f'_k(t)$, the contraction is
\begin{readabilityblock}
\begin{equation}\label{eq:readable-e2-teacher-contraction}
 \widehat f'_k(s)=-\frac{\beta_*b}{p}
 \left\langle B_q(k+s)B_q(-s)\Sigma_q^L(s)
       +A_q(k-s)A_q(s)\Sigma_q^R(s)\right\rangle_k.
\end{equation}
\end{readabilityblock}
Indeed the source row and Fourier inversions contribute
$g_0/(h_\circ\sqrt p)=\beta_*b/p$ after the two shift sums.
We estimate these contractions in norm without evaluating each teacher coefficient.

\emph{Nonzero modes: a whole-convolution operator bound.}
\readability{For $s\ne0$ and $q=\pm k$, first bound the inner convolutions
\eqref{eq:readable-e2-inner-convolutions}; then apply the outer factors
in \eqref{eq:readable-e2-teacher-contraction} in the weighted Fourier norm.} Put
$d_0=3/p+15/(2x^3)$, $d_1=1/x+d_0$, $D=\alpha(d_0+d_1)$ and
\begin{equation*}
 j_{\rm kin}=p\alpha+c(d_1+\alpha d_0+D+\alpha)+D\sqrt{(p-2)t_I}.
\end{equation*}
Lemma~\ref{lem:afr-rootbounds}(ii)--(iii) gives
$|(\mat{\Delta}_h)_{jj}|\le d_0$ and
$|(\mat{\Delta}_h)_{12}|,|(\mat{\Delta}_h)_{21}|\le d_1$ for $h\ne0$.
Since $|\eta_h|\le\alpha$ for $h\ne0$, the root-error response is at
most $d_1+\alpha d_0$ at $i=0$, and at most $D$ otherwise.
The identity part uses
$\sum_iA_q(k-i)A_q(i)=pA_q(k)$ and
$\sum_iB_q(k-i)B_q(i)=pB_q(k)$.
The products at $i=0,k$ have moduli $b\sqrt{a_1}$ on the left
and $c\sqrt{a_1}$ on the right; the remaining squared sum is $C_1$.
Cauchy--Schwarz therefore bounds each inner sum, except its
source-frequency-zero jump, by
\begin{equation*}
 [p\alpha+c(d_1+\alpha d_0+D)
       +D\sqrt{(p-2)t_I}]\sqrt{a_1}.
\end{equation*}

The actual constant block is $\mat{R}_0/g_0=(\gamma/2)\one\one^\top$.
At $i=s$ on the left or $i=-s$ on the right, its response differs
from the identity response by $(\gamma-1)\eta_s$.
Here $0<\gamma<1$: Lemma~\ref{lem:afr-cov} and the formula for $g_0$ give
$2c_0g_0^2\ge2(p-1)^2(p+1)/(p^2+1)>1$.
The multiplying mask product is at most
$c\sqrt{a_1}$: at an endpoint this is immediate, and otherwise its
squared modulus is at most $C_1=t_Ia_1<c^2a_1$.
Thus both whole inner sums have modulus at most $j_{\rm kin}\sqrt{a_1}$.
For $x\ge42$ the decreasing scalar expressions satisfy
\begin{equation*}
 pD=\frac{x+6+15/x}{x-1}<\frac65,\qquad
 (x+2)d_0<\frac1{10},\qquad
 \frac{x}{x-1}<\frac{21}{20},\qquad
 \frac{c\alpha}{x}<\frac{11}{20}.
\end{equation*}
Also $pD<6/5$ and $t_I<5/2$ give
$D\sqrt{(p-2)t_I}/x<(6/(5p))\sqrt{5/2}<48/(25p)<1/100$. Hence
\begin{readabilityblock}
\begin{equation}\label{eq:readable-e2-inner-bound}
\begin{aligned}
 \frac{j_{\rm kin}}x&=\frac{x}{x-1}
  +\frac{c\alpha}{x}[2+(x+2)d_0]
  +\frac{D\sqrt{(p-2)t_I}}x\\
 &<\frac{21}{20}+\frac{11}{20}\frac{21}{10}+\frac1{100}
  =\frac{443}{200}<\frac94.
\end{aligned}
\end{equation}
\end{readabilityblock}

Now apply the outer multiplication operator in the weighted norm
$\|v\|_\omega^2=\sum_{s\ne0}|v(s)|^2/(\lambda_k(s)+1/129)$.
At $s=k,-k$, its direct factors have moduli
$b\sqrt{a_1},c\sqrt{a_1}$ and its cross factors are at most
$\sqrt{a_1a_2}<\sqrt{a_1}$.
Off these axes each outer factor has squared sum at most $C_1$.
\readability{Minkowski for the two slopes and the two operand blocks,
followed by \eqref{eq:readable-e2-inner-bound}, gives}
\begin{readabilityblock}
\begin{equation}\label{eq:readable-e2-nonzero-teacher}
\begin{aligned}
 \|\widehat f'_k\|_\omega
 &\le\frac{\beta_*b}{p}j_{\rm kin}
 \sqrt{\frac{(b+1)^2}{\lambda_+}
       +\frac{(c+1)^2}{\lambda_-}+516t_I}\\
 &<\frac{6j_{\rm kin}}{x}<\frac{27}{2}<14.
\end{aligned}
\end{equation}
\end{readabilityblock}
For the strict middle bound, the axis terms divided by $p$ sum to
\begin{equation*}
 \begin{aligned}
 \frac{4p(p+1)}{(p-1)^2}+\frac{4p(p+3)^2}{(p+1)^3}
 &=8+\frac{12}{p-1}+\frac8{(p-1)^2}+\frac{12}{p+1}-\frac{16}{(p+1)^3}\\
 &<8+\frac{24}{1763}+\frac8{1763^2}<\frac{81}{10},
 \end{aligned}
\end{equation*}
while $516t_I/p<1290/1764<4/5$, and $\beta_*<4/(p-1)$.

\emph{Zero frequency.}
Put $a_*=[g_0(b^2+c^2)+2u/\sqrt{2c_0}]/(h_\circ p^{3/2})$.
For $h\ne0$ the two normalized row factors
$(\mat{R}_h)_{11}/g_0+\eta_h(\mat{R}_h)_{12}/g_0$ and
$(\mat{R}_h)_{22}/g_0+\eta_{-h}(\mat{R}_h)_{21}/g_0$
differ from one by at most $d_0+\alpha d_1<5/p$, since
\begin{equation*}
 p(d_0+\alpha d_1)=4+\frac{15}{2x}
       +\frac{4+15/(2x)}{x-1}
 \le4+\frac5{28}+\frac{4+5/28}{41}<5.
\end{equation*}
The absolute sparse left and right convolutions are bounded by
$b+\sqrt{(p-2)t_I}$ and $c+\sqrt{(p-2)t_I}$, both below $p$
because $(p-1)^2>10(p-2)$.
\readability{Applying the contraction \eqref{eq:readable-e2-teacher-contraction}
at $s=0$ with these bounds gives}
\begin{readabilityblock}
\begin{equation}\label{eq:readable-e2-zero-teacher}
 |p^{-1}\widehat f'_k(0)-a_*|
 <\frac{g_0}{h_\circ p^{3/2}}\frac5p(cp+bp)<\frac{10}{p}.
\end{equation}
\end{readabilityblock}
The exact source covariance expresses $a_*$ as
\begin{equation*}
 a_*=\frac{2\sqrt{p/(p+1)}}{h_\circ}\sqrt{1+p^{-2}}
 +\frac{\sqrt{2/p}}{h_\circ}(1+\varepsilon_p)^{-1/2}.
\end{equation*}
The coefficients sum to one, and $0<\varepsilon_p<1/[2(p-1)]$.
Both endpoints differ from one by less than $1/p$, so
$|p^{-1}\widehat f'_k(0)-1|<11/p$
\readability{by \eqref{eq:readable-e2-zero-teacher}}.
Since $\lambda_0>p/48$, this mode contributes less than $5808/p^2$.
\readability{By \eqref{eq:lx-spectrum}, the weighted Parseval sum has
weights $(\lambda_k(s)+1/129)^{-1}$ and an overall factor $1/p$ for the
unnormalized transform $\widehat f'_k$. Combining the zero mode with
\eqref{eq:readable-e2-nonzero-teacher} proves the asserted energy bound:}
\begin{equation*}
 (f'_k-\one)^*(\mat{G}_k+\tfrac1{129}\Id)^{-1}(f'_k-\one)
 <\frac{14^2}{p}+\frac{5808}{p^2}<\frac{200}{p}.
\end{equation*}
Since $\ell>1/129$, the regularized correction in
Definition~\ref{def:half-teacher} consequently has
$\mat{B}_kz'_k+r'_k=f'_k-\one$ and
$\|z'_k\|^2+\|r'_k\|^2/\ell<200/p$.
\end{proof}
\paragraph{A target-sensitive normal-residual frame.}
\begin{proposition}[Complete-error frame]\label{prop:lx-frame}
For every prime $p\ge1764$, with $x=\sqrt p$,
\begin{equation}\label{eq:lx-frame}
 \frac{p+1}{p}E^{\rm QR}
 <\frac{2}{5x^3}\left[129(153/10)^2+
 \left(\sqrt{400+8x^2/5}+101\right)^2\right]<\frac3{10}.
\end{equation}
\end{proposition}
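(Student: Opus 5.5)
By \eqref{eq:lx-EU}, the quantity $(p+1)E^{\rm QR}$ is the sum over the two representatives $k\in\{1,\nu\}$ of the held-out residuals $\norm{\one_{Q^{\rm c}}-g_k[Q^{\rm c}]}^2$. So I would bound a single block $k$ uniformly and then multiply by two. That accounts for the factor $2/(5x^3)$: one factor $2$ comes from the two representatives, and $1/(5x^3)$ should come from $1/p$ times a per-block bound of order $1/(5x)$. The tool is a normal-residual comparison for complete ridge. The block problem $\hat c_k=\mat{K}_k^{-1}\mat{A}_k^*\one_Q$ minimizes $\norm{\mat{A}_k c-\one_Q}^2+\ell\norm{c}^2$ over every column. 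I would not try to analyze $\hat c_k$ directly. Instead I would construct an explicit competitor coefficient built from the sparse teacher of Lemma~\ref{lem:lx-teacher}, namely $c'_k=w'_k-z'_k$ (correcting the teacher by the regularized binary correction $z'_k$). Then I would transfer control of the training objective to the held-out rows $Q^{\rm c}$.

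The steps are as follows. First, write $\mat{H}_k=\mat{B}_k+\mat{E}_k$. Using $\mat{B}_kw'_k=\one$ and $\mat{B}_kz'_k+r'_k=f'_k-\one$ from Lemma~\ref{lem:lx-teacher}, compute the full-grid residual of the competitor:
\begin{equation*}
\mat{H}_kc'_k-\one=r'_k-\mat{E}_kz'_k .
\end{equation*}
With $\ell\le\ell_+<1/128$ and $\|z'_k\|^2+\|r'_k\|^2/\ell<200/p$, and with $\norm{\mat{E}_k}_{\rm op}<\kappa_p$ from Lemma~\ref{lem:lx-defect}(iii), this residual is small on all of $\F_p$, held-out rows included. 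Second, the training objective at $\hat c_k$ is at most its value at $c'_k$. The penalty term needs $\|c'_k\|\le\|w'_k\|+\|z'_k\|$, and by Lemma~\ref{lem:lx-teacher} and Lemma~\ref{lem:lx-adjacent}, $\|w'_k\|^2=p/(2\varpi a_1^2)$ is of order $48+O(1)$ in the normalized units. That produces the $\sqrt{400+8x^2/5}$-type term: $8x^2/5$ should be $2p\|\zeta_k\|^2<8p/5$ scaled, and $400$ should be $2p\cdot200/p$. Third, I need to pass from training rows to held-out rows for $\hat c_k$. Write $g_k-\one=\mat{B}_k(\hat c_k-w'_k)+\mat{E}_k\hat c_k+(\mat{B}_kw'_k-\one)$. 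The binary part $\mat{B}_k$ has known spectrum $\lambda_k(s)$ by \eqref{eq:lx-spectrum}, with every $\lambda_k(s)\ge$ a fixed fraction handled via the weight $(\lambda_k+1/129)$. The $129(153/10)^2$ term should arise from bounding $\|\mat{B}_k\delta\|^2\le(\lambda_{\max}+\ell)\,\delta^*(\mat{B}_k^*\mat{B}_k+\ell)\delta/\ell$-type quantities with $1/\ell<129$. The defect term $\mat{E}_k\hat c_k$ is bounded by $\kappa_p\|\hat c_k\|$, with $\|\hat c_k\|$ controlled from the objective comparison; this is where the additive $101$ enters. Fourth, collect the terms and apply Minkowski to obtain \eqref{eq:lx-frame}. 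The final numeric inequality $<3/10$ follows at $x\ge42$, since the bracket is of order $x^2$ while the prefactor is $x^{-3}$.

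I expect the third step to be the main obstacle: controlling $g_k$ on the unseen rows $Q^{\rm c}$, given only the training objective on $Q$. The objective only sees $\mat{A}_k=\mat{H}_k[Q]$, so to control the held-out rows I would use the norm bounds rather than any restriction-to-$Q$ argument. Concretely, I would bound $\|\hat c_k-c'_k\|$ in the metric of $\mat{K}_k$ through the normal equations, $\mat{K}_k(\hat c_k-c'_k)=\mat{A}_k^*(\one_Q-\mat{A}_kc'_k)-\ell c'_k$. This gives $(\hat c_k-c'_k)^*\mat{K}_k(\hat c_k-c'_k)\le\norm{\mat{A}_kc'_k-\one_Q}^2+\ell\|c'_k\|^2$. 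I would then convert it to a full-grid bound by comparing $\mat{H}_k^*\mat{H}_k$ with $\mat{A}_k^*\mat{A}_k+\ell\Id$. For the binary block, the split's translation and dilation symmetry gives the training Gram as roughly half the full Gram. The defect is absorbed by $\kappa_p$. If that comparison constant is too weak, I would fall back to the explicit Fourier weights $(\lambda_k(s)+1/129)^{-1}$ from Lemma~\ref{lem:lx-teacher}, which is presumably why $129$ appears in the bound.
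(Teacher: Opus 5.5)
\textbf{The gap is the bound on $\delta_k=\hat c_k-c'_k$.} Your competitor is the right one: $c'_k=w'_k-z'_k$ equals the paper's trial coefficient $w_k-z_k$ with $z_k:=z'_k-\zeta_k$, and $e_k:=\one-\mat{H}_kc'_k=\mat{E}_kz'_k-r'_k$ is correct.

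Applying the coisometry $\mat{K}_k^{-1/2}[\mat{A}_k^*,-\sqrt\ell\,\Id]$ directly to the normal equation gives $\delta_k^*\mat{K}_k\delta_k\le\|e_k[Q]\|^2+\ell\|c'_k\|^2$. Here $\|c'_k\|^2=\|w_k-z'_k\|^2+\|\zeta_k\|^2\approx\|w_k\|^2=p/\lambda_0\approx48$, since $\|w_k\|^2=\one^*\mat{B}_k\mat{B}_k^*\one/\lambda_0^2$.

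This penalty is fatal because the passage to held-out rows is expensive. $\mat{A}_k$ has only $d$ rows, so on $\ker\mat{A}_k$ the form $\xi^*\mat{K}_k\xi$ is just $\ell\|\xi\|^2$. Hence ``training Gram roughly half the full Gram'' is not available in operator form. The transfer encoded in the stated constants is
\[
\|\mat{H}_k[Q^{\rm c}]\xi\|^2\le(L/\ell)\,\xi^*\mat{K}_k\xi,\qquad L=\ell+(\sqrt{T/x}+\sqrt2\kappa_p)^2,\quad T=(p+1)/6,
\]
so $L\approx16.5$ at $p=1764$. With it, your route yields
\[
\frac{p+1}{p}E^{\rm QR}\lesssim\frac{(L+\ell)\bigl(\mathcal R/\ell+\sum_k\|c'_k\|^2\bigr)}{p},\qquad \sum_k\|c'_k\|^2\approx2p/\lambda_0\approx96,
\]
which is roughly $1$ at $p=1764$, far above $3/10$. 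In the form of \eqref{eq:lx-frame}, your bracket contains about $96x^2$ where the statement has only $x^2\sum_k\|z_k\|^2<400+8x^2/5$. Your own bookkeeping shows the mismatch: you read $8x^2/5$ as $2p\|\zeta_k\|^2$, but your penalty carries all of $\|w'_k\|^2$.

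\textbf{The missing idea is a target-matched normal residual that removes $w_k$ from the penalty.} Since $w_k=\mat{B}_k^*\one/\lambda_0$ is nearly $s\mat{A}_k^*\one_Q$ with $s=p/(\lambda_0d)$, the penalty gradient $\ell c'_k$ is nearly the data gradient of the tiny constant residual $\rho_k=\ell s\one_Q$, of norm $\ell s\sqrt d=O(p^{-1/2})$. Write
\[
 \mat{K}_k\delta_k=\mat{A}_k^*(e_k[Q]-\rho_k)-\ell\bigl(c'_k-\mat{A}_k^*\rho_k/\ell\bigr),\qquad
 c'_k-\mat{A}_k^*\rho_k/\ell=-\bigl(z_k+s(u_k+v_k)\bigr).
\]
Here $u_k=\mat{B}_k[Q]^*\one_Q-w_k/s$, whose constant Fourier mode cancels so that $\sum_k\|u_k\|^2<p/9$, and $v_k=\mat{E}_k[Q]^*\one_Q$.

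The same coisometry now gives $\delta_k^*\mat{K}_k\delta_k\le\|e_k[Q]-\rho_k\|^2+\ell\|z_k+s(u_k+v_k)\|^2$. Now $\|z_k\|^2<200/p+4/5$ and the $s$-terms are $O(p^{-1/2})$; this is the source of $\sqrt{400+8x^2/5}+101$.

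To finish:
\begin{itemize}
\item Work with $\one-g_k=e_k-\mat{H}_k\delta_k$, not through $\mat{E}_k\hat c_k$ and $\|\hat c_k\|$.
\item Apply the transfer above to $\delta_k$.
\item Combine by $e_{\rm ho}+\sqrt{L\mathcal K}\le\sqrt{L+\ell}\,\sqrt{\mathcal K+e_{\rm ho}^2/\ell}$, where $\mathcal K=\sum_k\delta_k^*\mat{K}_k\delta_k/\ell$ and $e_{\rm ho}^2=\sum_k\|e_k[Q^{\rm c}]\|^2$.
\end{itemize}
The $2/5$ in \eqref{eq:lx-frame} is $(L+\ell)/x$, not the number of aliases. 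The term $129(153/10)^2$ bounds $x^2\ell^{-1}\sum_k\|e_k-\rho_k\|^2$, with $\rho_k$ extended by zero off $Q$.

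Finally, sum the two aliases jointly rather than doubling a uniform per-block bound. Only $\|\mat{E}_1\|_F^2+\|\mat{E}_\nu\|_F^2<\Delta_p^2$ is available, and the final inequality is tight (about $0.297$ at $x=42$).
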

\begin{proof}
\readability{We compare the full ridge minimizer with a trial obtained by
correcting the sparse comparison coefficient. The proof bounds the trial's
residual, derives a sampling estimate
for arbitrary coefficient vectors, and controls the difference between
the trial and the minimizer through the normal equations. The alias
identity \eqref{eq:lx-EU} then converts these bounds into held-out error.}
Here $x=\sqrt p\ge42$. The scalar definitions and Lemma~\ref{lem:lx-defect}(iv) give
\begin{equation*}
 \kappa_p<1001/1000,\qquad
 1/129<\ell\le\ell_+<(21/20)^2/128<1/116,
\end{equation*}
using $2p/((p-1)\sqrt{p+1})<84/1763<1/20$.

\emph{The sparse target trial.}
Set $z_k=z'_k-\zeta_k$, $c_k=w_k-z_k$ and $e_k=\one-\mat{H}_kc_k$.
Lemma~\ref{lem:lx-teacher} and $\ell>1/129$ give
\begin{equation*}
 \mat{B}_k\zeta_k=0,\quad z'_k\perp\zeta_k,\quad \|\zeta_k\|^2<4/5,
 \quad \mat{B}_kz'_k+r'_k=f'_k-\one,\quad
 \|z'_k\|^2+\|r'_k\|^2/\ell<200/p.
\end{equation*}
Since $c_k=w'_k-z'_k=w_k-z_k$, its full-grid residual is
$e_k=\mat{E}_kz'_k-r'_k$. The norm of $[\mat{E}_k,-\sqrt\ell \Id]$, the nullspace
orthogonality and Lemma~\ref{lem:lx-defect}(iii) therefore give
\begin{readabilityblock}
\begin{equation}\label{eq:readable-e2-trial-bounds}
 \cR:=\sum_k\|e_k\|^2<\frac{200}{p}(\Delta_p^2+2\ell),
 \qquad \cZ:=\sum_k\|z_k\|^2<400/p+8/5.
\end{equation}
\end{readabilityblock}
All paired sums run over $k\in\{1,\nu\}$.

\emph{Sampling all Fourier modes.}
For fixed $k$ write $\mat{B}=\mat{B}_k$, $\mat{H}=\mat{H}_k$, $\mat{A}=\mat{H}[Q]$ and $\mat{K}=\mat{A}^*\mat{A}+\ell \Id$.
Put $\mat{M}=\mat{\mathsf F}\diag(\one_Q)\mat{\mathsf F}^*$,
$\alpha=(p-1)/(2p)$, $m=(\sqrt p+1)/(2p)$ and $\theta=\alpha+m$.
Set $T=(p+1)/6$ and $L=\ell+(\sqrt{T/x}+\sqrt2\kappa_p)^2$.
The matrix $\mat{M}$ has diagonal $\alpha$ and off-diagonal moduli at most $m$.
Thus $\psi^*\mat{M}\psi\ge\theta\|\psi\|^2-m\|\psi\|_1^2$,
where $\theta=(x+1)/(2x)>1/2$.
For $\psi=\mat{\mathsf F}\mat{B}\xi$, Fourier-row orthogonality gives
$\|\psi\|_1^2\le T\|\xi\|^2$.
Since $m/\theta=1/x$,
$\|\mat{B}[Q^{\rm c}]\xi\|^2\le\|\mat{B}[Q]\xi\|^2+(T/x)\|\xi\|^2$.
Using $\mat{H}-\mat{B}=\mat{E}_k$ and its orthogonal training/held-out restrictions gives
\begin{readabilityblock}
\begin{equation}\label{eq:readable-e2-sampling}
\begin{aligned}
 \|\mat{H}[Q^{\rm c}]\xi\|
 &\le\|\mat{A}\xi\|+(\sqrt{T/x}+\sqrt2\kappa_p)\|\xi\|\\
 &\le\sqrt{L/\ell}\,\|\mat{K}^{1/2}\xi\|.
\end{aligned}
\end{equation}
\end{readabilityblock}
For $x\ge42$, $2\ell/x<1/1000$, $\sqrt{T/p}<41/100$ and
$\sqrt2\kappa_p/\sqrt x<11/50$. The last inequality follows by
squaring and using $2x/(x^2-1)\le84/1763<(11/50)^2$. Hence
\begin{readabilityblock}
\begin{equation}\label{eq:readable-e2-sampling-constant}
 (L+\ell)/x<1/1000+(41/100+11/50)^2<2/5.
\end{equation}
\end{readabilityblock}

\emph{A target-matched normal residual.}
Put $s=p/(\lambda_0d)$ and
\begin{equation*}
 u_k=\mat{B}_k[Q]^*\one_Q-w_k/s,\qquad v_k=\mat{E}_k[Q]^*\one_Q,\qquad
 \rho_k=\ell s\one_Q,\qquad \delta_k=\hat c_k-c_k.
\end{equation*}
Over $k\in\{1,\nu\}$, put $U=\sum_k\|u_k\|^2$,
$\cK=\sum_k\delta_k^*\mat{K}_k\delta_k/\ell$ and
$e_{\rm ho}^2=\sum_k\|e_k[Q^{\rm c}]\|^2$.
The subtraction in $u_k$ removes exactly the constant Fourier component.
Since $|(\mat{\mathsf F}\one_Q)(a)|\le(1+1/x)/2<21/40$ for $a\ne0$ and $T<p/5$,
\begin{equation*}
 U=\sum_k\sum_{a\ne0}\lambda_k(a)|(\mat{\mathsf F}\one_Q)(a)|^2
 <2(21/40)^2p/5<p/9.
\end{equation*}
Also $\sum_k\|v_k\|^2<d\Delta_p^2$. The normal equations give
\begin{readabilityblock}
\begin{equation}\label{eq:readable-e2-normal-residual}
\begin{aligned}
 \mat{K}_k\delta_k&=\mat{A}_k^*(e_k[Q]-\rho_k)
 -\ell(c_k-\mat{A}_k^*\rho_k/\ell),\\
 c_k-\mat{A}_k^*\rho_k/\ell&=-(z_k+s(u_k+v_k)).
\end{aligned}
\end{equation}
\end{readabilityblock}
The operator $\mat{K}_k^{-1/2}[\mat{A}_k^*,-\sqrt\ell \Id]$ is a coisometry.
\readability{Apply it to \eqref{eq:readable-e2-normal-residual} jointly over
the two aliases. Adding the held-out residual pairs $e_k[Q]$ with
$e_k[Q^{\rm c}]$, so \eqref{eq:readable-e2-trial-bounds} controls their
full-grid norm. The triangle inequality and $1/\ell<129$ then give}
\begin{readabilityblock}
\begin{equation}\label{eq:readable-e2-residual-frame}
\begin{aligned}
 \cK+e_{\rm ho}^2/\ell
 &<129\left(\sqrt{\frac{200}{p}(\Delta_p^2+2\ell)}+\ell s\sqrt{2d}\right)^2\\
 &\quad+\left(\sqrt{400/p+8/5}+s\sqrt U+s\sqrt d\Delta_p\right)^2.
\end{aligned}
\end{equation}
\end{readabilityblock}
Here $sp=96p^4/[(p^2-1)(p^2+3)]<96<961/10$, since
$(p^2-1)(p^2+3)=p^4+2p^2-3>p^4$.
Also $h_\circ>2$ and $\ell<1/116$ imply
$\Delta_p^2+2\ell<1+1/1763+1/58<(101/100)^2$, and
$\sqrt d\Delta_p<\sqrt{p/2}<(71/100)\sqrt p$.
After multiplication by $x$, the first parenthesis is below
\begin{equation*}
 (101/100)(71/5)+(961/10)/116<153/10.
\end{equation*}
The second is below
$\sqrt{400+8x^2/5}+(961/10)(1/3+71/100)<\sqrt{400+8x^2/5}+101$.

\readability{Finally, $\one-g_k=e_k-\mat{H}_k\delta_k$.
Applying \eqref{eq:readable-e2-sampling} to $\delta_k$ and using the alias
identity \eqref{eq:lx-EU} gives}
\begin{readabilityblock}
\begin{equation}\label{eq:readable-e2-final-frame}
 \sqrt{(p+1)E^{\rm QR}}
 \le e_{\rm ho}+\sqrt{L\cK}
 \le\sqrt{L+\ell}\sqrt{\cK+e_{\rm ho}^2/\ell}.
\end{equation}
\end{readabilityblock}
\readability{Substitute the parenthesis bounds into
\eqref{eq:readable-e2-residual-frame}, square \eqref{eq:readable-e2-final-frame},
and use \eqref{eq:readable-e2-sampling-constant}. Dividing by $p=x^2$
gives the first inequality in \eqref{eq:lx-frame}.}
The quotient
$[\sqrt{400+8x^2/5}+101]/x=\sqrt{400/x^2+8/5}+101/x$
decreases for $x>0$, and $400+8\cdot42^2/5=16112/5<57^2$.
Thus $\sqrt{400+8x^2/5}+101<158x/42$ for $x\ge42$.
After this bound, both terms are decreasing in $x$, so the right side is at most
\begin{equation*}
 \frac{2[129(153/10)^2+158^2]}{5\cdot42^3}<\frac3{10}.
\end{equation*}
\end{proof}

\begin{proposition}[Late refit for every prime $p\ge1764$]
\label{prop:lx-late}
For each fixed prime $p\ge1764$ and $\lambda>0$, all sufficiently small
$\epsilon>0$ have the following property. At every prescribed late arrival
$j\in\cI_{\rm late}$, the complete train-only refit has margin
greater than $11/50$ against every rival at each held-out input with a
nonsquare difference, normalized held-out error less than $121/400$, and
accuracy at least $(p-1)/(p+1)$. The same statements hold for the compressed
physical table.
\end{proposition}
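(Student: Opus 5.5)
The plan is to establish the three conclusions first for the static QR anchor $\mat{H}^{\rm QR}$, with strict reserves, and then carry them to every late arrival by continuity. This mirrors the proof of Proposition~\ref{prop:arf-early}, with the helper limit \eqref{eq:acr-early-table-limit} replaced by the late limit \eqref{eq:acr-late-table-limit}.

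\emph{Anchor bounds.} Proposition~\ref{prop:lx-frame} gives $(p+1)E^{\rm QR}/p<3/10$. Since $p/(p+1)<1$, this yields the error bound $E^{\rm QR}<3/10<121/400$. For margins, fix a held-out input $x=(a,a+t)$ with $t$ a nonsquare. Lemma~\ref{lem:lx-alias}(iv) gives $\norm{\mat{Y}^{\rm QR}(x)-T_{y^\star(x)}}_2^2\le(p+1)E^{\rm QR}/p<3/10$. Write $y=y^\star(x)$ and $\Delta=\mat{Y}^{\rm QR}(x)-T_y$. For any rival $k\ne y$, the target contrast is $(e_y-e_k)^\top T_y=1$. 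The perturbation satisfies $|(e_y-e_k)^\top\Delta|\le\sqrt2\norm{\Delta}_2<\sqrt{3/5}$. The margin is therefore greater than $1-\sqrt{3/5}>0.225>11/50$.

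The bound just used is slightly lossy, and $1-\sqrt{0.6}\approx0.2254$ is close to $11/50$. A finer route would bound $|\Delta_y-\Delta_k|^2\le2(\Delta_y^2+\Delta_k^2)$ and use the fact that both entries sit inside the same $\norm{\Delta}^2$. The plan is to record the anchor margin as some explicit $\mu_0>11/50$ so that a strict reserve remains for the transfer.

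\emph{Accuracy count.} The held-out set consists of the $p$ diagonal pairs and the $p(p-1)/2$ nonsquare-difference pairs, so $|\Test|=p(p+1)/2$. A positive margin on every nonsquare pair gives accuracy at least $(p(p-1)/2)/(p(p+1)/2)=(p-1)/(p+1)$, whatever happens on the diagonal.

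\emph{Transfer.} Lemma~\ref{lem:acr-table-limits}, in the form \eqref{eq:acr-late-table-limit}, says that $\mat{H}_j^{\rm ana}/R\to[\mat{H}^{\rm QR}\mid0\mid0]$ uniformly over arrivals with $\rho_j\le\epsilon^4$. This covers $\cI_{\rm late}$ by \eqref{eq:acr-windows}. Complete ridge predictions are invariant under the common rescaling by $R$, with the penalty scaling accordingly. Appending the zero helper and query columns changes nothing, by Lemma~\ref{lem:shared-full-ridge-transfer}(i); those columns vanish identically in the limit, so they certainly vanish on $S$. The limiting penalty $\ell h_\circ^2\cdot32n/(32n)$ is positive, so by the continuity part of Lemma~\ref{lem:shared-full-ridge-transfer}(i) the finitely many strict inequalities persist in a neighbourhood of the anchor. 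These are the margin bounds over all nonsquare held-out inputs and rivals, and the error bound $<121/400$ with its fixed denominator. Uniform convergence then places every late arrival inside that neighbourhood once $\epsilon$ is small. Finally, Lemma~\ref{lem:acr-isometry} transfers the conclusions from the analytic lift to the compressed physical table and to the zero-padded width-$M$ table, since both have the same example-row Gram matrix and penalty.

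The main obstacle is the quantitative margin constant: the chain $3/10\to11/50$ has to leave a strict reserve. I would first try the $\sqrt2$ Cauchy--Schwarz bound above. If it turns out too tight after the $p/(p+1)$ adjustment, I would return to the explicit frame bound in \eqref{eq:lx-frame}, which is far below $3/10$ for $p\ge1764$, and carry that sharper constant through.
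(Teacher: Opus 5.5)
Your proposal is correct and matches the paper's proof step for step. Proposition~\ref{prop:lx-frame} and Lemma~\ref{lem:lx-alias}(iv) give $\|\mat{Y}^{\rm QR}(x)-T_y\|^2<3/10$, hence margins above $1-\sqrt{3/5}>11/50$. The count $pd/[p(d+1)]$ gives the accuracy, and the late limit \eqref{eq:acr-late-table-limit}, Lemma~\ref{lem:shared-full-ridge-transfer}(i) and Lemma~\ref{lem:acr-isometry} carry everything to the late arrivals and the physical table. Your worry about the margin constant is unfounded: $3/5<(39/50)^2=1521/2500$, so the $\sqrt2$ Cauchy--Schwarz bound already leaves a strict reserve, and no sharper constant from \eqref{eq:lx-frame} is needed.
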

\begin{proof}
\readability{At the QR anchor, the pointwise bound of
Lemma~\ref{lem:lx-alias}(iv) and the complete-error estimate
\eqref{eq:lx-frame} in Proposition~\ref{prop:lx-frame} give}
\begin{equation*}
 \|\mat{Y}^{\rm QR}(x)-T_y\|^2<3/10
\end{equation*}
on every nonsquare-difference input. Thus every rival $c\ne y$ has margin
\begin{readabilityblock}
\begin{equation}\label{eq:readable-e2-anchor-margin}
 \mat{Y}^{\rm QR}(x)_y-\mat{Y}^{\rm QR}(x)_c
 >1-\sqrt{3/5}>11/50.
\end{equation}
\end{readabilityblock}
The same frame gives $E^{\rm QR}<3/10<121/400$, with diagonals included.

The full-column anchor $[\mat{H}^{\rm QR}\mid0\mid0]$ has positive prescribed
penalty because $\ell>1/129$ for $\mat{H}^{\rm QR}/h_\circ$.
\readability{The late table limit \eqref{eq:acr-late-table-limit} of
Lemma~\ref{lem:acr-table-limits} makes $\mat{H}_j^{\rm ana}/R$ converge
uniformly to this full-column anchor over the prescribed late arrivals.
Lemma~\ref{lem:shared-full-ridge-transfer}(i) therefore retains margins
greater than $11/50$, using the reserve in \eqref{eq:readable-e2-anchor-margin},
and the strict error bound for all sufficiently small $\epsilon>0$.}
The error uses the full held-out denominator, including diagonal inputs.

Every nonsquare-difference held-out input is thus correctly classified,
giving accuracy at least $pd/[p(d+1)]=(p-1)/(p+1)$; no diagonal correctness
is asserted. Positive lift and zero-column invariance transfer the complete
refit predictions and errors to the physical table.
\end{proof}
\end{proofrevision}

\begin{proofrevision}
\subsection{Assembly: positive head step and an initial ball at every width}\label{sec:width}

\subsubsection{Finite-horizon assembly at a fixed width}
\label{sec:aasm-local}\label{sec:aasm-cont}\label{sec:a-width-assembly}
\readability{We now transfer the reference conclusions to a neighborhood
of actual training runs. The proof checks continuity of the update at
each reference departure, then uses the finite horizon to preserve all
strict observer and prediction bounds on one initial ball.}
\begin{definition}[Canonical state and discrete reference]\label{def:a-can}
At width $M$, let $\theta_M^{\rm can}$ consist of the physical core rows
constructed before Lemma~\ref{lem:round15-core-only-bootstrap}, the helper
and query rows in Section~\ref{sec:acr-groups}, their heads in
\eqref{eq:round15-init-helper}, \eqref{eq:round15-init-query} and
\eqref{eq:round15-compressed-core}, and the dormant rows of
Definition~\ref{def:acr-groups}. Define the frozen-head arrivals by
\begin{equation}\label{eq:a-ref}
 \theta_{M,0}^\circ=\theta_M^{\rm can},\qquad
 \theta_{M,j+1}^\circ=\cU_{0,0}(\theta_{M,j}^\circ),\quad 0\le j<J.
\end{equation}
For a radius $\delta>0$, write
$\cB_M=\{\theta_0:\|\theta_0-\theta_M^{\rm can}\|_F<\delta\}$.
\end{definition}

\begin{proposition}[Finite-horizon assembly at every width]
\label{prop:aasm-continuation}\label{prop:awd-width}
Fix a prime $p\ge101$, a width $M\ge M_{\min}(p)$ and $\lambda>0$.
For every sufficiently small $\epsilon>0$ as in
Definition~\ref{def:half-scales}, there are
$\delta\in(0,1/2]$, $\eta_0>0$ and $d_0\in(0,1/2]$ such that every
$\theta_0\in\cB_M$ and all $0\le\eta<\eta_0$, $0\le d_V<d_0$
satisfy:
\begin{enumerate}[label=(\alph*)]
\item The width-$M$ trajectory satisfies Theorem~\ref{paper:thm:half}(i)--(iii)
on the prescribed windows. If $d_V=0$ and $\eta>0$, its actual head moves
at every departure $j<J$.
\item If $d_V=0$ and $p\ge1764$, it satisfies part (iv) and
$E_{\rm ref}(j)<121/400$ at every $j\in\cI_{\rm late}$.
\end{enumerate}
The constants may depend on the fixed width $M$. In particular,
$\eta=\eta_0/2$, $d_V=0$ gives the public theorem and its late-refit
error supplement, while the same ball and head step serve part (a) with
any fixed $0<d_V<d_0$.
\end{proposition}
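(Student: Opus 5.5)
The plan is a finite-horizon continuity argument around the frozen-head reference $\theta^\circ_{M,j}$ of~\eqref{eq:a-ref}. At the reference, all conclusions of Theorem~\ref{paper:thm:half}(i)--(iii) and, for $p\ge1764$, part (iv) hold with strict inequalities. Lemma~\ref{lem:round15-core-only-bootstrap} gives the exact trajectory and strict gate signs. Proposition~\ref{prop:aex-margins} gives the training and held-out margins and the CE interval. Proposition~\ref{prop:afr-netgain} gives the Fourier bounds, where the strict gap $0.100>0.0998$ leaves slack. Propositions~\ref{prop:arf-early} and~\ref{prop:lx-late} give the refit statements. Lemmas~\ref{lem:acr-isometry} and~\ref{lem:shared-full-ridge-transfer}(i) move these from the analytic lift to the physical width-$M$ table, with the dormant rows kept strictly negative on the grid. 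Since the horizon $J$ is finite and every conclusion is a finite collection of strict inequalities in continuous functions of the state at the arrivals, it suffices to show one thing. The map $(\theta_0,\eta,d_V)\mapsto(\theta_j)_{j\le J}$, obtained by iterating $\cU_{\eta,d_V}$, must be continuous at $(\theta_M^{\rm can},0,0)$.

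\textbf{Continuity of one step at each reference departure.} Write $\cU_{\eta,d_V}(\mat{W},\mat{V})$ as three pieces: the decay terms, $s\,\cN_{\rm pol}(\mat{F})$ and $\eta\,\cN_F(\mat{G})$.

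\emph{Decay terms.} These are linear and continuous.

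\emph{Head term.} This term is bounded by $|\eta|$ in Frobenius norm whatever $\mat{G}$ is. It is therefore continuous at $\eta=0$, even where $\cN_F$ itself is discontinuous.

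\emph{Polar term.} This is the delicate piece. Near a reference state, every grid preactivation has a strict sign (core rows on and off their masks, helpers, queries, dormant rows). The ReLU gate pattern is therefore locally constant, and the gradients $\mat{F},\mat{G}$ are real-analytic in the state. In particular, $\mat{F}$ depends continuously on $(\mat{W},\mat{V})$. Lemma~\ref{lem:round15-core-only-bootstrap} says the reference source has rank exactly $r=2p-1$. Its support lies in $E$ at every state, because rows of $\mat{F}$ lie in $E$ by Lemma~\ref{lem:acr-design}, so the rank is at most $r$ nearby. Lower semicontinuity of rank forces the rank to equal $r$ nearby, with the smallest positive singular value bounded away from zero. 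Under these conditions the supported Polar factor is continuous (even analytic), and the normalization $1/\sqrt{\rank\mat{F}}$ is locally constant. Hence $\cU$ is jointly continuous at each $(\theta^\circ_{M,j},0,0)$.

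\textbf{Composing over the horizon and choosing constants.} Induction on $j\le J$ makes the composed map continuous at the base point. Choose one neighbourhood in which all strict gate signs, rank conditions and theorem inequalities persist at every arrival, including the plateau bound $v<0.0045$, which follows from the CE interval. Then pick $\delta$, $\eta_0$ and $d_0$ inside it; this is part (a).

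\emph{Refit bounds.} For the refit statements, note that the penalty is strictly positive at the reference. Lemma~\ref{lem:shared-full-ridge-transfer}(i) then gives continuity of the refit predictions, errors and margins, which yields (ii), (iv) and the bound $E_{\rm ref}<121/400$.

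\emph{Head moves.} For the assertion that the head moves when $d_V=0$ and $\eta>0$, it suffices that $\mat{G}\neq0$ at every departure. At the reference, the core heads pair with active features whose training softmax is near uniform but not exactly at the ridge optimum. I would verify $\mat{G}\ne0$ there, for instance from a nonzero column along the helper units, which are active with positive features while the residuals $r_x$ are nonzero and not orthogonal. By continuity, $\mat{G}\neq0$ in a neighbourhood, so $\cN_F(\mat{G})$ has unit norm and the head moves by exactly $\eta$.

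\textbf{Main obstacle.} I expect the main obstacle to be this nonvanishing of the head gradient at every reference departure, together with certifying that no new gate reaches zero along the perturbed path. For the gates I would rely on the quantitative margins already proved (the shield $\rho_Jg_c>\sqrt2R$, query values $\pm A_*$, dormant preactivation $-2$) and shrink $\delta$ and $\eta_0$ until the accumulated drift over the $J$ steps is below half of each margin.
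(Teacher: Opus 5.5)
Your proposal is correct and follows essentially the same route as the paper: joint continuity of $\cU_{\eta,d_V}$ at each frozen-head reference departure, using locally constant training gates and full supported rank $r$ on $E$ so that the supported Polar factor is continuous, then finite composition over the horizon and persistence of the finitely many strict reference inequalities, with refit continuity coming from the positive penalty. The differences are minor: you get continuity of the head term at $\eta=0$ from $\|\eta\,\cN_F(\mat{G})\|_F\le|\eta|$, and you certify $\mat{G}\neq0$ through a helper column, whereas the paper uses the identity $\ip{\mat{V}}{\mat{G}}_F=\frac1n\sum_i\sum_{c\ne y_i}P(c\mid x_i)\bigl(f_{y_i}(x_i)-f_c(x_i)\bigr)>0$, which gives $\mat{G}_j\neq0$ directly from the strict training margins and makes your ``main obstacle'' immediate.
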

\begin{proof}
Fix $p\ge101$, $\lambda>0$ and an admitted width $M$.
Choose and fix a sufficiently small $\epsilon>0$ satisfying the component
certificates above, including $\epsilon<1/200$. This fixes the rates,
windows and finite horizon $J$ before the initialization is chosen.
Use the width-$M$ canonical state and frozen-head reference
$\theta_{M,j}^\circ$ of Definition~\ref{def:a-can}.

\paragraph{Reference gates, rank and observers.}
Lemma~\ref{lem:round15-core-only-bootstrap} supplies the strict core
signs and full supported source rank at every reference departure.
The noncore rows have zero incoming sources: helpers have zero frozen
heads, queries are inactive on training, and dormant rows are negative.
They decay by $q_{\rm clk}$ and retain their strict training signs.
Their zero source rows append zeros to the core source, preserving its
rank and supported Polar factor. Thus the same active frozen-reference
trajectory serves this fixed width.

The fixed positive pair lift and the appended zero feature columns
preserve the complete refit and Fourier observers by
Lemma~\ref{lem:acr-isometry}. Hence the reference conclusions apply at
this width: Proposition~\ref{prop:aex-margins} supplies strict training
margins, the CE interval and variation bound, and the early and late
actual prediction signs; Proposition~\ref{prop:afr-netgain} supplies the
late Fourier gain and positive centered-table energy;
Proposition~\ref{prop:arf-early} supplies strict wrong-class margins for
the complete early refit. If $p\ge1764$, Proposition~\ref{prop:lx-late}
supplies the strict late error bound and $11/50$ nonsquare refit margins.
The latter margins count $pd$ correct rows among $p(d+1)$ held-out rows;
the full error denominator still includes the diagonal rows.

\paragraph{Joint continuity of the actual update.}
For $\theta=(\mat{W},\mat{V})$ write $\zeta=(\theta_0,\eta,d_V)$,
$\zeta^0=(\theta_M^{\rm can},0,0)$ and
$\Theta_j(\zeta)=\cU_{\eta,d_V}^{\circ j}(\theta_0)$.
Let $\mat{K}_E$ have orthonormal columns spanning $E$, and put
$\mat{\Gamma}(\theta)=\mat{K}_E^\top \mat{F}(\theta)^\top \mat{F}(\theta)\mat{K}_E$.
At a frozen reference departure write $\mat{G}=-\nabla_{\mat{V}}L$, and let
$P(c\mid x_i)$ be its softmax probabilities. Strict training margins give
\begin{readabilityblock}
\begin{equation}\label{eq:readable-e2-head-source}
 \ip{\mat{V}}{\mat{G}}_F
 =\frac1n\sum_{i\in\Train}\sum_{c\ne y_i}
 P(c\mid x_i)\bigl(f_{y_i}(x_i)-f_c(x_i)\bigr)>0.
\end{equation}
\end{readabilityblock}
\readability{Equation~\eqref{eq:readable-e2-head-source} ensures that the
head normalization has a nonzero denominator.} On a neighborhood with the same training gates, $\mat{F},\mat{G}$
are continuous and $\mat{F}=\mat{F}\mat{P}_E$ by Lemma~\ref{lem:acr-design}.
The supported source rank supplied by the bootstrap makes
$\mat{\Gamma}(\theta)$ positive definite, so it stays positive definite nearby.
There
\begin{readabilityblock}
\begin{equation}\label{eq:readable-e2-update-normalizers}
 \cN_{\rm pol}(\mat{F})=
 \frac{\mat{F}\mat{K}_E\mat{\Gamma}(\theta)^{-1/2}\mat{K}_E^\top}{\sqrt r},\qquad
 \cN_F(\mat{G})=\mat{G}/\|\mat{G}\|_F
\end{equation}
\end{readabilityblock}
\readability{are continuous. Substituting \eqref{eq:readable-e2-update-normalizers}
into the update \eqref{eq:a-update} proves joint continuity of
$\cU_{\eta,d_V}$ in state, $\eta$ and $d_V$ at $(\theta_{M,j}^\circ,0,0)$.}
Finite composition makes every arrival $\Theta_j(\zeta)$, $0\le j\le J$,
continuous at $\zeta^0=(\theta_M^{\rm can},0,0)$.

\paragraph{One full-coordinate ball and the two decay scopes.}
At every reference arrival the training feature table is nonzero,
because vanishing features would give zero logits and zero training
margins. Its prescribed refit penalty is therefore positive.
The ridge formula makes the full refit scores and normalized errors
continuous at that table. The positive centered-table energy likewise
makes the Fourier fraction continuous. Held-out ReLU gates need not
stay fixed for these continuity statements.

The strict reference conclusions include every training contrast in
$(0,0.0201)$, not only the aggregate CE interval. They involve finitely
many arrivals, inputs and classes. Joint continuity therefore supplies a
full-coordinate
$\delta$-ball about $\theta_M^{\rm can}$ and positive parameter caps
$\eta_0,d_0$ preserving part (a); shrink them so
$\delta\le1/2$ and $d_0\le1/2$. For $p\ge1764$, continuity on the slice
$d_V=0$ also preserves the late refit's strict error and nonsquare margins.
Shrink $\delta$ and $\eta_0$, keeping $d_0$ fixed, so that
$\cB_M\times[0,\eta_0)\times\{0\}$ lies in this slice neighborhood.
Shrinking preserves part (a), and proves part (b) on the same final ball
and head-step interval, without asserting a late-refit conclusion
for positive decay.

Accuracy is preserved through the stated strict correct or wrong score
margins, not through continuity of the accuracy function. The same
positive training margins give $\mat{G}_j\ne0$ at every actual departure in
the selected neighborhood, so the head increment has norm $\eta$ when
$d_V=0$ and $\eta>0$. All $3pM$ initial coordinates may vary in this ball.
\end{proof}

\begin{proof}[Proof of Theorem~\ref{paper:thm:half}]
Fix $p\ge101$, $M\ge M_{\min}(p)$ and $\lambda>0$.
Choose $\epsilon>0$ sufficiently small for all component certificates in
Proposition~\ref{prop:aasm-continuation}, including the late-refit
certificate when $p\ge1764$. This fixes $s,J$ and the two windows in
\eqref{eq:clock}--\eqref{eq:acr-windows} before the initialization is chosen.
Lemma~\ref{lem:acr-windows} gives
$\cI_{\rm early}=\{1,\ldots,32\}$, $|\cI_{\rm late}|\ge16$, and
$\max\cI_{\rm early}<\min\cI_{\rm late}$, with both windows contained
in $\{1,\ldots,J\}$.
Take the final ball and constants of Proposition~\ref{prop:aasm-continuation}
and set $(\eta_W,\lambda_W,\eta_V)=(s,\lambda,\eta_0/2)$, $\lambda_V=0$.
The ball $\cB_M$ is nonempty and open in $\R^{3pM}$.
Part (a) of that proposition gives (i)--(iii), and part (b) gives (iv)
for $p\ge1764$, with all guarantees on this same ball and trajectory.
\end{proof}

\paragraph{Scope.}
The positive constants $\delta,\eta_0,d_0$ may depend on $p$, $M$,
$\lambda$, and the chosen witness $\epsilon$ and its finite $J$.
No quantitative lower bound or uniformity over widths is asserted.
For each fixed width, the same selected ball, head step, horizon and
windows serve the undecayed theorem and the half-grid decay conclusion.
\end{proofrevision}

\section{A random-split low-CE theorem with refit-first recovery}\label{sec:r62}
\proofrevtext{The task, network and Fourier observable are those of
\eqref{paper:model} and~\eqref{paper:AH}. This is a different construction
from Theorem~\ref{paper:thm:half}: its width, readout penalty and asymptotic
regime are specified in Theorem~\ref{paper:thm:random} and
Section~\ref{paper:setup:refit}. Its dictionary, split event, calibrated head,
reference path and initial ball are defined in the corresponding
subsections below. Symbols in this appendix are local to it except for
the shared interface listed in Section~\ref{sec:notation-random}.}

\begin{proofrevision}
\subsection{Statement and construction}\label{sec:b-prelude}
\label{sec:r62-statement}\label{sec:r62-init}\label{sec:r62-refit}
The elementary construction formulas below apply to odd primes $p\ge5$;
the final theorem uses the larger common threshold.
Throughout this appendix, $m=M=p^2(p-1)^2$ denotes the hidden width.
With $h$ from Section~\ref{sec:notation-random} and $k,a$ from
Definition~\ref{def:r62-clock}, prescribe
\[
 \lambda=k/h,\qquad s=h(1-a)/k,\qquad \eta=\theta/(hp).
\]
These are fixed before sampling. The actual learner is
\begin{equation}\label{eq:r62optimizer}
 (\eta_W,\lambda_W,\eta_V,\lambda_V)=(s,\lambda,\eta,0)
\end{equation}
in~\eqref{paper:optimizer}, with mean CE over the realized split.
For the refit, write $\gamma=\kappa/[4p(p-1)]$ and
$q_{H_j}=\tau_j=\gamma\norm{\mat{H}_{j,\Train}}_F^2$.
Its target-normalized error is also denoted $E_{\rm obs}(\mat{H}_j)=E_{\rm ref}(j)$;
Definition~\ref{def:r62-refit} gives the same own-penalty error for any
nonzero training-feature table. We write $\Acc_{\rm ref}(\mat{U})$ for the
held-out accuracy of this refit of a feature table $\mat{U}$, with the fixed tie rule.
\begin{theorem}[Quantitative random-split realization]\label{thm:r62}
Theorem~\ref{paper:thm:random} holds on the event $\cE_{\rm split}$ of Definition~\ref{def:r62-split}
with the fixed updates~\eqref{eq:r62optimizer} and the training-data-built
center~\eqref{eq:r62dictionary}--\eqref{eq:r62head}. Its initial region
$\cR^{\rm random}_{p,\Train}$ is the nonempty ordinary open ball of
radius~\eqref{eq:r62ball} in all $3pm$ weight coordinates about that center.
The refit is~\eqref{eq:ridgekernel}, and $E_{\rm ref}(j)=E_{\rm obs}(\mat{H}_j)$.
In addition, the training margins in
clause~(i) are strictly positive, and the Fourier fraction in clause~(iv)
remains $O(1/p)$ throughout $0,\ldots,J$. The error gain compares each
trajectory with its own initial error. No accuracy after $J$ is asserted.
\end{theorem}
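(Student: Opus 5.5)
I would follow the architecture of the half-grid proof in Appendix~\ref{sec:main}: build an explicit frozen-head reference trajectory from the training-data center, prove every clause on it with strict reserves, and then transfer those reserves to the actual two-layer run on a small ball. The difference from Proposition~\ref{prop:aasm-continuation} is that the transfer here must be quantitative and uniform in $p\ge p_0$, so continuity alone will not do.

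\textbf{The dictionary and the split event.} I would take the width $m=p^2(p-1)^2$ as a rectangular dictionary indexed by pairs of grid functions. Each unit is a ReLU of an offset affine combination of the one-hot coordinates, with height $h=2\alpha(1-\beta)$, so that the initial table $\mat{H}_0$ already spans every function on $\Omega$. The event $\cE_{\rm split}$ would collect concentration statements for the Bernoulli$(\rho)$ split. These are balanced operand and label counts, an operator-norm bound on the centered training-row Gram of the dictionary (a matrix Bernstein or Chernoff bound), and a bound on the held-out/training Fourier leakage. By Chebyshev these hold with failure probability at most $2/\sqrt p$ for $p\ge p_0$.

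\textbf{The calibrated head and the reference dynamics.} The head $\bar{\mat{V}}_0(\Train)$ is fitted on the training rows so that every training pair has the same leading confidence $a_{\rm conf}$ against $\rho^2$ effective rivals. This gives $L=\log(1+\rho^2e^{-a_{\rm conf}})=L_*$ at $t=0$. It is also chosen so that the held-out logits are dominated by a memorization component with the wrong sign. The rates $\lambda=k/h$, $s=h(1-a)/k$ from Definition~\ref{def:r62-clock} make the Polar step add a rule-directed, $\cF_{\rm add}$-aligned component at exactly the rate at which decay removes the memorizing component. The leading training contrast is then constant up to $O(1/p)$ errors, which gives clause~(i) with the $20/21$ and $22/21$ margins. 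The reference path also gains a fixed increment of $\Padd$-energy per step, which gives clause~(iv) through a Rayleigh-quotient computation like Proposition~\ref{prop:afr-netgain}; $\AH$ stays $O(1/p)$ because the dictionary energy dominates. On held-out pairs, I would write each margin as (rule part) $-$ (decaying adverse part) and choose the constants so the sign change happens exactly between $J-1$ and $J$, giving clause~(ii).

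\textbf{Refit and transfer.} For clause~(iii) I would compute the complete kernel ridge \eqref{eq:ridgekernel} block-diagonally in the addition-Fourier basis, with penalty $\gamma\|\mat{H}_\Train\|_F^2$. At $t=0$ the table has no rule component, so the refit is blind and its accuracy is $0$. From $t=1$ the rule component, though small in energy, is exactly fitted by the ridge. Lemma~\ref{lem:shared-full-ridge-transfer}(ii) then bounds the error by the leakage terms. The constant $401/1568$ should come from comparing $E_{\rm ref}(0)$ with the explicit Fourier-block residual at $t\in\{J-1,J\}$. Finally I would set the radius $r_{\rm new}$ so that an induction over $J$ steps keeps gates fixed, keeps the supported rank constant, and keeps Polar factors Lipschitz through a lower singular-value bound. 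The accumulated perturbation, together with head steps of size $\eta=\theta/(hp)$, must stay below every strict reserve; perturbations of the refit are handled by \eqref{eq:shared-ridge-perturb}.

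\textbf{Main obstacle.} I expect the hardest step to be the uniform singular-value floor of the training-restricted source along the whole reference path under a \emph{random} split. The half-grid proof had exact character-sum identities for this; here there is only concentration. I would first try to show that the source is a small random perturbation of its $\rho$-expectation, which is circulant and explicitly diagonalizable, and control the perturbation with a matrix Bernstein bound conditioned on $\cE_{\rm split}$.
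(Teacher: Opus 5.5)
Your reference mechanism broadly matches the paper: a train-only calibrated head, rates for which the rule increment offsets the decay of the whole initial logit table ($\nu_j(1+kz_j)=1$), a midpoint choice of $\xi$ so that the worst held-out margin changes sign between $J-1$ and $J$, and a Rayleigh quotient for $\AH$. The transfer step you describe would not close, however. You ask $r_{\rm new}$ to keep every gate fixed and the supported rank constant, with the Polar map made Lipschitz by a floor for the training-restricted source.

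The main discrepancy between the actual run and any reference comes from the split, not from the initial perturbation. The sampled training force differs from its deterministic full-grid anchor $\rho^2c_F$ by $O(p^{7/4})$ in the neuron-$L^2$ norm, against a signal of order $p^2$; this is all the second-moment events $\cE_f,\cE_g$ give. That relative error of order $p^{-1/4}$ does not shrink with the ball, and at Frobenius precision it excludes neither gate flips nor a rank drop. Each neuron's gate gap is only of order $h/\sqrt m$, so a deviation of size $o(h)$ can still flip many neurons. The operator-norm error is not certified to stay below the anchor floor, which is of order $p^{3/2}$. Certifying either invariant would need per-neuron or operator-norm control of the Polar direction at the actual states, which you do not supply.

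The paper never needs these invariants, so it avoids the obstacle you single out. Its reference is not a trajectory of any update but the fixed-direction path $\mat{W}_j^{\rm ref}=\nu_j(\bar{\mat{W}}_0+hz_j\mat{U}_F)$, where $\mat{U}_F$ is the normalized Polar factor of the pre-sampling full-grid force. Only that deterministic anchor needs a floor, and it is computed exactly by symmetry (Lemmas~\ref{lem:bse-cR} and~\ref{lem:bfr-cap}). Lemma~\ref{lem:bfr-polar} compares Polar directions with no rank premise on the actual source. Gate mismatches are absorbed entrywise via $|\mat{D}-\mat{D}^0|\le|\mat{X}_\Train(\mat{W}-\mat{W}_j^{\rm ref})^\top|/b$. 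The induction closes as $p\to\infty$ with $J$ fixed ($A^\ell d_j\to0$), not by shrinking the ball. Because this control is only Frobenius, it gives the mean CE and held-out accuracy but not positivity of every training margin. The paper gets the latter from a separate operator-norm bound on the feature motion combined with the head contrast budget $K_p$, a step missing from your plan.

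Your account of clause~(iii) at $t=0$ is also wrong, and it contradicts your own claim that $\mat{H}_0$ spans every function. The center's Gram is $d\mat{B}_0$ with $\mat{B}_0=\Id+\mat{O}_{\rm row}+\mat{O}_{\rm col}+\one\one^\top$, so $\bar{\mat{H}}_0$ has a rule component, with $\AH(\bar{\mat{H}}_0)=1/(3p+1)$. The rule line is merely not distinguished: it has the same kernel eigenvalue as every other interaction line. Held-out refit scores therefore come only from the row, column and constant part, $\mat{Y}_{F_0}=\mat{\cZ}_\Test(\lambda_{\rm ob}\mat{\Upsilon}+\mat{\cK})^{-1}\mat{\Psi}$. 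This gives every held-out input a rival with two training arms and margin at most $-\mu_{\rm obs0}\approx-2/(\rho p)$ (Lemma~\ref{lem:bwc-obs0}). Exact zero accuracy on the whole ball is this signed statement, and it is what fixes the radius~\eqref{eq:r62ball}. A \emph{blind} refit with near-zero scores would not survive small perturbations or a tie rule.

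Nor can the ridge be computed block-diagonally in the Fourier basis. The random training restriction destroys the common-translation symmetry that allowed this for the quadratic-residue split. Lemma~\ref{lem:shared-full-ridge-transfer}(ii) only compares two tables on one fixed split. The paper instead tests a trial coefficient $\mat{U}_\perp/(1+q_{F(z)}+\rho\theta_R(z))$, with the additive part projected out of $\mat{T}_\Train$. It passes to the exact ridge solution using design concentration, the nonrule leakage bound of Lemma~\ref{lem:bwc-leak} and a held-out leverage bound (Lemma~\ref{lem:bwc-refit}). The refit then predicts $\Gamma(z_j)\mat{T}_\Test$ up to $o(1)$ normalized error, with $\Gamma=\rho\theta_R/(1+q_{F(z)}+\rho\theta_R)$. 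The rule component is therefore shrunk, not \emph{exactly fitted}. $\Gamma(z_j)$ is small but bounded below once $z_j>1/(12J)$, which suffices for accuracy, and it is at least $1/7$ at $j\in\{J-1,J\}$, which gives $1-(6/7)^2=13/49>401/1568$.
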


The rates are fixed before sampling, and both weight matrices remain trainable
and untied. The label-blind rectangular dictionary contains no added memories.
The head~\eqref{eq:r62head} uses two one-time centered train-only calibrations:
$b_{\rm base}$ for the canonical base, whose normalized residual is
$\widehat {\mat{R}}$, and $b_*$ for the mixed head. Proposition~\ref{prop:bse-cal}
proves their existence, uniqueness and bounds. This engineered warm start
is neither a zero head nor a proved outcome of earlier training.
Neither the actual initial head nor the separate complete diagnostic correctly
predicts any held-out input: Proposition~\ref{prop:bse-cal} and
Lemma~\ref{lem:bwc-obs0} establish the center claims, and
Lemma~\ref{lem:bfr-induction} and
Proposition~\ref{lem:sc26-b-head-interface} extend them to every initial
state in the prescribed ball. Refitted heads are never installed in the learner.

\paragraph{Proof strategy.}
\begin{readabilityblock}
The construction first fixes the split event and clock, then calibrates the
initial head in Proposition~\ref{prop:bse-cal}. The reference kernels in
Section~\ref{sec:bfr-ref} describe how this center evolves along the
fixed-direction comparison path. Lemma~\ref{lem:bfr-induction} then controls
actual states on the prescribed $r_6$ ball, uniformly over the event and
every head sequence with increments at most $\eta$. Its Frobenius estimate
gives mean loss and held-out accuracy; its rank and operator estimates
give strict training margins. These are the outputs of Claim~I.

Claim~II, in Section~\ref{sec:b-claim-representation}, proves the full-refit
and Fourier conclusions separately on those same states. Initial
actual-head failure comes from Lemma~\ref{lem:bfr-induction}, while initial
refit failure uses the signed estimate in Lemma~\ref{lem:bwc-obs0}.
Proposition~\ref{lem:sc26-b-head-interface} assembles the conclusions with
one pre-sampling threshold. The actual trajectory is not required to
retain its initial gates.
\end{readabilityblock}
\end{proofrevision}

\begin{proofrevision}
\subsection{Claim I: actual dynamics, plateau and delayed network recovery}\label{sec:b-claim-dynamics}
\subsubsection{Split model, dictionary and the initial-data event}\label{sec:bse}

\paragraph{Grid notation.}
Use the task and centered targets of Sections~\ref{paper:setup:model}
and~\ref{paper:setup:refit}, and the shared height scale $h$ of
Section~\ref{sec:notation-random}.
For $x=(a,b)$, write $\bx_x=x_{ab}$, $y_x=y^\star(x)$ and $t_x=T_x$.
Here $\mat{P}=\mat{C}_p$ is class centering, so $\mat{T}_\Train=\mat{Y}_\Train \mat{P}$, and
$n_\Test=|\Test|=p^2-n$.
\begin{definition}[Grid tables and construction scale]\label{def:r62-task}
Let $\mat{X}\in\R^{\Omega\times2p}$ be the grid input table with rows $\bx_x^\top$,
and let $\mat{Y}_\Train$ be the one-hot training-label table.
The visible space $E_{\rm vis}=\{(v,v')\in\R^p\times\R^p:\one^\top v=\one^\top v'\}$
contains every $\bx_x$ and has dimension $r_v=\dim E_{\rm vis}=2p-1$;
write $A=\log(p-1)+a_{\rm conf}$ for the confidence scale.
The fixed theorem parameters and Bernoulli split are those of
Theorem~\ref{paper:thm:random}.
\end{definition}

\paragraph{Split model.}
\begin{definition}[Sampled masks and counts]\label{def:r62-counts}
The mask $\mat{M}\in\{0,1\}^{p\times p}$ has independent Bernoulli$(\rho)$ entries,
and $\Train=\{(a,b):M_{ab}=1\}$. We write $M_x=M_{ab}$ for $x=(a,b)$,
$n^{\rm row}_a=\sum_bM_{ab}$, $n^{\rm col}_b=\sum_aM_{ab}$ and $n_c=\sum_aM_{a,c-a}$
for the row, column and class counts, and
\begin{equation*}
\hat n^{\rm row}_{a,c}=M_{a,c-a}-\frac{n^{\rm row}_a}p,\qquad
\hat n^{\rm col}_{b,c}=M_{c-b,b}-\frac{n^{\rm col}_b}p,\qquad
\hat n_c=n_c-\frac np=\sum_a\hat n^{\rm row}_{a,c}=\sum_b\hat n^{\rm col}_{b,c}
\end{equation*}
for their class-centered versions. 
The centered masks are $\bar {\mat{M}}=\mat{M}-\rho\one\one^\top$,
$\bar M^{\rm row}_{a,c}=M_{a,c-a}-\rho$ and
$\bar M^{\rm col}_{b,c}=M_{c-b,b}-\rho$.
For $x=(a,b)$ set $N_{xc}=M_{a,c-a}+M_{c-b,b}$.

Write $\hat n=\mat{T}_\Train^\top\one$, with coordinates $\hat n_c=n_c-n/p$.
\end{definition}

For fixed $x$, rival arm pairs use distinct Bernoulli entries, so the $N_{xc}$,
$c\ne y_x$, are independent Binomial$(2,\rho)$ variables, independent of $M_x$.
The three permuted centered mask matrices have independent entries individually;
independence between those matrices is never used.

\paragraph{Dictionary and forces.}
\begin{definition}[Dictionary and full-grid force]\label{def:r62-dictionary}
Let $b_0=(p+1)/2$, $I_0=\{0,\ldots,b_0-1\}$, $\mu=b_0/p$ and $q=\mu/2=(p+1)/(4p)$.
The neurons of~\eqref{eq:r62dictionary} are indexed by
$(u,v,u',v')\in(\F_p^\times\times\F_p)^2$, with $l$ and $r$ the indicator vectors
of $uI_0+v$ and $u'I_0+v'$. Write $\E_{l,r}$ for the uniform average over these
$m$ neurons and $g_x=l_ar_b$ for $x=(a,b)$. Let $\mat{O}_{\rm row}$ and $\mat{O}_{\rm col}$ be
the $\Omega\times\Omega$ matrices with entries $\one\{a=a'\}$ and $\one\{b=b'\}$,
and $\mat{B}_0=\Id+\mat{O}_{\rm row}+\mat{O}_{\rm col}+\one\one^\top$.

Denote the prescribed center by $(\bar {\mat{W}}_0,\bar {\mat{V}}_0)$ and its feature
table by $\bar {\mat{H}}_0=\mat{H}(\bar {\mat{W}}_0)$; unbarred initial states may be any
point in the certified ball. For the two indicator masks $l,r\in\{0,1\}^p$,
the dictionary row is
\begin{equation}\label{eq:r62dictionary}
 (\bar {\mat{W}}_0)_{(l,r)}=\frac{\alpha}{\sqrt m}
 (2l-\one-\beta\one,\;2r-\one-\beta\one).
\end{equation}

The full-grid rule force is $c_F=\mat{X}^\top\diag(g)\mat{T}\mat{T}^\top g$.
Forces use the neuron norm $\norm{c}_{L^2}=(\expctover{l,r}{\norm{c(l,r)}^2})^{1/2}$.
The interaction source is $f_x=(l_a-\mu)(r_b-\mu)$.

For the center dictionary put $d=h^2q^2$, $\mat{H}=\bar {\mat{H}}_{0,\Train}$,
$\mat{B}=\mat{H}\mat{H}^\top/d$ and $w=\one_m/\sqrt m$.
\end{definition}

\begin{lemma}[Dictionary identities]\label{lem:bse-dict}
For the dictionary and kernels in Definition~\ref{def:r62-dictionary}:
\begin{enumerate}[label=(\roman*)]
\item $\Pr(l_a=1)=\mu$ and $\Pr(l_a=l_{a'}=1)=q$ for $a\ne a'$. Hence
$\expct{ll^\top}=q(\Id+\one\one^\top)$ and the covariance of $l$ is $q\mat{P}$; the same holds
for $r$, which is independent of $l$.
\item $\expctover{l,r}{gg^\top}=q^2\mat{B}_0$, and the interaction source satisfies
$\expctover{l,r}{f_xf_{x'}}=q^2P_{aa'}P_{bb'}$.
\item $\bar H_{0,x,(l,r)}=hg_x/\sqrt m$. Hence $\bar {\mat{H}}_0\bar {\mat{H}}_0^\top=d\mat{B}_0$ with
$\bar {\mat{H}}_0w=h\mu^2\one_\Omega$, and the normalized training Gram matrix $\mat{B}$
of~\eqref{eq:r62head} equals $\mat{B}_{0,\Train\Train}$.
\item $\mat{B}_0\mat{T}=\mat{T}$, $\mat{T}^\top \mat{T}=p\mat{P}$, $\norm{\mat{T}}_{\rm op}=\sqrt p$, $\norm{\mat{X}}_{\rm op}=\sqrt{2p}$,
and $\expctover{l,r}{\norm{\mat{T}^\top g}^2}=q^2p(p-1)$.
\end{enumerate}
\end{lemma}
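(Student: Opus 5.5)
The plan is to prove the four parts of Lemma~\ref{lem:bse-dict} in order, since each uses the previous one. Everything rests on how the dictionary is indexed: $l$ is the indicator of $uI_0+v$ with $(u,v)$ uniform on $\F_p^\times\times\F_p$, and $r$ is an independent copy.

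\emph{Part (i).} Fix $a$. The map $(u,v)\mapsto u^{-1}(a-v)$ is uniform on $\F_p$ for each fixed $u$, so $\Pr(l_a=1)=|I_0|/p=\mu$. For $a\ne a'$, the pair of preimages $(u^{-1}(a-v),u^{-1}(a'-v))$ runs uniformly over the ordered pairs of distinct points of $\F_p$, because the affine group acts sharply two-transitively. Hence
\[
\Pr(l_a=l_{a'}=1)=\frac{b_0(b_0-1)}{p(p-1)}.
\]
With $b_0=(p+1)/2$ this equals $(p+1)/(4p)=q$. It follows that $\expct{ll^\top}=(\mu-q)\Id+q\one\one^\top$, and $\mu-q=q$ because $\mu=2q$. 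The covariance is then $q\Id+q\one\one^\top-\mu^2\one\one^\top$. This reduces to $q\mat{P}$ exactly when $q-\mu^2=-q/p$; I will check that identity directly from $\mu=(p+1)/(2p)$. Independence of $l$ and $r$ holds because the index set is a product.

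\emph{Part (ii).} Write $g_x=l_ar_b$. By independence,
\[
\expct{g_xg_{x'}}=\expct{l_al_{a'}}\,\expct{r_br_{b'}}=q^2(\Id+\one\one^\top)_{aa'}(\Id+\one\one^\top)_{bb'}.
\]
Expanding the product gives the four terms of $\mat{B}_0$: $\Id$, $\mat{O}_{\rm row}$, $\mat{O}_{\rm col}$ and $\one\one^\top$. The identity for $f_x$ is the product of the two covariances $q\mat{P}$.

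\emph{Part (iii).} The preactivation of neuron $(l,r)$ at $x_{ab}$ is
\[
\frac{\alpha}{\sqrt m}\bigl(2l_a-1-\beta+2r_b-1-\beta\bigr),
\]
which lies in $\{2(1-\beta),-2\beta,-2-2\beta\}$ times $\alpha/\sqrt m$. Since $\beta<1$, only the first value is positive, and it occurs exactly when $l_a=r_b=1$. The feature is therefore $2\alpha(1-\beta)g_x/\sqrt m=hg_x/\sqrt m$. Summing over the $m$ neurons turns averages into $\bar{\mat{H}}_0\bar{\mat{H}}_0^\top=h^2\E[gg^\top]=d\mat{B}_0$ by part (ii). The identity $\bar{\mat{H}}_0w=h\,\E[g_x]=h\mu^2$ comes from part (i), and restricting to training rows gives $\mat{B}$.

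\emph{Part (iv).} Each column of $\mat{T}$ is a centered class indicator $\one_{\{a+b=c\}}-1/p$. A line $a+b=c$ meets each row and each column of the grid exactly once, so $\mat{O}_{\rm row}$ and $\mat{O}_{\rm col}$ annihilate these columns, as do $\one\one^\top$. Hence $\mat{B}_0\mat{T}=\mat{T}$. The Gram matrix $\mat{T}^\top\mat{T}$ has entries $p\,\delta_{cc'}-1$, that is $\mat{T}^\top\mat{T}=p\mat{P}$, which gives $\|\mat{T}\|_{\rm op}=\sqrt p$. For $\mat{X}$, the Gram matrix $\mat{X}^\top\mat{X}$ has diagonal blocks $p\Id$ and off-diagonal blocks $\one\one^\top$; its top eigenvalue is $2p$, on the constant vector. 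Finally,
\[
\E\|\mat{T}^\top g\|^2=\operatorname{tr}\bigl(\mat{T}^\top q^2\mat{B}_0\mat{T}\bigr)=q^2\operatorname{tr}(p\mat{P})=q^2p(p-1).
\]

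The only step I expect to need care is the two-point count in part (i), in particular applying sharp two-transitivity of the affine group to get exactly $b_0(b_0-1)/(p(p-1))=q$.
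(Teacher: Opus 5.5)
Your proposal is correct and follows the paper's proof essentially step for step, including the preactivation case analysis in (iii), the row/column/class incidence argument for $\mat{B}_0\mat{T}=\mat{T}$, and the trace computation in (iv). The only cosmetic difference is in the two-point count of (i): you invoke sharp two-transitivity of the affine group, whereas the paper writes out the explicit bijection $(k,k')\mapsto(u,v)$ from ordered pairs of distinct points of $I_0$ onto the pairs with $a,a'\in uI_0+v$, which is the same fact made concrete.
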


\begin{proof}
(i) Fix $a$. For each $k\in I_0$ and $u\ne0$ exactly one $v$ gives $uk+v=a$, so
$a\in uI_0+v$ for $b_0(p-1)$ of the $p(p-1)$ pairs. For $a\ne a'$, the map
$(k,k')\mapsto(u,v)=((a-a')/(k-k'),a-uk)$ is a bijection from ordered pairs of
distinct elements of $I_0$ onto the pairs with $a,a'\in uI_0+v$, so the
probability is $b_0(b_0-1)/(p(p-1))=q$. As $\mu=2q$ and $\mu^2=q(1+1/p)$, the
covariance is $q(\Id+\one\one^\top)-\mu^2\one\one^\top=q\mat{P}$. (ii) follows from (i)
and the independence of $l$ and $r$. (iii) At $x$, the preactivation of neuron
$(l,r)$ is $\alpha(2l_a+2r_b-2-2\beta)/\sqrt m$, which equals $h/\sqrt m$ if
$l_a=r_b=1$ and is at most $-2\alpha\beta/\sqrt m<0$ otherwise. Thus
$\bar {\mat{H}}_0\bar {\mat{H}}_0^\top=h^2\expctover{l,r}{gg^\top}$ and $(\bar {\mat{H}}_0w)_x=h\expctover{l,r}{g_x}=h\mu^2$.
(iv) Every operand row and column of $\Omega$ meets each class exactly once, so
$\mat{O}_{\rm row}\mat{T}=\mat{O}_{\rm col}\mat{T}=0$ and $\one^\top \mat{T}=0$; $\mat{T}^\top \mat{T}=p\mat{P}$, and
$\mat{X}^\top \mat{X}=\bigl(\begin{smallmatrix}p\Id&\one\one^\top\\\one\one^\top&p\Id\end{smallmatrix}\bigr)$
has largest eigenvalue $2p$. Finally
$\expct{\norm{\mat{T}^\top g}^2}=q^2\op{tr}(\mat{T}^\top \mat{B}_0\mat{T})=q^2\op{tr}(\mat{T}^\top \mat{T})$.
\end{proof}

For a neuron $(l,r)$, a source $\varphi\in\R^\Omega$ (which may depend on
the neuron) and $\Train'\subseteq\Omega$, define the gated force
\begin{equation}\label{eq:bse-force}
c_\varphi(\Train')=\mat{X}_{\Train'}^\top\diag(g_{\Train'})\,\mat{T}_{\Train'}\mat{T}_{\Train'}^\top\varphi_{\Train'}
=\sum_{i,j\in\Train'}g_i\ip{t_i}{t_j}\varphi_j\,\bx_i\in\R^{2p}.
\end{equation}
In particular $c_F=c_g(\Omega)$. Since $g-f=\mu l_a+\mu r_b-\mu^2$
is additive, $\mat{T}^\top(g-f)=0$; hence $c_F=c_f(\Omega)$ and $c_{\one}(\Omega)=0$.

\paragraph{Count, design and arm events.}
\begin{definition}[Count thresholds and design/arm events]\label{def:r62-count-events}
Use the pre-sampling mixing coefficient $\xi$ of Definition~\ref{def:r62-clock}.
Before sampling define
\begin{equation}\label{eq:bse-arm}
r_\xi=e^{-A\xi},\qquad Q_\xi=(p-1)(\rho+(1-\rho)r_\xi)^2,\qquad
\varsigma=\rho(1-\rho).
\end{equation}

For $\delta\in(0,1)$ put $\ell_\delta=\log(32p^2/\delta)$ and
\begin{equation}\label{eq:bse-Dstar}
D_*=\sqrt{p\ell_\delta/2},\quad t_*=\sqrt{4p\log9+2\log(24/\delta)},\quad
Q_*=\sqrt{2p\ell_\delta},\quad K_*=4D_*+t_*.
\end{equation}
Put $n_{\max}=\rho p+D_*$ and $n_-=p(\rho p-D_*)$. Set
\begin{equation*}
D_H=\sqrt{\frac{3p}2\log\frac{2p(p-1)}\delta},\quad
D_L=\sqrt{\frac{p-1}2\log\frac{4p^2}\delta},\quad
D_W=\sqrt{\frac{p(p-1)}2\log\frac{2p}\delta}.
\end{equation*}
Define $\cE_{\rm des}(\delta)$ by
\begin{align}
&\max_a|n^{\rm row}_a-\rho p|,\ \max_b|n^{\rm col}_b-\rho p|,\ \max_c|n_c-\rho p|\le D_*;\label{eq:bse-des1}\\
&\norm{\bar {\mat{M}}}_{\rm op},\ \norm{\bar {\mat{M}}^{\rm row}}_{\rm op},\ \norm{\bar {\mat{M}}^{\rm col}}_{\rm op}\le t_*;\label{eq:bse-des2}\\
&\max_{a,c}\Bigl|\sum_b\bar M_{ab}\hat n^{\rm col}_{b,c}\Bigr|\le Q_*,\qquad
\max_{b,c}\Bigl|\sum_a\bar M_{ab}\hat n^{\rm row}_{a,c}\Bigr|\le Q_*.\label{eq:bse-des3}
\end{align}

For $y\ne c$ put $\mathsf H_{yc}=\sum_aM_{a,y-a}M_{a,c-a}M_{a+c-y,y-a}$.
Let $N_x^{(2)}=\#\{c\ne y_x:N_{xc}=2\}$ and put
$\mathsf W_c=\sum_{i\in\Train,\,y_i\ne c}r_\xi^{2-N_{ic}}$.
The other three events are
\begin{equation*}
\begin{aligned}
\cE_H(\delta)&=\{|\mathsf H_{yc}-\rho^3p|\le D_H\ \forall y\ne c\},\\
\cE_L(\delta)&=\{|N_x^{(2)}-(p-1)\rho^2|,\\
 &\qquad |\#\{c\ne y_x:N_{xc}=0\}-(p-1)(1-\rho)^2|\le D_L\ \forall x\},\\
\cE_W(\delta)&=\bigl\{\bigl|\mathsf W_c-\rho\bigl[n_c(n_c-1)
 +2r_\xi n_c(p-n_c)\\
 &\qquad+r_\xi^2(p-n_c)(p-n_c-1)\bigr]\bigr|\le D_W\ \forall c\bigr\}.
\end{aligned}
\end{equation*}

The count certificates used in the bounds are $\mathsf H_{\min}=\rho^3p-D_H$,
$\zeta_x^0=\sum_{c\ne y_x}r_\xi^{2-N_{xc}}$ and $\Delta_W=D_W+(3p+1)D_*+p$.
\end{definition}

For the count calculation, put
\[
 u_\xi=\rho+(1-\rho)r_\xi,\qquad
 N_x^{(0)}=\#\{c\ne y_x:N_{xc}=0\},
\]
and
\[
 \mathsf F(k)=k(k-1)+2r_\xi k(p-k)+r_\xi^2(p-k)(p-k-1).
\]
\begin{lemma}[Count and arm events]\label{lem:bse-W}
For $\delta\in(0,1)$, use the four count/arm events in
Definition~\ref{def:r62-count-events}. Each has probability at least $1-\delta$.
On $\cE_{\rm des}$, $|\hat n_c|\le2D_*$; on $\cE_H$,
$\mathsf H_{yc}\ge\mathsf H_{\min}$; on $\cE_L$,
$|\zeta_x^0-Q_\xi|\le(1-r_\xi^2)D_L\le D_L$; and on
$\cE_{\rm des}\cap\cE_W$,
\begin{equation*}
 |\mathsf W_c-n_cQ_\xi|\le\Delta_W.
\end{equation*}
\end{lemma}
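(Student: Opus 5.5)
Each event is a concentration statement for functions of the independent Bernoulli$(\rho)$ mask entries, and I would treat the four events separately, giving each the full budget $\delta$ through a union bound over its finitely many indices. The four deterministic consequences are then short algebra. I will write $\ell_\delta=\log(32p^2/\delta)$ as in Definition~\ref{def:r62-count-events}.

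\emph{The design event $\cE_{\rm des}$.} For \eqref{eq:bse-des1}, each row, column and class count is a sum of $p$ independent indicators. Hoeffding gives $\Pr(|n-\rho p|>D_*)\le2e^{-2D_*^2/p}=2\delta/(32p^2)$, and there are $3p$ counts, so this part costs at most $\delta/4$. For \eqref{eq:bse-des2}, I would use a $1/4$-net argument on $S^{p-1}\times S^{p-1}$, which has $9^{2p}$ points. Each bilinear form $u^\top\bar{\mat{M}}v$ is sub-Gaussian with variance proxy $1/4$, and this yields the threshold $t_*$, with the $\log(24/\delta)$ term accounting for three matrices. The row- and column-permuted matrices also have independent entries, as the paper notes, so the same bound applies to each.

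\emph{The coupling bound \eqref{eq:bse-des3}.} This is the step I expect to be the main obstacle. Here $\bar M_{ab}$ multiplies $\hat n^{\rm col}_{b,c}$, and these can share the entry $M_{ab}$ when $c-b=a$. I would split off that single diagonal term, which contributes $O(1)$. I would then condition on column $b$'s other entries, or on the row $a$ entries, so that the remaining sum is a sum of independent bounded terms with coefficients at most $1$. Hoeffding then gives deviation at most $Q_*=\sqrt{2p\ell_\delta}$ with probability $2\delta/(32p^2)$ per pair $(a,c)$. The delicate part is fitting the $O(1)$ diagonal term inside the slack between $Q_*$ and the Hoeffding radius.

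\emph{The arm events $\cE_H$, $\cE_L$, $\cE_W$.} For $\cE_H$, I would show that $\mathsf H_{yc}$ is a sum over $a$ of triple products that depend on disjoint entries. For distinct $a$ the three cells $(a,y-a)$, $(a,c-a)$, $(a+c-y,y-a)$ should be checked not to collide, using $y\ne c$. Failing full independence, McDiarmid with bounded differences $3$ still gives the $\sqrt{(3p/2)\log}$ radius over $p(p-1)$ pairs. For $\cE_L$, the $N_{xc}$ over $c\ne y_x$ are independent Binomial$(2,\rho)$, so Hoeffding on $p-1$ indicators with a union bound over $2p^2$ statements gives $D_L$. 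For $\cE_W$, I would condition on the column counts of class $c$. The summands $r_\xi^{2-N_{ic}}$ lie in $[0,1]$, and their conditional mean is computed through $\mathsf F(n_c)$. McDiarmid over the $p(p-1)$ remaining entries then gives $D_W$.

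\emph{The four deterministic consequences.} The bound $|\hat n_c|\le2D_*$ follows because $n/p$ averages the class counts. On $\cE_H$, the inequality $\mathsf H_{yc}\ge\mathsf H_{\min}$ is the definition of $\mathsf H_{\min}=\rho^3p-D_H$. For $\zeta_x^0=N^{(2)}+r_\xi N^{(1)}+r_\xi^2N^{(0)}$, I would use $N^{(1)}=p-1-N^{(2)}-N^{(0)}$; the deviation then equals $(1-r_\xi)\Delta N^{(2)}-r_\xi(1-r_\xi)\Delta N^{(0)}$, which is at most $(1-r_\xi^2)D_L$. For $|\mathsf W_c-n_cQ_\xi|\le\Delta_W$, I would compare $\rho\mathsf F(n_c)$ with $n_cQ_\xi$ and bound the difference by a Lipschitz estimate in $n_c-\rho p$, using $|n_c-\rho p|\le D_*$, which gives the $(3p+1)D_*+p$ term.
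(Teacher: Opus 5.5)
Your treatment of $\cE_{\rm des}$, $\cE_L$, $\cE_W$ and of the four deterministic consequences follows essentially the paper's route and budget accounting. The step for $\cE_H$, however, fails and needs a different tool.

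\textbf{The gap: $\cE_H$.} The summands of $\mathsf H_{yc}$ do collide. Put $\Delta=c-y\ne0$, $Y_a=M_{a,y-a}$ and $C_a=M_{a,c-a}$. Then $\mathsf H_{yc}=\sum_aY_aC_aC_{a+\Delta}$. The cell $(a+c-y,y-a)$ of the $a$-th summand is $C_{a+\Delta}$, which it shares with the $(a+\Delta)$-th summand. So the dependency graph of the summands is the $p$-cycle $a\sim a+\Delta$.

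McDiarmid cannot recover the radius $D_H$. $Y_a$ has bounded difference $1$, and $C_a$ enters two summands, so it has bounded difference $2$. This gives $\sum_ic_i^2=5p$, not $3p$; bounded differences $3$, as you wrote, would give $9p$. At radius $D_H$ the per-pair failure probability is then only $2\bigl(\delta/(2p(p-1))\bigr)^{3/5}$. Summed over the $p(p-1)$ pairs this is $2^{2/5}(p(p-1))^{2/5}\delta^{3/5}$, which exceeds $\delta$ for every $p\ge5$ and $\delta\in(0,1)$. So $\Pr(\cE_H)\ge1-\delta$ is not established.

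The missing idea is a Hoeffding bound for sums with a sparse dependency graph, which the paper states once and reuses.
\begin{itemize}
\item The odd $p$-cycle is properly $3$-colorable, and summands of one color are independent.
\item Apply H\"older's inequality with exponent $3$ across the three color-class sums, and Hoeffding's lemma inside each class. This gives $\log\mathbb{E}e^{t(S-\mathbb{E}S)}\le3pt^2/8$.
\item Hence $\Pr\{|S-\mathbb{E}S|\ge u\}\le2e^{-2u^2/(3p)}$, which equals $\delta/(p(p-1))$ exactly at $u=D_H$.
\end{itemize}

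\textbf{Smaller point: \eqref{eq:bse-des3}.} The obstacle you flag is not there. The $b$-th summand $\bar M_{ab}\hat n^{\rm col}_{b,c}$ is a function of column $b$ alone. (Note that $n^{\rm col}_b$ contains $M_{ab}$ for every $b$, not only for $b=c-a$.) So the $p$ summands are independent and lie in $[-1,1]$. Their means $\varsigma(\mathbf{1}\{a=c-b\}-1/p)$ sum to zero. Plain Hoeffding with range length $2$ then gives exactly $Q_*$ at cost $2e^{-\ell_\delta}$ per pair; this is the paper's route.

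Your split-and-absorb plan would not work with range-$2$ summands. The remaining $p-1$ summands already need radius $\sqrt{2(p-1)\ell_\delta}$, which leaves slack only $O(\sqrt{\ell_\delta/p})$, too little for an $O(1)$ term. It is rescued only by your option of conditioning on all entries outside row $a$. Under that conditioning each summand is a function of $M_{ab}$ with range at most $1$, and the Hoeffding radius drops to $Q_*/2$.

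\textbf{Smaller point: $\cE_W$.} You must condition on the whole class-$c$ line $(M_{a,c-a})_a$, not merely on its count. The weights $r_\xi^{2-N_{ic}}$ depend on which cells of that line are sampled. Only after this conditioning is $\mathsf W_c$ a weighted sum of the $p(p-1)$ independent off-line bits, with conditional mean $\rho\mathsf F(n_c)$.
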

\begin{proof}
\emph{Tail bound and scalar-event families.}

We use one bounded-range estimate. For a variable of range length $\ell$,
the second derivative of its centered log moment-generating function is a
tilted variance, at most $\ell^2/4$ by comparison with the range midpoint.
Integrating twice bounds it by $t^2\ell^2/8$. For sums with $r$ colors and
independence within each color, H\"older with exponent $r$ across colors,
followed by exponential Markov and optimization, gives
\[
\Pr\{|S-\expct{S}|\ge u\}\le2\exp[-2u^2/(r\sum_i\ell_i^2)].
\]
The argument also holds conditionally. The five scalar applications have
the following numbers of tests, effective range-square sums, and union costs:
\[
\begin{array}{c|c|c|c}
\text{family}&\text{tests}&r\sum_i\ell_i^2&\text{cost}\\ \hline
\text{counts}&3p&p&3\delta/(16p)\\
\text{cross sums}&2p^2&4p&\delta/8\\
\mathsf H&p(p-1)&3p&\delta\\
N^{(0)},N^{(2)}&2p^2&p-1&\delta\\
\mathsf W&p&p(p-1)&\delta
\end{array}
\]
Here the counts are sums of Bernoulli variables. The cross-sum terms
$\bar M_{ab}\hat n^{\rm col}_{b,c}$ lie in $[-1,1]$, use independent columns,
and have means $\varsigma(\one\{a=c-b\}-1/p)$ summing to zero; transpose for
the other family. For $\Delta=c-y$, write
$\mathsf H_{yc}=\sum_aY_aC_aC_{a+\Delta}$ with
$Y_a=M_{a,y-a}$, $C_a=M_{a,c-a}$. These mean-$\rho^3$ indicators have the
three-colorable $p$-cycle as dependency graph. Rival arm pairs at fixed $x$
are disjoint, giving independent indicators of means $\rho^2,(1-\rho)^2$.
Finally, condition on the class-$c$ line, with $k=n_c$ sampled entries.
Write each remaining input as $(a,c-a')$, $a\ne a'$. These $p(p-1)$ mask bits
are independent; their weights are
$k(k-1)$ copies of $1$, $2k(p-k)$ copies of $r_\xi$, and
$(p-k)(p-k-1)$ copies of $r_\xi^2$, giving conditional mean $\rho\mathsf F(k)$.
Substituting $D_*,Q_*,D_H,D_L,D_W$ gives the displayed costs.

\emph{Matrix-norm events and union accounting.}

For each of the three matrices in~\eqref{eq:bse-des2}, a unit bilinear form
has range-square sum $1$. A maximal $1/4$-separated sphere net has at most $9^p$
points, by comparing disjoint radius-$1/8$ balls with a radius-$9/8$ ball;
approximation in both arguments
bounds the matrix norm by twice the largest net bilinear form. Thus these
tests cost $6\cdot9^{2p}e^{-t_*^2/2}=\delta/4$. The total design cost is
$3\delta/(16p)+3\delta/8<\delta$, proving all four probability claims.

\emph{Consequences for counts, arms and weights.}

The centered-count bound follows because $n/p$ is the mean class count.
For $r\in[0,1]$,
$\sum_{c\ne y_x}r^{2-N_{xc}}=r(p-1)+(1-r)N_x^{(2)}-r(1-r)N_x^{(0)}$,
whose mean is $(p-1)(\rho+(1-\rho)r)^2$; its event error is at most
$(1-r^2)D_L$. For $r=r_\xi$,
$\mathsf F'(k)=2k(1-r)^2+2r(1-r)p-(1-r^2)\in[-1,2p]$ and
$\rho\mathsf F(\rho p)-\rho pQ_\xi=-\rho^2(1-\rho)(1-r)^2p$.
Together with $Q_\xi\le p-1$ and $|n_c-\rho p|\le D_*$ these prove the
stated bound on $\mathsf W_c$.
\end{proof}

\begin{definition}[Sampled-force events]\label{def:r62-force-events}
Let $c_f=c_f(\Train)$, $c_g=c_g(\Train)$ and $c_1=c_{\one}(\Train)$
be the forces~\eqref{eq:bse-force} with the interaction, gate and constant sources.
For $\delta\in(0,1)$ define the events
$\cE_f(\delta)=\bigl\{\norm{c_f-\rho^2c_F}_{L^2}\le t_f\bigr\}$ and
$\cE_g(\delta)=\bigl\{\norm{c_g-a_gc_1-\rho^2c_F}_{L^2}\le t_g\bigr\}$, with $t_f=p^{3/2}(2\delta)^{-1/2}$, $t_g=(2/\delta)^{1/2}p^{3/2}$ and the
neuron-dependent coefficient $a_g=\one_\Train^\top g_\Train/n\in[0,1]$ when $n>0$
(and $a_g=0$ when $n=0$).
\end{definition}

\begin{lemma}[Sampled initial forces]\label{lem:bse-forces}
For $p\ge5$ and $\delta\in(0,1)$, each sampled-force event
$\cE_f(\delta),\cE_g(\delta)$ of Definition~\ref{def:r62-force-events}
has probability greater than $1-4\delta$.
\end{lemma}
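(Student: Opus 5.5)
The plan is to prove both events by a second-moment (Chebyshev/Markov) bound in the neuron $L^2$ norm, since the thresholds $t_f=p^{3/2}(2\delta)^{-1/2}$ and $t_g=(2/\delta)^{1/2}p^{3/2}$ have exactly the form $\sqrt{\mathbb E\|\cdot\|^2/\delta'}$ with $\mathbb E\|\cdot\|^2=O(p^3)$. For $\cE_f$, write $\Delta_f=c_f(\Train)-\rho^2c_F$ and compute $\mathbb E_{\mat{M}}\|\Delta_f\|_{L^2}^2=\mathbb E_{l,r}\mathbb E_{\mat{M}}\|\Delta_f(l,r)\|^2$, exchanging the two averages by Fubini. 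Then Markov gives $\Pr\{\|\Delta_f\|_{L^2}^2>t_f^2\}\le \mathbb E\|\Delta_f\|^2/t_f^2$. I aim for $\mathbb E\|\Delta_f\|_{L^2}^2\le c\,p^3$ with a $c$ that makes this at most $4\delta$, and in fact at most $2\delta$ since $t_f^2=p^3/(2\delta)$.

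For a fixed neuron, expand \eqref{eq:bse-force} as $c_f(\Train)=\sum_{i,j}M_iM_j\,g_i\langle t_i,t_j\rangle f_j\bx_i$. Splitting the terms $i=j$ from $i\neq j$ gives $\mathbb E_{\mat{M}} c_f=\rho^2c_F+\rho(1-\rho)\sum_i g_i\|t_i\|^2f_i\bx_i$. The diagonal bias is bounded by $\|\sum_i g_if_i\bx_i\|\le\sqrt2\,\#\{i:g_i=1\}\cdot O(1)$. In $L^2$ over neurons this is $O(p^2)$ per coordinate block, hence $O(p^{2})$ squared-norm contribution after averaging, which is lower order than $p^3$.

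The variance is a sum over index quadruples. Independent Bernoulli entries make $\mathrm{Cov}(M_iM_j,M_kM_{l})$ vanish unless $\{i,j\}\cap\{k,l\}\neq\emptyset$. I group the surviving terms by which index is shared. The inner products $\langle\bx_i,\bx_k\rangle\in\{0,1,2\}$ and $\langle t_i,t_j\rangle\in\{(p-1)/p,-1/p\}$ then reduce everything to neuron moments of $g$ and $f$, and these come from Lemma~\ref{lem:bse-dict}(ii),(iv): $\mathbb E_{l,r}gg^\top=q^2\mat{B}_0$ and $\mathbb E f f^\top=q^2\mat{P}\otimes\mat{P}$. The key point is that the sums $\sum_j\langle t_i,t_j\rangle f_j$ are entries of $\mat{T}\mat{T}^\top f$, whose neuron second moment is $q^2\,\mathrm{tr}$-type quantities of size $O(p^2)$, by $\mat{T}^\top\mat{T}=p\mat{P}$ and $\mat{B}_0\mat{T}=\mat{T}$. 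The sum over the free index $i$ contributes $\|\bx\|^2\le2$ times $O(p)$ (only $O(p)$ of the $k$ share a row or column with $i$). Counting indices this way gives total $O(p^3)$. I will keep explicit constants and aim for $\mathbb E\|\Delta_f\|^2\le p^3/4$, which I expect to be comfortable since $q^2\le1/9$ for $p\ge5$.

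For $\cE_g$, I use $g=f+(\mu l_a+\mu r_b-\mu^2)$, so $c_g=c_f+c_{\mu l_a+\mu r_b}(\Train)-\mu^2c_1$. I need to show that the additive part is, up to $O(p^{3/2})$ in $L^2$, equal to $(a_g-\mu^2)c_1$ plus fluctuation. On the full grid it vanishes because $\mat{T}^\top(g-f)=0$. On $\Train$, $\mat{T}_\Train^\top(\text{additive})$ equals $\mu\sum_a l_a(\text{sampled row-}a\text{ class counts})$ plus the column analogue. These are centered row/column counts, whose size is controlled by Bernoulli fluctuations; this is why $a_gc_1$ is subtracted, absorbing the mean part $\rho(n/p)$-like term. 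I then do a second-moment computation of $\|c_g-a_gc_1-\rho^2c_F\|^2$ exactly as for $f$, now with the random coefficient $a_g$ handled by writing $a_g=\mu^2\cdot\rho p^2/n+O(\text{fluctuation})$. Markov with $t_g^2=2p^3/\delta$ then gives failure probability at most $4\delta$, with room to spare.

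I expect the main obstacle to be the $\cE_g$ bookkeeping. The random normalization $a_g=\one^\top g_\Train/n$ couples to the mask. My first attempt would be to replace it with the deterministic $\mathbb E$-version and bound the difference times $\|c_1\|$ using $\mathbb E\|c_1\|^2=O(p^3)$ together with the $O(p^{-1})$ relative fluctuation of $n$ and of $\one^\top g_\Train$. If that loses too much, I would instead use $\Pr\{\cdot\}\le\Pr\{\text{Markov}\}+\Pr\{|n-\rho p^2|\text{ large}\}$, accepting the slack between $2\delta$ and $4\delta$ in the stated bound.
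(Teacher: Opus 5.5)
Your overall strategy, Markov's inequality on the neuron-$L^2$ second moment of $c_\varphi(\Train)-\rho^2c_\varphi(\Omega)$, is the paper's, and your bias formula $\varsigma\sum_i g_i\|t_i\|^2\varphi_i\bx_i$ is correct. Two steps, however, do not go through as written.

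\textbf{The bias is not lower order.} For $\varphi=f$, its left coordinate at an active $a$ is $\varsigma(1-1/p)\,b_0(1-\mu)^2\approx\varsigma p/8$. Its squared norm is therefore about $\varsigma^2p^3/64$, the same order as the variance. The triangle-inequality bound you wrote, $\sqrt2\,\#\{i:g_i=1\}=\sqrt2\,b_0^2$, gives $O(p^4)$, which would destroy Markov at the scale $t_f^2=p^3/(2\delta)$. The bound must be coordinatewise: each coordinate of $\sum_ig_i\varphi_i\bx_i$ sums over one grid row or column, so it is at most $p$ in modulus, and the bias contributes at most $2\varsigma^2p^3$.

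The lemma is also quantitative. To get $4\delta$ you need $\mathbb E\|\cdot\|_{L^2}^2<2p^3$ for every $p\ge5$, so aiming for $p^3/4$ is not enough. Every case of your quadruple grouping must carry explicit constants. This includes the $i=k$, $j\ne l$ terms, where absolute counting loses a factor $p$ and only the mask- and neuron-averaged cancellation in $\sum_jM_j\langle t_i,t_j\rangle\varphi_j$ recovers it. The paper avoids the case analysis with $\operatorname{Var}C\le\varsigma\sum_i\mathbb E\|\partial_iC\|^2$, valid for multilinear polynomials in independent Bernoulli entries. Applied to the exact finite difference of $c_\varphi$ in $M_i$, it gives the variance bound $3\varsigma(4\rho p^3+2p^2)$ directly.

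\textbf{For $\cE_g$, you miss the observation that makes the random coefficient harmless.} Your motivation is also mistaken: $a_gc_1$ does not absorb any larger mean term. The additive pieces and $c_1$ all vanish on the full grid, so they are already of the fluctuation size $O(p^{3/2})$.
\begin{itemize}
\item Since $\mat{T}^\top\one=0$ on the full grid, $c_1(\Omega)=0$. The same second-moment bound with $\varphi=\one$ gives $\mathbb E\|c_1\|_{L^2}^2<2p^3$.
\item Since $c_g(\Omega)=c_F$, the bound with $\varphi=g$ gives $\mathbb E\|c_g-\rho^2c_F\|_{L^2}^2<2p^3$. It uses only $|\varphi|\le1$ and $\mathbb E_{l,r}\|\mat{T}^\top\varphi\|^2\le p^2$.
\item Because $a_g\in[0,1]$ for every neuron, pointwise in the mask
\[
\|c_g-a_gc_1-\rho^2c_F\|_{L^2}^2\le2\bigl(\|c_g-\rho^2c_F\|_{L^2}^2+\|c_1\|_{L^2}^2\bigr).
\]
\item Markov on the sum at level $p^3/\delta$ gives failure probability below $4\delta$, at exactly the threshold $t_g^2=2p^3/\delta$.
\end{itemize}
No splitting of $g$ into $f$ plus an additive part is needed, nor any approximation of $a_g$ by $\mu^2\rho p^2/n$, nor concentration of $n$. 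Your fallback union bound would split the $4\delta$ budget, and it would close only if you showed your second moments were well below $2p^3$.
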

\begin{proof}
For $\varphi\in\{f,g,\one\}$, define the local finite-difference quantities
\[
\begin{aligned}
 \partial_iC&=C|_{M_i=1}-C|_{M_i=0},\qquad
 u=\mat{T}^\top\diag\bigl((M_x)_{x\in\Omega}\bigr)\varphi,\\
 v_c&=\sum_jM_jg_j(\one\{y_j=c\}-1/p)\bx_j,\qquad
 d_i=(1-1/p)g_i\varphi_i\bx_i.
\end{aligned}
\]
For the finite difference $\partial_iC$, orthogonality of products of $M_i-\rho$ gives
$\Var\paren{C}\le\varsigma\sum_i\expct{\norm{\partial_iC}^2}$: each nonconstant
monomial's variance is counted by its degree on the right. Apply this to
$C=c_\varphi$, $\varphi\in\{f,g,\one\}$, in the neuron $L^2$ space.
The quantities $u,v_c,d_i$ satisfy, by direct differentiation,
\[
\partial_i c_\varphi=g_i u_{y_i}\bx_i+\varphi_i v_{y_i}+(1-2M_i)d_i;
\]
the $M_i$ terms cancel. Here $|\varphi_i|\le1$ and
$\expctover{l,r}{\norm{\mat{T}^\top\varphi}^2}\le p^2$, so averaging over masks and neurons gives
$\expct{\norm{u}^2}\le(\rho^2+\varsigma)p^2=\rho p^2$.
At the full mask, each coordinate of $v_c$ has magnitude at most one
(on the left it is $l_a(r_{c-a}-\mu)$); also
$\sum_{c,j}\|g_j(\one\{y_j=c\}-1/p)\bx_j\|^2\le2p^2$.
Thus $\expct{\sum_c\norm{v_c}^2}\le2\rho p^2$.
Each class has $p$ inputs and $\sum_i\|d_i\|^2\le2p^2$, so the three-term
squared triangle inequality bounds the variance by $3\varsigma(4\rho p^3+2p^2)$.
Moreover $\expctover{M}{c_\varphi}-\rho^2c_\varphi(\Omega)=\varsigma\sum_i d_i$,
whose squared norm is at most $2\varsigma^2p^3$. Hence
\[
\expct{\norm{c_\varphi-\rho^2c_\varphi(\Omega)}_{L^2}^2}
\le3\varsigma(4\rho p^3+2p^2)+2\varsigma^2p^3
\le\frac{1112}{625}p^3<2p^3.
\]
Markov gives $\Pr(\cE_f^c)<4\delta$ and
$\Pr\{\|c_g-\rho^2c_F\|_{L^2}^2+\|c_1\|_{L^2}^2>p^3/\delta\}<4\delta$.
For every neuron-dependent $a\in[0,1]$,
$\|c_g-ac_1-\rho^2c_F\|_{L^2}^2\le
2(\|c_g-\rho^2c_F\|_{L^2}^2+\|c_1\|_{L^2}^2)$, proving $\cE_g$.
\end{proof}

\begin{definition}[Complete initial-data split event]\label{def:r62-split}
Let
\[
\begin{aligned}
\cE_{\rm split}={}&\cE_{\rm des}(p^{-1/2}/7)\cap\cE_H(p^{-1/2}/7)
 \cap\cE_L(p^{-1/2}/7)\\
 &\cap\cE_W(p^{-1/2}/7)\cap\cE_f(p^{-1/2}/7)\cap\cE_g(p^{-1/2}/7).
\end{aligned}
\]
Unless its argument is displayed, a split threshold uses $\delta=p^{-1/2}/7$.
The nonempty proper-split and denominator conditions are proved in
Proposition~\ref{prop:bse-event} and the assembly of Appendix~\ref{sec:r62}.
The event $\cE_H$ and its certificate $\mathsf H_{\min}$ are retained in
this event; the calibration argument below uses a direct bound from
$\cE_{\rm des}$ instead.
\end{definition}

For the following failure-budget calculation, write $\delta_1=p^{-1/2}/7$.
\begin{proposition}[Initial-data split event]\label{prop:bse-event}
The event $\cE_{\rm split}$ of Definition~\ref{def:r62-split} satisfies:
\begin{enumerate}[label=(\roman*)]
\item $\Pr(\cE_{\rm split})\ge1-2/\sqrt p$.
\item The thresholds are $t_f=(7/2)^{1/2}p^{7/4}$ and $t_g=14^{1/2}p^{7/4}$, and, as $p\to\infty$,
$t_*=O(\sqrt p)$, $D_W=O(p\sqrt{\log p})$ and $D_*,Q_*,K_*,D_H,D_L=O(\sqrt{p\log p})$.
\item For all sufficiently large $p$, $K_*<\rho p$ and $D_*<\min(\rho,1-\rho)p=p/5$.
Then, on $\cE_{\rm split}$, every operand row, operand column and class contains
both training and held-out inputs; in particular the split is proper and
$n_c\ge1$ for every class.
\end{enumerate}
\end{proposition}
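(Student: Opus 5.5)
The plan is a union bound over the six component events, followed by bookkeeping on the asymptotic sizes of the thresholds and a deterministic consequence of $\cE_{\rm des}$.

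For (i), apply Lemma~\ref{lem:bse-W} with $\delta_1=p^{-1/2}/7$. Each of $\cE_{\rm des},\cE_H,\cE_L,\cE_W$ then fails with probability at most $\delta_1$. For $\cE_f,\cE_g$ apply Lemma~\ref{lem:bse-forces} at the same $\delta_1$; each fails with probability less than $4\delta_1$. The total failure probability is below $(4+8)\delta_1=12/(7\sqrt p)<2/\sqrt p$, which proves (i). The only care needed is to check that the hypotheses $p\ge5$ and $\delta_1\in(0,1)$ of those lemmas hold, which is immediate.

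For (ii), substitute $\delta_1$ into Definition~\ref{def:r62-force-events}. This gives $t_f=p^{3/2}(2\delta_1)^{-1/2}=p^{3/2}(7\sqrt p/2)^{1/2}=(7/2)^{1/2}p^{7/4}$, and similarly $t_g=(2/\delta_1)^{1/2}p^{3/2}=14^{1/2}p^{7/4}$. Since $\log(1/\delta_1)=O(\log p)$, every logarithm in \eqref{eq:bse-Dstar} and in $D_H,D_L,D_W$ is $O(\log p)$. This gives $D_*,Q_*,D_H,D_L=O(\sqrt{p\log p})$ and $D_W=O(p\sqrt{\log p})$. The bound $t_*=\sqrt{4p\log9+2\log(24/\delta_1)}=O(\sqrt p)$ holds because the $\log p$ term sits outside the factor $p$. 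Consequently $K_*=4D_*+t_*=O(\sqrt{p\log p})$.

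For (iii), (ii) gives $K_*/p\to0$ and $D_*/p\to0$, so for large $p$ we have $K_*<\rho p=4p/5$ and $D_*<p/5=\min(\rho,1-\rho)p$. On $\cE_{\rm des}$, \eqref{eq:bse-des1} then bounds every row, column and class count by
\[
\rho p-D_*\;\ge\;n^{\rm row}_a,\,n^{\rm col}_b,\,n_c\;\ge\;\ldots
\]
More precisely, each of these counts lies in $[\rho p-D_*,\rho p+D_*]$, which is contained in $(\rho p-p/5,\rho p+p/5)=(3p/5,p)$. Each line has exactly $p$ inputs, so it contains between $1$ and $p-1$ training inputs, and hence both training and held-out inputs. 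In particular $n_c\ge1$, and the split is proper, with $\Train$ and $\Test$ both nonempty.

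Nothing here is a real obstacle. The one point to watch is the constant in the failure budget: the force events cost $4\delta_1$ each, not $\delta_1$, so the sum of the six costs must be checked against the factor $7$ in $\delta_1$.
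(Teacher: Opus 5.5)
Your proposal is correct and follows the paper's proof essentially step for step. The ingredients are the same: the union bound $4\delta_1+8\delta_1=12/(7\sqrt p)<2/\sqrt p$, direct substitution of $\delta_1$ for $t_f,t_g$ with all other logarithms $O(\log p)$, and, for (iii), the count window $(3p/5,p)$ on $\cE_{\rm des}$. The only blemish is the first display in (iii), whose inequality is reversed; delete it, since the interval statement that follows it is the right one.
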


\begin{proof}
The four count/arm failures cost at most $4\delta_1$ and the two force failures less than
$8\delta_1$, so their sum is less than $12/(7\sqrt p)<2/\sqrt p$.
At $\delta=\delta_1$ every logarithm in the count thresholds is $O(\log p)$,
which gives the orders in (ii), and $t_f,t_g$ follow by substitution.
Hence (iii) holds for large $p$, and then all row, column and class counts
lie strictly between $0$ and $p$.
\end{proof}

\paragraph{Standing assumptions and the sampled design.}
From here on the theorem parameters, including $J$, are fixed, the split
lies in $\cE_{\rm split}$, and $p$ is a sufficiently large odd prime.
All estimates are uniform over this event. One common threshold
$p_0\ge2048^2$ is chosen in Proposition~\ref{lem:sc26-b-head-interface};
the initial-data bounds at the beginning of the proof of
Lemma~\ref{lem:bfr-induction} establish every
denominator and sign condition used below, independently of any trajectory.
\begin{definition}[Sampled design]\label{def:r62-design}
Let $\mat{\Upsilon}=\diag(\Id,\Id,(p+2)^{-1})$ on
$\one^\perp\oplus\one^\perp\oplus\R$,
$z_x=(\mat{P}e_a,\mat{P}e_b,p^{-1/2})$ for $x=(a,b)$, and let
$\mat{\cZ}$ have rows $z_x$. Put $\mat{\cK}=\mat{\cZ}_\Train^\top\mat{\cZ}_\Train$, $\mat{\Psi}=\mat{\cZ}_\Train^\top \mat{T}_\Train$,
$s_0=\rho p-K_*$, $\kappa_*=K_*/(\rho p)$, $N_\Train=\norm{\mat{T}_\Train}_F$, $N_\Test=\norm{\mat{T}_\Test}_F$ and
\begin{equation}\label{eq:bwc-design}
\begin{gathered}
N_\Test^-=\sqrt{p((1-\rho)p-D_*)(1-1/p)},\quad N_+=\sqrt{p(p-1)}.
\end{gathered}
\end{equation}
\end{definition}

Use the sampled-design quantities~\eqref{eq:bwc-design}, and put
\[
 E_\circ=\one^\perp\oplus\one^\perp\oplus\R,\qquad
 n_\Test^-=p((1-\rho)p-D_*).
\]
\begin{lemma}[Sampled design]\label{lem:bwc-design}
$\mat{\cZ}^\top\mat{\cZ}=p\Id_{E_\circ}$ and $\mat{\cZ}\mat{\Upsilon}^{-1}\mat{\cZ}^\top=\mat{O}_{\rm row}+\mat{O}_{\rm col}+\one\one^\top$, so
$\mat{B}_0=\Id+\mat{\cZ}\mat{\Upsilon}^{-1}\mat{\cZ}^\top$. Moreover $n\ge n_-\ge\rho p^2/2$, $n_\Test\ge n_\Test^->0$,
$N_\Test^-\le N_\Test\le N_+$, $N_\Train\le N_+$ and $\norm{\mat{\cK}-\rho p\Id}_{\rm op}\le K_*$; hence
$s_0\Id\preceq\mat{\cK}\preceq(\rho p+K_*)\Id$ and $\kappa_*\le1/2$. Also
$\norm{\mat{X}_\Train}_{\rm op}\le\sqrt{2n_{\max}}$, $\norm{\mat{T}_\Train}_{\rm op}\le\sqrt{n_{\max}}\le\sqrt p$,
$\norm{\mat{\cZ}_\Test}_{\rm op}\le\sqrt p$, $\norm{\mat{\Psi}}_{\rm op}\le\sqrt3t_*$,
$\norm{\mat{\Psi}}_F\le\sqrt{3(p-1)}t_*$, $\mat{\Psi} \mat{P}=\mat{\Psi}$, $\mat{T}_\Train^\top \mat{T}_\Train=\mat{P}\diag(n_c)\mat{P}$, and
$\norm{\mat{\cZ}_\Train c}^2\ge(s_0/p)\norm{\mat{\cZ} c}^2$ for $c\in E_\circ$.
\end{lemma}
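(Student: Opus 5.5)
The plan is to verify the identities in the order stated, since the operator bounds depend on the exact algebra established first.

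\emph{Algebraic identities.} For $\mat{\cZ}^\top\mat{\cZ}$, sum over the grid blockwise:
\begin{itemize}
\item The $(1,1)$ block is $\sum_{a,b}\mat{P}e_ae_a^\top\mat{P}=p\mat{P}$, which is $p\Id$ on $\one^\perp$.
\item The cross blocks are $\sum_{a,b}\mat{P}e_ae_b^\top\mat{P}=\mat{P}\one\one^\top\mat{P}=0$.
\item Blocks involving the scalar coordinate are $p^{-1/2}\sum\mat{P}e_a=0$.
\item The scalar block is $p^2\cdot p^{-1}=p$.
\end{itemize}
For $\mat{\cZ}\mat{\Upsilon}^{-1}\mat{\cZ}^\top$, the $(x,x')$ entry is
\[
(\mat{P})_{aa'}+(\mat{P})_{bb'}+(p+2)/p=\one\{a=a'\}+\one\{b=b'\}+1,
\]
since $-1/p-1/p+(p+2)/p=1$. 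With Lemma~\ref{lem:bse-dict} this gives $\mat{B}_0=\Id+\mat{\cZ}\mat{\Upsilon}^{-1}\mat{\cZ}^\top$.

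\emph{Count identities.} The next facts come directly from Definition~\ref{def:r62-design} and the class-line structure:
\begin{itemize}
\item $\mat{T}_\Train^\top\mat{T}_\Train=\sum_{i}T_iT_i^\top=\mat{P}\diag(n_c)\mat{P}$.
\item $\mat{\Psi}\mat{P}=\mat{\Psi}$, because the targets are centered.
\item $N_\Train^2=\sum_{i\in\Train}(1-1/p)\le p^2(1-1/p)=N_+^2$, and likewise for the held-out set. The lower bound $N_\Test\ge N_\Test^-$ follows from $n_\Test\ge p((1-\rho)p-D_*)$, by summing the row-count bounds of \eqref{eq:bse-des1}.
\item $n\ge n_-\ge\rho p^2/2$, since $D_*=O(\sqrt{p\log p})$ by Proposition~\ref{prop:bse-event}.
\end{itemize}

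\emph{Operator bounds.} Write $\mat{\cK}=\sum_{i\in\Train}z_iz_i^\top$ in the three blocks:
\begin{itemize}
\item The row block is $\mat{P}\diag(n^{\rm row})\mat{P}$, which differs from $\rho p\mat{P}$ by at most $D_*$ by \eqref{eq:bse-des1}. The column block is handled the same way.
\item The row--column cross block is $\mat{P}\mat{M}\mat{P}=\mat{P}\bar{\mat{M}}\mat{P}$, of norm at most $t_*$ by \eqref{eq:bse-des2}.
\item The blocks with the scalar coordinate are $p^{-1/2}\mat{P}n^{\rm row}$. These have Euclidean norm at most $p^{-1/2}\sqrt p\,D_*=D_*$, and the same holds for the column version.
\item The scalar entry is $n/p$, which lies within $D_*$ of $\rho p$.
\end{itemize}
A block-norm bound (diagonal deviation $D_*$, and off-diagonal deviations $t_*$ and two terms of size $D_*$) then keeps the total within $4D_*+t_*=K_*$. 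Because $K_*=o(p)$, this gives $\kappa_*\le1/2$.

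For the remaining norm bounds:
\begin{itemize}
\item $\norm{\mat{X}_\Train}_{\rm op}^2=\norm{\mat{X}_\Train^\top\mat{X}_\Train}$ is at most the largest combined row-plus-column count, which is at most $2n_{\max}$.
\item $\norm{\mat{T}_\Train}_{\rm op}^2=\max_c n_c\le n_{\max}\le p$ for large $p$.
\item $\norm{\mat{\cZ}_\Test}\le\norm{\mat{\cZ}}=\sqrt p$.
\item $\mat{\Psi}$ has three blocks. Its row block has entries $\sum_b M_{a,c-a}$-type centered counts, namely $\hat n^{\rm row}_{a,c}$ up to centering. This block equals the centered permuted mask $\mat{P}\bar{\mat{M}}^{\rm row}\mat{P}$, so its norm is at most $t_*$ by \eqref{eq:bse-des2}, and the column block follows in the same way. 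The scalar block is $p^{-1/2}\hat n^\top$, with norm at most $p^{-1/2}\sqrt p\,2D_*$. Summing squares gives $\sqrt3t_*$ once $2D_*\le t_*$, which holds for large $p$. The Frobenius bound follows from rank $\le p-1$.
\end{itemize}

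\emph{Sampling inequality.} For $c\in E_\circ$, we have $\norm{\mat{\cZ}_\Train c}^2=c^\top\mat{\cK}c\ge s_0\norm{c}^2=(s_0/p)\norm{\mat{\cZ}c}^2$, using $\mat{\cZ}^\top\mat{\cZ}=p\Id$.

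The main obstacle is the cross and scalar blocks of $\mat{\cK}$ and $\mat{\Psi}$. Their constants must fit exactly inside $K_*=4D_*+t_*$ and $\sqrt3t_*$. If the simple block-norm bound does not close, I will use Gershgorin-style bounds on the $3\times3$ matrix of block norms.
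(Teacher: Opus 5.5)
Your outline follows the paper's proof almost step for step: the block form of $\mat{\cK}-\rho p\Id$, the row-sum bound for $\mat{X}_\Train^\top\mat{X}_\Train$, and the three-block description of $\mat{\Psi}$. However, the bound $\norm{\mat{\Psi}}_{\rm op}\le\sqrt3t_*$ does not go through as you wrote it.

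You bound the scalar block $p^{-1/2}\hat n^\top$ by $2D_*$, using $|\hat n_c|\le2D_*$, and then assert that $2D_*\le t_*$ for large $p$. That inequality is false. We have $D_*=\sqrt{p\ell_\delta/2}$ with $\ell_\delta=\log(32p^2/\delta)\to\infty$, so $D_*\asymp\sqrt{p\log p}$. On the other hand $t_*^2=4p\log9+O(\log p)$, so $t_*\asymp\sqrt p$ (compare Proposition~\ref{prop:bse-event}(ii)). Hence $D_*/t_*\to\infty$, and $2D_*>t_*$ for all large $p$. Your route therefore only gives $\norm{\mat{\Psi}}_{\rm op}\le(2t_*^2+4D_*^2)^{1/2}$. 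That is neither the stated operator bound nor the Frobenius bound $\sqrt{3(p-1)}\,t_*$ you derive from it.

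The missing observation is that the scalar block is itself a piece of the centered permuted mask. We have $(\one^\top\bar{\mat{M}}^{\rm row})_c=n_c-\rho p$, and right multiplication by $\mat{P}$ subtracts the mean $n/p-\rho p$. So $p^{-1/2}\hat n^\top=p^{-1/2}\one^\top\bar{\mat{M}}^{\rm row}\mat{P}$. By \eqref{eq:bse-des2}, its norm is at most $p^{-1/2}\norm{\one}\,\norm{\bar{\mat{M}}^{\rm row}}_{\rm op}\le t_*$. All three blocks are then bounded by $t_*$, and summing the squared block norms gives $\norm{\mat{\Psi}}_{\rm op}^2\le3t_*^2$. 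This is how the paper closes the step, and the Frobenius bound then follows from $\mat{\Psi}\mat{P}=\mat{\Psi}$ and $\operatorname{rank}\mat{P}=p-1$.

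A smaller slip: $\norm{\mat{T}_\Train}_{\rm op}^2=\norm{\mat{P}\diag(n_c)\mat{P}}_{\rm op}$ is at most $\max_cn_c$, not equal to it. Only the inequality is needed.
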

\begin{proof}
The identities for $\mat{\cZ}$ follow from
$z_x\mat{\Upsilon}^{-1}z_{x'}^\top=(\one\{a=a'\}-\frac1p)+(\one\{b=b'\}-\frac1p)+\frac{p+2}p$
and from summing $z_xz_x^\top$ over $\Omega$. Sum the split count bounds for $n,n_\Test$; each
target row has squared norm $1-1/p$ and each class has at most $p$ points. In block form, with
$\delta n^{\rm row}=n^{\rm row}-\rho p\one$ and $\delta n^{\rm col}=n^{\rm col}-\rho p\one$,
\begin{equation}\label{eq:ba-Kblock}
\mat{\cK}-\rho p\Id=\begin{pmatrix}\mat{P}\diag(\delta n^{\rm row})\mat{P}&\mat{P}\bar {\mat{M}}\mat{P}&\mat{P}\delta n^{\rm row}/\sqrt p\\
\mat{P}\bar {\mat{M}}^\top \mat{P}&\mat{P}\diag(\delta n^{\rm col})\mat{P}&\mat{P}\delta n^{\rm col}/\sqrt p\\
\delta n^{\rm row\top}\mat{P}/\sqrt p&\delta n^{\rm col\top}\mat{P}/\sqrt p&(n-\rho p^2)/p\end{pmatrix}.
\end{equation}
On $\cE_{\rm des}$ the upper-left $2\times2$ block has norm at most $D_*+t_*$, the
border vector has norm at most $\sqrt2D_*$, and $|n-\rho p^2|/p\le D_*$; so
$\norm{\mat{\cK}-\rho p\Id}_{\rm op}\le K_*$, and $K_*\le\rho p/2$ gives $\kappa_*\le1/2$.
The nonnegative matrix $\mat{X}_\Train^\top \mat{X}_\Train$ has row sums $2n^{\rm row}_a$ and $2n^{\rm col}_b$;
$\mat{T}_\Train=\mat{Y}_\Train \mat{P}$ with $\mat{Y}_\Train^\top \mat{Y}_\Train=\diag(n_c)$; and $n_{\max}<p$ as $D_*<p/5$.
The three blocks of $\mat{\Psi}$ are $\mat{P}\bar {\mat{M}}^{\rm row}\mat{P}$, $\mat{P}\bar {\mat{M}}^{\rm col}\mat{P}$ and
$\one^\top\bar {\mat{M}}^{\rm row}\mat{P}/\sqrt p$, each of norm at most $t_*$; since $\mat{\Psi}\one=0$, its rank is
at most $p-1$. Finally $\norm{\mat{\cZ}}_{\rm op}=\sqrt p$ and
$\norm{\mat{\cZ}_\Train c}^2=c^\top\mat{\cK} c\ge s_0\norm c^2$.
\end{proof}

\subsubsection{The dictionary coefficient of the rates}\label{sec:bse-cR}

\begin{definition}[Force covariance and pre-sampling coefficient]\label{def:r62-rates}
Write $c_F=(c_F^{\rm L},c_F^{\rm R})$ and define
\begin{equation*}
 \tau_c^2=\frac{\expctover{l,r}{\norm{\mat{P}c_F^{\rm L}}^2}}{p-1},
 \qquad \tau_0^2=\frac{2\expctover{l,r}{(\one^\top c_F^{\rm L})^2}}p,
 \qquad \mathbf u_0=\frac{(\one,\one)}{\sqrt{2p}}.
\end{equation*}
Let $\mat{\Pi}_\circ$ be the orthogonal projector
onto $\one^\perp\oplus\one^\perp$; $\tau_c,\tau_0$ denote the nonnegative square roots.

Let $\mat{\Sigma}_F=\expctover{l,r}{c_Fc_F^\top}$. The coefficient
in~\eqref{eq:r62rates} is
\begin{equation}\label{eq:bse-cR}
c_R(p)=\frac{\op{tr}\mat{\Sigma}_F^{1/2}}{q^2p(p-1)\sqrt{2p-1}},\qquad q=\frac{p+1}{4p},
\end{equation}
The comparison direction $\mat{U}_F$ is the normalized supported Polar factor
of the matrix whose row for neuron $(l,r)$ is $c_F(l,r)^\top/\sqrt m$;
its corresponding row is $\mat{U}_F(l,r)$. Both $c_R(p)$ and $\mat{U}_F$ are fixed
before the training split is sampled.
\end{definition}

\begin{definition}[Coherent-force quantities]\label{def:r62-scoped}
For an interval mask $I$, $\op{ac}_I(t)=|I\cap(I+t)|$;
$I_l=\{a:l_a=1\}$ and $I_r=\{b:r_b=1\}$. The coherent sum is
\[
 E_{\rm coh}=\sum_{a=0}^{(p-1)/2}
 \left[a\left(\frac{p-1}{2}-a\right)+\frac{b_0(p-1)}{4p}\right].
\]
\end{definition}

The pre-sampling coefficient is~\eqref{eq:bse-cR}.
Let $\mat{\Phi}_F\in\R^{m\times2p}$ have rows $c_F(l,r)^\top/\sqrt m$;
then $\mat{\Phi}_F^\top\mat{\Phi}_F=\mat{\Sigma}_F$ and $\mat{U}_F=\cN_{\rm pol}(\mat{\Phi}_F)$.

\begin{lemma}[Own covariance and bounds on $c_R(p)$]\label{lem:bse-cR}
For the force-covariance quantities of Definition~\ref{def:r62-rates}:
\begin{enumerate}[label=(\roman*)]
\item $\mat{\Sigma}_F=\tau_c^2\mat{\Pi}_\circ+\tau_0^2\mathbf u_0\mathbf u_0^\top$; hence
$\op{tr}\mat{\Sigma}_F^{1/2}=2(p-1)\tau_c+\tau_0$ and
\begin{equation*}
c_R(p)=\frac{2(p-1)\tau_c+\tau_0}{q^2p(p-1)\sqrt{2p-1}}.
\end{equation*}
\item For $p\ge64$, $\tau_c>p^{3/2}/32$ and $\tau_0\ge p^2/24$. Thus $\mat{\Sigma}_F$ has
rank $2p-1$ and range $E_{\rm vis}$, its smallest positive eigenvalue is
$\min(\tau_c,\tau_0)^2>p^3/1024$, and
$\mat{U}_F(l,r)=c_F(l,r)^\top(\mat{\Sigma}_F^+)^{1/2}/\sqrt{m(2p-1)}$.
\item For $p\ge64$, $2/3<c_R(p)\le4p/\sqrt{p^2-1}<7$.
\end{enumerate}
\end{lemma}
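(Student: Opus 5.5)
The plan is to compute $\mat{\Sigma}_F$ exactly by symmetry and then to bound the two scalars $\tau_c,\tau_0$ separately.

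\emph{Part (i).} First I would check that $c_F$ lies in $E_{\rm vis}$. This holds because each row of $\mat{X}^\top$ restricted to its input is a visible vector, and $(\one,-\one)^\top c_F=\sum_{i,j}g_i\ip{t_i}{t_j}f_j(\one^\top e_a-\one^\top e_b)=0$. Hence $\mat{\Sigma}_F$ annihilates $(\one,-\one)$.

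Next I would use the symmetry of the dictionary average $\E_{l,r}$. It is invariant under the affine maps $z\mapsto uz+v$, applied independently to $l$ and to $r$. It is also invariant under the swap of the two operands, since $l$ and $r$ are identically distributed and the task is symmetric in $a,b$. One point needs care: the rule force is not invariant under independent translations of the two operands, because the label depends on $a+b$. So I would instead use the joint actions $(a,b)\mapsto(\sigma a+t_1,\sigma b+t_2)$. These change labels by $\sigma y+t_1+t_2$, and $c_F$ transforms covariantly under them because $\mat{T}\mat{T}^\top$ is invariant under class relabelling. With $\sigma$ ranging over $\F_p^\times$ and $t_1,t_2$ independent, each left block $\E[\mat{P}c^{\rm L}c^{\rm L\top}\mat{P}]$ commutes with a $2$-transitive group, so it is a multiple of $\mat{P}$. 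The cross block $\E[\mat{P}c^{\rm L}c^{\rm R\top}\mat{P}]$ commutes with the independent translations acting separately on the two sides, so it is diagonal in Fourier. It commutes with dilations as well, so it is $\alpha\mat{P}+\beta\one\one^\top$ restricted to $\one^\perp$. I expect this cross block to vanish: averaging over $t_1$ with $t_2$ fixed should force $\alpha=0$, and here the interaction source $f$ being centered in each operand is what gives the cancellation. The block coupling the constant direction with the centered directions vanishes by translation invariance.

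The constant parts satisfy $\one^\top c^{\rm L}=\one^\top c^{\rm R}$ (visibility), which gives the rank-one term $\tau_0^2\mathbf u_0\mathbf u_0^\top$. Taking the matrix square root and its trace then gives $\op{tr}\mat{\Sigma}_F^{1/2}=2(p-1)\tau_c+\tau_0$.

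\emph{Part (ii).} This is the main obstacle, namely explicit lower bounds on $\tau_c$ and $\tau_0$. For $\tau_0$ I would compute $\one^\top c_F^{\rm L}=g^\top\mat{T}\mat{T}^\top f=\norm{\mat{T}^\top f}^2$, which is the $\mat{T}$-energy of the interaction. Its mean is $q^2p(p-1)$ by Lemma~\ref{lem:bse-dict}(ii),(iv). Then Jensen gives $\E[(\one^\top c^{\rm L})^2]\ge q^4p^2(p-1)^2$, so $\tau_0^2\ge2q^4p(p-1)^2$. With $q>1/4$ this yields $\tau_0>p^2/24$ for $p\ge64$.

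For $\tau_c$ I would write the left coordinate $a$ of $c_F$ as $l_a\sum_b r_b(\mat{T}\mat{T}^\top f)_{ab}$. The class projection $(\mat{T}\mat{T}^\top f)_{ab}=p\,(\mathcal{S}f)(a+b)$ is a centered line sum of $f$ along each antidiagonal, and it equals $p\sum_{a'}(l_{a'}-\mu)(r_{a+b-a'}-\mu)$ up to centering. Summing over $b$ with weight $r_b$ gives, for fixed $a$, autocorrelations $\op{ac}_{I_l},\op{ac}_{I_r}$ of the interval masks. Its centered energy over $a\in I_l$ is then a deterministic function of interval geometry, which is where $E_{\rm coh}$ enters: for centered intervals the coherent part equals $E_{\rm coh}$-type sums of order $p^3/48$. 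I would bound $\E\norm{\mat{P}c^{\rm L}}^2$ below by keeping only this coherent, dilation-aligned contribution and dropping the nonnegative incoherent remainder, aiming for $\tau_c^2(p-1)>p^4/1024$. Rank, range and the minimal positive eigenvalue then follow from (i). The formula for $\mat{U}_F$ follows because $\cN_{\rm pol}(\mat{\Phi}_F)=\mat{\Phi}_F(\mat{\Sigma}_F^+)^{1/2}/\sqrt{\rank}$ with $\rank=2p-1$.

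\emph{Part (iii).} The lower bound $c_R(p)>2/3$ comes from substituting the bounds of (ii). For the upper bound I would use Cauchy--Schwarz, $\op{tr}\mat{\Sigma}_F^{1/2}\le\sqrt{(2p-1)\op{tr}\mat{\Sigma}_F}$, together with $\op{tr}\mat{\Sigma}_F=\E\norm{c_F}^2$. I would bound this by $\norm{\mat{X}}_{\rm op}^2\norm{\mat{T}}_{\rm op}^2\E\norm{\mat{T}^\top f}^2$ with a diagonal gate, giving $2p\cdot p\cdot q^2p(p-1)$. This yields $c_R\le\sqrt{2p\cdot p}\,\sqrt{(2p-1)\cdot q^2p(p-1)}\,/\,(q^2p(p-1)\sqrt{2p-1})$, which simplifies to $\le4p/\sqrt{p^2-1}$ using $q\ge1/4$.
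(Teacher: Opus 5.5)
Your symmetry argument for (i) is essentially the paper's. There is one genuine gap, in the lower bound for $\tau_0$, and an algebra error in (iii).

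\textbf{The bound on $\tau_0$.} Jensen at the mean cannot give $\tau_0\ge p^2/24$. Since $\E\,\one^\top c_F^{\rm L}=\E\|\mat{T}^\top g\|^2=q^2p(p-1)$, Jensen yields only
$\tau_0^2\ge 2q^4p(p-1)^2$, that is, $\tau_0\ge\sqrt2\,q^2\sqrt p\,(p-1)\approx p^{3/2}/11$.
At $p=64$ this is about $46$, against the required $p^2/24\approx171$, and the shortfall grows like $\sqrt p$.

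The second moment of $\one^\top c_F^{\rm L}=\|\mat{P}(l*r)\|^2$ is carried by a rare event, not by the mean. Take the slope event $u'/u=\pm1$, which has probability $2/(p-1)$. On it, independent translations and a common dilation reduce both masks to $I_0$ (using $-I_0=I_0-(b_0-1)$). Then $c_F^{\rm L}$ is an affine relabelling of $f_{\rm coh}$, so $\one^\top c_F^{\rm L}=E_{\rm coh}=(p^2-1)(p^2+3)/(48p)\ge p^3/48$. Dropping the other, nonnegative, contributions gives
$\tau_0^2\ge 4E_{\rm coh}^2/[p(p-1)]\ge p^4/576$.

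This is the same coherent contribution you invoke for $\tau_c$. Make that step concrete as well: name this event and its probability, and compute $\|\mat{P}f_{\rm coh}\|^2>p^5/2048$ from $\sum_{a=0}^{n_0}a^2(n_0-a)^2=(n_0^5-n_0)/30$. Then $\tau_c^2\ge 2\|\mat{P}f_{\rm coh}\|^2/(p-1)^2>p^3/1024$.

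Your Jensen bound would still give $\min(\tau_c,\tau_0)^2>p^3/1024$. It does not give the stated $\tau_0\ge p^2/24$, which is later needed for the coordinate cap in Lemma~\ref{lem:bfr-cap}.

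\textbf{The upper bound in (iii).} Using $\|\mat{X}\|_{\rm op}^2=2p$, your bound is
$c_R\le\sqrt{2p}/(q\sqrt{p-1})=\frac{4p}{\sqrt{p^2-1}}\sqrt{2p/(p+1)}$.
This is strictly larger than $4p/\sqrt{p^2-1}$, so the claimed simplification is false. It does still give $c_R<7$, since it tends to about $4\sqrt2$. The lost factor is in the gate. $\diag(g)\mat{X}$ is the incidence matrix of the complete bipartite graph on two masks of size $b_0$, so $\|\diag(g)\mat{X}\|_{\rm op}^2=2b_0=p+1$, not $2p$. Combined with $(\mat{T}\mat{T}^\top)^2=p\,\mat{T}\mat{T}^\top$, this gives $\op{tr}\mat{\Sigma}_F\le q^2p^2(p^2-1)$, and hence exactly $c_R\le 4p/\sqrt{p^2-1}$.

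\textbf{Two minor slips in (i).}
\begin{itemize}
\item The obstruction is independent dilations, not independent translations. $c_F$ is covariant under independent translations, because the label shift $t_1+t_2$ preserves $\mat{T}\mat{T}^\top$, and that is exactly what you then use.
\item The cross block vanishes without any property of $f$. With $t_2=0$, $c_F^{\rm R}$ is fixed while $\mat{P}c_F^{\rm L}$ is translated, and the translation average of a centered vector is zero.
\end{itemize}
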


\begin{proof}
Write $(l*r)_c=\sum_a l_ar_{c-a}$, $n_0=(p-1)/2$, and
$f_{\rm coh}(a)=a(n_0-a)+b_0n_0/(2p)$ on $I_0$, zero off $I_0$;
then $E_{\rm coh}=\one^\top f_{\rm coh}$.
Since $(\mat{T}^\top g)_c=(l*r)_c-b_0^2/p$,
\[
c_F^{\rm L}(a)=l_a\sum_{b\in I_r}[(l*r)_{a+b}-b_0^2/p]
=l_a\Big[\sum_{v\in I_l}{\rm ac}_{I_r}(a-v)-b_0^3/p\Big].
\]
Summing over $a$ gives $\one^\top c_F^{\rm L}=\|\mat{P}(l*r)\|^2$;
the right block is symmetric.

(i) Left translations translate $\mat{P}c_F^{\rm L}$ and fix $c_F^{\rm R}$
and the common coordinate sum. Averaging these translations makes their
cross second moments zero. Simultaneous dilations act transitively on
nonzero coordinate differences, so the left centered second moment is
$\tau_c^2\mat{P}$; exchanging the masks gives the identical right block.
Together with $c_F\in E_{\rm vis}$ and
$\mathbf u_0^\top c_F=\sqrt{2/p}\,\one^\top c_F^{\rm L}$ this gives (i).

(ii) The slope event $u'/u=\pm1$ has probability $2/(p-1)$.
Translations and simultaneous dilation reduce to $I_l=I_r=I_0$.
Summing $b_0-|a-v|$ on $I_0$ identifies the coherent block with the
function $f_{\rm coh}$ defined above and gives
\[
E_{\rm coh}=\frac{(p^2-1)(p^2+3)}{48p}\in[p^3/48,p^3/48+p/24].
\]
Using $\sum_{a=0}^{n_0}a^2(n_0-a)^2=(n_0^5-n_0)/30$ and
$(1-1/p)^5\ge1-5/p$, for $p\ge64$ we obtain
\[
\frac{\|\mat{P}f_{\rm coh}\|^2}{p^5}
\ge\frac{1-5/64}{960}-\frac1{60\cdot64^4}
     -\frac{(1+2/64^2)^2}{2304}>\frac1{2048}.
\]
Restricting the two nonnegative second moments to this slope event gives
$\tau_c^2>p^3/1024$ and
$\tau_0^2\ge4E_{\rm coh}^2/[p(p-1)]\ge p^4/576$.
The rank, range and supported Polar formula follow from (i).

(iii) The lower bound follows from (ii), discarding $\tau_0$ and using
$q\le65/256$ and $q^2<21/320$. For each neuron, $\diag(g)\mat{X}$ is
the incidence matrix of the complete bipartite graph on its two masks,
each of size $b_0$, so its squared operator norm is $2b_0=p+1$.
Since $(\mat{T}\mat{T}^\top)^2=p\mat{T}\mat{T}^\top$, Lemma~\ref{lem:bse-dict}(iv) gives
\[
\op{tr}\mat{\Sigma}_F\le(p+1)p\expct{\norm{\mat{T}^\top g}^2}
=q^2p^2(p^2-1).
\]
Scalar Cauchy--Schwarz on the $2p-1$ positive eigenvalues and
division by~\eqref{eq:bse-cR} give
$c_R\le\sqrt{p+1}/(q\sqrt{p-1})=4p/\sqrt{p^2-1}<7$.
\end{proof}

\begin{definition}[Geometric clock and pre-sampling choices]\label{def:r62-clock}
Set $Z=9/50$.
The geometric rate parameters are
\begin{equation}
\begin{aligned}
 k&=\rho c_R(p),& a&=(1+kZ)^{-1/J},& \nu_j&=a^j,\\
 z_j&=(a^{-j}-1)/k,&
 \xi&=k(z_{J-1}+z_J)/4.\label{eq:r62rates}
\end{aligned}
\end{equation}

These are pre-sampling choices. The associated clock aliases are
\begin{equation*}
\begin{gathered}
\vartheta_j=1-4z_j,\quad g_-=(1-a)/(2a),\quad g_+=(1-a)/2,\\
\mat{W}_j^{\rm ref}=\nu_j(\bar {\mat{W}}_0+hz_j\mat{U}_F),\quad \mat{H}_j^{\rm ref}=\mat{H}(\mat{W}_j^{\rm ref}).
\end{gathered}
\end{equation*}
The clock identities and signs are proved in Lemma~\ref{lem:bfr-clock}.

These construction parameters are separate from those of Definition~\ref{def:half-scales}.
\end{definition}

For the clock comparison put $g_Z=1+kZ$ and $a_J=(137/125)^{-1/J}$.
\begin{lemma}[Geometric clock and reference recursion]\label{lem:bfr-clock}
Let $J\ge30$ and $p\ge64$, with the pre-sampling rates and clock of
Definition~\ref{def:r62-clock} and~\eqref{eq:r62optimizer}.
\begin{enumerate}[label=(\roman*)]
\item $0<a<a_J$, $1-s\lambda=a$ and $s/h=(1-a)/k$.
For $0\le j\le J$, $\frac{125}{251}<\nu_j\le1$,
$\nu_j(1+kz_j)=1$, $0=z_0<z_1<\cdots<z_J=Z=\frac9{50}$,
and $\vartheta_j\ge\frac7{25}$.
\item $z_{J-1}>\frac16$, and
$z_j-z_{j-1}>\frac{45}{502J}>\frac1{12J}$ for $1\le j\le J$.
\item $0<\xi<kZ/2$ and $\nu_j\xi>\frac{11}{540}$ for $0\le j\le J$.
The quantity $1-\nu_j(1+2\xi)$ increases strictly with $j$ and equals
$-g_-$ at $J-1$ and $g_+$ at $J$, where both positive gaps exceed
$(1-a_J)/2$.
\item The reference path has $\mat{W}_0^{\rm ref}=\bar {\mat{W}}_0$ and
$\mat{W}_{j+1}^{\rm ref}=a\mat{W}_j^{\rm ref}+s\mat{U}_F$ for $0\le j<J$.
\end{enumerate}
\end{lemma}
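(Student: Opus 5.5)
The plan is to reduce everything to elementary facts about $a$ and $k$, using Lemma~\ref{lem:bse-cR}(iii). That lemma gives $2/3<c_R(p)<7$. With $\rho=4/5$ this yields $8/15<k<28/5$, and hence $g_Z=1+kZ\in(1+24/250,\,1+1.008)$. In particular $g_Z>137/125$, and $g_Z$ is bounded above by roughly $2.01$. For part (i), $a=g_Z^{-1/J}$ lies in $(0,1)$, and $a<a_J=(137/125)^{-1/J}$ because $g_Z>137/125$. The identities $1-s\lambda=a$ and $s/h=(1-a)/k$ follow by substituting $\lambda=k/h$ and $s=h(1-a)/k$. The identity $\nu_j(1+kz_j)=a^j a^{-j}=1$ is immediate. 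Monotonicity of $z_j$ holds since $a^{-j}$ increases, and $z_J=(g_Z-1)/k=Z$. The bound $\nu_j\ge\nu_J=1/g_Z$ gives $\nu_j>125/251$, provided I verify $g_Z<251/125$. That check needs $k<(126/125)/Z=5.6$, which is exactly the constant $28/5$, so I must confirm that the strict bound $c_R<7$ suffices. Finally, $\vartheta_j\ge1-4Z=7/25$.

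For part (ii), I would use convexity of $j\mapsto a^{-j}$. The increments $z_j-z_{j-1}=a^{-(j-1)}(a^{-1}-1)/k$ are increasing in $j$, so the smallest is $z_1=(a^{-1}-1)/k$. Then $a^{-1}=g_Z^{1/J}\ge1+\log(g_Z)/J$. I would lower-bound $\log(g_Z)/k$ using the concavity estimate $\log(1+kZ)\ge kZ/(1+kZ)\ge Z/g_Z$, which gives $z_1\ge Z/(g_ZJ)>(9/50)(125/251)/J=45/(502J)$. For $z_{J-1}$, I would write $z_{J-1}=(g_Z a-1)/k=Z-(1-a)g_Z/k$. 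Since $1-a\le\log(g_Z)/J$, this gives $z_{J-1}\ge Z-g_Z\log(g_Z)/(kJ)$. The function $g_Z\log g_Z/(g_Z-1)$ is bounded by about $2.05$ on the relevant range, so the deficit is at most $2.05Z/J\le 0.0123$ when $J\ge30$, and $Z-0.0123>1/6$.

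For part (iii), I would write $\xi=k(z_{J-1}+z_J)/4$. Since $z_{J-1}<Z$, this gives $\xi<kZ/2$. For the other bound, $\nu_j\xi\ge\nu_J\xi=\xi/g_Z>k(1/6+9/50)/(4g_Z)$. The worst case is small $k$: since $k/g_Z=k/(1+kZ)$ is increasing in $k$, the lower bound $k>8/15$ gives $(8/15)/(1+24/250)\cdot(26/75)/4$, which is about $0.042>11/540$. For the monotonicity claim, $1-\nu_j(1+2\xi)$ increases strictly because $\nu_j$ decreases strictly and $1+2\xi>0$. At $j=J$ it equals $1-a^J(1+2\xi)$. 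Using $1+2\xi=1+k(z_{J-1}+z_J)/2$ and $a^J=1/(1+kz_J)$, this becomes $k(z_J-z_{J-1})/(2(1+kz_J))=k(1-a)a^{-J}\cdot a^{J}/(2k)$ after substituting $z_J-z_{J-1}=a^{-J}(1-a)/k$. That simplifies to $(1-a)/2=g_+$. The value at $j=J-1$ follows analogously as $-(1-a)/(2a)=-g_-$. Both gaps exceed $(1-a)/2>(1-a_J)/2$, since $a<a_J$ and $a\le1$.

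For part (iv), $\mat{W}_0^{\rm ref}=\bar{\mat{W}}_0$ because $\nu_0=1$ and $z_0=0$. For the recursion I compute $a\mat{W}_j^{\rm ref}+s\mat{U}_F=\nu_{j+1}\bar{\mat{W}}_0+(a\nu_jhz_j+s)\mat{U}_F$. I then need $a^{j+1}hz_j+s=a^{j+1}hz_{j+1}$, which amounts to $s/h=a^{j+1}(z_{j+1}-z_j)=(1-a)/k$, and this is the identity from (i). The main obstacle is numerical rather than conceptual: the constants in (i)–(iii) have thin margins. In particular, $\nu_j>125/251$ requires $k<5.6$, while the lemma only guarantees $c_R<7$ and hence $k<5.6$ exactly. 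So I would first recheck that strictness, using the finer bound $c_R\le4p/\sqrt{p^2-1}$, before trusting the other estimates.
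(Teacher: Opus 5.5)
Your proposal is correct and takes essentially the paper's route: bound $k\in(8/15,28/5)$ via Lemma~\ref{lem:bse-cR}(iii), derive the clock identities, bound the geometric increments, and read off the endpoint values of $1-\nu_j(1+2\xi)$. The differences are minor. For the increments you use $e^x\ge1+x$ and $\log(1+x)\ge x/(1+x)$, whereas the paper compares the max/min increment ratio $ag_Z<g_Z$ with the mean $Z/J$; both give the lower bound $Z/(Jg_Z)$. For the endpoints you compute directly, whereas the paper uses the midpoint identity $1+2\xi=g_Z(1+a)/2$.

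Your worry about thin margins is unfounded. The bound $c_R<7$ is strict, so $k<28/5$ and $g_Z<251/125$ hold strictly, exactly as the paper uses them. Likewise, your loose constant $2.05$ for $g\log g/(g-1)$ is a valid upper bound (the true supremum on this range is about $1.39$), so it still suffices.
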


\begin{proof}
\medskip\noindent\textbf{Clock identities and recursion.}\par\smallskip
Lemma~\ref{lem:bse-cR}(iii) gives $8/15<k<28/5$, hence
$137/125<g_Z<251/125$ and $0<a<a_J<1$.
The prescribed rates imply $1-s\lambda=a$, $s/h=(1-a)/k$, and
\begin{equation*}
 \nu_j=a^j,\qquad \nu_jkz_j=1-a^j,\qquad
 1+2\xi=g_Z(1+a)/2.
\end{equation*}
The first two identities give the increasing clock from $0$ to $Z$,
$\nu_j\ge\nu_J=g_Z^{-1}>125/251$, and $\vartheta_j\ge1-4Z=7/25$.
They also rewrite the reference path as
\begin{equation*}
 \mat{W}_j^{\rm ref}=a^j\bar {\mat{W}}_0+(h/k)(1-a^j)\mat{U}_F,
\end{equation*}
which proves its initial state and recursion because $s=h(1-a)/k$.

\medskip\noindent\textbf{Positive geometric increments.}\par\smallskip
The positive geometric increments increase with $j$, sum to $Z$, and
have maximum-to-minimum ratio
\begin{equation*}
 \frac{z_J-z_{J-1}}{z_1-z_0}=a^{1-J}=ag_Z<g_Z.
\end{equation*}
Comparison with their arithmetic mean $Z/J$ therefore gives
\begin{equation*}
 \frac{Z}{Jg_Z}<z_j-z_{j-1}<\frac{g_ZZ}{J}.
\end{equation*}
The lower bound exceeds $45/(502J)>1/(12J)$. Since $J\ge30$, the upper bound gives
\begin{equation*}
 z_{J-1}>Z(1-g_Z/J)>\frac9{50}\left(1-\frac{251}{3750}\right)>\frac16.
\end{equation*}

\medskip\noindent\textbf{Midpoint mixing and signs.}\par\smallskip
The midpoint identity and $a>g_Z^{-1}$, since $J>1$, give directly
\begin{equation*}
 \nu_j\xi\ge\nu_J\xi
 =\frac{1+a-2/g_Z}{4}
 >\frac{1-g_Z^{-1}}4>\frac3{137}>\frac{11}{540}.
\end{equation*}
Also $\xi<kZ/2$ because $z_{J-1}<Z$.
Since $\nu_j$ decreases, the signed quantity increases strictly.
The same midpoint identity gives its endpoint values
$-(1-a)/(2a)=-g_-$ and $(1-a)/2=g_+$.
Both gaps exceed $(1-a_J)/2>0$ because $a<a_J<1$.
\end{proof}

Thus the reference path is the decayed fixed-direction recursion of the learner
with $\cN_{\rm pol}(\mat{F}_j)$ replaced by $\mat{U}_F$. It is used only in the proof.

\subsubsection{Train-only calibration of the initial head}\label{sec:bse-cal}
\readability{The clock is now fixed by Definition~\ref{def:r62-clock}.
This subsection builds the head that will be compared along that clock.
Lemma~\ref{lem:bse-kernel} first controls held-out kernel predictions;
Proposition~\ref{prop:bse-cal} then uses the two train-only calibrations
to obtain the center-margin identity~\eqref{eq:ba-center-margin} with
uniformly small error.}

\paragraph{Local scales and oscillation.}\label{sec:notation-softmax}
Write $\omega=\sqrt{\log p/p}$ for the asymptotic split scale and
$\osc(f)=\max_k f_k-\min_k f_k$ for $f\in\R^p$.

\begin{definition}[Shifted inverses]\label{def:r62-shifted}
Fix $c_\circ\ge1$ and put $\mat{\cA}_{c_\circ}=c_\circ\mat{\Upsilon}+\mat{\cK}$, $\mat{C}^{(c_\circ)}=\mat{\cA}_{c_\circ}^{-1}\mat{\Psi}$,
$\Lambda_1=c_\circ+\rho p$ and
$\iota=\frac1{p(\rho p+c_\circ/(p+2))}-\frac2{p\Lambda_1}$, and set
\begin{equation}\label{eq:bse-epspred}
\begin{aligned}
\epsilon_{\rm pred}={}&\frac{4(D_*+Q_*)}{\Lambda_1^2}
+\frac{8|\iota|D_*^2}{\Lambda_1}\\
&+\frac{K_*\sqrt{2p+2\ell_\delta}}{\rho p\sqrt p\,(\rho p+c_\circ/(p+2))}
+\frac{\sqrt2\kappa_*^2\sqrt{2p+2\ell_\delta}}{(1-\kappa_*)\rho p}.
\end{aligned}
\end{equation}
These depend on $c_\circ$; without an argument they refer to $c_\circ=1$,
and $\mat{\cA}=\mat{\cA}_1$. 

For the constant function use the coordinate vector $d_{\one}=(0,0,\sqrt p)$,
$h_x=\mat{B}_{0,x\Train}\mat{B}^{-1}\one_\Train$ and
$\epsilon_h=\sqrt{2p}/[(p+2)s_0]$.
Thus $\mat{\cZ} d_{\one}=\one_\Omega$; the scalar $d_j$ below instead denotes
an actual-state distance.
\end{definition}

The held-out behavior of the canonical head and of the center refit is governed by the
realized inverses of shifted kernels, in the coordinates of Lemma~\ref{lem:bwc-design}.

For the shifted-kernel estimate, put
\[
 \Lambda_g=\rho p+\frac{c_\circ}{p+2},\qquad
 B_*=\sqrt{2p+2\ell_\delta}.
\]
\begin{lemma}[Kernel predictions]\label{lem:bse-kernel}
For the shifted inverse quantities in Definition~\ref{def:r62-shifted},
for every $c_\circ\ge1$ and $x\in\Test$,
\begin{equation}\label{eq:bse-pred}
\mat{B}_{0,x\Train}[\mat{B}+(c_\circ-1)\Id]^{-1}
=z_x\mat{\cA}_{c_\circ}^{-1}\mat{\cZ}_\Train^\top,\qquad
h_x=1-z_x\mat{\cA}^{-1}\mat{\Upsilon} d_{\one} .
\end{equation}
Also $|h_x-1|\le\epsilon_h$ and
$\mat{\cA}_{c_\circ}\succeq\mat{\cK}\succeq s_0\Id$. For $x=(a,b)\in\Test$, $y=a+b$ and $c\ne y$,
\begin{equation}\label{eq:bse-fine}
\Bigl|z_x\mat{C}^{(c_\circ)}(e_y-e_c)+\frac{N_{xc}}{\Lambda_1}\Bigr|
\le2|\iota|D_*+2\epsilon_{\rm pred}.
\end{equation}
\end{lemma}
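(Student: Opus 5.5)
The first identity in~\eqref{eq:bse-pred} comes from a Woodbury computation in the design coordinates of Lemma~\ref{lem:bwc-design}. That lemma gives $\mat{B}_0=\Id+\mat{\cZ}\mat{\Upsilon}^{-1}\mat{\cZ}^\top$. Restricting to training rows gives $\mat{B}+(c_\circ-1)\Id=c_\circ\Id+\mat{\cZ}_\Train\mat{\Upsilon}^{-1}\mat{\cZ}_\Train^\top$. For held-out $x$ we have $\mat{B}_{0,x\Train}=z_x\mat{\Upsilon}^{-1}\mat{\cZ}_\Train^\top$, because the identity part vanishes off the diagonal. We then apply the push-through identity
\[
\mat{\Upsilon}^{-1}\mat{\cZ}_\Train^\top(c_\circ\Id+\mat{\cZ}_\Train\mat{\Upsilon}^{-1}\mat{\cZ}_\Train^\top)^{-1}
=(c_\circ\mat{\Upsilon}+\mat{\cK})^{-1}\mat{\cZ}_\Train^\top
\]
on $E_\circ$, which yields $z_x\mat{\cA}_{c_\circ}^{-1}\mat{\cZ}_\Train^\top$. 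For $h_x$ we take $c_\circ=1$ and write $\one_\Train=\mat{\cZ}_\Train d_{\one}$. Then $z_x\mat{\cA}^{-1}\mat{\cK} d_{\one}=z_x d_{\one}-z_x\mat{\cA}^{-1}\mat{\Upsilon} d_{\one}$ with $z_xd_{\one}=1$, which is the second identity. The order $\mat{\cA}_{c_\circ}\succeq\mat{\cK}\succeq s_0\Id$ is immediate from $\mat{\Upsilon}\succeq0$ and Lemma~\ref{lem:bwc-design}. For the bound on $h_x$, note that $\mat{\Upsilon} d_{\one}=(0,0,\sqrt p/(p+2))$ and $\norm{z_x}\le\sqrt{2+1/p}\le\sqrt2$ up to harmless constants. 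This gives $|h_x-1|\le\sqrt{2p}/((p+2)s_0)=\epsilon_h$.

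For~\eqref{eq:bse-fine} we expand $\mat{\cA}_{c_\circ}^{-1}$ around a block-diagonal comparison $\mat{\cD}=\diag(\Lambda_1\Id,\Lambda_1\Id,\Lambda_g)$, the expectation of $\mat{\cA}_{c_\circ}$. We write $\mat{\cA}_{c_\circ}=\mat{\cD}+\mat{\Delta}$, with $\mat{\Delta}=\mat{\cK}-\rho p\Id$ given by the block form~\eqref{eq:ba-Kblock} and $\norm{\mat{\Delta}}\le K_*$. The Neumann series $\mat{\cA}^{-1}=\mat{\cD}^{-1}-\mat{\cD}^{-1}\mat{\Delta}\mat{\cD}^{-1}+\mat{\cD}^{-1}\mat{\Delta}\mat{\cA}^{-1}\mat{\Delta}\mat{\cD}^{-1}$ splits the quantity into three orders. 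At zeroth order, $z_x\mat{\cD}^{-1}\mat{\Psi}(e_y-e_c)$ can be computed exactly from the three blocks of $\mat{\Psi}$ (namely $\mat{P}\bar{\mat{M}}^{\rm row}\mat{P}$, $\mat{P}\bar{\mat{M}}^{\rm col}\mat{P}$ and the border row). The $(a,b)$ entries give $(\bar M^{\rm row}_{a,y}-\bar M^{\rm row}_{a,c})+(\bar M^{\rm col}_{b,y}-\bar M^{\rm col}_{b,c})$ divided by $\Lambda_1$, plus centering corrections of size $D_*/p$. Because $x\in\Test$ we have $M_x=0$, so the $y$ terms contribute $-2\rho$, and the $c$ terms give $-N_{xc}+2\rho$. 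Combining these yields $-N_{xc}/\Lambda_1$. The border and constant coordinates produce the mismatch $\iota$ between $1/(p\Lambda_g)$ and $2/(p\Lambda_1)$, times class-count deviations bounded by $2D_*$. This is the source of the $2|\iota|D_*$ term.

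The first-order term $z_x\mat{\cD}^{-1}\mat{\Delta}\mat{\cD}^{-1}\mat{\Psi}(e_y-e_c)$ consists of products of count deviations with centered masks. Its entries are exactly the cross sums controlled in~\eqref{eq:bse-des3} by $Q_*$, together with $D_*$-size diagonal terms. This produces the $4(D_*+Q_*)/\Lambda_1^2$ part and the $8|\iota|D_*^2/\Lambda_1$ part of $\epsilon_{\rm pred}$. The border contribution uses $\norm{z_x}$, $\norm{\mat{\Psi}(e_y-e_c)}\le\sqrt{2p+2\ell_\delta}$-type bounds and the scale $\Lambda_g$, which gives the third term. The remainder is bounded in operator norm by $\kappa_*^2/(1-\kappa_*)$ relative to the leading scale, via $\norm{\mat{\cD}^{-1}\mat{\Delta}}\le\kappa_*$ and the resolvent bound $\mat{\cA}\succeq s_0\Id$. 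This gives the last term. Each order then appears twice, which accounts for the factor $2\epsilon_{\rm pred}$.

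The main obstacle is the first-order term. A crude operator-norm bound gives only $K_*\sqrt p/\Lambda_1^2$, which is too large. The entrywise structure has to be exploited, so that the dangerous $\bar{\mat{M}}$-times-$\bar{\mat{M}}$ products appear only through the $Q_*$ cross-sum event. I would carry out this step coordinate by coordinate in the block form, and then check that the remaining pieces of the column vector $\mat{\Psi}(e_y-e_c)$ have the norm $\sqrt{2p+2\ell_\delta}$ obtained from the count events.
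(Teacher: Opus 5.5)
Your proposal is correct and follows the paper's proof essentially step for step. The paper likewise uses the push-through identity $\mat{\cA}_{c_\circ}\mat{\Upsilon}^{-1}\mat{\cZ}_\Train^\top=\mat{\cZ}_\Train^\top[\mat{B}+(c_\circ-1)\Id]$ for~\eqref{eq:bse-pred}, then a second-order expansion of $\mat{\cA}_{c_\circ}^{-1}$ about $c_\circ\mat{\Upsilon}+\rho p\Id$ with exact zeroth-order value $-N_{xc}/\Lambda_1+\iota(n_y-n_c)$, a coordinatewise first-order bound via~\eqref{eq:bse-des3}, and operator-norm bounds for the scalar coordinate and the remainder.

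Three bookkeeping fixes:
\begin{itemize}
\item $\norm{z_x}^2=2-1/p$, not $2+1/p$; this gives $\norm{z_x}\le\sqrt2$ exactly, as $\epsilon_h$ requires.
\item The third term of $\epsilon_{\rm pred}$ comes from the last coordinate $p^{-1/2}$ of $z_x$ times $\Lambda_g^{-1}K_*\norm{\mat{\Psi}(e_y-e_c)}/(\rho p)$, not from $\norm{z_x}$.
\item Only the row/column first-order part needs the factor $2$.
\end{itemize}
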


\begin{proof}
For $x\notin\Train$ the identity part of $\mat{B}_0$ does not contribute, so
$\mat{B}_{0,x\Train}=z_x\mat{\Upsilon}^{-1}\mat{\cZ}_\Train^\top$. Since
\[
\mat{\cA}_{c_\circ}\mat{\Upsilon}^{-1}\mat{\cZ}_\Train^\top
=\mat{\cZ}_\Train^\top[\mat{B}+(c_\circ-1)\Id],
\]
multiplication by both inverses gives the first formula
of~\eqref{eq:bse-pred}. At $c_\circ=1$, use $\one_\Train=\mat{\cZ}_\Train d_{\one}$,
$\mat{\cK}=\mat{\cA}-\mat{\Upsilon}$ and $z_xd_{\one}=1$ for the second. The bound on $h_x$ follows from
$\mat{\cA}\succeq s_0\Id$, $\norm{z_x}\le\sqrt2$ and $\norm{\mat{\Upsilon} d_{\one}}=\sqrt p/(p+2)$.

For $x=(a,b)\in\Test$, $y=a+b$ and rival $c$, put
\[
\begin{gathered}
 v=e_y-e_c,\quad r_d=M_{d,y-d}-M_{d,c-d},\quad
 s_d=M_{y-d,d}-M_{c-d,d},\quad \delta_c=n_y-n_c,\\
 \psi=\mat{\Psi} v,\qquad \bar{\mat{\cA}}=c_\circ\mat{\Upsilon}+\rho p\Id,\qquad
 \theta_0=\bar{\mat{\cA}}^{-1}\psi,\\
 \mat{\mathsf E}=\mat{\cK}-\rho p\Id,\qquad \mat{\mathsf B}=\bar{\mat{\cA}}^{-1}\mat{\mathsf E}.
\end{gathered}
\]
For these contrast vectors, $|r_d|,|s_d|\le1$, $|\delta_c|\le2D_*$ and
$\psi=\mat{\Psi} v=(\mat{P}r,\mat{P}s,\delta_c/\sqrt p)$ has norm at most $B_*$.
For these $\bar{\mat{\cA}},\theta_0$,
$z_x\theta_0=-N_{xc}/\Lambda_1+\iota\delta_c$, since $x\notin\Train$.
The perturbation satisfies $\norm{\mat{\mathsf B}}_{\rm op}\le\kappa_*<1$ and
\[
\mat{C}^{(c_\circ)}v=\theta_0-\mat{\mathsf B}\theta_0
+\mat{\mathsf B}^2(\Id+\mat{\mathsf B})^{-1}\theta_0.
\]
Since $\norm{\theta_0}\le B_*/(\rho p)$, the remainder and global coordinate
of the first-order term contribute at most the fourth and third terms of
\eqref{eq:bse-epspred}, respectively. By~\eqref{eq:ba-Kblock}, its row block
before applying $\bar{\mat{\cA}}^{-1}$ is
$\mat{P}[(\delta n^{\rm row}\odot r+\bar {\mat{M}} s)/\Lambda_1+\iota\,\delta n^{\rm row}\delta_c]$.
Here $\norm{\bar {\mat{M}} s}_\infty\le2Q_*$ by subtracting two instances of
\eqref{eq:bse-des3}. Applying $\mat{P}$ at most doubles a coordinate bound;
the row and column blocks therefore contribute at most
$4(D_*+2Q_*)/\Lambda_1^2+8|\iota|D_*^2/\Lambda_1$.
This is bounded by twice the first two terms of~\eqref{eq:bse-epspred}.
Adding the global and remainder bounds, then $|\iota\delta_c|\le2|\iota|D_*$,
proves~\eqref{eq:bse-fine}.
\end{proof}

\begin{definition}[Train-only calibrations and center]\label{def:r62-calibration}
For $A>0$ and $n_c\ge1$ for every $c$, choose the unique balance
scalar $s_{\rm bal}>0$ for which
\[
 \sum_c\frac{\sqrt{1+4(e^A-1)n_c/s_{\rm bal}}-1}{2(e^A-1)}=1,
\]
and put $q_{{\rm base},c}=(\sqrt{1+4(e^A-1)n_c/s_{\rm bal}}-1)/(2(e^A-1))$.
Existence and uniqueness are proved in Proposition~\ref{prop:bse-cal}. Set
\begin{equation}\label{eq:bse-base}
b_{{\rm base},c}=\log q_{{\rm base},c}-\frac1p\sum_d\log q_{{\rm base},d},\qquad
\alpha_c=\frac{1+(e^A-1)/p}{1+(e^A-1)q_{{\rm base},c}},\qquad
\widehat {\mat{R}}_i=\alpha_{y_i}(e_{y_i}-q_{\rm base})\quad(i\in\Train).
\end{equation}

Let $b_*$ be the unique centered training-CE offset for the premixed
training logits
\[
 A[(1-\xi)\mat{T}_\Train+\xi \mat{B}\widehat {\mat{R}}]+(1-\xi)\one b_{\rm base}^\top;
\]
Proposition~\ref{prop:bse-cal} establishes this train-only calibration.
The center head is
\begin{equation}\label{eq:r62head}
 \begin{aligned}
 \bar {\mat{V}}_0={}&\frac A d[(1-\xi)\mat{B}^{-1}\mat{T}_\Train+\xi\widehat {\mat{R}}]^\top \mat{H}\\
 &+\frac{1-\xi}{d}\,b_{\rm base}\one_\Train^\top \mat{B}^{-1}\mat{H}
   +\frac{b_*w^\top}{h\mu^2}.
 \end{aligned}
\end{equation}
Existence and the claimed properties of both calibrations are proved in
Proposition~\ref{prop:bse-cal}.

Define $u_{0,xc}$ by the center-margin identity
\begin{equation}\label{eq:ba-center-margin}
(\bar {\mat{H}}_0\bar {\mat{V}}_0^\top)_{xy_x}-(\bar {\mat{H}}_0\bar {\mat{V}}_0^\top)_{xc}
=A[M_x+\xi(2M_x-N_{xc})]+Au_{0,xc},
\end{equation}
and set $B_{\rm cal}:=\osc(b_*)$, $e_0:=\max_{x\in\Omega,\ c\ne y_x}|u_{0,xc}|$ and
$E_{\rm cal}:=Ae_0$. These are proof certificates of the already prescribed center, not
inputs to either train-only minimizer, the rates or the radius.
\end{definition}

\paragraph{Local train-only calibration notation.}
For a training-logit matrix $\mat{F}\in\R^{n\times p}$ with labels $y_i$
and $n_c\ge1$ for every class, set $\cC=\{b\in\R^p:\one^\top b=0\}$ and
\[
 \cL_F(b)=\frac1n\sum_i
 \left[\log\sum_c e^{F_{ic}+b_c}-F_{iy_i}-b_{y_i}\right].
\]
Write $b^F=\argmin_{b\in\cC}\cL_F(b)$ and
$\mat{R}=\mat{Y}_\Train-\op{softmax}(\mat{F}+\one(b^F)^\top)$.
Existence and uniqueness of the centered minimizer are proved below.
The softmax Jacobian at a probability vector $\pi$ is
$\diag(\pi)-\pi\pi^\top$.

For $A>0$, put $k_A=e^A-1$ and
$q_c(t)=(\sqrt{1+4k_An_c/t}-1)/(2k_A)$ for $t>0$.
The balance scalar satisfies $\sum_cq_c(s_{\rm bal})=1$, and
$q_{\rm base}=q(s_{\rm bal})$. Set
$\varpi_c=(1+k_Aq_{{\rm base},c})^{-1}$.
For the premixed head define
\[
\begin{aligned}
 \mat{V}_{\rm can}&=\frac1d(A\mat{T}_\Train+\one b_{\rm base}^\top)^\top \mat{B}^{-1}\mat{H},\\
 \mat{V}_{\rm raw}&=\frac Ad\widehat {\mat{R}}^\top \mat{H},\qquad
 \mat{V}_\xi=(1-\xi)\mat{V}_{\rm can}+\xi \mat{V}_{\rm raw}.
\end{aligned}
\]
Put $\mat{F}_x^\xi=\bar {\mat{H}}_{0,x}\mat{V}_\xi^\top$, and use $\mat{F}=\mat{F}_\Train^\xi$
and $b_*=b^F$ for its calibration. The actual center residual is
$\mat{R}_{\rm cal}=\mat{Y}_\Train-\op{softmax}(\mat{F}_\Train^\xi+\one b_*^\top)$.

\begin{proposition}[Train-only calibrated initial head]\label{prop:bse-cal}
For any training-logit table $\mat{F}$ with $n_c>0$, the centered minimizer $b^F$
of $\cL_F$ exists uniquely and is characterized by
\begin{equation*}
 \sum_{i\in\Train}\op{softmax}(\mat{F}_i+b^F)= (n_c)_c.
\end{equation*}
Its calibrated residual $\mat{R}$ has $\mat{R}^\top\one_n=0$ and $\mat{R}\one_p=0$.

For $\mat{F}=A\mat{T}_\Train$, the prescribed $s_{\rm bal}$ exists uniquely,
$q_{\rm base}=q(s_{\rm bal})$, and
$b_{\rm base}=b^{AT_\Train}$. The normalized canonical residual satisfies
$\widehat {\mat{R}}^\top\one_n=0$, $\widehat {\mat{R}}\one_p=0$. Uniformly on
$\cE_{\rm split}$,
\begin{gather*}
 q_{{\rm base},c}=1/p+O(\omega/p),\qquad \alpha_c=1+O(\omega),\\
 \osc(b_{\rm base}),\ \norm{b_{\rm base}}_\infty=O(\omega),\qquad
 \norm{\widehat {\mat{R}}-\mat{T}_\Train}_{\rm op}=O(\sqrt p\,\omega).
\end{gather*}
Every canonical training margin is $A+O(\omega)$.

On $\cE_{\rm split}$, the prescribed head~\eqref{eq:r62head} has logits
$\mat{F}_x^\xi+b_*$ on every input, and $\mat{R}_{\rm cal}$ has both
zero sums. The center certificates from
Definition~\ref{def:r62-calibration} satisfy
\begin{equation*}
\begin{gathered}
 B_{\rm cal}=O(A\omega),\qquad e_0=O(\omega),\qquad
 E_{\rm cal}=O(A\omega).
\end{gathered}
\end{equation*}
Consequently every center training margin is at least $A(1-e_0)>0$, and
every held-out input has a rival margin at most $-A(\xi-e_0)<0$ for all
sufficiently large $p$.
\end{proposition}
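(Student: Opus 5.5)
I would split the proof into three parts: the general calibration facts, the canonical base head, and the premixed center head. I would treat the center head by perturbation around the canonical one.

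\emph{General calibration.} For fixed $\mat{F}$, the function $b\mapsto\cL_F(b)$ on $\cC$ is convex, as a sum of log-sum-exp terms minus linear terms. It is strictly convex on $\cC$ because the softmax Jacobian $\diag(\pi)-\pi\pi^\top$ has kernel $\R\one$ when $\pi$ has positive entries. It is coercive on $\cC$ because $n_c\ge1$ for every class. Indeed, along a direction $b\in\cC$ with $b\ne0$, some class has $b_c<0$, and the labels of that class pay a loss growing linearly in the scale. So a unique minimizer exists. The first-order condition projected to $\cC$, together with $\one^\top\nabla=0$ from translation invariance, gives $\sum_i\op{softmax}(\mat{F}_i+b^F)=(n_c)_c$. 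This is exactly $\mat{R}^\top\one_n=0$. The identity $\mat{R}\one_p=0$ holds because every row of $\mat{R}$ is a difference of two probability vectors.

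\emph{Base head.} For $\mat{F}=A\mat{T}_\Train$, the logits are $A(e_{y_i}-\one/p)$. With $\pi_c\propto q_c$, the softmax of row $i$ is $(q+k_Aq_{y_i}e_{y_i})/(1+k_Aq_{y_i})$ after normalizing $\sum q=1$. The balance equation $\sum_i\op{softmax}=(n_c)$ then reads $q_c\sum_d n_d/(1+k_Aq_d)+k_An_cq_c/(1+k_Aq_c)=n_c$. I would check this against the quadratic defining $q_c(t)$, with $t$ playing the role of the common sum. Each $q_c(t)$ is continuous and strictly decreasing, from $\infty$ to $0$, so $s_{\rm bal}$ is unique. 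Centering $\log q$ gives $b_{\rm base}=b^{A\mat{T}_\Train}$ by uniqueness. For $\widehat{\mat{R}}$, the residual row is $e_{y}-\op{softmax}=(e_y-q)/(1+k_Aq_y)\cdot$(normalizer), which is $\alpha_y(e_y-q_{\rm base})$ up to the constant $\alpha$ normalization. The zero sums then follow from the general part, after checking that the scalar factor is exactly $\alpha_c$.

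For the asymptotics, $n_c=\rho p+O(D_*)$ and $D_*=O(\sqrt{p\log p})$ give $n_c/n=(1+O(\omega))/p$. Linearizing the balance equation gives $q_c=1/p+O(\omega/p)$, which yields $\alpha_c=1+O(\omega)$ and $\osc(b_{\rm base})=O(\omega)$. The bound on $\widehat{\mat{R}}-\mat{T}_\Train$ splits into a row-scaling term of size $O(\omega)\|\mat{T}_\Train\|_{\rm op}$ and the rank-one term $\one(q-\one/p)^\top$, of norm $\sqrt n\cdot O(\omega/\sqrt p)$. Both are $O(\sqrt p\,\omega)$. Canonical training margins are $A+b_y-b_c=A+O(\omega)$.

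\emph{Center head.} By Lemma~\ref{lem:bse-dict}(iii), $\bar{\mat{H}}_0\mat{H}^\top/d=\mat{B}_{0,\cdot\Train}$ and $\bar{\mat{H}}_0w=h\mu^2\one$. So on training rows, \eqref{eq:r62head} produces $A[(1-\xi)\mat{T}_\Train+\xi\mat{B}\widehat{\mat{R}}]+(1-\xi)\one b_{\rm base}^\top+\one b_*^\top$, which is $\mat{F}^\xi_\Train+\one b_*^\top$. On held-out rows it gives $\mat{F}^\xi_x+b_*$ with the kernel weights of Lemma~\ref{lem:bse-kernel} at $c_\circ=1$. The zero sums of $\mat{R}_{\rm cal}$ follow from the general part.

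To compute the margins, I would use that on a training $x$ the $\mat{B}$ row is $e_x+\mat{\cZ}\mat{\Upsilon}^{-1}\mat{\cZ}^\top$, and that $\widehat{\mat{R}}\approx\mat{T}_\Train$. Then $\mat{B}\widehat{\mat{R}}$ restricted to the contrast $e_y-e_c$ is $\approx1+(2M_x-N_{xc})$ minus corrections. The term $(2M_x-N_{xc})$ comes from the row and column counts $\mat{O}_{\rm row}\mat{T}_\Train$ evaluated at $x$: class $y$ appears on row $a$ and column $b$ exactly when $M_x=1$, and class $c$ appears $N_{xc}$ times. This is where \eqref{eq:ba-center-margin} originates. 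On held-out $x$, the canonical part contributes $A(1-\xi)z_x\mat{C}^{(1)}(e_y-e_c)$, which by \eqref{eq:bse-fine} is $-AN_{xc}/\Lambda_1+O(\omega)$ and hence $O(A/p)$. The raw part contributes $-A\xi N_{xc}+O(A\omega)$.

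Collecting terms, $u_{0,xc}$ consists of the error $\widehat{\mat{R}}-\mat{T}_\Train$ propagated through the $\mat{B}$ row (entrywise, not only in operator norm), $\osc(b_{\rm base})$, $B_{\rm cal}/A$, and the kernel error. For $B_{\rm cal}$, the premixed training logits differ from the canonical ones by $O(A\omega)$ in oscillation per row. Since the canonical calibration is $b_{\rm base}$, and the calibration map is locally Lipschitz in the logits because the Hessian is bounded below near uniform probabilities, this gives $\osc(b_*)=O(A\omega)$. Finally, $\xi$ is a fixed positive constant by Lemma~\ref{lem:bfr-clock}(iii) and $e_0\to0$, which gives $A(1-e_0)>0$ on training inputs and $-A(\xi-e_0)<0$ on held-out inputs.

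\emph{Main obstacle.} The hardest step is the entrywise bound $e_0=O(\omega)$ for the raw term $\xi A(\mat{B}\widehat{\mat{R}})_{x}(e_y-e_c)$. It requires $\ell_\infty$ control of $\mat{B}(\widehat{\mat{R}}-\mat{T}_\Train)$, not operator-norm control. I would write $\widehat{\mat{R}}-\mat{T}_\Train=\diag(\alpha-1)\mat{T}_\Train+\text{rank-one}$. Applying $\mat{B}=\Id+\mat{\cZ}_\Train\mat{\Upsilon}^{-1}\mat{\cZ}_\Train^\top$, the row and column sums are bounded by counts, using $O(p)$ entries each of size $O(\omega/p)$ together with the $D_*$ count deviations. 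The rank-one part cancels against $\mat{T}$ up to $b_{\rm base}$-type constants.
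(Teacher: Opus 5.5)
\textbf{The gap.} Your general calibration argument (convexity, coercivity, the balance equation, both zero sums), your canonical root with its asymptotics, and your identification of the center logits as $\mat{F}^\xi_x+b_*$ are sound. The gap is the bound $B_{\rm cal}=\osc(b_*)=O(A\omega)$, on which $e_0=O(\omega)$ and both sign conclusions rest. You assert that the premixed training logits differ from the canonical ones by $O(A\omega)$ in oscillation per row. Your own margin computation contradicts this. A training row has premixed contrast $A[1+\xi(2-N_{ic})]+O(A\omega)$, whereas the canonical contrast is $A+O(\omega)$. They differ by $A\xi(2-N_{ic})$, which takes the values $2A\xi$, $A\xi$ and $0$ across the rivals of every row. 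Since $\xi$ is bounded below by a positive constant, this difference is of order $\log p$. The rival probabilities therefore change by factors up to $r_\xi^{-2}=e^{2A\xi}$, polynomial in $p$, and they are nowhere near uniform. No local Lipschitz property of the calibration map around $b_{\rm base}$ can then yield $\osc(b_*)=O(A\omega)$.

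\textbf{What is missing.} You need a direct estimate of the class imbalance $\mathsf g_c=\sum_i\pi_{ic}-n_c$ of the premixed probabilities at $b=0$. Writing $\pi_{ic}=\tau t_{ic}[1+O(A\omega)]$ with $t_{ic}=r_\xi^{2-N_{ic}}$ uses the arm-count event $\cE_L$, which gives $\sum_{c\ne y_i}t_{ic}=Q_\xi+O(D_L)$. This yields $\mathsf g_c=\tau(\mathsf W_c-n_cQ_\xi)+O(pA\omega)$. The smallness of $\mathsf g_c$ is a concentration property of the split: you need $|\mathsf W_c-n_cQ_\xi|\le\Delta_W=O(p^2\omega)$. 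That is what the event $\cE_W$ in $\cE_{\rm split}$ supplies, and your plan invokes no such estimate.

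You must then invert the balance map quantitatively and nonlinearly. The paper integrates its Jacobian along $tb_*$, using $w_{cd}(tb_*)\ge e^{-2tB_{\rm cal}}w_{cd}$. It also uses cross weights $w_{cd}=\sum_i\pi_{ic}\pi_{id}\gtrsim1$, which come from two-arm rows, counted via $\mathsf H_{cd}\gtrsim p$. Together these give $\tfrac{pw_0}{2}(1-e^{-2B_{\rm cal}})\le2\max_c|\mathsf g_c|=O(pA\omega)$, hence $B_{\rm cal}=O(A\omega)$.

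\textbf{Two smaller points.} First, in $\mat{B}_{0,x\Train}\widehat{\mat{R}}$, each piece of your decomposition passes through the $\one\one^\top$ block with size $O(p\omega)$. You therefore need the exact identity $\widehat{\mat{R}}^\top\one=0$, which removes that block entirely; the cancellation is not merely ``up to $b_{\rm base}$-type constants''. With that identity the raw contrast is an explicit formula, so it is not the main obstacle. Second, the held-out sign needs a rival with $N_{xc}\ge1$, that is, a training input in the operand row of $x$. This holds on $\cE_{\rm split}$, but you should say so.
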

\begin{proof}
\readability{We first establish the canonical calibration and its residual
bounds. We then estimate the premixed margins and their class imbalance,
and use the balance Jacobian to bound the final offset $b_*$.}

\emph{(1) Convexity and the canonical root.}

For
$\pi_i(b)=\op{softmax}(\mat{F}_i+b)$, the Hessian quadratic form is
$n^{-1}\sum_i\sum_{c<d}\pi_{ic}(b)\pi_{id}(b)(v_c-v_d)^2$; it is positive
on $\cC$. Also
$\cL_F(b)\ge\min_c(n_c/n)\osc(b)+
n^{-1}\sum_i(\min_c F_{ic}-F_{iy_i})$, and
$\osc(b)\ge\norm b_2/\sqrt p$ on $\cC$, so the objective is coercive.
Its gradient lies in $\cC$ and vanishes exactly at class-count balance,
proving the first assertion.

For canonical logits, write $q=q_{\rm base}$. The factor $\varpi$ gives
a class-$y$ row residual $\varpi_y(e_y-q)$, so balance is equivalent to
$n_c\varpi_c=s_{\rm bal}q_c$ for $s_{\rm bal}=\sum_y n_y\varpi_y$, or
$n_c=s_{\rm bal}q_c(1+k_Aq_c)$. Solving gives the stated $q_c(t)$; each coordinate
decreases continuously from $+\infty$ to $0$ as $t$ increases from $0$ to
$+\infty$, so $\sum_cq_c(t)=1$ has exactly one root. Centering $\log q$
therefore gives the unique calibrated offset and the residual formulas.

On $\cE_{\rm split}$, the canonical balance identity gives, for $q_i\ge q_j$,
\[
 \frac{q_i}{q_j}\le\frac{n_i}{n_j}
 =1+O(\omega).
\]
Together with $\sum_cq_c=1$, this gives $q_c=1/p+O(\omega/p)$ uniformly.
Centering $\log q$ gives $\osc(b_{\rm base}),\norm{b_{\rm base}}_\infty=O(\omega)$.
Also
\[
 |\alpha_c-1|
 =\frac{k_A|1/p-q_c|}{1+k_Aq_c}
 \le\frac{|1/p-q_c|}{q_c}=O(\omega).
\]
Finally, with $\mat{P}=\Id-\one\one^\top/p$,
\[
 \widehat {\mat{R}}-\mat{T}_\Train
 =\mat{Y}_\Train\{(\diag(\alpha)-\Id)\mat{P}-\alpha(q-\one/p)^\top\}.
\]
The bracket has operator norm $O(\omega)$, since
$\norm\alpha=O(\sqrt p)$ and $\norm{q-\one/p}=O(\omega/\sqrt p)$.
Using $\norm{\mat{Y}_\Train}_{\rm op}\le\sqrt{n_{\max}}=O(\sqrt p)$ proves the
residual bound; the margin claim follows from the offset oscillation.

\emph{(2) Premixed pointwise margins and initial imbalance.}
Put
\[
 \mathsf D_x^\alpha=\sum_{b'}M_{ab'}\alpha_{a+b'}
 +\sum_{a'}M_{a'b}\alpha_{a'+b}\quad(x=(a,b)).
\]
For the mixed logits $\mat{F}=\mat{F}_\Train^\xi$, write
$\pi_i=\pi_i(0)$ with $\pi_i(b)=\op{softmax}(\mat{F}_i+b)$, and set
\[
\begin{gathered}
 \tau=\frac{e^{-A}}{1+e^{-A}Q_\xi},\qquad
 t_{ic}=r_\xi^{2-N_{ic}}\quad(c\ne y_i),\qquad
 \mathsf g_c=\sum_i\pi_{ic}-n_c,\\
 w_{cd}(b)=\sum_i\pi_{ic}(b)\pi_{id}(b),\qquad
 w_{cd}=w_{cd}(0),\qquad w_0=\min_{c\ne d}w_{cd}.
\end{gathered}
\]
The event gives
$s_0\asymp p$. The canonical training margins are as above; on $\Test$,
Lemma~\ref{lem:bse-kernel} gives the canonical margin estimate
\begin{equation*}
 O(A/p)+A[2|\iota|D_*+2\epsilon_{\rm pred}]
 +(1+\epsilon_h)O(\omega)=O(A\omega).
\end{equation*}
For $q=q_{\rm base}$, the constant block in $\mat{B}_{0,x\Train}\widehat {\mat{R}}$
vanishes by $\widehat {\mat{R}}^\top\one=0$, and its remaining blocks give
\begin{equation}\label{eq:bse-rawvec}
 (\mat{B}_{0,x\Train}\widehat {\mat{R}})_c
 =M_x\alpha_y(\one\{c=y\}-q_c)+\alpha_cN_{xc}-q_c\mathsf D_x^\alpha.
\end{equation}
Hence its normalized $y$-versus-$c$ margin is
$3M_x\alpha_y-N_{xc}\alpha_c-(q_y-q_c)(M_x\alpha_y+\mathsf D_x^\alpha)
=3M_x-N_{xc}+O(\omega)$: here $\mathsf D_x^\alpha=O(p)$,
$|\alpha_c-1|=O(\omega)$ and $|q_y-q_c|=O(\omega/p)$.
Mixing the two components therefore gives, uniformly on the entire grid,
\begin{equation}\label{eq:bse-premix}
 F_{xy_x}^\xi-F_{xc}^\xi
 =A[M_x+\xi(2M_x-N_{xc})]+O(A\omega).
\end{equation}

The quantities $Q_\xi,\tau,t_{ic}$ have $Q_\xi\asymp p$ and
$\tau\asymp p^{-1}$. Equation~\eqref{eq:bse-premix}, the arm-count event
and $\sum_{c\ne y_i}t_{ic}=Q_\xi+O(D_L)$ give
$\pi_{ic}=\tau t_{ic}[1+O(A\omega)]$ for $c\ne y_i$ and
$1-\pi_{iy_i}=\tau Q_\xi[1+O(A\omega)]$. Thus the weighted-count event yields
\begin{equation*}
 \mathsf g_c=\tau(\mathsf W_c-n_cQ_\xi)
 +O\!\left(\tau A\omega(\mathsf W_c+n_cQ_\xi)\right)=O(pA\omega),
\end{equation*}
since $|\mathsf W_c-n_cQ_\xi|\le\Delta_W=O(p^2\omega)$ and
$\mathsf W_c+n_cQ_\xi=O(p^2)$.

For $c\ne d$, the three mask factors defining $\mathsf H_{cd}$ have sums
$n_c,n_d,n_d$. The binary inequality $uvw\ge u+v+w-2$ therefore gives
\[
\mathsf H_{cd}\ge n_c+2n_d-2p
\ge(3\rho-2)p-3D_*\asymp p.
\]
On these two-arm rows, the true probability is bounded below and the rival
probability is $\gtrsim e^{-A}\asymp p^{-1}$, proving $w_0\gtrsim1$.

\emph{(3) A uniform balance-Jacobian bound.}
Apply part (1) to $\mat{F}=\mat{F}_\Train^\xi$. Recall $B_{\rm cal}=\osc(b_*)$, and let $i,k$
maximize/minimize $b_*$. For $0\le t\le1$,
$\pi_{jc}(tb_*)\ge e^{-tB_{\rm cal}}\pi_{jc}$, hence
$w_{cd}(tb_*)\ge e^{-2tB_{\rm cal}}w_{cd}$.
Integrating the balance Jacobian from $0$ to $b_*$ gives
\begin{equation*}
 \begin{aligned}
 \mathsf g_k-\mathsf g_i
 &=\int_0^1\sum_d\bigl[w_{id}(tb_*)(b_{*,i}-b_{*,d})\\
 &\qquad+w_{kd}(tb_*)(b_{*,d}-b_{*,k})\bigr]\,dt\\
 &\ge pw_0B_{\rm cal}\int_0^1e^{-2tB_{\rm cal}}\,dt
 =\frac{pw_0}{2}(1-e^{-2B_{\rm cal}}).
 \end{aligned}
\end{equation*}
For each $d$ the coefficient differences are nonnegative and sum to $B_{\rm cal}$;
a diagonal edge has zero coefficient. The integral formula also holds at
$B_{\rm cal}=0$. Since $\norm{\mathsf g}_\infty=O(pA\omega)$ and $w_0\gtrsim1$,
$1-e^{-2B_{\rm cal}}=O(A\omega)=o(1)$, so $B_{\rm cal}=O(A\omega)$.

The exact identity $\bar {\mat{H}}_0w=h\mu^2\one$ realizes this same offset at every
input. The balance equations give both zero sums for $\mat{R}_{\rm cal}$, and
adding $b_*$ to~\eqref{eq:bse-premix} changes any margin by at most
$\osc(b_*)=O(A\omega)$. Thus $e_0=O(\omega)$ and
$E_{\rm cal}=O(A\omega)$. On training rows, $M_x=1$ and $N_{xc}\le2$,
so the ideal margin in~\eqref{eq:bse-premix} is at least $A$.
For every held-out $x=(a,b)$, its operand row contains a training input on
the split event, giving some $c\ne y_x$ with $N_{xc}\ge1$; then its ideal
margin is at most $-A\xi$. Since $e_0=o(1)<\xi$, the two strict signs follow.
\end{proof}

\subsubsection{The reference direction and kernels}\label{sec:bfr-ref}
\readability{Proposition~\ref{prop:bse-cal} supplies the starting margins.
We next quantify the direction that moves the features: Lemma~\ref{lem:bfr-cap}
bounds the reference coordinates and supplies a singular-value floor
for the force anchor, Lemma~\ref{lem:bfr-polar}
controls perturbations of that direction, and Lemma~\ref{lem:bfr-kernel}
identifies the reference kernel's rule component and leakage.
These estimates feed both the actual-state comparison and the later
refit calculation.}

The prescribed center, rates and radius depend only on the fixed problem, fixed parameters and initial training data; full-grid auxiliary errors are proof-only. The comparison path
$\mat{W}_j^{\rm ref}=\nu_j(\bar {\mat{W}}_0+hz_j\mat{U}_F)$ is not an optimizer state. The proof allows any head
sequence with $\norm{\mat{V}_{j+1}-\mat{V}_j}_F\le\eta$, always using the actual departing-state CE gradient
in the feature update.

\begin{definition}[Reference force normalization]\label{def:r62-direction}
Write $\bar c_F=\norm{\mat{T}^\top g}^2/p$,
$w_F=\sqrt{mr_v}\,\mat{U}_F(l,r)^\top$, and let $\mat{J}_{\rm vis}$ be an orthonormal
basis matrix for $E_{\rm vis}$. Let $\mat{B}_F$ have rows
$\rho^2c_F(l,r)^\top/\sqrt m$.
Set $G_F=\max_{l,r,x}|\bx_x^\top w_F|/\sqrt{r_v}$ and
$\sigma_p=\rho^2p^{3/2}/32$.
\end{definition}

\begin{lemma}[Reference direction]\label{lem:bfr-cap}
For $p\ge64$, every neuron and coordinate satisfy
\begin{gather*}
(w_F)_i=((c_F)_i-\bar c_F)/\tau_c+\bar c_F/\tau_0,
\qquad |(w_F)_i|<\tfrac83\sqrt p+1,\\
G_F<4,\qquad
\sigma_{\min}(\mat{B}_F\mat{J}_{\rm vis})>\sigma_p.
\end{gather*}
Moreover $\mat{B}_F=\mat{B}_F\mat{J}_{\rm vis}\mat{J}_{\rm vis}^\top$ and
$\cN_{\rm pol}(\mat{B}_F)=\mat{U}_F$.
\end{lemma}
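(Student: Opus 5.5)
Everything here follows from the spectral description of $\mat{\Sigma}_F$ in Lemma~\ref{lem:bse-cR}, so the plan is to start there. By Lemma~\ref{lem:bse-cR}(ii), $\mat{U}_F(l,r)=c_F(l,r)^\top(\mat{\Sigma}_F^+)^{1/2}/\sqrt{m(2p-1)}$. Since $r_v=2p-1$, this gives $w_F=(\mat{\Sigma}_F^+)^{1/2}c_F$. Part (i) of the same lemma says that $(\mat{\Sigma}_F^+)^{1/2}$ acts as $\tau_c^{-1}$ on $\one^\perp\oplus\one^\perp$, as $\tau_0^{-1}$ on $\mathbf u_0$, and as $0$ on $(\one,-\one)$. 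The left and right blocks of $c_F$ have the same coordinate sum $\|\mat{P}(l*r)\|^2=\|\mat{T}^\top g\|^2$, using the identity $\one^\top c_F^{\rm L}=\|\mat{P}(l*r)\|^2$ from the proof of Lemma~\ref{lem:bse-cR}. Hence each block has mean $\bar c_F$, and $c_F$ splits into a centered part plus $\bar c_F(\one,\one)$. Applying the two scalings to these two parts gives the coordinate formula $(w_F)_i=((c_F)_i-\bar c_F)/\tau_c+\bar c_F/\tau_0$ at once.

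For the coordinate bound, I would use the explicit form $c_F^{\rm L}(a)=l_a[\sum_{v\in I_l}{\rm ac}_{I_r}(a-v)-b_0^3/p]$. Each autocorrelation is at most $b_0$, so every coordinate of $c_F$ has absolute value at most roughly $b_0^2\le(p+1)^2/4$, and $\bar c_F\le\|\mat{T}^\top g\|^2/p\le b_0^2\cdot$const$/p\cdot p$. Then I divide by $\tau_c>p^{3/2}/32$ and $\tau_0\ge p^2/24$, both from Lemma~\ref{lem:bse-cR}(ii). The crude quotient $b_0^2\cdot32/p^{3/2}$ is of order $8\sqrt p$, which is too large. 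So the main obstacle is to sharpen either the numerator or $\tau_c$ to reach $\tfrac83\sqrt p$. I would first try exact centering: $(c_F)_i-\bar c_F$ lies between $-\bar c_F$ and $\max c_F-\bar c_F$, with $\max c_F\lesssim b_0^3-b_0^3/p$ over… For this I would use the sharper variance estimate $\tau_c^2\ge\tfrac{2}{p-1}\|\mat{P}f_{\rm coh}\|^2$ together with the exact coherent value $\|\mat{P}f_{\rm coh}\|^2\approx p^5/960\cdot$const, rather than the rounded constant $1/1024$. The gap between $p^2/8$-type numerators and $\tau_c\approx p^{3/2}/\sqrt{480}$ is what should produce the constant $8/3$. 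I would also check that the single-coordinate maximum is at most about $p^2/12$, since the coherent profile $a(n_0-a)$ peaks at $n_0^2/4\approx p^2/16$. The bound $\bar c_F/\tau_0\le1$ follows from $\bar c_F\le p^2/24$ after bounding $\|\mat{T}^\top g\|^2\le\|g\|^2\le b_0^2$.

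For $G_F$, note that $\bx_x^\top w_F=(w_F)_a+(w_F)_{p+b}$, and the centered parts contribute $[(c_F^{\rm L})_a+(c_F^{\rm R})_b-2\bar c_F]/\tau_c$. I would bound this by the same profile estimate divided by $\sqrt{2p-1}$: twice $\tfrac83\sqrt p$ over $\sqrt{2p}$ is about $3.8$, plus $2/(\tau_0\sqrt{2p})$ worth of the constant term, which stays below $4$.

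The last two claims are short. $\mat{B}_F=\rho^2\mat{\Phi}_F$, so $\mat{B}_F^\top\mat{B}_F=\rho^4\mat{\Sigma}_F$, whose range is $E_{\rm vis}$. This gives $\mat{B}_F\mat{J}_{\rm vis}\mat{J}_{\rm vis}^\top=\mat{B}_F$, and the smallest singular value on $E_{\rm vis}$ is $\rho^2\min(\tau_c,\tau_0)>\rho^2p^{3/2}/32=\sigma_p$. Finally, $\cN_{\rm pol}$ is invariant under positive scaling, so $\cN_{\rm pol}(\mat{B}_F)=\cN_{\rm pol}(\mat{\Phi}_F)=\mat{U}_F$.
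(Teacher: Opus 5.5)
\textbf{Gap: the coordinate cap is not proved.} Several of your steps match the paper:
\begin{itemize}
\item the coordinate formula from Lemma~\ref{lem:bse-cR};
\item the identities $\mat{B}_F=\rho^2\mat{\Phi}_F$ and $\mat{B}_F=\mat{B}_F\mat{J}_{\rm vis}\mat{J}_{\rm vis}^\top$;
\item the floor $\rho^2\min(\tau_c,\tau_0)>\sigma_p$ and the Polar identity;
\item the reduction of $G_F<4$ to the coordinate cap. Keep the divisor $\sqrt{2p-1}$ there, since at $p=64$ the bound is $134/(3\sqrt{127})\approx3.96$.
\end{itemize}
The gap is the cap $|(w_F)_i|<\tfrac83\sqrt p+1$ itself. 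You flag it as the obstacle, but neither suggested repair closes it.

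\emph{Sharpening $\tau_c$.} The coherent slope event gains almost nothing. The centered coherent energy is $\|\mat{P}f_{\rm coh}\|^2=p^5(1/960-1/2304)+O(p^4)$, not about $p^5/960$, so that event only gives $\tau_c\gtrsim p^{3/2}/28.7$. Even your optimistic $\tau_c\approx p^{3/2}/\sqrt{480}$ with a numerator $p^2/8$ gives $\sqrt{480}/8\approx2.74>8/3$. The numerator must be sharpened.

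\emph{The coherent profile.} The profile $a(n_0-a)$ describes only neurons with $u'=\pm u$, and those have $c_F^{\rm L}\ge0$. The cap is needed for every neuron, and the extreme case producing $p^2/12$ lies on the negative side, which coherent neurons never reach.

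\emph{The bound on $\bar c_F$.} Your inequality $\|\mat{T}^\top g\|^2\le\|g\|^2$ is false, since $\mat{T}^\top\mat{T}=p\mat{P}$. A coherent neuron has $\|\mat{T}^\top g\|^2=E_{\rm coh}\approx p^3/48\gg b_0^2$, so your bound $\bar c_F/\tau_0\le1$ is unsupported.

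\textbf{Missing idea: a counting argument valid for every neuron.}
\begin{enumerate}
\item For an active $a\in I_l$, the $b_0$ shifts $a-v$, $v\in I_l$, are distinct and include $0$, where ${\rm ac}_{I_r}=b_0$.
\item The nonzero-shift autocorrelations of any affine image of $I_0$ form the multiset $\{1,1,2,2,\ldots,b_0-1,b_0-1\}$.
\item Hence $\sum_{v\in I_l}{\rm ac}_{I_r}(a-v)$ is $b_0$ plus a sum of $b_0-1$ members of this multiset. It lies between the sums of its $b_0-1$ smallest and $b_0-1$ largest members.
\item Subtracting $b_0^3/p$ gives $-p^2/16+p/4-1/16-1/(8p)\le(c_F)_a\le p^2/16+1/16-1/(8p)$. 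Inactive coordinates vanish.
\item For the mean, $p\bar c_F=\|l*r\|^2-b_0^4/p$, with $\|l*r\|^2=\sum_t{\rm ac}_{I_l}(t)\,{\rm ac}_{I_r}(t)$. Both autocorrelation vectors are permutations of ${\rm ac}_{I_0}$, so Cauchy--Schwarz gives $\|l*r\|^2\le\sum_t{\rm ac}_{I_0}(t)^2$. Hence $0\le\bar c_F\le E_{\rm coh}/p\le p^2/48+1/24$.
\item Together, $|(c_F)_i-\bar c_F|\le p^2/12$ for every neuron and coordinate; the extreme case is $-p^2/16-p^2/48$.
\end{enumerate}
Dividing by $\tau_c>p^{3/2}/32$ gives exactly $\tfrac83\sqrt p$. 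The bound $\bar c_F/\tau_0\le\tfrac12+p^{-2}<1$ supplies the $+1$, and $G_F<4$ then follows as you planned.
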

\begin{proof}
Lemma~\ref{lem:bse-cR} gives
$(\mat{\Sigma}_F^+)^{1/2}=\mat{\Pi}_\circ/\tau_c+\mathbf u_0\mathbf u_0^\top/\tau_0$;
the two blocks of $c_F$ have mean $\bar c_F$, proving the formula.
For an active left coordinate,
$(c_F)_a=\sum_{v\in I_l}{\rm ac}_{I_r}(a-v)-b_0^3/p$.
The nonzero-shift autocorrelations of an affine half interval are the
multiset $\{1,1,\ldots,b_0-1,b_0-1\}$. Summing its $b_0-1$ smallest or
largest elements gives
\[
-p^2/16+p/4-1/16-1/(8p)\le(c_F)_a
 \le p^2/16+1/16-1/(8p).
\]
Inactive coordinates vanish. Cauchy--Schwarz applied to the two
autocorrelation vectors gives
$0\le p\bar c_F\le\sum_t{\rm ac}_{I_0}(t)^2-b_0^4/p=E_{\rm coh}$,
hence $0\le\bar c_F\le p^2/48+1/24$. Thus
$|(c_F)_i-\bar c_F|\le p^2/12$; the bounds
$\tau_c>p^{3/2}/32$, $\tau_0\ge p^2/24$ prove the coordinate cap.
Consequently
$G_F<(16/3+2/\sqrt p)/\sqrt{2-1/p}\le134/(3\sqrt{127})<4$.
Finally every force row lies in $E_{\rm vis}$, and
$\mat{B}_F^\top \mat{B}_F=\rho^4\mat{\Sigma}_F$ has restricted eigenvalues
$\rho^4\tau_c^2,\rho^4\tau_0^2$. Its thin singular decomposition proves
both the floor and the asserted Polar identity.
\end{proof}

In the following matrix lemma, all matrix names, SVD factors and
eigenvalues are freshly quantified objects of compatible dimensions.
They are unrelated to construction quantities denoted by the same letters.

\begin{lemma}[Anchored Polar estimate]\label{lem:bfr-polar}
If $\mat{B}^\top \mat{B}\succeq\sigma^2\Id_r$, $\sigma>0$, for $\mat{A},\mat{B}\in\R^{N\times r}$ and
$\epsilon=\norm{\mat{A}-\mat{B}}_F$, then
\[
\norm{\cN_{\rm pol}(\mat{A})-\cN_{\rm pol}(\mat{B})}_F
\le\frac{2\epsilon}{\sqrt r\sigma},\qquad
r-\rank \mat{A}\le\frac{\epsilon^2}{\sigma^2}.
\]
For matrices with rows in $E_{\rm vis}$, replace $r$ by $r_v$ and
$\mat{B}$ by $\mat{B}\mat{J}_{\rm vis}$ in the floor. In particular $\mat{A}\ne0$ when
$\epsilon<\sqrt r\sigma$.
\end{lemma}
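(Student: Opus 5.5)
The Polar bound can be proved with the same device as the Polar-distance argument of Lemma~\ref{lem:afr-den}(iii): a trace inequality comparing two isometries weighted by their positive parts. The rank bound then follows from Weyl's inequality. Write thin decompositions $\mat{A}=\mat{Q}_A\mat{T}_A$ and $\mat{B}=\mat{Q}_B\mat{T}_B$. Here $\mat{T}_B=(\mat{B}^\top\mat{B})^{1/2}\succeq\sigma\Id_r$, so $\mat{Q}_B=\pol(\mat{B})$ is an $N\times r$ isometry and $\cN_{\rm pol}(\mat{B})=\mat{Q}_B/\sqrt r$.

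\emph{Rank step.} Weyl's inequality gives $\sigma_i(\mat{A})\ge\sigma_i(\mat{B})-\sigma_1(\mat{A}-\mat{B})$. A sharper count uses the Frobenius norm. Every right singular direction $v$ of $\mat{A}$ with zero singular value satisfies $\norm{(\mat{A}-\mat{B})v}=\norm{\mat{B}v}\ge\sigma$. If $\ker\mat{A}$ has dimension $k$, summing over an orthonormal basis of $\ker\mat{A}$ gives $k\sigma^2\le\norm{\mat{A}-\mat{B}}_F^2=\epsilon^2$. Hence $r-\rank\mat{A}=k\le\epsilon^2/\sigma^2$, and if $\epsilon<\sqrt r\sigma$ then $k<r$, so $\mat{A}\ne0$.

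\emph{Polar step.} I would split into two cases.

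In the first case, $\mat{A}$ has full rank $r$. Then $\mat{Q}_A$ is also an isometry, and the identity from Lemma~\ref{lem:afr-den}(iii) gives
\[
2\langle\mat{A}-\mat{B},\mat{Q}_A-\mat{Q}_B\rangle_F\ge(\sigma_{\min}(\mat{A})+\sigma_{\min}(\mat{B}))\norm{\mat{Q}_A-\mat{Q}_B}_F^2\ge\sigma\norm{\mat{Q}_A-\mat{Q}_B}_F^2 .
\]
The last inequality uses only $\sigma_{\min}(\mat{B})\ge\sigma$ and $\sigma_{\min}(\mat{A})\ge0$. Cauchy--Schwarz then gives $\norm{\mat{Q}_A-\mat{Q}_B}_F\le2\epsilon/\sigma$. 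Dividing by $\sqrt r$ yields the claim, since both normalizations use the same $\sqrt r$.

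In the second case, $\mat{A}$ is rank deficient, with $\rank\mat{A}=r-k$ and $k\ge1$. The normalization constants now differ. I would avoid a direct comparison and use a trivial bound: $\norm{\cN_{\rm pol}(\mat{A})-\cN_{\rm pol}(\mat{B})}_F\le\norm{\cN_{\rm pol}(\mat{A})}_F+\norm{\cN_{\rm pol}(\mat{B})}_F=2$. The rank step gives $\epsilon\ge\sqrt k\sigma\ge\sigma$, so $2\epsilon/(\sqrt r\sigma)\ge2/\sqrt r$. This falls short of $2$ by a factor $\sqrt r$.

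This second case is the main obstacle. I would refine it as follows. Let $\mat{P}$ be the projector onto $\ker\mat{A}^{\perp}$ (row space), and apply the trace inequality on that subspace to compare $\pol(\mat{A})$ with $\mat{Q}_B\mat{P}$. The discarded part $\mat{Q}_B(\Id-\mat{P})$ has Frobenius norm $\sqrt k\le\epsilon/\sigma$. The change of normalization from $\sqrt{r-k}$ to $\sqrt r$ contributes about $\sqrt{r-k}\,(1-\sqrt{(r-k)/r})\le k/\sqrt r$. I would then bound this by $\sqrt k\cdot\epsilon/(\sigma\sqrt r)$, using $k\le\epsilon^2/\sigma^2$. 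Combining the pieces should keep the total within a constant multiple of $\epsilon/(\sqrt r\sigma)$. If the constant $2$ turns out not to be attainable in this case, I would restate the lemma for the regime in which $\mat{A}$ has full rank, since the later uses only apply it near the anchor with $\epsilon$ small.

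\emph{Visible space.} For rows in $E_{\rm vis}$, I would apply everything to $\mat{A}\mat{J}_{\rm vis}$ and $\mat{B}\mat{J}_{\rm vis}$ with $r_v$ columns, and use $\pol(\mat{A})=\pol(\mat{A}\mat{J}_{\rm vis})\mat{J}_{\rm vis}^\top$.
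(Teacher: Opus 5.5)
\textbf{There is a genuine gap in the rank-deficient case.} Your rank step, your full-rank case and the visible-space reduction are correct and agree with the paper. The rank-deficient case, which you yourself flag as the obstacle, is left unproved. The trivial bound loses a factor $\sqrt r$, and the refinement only ``should'' give some constant. Its key step also fails as written: $\mat{Q}_B\mat{P}$ is not the polar factor of $\mat{B}\mat{P}=\mat{Q}_B\mat{T}_B\mat{P}$ unless $\mat{P}$ commutes with $\mat{T}_B$. So the trace inequality of Lemma~\ref{lem:afr-den}(iii) cannot be applied to that pair, and the leftover $\mat{Q}_B(\Id-\mat{P})\mat{T}_B\mat{P}$ is not controlled by $\sigma$.

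Your fallback, restricting to full-rank $\mat{A}$, does not cover the paper's use of this lemma. There it is applied to $\mat{A}=\mat{F}_j$ explicitly ``without an actual-rank premise'', and full rank of $\mat{F}_j$ is never established. The rank bound forces full rank only when $\epsilon<\sigma$, i.e.\ $\delta_j^F<r_v^{-1/2}$. The comparison only yields $\delta_j^F\le C(\phi+Ad_j)$ with $\phi\ge p^{-1/4}$, and the paper later uses only $\rank\mat{F}_j\ge r_v[1-(\delta_j^F)^2]$.

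\textbf{The missing idea: no case split is needed.} Let $j=\rank\mat{A}\ge1$ (for $j=0$ the gap is $1$ and $\epsilon\ge\sqrt r\sigma$), and let $D=\norm{\mat{Q}_B-\mat{Q}_A}_F$ for the unnormalized supported polar factors. For every rank, $\ip{\mat{A}}{\mat{Q}_A-\mat{Q}_B}\ge0$, since $\ip{\mat{A}}{\mat{Q}_A}$ is the nuclear norm and $\norm{\mat{Q}_B}_{\rm op}\le1$. This replaces the $\sigma_{\min}(\mat{T}_A)$ term you discard. Writing $\mat{B}=\mat{Q}_B\mat{H}$ with $\mat{H}\succeq\sigma\Id$ then gives
\[
\epsilon D\ge\ip{\mat{B}-\mat{A}}{\mat{Q}_B-\mat{Q}_A}\ge\op{tr}\bigl(\mat{H}(\Id-\op{sym}(\mat{Q}_B^\top\mat{Q}_A))\bigr)\ge\tfrac\sigma2\,(D^2+r-j).
\]
The extra $r-j$ is exactly what pays for the mismatch between the normalizations $\sqrt j$ and $\sqrt r$. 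Put $a=\sqrt r\ge b=\sqrt j$. Since $D\le a+b$, we have $D^2+a^2-b^2\ge2aD^2/(a+b)$, hence $D\le\epsilon(a+b)/(\sigma a)$. For $d=\norm{\mat{Q}_B/a-\mat{Q}_A/b}_F\le2$, the identity $4D^2-(a+b)^2d^2=(a-b)^2(4-d^2)$ gives $d\le2D/(a+b)\le2\epsilon/(\sqrt r\sigma)$. This is the stated constant in every rank.

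Once you have $D\le2\epsilon/\sigma$ from this inequality, even your triangle split closes, but only with constant $3$: the normalization change costs $1-\sqrt{j/r}\le(r-j)/r\le\epsilon/(\sigma\sqrt r)$. So the sharp identity is needed for the stated $2$.
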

\begin{proof}
Let $\mat{Q}_A,\mat{Q}_B$ be the unnormalized supported polar factors and $j=\rank \mat{A}$.
Projection onto $\ker \mat{A}$ gives $(r-j)\sigma^2\le\epsilon^2$.
If $j=0$, then $\epsilon\ge\sqrt r\sigma$ and the direction gap is one.
Otherwise put
$a=\sqrt r$, $b=\sqrt j$, $D=\norm{\mat{Q}_B-\mat{Q}_A}_F$ and
$d=\norm{\mat{Q}_B/a-\mat{Q}_A/b}_F\le2$.
The SVD gives $\ip {\mat{A}}{\mat{Q}_A-\mat{Q}_B}\ge0$. Writing $\mat{B}=\mat{Q}_B\mat{H}$, use
$\mat{H}\succeq\sigma \Id$ and $\Id-\op{sym}(\mat{Q}_B^\top \mat{Q}_A)\succeq0$ to obtain
\[
\epsilon D\ge\ip{\mat{B}-\mat{A}}{\mat{Q}_B-\mat{Q}_A}
\ge\tfrac\sigma2(D^2+a^2-b^2)
\ge\frac{\sigma aD^2}{a+b},
\]
where the last step uses $D\le a+b$ and $a\ge b$.
If $D=0$ the claim is immediate; otherwise combine this with
$4D^2-(a+b)^2d^2=(a-b)^2(4-d^2)\ge0$ to get
$d\le2D/(a+b)\le2\epsilon/(\sigma a)$.
Right multiplication by $\mat{J}_{\rm vis}$ preserves rank, nonzero singular
values and Frobenius norms, proving the visible-space assertion.
\end{proof}

The Fourier characters are
$\chi_\omega(x)=p^{-1}e^{2\pi i\omega\cdot x/p}
=\psi_{\omega_1,\omega_2}(x)$ from~\eqref{paper:fourier-main}.
Write $\mathsf L_R=\{(k,k):k\ne0\}$ and
$\mat{P}_R=\mat{P}_{\mathsf L_R}=\Padd=\mat{T}\mat{T}^\top/p$ for the rule projector.
\begin{definition}[Reference kernels]\label{def:r62-kernels}
Set $\mat{F}_0=\bar {\mat{H}}_0/(hq)$ and
$(\mat{P}_F)_{x,(l,r)}=g_{(l,r)}(x)\bx_x^\top \mat{U}_F(l,r)^\top/q$;
write $\mat{F}(z)=\mat{F}_0+z\mat{P}_F$, $\mat{C}=\mat{P}_F\mat{F}_0^\top$ and $\mat{B}_2=\mat{P}_F\mat{P}_F^\top$.
Put $\mat{P}_0=\one\one^\top/p^2$, and let $\mat{P}_{\rm ax}$ be the sum of the
orthogonal projectors onto the two punctured Fourier axes. Set
$\mat{P}_{\rm mix}=\Id-\mat{P}_0-\mat{P}_{\rm ax}-\mat{P}_R$ and $\mat{P}_{\rm int}=\mat{P}_R+\mat{P}_{\rm mix}$.

The scalar coefficients and trace definitions are
\begin{gather*}
c_{\rm ax}=\frac{4q^2p(p-1)}{\sqrt{r_v}}
 \left[\frac{1-\mu}{\tau_c}+\frac\mu{\tau_0}\right],\quad
c_0=(p+1)c_{\rm ax},\quad k_w=\frac{c_{\rm ax}}{p+1},\quad \mat{C}_{\rm mix}=\mat{C}\mat{P}_{\rm mix},\\
R_R=\frac{\op{tr}(\mat{P}_R\mat{B}_2)}{p-1},\qquad
\lambda_R(z)=1+2c_Rz+R_Rz^2.
\end{gather*}
The identification of these coefficients with the kernel eigenvalues is proved
in Lemma~\ref{lem:bfr-kernel}.
\end{definition}

Use the reference tables of Definition~\ref{def:r62-kernels} and put
$\mat{H}_U=q\mat{P}_F$. For a punctured frequency line
$\mathsf L=\{k(u,v):k\ne0\}$ with $(u,v)\ne(0,0)$, let $\mat{P}_{\mathsf L}$
be the real orthogonal projector onto the corresponding character span.
For a diagonal subset mask $\mat{\mathsf M}$, put
\[
 \widehat{\mathsf M}(w)=p^{-1}\sum_x\mathsf M_{xx}e^{2\pi i w\cdot x/p}.
\]

\begin{lemma}[Cross kernel and leakage]\label{lem:bfr-kernel}\label{lem:bwc-leak}
For $p\ge64$, the symmetric matrices $\mat{C},\mat{B}_2$ act as scalars on
each invariant subspace projected onto by $\mat{P}_{\mathsf L}$ or $\mat{P}_0$, and
\begin{gather*}
\mat{C}=c_R\mat{P}_R+c_{\rm ax}\mat{P}_{\rm ax}+c_0\mat{P}_0+\mat{C}_{\rm mix},\quad
c_{\rm ax}\in(0,4),\quad \mat{H}_Uw=\mu^2k_w\one,\\
\norm{\mat{P}_F}_F,\ \norm{\mat{C}_{\rm mix}}_F\le\sqrt{2p}/q,
\qquad \norm{\mat{C}_{\rm mix}}_{\rm op}<8.
\end{gather*}
On the rule line, $\mat{B}_2$ has eigenvalue $R_R\ge c_R^2$, and
$\mat{F}(z)\mat{F}(z)^\top$ has eigenvalue
$\lambda_R(z)\ge(1+c_Rz)^2$.
For a subset mask $\mat{\mathsf M}$ with $n'$ selected points and any
$\mat{E}=\sum_{\mathsf L\ne\mathsf L_R}e_{\mathsf L}\mat{P}_{\mathsf L}$,
\[
\norm{\mat{E}\mat{\mathsf M}\mat{T}}_F^2\le\frac{n'}{p(p-1)}\norm {\mat{E}}_F^2.
\]
\end{lemma}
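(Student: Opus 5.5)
The plan is to compute everything in the grid Fourier basis $\{\chi_\omega\}$ and to use the symmetries of the dictionary. The dictionary average $\E_{l,r}$ is invariant under independent translations of the two masks. It is also invariant under simultaneous dilation $(a,b)\mapsto(\sigma a,\sigma b)$, which acts on the neuron index by $u\mapsto\sigma u,u'\mapsto\sigma u'$. Both $\mat{F}_0$ and $\mat{P}_F$ are equivariant under these actions. For $\mat{P}_F$ this holds because $c_F$, and hence the supported Polar factor $\mat{U}_F$ (Lemma~\ref{lem:bfr-cap}), transforms covariantly. Hence $\mat{C}=\mat{P}_F\mat{F}_0^\top$ and $\mat{B}_2=\mat{P}_F\mat{P}_F^\top$, which are grid-by-grid matrices obtained by contracting over neurons, commute with the grid translations $x\mapsto x+(s,t)$. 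They are therefore diagonal in the characters $\chi_\omega$. Dilation invariance makes the eigenvalue constant along each punctured line $\{k\omega\}$. Since the kernels are real, they act as scalars on each real span $\mat{P}_{\mathsf L}$ and on $\mat{P}_0$. I will prove the symmetry of $\mat{C}$ directly, by checking that $\chi_\omega^*\mat{C}\chi_\omega$ is real: conjugation $\omega\mapsto-\omega$ lies on the same line.

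Next I evaluate the eigenvalues. On the rule line, $\mat{T}^\top\mat{F}_0^\top$ and $\mat{T}^\top\mat{P}_F^\top$ reduce to the force. Indeed $\mat{T}^\top\diag(g)\mat{X}\,\mat{U}_F(l,r)^\top$ contracted against $g$ gives $c_F^\top\mat{U}_F^\top$. Using $\mat{U}_F=\mat{\Phi}_F(\mat{\Sigma}_F^+)^{1/2}/\sqrt{r_v}$ and $\op{tr}(\mat{T}^\top\mat{C}\mat{T})$, I obtain $\op{tr}\mat{\Sigma}_F^{1/2}/\sqrt{r_v}$ divided by $q^2p(p-1)$, which is exactly $c_R$ by~\eqref{eq:bse-cR}. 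On the axes, $\chi_{(k,0)}$ only sees the left coordinate profile. The eigenvalue comes from $\E[(l_a-\mu)\,(w_F)_a]$, and the formula for $(w_F)_i$ in Lemma~\ref{lem:bfr-cap} splits it into the centered and mean parts. This produces the stated $c_{\rm ax}$ with the weights $(1-\mu)/\tau_c$ and $\mu/\tau_0$. Positivity and the bound $c_{\rm ax}<4$ follow from $\tau_c>p^{3/2}/32$, $\tau_0\ge p^2/24$ and $q<1/3$. The constant mode is handled the same way, giving the factor $p+1$. The identity $\mat{H}_Uw=\mu^2k_w\one$ is the row sum of $\mat{P}_F$, namely $\E[g_x\bx_x^\top w_F]/\sqrt{r_v}$, which is constant in $x$ by translation invariance. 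Everything left over on the remaining lines is collected into $\mat{C}_{\rm mix}$.

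For the norm bounds, each row of $\mat{P}_F$ is supported where $g=1$, and $\norm{\mat{U}_F}_F=1$. So
$\norm{\mat{P}_F}_F^2\le q^{-2}\sum_{l,r}\norm{\mat{U}_F(l,r)}^2\max\norm{\bx}^2$, which gives $\sqrt{2p}/q$. The bound $\norm{\mat{C}_{\rm mix}}_F\le\norm{\mat{C}}_F$ holds on the orthogonal complement of the eigenspaces already identified. I will bound $\norm{\mat{C}}$ through $\norm{\mat{F}_0}_{\rm op}=\sqrt{\mat{B}_0}$-scale times $\norm{\mat{P}_F}$. The operator bound $<8$ is the main obstacle. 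It needs a uniform bound on the mixed-line eigenvalues, and I plan to use Cauchy--Schwarz per line: $|c_{\mathsf L}|^2\le\lambda_{\mathsf L}(\mat{F}_0\mat{F}_0^\top)\,\lambda_{\mathsf L}(\mat{B}_2)$. The first factor is at most $O(1)$ off the axes, by Lemma~\ref{lem:bse-dict}(ii) with $\mat{B}_0=\Id$ on mixed lines. The second factor I will bound through $G_F<4$ together with the mask-overlap structure.

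For the rule-line statements, $R_R\ge c_R^2$ is Cauchy--Schwarz: $c_R^2=|\chi^*\mat{P}_F\mat{F}_0^\top\chi|^2\le\lambda(\mat{B}_2)\lambda(\mat{F}_0\mat{F}_0^\top)$, and $\mat{F}_0\mat{F}_0^\top=\mat{B}_0$ has eigenvalue $1$ on $\mathsf L_R$ by Lemma~\ref{lem:bse-dict}(iv). Expanding $\mat{F}(z)\mat{F}(z)^\top=\mat{F}_0\mat{F}_0^\top+z(\mat{C}+\mat{C}^\top)+z^2\mat{B}_2$ then gives $\lambda_R(z)\ge(1+c_Rz)^2$. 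For the leakage inequality, $\mat{E}$ vanishes on $\mathsf L_R$, so only the off-rule Fourier coefficients of $\mat{\mathsf M}\mat{T}$ matter. Each column of $\mat{T}$ is a rule-line function multiplied by the mask. By Parseval, its coefficient on the character $\omega$ is a convolution of $\widehat{\mathsf M}$ with the rule line. Summing over columns, $\sum_c|\langle\chi_\omega,\mat{\mathsf M}T_c\rangle|^2=\sum_{k\ne0}|\widehat{\mathsf M}(\omega-(k,k))|^2\cdot p^{-1}$-type terms. Summing over $\omega$ within a line and then using $\sum_w|\widehat{\mathsf M}(w)|^2=n'$ bounds each line's total weight by $n'/(p(p-1))$ times the line dimension. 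This gives $\norm{\mat{E}\mat{\mathsf M}\mat{T}}_F^2\le\frac{n'}{p(p-1)}\sum_{\mathsf L}|e_{\mathsf L}|^2(p-1)=\frac{n'}{p(p-1)}\norm{\mat{E}}_F^2$.
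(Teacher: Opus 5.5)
Your overall architecture matches the paper's: translation/dilation equivariance to diagonalize $\mat{C}$ and $\mat{B}_2$ on frequency lines, the trace identity $\operatorname{tr}(\mat{T}^\top\mat{C}\mat{T})=\operatorname{tr}\mat{\Sigma}_F^{1/2}/(q^2\sqrt{r_v})$ for $c_R$, Cauchy--Schwarz for $R_R\ge c_R^2$, and Parseval for the leakage. The norm part, however, contains steps that fail as written.

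First, the factor $2p$ in $\norm{\mat{P}_F}_F\le\sqrt{2p}/q$ is $\norm{\mat{X}}_{\rm op}^2$, not a maximum of $\norm{\bx_x}^2=2$. You need $\norm{\mat{P}_F}_F\le\norm{\mat{X}\mat{U}_F^\top}_F/q\le\norm{\mat{X}}_{\rm op}\norm{\mat{U}_F}_F/q$, using $0\le g\le 1$ entrywise. Per-input bounds summed over the $b_0^2$ active inputs of each neuron only give $\sqrt2\,b_0/q$.

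Second, your route $\norm{\mat{C}_{\rm mix}}_F\le\norm{\mat{C}}_F\le\norm{\mat{F}_0}_{\rm op}\norm{\mat{P}_F}_F$ loses the factor $\norm{\mat{F}_0}_{\rm op}=p+1$, which comes from the constant mode of $\mat{B}_0$. It cannot be repaired at the level of $\norm{\mat{C}}_F$, since $\norm{\mat{C}}_F\ge c_0=(p+1)c_{\rm ax}$ is of order $p$. You have to restrict before estimating. Write $\mat{C}_{\rm mix}=\mat{P}_F(\mat{F}_0^\top\mat{P}_{\rm mix})$. Then $\mat{P}_{\rm mix}\mat{F}_0\mat{F}_0^\top\mat{P}_{\rm mix}=\mat{P}_{\rm mix}\mat{B}_0\mat{P}_{\rm mix}=\mat{P}_{\rm mix}$, because $\mat{O}_{\rm row}$, $\mat{O}_{\rm col}$ and $\one\one^\top$ annihilate every frequency off the two axes and the constant mode. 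Hence $\norm{\mat{F}_0^\top\mat{P}_{\rm mix}}_{\rm op}=1$ and $\norm{\mat{C}_{\rm mix}}_F\le\norm{\mat{P}_F}_F$.

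You leave the operator bound $\norm{\mat{C}_{\rm mix}}_{\rm op}<8$ open, but it needs no further estimate on $\mat{B}_2$ through $G_F$ and mask overlaps. An entrywise bound from $G_F<4$ by itself only gives $|\mat{B}_2|\le 16\mat{B}_0$ entrywise, with row sums $16(p+1)^2$. The missing idea is dimension counting. You have already shown that $\mat{C}_{\rm mix}$ is a scalar $c_{\mathsf L}$ on each mixed line, a space of dimension $p-1$. So $\norm{\mat{C}_{\rm mix}}_{\rm op}^2=\max_{\mathsf L}c_{\mathsf L}^2\le\norm{\mat{C}_{\rm mix}}_F^2/(p-1)\le 2p/(q^2(p-1))<64$. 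Your per-line Cauchy--Schwarz also closes, with $\lambda_{\mathsf L}(\mat{B}_2)\le\operatorname{tr}\mat{B}_2/(p-1)=\norm{\mat{P}_F}_F^2/(p-1)$.

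Two smaller points:
\begin{itemize}
\item The input $q<1/3$ does not yield $c_{\rm ax}<4$. The bound is $4q^2(16+24/\sqrt p)/\sqrt{2-1/p}$, which is about $6$ at $q=1/3$. You need $q=(p+1)/(4p)\le 65/256$ together with $1-\mu=(p-1)/(2p)\le 1/2$.
\item In the leakage step, state why the sum over $\omega$ in a line uses the Parseval mass $n'$ only once. For a generator $u$ not proportional to $(1,1)$, the map $(k,l)\mapsto(l,l)-ku$ with $k,l\ne0$ is injective, which gives $\norm{\mat{P}_{\mathsf L}\mat{\mathsf M}\mat{P}_R}_F^2\le n'/p^2$.
\end{itemize}
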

\begin{proof}
The entries of $\mat{C}$ and $\mat{B}_2$ are respectively
$\expct{gg'\bx_x^\top w_F}/(q^2\sqrt{r_v})$ and
$\expct{gg'\bx_x^\top w_F\bx_{x'}^\top w_F}/(q^2r_v)$.
Translation of either mask translates its force block; simultaneous
dilation and exchange of the masks dilate and exchange the blocks.
The dictionary law is invariant under these operations. Thus both
kernels are translation invariant, symmetric (use dilation by $-1$),
and constant on frequency lines, with equal eigenvalues on the axes.
The rule eigenvalue follows from
\[
\op{tr}(\mat{T}^\top \mat{C}\mat{T})
=\frac{\expct{c_F^\top(\mat{\Sigma}_F^+)^{1/2}c_F}}{q^2\sqrt{r_v}}
=c_Rp(p-1).
\]
On active entries $\mat{P}_{\rm ax}g=2\mu(1-\mu)$ and
$(l,r)^\top w_F=2p\bar c_F[(1-\mu)/\tau_c+\mu/\tau_0]$.
Using $\expct{p\bar c_F}=q^2p(p-1)$ in
$\op{tr}(\mat{P}_{\rm ax}\mat{C})=2(p-1)c_{\rm ax}$ gives the formula.
Its upper bound follows from
$c_{\rm ax}<4q^2(16+24/\sqrt p)/\sqrt{2-1/p}<3.5$ at $p\ge64$.
Averaging $\mat{H}_Uw$ and $\mat{C}\one$ over the grid gives the two constant-mode
identities. Also
$\norm{\mat{P}_F}_F\le\norm{\mat{X}\mat{U}_F^\top}_F/q\le\sqrt{2p}/q$.
Since $\mat{F}_0\mat{F}_0^\top=\Id$ on $\mat{P}_{\rm mix}$,
$\norm{\mat{C}_{\rm mix}}_F\le\norm{\mat{P}_F}_F$; each line has dimension $p-1$,
so its operator norm is at most $\sqrt{2p/(q^2(p-1))}<8$.
For a unit rule vector $v$, $\norm{\mat{F}_0^\top v}=1$ and
$c_R=\ip{\mat{P}_F^\top v}{\mat{F}_0^\top v}\le\sqrt{R_R}$.

For the mask Fourier coefficients defined above, Parseval gives
$\sum_w|\widehat{\mathsf M}(w)|^2=n'$. On a nonrule line with generator
$u$, the map $(k,l)\mapsto(l,l)-ku$ is injective for $k,l\ne0$.
Thus $\norm{\mat{P}_{\mathsf L}\mat{\mathsf M}\mat{P}_R}_F^2\le n'/p^2$.
Orthogonality of the line ranges and $\mat{T}\mat{T}^\top=p\mat{P}_R$ give the claim.
\end{proof}

\subsubsection{Uniform finite-horizon comparison}\label{sec:bfr-init}\label{sec:bfr-path}\label{sec:bfr-ind}\label{sec:bwc}\label{sec:bwc-ce}\label{sec:bwc-exact}
All theorem parameters, including $J$, are fixed before $p$ and the split.
The constants used for the exact initial ball and the complete refit
are fixed next, before any actual-state estimates.
\readability{The radius~\eqref{eq:r62ball} is chosen using the initial
refit's signed reserve~\eqref{eq:bwc-Emargin} and the perturbation budget
\eqref{eq:bwc-Pp}. Its positivity is checked at the start of
Lemma~\ref{lem:bfr-induction}; its protection of initial refit failure
is verified in Section~\ref{sec:bwc-radii}.}

\begin{definition}[Complete refit and reference-kernel budgets]\label{def:r62-refit}
For any feature table $\mat{U}$ with $\norm{\mat{U}_\Train}_F>0$, set
$q_U=\gamma\norm{\mat{U}_\Train}_F^2$,
$\mat{Y}_U=\mat{U}_\Test \mat{U}_\Train^\top(\mat{U}_\Train \mat{U}_\Train^\top+q_U\Id)^{-1}\mat{T}_\Train$ and
$E_{\rm obs}(\mat{U})=\|\mat{Y}_U-\mat{T}_\Test\|_F^2/N_\Test^2$.

For $0\le z\le1/5$, put
$\mat{B}(z)=\mat{F}(z)\mat{F}(z)^\top$ and $\theta_R(z)=\lambda_R(z)-1$. Define
\begin{gather*}
\mat{E}_z=\mat{P}_{\rm mix}(\mat{B}(z)-\Id)\mat{P}_{\rm mix},\\
a_{\min}=1/5,\quad q_{\min}=\kappa\rho/200,\\
C_{\max}=\max\left\{4/\rho,(1+4/\rho)\frac{169}{25q_{\min}}\right\}.
\end{gather*}

Put $\Gamma(z)=\rho\theta_R(z)/(1+q_{F(z)}+\rho\theta_R(z))$.
\begin{equation}\label{eq:bwc-Pp}
\begin{gathered}
\mathsf P_p(e)=\mathsf A_pe+\mathsf B_pe^2,\quad
\mathsf A_p=\frac{\sqrt p(1+3\sqrt{C_{\max}})}{2\sqrt{q_{\min}}N_\Test^-}
+\frac{2\sqrt{C_{\max}}\gamma(9/5)pN_+}{q_{\min}N_\Test^-},\quad
\mathsf B_p=\frac{\sqrt{C_{\max}}\gamma N_+}{2q_{\min}N_\Test^-}.
\end{gathered}
\end{equation}

Put $\lambda_{\rm ob}=1+4\gamma n$, let $\Lambda_1,\iota,\epsilon_{\rm pred}$ be those of
Definition~\ref{def:r62-shifted} at $c_\circ=\lambda_{\rm ob}$, and set
\begin{equation}\label{eq:bwc-Emargin}
\mu_{\rm obs0}=\frac2{\Lambda_1}-2|\iota|D_*-2\epsilon_{\rm pred}.
\end{equation}
For the Fourier transfer, put $A_0=1/(3p+1)$.
\end{definition}

\begin{definition}[Prescribed initial ball]\label{def:r62-ball}
Define, from the initial data and fixed parameters,
\begin{equation}\label{eq:r62ball}
r_{\rm new}=r_6:=\frac{\mu_{\rm obs0}}
 {2\sqrt2N_+\mathsf P_p(\sqrt{2p}/(hq))}.
\end{equation}

The corresponding initial region is
$\cR^{\rm random}_{p,\Train}=\{(\mat{W},\mat{V}):\|(\mat{W}-\bar {\mat{W}}_0,\mat{V}-\bar {\mat{V}}_0)\|_F<r_{\rm new}\}$.
Its positivity and eventual validity are proved in
Proposition~\ref{lem:sc26-b-head-interface}.
\end{definition}

\paragraph{Actual-state comparison notation.}
Write $d_j=\norm{\mat{W}_j-\mat{W}_j^{\rm ref}}_F/h$ for the normalized feature distance
and $r=\|(\mat{W}_0-\bar {\mat{W}}_0,\mat{V}_0-\bar {\mat{V}}_0)\|_F$ for the joint initial distance.
For this comparison put $\phi=p^{-1/4}+A\omega$ and
\[
\begin{aligned}
 \mat{U}&=(1-\xi)\mat{B}^{-1}(\mat{T}_\Train+\one b_{\rm base}^\top/A)+\xi\widehat {\mat{R}},\\
 s(g)&=\mat{U}^\top g_\Train+b_*/(4A),\qquad
 \mat{C}_T(g)=\mat{X}_\Train^\top\diag(g_\Train)\mat{T}_\Train.
\end{aligned}
\]
For each time $j$, set
\[
\begin{gathered}
 t_j=e^{-A\nu_j\xi},\qquad Q_j=(p-1)(\rho+(1-\rho)t_j)^2,\\
 \mathsf w_j=\frac{e^{-A}Q_j}{1+e^{-A}Q_j},\qquad
 \mathsf k_j=\frac{A\mathsf w_j}{hq^2n},\qquad
 \delta_j^F=\frac{\norm{\mat{F}_j-\mathsf k_j\mat{B}_F}_F}
 {\mathsf k_j\sigma_p\sqrt{r_v}}.
\end{gathered}
\]

\begin{lemma}[Two-output finite-horizon comparison]
\label{lem:bfr-src}\label{lem:bfr-refgates}\label{lem:bfr-refmargin}
\label{lem:bfr-induction}\label{lem:bwc-contrast}\label{lem:bwc-rank}
For all sufficiently large $p$, the train-only center is defined on
$\cE_{\rm split}$ and its prescribed radius satisfies $0<r_6<1$.
Initialize at joint distance $r<r_6$ from the center and use the feature
update in~\eqref{eq:r62optimizer} with any head sequence whose increments
have norm at most the pre-sampling budget $\eta=\theta/(hp)$.
Uniformly over these masks, initial states and head sequences, for every
fixed $\ell\ge0$,
\[
A^\ell\max_{j\le J}d_j\longrightarrow0,\qquad
A^\ell\max_{j<J}\delta_j^F\longrightarrow0.
\]
Moreover $L_j=L_*+o(1)$, $\Acc_\Test(j)\to0$ for $j<J$,
$\Acc_\Test(J)\to1$, and $\Acc_\Test(0)=0$ exactly for any fixed tie rule.
Every training margin satisfies, eventually,
\begin{equation}\label{eq:bwc-liminf}
(\mat{H}_j\mat{V}_j^\top)_{i,y_i}-(\mat{H}_j\mat{V}_j^\top)_{i,c}\ge\nu_jA/25
\quad(i\in\Train, c\ne y_i, 0\le j\le J).
\end{equation}
\end{lemma}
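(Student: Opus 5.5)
The proof will be an induction over $j=0,\dots,J$ that compares the actual state $(\mat{W}_j,\mat{V}_j)$ with the fixed-direction reference $\mat{W}_j^{\rm ref}=\nu_j(\bar{\mat{W}}_0+hz_j\mat{U}_F)$ of Lemma~\ref{lem:bfr-clock}(iv) and with the calibrated center head. The first step is initial-data bookkeeping on $\cE_{\rm split}$ for large $p$. Proposition~\ref{prop:bse-event}(iii) gives a proper split. Proposition~\ref{prop:bse-cal} gives the center and $e_0=O(\omega)$. Lemma~\ref{lem:bse-kernel} at $c_\circ=\lambda_{\rm ob}$ shows $\mu_{\rm obs0}\asymp 2/\Lambda_1\asymp p^{-1}$, since the error terms $|\iota|D_*$ and $\epsilon_{\rm pred}$ are $O(\omega/p)$. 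For the denominator of~\eqref{eq:r62ball}, $N_+\asymp p$, and $\mathsf P_p(\sqrt{2p}/(hq))$ is polynomial in $p$ because $\gamma\asymp p^{-2}$, $N_\Test^-\asymp p$, and $C_{\max}$, $q_{\min}$ are constants. Hence $0<r_6\le p^{-c}$ for some fixed $c>0$, and in particular $r_6<1$.

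Inductive hypothesis and source comparison. I will assume $d_i\le\varepsilon_i$ for $i\le j$, where $\varepsilon_i$ is superpolynomially small in $A$ (concretely, $O(p^{-1/4}\phi)$ after polynomial factors), and that $\norm{\mat{V}_i-\bar{\mat{V}}_0}_F\le r+i\eta$. From this I will derive the training logits. The reference feature table is $\mat{H}^{\rm ref}_j=\nu_jhq\,\mat{F}(z_j)$ on gates that agree with the center gates up to the $\alpha\beta/\sqrt m$ slack. The features move along $\mat{P}_F$, and $\bar{\mat{V}}_0$ reads that motion through $\mat{C}=\mat{P}_F\mat{F}_0^\top$. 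Lemma~\ref{lem:bfr-kernel} splits $\mat{C}$ into a rule eigenvalue $c_R$, axis and constant parts that cancel in contrasts or are absorbed by $b_*$ through $\mat{H}_Uw=\mu^2k_w\one$, and a mixed part bounded by Lemma~\ref{lem:bwc-leak}. With this, the reference margin becomes $\nu_j(1+kz_j)\cdot A[\,\cdot\,]$ plus a drift. The identity $\nu_j(1+kz_j)=1$ is exactly what keeps the leading confidence $A$ constant, and that gives $L_j=L_*+o(1)$ through the arm count $Q_\xi$ of Lemma~\ref{lem:bse-W}. Softmax residuals are then $\tau t_{ic}(1+o(1))$. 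Substituting them into~\eqref{eq:aind-sources} and using the sampled-force events $\cE_f,\cE_g$ (the constant-source part is removed by the zero-sum residual) gives $\norm{\mat{F}_j-\mathsf k_j\mat{B}_F}_F=o(\mathsf k_j\sigma_p\sqrt{r_v})$, which is the statement $\delta_j^F\to0$. Lemma~\ref{lem:bfr-polar} with the floor $\sigma_p$ of Lemma~\ref{lem:bfr-cap} then gives full visible rank and $\norm{\cN_{\rm pol}(\mat{F}_j)-\mat{U}_F}_F\le2\delta_j^F$. Consequently $d_{j+1}\le a\,d_j+(s/h)2\delta_j^F$, and over the fixed horizon $J$ the accumulated error stays $o(A^{-\ell})$ for every $\ell$, because $A=\log(p-1)+a_{\rm conf}$ grows only logarithmically while the errors decay like powers of $p$. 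The head drift $J\eta=J\theta/(hp)$ enters the margins as $O(\theta/p)$ times a feature norm, and this is negligible.

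Margins and accuracies. The training bound~\eqref{eq:bwc-liminf} follows from the center training margin $A(1-e_0)$, scaled by $\nu_j$ and reduced by the perturbations. I will use operator-norm estimates here rather than Frobenius ones so that the bound holds uniformly over all $i$ and $c$, with margin at least $\nu_jA/25$. For held-out inputs, the margin against rival $c$ is $A\nu_j[\xi\cdot(\text{const})-N_{xc}\xi\ldots]$. More precisely, its sign is governed by $1-\nu_j(1+2\xi)$ from Lemma~\ref{lem:bfr-clock}(iii): this is $\le-g_-<0$ for $j\le J-1$ and $\ge g_+>0$ at $j=J$. Multiplying by $A$ and adding $o(1)$ errors yields $\Acc_\Test(j)\to0$ for $j<J$ and $\Acc_\Test(J)\to1$. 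At $j=0$ the strict negative center margin of Proposition~\ref{prop:bse-cal}, which survives the ball perturbation since $r<r_6$, gives exactly zero accuracy.

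Main obstacle. The hardest step is the source comparison $\delta_j^F\to0$ uniformly in $j$. The actual gates and probabilities are perturbed, and the effective gradient must be shown to match $\mathsf k_j\mat{B}_F$ to relative precision $o(A^{-\ell})$. My first approach is to linearize the softmax around the reference probabilities $\tau t_{ic}$, control the resulting residual through the weighted counts $\mathsf W_c$ and $\Delta_W$, and handle gate flips by the uniform preactivation slack $2\alpha\beta/\sqrt m$ against $hd_j$.
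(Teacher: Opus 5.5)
Your architecture matches the paper's proof: the reference path $\mat{W}_j^{\rm ref}$; a force error $\delta_j^F$ fed through Lemma~\ref{lem:bfr-polar} into $d_{j+1}\le ad_j+(2s/h)\delta_j^F$; Frobenius control of the logits for $L_j$ and held-out accuracy; the clock signs of Lemma~\ref{lem:bfr-clock}(iii); and operator norms for the pointwise training margins.

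\textbf{Main gap: the training margin.} The genuine gap is this last step, which you describe as the center margin $A(1-e_0)$ ``reduced by the perturbations''. It is not a small perturbation.
\begin{itemize}
\item Pointwise, you cannot use the reference-margin decomposition. The leakage $\mat{\Delta}_j$ and the actual-versus-reference logit error are controlled only in Frobenius norm.
\item An operator-norm bound on $\mat{W}_j-\nu_j\bar{\mat{W}}_0$ cannot separate the favourable rule drift $\nu_jAkz_j$ from leakage. So the whole reference motion $\nu_jhz_j\mat{U}_F$, with $z_j$ up to $Z=9/50$, enters as an adverse term of size about $\nu_jA\sqrt2K_pz_j/(q\sqrt{r_v})$. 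Here $K_p$ is the contrast budget~\eqref{eq:bwc-Kp}.
\item This term is of order one.
\end{itemize}
The paper controls it with three ingredients your plan does not supply.
\begin{itemize}
\item The exact Gram identity for the mixed head coefficient $\mat{S}=(1-\xi)\mat{B}^{-1}\mat{T}_\Train+\xi\widehat{\mat{R}}$. It gives $K_p^2\le2p(\rho+\xi^2)+O(\sqrt{p\log p})$.
\item The rank part of Lemma~\ref{lem:bfr-polar}. It gives $\norm{\cN_{\rm pol}(\mat{F}_t)}_{\rm op}\le[r_v(1-(\delta_t^F)^2)]^{-1/2}$. Unrolling with $s\sum_{t<j}a^{-t-1}=hz_j$ then yields $\norm{\mat{W}_j-\nu_j\bar{\mat{W}}_0}_{\rm op}\le\nu_j[r+hz_j(r_v(1-\max_{t<J}(\delta_t^F)^2))^{-1/2}]$.
\item $\limsup\xi\le36/125$, from $\xi<\rho c_RZ/2$ and $c_R<4p/\sqrt{p^2-1}$.
\end{itemize}
With these, the squared limsup of the motion term is $32(9/50)^2[4/5+(36/125)^2]<(24/25)^2$. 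That is exactly where $1/25$ comes from. A head bound $\norm{\bar{\mat{V}}_0}_{\rm op}=O(A\sqrt p)$ with an unspecified constant gives no positive margin. As written, your plan has no route to~\eqref{eq:bwc-liminf}.

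\textbf{Smaller points.}
\begin{itemize}
\item \emph{Softmax residuals.} For $j\ge1$ the probabilities are not $\tau t_{ic}(1+o(1))$. The rival weights become $t_j=e^{-A\nu_j\xi}$, and $t_j/r_\xi=e^{A\xi(1-\nu_j)}$ is polynomially large in $p$. So $\cE_W$ and $\Delta_W$ do not apply: they concern $r_\xi$ and serve only to calibrate $b_*$. None of this is needed. The CE residual at the ideal margins satisfies $\norm{\mat{R}_i^*/\mathsf w_j-\mat{T}_i}=O(\omega)$ uniformly in the rival weights, using only $\cE_L$. Softmax Lipschitz bounds then give $\norm{\mat{R}-\mathsf w_j\mat{T}_\Train}_F/\sqrt n=O(A\omega+Ad_j+v+d_jv/\sqrt p)$, with $v=\norm{\mat{V}_j-\bar{\mat{V}}_0}_F$.
\item \emph{From residual to force error.} This needs $\norm{\bar{\mat{V}}_0}_{\rm op}=O(A\sqrt p)$. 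It also needs a separate entrywise head bound $O(Ap/\sqrt m)$ for gate flips across the slack $\nu_jh\vartheta_j/\sqrt m$.
\item \emph{Matching the anchor force to $\rho^2c_F$ via $\cE_f$.} The canonical coefficient is $\mat{B}^{-1}\mat{T}_\Train$, not $\mat{T}_\Train$. So you need a bound $O(\sqrt p)$ on $\mat{T}_\Train^\top\mat{B}^{-1}g_\Train-\mat{T}_\Train^\top f_\Train$.
\item \emph{Rank.} Lemma~\ref{lem:bfr-polar} does not give full visible rank. $\delta_j^F$ is only of order $p^{-1/4}$ up to powers of $A$, so the certified deficiency $r_v(\delta_j^F)^2$ is of order $\sqrt p$. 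For the same reason, $d_j$ is of order $\phi$ up to powers of $A$, not $p^{-1/4}\phi$.
\item \emph{Held-out accuracy.} Held-out errors are controlled only in Frobenius norm, and the clock gaps are only at least the small constant $(1-a_J)/2$. So $\Acc_\Test(j)\to0$ for $j<J$ and $\Acc_\Test(J)\to1$ follow by counting the rows whose logit error is at least a fixed multiple of $A$, not from pointwise $o(1)$ errors. Only at $j=0$, where the logit error has Frobenius norm $O(pAr)=o(A)$, is the bound pointwise; that gives $\Acc_\Test(0)=0$ exactly.
\end{itemize}
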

\begin{proof}
All orders and limits below are uniform over the family in the statement.
Their constants may depend only on the fixed theorem parameters.
\readability{The proof has three stages. The static and source estimates
control the calibrated head and the reference margins. The departing-state
estimate then closes a finite induction for $d_j$ and $\delta_j^F$.
Finally, the resulting logit error gives mean loss and held-out accuracy,
while the rank estimate gives the separate pointwise training bound
\eqref{eq:bwc-liminf}.}

\medskip\noindent\textbf{Static event, center and exact radius.}\par\smallskip
The event bounds give $t_*=O(\sqrt p)$,
$D_*,Q_*,K_*,D_H,D_L=O(p\omega)$,
$D_W=O(p\sqrt{\log p})$ and $t_f,t_g=O(p^{7/4})$.
Thus $n,n_\Test\asymp p^2$, $s_0,\mathsf H_{\min}\asymp p$ and
$Q_\xi-D_L,Q_j-\mathsf w_jD_L\gtrsim p$.
In particular $K_*\le\rho p/2$, $D_*<p/5$ and
$D_L<\rho^2(p-1)$ eventually, and every class has positive training count.
These facts precede any use of a calibrated inverse.
The count bounds give $\Delta_W=O(p^2\omega)$.
Proposition~\ref{prop:bse-cal}, using only these initial-data bounds
and the kernel identities, gives
$\norm{b_{\rm base}}_\infty,\norm{\alpha-\one}_\infty=O(\omega)$,
$B_{\rm cal},E_{\rm cal}=O(A\omega)$ and $e_0=O(\omega)<\xi$.
These certificates establish the prescribed center's signs independently
of any trajectory; take also $p\ge2048^2$.

Uniformly for $1\le c_\circ\le2$, the kernel formulas give
$\Lambda_1=\rho p+O(1)$, $|\iota|=O(p^{-2})$ and
$\epsilon_{\rm pred}=O(p^{-3/2}\log p)$.
Since $1<\lambda_{\rm ob}=1+\kappa n/[p(p-1)]<2$,
\[
p\mu_{\rm obs0}\longrightarrow2/\rho,\qquad
\mathsf A_p=O(p^{-1/2}),\quad \mathsf B_p=O(p^{-2}).
\]
Thus $r_6>0$. The first positive summand of $\mathsf A_p$,
$\mu_{\rm obs0}\le2/(\rho p)$, and
$N_\Test^-/N_+=\sqrt{1-\rho-D_*/p}$ give
\[
r_6\le\frac{hq\sqrt{q_{\min}(1-\rho)}}
 {\rho(1+3\sqrt{C_{\max}})p^2}=O(p^{-2})<1.
\]

\medskip\noindent\textbf{Canonical and centered source bounds.}\par\smallskip
Put
\[
\begin{aligned}
 d_{\rm atom}&=(\mu(l-\mu\one),\mu(r-\mu\one),\sqrt p\mu^2),\\
 e_{\rm can}&=\mat{T}_\Train^\top \mat{B}^{-1}g_\Train-\mat{T}_\Train^\top f_\Train.
\end{aligned}
\]
The atom coordinates give $g=f+\mat{\cZ} d_{\rm atom}$.
The additive inverse identity gives
\[
e_{\rm can}=\mat{\Psi}^\top\mat{\cA}^{-1}(\mat{\Upsilon} d_{\rm atom}-\mat{\cZ}_\Train^\top f_\Train).
\]
Here $\norm{\mat{\Psi}}_{\rm op}=O(\sqrt p)$,
$\norm{\mat{\Upsilon} d_{\rm atom}}=O(\sqrt p)$ and
$\norm{\mat{\cZ}_\Train^\top f_\Train}^2
\le t_*^2p[2\mu(1-\mu)+\mu^2(1-\mu)^2]=O(p^2)$.
Thus $\sup_g\norm{e_{\rm can}}=O(\sqrt p)$.
Also $\one^\top \mat{B}^{-1}g_\Train
=d_{\one}^\top\mat{\Upsilon}\mat{\cA}^{-1}\mat{\cZ}_\Train^\top g_\Train=O(1)$,
using $\norm{\mat{\Upsilon} d_{\one}}=\sqrt p/(p+2)$ and $\norm{g_\Train}\le b_0$.

\medskip\noindent\textbf{Calibrated coefficients and force comparison.}\par\smallskip
Set
\[
 v_g=\mat{T}_\Train^\top g_\Train-a_g\hat n,\qquad
 \mat{\cJ}_\alpha=(\Id-q_{\rm base}\one^\top)\diag(\alpha),\qquad
 \mat{E}=\widehat {\mat{R}}-\mat{T}_\Train.
\]
The centered coefficients give $\widehat {\mat{R}}^\top g_\Train=\mat{\cJ}_\alpha v_g$
and $\mat{C}_Tv_g=c_g-a_gc_1$. Since $\one^\top v_g=0$,
$\norm{\mat{\cJ}_\alpha-\mat{P}}_{\rm op}=O(\omega)$ implies
$\norm{(\mat{\cJ}_\alpha-\Id)v_g}\le O(\omega)\norm{v_g}$.
Dictionary covariance and $\norm{\hat n}\le2D_*\sqrt p$ give
\[
\norm{\mat{T}_\Train^\top g_\Train}_{L^2}^2
\le q^2(3n+4pD_*^2),\qquad
\norm{v_g}_{L^2}=O(p\sqrt{\log p}).
\]
Expanding $\mat{C}_Ts-\rho^2c_F$, the canonical and raw force errors
are bounded by $t_f+t_g=O(p^{7/4})$.
The remaining terms are $\mat{C}_T$ applied to
$(1-\xi)e_{\rm can}$, $\xi(\mat{\cJ}_\alpha-\Id)v_g$ and the two offsets.
Their $L^2$ norms are respectively $O(\sqrt p)$,
$O(\sqrt p\log p)$ and $O(\sqrt{\log p})$.
Since $\norm{\mat{C}_T}_{\rm op}\le2^{1/2}n_{\max}=O(p)$,
their total is $O(p^{3/2}\log p)=O(p^{7/4})$.

\medskip\noindent\textbf{One head Gram budget.}\par\smallskip
\label{def:r62-budgets}
Put $\mat{S}=(1-\xi)\mat{B}^{-1}\mat{T}_\Train+\xi\widehat {\mat{R}}$ and, in this paragraph,
$v=e_y-e_c$ for $y\ne c$.
The maps $\mat{I}_{\rm row}$ and $\mat{I}_{\rm col}$ send a train vector to its
sums over each first and second operand, respectively; $\mat{I}_{\rm op}$
denotes either map, with $\mathrm{op}\in\{\mathrm{row},\mathrm{col}\}$.
Define the strict-training contrast budget
\begin{equation}\label{eq:bwc-Kp}
 K_p=\max_{y\ne c}\frac{hq}{A}\norm{\bar {\mat{V}}_{0,y}-\bar {\mat{V}}_{0,c}}_2.
\end{equation}
The coefficient formulas give $\norm{s(g)}_\infty=O(p)$, since
$|(\mat{T}_\Train^\top f)_c|\le n_{\max}+n/p$ and
$|\widehat {\mat{R}}_c^\top g_\Train|\le(\max_d\alpha_d)n_{\max}=O(p)$.
Indeed $g\in\{0,1\}$, and the zero-sum residual column has positive mass
$n_c\alpha_c(1-q_{{\rm base},c})\le n_c\alpha_c$.
The canonical factorization gives
$\norm{\mat{\cJ}_\alpha^\top-\mat{P}}_{\rm op}=O(\omega)$.
The exact factorization $\widehat {\mat{R}}=\mat{Y}_\Train\mat{\cJ}_\alpha^\top$, balance and
the centered-mask event imply, for either operand map,
\[
\mat{I}_{\rm op}\widehat {\mat{R}}=\mat{P}\bar {\mat{M}}^{\rm op}\mat{\cJ}_\alpha^\top,
\qquad \norm{\mat{I}_{\rm op}\widehat {\mat{R}}}_{\rm op}=O(\sqrt p).
\]

For the mixed coefficient $\mat{S}$, the dictionary Gram and
$\widehat {\mat{R}}^\top\one=0$ give the exact identity
\begin{readabilityblock}
\begin{equation}\label{eq:readable-f-head-gram}
\begin{aligned}
\mat{S}^\top \mat{B}\mat{S}={}&(\mat{T}_\Train+\xi \mat{E})^\top(\mat{T}_\Train+\xi \mat{E})
+\xi^2\sum_{\mathrm{op}\in\{\mathrm{row},\mathrm{col}\}}
 (\mat{I}_{\rm op}\widehat {\mat{R}})^\top(\mat{I}_{\rm op}\widehat {\mat{R}})\\
&-(1-\xi)^2\mat{T}_\Train^\top(\Id-\mat{B}^{-1})\mat{T}_\Train.
\end{aligned}
\end{equation}
\end{readabilityblock}
The last term is negative semidefinite; the others have operator norm
$O(p)$, using $\norm {\mat{E}}_{\rm op}=O(\sqrt p\omega)$.
For $v=e_y-e_c$, every coordinate of $\mat{I}_{\rm op}\mat{T}_\Train v$ is in
$\{-1,0,1\}$. The preceding operand identity and count bounds therefore give
\[
\norm{\mat{I}_{\rm op}\widehat {\mat{R}}v}\le\sqrt p+O(\sqrt{\log p}),\qquad
\norm{(\mat{T}_\Train+\xi \mat{E})v}^2=2\rho p+O(\sqrt{p\log p}).
\]
Thus $v^\top \mat{S}^\top \mat{B}\mat{S}v\le2p(\rho+\xi^2)+O(\sqrt{p\log p})$.

Since $\mat{B}\succeq \Id+\one\one^\top$, the two head offsets satisfy
\[
\norm{\frac{hq}{A}\bar {\mat{V}}_0-\frac{\mat{S}^\top \mat{H}}{hq}}_{\rm op}
\le\frac{(1-\xi)\norm{b_{\rm base}}}{A}
+\frac{q\norm{b_*}}{A\mu^2}=O(\sqrt p\omega).
\]
For a contrast $v$, the same bound uses the two oscillations instead of
the vector norms and is $O(\omega)$. Since $\mat{H}\mat{H}^\top=h^2q^2\mat{B}$, these give
\begin{readabilityblock}
\begin{equation}\label{eq:readable-f-head-budgets}
\norm{\bar {\mat{V}}_0}_{\rm op}=O(A\sqrt p),\qquad
K_p^2\le2p(\rho+\xi^2)+O(\sqrt{p\log p}).
\end{equation}
The Gram identity~\eqref{eq:readable-f-head-gram} therefore supplies two
bounds: the head operator norm for the departing-state comparison, and
the contrast budget $K_p$ for the final strict-training argument.
\end{readabilityblock}
The center head column is $As(g)/(hq^2\sqrt m)$, so its entrywise bound
is $O(Ap/\sqrt m)$; this is separate from its operator norm.

\medskip\noindent\textbf{Reference features and their table error.}\par\smallskip
In this paragraph set $v=\mat{\cA}^{-1}\mat{\Upsilon} d_{\one}$.
The initial preactivation is $h/\sqrt m$ on an active entry and
at most $-h/\sqrt m$ otherwise. Lemma~\ref{lem:bfr-cap} bounds
the increment by $4hz/\sqrt m$. Thus for $0\le z\le1/5$ and $\nu>0$,
$\nu(\bar {\mat{W}}_0+hz\mat{U}_F)$ retains its initial signs with modulus at least
$\nu h(1-4z)/\sqrt m$, and its features are $\nu hq\mat{F}(z)$.
Since $\norm{\mat{F}_0}_{\rm op}=p+1$ and $\norm{\mat{P}_F}_F=O(\sqrt p)$,
$\norm{\mat{H}_j^{\rm ref}}_{\rm op}=O(p)$.
The coefficient errors satisfy the exact identities
\[
\mat{U}-\mat{T}_\Train=(1-\xi)\mat{\cZ}_\Train(vb_{\rm base}^\top/A-\mat{\cA}^{-1}\mat{\Psi})+\xi \mat{E}.
\]
The constant-mode identity is
\[
\one^\top \mat{U}=(1-\xi)[v^\top\mat{\Psi}+(\one^\top\mat{\cZ}_\Train v)b_{\rm base}^\top/A].
\]
Together these give $\norm{\mat{U}-\mat{T}_\Train}_F=O(p\omega)$,
$\norm{\mat{T}_\Train^\top \mat{U}/p-\rho \mat{P}}_{\rm op}=O(\omega)$ and
$\norm{\one^\top \mat{U}}=O(p^{-1}+\sqrt{\log p}/A)$.
Here use $\norm v=O(p^{-3/2})$, $0\le\one^\top\mat{\cZ}_\Train v<1$,
and the calibration's $\norm {\mat{E}}_{\rm op}=O(\sqrt p\omega)$.

\medskip\noindent\textbf{Reference margins.}\par\smallskip
Let $\widetilde {\mat{U}}$ be $\mat{U}$ extended by zero off $\Train$, and put
\[
 \mat{\Delta}_j=z_j(\mat{C}\widetilde {\mat{U}}-k\mat{T}),\qquad
 u_{j,xc}=u_{0,xc}+z_jk_w(b_{*,y_x}-b_{*,c})/A.
\]
For $\mat{\Delta}_j$, the rule part is $O(p\omega)$ in
Frobenius norm. The axes and constant-mode bounds are
$c_{\rm ax}[\sqrt{2n/p}+\norm{\mat{U}-\mat{T}_\Train}_F]$ and
$c_0\norm{\one^\top \mat{U}}/p$.
The mixed part is at most
$\sqrt{2n/(q^2(p-1))}+8\norm{\mat{U}-\mat{T}_\Train}_F$
by Lemma~\ref{lem:bwc-leak}. Thus $\norm{\mat{\Delta}_j}_F=O(p\omega)$.
The reference logits are exactly
\[
\nu_j[\bar {\mat{H}}_0\bar {\mat{V}}_0^\top+Az_jk\mat{T}+A\mat{\Delta}_j
+z_jk_w\one b_*^\top].
\]
Use the center margins~\eqref{eq:ba-center-margin},
$k_w=c_{\rm ax}/(p+1)$ and $\osc(b_*)\le B_{\rm cal}$.
Every reference margin is therefore
\begin{readabilityblock}
\begin{equation}\label{eq:readable-f-reference-margin}
\nu_jA\{M_x+\xi(2M_x-N_{xc})+kz_j
+u_{j,xc}+\Delta_{j,xy_x}-\Delta_{j,xc}\},
\end{equation}
\end{readabilityblock}
with $\sup|u_{j,xc}|=O(\omega)$, $\mat{\Delta}_0=0$ and $|u_{0,xc}|\le e_0$.
Since $\nu_j(1+kz_j)=1$, the ideal training margin is
$A[1+\nu_j\xi(2-N_{xc})]\ge A$.

\medskip\noindent\textbf{Departing-state comparison.}\par\smallskip
\readability{We now compare one actual departing state with the reference
margin formula~\eqref{eq:readable-f-reference-margin}. The resulting force
error will be the input to the finite induction below.}

At a departing state $(\mat{W},\mat{V})$ set
$d=\norm{\mat{W}-\mat{W}_j^{\rm ref}}_F/h$ and $v=\norm{\mat{V}-\bar {\mat{V}}_0}_F$.
The ideal rival vector $a_i$ has true entry zero and rival entries
$t_j^{2-N_{ic}}$; put $\zeta_i=\one^\top a_i-Q_j$.
Let $\mat{R}_i^*$ be the CE residual of the ideal margins
$A[1+\nu_j\xi(2-N_{ic})]$.
Write $\mat{D},\mat{D}^0$ for the actual and reference training gates and define
\[
 \mat{F}^{\rm anc}=\mathsf w_j n^{-1}
 ((\mat{T}_\Train\bar {\mat{V}}_0)\odot \mat{D}^0)^\top \mat{X}_\Train,\qquad
 b=\nu_jh\vartheta_j/\sqrt m.
\]
The clock and sampled design give $\mathsf w_j\asymp1$,
$\mathsf k_j\asymp A/p^2$ and
$\mathsf k_j\sigma_p\sqrt{r_v}\gtrsim A$.
At any departing state with $d\le1$, the head distance obeys
$v\le r+J\eta=O(p^{-1})$.
ReLU contraction and $\norm{\mat{X}_E}_{\rm op}\le\sqrt{2p}$ give, without
any actual-gate assumption, on every subset $E$,
\begin{equation}\label{eq:bfr-local-logits}
\norm{\mat{H}_E\mat{V}^\top-\mat{H}^{\rm ref}_{j,E}\bar {\mat{V}}_0^\top}_F
\le O(pAd)+(O(p)+O(\sqrt p\,d))v.
\end{equation}

For the ideal-rival vector $a_i$, the arm identity gives
$\one^\top a_i=Q_j+\zeta_i$, $|\zeta_i|\le D_L$, and the exact residual is
\[
\mat{R}_i^*/\mathsf w_j-\mat{T}_i
=\frac{(1-\mathsf w_j)\zeta_i\mat{T}_i-\mat{P}a_i}
 {Q_j+\mathsf w_j\zeta_i}.
\]
Since $Q_j-\mathsf w_jD_L\gtrsim p$ and
$\norm{\mat{P}a_i}\le\sqrt{p-1}$, its norm is $O(\omega)$.
Softmax is $1/2$-Lipschitz in Euclidean norm. Margin errors bounded by
$b$ change the probabilities by at most $b$ in $\ell^1$: after matching
true logits their range is at most $2b$, and the Jacobian's $\ell^1$
action is their mean absolute deviation. The reference bounds and
\eqref{eq:bfr-local-logits} now give
\[
\norm{\mat{R}-\mathsf w_j\mat{T}_\Train}_F/\sqrt n
=O(A\omega+Ad+v+dv/\sqrt p).
\]

The reference gate gap is $b\gtrsim m^{-1/2}$, so entrywise
\[
|\mat{D}-\mat{D}^0|\le|\mat{X}_\Train(\mat{W}-\mat{W}_j^{\rm ref})^\top|/b.
\]
Expand $\mat{F}=n^{-1}((\mat{R}\mat{V})\odot \mat{D})^\top \mat{X}_\Train$ about the anchor $\mat{F}^{\rm anc}$.
Use $\norm {\mat{R}}_F\le\sqrt{2n}$, $\norm{\mat{X}_\Train}_{\rm op}=O(\sqrt p)$,
\readability{the center head operator bound in~\eqref{eq:readable-f-head-budgets}}
and the separate entrywise bound
$|(\mat{T}_\Train\bar {\mat{V}}_0)_{ig}|=O(Ap/\sqrt m)$ to obtain
\[
\norm{\mat{F}-\mat{F}^{\rm anc}}_F
\le O(v/\sqrt p)+O(A)\norm{\mat{R}-\mathsf w_j\mat{T}_\Train}_F/\sqrt n+O(Ad).
\]
The source comparison adds $\mathsf k_jO(p^{7/4})$.
\begin{readabilityblock}
Dividing by $\mathsf k_j\sigma_p\sqrt{r_v}\gtrsim A$ gives
\begin{equation}\label{eq:readable-f-force-error}
\delta_j^F\le C(\phi+Ad).
\end{equation}
\end{readabilityblock}

\medskip\noindent\textbf{Fixed-horizon closure.}\par\smallskip
\begin{readabilityblock}
Every force row lies in $E_{\rm vis}$. Lemma~\ref{lem:bfr-polar} therefore gives
the following bound without an actual-rank premise:
\begin{equation}\label{eq:readable-f-polar-error}
 \norm{\cN_{\rm pol}(\mat{F}_j)-\mat{U}_F}_F\le2\delta_j^F.
\end{equation}
\end{readabilityblock}
Hence, whenever $d_j\le1$,
\begin{readabilityblock}
\begin{equation}\label{eq:readable-f-distance-recurrence}
d_{j+1}\le a d_j+(2s/h)\delta_j^F.
\end{equation}
Together, \eqref{eq:readable-f-force-error}, \eqref{eq:readable-f-polar-error} and
\eqref{eq:readable-f-distance-recurrence} propagate the feature distance
and force error from one departing state to the next.
\end{readabilityblock}
The quantity $\phi$ obeys $A^\ell\phi\to0$ for every fixed $\ell$.
Induct on $j$ with the assertion $A^\ell d_j\to0$ for every fixed $\ell\ge0$.
At $j=0$ it follows from $d_0\le r_6/h=O(p^{-2})$.
At the inductive step, $d_j\to0$ licenses the departing bound $d_j\le1$
eventually, and
\[
A^\ell\delta_j^F\le C(A^\ell\phi+A^{\ell+1}d_j)\longrightarrow0.
\]
\readability{The distance recurrence~\eqref{eq:readable-f-distance-recurrence}
proves the assertion at $j+1$.}
All estimates are event-, ball- and head-uniform. For each fixed power
there are finitely many times, so every threshold needed for the theorem
can be chosen before the split.
This establishes the claimed limits without a future tube or rank premise.

\medskip\noindent\textbf{Mean loss and held-out accuracy.}\par\smallskip
Put $u'_{j,xc}=\nu_jAu_{j,xc}$ and
\[
 \mat{G}_j=\nu_jA\mat{\Delta}_j+\mat{H}_j\mat{V}_j^\top-\mat{H}_j^{\rm ref}\bar {\mat{V}}_0^\top.
\]
This paragraph's $\mat{G}_j$ is a logit-error table, distinct from the
departing-state head source in Appendix~\ref{app:decay}.
The actual margins consequently have the form
\begin{readabilityblock}
\begin{equation}\label{eq:readable-f-actual-margin}
\nu_jA[M_x+\xi(2M_x-N_{xc})+kz_j]
+u'_{j,xc}+G_{j,xy_x}-G_{j,xc},
\end{equation}
\end{readabilityblock}
where $|u'_{j,xc}|=O(A\omega)$ and
$\norm{\mat{G}_j}_F/p=O(A\omega+Ad_j+p^{-1})=o(1)$.
At zero, $|u'_{0,xc}|\le Ae_0$ and
$\norm{\mat{G}_0}_F=O(pAr)=o(A)$, because $\mat{\Delta}_0=0$.
The ideal training CE differs from $L_*$ by at most
$e^{-a_{\rm conf}}[2t_j+D_L/(p-1)]=o(1)$,
using $\nu_j\xi>11/540$.
CE is $1$-Lipschitz in the maximum margin error and
$\sqrt2$-Lipschitz in the logit vector. \readability{Averaging the errors
in~\eqref{eq:readable-f-actual-margin} and using}
$\sum_{i\in\Train}\norm{\mat{G}_{j,i}}\le\sqrt n\norm{\mat{G}_j}_F$
proves $L_j=L_*+o(1)$.

\readability{For held-out accuracy, apply the same margin
decomposition~\eqref{eq:readable-f-actual-margin} to the clock signs in
Lemma~\ref{lem:bfr-clock}(iii).}
Every held-out row has an $N_{xc}=2$ rival since
$\rho^2(p-1)>D_L$. Its pre-$J$ ideal margin is at most $-Ag_-$;
at $J$ all ideal margins are at least $Ag_+$.
The clock gaps are uniformly bounded below by $(1-a_J)/2>0$.
Adverse classification therefore
requires a row norm of $\mat{G}_j$ at least a fixed positive multiple of $A$;
counting such rows by $\norm{\mat{G}_j}_F^2$ proves both accuracy limits.
At zero the chosen rival has ideal margin $-2A\xi$; $e_0<\xi$ and
$\norm{\mat{G}_0}_F=o(A)$ make it strictly negative in every row, proving
the exact initial claim independently of the refit.

\medskip\noindent\textbf{Operator motion and every training margin.}\par\smallskip
Lemma~\ref{lem:bse-cR}(iii) and $\xi<\rho Zc_R/2$ give
$\limsup\xi\le36/125$ for the prescribed rates.
The anchored Polar rank estimate gives
$\rank \mat{F}_j\ge r_v[1-(\delta_j^F)^2]$.
Thus the forces are nonzero eventually, and their normalized Polar
directions have operator norm at most
$[r_v(1-\max_{t<J}(\delta_t^F)^2)]^{-1/2}$.
Exact unrolling of the feature update, with
$s\sum_{t<j}a^{-t-1}=hz_j$, gives
\begin{readabilityblock}
\begin{equation}\label{eq:readable-f-operator-motion}
\frac{\norm{\mat{W}_j-\nu_j\bar {\mat{W}}_0}_{\rm op}}{\nu_j}
\le r+\frac{hz_j}{\sqrt{r_v(1-\max_{t<J}(\delta_t^F)^2)}}.
\end{equation}
\end{readabilityblock}
Positive homogeneity and ReLU contraction multiply this bound by at most
$\sqrt2$ for a row-feature error. Using the center training margin
$A(1-e_0)$, the head distance $r+J\eta$, and
\readability{the contrast budget~\eqref{eq:readable-f-head-budgets}},
each actual margin divided by $\nu_jA$ is at least
\[
1-e_0-\frac{\sqrt2K_p}{hq}
 \left(r+\frac{hz_j}{\sqrt{r_v}}(1+o(1))\right)
-\frac{\sqrt2(r+J\eta)}A
 \left[2hq+\sqrt2\left(r+\frac{hz_j}{\sqrt{r_v}}(1+o(1))\right)\right].
\]
All errors except feature motion vanish uniformly. Since $z_j\le9/50$,
$q\to1/4$ and $r_v=2p-1$, the squared limsup of that motion term is
\[
\limsup\left(\frac{\sqrt2K_p(9/50)}{q\sqrt{r_v}}\right)^2
\le32\left(\frac9{50}\right)^2
 \left[\frac45+\left(\frac{36}{125}\right)^2\right]
=\frac{8939808}{9765625}<\left(\frac{24}{25}\right)^2.
\]
Thus the normalized margins have uniform liminf greater than $1/25$,
which proves~\eqref{eq:bwc-liminf}. This operator argument is separate
from the Frobenius comparison used for mean loss and held-out accuracy.
\end{proof}

\paragraph{Completion of Claim I.}
\readability{Lemma~\ref{lem:bfr-induction} now gives both required forms
of control: the logit decomposition~\eqref{eq:readable-f-actual-margin}
gives mean loss and delayed held-out recovery, while the operator-motion
bound~\eqref{eq:readable-f-operator-motion} gives every training margin.
These dynamics conclusions are established before the refit and Fourier
estimates of Claim~II.}
\subsection{Claim II: refit-first recovery and Fourier growth}\label{sec:b-claim-representation}
\subsubsection{Full refit and Fourier fraction}\label{sec:bwc-refit}
\readability{We first analyze the reference table $\mat{F}(z)$:
Lemma~\ref{lem:bwc-ref} bounds its kernel,
Lemma~\ref{lem:bwc-refit} identifies its refit predictions, and
Lemma~\ref{lem:bwc-fourier} proves its Fourier growth.
Lemma~\ref{lem:bwc-near} controls changes in the complete refit, including
its feature-dependent penalty. Section~\ref{sec:bwc-radii} then applies
these estimates to the actual tables using Lemma~\ref{lem:bfr-induction}.}

The complete rule~\eqref{eq:ridgekernel}, written $\mat{Y}_U$ for a table $\mat{U}$,
is invariant under positive scaling by
Lemma~\ref{lem:shared-full-ridge-transfer}(i); $\AH(\mat{U})$ has the same
invariance directly from its ratio definition~\eqref{paper:AH}.

\begin{lemma}[Reference-kernel bounds]\label{lem:bwc-ref}
For all sufficiently large $p$, uniformly over the split event and
$0\le z\le1/5$, $\mat{B}(z)$ has a constant diagonal and
\begin{gather*}
\norm{\mat{P}_{\rm int}\mat{F}_0}_{\rm op}=1,\quad
\norm{\mat{P}_{\rm int}\mat{P}_F}_{\rm op}<8,\quad
\norm{\mat{P}_{\rm int}\mat{F}(z)}_{\rm op}^2\le169/25,\\
(1+2z/3)^2-1\le\theta_R(z),\quad\lambda_R'(z)>4/3,\\
\mat{B}(z)=(\mat{P}_0+\mat{P}_{\rm ax})\mat{B}(z)(\mat{P}_0+\mat{P}_{\rm ax})
+\mat{P}_{\rm int}+\theta_R(z)\mat{P}_R+\mat{E}_z,\\
\norm{\mat{E}_z}_{\rm op}=O(1),\quad\norm{\mat{E}_z}_F=O(\sqrt p),
\quad \mat{E}_z\mat{P}_R=\mat{E}_z\mat{\cZ}=0.
\end{gather*}
Moreover $\norm{\mat{F}(z)_\Train}_F/\sqrt n=2+O(p^{-1/2})$ and
\begin{gather*}
q_{F(z)}=\kappa\rho+o(1),\qquad q_{F(z)}\ge q_{\min},\\
\norm{\mat{F}(z)_\Test v}^2\le C_{\max}
 [\norm{\mat{F}(z)_\Train v}^2+q_{F(z)}\norm v^2].
\end{gather*}
\end{lemma}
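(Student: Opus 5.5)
The plan is to work entirely in the Fourier decomposition supplied by Lemma~\ref{lem:bfr-kernel}, where $\mat{C}$ and $\mat{B}_2$ act as scalars on every frequency line and on the constant mode.

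\textbf{Kernel decomposition.} First expand
\[
\mat{B}(z)=\mat{F}_0\mat{F}_0^\top+z(\mat{C}+\mat{C}^\top)+z^2\mat{B}_2 .
\]
The first factor is $\mat{F}_0\mat{F}_0^\top=\mat{B}_0/q^2$ by Lemma~\ref{lem:bse-dict}(ii)--(iii). By Lemma~\ref{lem:bwc-design}, $\mat{B}_0=\Id+\mat{\cZ}\mat{\Upsilon}^{-1}\mat{\cZ}^\top$, so $\mat{B}_0$ is the identity on $\mat{P}_{\rm int}$ and is supported on $\mat{P}_0+\mat{P}_{\rm ax}$ otherwise. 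This gives $\norm{\mat{P}_{\rm int}\mat{F}_0}_{\rm op}=1$ after the $1/q$ normalization is checked against $\mat{F}_0=\bar{\mat{H}}_0/(hq)$. Next, translation invariance of both kernels makes $\mat{B}(z)$ a convolution on the grid, so its diagonal is constant. Since $\mat{C}$ and $\mat{B}_2$ commute with all line projectors, $\mat{B}(z)$ is block diagonal over $\mat{P}_0+\mat{P}_{\rm ax}$, $\mat{P}_R$ and $\mat{P}_{\rm mix}$. On $\mat{P}_R$ its eigenvalue is $\lambda_R(z)$, and defining $\mat{E}_z$ as the $\mat{P}_{\rm mix}$ block minus the identity gives the displayed decomposition together with $\mat{E}_z\mat{P}_R=\mat{E}_z\mat{\cZ}=0$ (the columns of $\mat{\cZ}$ span the constant and axis modes).

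\textbf{Norm bounds and rule eigenvalue.} We have $\norm{\mat{P}_{\rm int}\mat{P}_F}_{\rm op}<8$ because $\mat{P}_{\rm mix}\mat{C}=\mat{C}_{\rm mix}$, $\mat{P}_F=\mat{C}\mat{F}_0$ on the span where $\mat{F}_0\mat{F}_0^\top=\Id$, and $c_R<7$ on the rule line. The triangle inequality with $z\le1/5$ then gives $(1+8/5)^2=169/25$. Then $\norm{\mat{E}_z}_{\rm op}\le 2z\norm{\mat{C}_{\rm mix}}_{\rm op}+z^2\norm{\mat{P}_{\rm mix}\mat{P}_F}^2=O(1)$, and for the Frobenius norm use $\norm{\mat{C}_{\rm mix}}_F,\norm{\mat{P}_F}_F=O(\sqrt p)$ together with $\norm{\mat{P}_F}_{\rm op}=O(1)$. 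For the rule line, $\lambda_R\ge(1+c_Rz)^2$ and $c_R>2/3$ (Lemma~\ref{lem:bse-cR}(iii)) give $\theta_R(z)\ge(1+2z/3)^2-1$. Finally $\lambda_R'(z)=2c_R+2R_Rz\ge 2c_R>4/3$.

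\textbf{Training energy and penalty.} Here
\[
\norm{\mat{F}(z)_\Train}_F^2=\sum_{x\in\Train}\mat{B}(z)_{xx}=n\,\mat{B}(z)_{11}.
\]
Constant diagonal means it suffices to compute $\op{tr}\mat{B}(z)/p^2$. The term $\op{tr}\mat{B}_0/q^2=p^2\cdot q^2\cdot4/q^2$ contributes $4$, since the diagonal of $\mat{B}_0$ is $4$. The $z$-terms are $O(\norm{\mat{P}_F}_F^2+|\op{tr}\mat{C}|)/p^2=O(p^{-1})$, where the trace of $\mat{C}$ is $O(p)$ from the line eigenvalues. This gives $2+O(p^{-1})\subseteq 2+O(p^{-1/2})$. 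Hence $q_{F(z)}=\gamma n(4+o(1))=\kappa\rho+o(1)$, because $n/p^2\to\rho$ on the event and $\gamma=\kappa/[4p(p-1)]$, and in particular $q_{F(z)}\ge\kappa\rho/200$.

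\textbf{Sampling inequality (main obstacle).} Write $\mat{F}(z)v=\mat{\cZ}c+w$, where $\mat{\cZ}c$ is the component in the constant and axis modes and $w=\mat{P}_{\rm int}\mat{F}(z)v$, so that $\norm{w}\le(13/5)\norm v$. For the $\mat{\cZ}$ part, Lemma~\ref{lem:bwc-design} gives $\norm{\mat{\cZ}_\Train c}^2\ge(s_0/p)\norm{\mat{\cZ}c}^2\ge(\rho/2)\norm{\mat{\cZ}c}^2$, while $\norm{\mat{\cZ}_\Test c}\le\norm{\mat{\cZ}c}$. Then
\[
\norm{\mat{\cZ}_\Train c}\le\norm{\mat{F}(z)_\Train v}+\norm{w}.
\]
Expanding the squares with the crude bounds $\norm{w}^2\le(169/25)\norm v^2\le(169/(25q_{\min}))\,q_{F(z)}\norm v^2$ yields
\[
\norm{\mat{F}(z)_\Test v}^2\le 2\norm{\mat{\cZ}c}^2+2\norm{w}^2\le \tfrac{4}{\rho}\bigl(2\norm{\mat{F}_\Train v}^2+2\norm w^2\bigr)+2\norm w^2 .
\]
The stated $C_{\max}=\max\{4/\rho,(1+4/\rho)169/(25q_{\min})\}$ should come out only after tightening these constants, for example by using $\norm{a+b}^2$ splittings with optimized weights. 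I expect this to be the delicate step, and I would first try absorbing the cross terms by orthogonality of $\mat{\cZ}c$ and $w$ on the full grid rather than on $\Test$.
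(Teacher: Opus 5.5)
Your route is the paper's: you expand $\mat{B}(z)=\mat{B}_0+2z\mat{C}+z^2\mat{B}_2$, use the line-wise scalar action of Lemma~\ref{lem:bfr-kernel}, and split $\mat{F}(z)v$ into additive and interaction parts. However, the step that carries most of the operator estimates fails as written.

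You obtain $\norm{\mat{P}_{\rm int}\mat{P}_F}_{\rm op}<8$ from ``$\mat{P}_F=\mat{C}\mat{F}_0$ on the span where $\mat{F}_0\mat{F}_0^\top=\Id$'', together with $\norm{\mat{C}_{\rm mix}}_{\rm op}<8$ and $c_R<7$. Since $\mat{B}_0$ is the identity on the interaction space, $\mat{F}_0^\top\mat{P}_{\rm int}\mat{F}_0$ is an orthogonal projection on the $m$-dimensional neuron space. What is actually true is $\mat{P}_{\rm int}\mat{C}\mat{F}_0=\mat{P}_{\rm int}\mat{P}_F\mat{F}_0^\top\mat{P}_{\rm int}\mat{F}_0$. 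So $\mat{C}$ only sees the component of the rows of $\mat{P}_{\rm int}\mat{P}_F$ lying in the (at most $p^2$-dimensional) row space of $\mat{F}_0$. Those rows depend on the neuron through $\mat{U}_F(l,r)$, hence through $c_F(l,r)$, and in general they do not lie in that row space.

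The inequality therefore runs the wrong way. On each line, the eigenvalue of $\mat{B}_2$ is at least the square of that of $\mat{C}$, exactly as $R_R\ge c_R^2$ in Lemma~\ref{lem:bfr-kernel}, so $c_R<7$ gives no upper bound on $R_R$. The bound $169/25$, both bounds on $\mat{E}_z$, and hence $C_{\max}$ all rest on this estimate, so it must be replaced.

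The fix is the trace argument that Lemma~\ref{lem:bfr-kernel} already uses for $\mat{C}_{\rm mix}$. $\mat{B}_2$ acts as a scalar $\beta_L$ on each punctured line $L$ of dimension $p-1$, so $(p-1)\beta_L\le\operatorname{tr}\mat{B}_2=\norm{\mat{P}_F}_F^2\le 2p/q^2$. Hence $\beta_L\le 2p/(q^2(p-1))<64$ on every line. This gives $\norm{\mat{P}_{\rm int}\mat{P}_F}_{\rm op}<8$ and $\norm{\mat{P}_{\rm mix}\mat{B}_2\mat{P}_{\rm mix}}_F\le 8\norm{\mat{P}_F}_F$. You need the operator bound only for the mixed part of $\mat{P}_F$, not for all of $\mat{P}_F$ as you state.

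There are three smaller corrections.
\begin{itemize}
\item The normalization is $\mat{F}_0\mat{F}_0^\top=\mat{B}_0$, not $\mat{B}_0/q^2$, because $\bar{\mat{H}}_0\bar{\mat{H}}_0^\top=d\mat{B}_0$ with $d=h^2q^2$. Then $\norm{\mat{F}_0}_F^2=\operatorname{tr}\mat{B}_0=4p^2$, and your ``contributes $4$'' is right.
\item The line eigenvalues do not give $\operatorname{tr}\mat{C}=O(p)$. The mixed space has dimension $(p-1)(p-2)$, and $\norm{\mat{C}_{\rm mix}}_F=O(\sqrt p)$ only yields $O(p^{3/2})$. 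Instead use $|\operatorname{tr}\mat{C}|\le\norm{\mat{F}_0}_F\norm{\mat{P}_F}_F$, or simply $\bigl|\norm{\mat{F}(z)}_F-2p\bigr|\le z\sqrt{2p}/q$ together with $\norm{\mat{F}(z)_\Train}_F^2=(n/p^2)\norm{\mat{F}(z)}_F^2$. This gives the stated $2+O(p^{-1/2})$, not $2+O(p^{-1})$.
\item The full-grid orthogonality you propose at the end is all the leverage bound needs, and it yields exactly $C_{\max}$ with no weight optimization. First, $\norm{\mat{F}(z)_\Test v}^2\le\norm{\mat{\cZ}c}^2+\norm w^2$. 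Next, $\norm{\mat{\cZ}c}^2\le(2/\rho)\norm{\mat{\cZ}_\Train c}^2\le(4/\rho)(\norm{\mat{F}(z)_\Train v}^2+\norm w^2)$. So the held-out energy is at most $(4/\rho)\norm{\mat{F}(z)_\Train v}^2+(1+4/\rho)(169/25)\norm v^2$, and $\norm v^2\le q_{F(z)}\norm v^2/q_{\min}$ finishes.
\end{itemize}
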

\begin{proof}
\readability{Lemma~\ref{lem:bfr-kernel} gives}
$\mat{B}(z)=\mat{B}_0+2z\mat{C}+z^2\mat{B}_2$ and translation/dilation invariance. Thus the
diagonal is constant, the additive, rule and mixed spaces reduce $\mat{B}(z)$,
and $\norm{\mat{F}(z)_\Train}_F^2=(n/p^2)\norm{\mat{F}(z)}_F^2$.
Each eigenvalue of $\mat{B}_2$ on a line is at most
$\norm{\mat{P}_F}_F^2/(p-1)\le2p/(q^2(p-1))<64$.
Since $\mat{B}_0=\Id$ on the interaction space, this proves the operator bounds
and the displayed decomposition. The rule lower bound and derivative
follow from $c_R>2/3$ and $R_R\ge c_R^2$.
On the mixed space, $\mat{E}_z=2z\mat{C}_{\rm mix}+z^2\mat{P}_{\rm mix}\mat{B}_2\mat{P}_{\rm mix}$,
so its operator norm is bounded and
\[
\norm{\mat{E}_z}_F\le(2z+8z^2)\norm{\mat{P}_F}_F=O(\sqrt p).
\]

The constant diagonal, $\norm{\mat{F}_0}_F=2p$ and
$\norm{\mat{P}_F}_F\le\sqrt{2p}/q$ give
$\norm{\mat{F}(z)_\Train}_F/\sqrt n=2+O(p^{-1/2})$.
Hence $q_{F(z)}=4\gamma n[1+O(p^{-1/2})]=\kappa\rho+o(1)$
uniformly, so $q_{F(z)}\ge q_{\min}$ eventually.
For a comparison vector $v$, put $f=\mat{F}(z)v$ and decompose
$f=f_{\rm add}+f_{\rm int}$, with
$f_{\rm add}=(\mat{P}_0+\mat{P}_{\rm ax})f$ and $f_{\rm int}=\mat{P}_{\rm int}f$.
The sampled design gives
$\norm{f_{\rm add}}^2\le(2/\rho)\norm{f_{{\rm add},\Train}}^2$
and $\norm{f_{\rm int}}^2\le169\norm v^2/25$.
Using $\norm{f_{{\rm add},\Train}}^2\le
2\norm{f_\Train}^2+2\norm{f_{\rm int}}^2$ proves the leverage bound.
\end{proof}

For the reference-refit proof put $c_z=1+q_{F(z)}$ and
\[
\begin{gathered}
 \mat{U}_\perp=\mat{T}_\Train-\mat{\cZ}_\Train\mat{\cK}^{-1}\mat{\Psi},\qquad
 \mat{\Delta}_\perp=\mat{T}_\Train^\top \mat{U}_\perp-\rho p\mat{P},\\
 \mat{\alpha}_z=\mat{U}_\perp/(c_z+\rho\theta_R(z)),\qquad
 \mat{Q}=(\mat{B}(z)_{\Train\Train}+q_{F(z)}\Id)^{-1}.
\end{gathered}
\]
\begin{lemma}[Full reference predictions]\label{lem:bwc-refit}
Uniformly over the split event and $z\in[0,1/5]$, for all sufficiently
large $p$,
\begin{equation}\label{eq:bwc-signed}
\norm{\mat{Y}_{F(z)}-\Gamma(z)\mat{T}_\Test}_F/N_\Test=O(\omega).
\end{equation}
At $z=0$,
\begin{equation}\label{eq:bwc-initial-exact}
\mat{Y}_{F_0}=\mat{\cZ}_\Test(\lambda_{\rm ob}\mat{\Upsilon}+\mat{\cK})^{-1}\mat{\Psi}.
\end{equation}
\end{lemma}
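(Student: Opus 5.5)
We work in the Gram (kernel) form of the ridge prediction:
\[
\mat{Y}_{F(z)}=\mat{F}(z)_\Test\mat{F}(z)_\Train^\top\mat{Q}\mat{T}_\Train
=\mat{B}(z)_{\Test\Train}\mat{Q}\mat{T}_\Train .
\]
We substitute the decomposition of Lemma~\ref{lem:bwc-ref}:
\[
\mat{B}(z)=(\mat{P}_0+\mat{P}_{\rm ax})\mat{B}(z)(\mat{P}_0+\mat{P}_{\rm ax})+\mat{P}_{\rm int}+\theta_R(z)\mat{P}_R+\mat{E}_z .
\]
Since $\mat{P}_0+\mat{P}_{\rm ax}$ has range $\operatorname{range}(\mat{\cZ})$, and $\mat{P}_{\rm int}=\Id-\mat{P}_0-\mat{P}_{\rm ax}$, we can write
\[
\mat{B}(z)=\Id+\mat{\cZ}\mat{S}_z\mat{\cZ}^\top+\theta_R(z)\mat{P}_R+\mat{E}_z
\]
for a symmetric $\mat{S}_z$ on $E_\circ$. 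At $z=0$ we have $\mat{E}_0=0$ and $\theta_R(0)=0$. Moreover $\mat{B}(0)=\mat{B}_0=\Id+\mat{\cZ}\mat{\Upsilon}^{-1}\mat{\cZ}^\top$ by Lemma~\ref{lem:bwc-design}. The initial exact formula~\eqref{eq:bwc-initial-exact} is then exactly Lemma~\ref{lem:bse-kernel}~\eqref{eq:bse-pred} with $c_\circ=1+q_{F_0}$, and here $q_{F_0}=\gamma\norm{\mat{F}_{0,\Train}}_F^2=4\gamma n$, so $c_\circ=\lambda_{\rm ob}$. Indeed, the diagonal of $\mat{B}_0$ is $4$, so $\norm{\mat{F}_{0,\Train}}_F^2=4n$. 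The held-out rows of $\Id$ vanish, which gives $\mat{Y}_{F_0}=\mat{\cZ}_\Test\mat{\cA}_{\lambda_{\rm ob}}^{-1}\mat{\cZ}_\Train^\top\mat{T}_\Train=\mat{\cZ}_\Test(\lambda_{\rm ob}\mat{\Upsilon}+\mat{\cK})^{-1}\mat{\Psi}$.

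For general $z$ we first drop $\mat{E}_z$ and solve the model problem
\[
\mat{B}'=c_z\Id+\mat{\cZ}_\Train\mat{S}_z\mat{\cZ}_\Train^\top+\theta_R\mat{P}_{R,\Train\Train}
\]
by a Schur-complement (Woodbury) argument. Eliminating the $\mat{\cZ}$ directions replaces the targets by their component orthogonal to $\operatorname{range}(\mat{\cZ}_\Train)$, which is $\mat{U}_\perp$. On the rule directions we then have $\mat{P}_{R,\Train\Train}=\mat{T}_\Train\mat{T}_\Train^\top/p$, and $\mat{T}_\Train^\top\mat{U}_\perp=\rho p\mat{P}+\mat{\Delta}_\perp$. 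So the rank-$(p-1)$ rule perturbation acts on $\mat{U}_\perp$ approximately as multiplication by $c_z+\rho\theta_R(z)$, and the fitted dual coefficient is approximately $\mat{\alpha}_z$. The held-out prediction is then
\[
\theta_R\mat{P}_{R,\Test\Train}\mat{\alpha}_z+\mat{\cZ}_\Test(\cdot)
=\Gamma(z)\mat{T}_\Test+(\text{errors}).
\]
Here the $\mat{\cZ}_\Test$ terms cancel to leading order because $\mat{\cZ}_\Train^\top\mat{U}_\perp=0$. The errors we must bound are:
\begin{itemize}[leftmargin=*,itemsep=1pt,topsep=2pt,parsep=0pt]
\item $\norm{\mat{\Delta}_\perp}_{\rm op}$, which is $O(\sqrt p\,\omega p)$ by the design event together with $\norm{\mat{\Psi}}_{\rm op}\le\sqrt3t_*$ and $\mat{\cK}\succeq s_0\Id$;
\item the interaction between the $\mat{\cZ}$ and rule blocks through $\mat{\Psi}$, handled by the same bounds;
\item the deviation of $\mat{T}_\Train^\top\mat{T}_\Train=\mat{P}\diag(n_c)\mat{P}$ from $\rho p\mat{P}$, which is $O(D_*)$.
\end{itemize}
Each of these is smaller than the main term by a factor $O(\omega)$ after normalizing by $N_\Test\asymp p$. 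The $\mat{\cZ}_\Test\mat{\cK}^{-1}\mat{\Psi}$ remainder is controlled as in Lemma~\ref{lem:bse-kernel}, with $\norm{\mat{\cZ}_\Test}_{\rm op}\le\sqrt p$.

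We then restore $\mat{E}_z$ as a perturbation using the resolvent identity $\mat{Q}-\mat{Q}'=-\mat{Q}\mat{E}_{z,\Train\Train}\mat{Q}'$. Since $\mat{Q}\preceq q_{\min}^{-1}\Id$ and $\mat{E}_z\mat{P}_R=\mat{E}_z\mat{\cZ}=0$, the correction is driven only by the mixed-frequency mass of the model residual, and the held-out leakage $\mat{E}_{z,\Test\Train}$ acts only on that mixed part. We use Lemma~\ref{lem:bwc-leak}: the masked rule targets have mixed-line energy at most $n'\norm{\mat{E}}_F^2/(p(p-1))$, which is $O(p\cdot p/p^2)=O(1)$ in Frobenius norm against $N_\Test\asymp p$. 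This gives an $O(p^{-1})=o(\omega)$ relative contribution. The second-order term is absorbed using $\norm{\mat{E}_z}_{\rm op}=O(1)$ and the leverage bound of Lemma~\ref{lem:bwc-ref} (constant $C_{\max}$).

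We expect the main obstacle to be the cross term in the model solve where the rule projector restricted to training rows fails to commute with the $\mat{\cZ}_\Train$ projection. The exact alias decoupling of the half-grid proof is unavailable here, and a crude operator bound loses a factor $\sqrt p$. The first thing we would try is to expand the rule perturbation in the orthonormal frame $\mat{T}_\Train\diag(n_c)^{-1/2}$ and to show that the compressed matrix $\mat{T}_\Train^\top(\mat{B}'_0)^{-1}\mat{T}_\Train$ equals $(\rho p/c_z)\mat{P}$ up to a relative $O(\omega)$ error. This uses~\eqref{eq:bse-des3} through $\mat{\Delta}_\perp$, and Sherman--Morrison--Woodbury then yields the scalar $\Gamma(z)$ uniformly in $z\in[0,1/5]$.
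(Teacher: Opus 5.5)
Your heuristic matches the paper's, and so does your $z=0$ argument. The shared heuristic is that $\mat{\cZ}_\Train^\top\mat{U}_\perp=0$ kills the whole additive block, and $\mat{T}_\Train^\top\mat{U}_\perp=\rho p\mat{P}+\mat{\Delta}_\perp$ makes the rule block act as the scalar $c_z+\rho\theta_R(z)$, so the coefficient is approximately $\mat{\alpha}_z$. The implementation differs.

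\emph{The paper's route.} It solves no model problem. It substitutes $\mat{\alpha}_z$ into the full normal equations and writes both the residual $(\mat{B}(z)_{\Train\Train}+q_{F(z)}\Id)\mat{\alpha}_z-\mat{T}_\Train$ and the prediction $\mat{B}(z)_{\Test\Train}\mat{\alpha}_z$ in closed form, with the $\mat{E}_z$ terms kept inside. It bounds every error term by $O(p\omega)$ in Frobenius norm. It then transfers the residual to the held-out rows in one step: the exact coefficient is $\mat{\alpha}_z$ minus $\mat{Q}$ times that residual, and $\norm{\mat{B}(z)_{\Test\Train}\mat{Q}}_{\rm op}\le\sqrt{C_{\max}}$ by the leverage bound of Lemma~\ref{lem:bwc-ref}.

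\emph{Your route.} You use Woodbury in $\mat{S}_z$, push-through for the rank-$(p-1)$ rule block, and then the resolvent identity for $\mat{E}_z$. This also closes. First, $\mat{S}_z\succeq\Id$ on $E_\circ$, because the axis and constant eigenvalues of $\mat{B}(z)$ are at least $p+1$ and $(p+1)^2$. Second, the compressed matrix you propose to study equals $c_z^{-1}[\rho p\mat{P}+\mat{\Delta}_\perp]$ up to $O(1/p)$ in operator norm, so your suggested fix works.

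\emph{Comparison.} Your route gives an explicit $\mat{E}_z$-free prediction, whose additive remainder $\mat{\cZ}_\Test(c_z\mat{S}_z^{-1}+\mat{\cK})^{-1}\mat{\Psi}$ visibly generalizes~\eqref{eq:bwc-initial-exact}. It costs two exact inversions and still needs the same leverage bound for the $\mat{E}_z$ correction, so the paper's trial-residual argument is shorter. Also, the non-commutation you flag as the main obstacle costs no factor $\sqrt p$. In the trial residual it appears only as $c_z(\mat{U}_\perp-\mat{T}_\Train)=-c_z\mat{\cZ}_\Train\mat{\cK}^{-1}\mat{\Psi}$. This has Frobenius norm $O(\sqrt p)$, since $\norm{\mat{\Psi}}_F=O(p)$ and $\mat{\cK}\succeq s_0\Id$.

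Two of your estimates need correcting.
\begin{itemize}
\item The operator norm of $\mat{\Delta}_\perp$ is $O(p\omega)$, from $D_*+3t_*^2/s_0$; this bound merges your first and third bullets. $O(p^{3/2}\omega)$ is its Frobenius norm. An operator bound of that size would destroy the relative $O(\omega)$ accuracy you need for the compressed matrix.
\item Lemma~\ref{lem:bwc-leak} with $n'=n\asymp p^2$ and $\norm{\mat{E}_z}_F^2=O(p)$ gives squared leakage $O(p)$, not $O(1)$. So the $\mat{E}_z$ contribution is $O(\sqrt p)$ in Frobenius norm, relative $O(p^{-1/2})$ rather than $O(p^{-1})$. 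This is still $o(\omega)$, so your conclusion stands.
\end{itemize}
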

\begin{proof}
\readability{The trial coefficient $\mat{\alpha}_z$ removes the additive
part of the training targets. We estimate its normal-equation residual
and its held-out prediction, then use the bound in
Lemma~\ref{lem:bwc-ref} to control the correction to the exact ridge
solution.}

The sampled design gives $\mat{\cK}\succeq s_0\Id$, $s_0\asymp p$,
$\norm{\mat{\cZ}_\Train}_{\rm op}=O(\sqrt p)$,
$\norm{\mat{\Psi}}_F=O(p)$ and $N_\Test\asymp p$.
The orthogonal residual satisfies
\begin{gather*}
\mat{\cZ}_\Train^\top \mat{U}_\perp=0,\qquad
\norm{\mat{T}_\Train-\mat{U}_\perp}_F=O(\sqrt p),\\
\mat{\Delta}_\perp=\mat{T}_\Train^\top \mat{T}_\Train-\rho p\mat{P}-\mat{\Psi}^\top\mat{\cK}^{-1}\mat{\Psi},
\qquad\norm{\mat{\Delta}_\perp}_{\rm op}
\le D_*+3t_*^2/s_0=O(p\omega).
\end{gather*}
Extend $\mat{U}_\perp$ by zero off $\Train$. Its orthogonality annihilates
the entire additive block of $\mat{B}(z)$, irrespective of that block's
eigenvalues. \readability{The nonrule-line leakage estimate of
Lemma~\ref{lem:bfr-kernel}} and $\norm{\mat{E}_z}_F=O(\sqrt p)$
bound $(\mat{E}_z)_{\Omega\Train}\mat{T}_\Train$ by $O(\sqrt p)$.
The correction $\mat{T}_\Train-\mat{U}_\perp$
has the same bound, and $\norm{\mat{E}_z}_{\rm op}=O(1)$; thus
$\norm{(\mat{E}_z)_{\Train\Train}\mat{U}_\perp}_F+
\norm{(\mat{E}_z)_{\Test\Train}\mat{U}_\perp}_F=O(\sqrt p)$.

Since $\mat{T}_\Train \mat{P}=\mat{T}_\Train$, the trial normal residual and test prediction are
\begin{readabilityblock}
\begin{gather}
(\mat{B}(z)_{\Train\Train}+q_{F(z)}\Id)\mat{\alpha}_z-\mat{T}_\Train
=\frac{c_z(\mat{U}_\perp-\mat{T}_\Train)+
\theta_R(z)\mat{T}_\Train\mat{\Delta}_\perp/p+(\mat{E}_z)_{\Train\Train}\mat{U}_\perp}
{c_z+\rho\theta_R(z)},\label{eq:readable-f-trial-residual}\\
\mat{B}(z)_{\Test\Train}\mat{\alpha}_z=\Gamma(z)\mat{T}_\Test+
\frac{\theta_R(z)\mat{T}_\Test\mat{\Delta}_\perp/p+
(\mat{E}_z)_{\Test\Train}\mat{U}_\perp}{c_z+\rho\theta_R(z)}.\label{eq:readable-f-trial-prediction}
\end{gather}
\end{readabilityblock}
Both errors have Frobenius norm $O(p\omega)$.
\readability{Applying the reference bound of Lemma~\ref{lem:bwc-ref}
to the range of $\mat{F}(z)_\Train^\top$ gives}
\[
\mat{Q} \mat{B}(z)_{\Train\Test}\mat{B}(z)_{\Test\Train}\mat{Q}
\preceq C_{\max}\mat{Q} \mat{B}(z)_{\Train\Train}\preceq C_{\max}\Id.
\]
Here $\mat{Q} \mat{B}(z)_{\Train\Train}=\Id-q_{F(z)}\mat{Q}\preceq \Id$.
\readability{The correction to~\eqref{eq:readable-f-trial-prediction} is
the residual~\eqref{eq:readable-f-trial-residual} acted on by
$-\mat{B}(z)_{\Test\Train}\mat{Q}$, whose operator norm is at most
$\sqrt{C_{\max}}$. Dividing the resulting prediction error by
$N_\Test\asymp p$ proves~\eqref{eq:bwc-signed}.}

At $z=0$, the constant diagonal gives $q_{F_0}=4\gamma n$.
Equation~\eqref{eq:bse-pred} at $c_\circ=\lambda_{\rm ob}=1+q_{F_0}$,
multiplied by $\mat{T}_\Train$, gives~\eqref{eq:bwc-initial-exact}.
\end{proof}

For the centered reference trace, put
\[
 B_c=\|\Cgrid \mat{P}_F\|_F/\sqrt{(p-1)(3p+1)},\qquad
 d_c(z)=\|\Cgrid \mat{F}(z)\|_F^2/[(p-1)(3p+1)].
\]
\begin{lemma}[Fourier growth and transfer]\label{lem:bwc-fourier}
For all sufficiently large $p$, uniformly on $0\le z\le1/5$,
\begin{equation*}
\AH(\mat{F}(z))=A_0\lambda_R(z)/d_c(z),\quad
\frac d{dz}\AH(\mat{F}(z))>A_0,\quad
\AH(\mat{F}(z))\le(34/5)A_0.
\end{equation*}
For any family with $z\in[0,1/5]$, $e=\norm{\mat{V}-\mat{F}(z)}_F$ and
$\sup e=o(\sqrt p)$, uniformly
\begin{equation*}
|\AH(\mat{V})-\AH(\mat{F}(z))|=O(e/p^{3/2})=o(A_0).
\end{equation*}
\end{lemma}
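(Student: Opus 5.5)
The plan is to compute $\AH(\mat{F}(z))$ exactly through the kernel $\mat{B}(z)=\mat{F}(z)\mat{F}(z)^\top=\mat{B}_0+2z\mat{C}+z^2\mat{B}_2$ of Lemma~\ref{lem:bfr-kernel}. Since $\Padd=\mat{P}_R$ and $\Cgrid=\mat{P}_{\rm ax}+\mat{P}_{\rm int}$, we have
\begin{equation*}
\norm{\Padd\mat{F}(z)}_F^2=\op{tr}(\mat{P}_R\mat{B}(z))=(p-1)\lambda_R(z),
\end{equation*}
because $\mat{B}(z)$ acts as the scalar $\lambda_R(z)$ on the $(p-1)$-dimensional rule line. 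Likewise $\norm{\Cgrid\mat{F}(z)}_F^2=(p-1)(3p+1)\,d_c(z)$ by the definition of $d_c$. Taking the quotient gives the identity $\AH(\mat{F}(z))=A_0\lambda_R(z)/d_c(z)$ directly.

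Next I would show that $d_c(z)=1+O(p^{-1/2})$ and $d_c'(z)=O(p^{-1/2})$, uniformly on $[0,1/5]$. At $z=0$, $\mat{B}_0$ has eigenvalue $1$ on the $(p-1)^2$-dimensional interaction space and $p+1$ on the $2(p-1)$-dimensional axes. Hence $\op{tr}(\Cgrid\mat{B}_0)=(p-1)^2+2(p-1)(p+1)=(p-1)(3p+1)$, so $d_c(0)=1$. The remaining terms are $2z\op{tr}(\Cgrid\mat{C})+z^2\norm{\Cgrid\mat{P}_F}_F^2$, which I would bound piece by piece.
\begin{itemize}
\item By Lemma~\ref{lem:bfr-kernel}, $\op{tr}(\Cgrid\mat{C})=(p-1)c_R+2(p-1)c_{\rm ax}+\op{tr}(\mat{C}_{\rm mix})$.
\item The first two pieces are $O(p)$, since $c_R<7$ and $c_{\rm ax}<4$.
\item For the mixed trace, Cauchy--Schwarz against $\mat{P}_{\rm mix}$ gives $|\op{tr}\mat{C}_{\rm mix}|\le\norm{\mat{C}_{\rm mix}}_F\sqrt{\op{rank}\mat{P}_{\rm mix}}\le(\sqrt{2p}/q)\,p=O(p^{3/2})$.
\item The last term satisfies $\norm{\Cgrid\mat{P}_F}_F^2\le\norm{\mat{P}_F}_F^2\le2p/q^2=O(p)$.
\end{itemize}
Dividing by $(p-1)(3p+1)\asymp p^2$ yields the claim. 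I expect this mixed-trace estimate to be the only delicate point. If the Cauchy--Schwarz bound turned out too weak, I would fall back on the line-wise constancy of $\mat{C}$ from Lemma~\ref{lem:bfr-kernel}, where each mixed line eigenvalue is bounded by $\norm{\mat{C}_{\rm mix}}_{\rm op}<8$. That crude bound gives $O(p^2)$ and does not suffice, so I would then need to exploit sign cancellation, or simply keep the Frobenius route above, which already gives $o(p^2)$.

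With $d_c=1+o(1)$ and $d_c'=o(1)$ in hand, the two claimed bounds follow. The quotient rule gives
\begin{equation*}
\frac{d}{dz}\AH=A_0\,\frac{\lambda_R'd_c-\lambda_Rd_c'}{d_c^2}\ge A_0\Bigl(\tfrac43-o(1)\Bigr)>A_0,
\end{equation*}
using $\lambda_R'>4/3$ and $\lambda_R=O(1)$ from Lemma~\ref{lem:bwc-ref}. For the upper bound, $\lambda_R(z)\le\norm{\mat{P}_{\rm int}\mat{F}(z)}_{\rm op}^2\le169/25$, again by Lemma~\ref{lem:bwc-ref}. Then $\AH\le A_0(169/25)(1+o(1))<(34/5)A_0=(170/25)A_0$ for large $p$.

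For the transfer, write $a=\norm{\Padd\mat{F}(z)}_F=O(\sqrt p)$ and $b=\norm{\Cgrid\mat{F}(z)}_F\asymp p$, and let $a',b'$ be the corresponding quantities for $\mat{V}$. Both projections are contractions, so $|a-a'|,|b-b'|\le e$, and $e=o(\sqrt p)$ gives $b'\ge b-e\asymp p$. It follows that
\begin{equation*}
\Bigl|\frac ab-\frac{a'}{b'}\Bigr|\le\frac{e}{b'}+\frac{a\,e}{b\,b'}=O(e/p),
\end{equation*}
and both ratios are $O(p^{-1/2})$. Therefore $|\AH(\mat{V})-\AH(\mat{F}(z))|\le(a/b+a'/b')\,|a/b-a'/b'|=O(e/p^{3/2})$. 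Since $e=o(\sqrt p)$, this is $o(1/p)=o(A_0)$, uniformly in $z$.
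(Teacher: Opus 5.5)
Your proposal is correct and follows the paper's proof step for step. Both use the rule-line energy identity, then $d_c(0)=1$ with $d_c=1+O(p^{-1/2})$ and $d_c'=O(p^{-1/2})$, then the quotient rule with $\lambda_R'>4/3$ and $\lambda_R\le169/25$, and finally the contraction argument for the transfer.

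The only difference is how you bound the cross term in $d_c$. The paper applies Cauchy--Schwarz directly: $|\langle\Cgrid\mat{F}_0,\Cgrid\mat{P}_F\rangle_F|\le\norm{\Cgrid\mat{F}_0}_F\norm{\Cgrid\mat{P}_F}_F=O(p^{3/2})$, which is its $B_c=O(p^{-1/2})$. That makes your detour through $c_R$, $c_{\rm ax}$ and $\mat{C}_{\rm mix}$, and your worry about the mixed trace, unnecessary, although your bound is also valid.
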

\begin{proof}
The initial identity $4p^2-(p+1)^2=(p-1)(3p+1)$ gives $d_c(0)=1$.
Triangle/Cauchy and $B_c=O(p^{-1/2})$ give
$d_c=1+O(p^{-1/2})$ and $d_c'=O(p^{-1/2})$.
The rule-line energy is $(p-1)\lambda_R$, giving the fraction. Since
$\lambda_R'>4/3$ and $\lambda_R\le\norm{\mat{P}_{\rm int}\mat{F}(z)}_{\rm op}^2\le169/25$,
\begin{equation*}
\frac{1}{A_0}\frac d{dz}\AH(\mat{F}(z))
=\frac{\lambda_R'}{d_c}-\frac{\lambda_Rd_c'}{d_c^2}
\ge\frac43-O(p^{-1/2})>1,
\qquad \frac{\AH(\mat{F}(z))}{A_0}\le169/25+O(p^{-1/2})<34/5.
\end{equation*}
For $N_X=\norm{\Padd\Cgrid \mat{X}}_F$, $D_X=\norm{\Cgrid \mat{X}}_F$,
the reference has $N_F=O(\sqrt p)$ and $D_F=\Theta(p)$.
Contraction gives $|N_V-N_F|,|D_V-D_F|\le e$ and hence
$N_V=O(\sqrt p)$, $D_V=\Theta(p)$.
\begin{equation*}
|\AH(\mat{V})-\AH(\mat{F}(z))|
\le\frac{(N_V+N_F)e}{D_V^2}
+\frac{\AH(\mat{F}(z))(D_V+D_F)e}{D_V^2}
=O(e/p^{3/2})=o(A_0).
\end{equation*}
\end{proof}

For the following nearby-table estimate use $b_{\max}=9/5$.
\begin{lemma}[Nearby full feature tables]\label{lem:bwc-near}
For all sufficiently large $p$, let $0\le z\le1/5$, $\mat{U}=\mat{F}(z)$ and
$e=\norm{\mat{V}-\mat{U}}_F\le a_{\min}\sqrt{n_-}$, where
$a_{\min}$ is fixed in Definition~\ref{def:r62-refit} and $b_{\max}=9/5$. Then $q_U,q_V\ge q_{\min}$,
$|q_V-q_U|\le\gamma(4b_{\max}pe+e^2)$ and
$\norm{\mat{Y}_V-\mat{Y}_U}_F\le\mathsf P_p(e)N_\Test$,
with $\mathsf P_p$ as in~\eqref{eq:bwc-Pp}.
\end{lemma}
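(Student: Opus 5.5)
The plan is to specialize the generic full-ridge perturbation bound of Lemma~\ref{lem:shared-full-ridge-transfer}(ii) to $\mat{U}=\mat{F}(z)$, with training rows $S=\Train$ and query rows $D=\Test$. Every quantity in \eqref{eq:shared-ridge-perturb} will then be bounded by an explicit constant drawn from Lemma~\ref{lem:bwc-ref} and the sampled design of Lemma~\ref{lem:bwc-design}.

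\textbf{Penalties.} Lemma~\ref{lem:bwc-ref} gives $q_U\ge q_{\min}$ directly. For $q_V$, use \eqref{eq:shared-ridge-rms}:
\[
q_V\ge\gamma(\norm{\mat{U}_\Train}_F-e_S)_+^2,\qquad e_S\le e\le a_{\min}\sqrt{n_-}.
\]
By Lemma~\ref{lem:bwc-ref}, $\norm{\mat{U}_\Train}_F/\sqrt n=2+O(p^{-1/2})$, and $n\ge n_-$. Hence
\[
\norm{\mat{U}_\Train}_F-e_S\ge(2-1/5-o(1))\sqrt n,\qquad q_V\ge\gamma(9/5)^2 n(1-o(1)).
\]
Since $\gamma n=\kappa n/[4p(p-1)]\ge\kappa\rho/8\cdot(1-o(1))$, this is far above $q_{\min}=\kappa\rho/200$.

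For the penalty difference, \eqref{eq:shared-ridge-rms} gives $|q_V-q_U|\le\gamma e_S(2\norm{\mat{U}_\Train}_F+e_S)$. The bound $\norm{\mat{U}_\Train}_F\le 2b_{\max}p$ follows from $\sqrt n\le p$ and the same norm estimate, and this yields the stated $\gamma(4b_{\max}pe+e^2)$.

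\textbf{Remaining constants.} Bound $t=\norm{\mat{T}_\Train}_{\rm op}\le\sqrt p$ (Lemma~\ref{lem:bwc-design}) and $N=\norm{\mat{T}_\Train}_F\le N_+$. The constant $L=\norm{\mat{U}_\Test(\mat{U}_\Train^*\mat{U}_\Train+q_U\Id)^{-1/2}}_{\rm op}$ is controlled by the leverage inequality of Lemma~\ref{lem:bwc-ref}: for all $v$,
\[
\norm{\mat{U}_\Test v}^2\le C_{\max}\,v^*(\mat{U}_\Train^*\mat{U}_\Train+q_U\Id)v,
\]
so $L\le\sqrt{C_{\max}}$. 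Now substitute into \eqref{eq:shared-ridge-perturb}, using $e_D,e_S\le e$ and $q_U,q_V\ge q_{\min}$:
\[
\frac{\sqrt p e}{2\sqrt{q_{\min}}}+\sqrt{C_{\max}}\Bigl[\frac{3\sqrt p e}{2\sqrt{q_{\min}}}+\frac{\gamma(4b_{\max}pe+e^2)N_+}{2q_{\min}}\Bigr].
\]
Dividing by $N_\Test\ge N_\Test^-$ reproduces $\mathsf A_p e+\mathsf B_p e^2$ of \eqref{eq:bwc-Pp}, since $\sqrt{C_{\max}}\gamma\cdot 4b_{\max}pN_+/(2q_{\min})=2\sqrt{C_{\max}}\gamma(9/5)pN_+/q_{\min}$.

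\textbf{Main obstacle.} The delicate step is making the lower bound on $q_V$ and the leverage constant hold uniformly. For $q_V$, one must check that $a_{\min}=1/5$ leaves enough margin against $\norm{\mat{U}_\Train}_F\approx2\sqrt n$; this is where the factor $(9/5)$ enters. For the leverage constant, one must verify that Lemma~\ref{lem:bwc-ref} applies uniformly in $z\in[0,1/5]$ for all sufficiently large $p$, and I will take that lemma as given.
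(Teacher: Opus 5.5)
Your proposal is correct and follows essentially the paper's proof. Like the paper, you take the training-norm bounds from Lemma~\ref{lem:bwc-ref} and apply Lemma~\ref{lem:shared-full-ridge-transfer}(ii) with $e_S,e_D\le e$, $t\le\sqrt p$, $N\le N_+$ and $L\le\sqrt{C_{\max}}$; substituting these yields exactly \eqref{eq:bwc-Pp}. The paper obtains $q_V\ge q_{\min}$ from the cruder $\norm{\mat{V}_\Train}_F\ge a_{\min}\sqrt n$, which gives $q_V\ge\gamma a_{\min}^2n$. Also, the factor $9/5$ in $\mathsf A_p$ is $b_{\max}$ from the upper bound $\norm{\mat{U}_\Train}_F\le2b_{\max}\sqrt n$, which you use correctly in the penalty difference, not the lower-bound margin $2-1/5$.
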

\begin{proof}
\readability{The reference training-norm bounds in Lemma~\ref{lem:bwc-ref} give}
$2a_{\min}\sqrt n\le\norm{\mat{U}_\Train}_F\le2b_{\max}\sqrt n$;
thus $\norm{\mat{V}_\Train}_F\ge a_{\min}\sqrt n$ and
$q_V\ge\gamma a_{\min}^2n\ge q_{\min}$.
Apply Lemma~\ref{lem:shared-full-ridge-transfer}(ii) with
$e_S,e_D\le e$, $t\le\sqrt p$, $N\le N_+$ and
$L\le\sqrt{C_{\max}}$. Its penalty-change estimate gives the displayed
penalty change, and substitution gives exactly \eqref{eq:bwc-Pp}.
\end{proof}

\begin{lemma}[Initial refit failure]\label{lem:bwc-obs0}
For the signed margin reserve~\eqref{eq:bwc-Emargin},
every held-out input has a rival whose center-refit margin is at most $-\mu_{\rm obs0}$. If a
feature table $\mat{V}$ has $e=\norm{\mat{V}-\mat{F}_0}_F\le a_{\min}\sqrt{n_-}$ and
$\sqrt2N_+\mathsf P_p(e)<\mu_{\rm obs0}$, then every held-out input has a strictly negative refit
margin for $\mat{V}$, and $\Acc_{\rm ref}(\mat{V})=0$.
\end{lemma}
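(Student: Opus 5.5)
The proof has two steps. First we locate the signed reserve at the center table $\mat{F}_0$. Then we move it to the nearby table $\mat{V}$ with the perturbation lemma.

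\emph{Step 1 (center).} By the exact formula~\eqref{eq:bwc-initial-exact} of Lemma~\ref{lem:bwc-refit}, the refit of $\mat{F}_0$ has scores $\mat{Y}_{F_0}=\mat{\cZ}_\Test(\lambda_{\rm ob}\mat{\Upsilon}+\mat{\cK})^{-1}\mat{\Psi}=\mat{\cZ}_\Test\mat{C}^{(\lambda_{\rm ob})}$, which is the shifted inverse of Definition~\ref{def:r62-shifted} at $c_\circ=\lambda_{\rm ob}\in(1,2)$. Lemma~\ref{lem:bse-kernel}, estimate~\eqref{eq:bse-fine}, then gives for each $x\in\Test$ and rival $c$
\[
 (\mat{Y}_{F_0})_{xy}-(\mat{Y}_{F_0})_{xc}\le-\frac{N_{xc}}{\Lambda_1}+2|\iota|D_*+2\epsilon_{\rm pred}.
\]
On $\cE_{\rm split}$ the event $\cE_L$ gives $N_x^{(2)}\ge(p-1)\rho^2-D_L>0$. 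So every held-out $x$ has a rival with $N_{xc}=2$. For that rival the margin is at most $-2/\Lambda_1+2|\iota|D_*+2\epsilon_{\rm pred}=-\mu_{\rm obs0}$, which is the first claim.

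\emph{Step 2 (transfer).} Let $e=\norm{\mat{V}-\mat{F}_0}_F\le a_{\min}\sqrt{n_-}$. Lemma~\ref{lem:bwc-near} at $z=0$ gives $\norm{\mat{Y}_V-\mat{Y}_{F_0}}_F\le\mathsf P_p(e)N_\Test\le\mathsf P_p(e)N_+$, using $N_\Test\le N_+$ from Lemma~\ref{lem:bwc-design}. For the chosen rival $c$ of a held-out $x$, the change in the contrast is the inner product of the score-row error with $e_y-e_c$. This is at most $\sqrt2\norm{(\mat{Y}_V-\mat{Y}_{F_0})_x}\le\sqrt2\norm{\mat{Y}_V-\mat{Y}_{F_0}}_F\le\sqrt2N_+\mathsf P_p(e)$.

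Under the hypothesis $\sqrt2N_+\mathsf P_p(e)<\mu_{\rm obs0}$, the rival therefore beats the true class strictly. Every held-out input is then misclassified for any tie rule, and $\Acc_{\rm ref}(\mat{V})=0$.

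\emph{Main obstacle.} The delicate point is matching the refit penalty to the shifted kernel. We must check that the constant diagonal of $\mat{B}_0$ gives $q_{F_0}=4\gamma n$, so that $c_\circ=\lambda_{\rm ob}=1+q_{F_0}$ is exactly the shift used in~\eqref{eq:bse-fine}. We also need $\mu_{\rm obs0}>0$. This follows from $p\mu_{\rm obs0}\to2/\rho$, which was already established at the start of Lemma~\ref{lem:bfr-induction} for large $p$.
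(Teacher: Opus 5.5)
Your proposal is correct and matches the paper's proof: you use the exact center formula \eqref{eq:bwc-initial-exact} with \eqref{eq:bse-fine} at $c_\circ=\lambda_{\rm ob}$, take an $N_{xc}=2$ rival from $\cE_L$, and then transfer with Lemma~\ref{lem:bwc-near} at $z=0$ and the $\sqrt2$ contrast factor. Your separate check that $\mu_{\rm obs0}>0$ is not needed, because the hypothesis $\sqrt2N_+\mathsf P_p(e)<\mu_{\rm obs0}$ already implies it.
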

\begin{proof}
Equation~\eqref{eq:bwc-initial-exact} identifies the center-refit row
as $z_x\mat{C}^{(\lambda_{\rm ob})}$.
Every held-out $x$ has a rival $c$ with $N_{xc}=2$, since
$N_x^{(2)}\ge\rho^2(p-1)-D_L>0$.
Equation~\eqref{eq:bse-fine} bounds this margin by
$-2/\Lambda_1+2|\iota|D_*+2\epsilon_{\rm pred}=-\mu_{\rm obs0}$.
Lemma~\ref{lem:bwc-near} with $z=0$ moves each prediction row by at most
$\mathsf P_p(e)N_\Test\le\mathsf P_p(e)N_+$ and each margin by at most
$\sqrt2$ times this quantity, proving the claim.
\end{proof}

\subsection{Assembly: the prescribed ball and one common threshold}\label{sec:bwc-radii}
\readability{The dynamics conclusions are already supplied by
Lemma~\ref{lem:bfr-induction}. The remaining task is to transfer the
reference refit and Fourier estimates to those trajectories, use the
prescribed ball for exact initial refit failure, and choose a common
threshold for all conclusions.}

Use the exact prescribed radius~\eqref{eq:r62ball} from
Definition~\ref{def:r62-ball}.
This section proves a sufficiently-large-prime statement with all
theorem parameters, including $J$, fixed first; no conclusion is
asserted for smaller primes merely because they pass finite tests.

For observer transfer put
\begin{readabilityblock}
\begin{equation}\label{eq:readable-f-observer-normalization}
 \widetilde {\mat{H}}_j=\mat{H}_j/(\nu_jhq),\qquad
 e_j^{\rm obs}=\|\widetilde {\mat{H}}_j-\mat{F}(z_j)\|_F,\qquad
 d_p=\frac{\sqrt{2p}}{hq}.
\end{equation}
\end{readabilityblock}
\begin{proposition}[Uniform eventual window]\label{lem:sc26-b-head-interface}
For the fixed parameters of Theorem~\ref{thm:r62}, there is a finite
$p_0$, chosen before the split, such that for every odd prime $p\ge p_0$
and every split in $\cE_{\rm split}$, the center and all static
quantities used above are defined and $0<r_{\rm new}<1$.
Let $\norm{(\mat{W}_0-\bar {\mat{W}}_0,\mat{V}_0-\bar {\mat{V}}_0)}_F<r_{\rm new}$, and use
\[
\mat{W}_{j+1}=a\mat{W}_j+s\cN_{\rm pol}(-\nabla_{\mat{W}}L(\mat{W}_j,\mat{V}_j)),\qquad
\mat{V}_{j+1}=\mat{V}_j+\mat{D}_j,\qquad \norm{\mat{D}_j}_F\le\eta\quad(j<J).
\]
Then all trajectory conclusions of Theorem~\ref{thm:r62} hold on
$0,\ldots,J$, including strict training margins, the comparison with
this trajectory's own initial refit error, and $\AH(\mat{H}_j)<7/(3p+1)$.
The head budget is $\eta=\theta/(hp)$, prescribed before sampling
in~\eqref{eq:r62optimizer}.
\end{proposition}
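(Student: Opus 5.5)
The proof is an assembly. Every trajectory conclusion is already available in asymptotic form from Lemma~\ref{lem:bfr-induction} (Claim~I) and from the reference-table lemmas of Section~\ref{sec:bwc-refit} (Claim~II). The task is to convert each ``$o(1)$ uniformly over the event, ball and head sequence'' statement into a strict numerical inequality beyond a single threshold $p_0$ chosen before sampling. Since $J$ and all theorem parameters are fixed first, there are finitely many times and finitely many conclusions. We can therefore take $p_0$ as the maximum of the finitely many thresholds, together with $p_0\ge2048^2$ and the threshold of Proposition~\ref{prop:bse-event}(iii). That threshold makes the split proper and the center of Proposition~\ref{prop:bse-cal} well defined. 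Positivity and the bound $r_{\rm new}<1$ are proved at the start of Lemma~\ref{lem:bfr-induction}.

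\textbf{Step 1 (dynamics).} From Lemma~\ref{lem:bfr-induction} we obtain the following, each holding eventually. First, $L_j=L_*+o(1)$, which gives $\tfrac{20}{21}L_*<L_j<\tfrac{22}{21}L_*$ because $L_*>0$ is fixed. Second, \eqref{eq:bwc-liminf} gives strict training margins. Third, $\Acc_\Test(j)<\epsilon_{\rm lag}$ for $j<J$, $\Acc_\Test(J)>1-\epsilon_{\rm test}$, and $\Acc_\Test(0)=0$ exactly. The head sequence of the theorem has increments of norm exactly $\eta$ (with $\lambda_V=0$), so it lies in the admitted class.

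\textbf{Step 2 (observer transfer).} Use the normalization \eqref{eq:readable-f-observer-normalization}. Positive scaling leaves both the refit and $\AH$ unchanged (Lemma~\ref{lem:shared-full-ridge-transfer}(i)). ReLU contraction, positive homogeneity and $\norm{\mat{X}}_{\rm op}=\sqrt{2p}$ give
\[
e_j^{\rm obs}\le\frac{\sqrt{2p}\,d_j}{\nu_j q}=o(\sqrt p),
\]
and indeed $e_j^{\rm obs}=O(\sqrt p\,A^{-\ell})$ for every $\ell$. Lemma~\ref{lem:bwc-fourier} then yields
\[
\AH(\mat{H}_{j+1})-\AH(\mat{H}_j)\ge A_0(z_{j+1}-z_j)-o(A_0)>\frac{A_0}{12J}-o(A_0),
\]
using Lemma~\ref{lem:bfr-clock}(ii), which exceeds $A_0/(24J)$ eventually. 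The same lemma gives $\AH\le(34/5)A_0+o(A_0)<7/(3p+1)$. For the refit at $1\le j\le J$, combine Lemma~\ref{lem:bwc-near} with $\mathsf P_p(e)=O(p^{-1/2}e)+O(p^{-2}e^2)=o(1)$ and \eqref{eq:bwc-signed}. This shows that the actual refit predictions are within $o(1)N_\Test$ of $\Gamma(z_j)\mat{T}_\Test$, and $\Gamma(z_j)$ is bounded below for $z_j\ge z_1>0$. A Frobenius counting argument then shows that only an $o(1)$ fraction of held-out rows can have a margin error comparable to $\Gamma(z_1)$, which gives accuracy at least $1-\epsilon_{\rm obs}$. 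For the error gain we evaluate $E_{\rm ref}$ via \eqref{eq:bwc-signed}, which gives $E_{\rm ref}(j)=(1-\Gamma(z_j))^2+o(1)$. At $j=0$ we use $\Gamma(0)=0$ and the same transfer, so $E_{\rm ref}(0)=1+o(1)$. We then need $1-(1-\Gamma(z_j))^2\ge 401/1568+$margin for $z_j\ge z_{J-1}>1/6$. This requires an explicit lower bound on $\theta_R(1/6)$ from Lemma~\ref{lem:bwc-ref}, together with $q_{F(z)}\to\kappa\rho\le1/10$. I expect $401/1568$ to be exactly the value of the bound obtained from $(1+z/9)^2$-type estimates with $\rho=4/5$ and $\kappa\le1/8$, so this step is a scalar check.

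\textbf{Step 3 (exact initial refit failure; main obstacle).} The delicate step is $\Acc_{\rm ref}(0)=0$ exactly, which is not an asymptotic statement. Here we use Lemma~\ref{lem:bwc-obs0}. At $j=0$ we have $\nu_0=1$, and ReLU contraction gives $e_0^{\rm obs}\le\sqrt{2p}\,r/(hq)<d_p r_6$, with $r_6<1$. Convexity of $\mathsf P_p$ together with $\mathsf P_p(0)=0$ gives $\mathsf P_p(d_pr_6)\le r_6\,\mathsf P_p(d_p)$. By the definition \eqref{eq:r62ball} this equals $\mu_{\rm obs0}/(2\sqrt2N_+)$, so $\sqrt2N_+\mathsf P_p(e_0^{\rm obs})<\mu_{\rm obs0}$. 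The side condition $e\le a_{\min}\sqrt{n_-}$ holds trivially for large $p$. The exact initial network failure was already proved in Lemma~\ref{lem:bfr-induction}. Collecting Steps 1–3 under the common $p_0$ completes the proof. The only real work beyond bookkeeping is the scalar verification of the $401/1568$ constant in Step~2.
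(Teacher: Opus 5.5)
Your assembly is correct and follows the paper's proof step for step: the same observer normalization and ReLU-contraction bound $e_j^{\rm obs}\le\sqrt{2p}\,d_j/(\nu_jq)$, the same combination of Lemma~\ref{lem:bwc-near} with \eqref{eq:bwc-signed} and Frobenius row counting for refit accuracy, the same use of the nonnegative coefficients of $\mathsf P_p$ and $r<r_{\rm new}<1$ to get $\sqrt2N_+\mathsf P_p(e_0^{\rm obs})\le\mu_{\rm obs0}/2$ for Lemma~\ref{lem:bwc-obs0}, and the same Fourier transfer through Lemma~\ref{lem:bwc-fourier} and Lemma~\ref{lem:bfr-clock}(ii). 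The scalar check you left open does go through, but with slack rather than equality, and the estimate is $(1+2z/3)^2-1$, not a $(1+z/9)^2$-type bound: Lemma~\ref{lem:bwc-ref} gives $\theta_R(z)\ge(1+2z/3)^2-1$, hence $\theta_R(z_j)>19/81$ for $z_j>1/6$, so $\kappa\rho\le1/10$ gives $\rho\theta_R(z_j)/(1+q_{F(z_j)})\ge152/891-o(1)>1/6$, hence $\Gamma(z_j)\ge1/7$ eventually, and the gain is at least $1-(6/7)^2-o(1)=13/49-o(1)>401/1568$.
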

\begin{proof}
All orders below are uniform over the event, exact prescribed ball and
head sequences in the statement. Lemma~\ref{lem:bfr-induction} proves
the static certificates, $0<r_{\rm new}=r_6<1$, $A\max_jd_j\to0$ and
strict training. Its loss and actual-head accuracy limits, with $L_*>0$
and the fixed tolerances, give all required CE and accuracy conclusions,
including exactly zero initial held-out accuracy for any fixed tie rule.

\emph{Positive-time refit accuracy.}

\begin{readabilityblock}
For the normalized observer table~\eqref{eq:readable-f-observer-normalization},
ReLU contraction and Lemma~\ref{lem:bfr-induction} give
\begin{equation}\label{eq:readable-f-observer-transfer}
e_j^{\rm obs}\le\frac{\sqrt{2p}\,d_j}{\nu_jq}
=o(\sqrt p),\qquad \mathsf P_p(e_j^{\rm obs})=o(1).
\end{equation}
\end{readabilityblock}
Both the nearby-refit hypothesis $e_j^{\rm obs}\le a_{\min}\sqrt{n_-}$
and the uniform Fourier-transfer hypothesis $e_j^{\rm obs}=o(\sqrt p)$
hold eventually. The uniform reference error in
Lemma~\ref{lem:bwc-refit} is $O(\omega)$. \readability{Combining its
prediction estimate~\eqref{eq:bwc-signed} with Lemma~\ref{lem:bwc-near}
and~\eqref{eq:readable-f-observer-transfer} gives}
\begin{readabilityblock}
\begin{equation}\label{eq:readable-f-actual-refit}
\frac{\norm{\mat{Y}_{\widetilde H_j}-\Gamma(z_j)\mat{T}_\Test}_F}{N_\Test}=o(1).
\end{equation}
\end{readabilityblock}
\readability{For $j\ge1$, Lemma~\ref{lem:bfr-clock}(ii) gives
$z_j>1/(12J)$, so}
$\Gamma(z_j)$ is uniformly bounded below by a positive constant.
A row without a strict correct maximum costs at least
$\Gamma(z_j)^2/2$ in squared prediction error. As
$N_\Test^2=(1-1/p)n_\Test$, the erroneous fraction tends to zero,
giving the required refit accuracy. At $j=0$, $\Gamma(0)=0$,
so this trajectory's own $E_{\rm obs}(\mat{H}_0)\to1$.

\emph{Same-trajectory error reduction.}

\readability{Equation~\eqref{eq:readable-f-actual-refit} also determines
the normalized error along this trajectory. The following lower bound on
$\Gamma$ makes the late-time gain quantitative.}

For $j=J-1,J$, $z_j>1/6$ and the uniform limit $q_{F(z)}\to\kappa\rho$
in Lemma~\ref{lem:bwc-ref} give
\[
\liminf_{p\to\infty}
\frac{\rho\theta_R(z_j)}{1+q_{F(z_j)}}
\ge\frac{19\rho}{81(1+\kappa\rho)}
\ge\frac{152}{891}>\frac16.
\]
Hence $\Gamma(z_j)\ge1/7$ eventually and
\[
E_{\rm obs}(\mat{H}_0)-E_{\rm obs}(\mat{H}_j)
\ge1-(6/7)^2-o(1)=13/49-o(1)>401/1568.
\]
This comparison is with the same initialization, not the center.

\emph{Exact initial refit failure.}

The asymptotic error limit above does not establish zero accuracy at $j=0$;
that conclusion uses the exact prescribed radius.
Here $e_0^{\rm obs}\le d_pr$ and nonnegative coefficients and $r<r_{\rm new}<1$ give
\begin{readabilityblock}
\begin{equation}\label{eq:readable-f-initial-reserve}
\sqrt2N_+\mathsf P_p(e_0^{\rm obs})
\le\sqrt2N_+\mathsf P_p(d_pr)
\le r\sqrt2N_+\mathsf P_p(d_p)<\mu_{\rm obs0}/2.
\end{equation}
The radius choice~\eqref{eq:r62ball} yields
\eqref{eq:readable-f-initial-reserve}, leaving the signed margin reserve
required by Lemma~\ref{lem:bwc-obs0}, which proves zero initial refit accuracy.
\end{readabilityblock}

\emph{Fourier fraction and stepwise gain.}

\readability{Finally, with $A_0=1/(3p+1)$, apply
Lemma~\ref{lem:bwc-fourier} using the table error
in~\eqref{eq:readable-f-observer-transfer}. This gives}
$|\AH(\mat{H}_j)-\AH(\mat{F}(z_j))|=o(A_0)$.
\readability{Lemma~\ref{lem:bfr-clock}(ii) and the reference derivative
in Lemma~\ref{lem:bwc-fourier} give increments}
$>A_0/(12J)$, so actual increments exceed $A_0/(24J)$ eventually;
the same transfer gives
$\AH(\mat{H}_j)\le(34/5)A_0+o(A_0)<7A_0$.
Positive scaling preserves the full refit by
Lemma~\ref{lem:shared-full-ridge-transfer}(i), and preserves $\AH$
by~\eqref{paper:AH}.
There are finitely many states and every bound is uniform, so one
finite $p_0$ establishes all assertions simultaneously.
\end{proof}

\begin{proof}[Proof of Theorem~\ref{thm:r62}]
Proposition~\ref{prop:bse-event} gives the proper split event with probability
at least $1-2/\sqrt p$. Proposition~\ref{lem:sc26-b-head-interface} supplies,
uniformly on that event, a finite pre-sampling threshold and a positive
radius, hence the prescribed nonempty open ball in all $3pm$ coordinates.
The actual update has $1-s\lambda=a$ and head increments of norm $\eta$
or zero, so the same proposition combines Claims~I and~II on its trajectory.
No accuracy after $J$ is asserted.
\end{proof}
\end{proofrevision}

\begin{proofrevision}
\section{Small decoupled readout decay}\label{app:decay}
\begin{readabilityblock}
This appendix gives two positive readout-decay certificates, each using the
assembly result for its own construction. For the half-grid,
Proposition~\ref{prop:aasm-continuation}(a) gives joint finite-horizon
continuity: the same ball, head step and windows preserve
Theorem~\ref{paper:thm:half}(i)--(iii). Its late-refit clause (iv) is outside
this positive-decay conclusion. For the random split,
Proposition~\ref{lem:sc26-b-head-interface} allows any head sequence whose
increments have norm at most $\eta$. We verify that bound using data step
$\eta/2$ and a decay cap chosen before sampling. This preserves the
specified event, ball and $\eta$-based budgets.

In both constructions, the raw training-CE sources are evaluated at the
departing state, decay is outside normalization, and $\cN_F(0)=0$.
The decayed trajectories need not equal the undecayed ones.
Throughout, $d_V=\eta_V\lambda_V$ denotes per-step readout shrinkage;
$\eta_V$ is the actual data step of the particular extension, not
necessarily its undecayed motion budget $\eta$.
\end{readabilityblock}

\subsection{The structured half-grid construction}\label{sec:sc26-c-half}
Fix a prime $p\ge101$ and a width $M\ge p(p^2+3)/2$. Let $s$, $\lambda$, $J$, the
windows~\eqref{eq:acr-windows}, the head step $\eta_V=\eta_0/2$ and the ball $\cB_M$ be those of
Proposition~\ref{prop:aasm-continuation}, and let $d_0$ be its decay bound;
all are fixed before the initialization is chosen.

\begin{corollary}[Half-grid readout decay]\label{cor:crd-halfgrid}
For every fixed $d_V$ with $0<d_V<\min\{d_0,1/(20J)\}$, every initialization in $\cB_M$,
trained by the feature update of~\eqref{paper:optimizer} and the head update
$\mat{V}_{j+1}=(1-d_V)\mat{V}_j+\eta_V\cN_F(\mat{G}_j)$, satisfies Theorem~\ref{paper:thm:half}(i)--(iii), with the
same arrival windows~\eqref{eq:acr-windows}. Moreover $Jd_V<1/20$.
\end{corollary}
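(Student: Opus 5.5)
The plan is a direct specialization of Proposition~\ref{prop:aasm-continuation}(a), so the argument is mostly bookkeeping.

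First, I would identify the corollary's head update with the map $\cU_{\eta,d_V}$ of Definition~\ref{def:half-update}, taking $\eta=\eta_V=\eta_0/2$ and the given $d_V$. The feature update is exactly $q_{\rm clk}\mat{W}+s\,\cN_{\rm pol}(\mat{F})$ with $q_{\rm clk}=1-s\lambda$. The head update is $(1-d_V)\mat{V}+\eta\,\cN_F(\mat{G})$, with sources evaluated at the departing state and $\cN_F(0)=0$, as the definition requires. The trajectory from any $\theta_0\in\cB_M$ is therefore $\Theta_j(\theta_0,\eta_0/2,d_V)$ in the notation of the proof of that proposition.

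Second, I would check the parameter ranges. We have $0\le\eta_0/2<\eta_0$, and $0<d_V<d_0$ holds by hypothesis. Proposition~\ref{prop:aasm-continuation}(a) then applies to every initialization in the same ball $\cB_M$ and gives Theorem~\ref{paper:thm:half}(i)--(iii) on the prescribed windows~\eqref{eq:acr-windows}. Those windows are defined through the clock~\eqref{eq:clock} alone, so they are unchanged by decay. The proposition states explicitly that the same ball and head step serve part (a) for any fixed $0<d_V<d_0$. Clause (iv) is not claimed, consistent with part (b) assuming $d_V=0$.

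Third, the bound $Jd_V<J/(20J)=1/20$ follows immediately from $d_V<1/(20J)$.

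There is essentially no obstacle. The one point to check is that the joint-continuity argument of the proposition covers $d_V>0$ with $\eta$ held fixed at $\eta_0/2$, rather than only the slice $d_V=0$. The proposition's statement quantifies over all pairs $0\le\eta<\eta_0$, $0\le d_V<d_0$ on the same ball, so this is already supplied. The extra cap $1/(20J)$ plays no role in the proof; it only records the cumulative-decay budget referenced in Remark~\ref{rem:readoutdecay}.
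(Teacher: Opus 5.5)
Your proposal is correct and matches the paper's proof. The paper likewise applies Proposition~\ref{prop:aasm-continuation}(a) on the same final ball with $\eta=\eta_V=\eta_0/2<\eta_0$ and $0<d_V<d_0$, obtains clauses (i)--(iii) on the unchanged windows, reads off $Jd_V<1/20$ from $d_V<1/(20J)$, and leaves part (b) aside because it requires $d_V=0$.
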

\begin{proof}
\begin{readabilityblock}
Apply Proposition~\ref{prop:aasm-continuation}(a) on its final ball with
\begin{equation}\label{eq:readable-g-half-parameters}
 \eta=\eta_V=\eta_0/2<\eta_0,\qquad
 0<d_V<d_0,\qquad Jd_V<J/(20J)=1/20.
\end{equation}
It gives clauses (i)--(iii) on the same arrival
windows~\eqref{eq:acr-windows}. Part (b), which supplies the late-refit
conclusion, assumes $d_V=0$ and is not invoked here.
\end{readabilityblock}
\end{proof}
The cap $d_0$ is a fixed positive number from finite-horizon continuity; no formula is given
for it. The same ball and the same head step serve the undecayed theorem.

\subsection{The random-split construction}\label{sec:sc26-c-random}
Use Appendix~\ref{sec:r62}'s event, width $m=p^2(p-1)^2$,
step $\eta=\theta/(hp)$, common threshold $p_0$ of
Proposition~\ref{lem:sc26-b-head-interface}, and specified centers and
radii $\bar{\mat{V}}_0(\Train),r_{\rm new}(\Train)$.
\readability{Definition~\ref{def:r62-split} supplies the event and
Definition~\ref{def:r62-ball} supplies the prescribed ball; their
definitions are retained when decay is added.}
\begin{definition}[Pre-sampling readout-decay budgets]\label{def:r62-decay}
For the random-split decay extension, define before sampling
\[
 B_p=\max_{\Train\in\cE_{\rm split}}
 \{\norm{\bar{\mat{V}}_0(\Train)}_F+r_{\rm new}(\Train)\},\qquad
 Q_p=B_p+J\eta/2.
\]
The maximum is over all event masks, not the realized split.
All Appendix~\ref{sec:r62} radius and error formulas retain the undecayed
step budget $\eta$.
\end{definition}
Using these pre-sampling bounds, choose
\begin{equation}\label{eq:sc26-c-random-cap}
 0<d_V\le\min\{1/2,\eta/(2Q_p)\}
\end{equation}
before sampling, keep the feature update~\eqref{eq:r62optimizer}, and use
\begin{equation}\label{eq:sc26-c-random-rule}
 \mat{V}_{j+1}=(1-d_V)\mat{V}_j+\frac\eta2\cN_F(\mat{G}_j).
\end{equation}
\begin{corollary}[Random-split readout decay]\label{cor:crd-random}
Fix the parameters of Theorem~\ref{thm:r62} and an odd prime $p\ge p_0$.
Choose the pre-sampling decay~\eqref{eq:sc26-c-random-cap} and use
the update~\eqref{eq:sc26-c-random-rule}.
On $\cE_{\rm split}$, of probability at least $1-2/\sqrt p$, every initialization in
the ball~\eqref{eq:r62ball} satisfies all trajectory conclusions of Theorem~\ref{thm:r62} at states $0,\ldots,J$.
The width, split law, event, common threshold and prescribed radii are those
of Appendix~\ref{sec:r62}; every radius and error formula retains $\eta$,
although the actual data step is $\eta/2$.
\end{corollary}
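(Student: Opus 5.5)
The plan is to reduce Corollary~\ref{cor:crd-random} to Proposition~\ref{lem:sc26-b-head-interface}. That proposition already covers every head sequence $\mat{V}_{j+1}=\mat{V}_j+\mat{D}_j$ with $\norm{\mat{D}_j}_F\le\eta$ for $j<J$, started in the ball~\eqref{eq:r62ball}, on $\cE_{\rm split}$ and for $p\ge p_0$. Its radius and error budgets are stated with this same $\eta$, and its feature update is the actual one with the departing-state gradient. It therefore suffices to rewrite the decayed rule~\eqref{eq:sc26-c-random-rule} as $\mat{D}_j=-d_V\mat{V}_j+\frac\eta2\cN_F(\mat{G}_j)$ and to show $\norm{\mat{D}_j}_F\le\eta$ for every $j<J$. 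Since $\norm{\cN_F(\mat{G}_j)}_F\le1$, including the case $\cN_F(0)=0$, this reduces to the estimate $d_V\norm{\mat{V}_j}_F\le\eta/2$.

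The key step is an a priori norm bound $\norm{\mat{V}_j}_F\le Q_p$ for $0\le j\le J$, proved by induction along the actual decayed trajectory.
\begin{itemize}
\item \emph{Base case.} An initialization in the ball has $\norm{\mat{V}_0}_F\le\norm{\bar{\mat{V}}_0(\Train)}_F+r_{\rm new}(\Train)\le B_p\le Q_p$. This uses that $B_p$ is a maximum over all event masks, so it dominates the realized split.
\item \emph{Inductive step.} If $\norm{\mat{V}_j}_F\le Q_p$, then $d_V\norm{\mat{V}_j}_F\le d_VQ_p\le\eta/2$ by the cap~\eqref{eq:sc26-c-random-cap}, so $\norm{\mat{D}_j}_F\le\eta$.
\item \emph{Closing the induction.} One cannot iterate the bound $\norm{\mat{V}_{j+1}}\le\norm{\mat{V}_j}+\eta$; after $J$ steps it gives $B_p+J\eta$, which exceeds $Q_p$. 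Instead use the sharper form directly. Since $0<d_V\le1/2$ gives $0\le1-d_V\le1$, we have $\norm{\mat{V}_{j+1}}_F\le(1-d_V)\norm{\mat{V}_j}_F+\eta/2\le\norm{\mat{V}_j}_F+\eta/2$. Unrolling gives $\norm{\mat{V}_j}_F\le B_p+j\eta/2\le B_p+J\eta/2=Q_p$ for all $j\le J$, which closes the induction.
\end{itemize}

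With every increment bounded by $\eta$, Proposition~\ref{lem:sc26-b-head-interface} applies verbatim and yields all trajectory conclusions of Theorem~\ref{thm:r62} at states $0,\ldots,J$. This includes the strict training margins, the CE window, delayed recovery, exact initial refit failure via the unchanged radius, the refit gain against the trajectory's own initial error, and stepwise Fourier growth. The probability bound $1-2/\sqrt p$ comes from Proposition~\ref{prop:bse-event}.

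The main obstacle is bookkeeping rather than analysis: every quantity in the decay cap must be fixed before sampling. $Q_p$ must not depend on the realized split, which is why the maximum over $\cE_{\rm split}$ is taken, and one should note that it is a finite maximum over finitely many masks. The radius and error formulas must also keep the undecayed budget $\eta$ even though the data step is $\eta/2$, and the proposition's hypothesis indeed requires only the increment bound, not a particular step size. I would also check that none of the proposition's conclusions uses a head norm of exactly $\eta$. The refit and Fourier statements depend only on features, and the margin arguments use only the head distance $r+J\eta$, so this should hold.
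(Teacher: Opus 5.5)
Your proposal is correct and matches the paper's proof essentially step for step. You bound $\norm{\mat{V}_0}_F$ by the pre-sampling maximum $B_p$, use $\norm{\mat{V}_{j+1}}_F\le(1-d_V)\norm{\mat{V}_j}_F+\eta/2\le\norm{\mat{V}_j}_F+\eta/2$ to get $\norm{\mat{V}_j}_F\le Q_p$, and conclude $\norm{\mat{V}_{j+1}-\mat{V}_j}_F\le d_VQ_p+\eta/2\le\eta$, so Proposition~\ref{lem:sc26-b-head-interface} applies with the undecayed budget $\eta$. The final check you mention is unnecessary, because that proposition is already stated for every head sequence whose increments have norm at most $\eta$.
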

\begin{proof}
\begin{readabilityblock}
\noindent\textbf{Step 1: obtain a bound valid before sampling.}
The finite nonempty event makes $B_p,Q_p$ in
Definition~\ref{def:r62-decay} finite before sampling. After a split in
that event is realized, the initial ball~\eqref{eq:r62ball} gives
\begin{equation}\label{eq:readable-g-initial-head}
 \|\mat{V}_0\|_F
 <\|\bar{\mat{V}}_0(\Train)\|_F+r_{\rm new}(\Train)
 \le B_p.
\end{equation}

\noindent\textbf{Step 2: bound the head along the decayed trajectory.}
The normalized data source has norm at most one, including when
$\mat{G}_j=0$. Since $0<d_V\le1/2$, the update
\eqref{eq:sc26-c-random-rule} therefore gives
\begin{equation}\label{eq:readable-g-head-recursion}
 \|\mat{V}_{j+1}\|_F
 \le(1-d_V)\|\mat{V}_j\|_F+\eta/2
 \le\|\mat{V}_j\|_F+\eta/2.
\end{equation}
Induction from~\eqref{eq:readable-g-initial-head} yields
\begin{equation}\label{eq:readable-g-head-bound}
 \|\mat{V}_j\|_F<B_p+j\eta/2\le Q_p,
 \qquad 0\le j\le J.
\end{equation}

\noindent\textbf{Step 3: verify the original motion budget.}
Combine the head bound~\eqref{eq:readable-g-head-bound} with the
pre-sampling cap~\eqref{eq:sc26-c-random-cap}. At every departure $j<J$,
\begin{equation}\label{eq:readable-g-increment-budget}
 \|\mat{V}_{j+1}-\mat{V}_j\|_F
 \le d_V\|\mat{V}_j\|_F+\eta/2
 <d_VQ_p+\eta/2
 \le\eta.
\end{equation}
This inequality also covers $\mat{G}_j=0$.
Proposition~\ref{lem:sc26-b-head-interface} now applies with the
undecayed step budget $\eta$ and gives all the stated trajectory
conclusions.
\end{readabilityblock}
\end{proof}
Any deterministic finite upper bound for $B_p$ suffices. Using only the
realized split would instead give post-sampling quantifiers.

\begin{remark}[Readout-decay extensions]\label{rem:readoutdecay}
For the random split the data step decreases from $\eta$ to $\eta/2$, reserving motion
budget for decay; the half-grid keeps its head step and takes its decay cap from joint
continuity. The resulting trajectories need not coincide
with the undecayed ones. No practical decay magnitude, equal two-layer shrinkage,
or positive decay lower bound uniform in $p$ is proved.
In the main-text convention $1-\eta_V\lambda_V$, the half-grid has
$\eta_V\lambda_V=d_V$, so $\lambda_V=d_V/\eta_V$ and $J\eta_V\lambda_V<1/20$; the random split has
$\eta_V=\eta/2$, $\eta_V\lambda_V=d_V$ and
$J\eta_V\lambda_V\le J\eta_V/(B_p+J\eta_V)<1$.
These are consequences, not substitutes for the prescriptions.
Software's $\omega_V$ replaces $\lambda_V$ in these equalities;
normalizing $\mat{G}-\omega_V\mat{V}$ as one source is a different, uncovered optimizer.
\end{remark}
\end{proofrevision}

\end{document}